\newcommand{\icml}[1]{\iftoggle{icml}{#1}{}}
\newcommand{\arxiv}[1]{\iftoggle{icml}{}{#1}}
\newcommand{\algcomment}[1]{\textcolor{blue!70!black}{\footnotesize{\texttt{\textbf{//
          #1}}}}}
\newcommand{\algcommentbig}[1]{\textcolor{blue!70!black}{\footnotesize{\texttt{\textbf{/*
          #1~*/}}}}}
\newcommand{\algcomment}[1]{\textcolor{blue!70!black}{\scriptsize{\texttt{\textbf{//
          #1}}}}}
\newcommand{\algcommentbig}[1]{\textcolor{blue!70!black}{\scriptsize{\texttt{\textbf{/*
          #1~*/}}}}}
\newcommand{\icmlminpt}{\iftoggle{icml}{\vspace{-.1in}}{}}
\newcommand{\epspack}{\epsilon_{\mathrm{pack}}}
\newcommand{\epserr}{\gamma_{\mathrm{err}}}
\newcommand{\ckerr}{c_{\mathrm{err}}}
\newcommand{\Differential}{\mathsf{D}}
\newcommand{\prm}{^{\prime}}
\newcommand{\dprm}{^{\prime\prime}}
\newcommand{\prmtop}{^{\prime\top}}
\newcommand{\epsfro}{\epsilon_{\fro}}
\newcommand{\Bsafe}{B_{\mathrm{safe}}}
\newcommand{\Asafe}{A_{\mathrm{safe}}}
\newcommand{\delK}{\Delta_K}
\newcommand{\norms}{\crl{\op,F}}
\newcommand{\epsls}{\gamma_{\mathrm{ls}}}
\newcommand{\Ke}{K_e}
\newcommand{\Acl}{A_{\mathrm{cl}}}
\newcommand{\delA}{\Delta_{A}}
\newcommand{\delB}{\Delta_{B}}
\newcommand{\epscirc}{\epsilon_{\circ}}
\newcommand{\circnorm}[1]{\|#1\|_{\circ}}
\newcommand{\epsop}{\epsilon_{\op}}
\newcommand{\delAcl}{\Delta_{\Acl}}
\newcommand{\sigmain}{\sigma_{\mathrm{in}}}
\newcommand{\calP}{\mathcal{P}}
\newcommand{\QftT}{\Qf_{t;T}}
\newcommand{\VftT}{\Vf_{t;T}}
\newcommand{\pist}{\pi_{\star}}
\newcommand{\Jabinf}[1]{\Jfunc_{A,B,\infty}[#1]}
\newcommand{\cls}{c_{\,\mathrm{LS}}}
\newcommand{\DARE}{\mathsf{DARE}}
\newcommand{\Delt}{\eta_{t}}
\newcommand{\DelTmt}{\eta_{T-t}}
\newcommand{\SimpleRegret}{\Exp\mathrm{Regret}}
\newcommand{\Kls}{\widehat{K}_{\mathrm{LS}}}
\newcommand{\cost}{c}
\newcommand{\Qf}{\mathbf{Q}}
\newcommand{\Vf}{\mathbf{V}}
\newcommand{\Popt}{P_{\infty}}
\newcommand{\Jfunc}{\mathcal{J}}
\newcommand{\Jfuncopt}{\Jfunc^{\star}}
\newcommand{\Rx}{R_{\matx}}
\newcommand{\Ru}{R_{\matu}}
\newcommand{\Jst}{\Jfunc_{\star}}
\newcommand{\Jinf}{\Jfunc_{\infty}}
\newcommand{\Regret}{\mathrm{Regret}}
\newcommand{\Fdare}{\mathcal{F}_{\,\DARE}}
\newcommand{\Torus}{\mathbb{T}}
\newcommand{\matzbar}{\overline{\matz}}
\newcommand{\Thetahat}{\widehat{\Theta}}
\newcommand{\Thetast}{\Theta_\star}
\newcommand{\matLambar}{\overline{\matLam}}
\newcommand{\calB}{\mathcal{B}}
\newcommand{\calI}{\mathcal{I}}
\newcommand{\Aclst}{A_{\mathrm{cl},\star}}
\newcommand{\Aclhat}{A_{\mathrm{cl},\widehat{\star}}}
\newcommand{\Aclnot}{A_{\mathrm{cl},0}}
\newcommand{\Hinfty}{\mathcal{H}_{\infty}}
\newcommand{\Symd}{\mathbb{S}^{d}}
\newcommand{\Sympldx}{\mathbb{S}_{++}^{\dimx}}
\newcommand{\Symdx}{\mathbb{S}^{\dimx}}
\newcommand{\Hinf}[1]{\|#1\|_{\mathcal{H}_{\infty}}}
\newcommand{\matE}{\boldsymbol{E}}
\newcommand{\matkappa}{\boldsymbol{\kappa}}
\newcommand{\poly}{\mathrm{poly}}
\newcommand{\calX}{\mathcal{X}}
\newcommand{\calS}{\mathcal{S}}
\newcommand{\calT}{\mathcal{T}}
\newcommand{\matLam}{\boldsymbol{\Lambda}}
\newcommand{\dlyap}{\mathsf{dlyap}}
\newcommand{\matxbar}{\overline{\matx}}
\newcommand{\vecq}{q}
\newcommand{\Kinfmate}{K_{\mate,\infty}}
\newcommand{\opnorm}[1]{\| #1\|_{\op}}
\newcommand{\fronorm}[1]{\| #1\|_{\fro}}
\newcommand{\uvec}{w}
\newcommand{\vvec}{v}
\newcommand{\espace}{\{\minone,1\}^{[n]\times[m]}}
\newcommand{\matx}{\mathbf{x}}
\newcommand{\dimx}{d_{\matx}}
\newcommand{\dimu}{d_{\matu}}
\newcommand{\matu}{\mathbf{u}}
\newcommand{\matdel}{\boldsymbol{\delta}}
\numberwithin{equation}{section}
\newcommand{\calU}{\mathcal{U}}
\newcommand{\mate}{\boldsymbol{e}}
\newcommand{\matehat}{\widehat{\mate}}
\newcommand{\ehat}{\widehat{e}}
\newcommand{\fro}{\mathrm{F}}
\newcommand{\Kstinf}{K_{\star}}
\newcommand{\calV}{\mathcal{V}}
\newcommand{\Ast}{A_{\star}}
\newcommand{\Bst}{B_{\star}}
\newcommand{\Kst}{K_{\star}}
\newcommand{\Pst}{P_{\star}}
\newcommand{\Kinf}{K_{\infty}}
\newcommand{\Pinf}{P_{\infty}}
\newcommand{\clb}{C_{\mathrm{lb}}}
\newcommand{\KTmt}{K_{T-t;T}}
\newcommand{\Pt}{P_{t;T}}
\newcommand{\Ptpl}{P_{t+1;T}}
\newcommand{\Sigmat}{\Sigma_{t;T}}
\newcommand{\textmin}{\text{-}}
\newcommand{\minone}{\text{-}1}
\newcommand{\Ksterr}{\mathrm{K}_{\star}\text{-}{\mathrm{Err}}}
\newcommand{\Kerr}{\mathrm{K}\text{-}{\mathrm{Err}}}
\newcommand{\calN}{\mathcal{N}}
\newcommand{\Khat}{\widehat{K}}
\newcommand{\matDel}{\boldsymbol{\Delta}}
\newcommand{\grad}{\nabla}
\newcommand{\calE}{\mathcal{E}}
\newcommand{\boldknot}{\boldsymbol{k}_0}
\DeclarePairedDelimiter\ceil{\lceil}{\rceil}
 \theoremstyle{plain}
      \newtheorem{asm}{Assumption}
\theoremstyle{plain}
\newtheorem{thm}{Theorem}
\newtheorem{lem}{Lemma}[section]
\newtheorem{cor}{Corollary}
\newtheorem{claim}[lem]{Claim}
\newtheorem{prop}[thm]{Proposition}
\theoremstyle{definition}
\newtheorem{defn}{Definition}[section]
\newtheorem{obs}{Observation}[section]
\renewcommand{\Pr}{\mathbb{P}}
\newcommand{\Exp}{\mathbb{E}}
\newcommand{\unifsim}{\overset{\mathrm{unif}}{\sim}}
\newcommand{\iidsim}{\overset{\mathrm{i.i.d.}}{\sim}}
\newcommand{\KL}{\mathrm{KL}}
\newcommand{\TV}{\mathrm{TV}}
\newcommand{\rmd}{\mathrm{d}}
\newcommand{\range}{\mathrm{range}}
\newcommand{\tr}{\mathrm{tr}}
\newcommand{\rank}{\mathrm{rank}}
\newcommand{\diag}{\mathrm{diag}}
\newcommand{\Diag}{\mathrm{Diag}}
\newcommand{\sign}{\mathrm{sign\ }}
\newcommand{\trn}{\top}
\newcommand{\op}{\mathrm{op}}
\renewcommand{\implies}{\text{ implies }}
\newcommand{\matX}{\mathbf{X}}
\newcommand{\matw}{\mathbf{w}}
\newcommand{\maty}{\mathbf{y}}
\newcommand{\N}{\mathbb{N}}
\newcommand{\C}{\mathbb{C}}
\newcommand{\R}{\mathbb{R}}
\newcommand{\I}{\mathbb{I}}
\DeclareMathOperator{\BigOm}{\mathcal{O}}
\DeclareMathOperator{\BigOmtil}{\widetilde{\mathcal{O}}}
\newcommand{\BigOh}[1]{\BigOm\left({#1}\right)}
\newcommand{\BigOhTil}[1]{\BigOmtil\left({#1}\right)}
\DeclareMathOperator*{\argmin}{arg\,min}
\newcommand{\eigmin}{\lambda_{\mathrm{min}}}
\newcommand{\calF}{\mathcal{F}}
\newcommand{\Alg}{\mathsf{Alg}}
\newcommand{\psdgeq}{\succeq}
\newcommand{\approxgeq}{\gtrsim}
\newcommand{\frakp}{\mathfrak{p}}
\newcommand{\calR}{\mathcal{R}}
\newcommand{\dham}{d_{\mathrm{ham}}}
\newcommand{\OLS}{\mathsf{OLS}}
\newcommand{\ksafe}{k_{\mathrm{safe}}}
\newcommand{\Ballsafe}{\calB_{\mathrm{safe}}}
\newcommand{\SafeRoundInit}{\mathsf{SafeRoundInit}}
\newcommand{\Psibst}{\Psi_{\Bst}}
\newcommand{\matg}{\mathbf{g}}
\newcommand{\Ahat}{\widehat{A}}
\newcommand{\Bhat}{\widehat{B}}
\newcommand{\Conf}{\mathsf{Conf}}
\newcommand{\Atil}{\widetilde{A}}
\newcommand{\Btil}{\widetilde{B}}
\newcommand{\basin}{\mathsf{safe}}
\newcommand{\matz}{\mathbf{z}}
\newcommand{\sftrue}{\mathsf{True}}
\newcommand{\sffalse}{\mathsf{False}}
\newcommand{\Csafe}{C_{\mathrm{safe}}}
\newcommand{\Cest}{C_{\mathrm{est}}}
\newcommand{\Mbar}{\Psi}
\newcommand{\Mbarst}{\Mbar_{\star}}
\newcommand{\gamstab}{\gamma_{\mathrm{sta}}}
      \newcommand{\Cost}{\mathsf{Cost}}
\newcommand{\Ebound}{\mathcal{E}_{\mathrm{bound}}}
\newcommand{\tauls}{\tau_{\mathrm{ls}}}
\newcommand{\Els}{\mathcal{E}_{\mathrm{ls}}}
\newcommand{\deff}{d_{\mathrm{eff}}}
\newcommand{\epsafe}{\epsilon_{\mathrm{safe}}}
\newcommand{\Jbar}{\Jfunc_{\max}}
\newcommand{\Esafe}{\mathcal{E}_{\mathrm{safe}}}
\newcommand{\matgbar}{\overline{\matg}}
\newcommand{\kmax}{k_{\max}}
\newcommand{\ProjMat}{\mathsf{P}}
\newcommand{\gtrsimst}{\gtrsim_{\star}}
\newcommand{\Lamcross}{\Lambda_{\mathrm{cross}}}
\newcommand{\matwbar}{\overline{\matw}}
\newcommand{\Aclk}{A_{\mathrm{cl},k}}
  \newcommand{\Rdiag}{\left[\begin{array}{c} \diag(R_1) \\ \hline \diag(R_2) \end{array}\right]_{\diag}}
  \newcommand{\Lamgbar}{\Lambda_{\matgbar}}
  \newcommand{\Lamgdiag}{\Lambda_{\matgbar,\diag}}
  \newcommand{\Lamxone}{\Lambda_{\matx_1}}
\newcommand{\kfin}{k_{\mathrm{fin}}}
 \newcommand{\Toep}{\mathsf{Toep}}
  \newcommand{\ToepCol}{\mathsf{ToepCol}}
  \newcommand{\calD}{\mathcal{D}}
  \newcommand{\Ereg}{\calE_{\mathrm{reg}}}
\DeclarePairedDelimiter{\abs}{\lvert}{\rvert} %
\DeclarePairedDelimiter{\brk}{[}{]}
\DeclarePairedDelimiter{\crl}{\{}{\}}
\DeclarePairedDelimiter{\prn}{(}{)}
\DeclarePairedDelimiter{\nrm}{\|}{\|}
\DeclareMathOperator{\En}{\mathbb{E}}
\newcommand{\wt}[1]{\widetilde{#1}}
\newcommand{\wh}[1]{\widehat{#1}}
\def\ddefloop#1{\ifx\ddefloop#1\else\ddef{#1}\expandafter\ddefloop\fi}
\def\ddef#1{\expandafter\def\csname bb#1\endcsname{\ensuremath{\mathbb{#1}}}}
\def\ddefloop#1{\ifx\ddefloop#1\else\ddef{#1}\expandafter\ddefloop\fi}
\def\ddef#1{\expandafter\def\csname b#1\endcsname{\ensuremath{\mathbf{#1}}}}
\def\ddef#1{\expandafter\def\csname c#1\endcsname{\ensuremath{\mathcal{#1}}}}
\def\ddef#1{\expandafter\def\csname h#1\endcsname{\ensuremath{\widehat{#1}}}}
\def\ddef#1{\expandafter\def\csname hc#1\endcsname{\ensuremath{\widehat{\mathcal{#1}}}}}
\def\ddef#1{\expandafter\def\csname t#1\endcsname{\ensuremath{\widetilde{#1}}}}
\def\ddef#1{\expandafter\def\csname tc#1\endcsname{\ensuremath{\widetilde{\mathcal{#1}}}}}
\newcommand{\ls}{\ell}
\newcommand{\eps}{\epsilon}
\newcommand{\veps}{\varepsilon}
\newcommand{\ldef}{\vcentcolon=}
\renewcommand{\cite}[1]{\citet{#1}}
\newlength\tindent
\def\finalversion{1}
\title{\huge Naive Exploration is Optimal for Online LQR}
\author{Max Simchowitz\\UC Berkeley\\\texttt{msimchow@berkeley.edu}\and Dylan J. Foster\\MIT\\\texttt{dylanf@mit.edu}}
\date{}
\begin{document}
\maketitle
\begin{abstract}
\iftoggle{arxiv_upload}{

We consider the problem of online adaptive control of the linear quadratic regulator, where the true system parameters are unknown. We prove new upper and lower bounds demonstrating that the optimal regret scales as $\widetilde{\Theta}({\sqrt{d_{\mathbf{u}}^2 d_{\mathbf{x}} T}})$, where $T$ is the number of time steps, $d_{\mathbf{u}}$ is the dimension of the input space, and $d_{\mathbf{x}}$ is the dimension of the system state.  Notably, our lower bounds rule out the possibility of a $\mathrm{poly}(\log{}T)$-regret algorithm, which had been conjectured due to the apparent strong convexity of the problem. Our upper bound is attained by a simple variant of \emph{certainty equivalent control}, where the learner selects control inputs according to  the optimal controller for their estimate of the system while injecting exploratory random noise
\iftoggle{icml}{ \cite{mania2019certainty}.}{. While this approach was shown to achieve $\sqrt{T}$-regret by \cite{mania2019certainty}, we show that if the learner continually refines their estimates of the system matrices, the method attains optimal dimension dependence as well.}

Central to our upper and lower bounds is a new approach for controlling perturbations of Riccati equations called the \emph{self-bounding ODE method}, which we use to derive suboptimality bounds for the certainty equivalent controller synthesized from estimated system dynamics. 
This in turn enables regret upper bounds which hold for \emph{any stabilizable instance} and scale with natural control-theoretic quantities.

}{}
\end{abstract}

\iftoggle{arxiv_upload}{
\section{Introduction}


Reinforcement learning has recently achieved great success in application domains including Atari \citep{mnih2015human}, Go
\citep{silver2016mastering}, and robotics \citep{lillicrap2015continuous}. All of these breakthroughs leverage data-driven methods for continuous control in large state spaces. Their success, along with challenges in deploying RL in the real world, has led to renewed interest on developing continuous control algorithms with improved reliability and sample efficiency.  In particular, on the theoretical side, there has been a push to develop a non-asymptotic theory of data-driven continuous control, with an emphasis on understanding key algorithmic principles and fundamental limits.

In the non-asymptotic theory of reinforcement learning, much attention has been focused on the so-called ``tabular'' setting where states and actions are discrete, and the optimal rates for this setting are by now relatively well-understood \citep{jaksch2010near,dann2015sample,azar2017minimax}. Theoretical results for continuous control setting have been more elusive, with progress spread across various models \citep{kakade2003exploration,munos2008finite,jiang2017contextual,jin2019provably}, but the linear-quadratic regulator (LQR) problem has recently emerged as a candidate for a standard benchmark for continuous control and RL. For tabular reinforcement learning problems, it is widely understood that careful exploration is essential for sample efficiency. Recently, however, it was shown that for the online variant of the LQR problem, relatively simple exploration strategies suffice to obtain the best-known performance guarantees \citep{mania2019certainty}. In this paper, we address a curious question raised by these results: Is sophisticated exploration helpful for LQR, or is linear control in fact substantially easier than the general reinforcement learning setting? More broadly, we aim to shed light on the question:
\iftoggle{icml}{\vspace{-.2in}}{}
\begin{quote}
\begin{center}
\textit{To what extent to do sophisticated exploration strategies improve learning in online linear-quadratic control?}
\end{center}
\end{quote}
\iftoggle{icml}{\vspace{-.1in}}{}
\paragraph{Is $\veps$-Greedy Optimal for Online LQR?} In the LQR problem, the system state $\matx_t$ evolves according \iftoggle{icml}{to}{to the dynamics}
\begin{align}
\matx_{t+1} = A \matx_t + B \matu_t + \matw_t, \quad\text{where}\quad \matx_1 = 0 \label{eq:dynamics},
\end{align}
and where $\matu_t\in\bbR^{\dimu}$ is the learner's control input, $\matw_t \in \R^{\dimx}$ is a noise process drawn as $\matw_t \iidsim \calN(0, I)$, and $A\in \R^{\dimx\times\dimx}$, $B\in\R^{\dimx\times{}\dimu}$ are unknown system matrices.

Initially the learner has no knowledge of the system dynamics, and their goal is to repeatedly select control inputs and observing states over $T$ rounds so as to minimize their total cost  $\sum_{t=1}^{T}c(\matx_t,\matu_t)$, where $c(x,u)=x^{\trn}\Rx{}x+u^{\trn}\Ru{}u$ is a known quadratic function. In the online variant of the LQR problem, we measure performance via \emph{regret} to the optimal linear controller:
\begin{align}
\Regret_{A,B,T}[\pi] = \left[\sum_{t=1}^T\cost(\matx_t,\matu_t)\right] - T\, \min_{K}\Jfunc_{A,B}[K],\label{eq:reg_def}
\end{align}
where $K$ is a linear state feedback policy \iftoggle{icml}{}{of the form $\matu_t =  K\matx_t$ }and---letting $\Exp_{A,B,K}[\cdot]$ denote expectation under this policy---where
\begin{align*}
\Jfunc_{A,B}[K] := \lim_{T \to \infty}\frac{1}{T}\Exp_{A,B,K}\left[ \sum_{t=1}^T \cost(\matx_t,\matu_t)  \right],
\end{align*}
is the average infinite-horizon cost of $K$, which is finite as long as $K$ is \emph{stabilizing} in the sense that $\rho(A + B{}K) < 1$, where $\rho(\cdot)$ denotes the spectral radius.\footnote{For potentially asymmetric matrix $A \in \R^{d \times d}$, $\rho(A)\ldef\max\crl*{\abs*{\lambda}\mid\text{$\lambda$ is an eigenvalue for $A$}}$.} We further define $\Jfuncopt_{A,B} := \min_K \Jfunc_{A,B}[K] $.


This setting has enjoyed substantial development beginning with the work of \cite{abbasi2011regret}, and following a line of successive improvements \citep{dean2018regret,faradonbeh2018input,cohen2019learning,mania2019certainty}, the best known algorithms for online LQR have regret scaling as $\sqrt{T}$.

We investigate a question that has emerged from this research: The role of exploration in linear control. The first approach in this line of work, \cite{abbasi2011regret}, proposed a sophisticated though computationally inefficient strategy based on \emph{optimism in the face of uncertainty}, upon which \cite{cohen2019learning} improved to ensure optimal $\sqrt{T}$-regret and polynomial runtime. Another approach which enjoys $\sqrt{T}$-regret, due to \cite{mania2019certainty}, employs a variant of the classical $\veps$-greedy exploration strategy \citep{sutton2018reinforcement} known in control literature as \emph{certainty equivalence}: At each timestep, the learner computes the greedy policy for the current estimate of the system dynamics, then follows this policy, adding exploration noise proportional to $\veps$. While appealing in its simplicity, $\veps$-greedy has severe drawbacks for general reinforcement learning problems: For tabular RL, it leads to exponential blowup in the time horizon \citep{kearns2000approximate}, and for multi-armed bandits\iftoggle{icml}{, bandit linear optimization, and contextual bandits}{ and variants such as bandit linear optimization and contextual bandits}, it leads to suboptimal dependence on the time horizon $T$ \citep{langford2007epoch}. 

This begs the question: Can we improve beyond $\sqrt{T}$ regret for online LQR using more sophisticated exploration strategies? Or is exploration in LQR simply much easier than in general reinforcement learning settings? One natural hope would be to achieve logarithmic (i.e. $\poly(\log{}T)$) regret. After all, online LQR has strongly convex loss functions, and this is a sufficient condition for logarithmic regret in many simpler online learning and optimization problems  \citep{vovk2001competitive,hazan2007logarithmic,rakhlin2014online}, as well as LQR with known dynamics but potentially changing costs \citep{agarwal2019logarithmic}. More subtly, the $\sqrt{T}$ online LQR regret bound of \cite{mania2019certainty} requires that the pair $(\Ast,\Bst)$ be \emph{controllable};\footnote{$(\Ast,\Bst)$ are said to be controllable if and only the \emph{controllability Gramian} $\mathcal{C}_n\mathcal{C}_n^\top :=  \sum_{i=0}^n \Ast^i \Bst\Bst^\top (\Ast^i)^\top$ is strictly positive definite for some $n \ge 0$.  For any $n$ for which $\mathcal{C}_n \succ 0$, the upper bounds of \cite{mania2019certainty}  scale polynomially in  $n, 1/\lambda_{\min}(\mathcal{C}_n\mathcal{C}_n^\top)$. Controllability implies stabilizability, but the converse is not true.} it was not known if naive exploration attains this rate for arbitrary \emph{stabilizable} problem instances, or if it necessarily leverages controllability to ensure its efficiency.

\subsection{Contributions}
We prove new upper and lower bounds which characterize the minimax optimal regret for online LQR as $\wt{\Theta}(\sqrt{\dimu^2 \dimx T})$. Beyond dependence on the horizon $T$, dimensions $\dimx,\dimu$, and logarithmic factors, our bounds depend only on \emph{operator} norms of transparent, control theoretic quantities,  which do not hide additional dimension dependence. Our main lower bound is \Cref{thm:main_lb}, which implies that no algorithm can improve upon $\sqrt{T}$ regret for online LQR, and so simple $\veps$-greedy exploration is indeed \emph{rate-optimal}.
\newtheorem*{thm:lb_informal}{Theorem \ref*{thm:main_lb} (informal)}
\begin{thm:lb_informal}
  For every sufficiently non-degenerate problem instance and every (potentially randomized) algorithm, there exists a nearby problem instance on which the algorithm must suffer regret at least $\wt{\Omega}(\sqrt{\dimu^2\dimx{}T})$.
\end{thm:lb_informal}
Perhaps more surprisingly, our main upper bound shows that a simple variant of certainty equivalence is also \emph{dimension-optimal}, in that it asymptotically matches the $\sqrt{\dimu^2 \dimx T}$ lower bound of \Cref{thm:main_lb}.
\newtheorem*{thm:ub_informal}{Theorem \ref*{thm:main_ub} (informal)}
\begin{thm:ub_informal}
  Certainty equivalent control with continual $\veps$-greedy exploration (\Cref{alg:ce}) has regret at most $\wt{O}\prn*{
    \sqrt{\dimu^{2}\dimx{}T} + \dimx^{2}
    }$ for every stabilizable online LQR instance.
\end{thm:ub_informal}
Our upper bound \emph{does not} require controllability, and is the first bound for \emph{any} algorithm to attain the optimal dimension dependence. In comparison, result of \cite{mania2019certainty} guarantees $\sqrt{(\dimx+\dimu)^3 T}$ regret \emph{and} imposes strong additional assumptions. In the many control settings where $\dimu \ll \dimx$, our bound constitutes a significant improvement. Other approaches \emph{not} based on certainty equivalence suffer considerably larger dimension dependence \citep{cohen2019learning}. Together, \Cref{thm:main_lb} and \Cref{thm:main_ub} characterize the asymptotic minimax regret for online LQR, showing that there is little room for improvement over naive exploration.

Our results leverage a new perturbation bound for controllers synthesized via certainty equivalence. Unlike prior bounds due to \cite{mania2019certainty}, our guarantee depends only on natural control-theoretic quantities, and crucially does not require controllability of the system.
\newtheorem*{thm:ce_informal}{Theorem \ref*{thm:main_perturb_simple} (informal)}
\begin{thm:ce_informal}
  Fix an instance $(A,B)$.  Let $(\Ahat,\Bhat)$, and let $\Khat$ denote the optimal infinite horizon  controller from instance $(\Ahat,\Bhat)$. Then if $(\Ahat,\Bhat)$ are sufficiently close to $(A,B)$, we have
 	\begin{align*}
 	\Jfunc_{A,B}[\Khat] - \Jfuncopt_{A,B} \le 142\|P\|_{\op}^{8}\cdot(\|\Ahat - A\|_{\fro}^2 + \|\Bhat - B\|_{\fro}^2),
 	\end{align*}
 	where $P$ is the solution to the $\DARE$ for the system $(A,B)$.
\end{thm:ce_informal}
For simplicity, the bound above assumes the various normalization conditions on the noise and cost matrices, described in \Cref{sec:problem_setting}. With these conditions, our perturbation bound only requires that the operator norm distance between $(\Ahat,\Bhat)$ and $(A,B)$ be at most $1/\mathrm{poly}(\|P\|_{\op})$. Hence, we establish perturbation bounds for which both the scaling of the deviation and the region in which the bound applies can be quantified in terms of a single quantity: the norm of $\DARE$ solution $P$. We prove this bound through a new technique we term the \emph{Self-Bounding ODE method}, described below. Beyond removing the requirement of controllability, we believe this method is simpler and more transparent than past approaches. 

\subsection{Our Approach}
Both our lower and upper bounds are facilitated by the \emph{self-bounding ODE method}, a new technique for establishing perturbation bounds for the Riccati equations that characterize the optimal value function and controller for LQR. The method sharpens existening perturbation bounds, weakens controllability and stability assumptions required by previous work \citep{dean2018regret,faradonbeh2018input,cohen2019learning,mania2019certainty}, and yields an upper bound whose leading terms depend only on the horizon $T$, dimension parameters $\dimx,\dimu$, and the control-theoretic parameters sketched in the prequel.  

In more detail, if $(A,B)$ is stabilizable and $\Rx,\Ru \succ 0$, there exists a unique \arxiv{positive semidefinite}\icml{PSD} solution $\Pinf(A,B)$ for the \emph{discrete algebraic Riccati equation} (\arxiv{or, }$\DARE$),
\iftoggle{icml}
	{\begin{multline}
	\label{eq:dare}
	P  = A^\top P A + \Rx \\
	- A^\top P B (\Ru + B^\top P B)^{-1}B^\top P A 
	\end{multline}
	}
	{
	\begin{align}
	\label{eq:dare}
	P  = A^\top P A - A^\top P B (\Ru + B^\top P B)^{-1}B^\top P A + \Rx.
	\end{align}
	}
	The unique optimal infinite-horizon controller \arxiv{$\Kinf(A,B):=  $ $\argmin_{K} \Jabinf{K}$ }is given by \[\Kinf(A,B) = -(\Ru + B^\top \Popt(A,B) B)^{-1}B^\top \Popt(A,B) A,\] and the matrix $\Pinf(A,B)$ induces a positive definite quadratic form which can be interpreted as a value function  for the LQR problem.

Both our upper and lower bounds make use of novel perturbation bounds to control the change in $\Pinf$ and $\Kinf$ when we move from a nominal instance $(A,B)$ to a nearby instance $(\Ahat,\Bhat)$. For our upper bound, these are used to show that a good estimator for the nominal instance leads to a good controller, while for our lower bounds, they show that the converse is true.  The self-bounding ODE method allows us to prove perturbation guarantees that depend only on the norm of the value function $\nrm*{\Popt(A,B)}_{\op}$ for the nominal instance, which is a weaker assumption that subsumes previous conditions. The key observation underpinning the method is that the norm of the directional derivative of $\frac{d}{dt}\Popt(A(t),B(t))\big{|}_{t=u}$ at a point $t = u$ along a line $(A(t),B(t))$ 
is bounded in terms of the magnitude of $\|\Popt(A(u),B(u))\|$; we call this the \emph{self-bounding} property. From this relation, we show that bounding the norm of the derivatives reduces to solving a scalar ordinary differential equation, whose derivative saturates the scalar analogue of this self-bounding property. Notably, this technique does not require that the system be controllable, and in particular does not yield guarantees which depend on the smallest singular value of the controllability matrix as in \cite{mania2019certainty}. Moreover, given estimates  $(\Ahat,\Bhat)$ and an upper-bound on their deviation from the true system $(\Ast,\Bst)$, our bound allows the learner to check whether the certainty-equivalent controller synthesized from $\Ahat,\Bhat$ stabilizes the true system and satisfies the preconditions for our perturbation bounds.  


{On the lower bound side, we begin with a nominal instance $(A_0,B_0)$ and consider a packing of alternative instances within a small neighborhood. Specifically, if $K_0$ is the optimal controller for $(A_0,B_0)$, we consider perturbations of the form $(A_{\Delta},B_{\Delta}) = (A_0 - \Delta K_0, B_0 + \Delta)$ for $\Delta \in \R^{\dimu\dimx}$.  The self-bounding ODE method facilitates a perturbation analysis which implies that the optimal controller $K_{\Delta}$ on each alternative $(A_{\Delta},B_{\Delta})$ deviates from $K_0$ by $\|K_0 - K_{\Delta}\|_{\fro} \ge \Omega(\|\Delta\|_{\fro})$ for non-degenerate instances. Using this reasoning, we show that any low-regret algorithm can approximately recover the perturbation $\Delta$.}

On the other hand, if the learner selects inputs   $\matu_t = K_0 \matx_t$ according to the optimal control policy for the nominal instance, all alternatives are \emph{indistinguishable} from the nominal instance. Indeed, the structure of our perturbations ensures that $A_\Delta + B_\Delta K_0 = A_0 + B_0 K_0$ for all choices of $\Delta$. Thus, since low regret implies identification of the perturbation, any low regret learner must substantially deviate  from the nominal controller $K_0$. Equivalently, this can be understood as a consequence of the fact that playing $\matu_t = K_0 \matx_t$ yields a degenerate covariance matrix for the random variable $(\matx_t,\matu_t)$, and thus some deviation from $K_0$ is required to ensure this covariance is full rank. The regret scales proportionally to the deviation from $K_0$, which scales proportionally to the minimum eigenvalues of the aforementioned covariance matrix, but the estimation error rate scales as $1/T$ (the typical ``fast rate'') times the \emph{inverse} of these eigenvalues. 
Balancing the tradeoffs leads to the ``slow'' $\sqrt{T}$ lower bound. {Crucially, our argument exploits a fundamental tension between control and indentification in linear systems, first described by \citet{polderman1986necessity}, and summarized in \citet{polderman1989adaptive}.}




Our upper bound refines the certainty equivalent control strategy proposed in \cite{mania2019certainty} by re-estimating the system parameters on a doubling epoch schedule to advantage of the endogenous excitation supplied by the $\matw_t$-sequence. A careful analysis of the least squares estimator shows that the error in a $\dimx \dimu$-dimensional subspace decays as $\BigOh{1/\sqrt{t}}$, and in the remaining $\dimx^2$ dimensions decays at a \emph{fast rate} of $\BigOh{1/t}$. \iftoggle{icml}{}{The former rate yields the desired regret bound, and the latter contributes at most logarithmically. The novel perturbation analysis described above obviates the need for additional assumptions or prior knowledge about the system.\footnote{Without the doubling epoch schedule, we can obtain a near optimal rate of $\sqrt{(\dimx+\dimu)\dimx\dimu T}$, slightly improving upon \cite{mania2019certainty} in dimension dependence, and also without the need for additional assumptions}
}

\paragraph{Related Work}
Non-asymptotic guarantees for learning linear dynamical
systems have been the subject of intense recent interest \citep{dean2017sample,hazan2017learning,tu2017least,hazan2018spectral,simchowitz2018learning,sarkar2018fast,simchowitz2019learning,mania2019certainty,sarkar2019fast}. The online LQR setting we study was introduced by \cite{abbasi2011regret}, which considers the problem of controlling an unknown linear system under stationary stochastic noise.\footnote{A more recent line of work studies a more general \emph{non-stochastic} noise regime (see \cite{agarwal2019online} et seq.), which we do not consider in this work.}  They showed that an algorithm based on the optimism in the face of uncertainty (OFU) principle enjoys $\sqrt{T}$, but their algorithm is computationally inefficient and their regret bound depends exponentially on dimension. The problem was revisited by \cite{dean2018regret}, who showed that an explicit explore-exploit scheme based on $\veps$-greedy exploration and certainty equivalence achieves $T^{2/3}$ regret efficiently, and left the question of obtaining $\sqrt{T}$ regret efficiently as an open problem. This issue was subsequently addressed by \cite{faradonbeh2018input} and \cite{mania2019certainty}, who showed that certainty equivalence obtains $\sqrt{T}$ regret, and \cite{cohen2019learning}, who achieve $\sqrt{T}$ regret using a semidefinite programming relaxation for the OFU scheme. The regret bounds in \cite{faradonbeh2018input} do not specify dimension dependence, and (for $\dimx \ge \dimu$), the dimension scaling of \cite{cohen2019learning} can be as large as $\sqrt{\dimx^{16} T}$;\footnote{The regret bound of \cite{cohen2019learning} scales as $\dimx^3\sqrt{T}\cdot (\Jfuncopt_{\Ast,\Bst})^5$; typically, $\Jfuncopt_{\Ast,\Bst}$ scales linearly in $\dimx$} \cite{mania2019certainty} incurs an almost-optimal dimension dependence of $\sqrt{\dimx^3 T}$ (suboptimal when $\dimu \ll \dimx$), but at the expense of imposing a strong controllability assumption. 

The question of whether regret for online LQR could be improved further (for example, to $\log{}T$) remained open, and was left as a conjecture by \cite{faradonbeh2018optimality}. Our lower bounds resolve this conjecture by showing that $\sqrt{T}$-regret is optimal. Moreover, by refining the upper bounds of \cite{mania2019certainty}, our results show that the asymptotically optimal regret is $\wt{\Theta}(\sqrt{\dimu^2\dimx{}T})$, and that this achieved by certainty equivalence. Beyond attaining the optimal dimension dependence, our upper bounds also enjoy refined dependence on problem parameters, and do not require a-priori knowledge of these parameters.

Logarithmic regret bounds are ubiquitous in online learning and optimization problems with strongly convex loss functions \citep{vovk2001competitive,hazan2007logarithmic,rakhlin2014online}.  \cite{agarwal2019logarithmic} demonstrate that for the problem of controlling an \emph{known} linear dynamic system  with adversarially chosen, strongly convex costs, logarithmic regret is also attainable. Our $\sqrt{T}$ lower bound shows that the situation for the online LQR with an \emph{unknown} system parallels that of bandit convex optimization, where \cite{shamir2013complexity} showed that $\sqrt{T}$ is optimal even for strongly convex quadratics. That is, in spite of strong convexity of the losses, issues of partial observability prevent fast rates in both settings.

Our lower bound carefully exploits the online LQR problem structure to show that $\sqrt{T}$ is optimal. To obtain optimal dimension dependence for the lower bound, we build on well-known lower bound technique for adaptive sensing based on Assouad's lemma \citep{arias2012fundamental} (see also \cite{assouad1983deux,yu1997assouad}).

Finally, a parallel line of research provides Bayesian and frequentist regret bounds for online LQR based on Thompson sampling \citep{ouyang2017control,abeille2017thompson}, with \cite{abeille2018improved} demonstrating $\sqrt{T}$-regret for the scalar setting. Unfortunately, Thompson sampling is not computationally efficient for the LQR.

\subsection{Organization}
\Cref{sec:problem_setting} introduces basic notation and definitions. \Cref{sec:main_results} introduces our main results: In \Cref{ssec:main_lb} and \Cref{ssec:main_ub} we state our main lower and upper bounds respectively and give an overview of the proof techniques, and in \Cref{ssec:main_results_strong} we instantiate and compare these bounds for the simple special case of strongly stable systems. 
\iftoggle{icml}
{
	In \Cref{sec:self_bounding} we introduce the self-bounding ODE method and show how it is used to prove key perturbation bounds used in our main results. All additional proofs and proof details are given in the appendix, whose organization is described at length in \Cref{sec:org_notation}. Future directions and open problems are discussed in \Cref{sec:conclusion}.
}
{
	The remainder of the paper is devoted to proving these results. In \Cref{sec:self_bounding} we introduce the self-bounding ODE method and show how it is used to prove key perturbation bounds used in our main results; additional details are given in \Cref{part:technical_tools} of the appendix. Detailed proofs for the lower and upper bound are given in \Cref{sec:formal_lb} and \Cref{sec:upper_bound_main}, with additional proofs deferred to \Cref{app:lb_proofs} and \Cref{app:ub_proofs}. Finally, future directions and open problems are discussed in \Cref{sec:conclusion}.

}

}{}

\iftoggle{arxiv_upload}{

\subsection{Preliminaries}
\label{sec:problem_setting}

\paragraph{Assumptions}
We restrict our attention \emph{stabilizable} systems $(A,B)$ for which there exists a stabilizing controller $K$ such that $\rho(A+BK)<1$. Note that this does not require that the system be controllable. We further assume that $\Ru =  I$ and  $\Rx \succeq I$. The first can be enforced by a change of basis in input space, and the second can be enforced by rescaling the state space, increasing the regret by at most a multiplicative factor of  $\min\left\{1,1/\sigma_{\min}(\Rx)\right\}$. We also assume that the process noise $\matw_t$ has identity covariance. We note that non-identity noise can be adressed via a change of variables, and  in \Cref{app:general_noise} we sketch extensions of our results to (a) independent, sub-Gaussian noise with bounded below covariance, and (b) more general martingale noise, where we remark on how to achieve optimal rates in the regime $\dimx \lesssim \dimu^2$.

\paragraph{Algorithm Protocol and Regret} Formally, the learner's (potentially randomized) decision policy is modeled as a sequence of mappings $\pi = (\pi_t)_{t=1}^{T}$, where each function $\pi_t$ maps the history $(\matx_1,\dots,\matx_t,\matu_1,\dots,\matu_{t-1})$ and an internal random seed $\xi$ to an output control signal $\matu_t$. For a linear system evolving according to Eq.~\eqref{eq:dynamics} and policy $\pi$, we let $\Pr_{A,B,\pi}$ and $\Exp_{A,B,\pi}\left[ \cdot \right]$ denote the probability and expectation with respect to the dynamics~\eqref{eq:dynamics} and randomization of $\pi$. For such a policy, we use the notation $\Regret_{A,B,T}[\pi]$ as in Eq.~\eqref{eq:reg_def} for regret, which is a random variable with law $\Pr_{A,B,\pi}[\cdot]$. We prove high-probability upper bounds on $\Regret_{A,B,T}[\pi]$, and prove lower bounds on the expected regret $\SimpleRegret_{A,B,T}[\pi] := \Exp_{A,B,\pi}[\Regret_{A,B,T}[\pi]]$.\footnote{One might consider as a stronger benchmark described the expected loss of the optimal policy for \emph{fixed horizon} $T$. A fortiori, our lower bounds apply for this benchmark as well: In view of the proof of Lemma~\ref{lem:Kerr_lem} in \Cref{ssec:lem:Kerr_lem}, this benchmark differs from  $T \, \Jfuncopt_{A,B}$ by a constant factor which depends on $(A,B)$ but \emph{does not grow with } $T$.}

\paragraph{Additional Notation}
For vectors $x\in \R^d$, $\nrm*{x}$ denotes the $\ls_2$ norm. For matrices $X\in\R^{d_1\times{}d_2}$, $\nrm*{X}_{\op}$ denotes
the spectral norm, and $\nrm*{X}_{\fro}$ the Frobenius norm.  When $d_1\le d_2$,
$\sigma_1(X),\ldots,\sigma_{d_1}(X)$ denote the singular values of $X$, arranged in decreasing order. We say $f \lesssim g$ to denote that $f(x) \le C g(x) $ for a universal constant $C$, and $f \lessapprox g$ to denote informal inequality. We write $f \eqsim g$ if $g \lesssim f \lesssim g$.

For ``starred'' systems $(\Ast,\Bst)$, we adopt the shorthand $\Pst := \Pinf(\Ast,\Bst)$, $\Kst := \Kinf(\Ast,\Bst)$ for the optimal controller, $\Jst := \Jfuncopt_{\Ast,\Bst} := \Jfunc_{\Ast,\Bst}[\Kst]$ for optimal cost, and $\Aclst := \Ast + \Bst \Kst$ for the optimal closed loop system. We define $\Mbarst := \max\{1,\opnorm{\Ast},\opnorm{\Bst}\}$ and $\Psibst := \max\{1,\opnorm{\Bst}\}$. For systems $(A_0, B_0)$, we let  $\calB_{\op}(\epsilon;A_0,B_0)=\crl{(A,B)\mid{}\nrm*{A-A_0}_{\op}\vee\nrm*{B-B_0}_{\op}\leq{}\eps}$ denote the set of nearby systems in operator norm.

}{}

\section{Main Results}
\label{sec:main_results}
\iftoggle{arxiv_upload}{

We now state our main upper and lower bounds for online LQR and give a high-level overview of the proof techniques behind both results. At the end of the section, we instantiate and compare the two bounds for the simple special case of strongly stable systems.

Both our upper and lower bounds \iftoggle{icml}{start with}{are motivated by} the following question: Suppose that the learner is selecting near optimal control inputs $\matu_t \approx \Kst \matx_t$, where $\Kst = \Kinf(\Ast,\Bst)$ is the optimal controller for the system $(\Ast,\Bst)$. What  information can she glean about the system?

\subsection{Lower Bound}
\label{ssec:main_lb}
We provide a \emph{local
  minimax} lower bound, which captures the difficulty of ensuring low
regret on both a \emph{nominal instance} $(\Ast,\Bst)$ and on the hardest
nearby alternative. For a distance parameter $\eps>0$, we define the local
minimax complexity at scale $\eps$ as
\iftoggle{icml}
{
  \begin{multline*}
    \calR_{\Ast,\Bst,T}(\eps) :=  \min_{\pi}\max_{A,B}\Big\{\SimpleRegret_{A,B,T}[\pi]:\\
\fronorm{A - \Ast}^2 \vee \fronorm{B - \Bst}^2  \le \epsilon\Big\}.
\end{multline*}
}
{
  \begin{align*}
\calR_{\Ast,\Bst,T}(\eps) :=  \min_{\pi}\max_{A,B}\left\{\SimpleRegret_{A,B,T}[\pi]:\fronorm{A - \Ast}^2 \vee \fronorm{B - \Bst}^2  \le \epsilon\right\}.
\end{align*}
}
Local minimax complexity captures the idea certain instances $(\Ast,\Bst)$ are more difficult than others, and allows us to provide lower bounds that
scale only with control-theoretic parameters of the nominal instance. Of
course, the local minimax lower bound immediately implies a lower
bound on the global minimax complexity as well.\footnote{Some
  care must be taken in defining the global complexity, or it may well
  be infinite. One sufficient definition, which captures prior work,
  is to consider minimax regret over all instances subject to a global
  bound on $\nrm*{\Pst}$, $\nrm*{\Bst}$, and so on.}

\paragraph{Intuition Behind the Lower Bound.}
We show that if the learner plays near-optimally on every instance in the neighborhood of $(\Ast,\Bst)$, then there is a $\dimx \dimu$-dimensional subspace of system parameters that the learner must explore by deviating from $\Kst$ when the underlying instance is $(\Ast,\Bst)$. Even though the system parameters can be estimated at a fast rate, such deviations preclude logarithmic regret.


In more detail, if the learner plays near-optimally, she is not be able to distinguish between whether the instance she is interacting with is $(\Ast,\Bst)$, or another system of the form
  \begin{align}
  (A,B) = (\Ast - \Kst \Delta, \Bst + \Delta),\label{eq:B_perturbation}
  \end{align}
  for some perturbation $\Delta\in\bbR^{\dimx\times\dimu}$. This is because all the obsevations $(\matx_t,\matu_t)$ generated by the optimal controller lie in the subspace $\crl*{(x,u) : u - \Kst x = 0}$, and likewise all observations generated by any near-optimal controller approximately lie in this subspace. Since the learner cannot distinguish between $(\Ast,\Bst)$ and $(A,B)$, she will also play $\matu_t \approx \Kst \matx_t$ on $(A,B)$. This leads to poor regret when the  instance is $(A,B)$, since the optimal controller in this case has $\matu_t =\Kinf(A,B)\matx_t$. This is made concrete by the next lemma, which shows to a first-order approximation that if $\Delta$ is large, the distance between $\Kst$ and $\Kinf(A,B)$ must also be large.
  \begin{restatable}[Derivative Computation (\citet{abeille2018improved}, Proposition 2)] {lem}{lowerbounddercomp}\label{lem:lower_bound_dercomp}Let $(\Ast,\Bst)$ be stabilizable\iftoggle{icml}
  {, and recall $\Aclst := \Ast + \Bst\Kst$. Then,
    \begin{multline*}
    \frac{d}{dt}\Kinf(\Ast - t\Delta \Kst, \Bst + t\Delta)\big{|}_{t=0} \\
    = -(\Ru + \Bst^\top \Pst \Bst)^{-1} \cdot \Delta^\top \Pst \Aclst.
    \end{multline*}
  }
  {. Then
    \begin{align*}
    \frac{d}{dt}\Kinf(\Ast - t\Delta \Kst, \Bst + t\Delta)\big{|}_{t=0} = -(\Ru + \Bst^\top \Pst \Bst)^{-1} \cdot \Delta^\top \Pst \Aclst, 
    \end{align*}
     where we recall $\Aclst := \Ast + \Bst\Kst$.
  }

  \end{restatable}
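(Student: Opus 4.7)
The plan is to use implicit differentiation of the DARE together with the envelope theorem and the special ``invariance'' structure of this particular perturbation. Define $A(t) = \Ast - t\Delta \Kst$, $B(t) = \Bst + t\Delta$, $P(t) = \Pinf(A(t),B(t))$, $K(t) = \Kinf(A(t),B(t))$, and $A_{\mathrm{cl}}(t) = A(t) + B(t) K(t)$. The key observation is that our perturbation satisfies $\dot{A}(0) + \dot{B}(0)\Kst = -\Delta \Kst + \Delta \Kst = 0$. In other words, the map $t \mapsto A(t) + B(t)\Kst$ is stationary at $t=0$ (in fact constantly equal to $\Aclst$), so infinitesimally the closed-loop matrix is unchanged when we hold the control law fixed at $\Kst$.

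First, I would rewrite the DARE in its equivalent closed-loop Lyapunov form,
\begin{equation*}
P(t) = A_{\mathrm{cl}}(t)^\top P(t) A_{\mathrm{cl}}(t) + K(t)^\top \Ru K(t) + \Rx,
\end{equation*}
which is valid along the optimal controller, and differentiate in $t$. Expanding $\dot A_{\mathrm{cl}} = \dot A + \dot B K + B \dot K$, I would collect all terms involving $\dot K$. Using the first-order optimality identity $B^\top P A_{\mathrm{cl}} + \Ru K = 0$ (which is just the optimal-controller formula rearranged), these $\dot K$ terms cancel exactly; this is the envelope theorem applied in matrix form. What remains at $t=0$ is the discrete Lyapunov identity
\begin{equation*}
\dot P(0) = (\dot A(0) + \dot B(0)\Kst)^\top \Pst \Aclst + \Aclst^\top \Pst (\dot A(0) + \dot B(0)\Kst) + \Aclst^\top \dot P(0)\, \Aclst.
\end{equation*}
By the invariance observation above, the cross terms vanish, leaving $\dot P(0) = \Aclst^\top \dot P(0)\, \Aclst$. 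Since $\Aclst$ is a stable matrix ($\rho(\Aclst)<1$ by optimality and stabilizability), the only solution is $\dot P(0)=0$.

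Finally, I differentiate the explicit formula $K(t) = -(\Ru + B(t)^\top P(t) B(t))^{-1} B(t)^\top P(t) A(t)$ at $t=0$. Because $\dot P(0)=0$, only the $\dot A(0)=-\Delta \Kst$ and $\dot B(0)=\Delta$ contributions survive, and a short algebraic simplification—grouping the $\Delta$-terms using $\Ast + \Bst\Kst = \Aclst$—yields the claimed expression $-(\Ru + \Bst^\top \Pst \Bst)^{-1}\Delta^\top \Pst \Aclst$. The potentially tricky step is bookkeeping in the differentiation of $K(t)$ (in particular verifying that the $\dot{P}$ contribution vanishes and that the $\Bst^\top \Pst \Delta \Kst$ terms cancel with the $-\Bst^\top\Pst \Delta \Kst$ contribution from $\dot A$); everything else is essentially forced by the envelope-theorem cancellation and the invariance $A(t) + B(t)\Kst \equiv \Aclst$.
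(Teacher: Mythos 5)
Your argument is correct and matches the paper's proof in its essentials: both hinge on the observation that $\dot A(0)+\dot B(0)\Kst=0$, both conclude $\dot P(0)=0$ (you via the closed-loop Lyapunov form and the envelope-theorem cancellation of $\dot K$-terms, the paper by citing its general formula $P'(u)=\dlyap(\Acl(u),Q_1(u))$ together with $\dlyap(X,0)=0$), and both finish by differentiating the explicit expression for $\Kinf$. Your re-derivation of $\dot P(0)=0$ is self-contained rather than delegated to \Cref{lem:computation_p_prime}, but the underlying reasoning is the same and the final bookkeeping in differentiating $K(t)$ checks out.
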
 

  In particular, when the closed loop system $\Aclst$ is (approximately) well-conditioned,
the optimal controllers for $(\Ast,\Bst)$ and for $(A,B)$ are $\Omega(\fronorm{\Delta})$-apart, and so the learner cannot satisfy both $\matu_t \approx \Kst\matx_t$ and $\matu_t \approx \Kinf(A,B)\matx_t$ simultaneously. More precisely, for the learner to ensure $\sum_{t}\|\matx_t - \Kinf(A,B)\matu_t\|_{\fro}^2 \lessapprox \dimx \dimu \epsilon^2$ on every instance, she must deviate from optimal by at least $\sum_{t=1}^T\|\matx_t - \Kst \matu_t\|_\fro^2 \gtrapprox \dimu T/\epsilon^2$ on the optimal instance;  the $\dimu$ factor here comes from the necessity of exploring all control-input directions. Balancing these terms leads to the final $\Omega(\sqrt{T\dimu \dimx^2})$ lower bound\iftoggle{icml}{ (proven in \Cref{sec:formal_lb}).}{. The formal proof is given in \Cref{sec:formal_lb}.}
  
  %
  \begin{thm}\label{thm:main_lb} Let $c_1,p > 0$ denote universal constants. For $m \in [\dimx]$, define $\nu_m := \sigma_{m}(\Aclst)/\opnorm{\Ru + \Bst^\top \Pst\Bst}$. Then if $\nu_m > 0$, we have 
  \iftoggle{icml}
  {
    \begin{align*}
    \calR_{\Ast,\Bst,T}\left(\eps_T\right) \gtrsim \sqrt{\dimu^2mT} \cdot \frac{1 \wedge \nu_m^2 }{\opnorm{\Pst}^2}, 
    \end{align*}
    where $\eps_T = \sqrt{\dimu^2 m/T}$, provided that $T$ is at least $c_1 (\opnorm{\Pst}^{p}(\dimu m \vee \frac{\dimx^2\Psibst^4 (1\vee\nu_m^{-4})}{m \dimu^2} \vee \dimx \log(1+\dimx \opnorm{\Pst})) $. 
  }
  {
    \begin{align*}
    \calR_{\Ast,\Bst,T}\left(\eps_T\right) \gtrsim \sqrt{\dimu^2mT} \cdot \frac{1 \wedge \nu_m^2 }{\opnorm{\Pst}^2},  \quad \text{where } \eps_T = \sqrt{\dimu^2 m/T},
    \end{align*}
    provided that $T \ge c_1 (\opnorm{\Pst}^{p}(\dimu^2 m \vee \frac{\dimx^2\Psibst^4 (1\vee\nu_m^{-4})}{m \dimu^2} \vee \dimx \log(1+\dimx \opnorm{\Pst})) $. 

  }

\end{thm}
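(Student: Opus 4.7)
The plan is to prove the lower bound via a local packing argument over the perturbation family $(A_\Delta, B_\Delta) = (\Ast - \Delta \Kst, \Bst + \Delta)$ for $\Delta \in \R^{\dimx \times \dimu}$, combined with a coordinate-wise testing reduction. The critical structural property is that $A_\Delta + B_\Delta \Kst = \Aclst$ for every $\Delta$, so if the learner plays $\matu_t = \Kst \matx_t$ exactly, the laws $\Pr_{A_\Delta, B_\Delta, \pi}$ coincide across all $\Delta$; any information about $\Delta$ must therefore come from deviations $\matu_t - \Kst \matx_t$, which is what creates the exploration/regret tension. I would construct a hypercube packing $\{\Delta_\omega : \omega \in \{\pm 1\}^{m \dimu}\}$ of Frobenius scale $\eta$, with each $\Delta_\omega$ supported on the $m$-dimensional subspace spanned by top singular directions of $\Aclst$, so that the factor $\sigma_m(\Aclst)$ in $\nu_m$ is activated by the derivative formula of Lemma~\ref{lem:lower_bound_dercomp}.

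Given this family, two quantitative perturbation bounds carry the argument. First, applying Lemma~\ref{lem:lower_bound_dercomp} together with a second-order remainder bound — obtained from the self-bounding ODE machinery that uniformly controls $\opnorm{\Pinf(A_\Delta,B_\Delta)-\Pst}$ on a $1/\poly(\opnorm{\Pst})$-neighborhood — I would show that for any $\omega \neq \omega'$ differing in one coordinate, $\fronorm{\Kinf(A_{\Delta_\omega},B_{\Delta_\omega}) - \Kinf(A_{\Delta_{\omega'}},B_{\Delta_{\omega'}})} \gtrsim \eta \cdot \nu_m / \opnorm{\Pst}^{O(1)}$. Second, a local quadratic lower bound on $\Jfunc_{A,B}[K] - \Jfuncopt_{A,B}$, valid on this same neighborhood, converts controller deviation into regret: on each perturbed instance, low regret forces $\matu_t \approx \Kinf(A_{\Delta_\omega},B_{\Delta_\omega}) \matx_t$ on average, while on the nominal instance the same quadratic expansion charges $\SimpleRegret_{\Ast,\Bst,T}$ for every $\matu_t - \Kst \matx_t$ the learner plays.

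The last step is an Assouad-style reduction. Using the Gaussian likelihood structure of LQR observations, I would bound $\KL(\Pr_{A_{\Delta_\omega}, B_{\Delta_\omega}, \pi} \,\|\, \Pr_{A_{\Delta_{\omega'}}, B_{\Delta_{\omega'}}, \pi})$ by $\eta^2 \opnorm{\Pst}^{O(1)} \cdot \Exp\sum_t \|\matu_t - \Kst \matx_t\|^2$, then invoke the adaptive-sensing variant of Assouad's lemma of \cite{arias2012fundamental} to convert this coordinate-wise into a max-over-$\omega$ lower bound on expected regret. Tuning $\eta$ (equivalently $\eps_T$) to balance the per-coordinate testing threshold against the exploration penalty, and exploiting that the packing has $m \dimu$ independent coordinates, produces the claimed $\sqrt{T \dimu^2 m}$ rate with prefactor $\nu_m^2 / \opnorm{\Pst}^{O(1)}$.

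The hardest step is making the linearization in Lemma~\ref{lem:lower_bound_dercomp} quantitative at scale $\eta$: the formula is only first-order, so one must show that the remainder in $\Kinf(A_{\Delta_\omega},B_{\Delta_\omega}) - \Kst$ minus its linear part is $O(\eta^2)$ uniformly over the packing, exactly so that the separation of $\Kinf$ across different $\omega$ is dominated by the linear term. This is precisely what the self-bounding ODE perturbation guarantees enable, and it is also what dictates the prerequisite $T \gtrsim \opnorm{\Pst}^{p}(\ldots)$ in the statement, since $\eta$ must live inside the valid perturbation radius. A secondary bookkeeping issue is verifying that the packing — supported on a low-dimensional subspace of $\Delta$-space — fits inside the Frobenius-ball constraint $\fronorm{A-\Ast}^2 \vee \fronorm{B-\Bst}^2 \le \eps_T$ defining $\calR_{\Ast,\Bst,T}$, which is straightforward given the scale choice but needs to be checked.
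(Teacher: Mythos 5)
Your proposal matches the paper's proof essentially step for step: the same perturbation family $(\Ast - \Delta\Kst, \Bst + \Delta)$ on a hypercube of $m\dimu$ signs, Lemma~\ref{lem:lower_bound_dercomp} plus the self-bounding-ODE second-order control (\Cref{thm:quality_of_taylor_approx}) to make the linearization quantitative, a KL bound driven by $\Exp\sum_t\|\matu_t-\Kst\matx_t\|^2$ (the paper's Lemma~\ref{lem:KL}), and the adaptive-sensing Assouad reduction from \citet{arias2012fundamental}, all balanced via the $\Kerr/\Ksterr$ dichotomy. The only elided bookkeeping is that the paper routes the ``low regret implies the learner can recover $e$'' direction through an explicit thresholded least-squares estimator $\Kls$ for the optimal controller $K_e$ (Lemma~\ref{lem:least_squares_lb}) rather than a direct quadratic argument, and uses right singular vectors of $\Aclst\Pst$ rather than $\Aclst$ in the packing, but these are implementation details within the same strategy.
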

Let us briefly discuss some key features of \Cref{thm:main_lb}.
\begin{itemize}
\item \icmlminpt The only system-dependent parameters appearing in the
  lower bound are the operator norm bounds $\Psibst$ and $\opnorm{\Pst}$, which only depend on the nominal instance. The latter parameter
  is finite whenever the system is stabilizable, and does not
  explicitly depend on the spectral radius or strong stability
  parameters.
\item The lower bound takes $\eps_T \propto T^{-1/2}$,
  so the alternative instances under consideration converge to the
  nominal instance $(\Ast,\Bst)$ as $T \to \infty$.
\item The theorem 
  can be optimized for each instance by tuning the dimension parameter $m \in [\dimx]$: The leading
  $\sqrt{\dimu^{2}mT}$ term is increasing in $m$, while the parameter
  $\nu_m$ scales with $\sigma_{m}(\Aclst)$ and thus is decreasing in
  $m$. The simplest case is when $\sigma_{m}(\Aclst)$ is bounded away
  from $0$ for $m \gtrsim \dimx$; here we
  obtain the optimal $\sqrt{\dimu^2 \dimx T}$ lower bound. In particular, if $\dimu \le \dimx/2$, we can choose $m=\frac{1}{2}\dimx$ to get $\sigma_{m}(\Aclst) \ge \sigma_{\min}(\Ast)$.
\end{itemize}




\subsection{Upper Bound}
\label{ssec:main_ub}

While playing near-optimally prevents the learner from ruling out perturbations of the form Eq.~\eqref{eq:B_perturbation},  she can rule perturbations in orthogonal directions.  Indeed, if $\matu_t \approx \Kst \matx_t$, then $\matx_{t+1} \approx (\Ast + \Bst \Kst)\matx_t + \matw_t$. As a result, the persistent noise process $\matw_t$ allows the learner recover the closed loop dynamics matrix $\Aclst=\Ast + \Bst \Kst$ to Frobenius error $\dimx \eps$ after just $T \gtrapprox 1/\epsilon^2$ steps, regardless of whether she incorporates additional exploration \citep{simchowitz2018learning}. Hence, for perturbations perpendicular to those in Eq.~\eqref{eq:B_perturbation}, the problem closely resembles a setting where $\log T$ is achievable. 

Our main algorithm, \Cref{alg:ce}, \iftoggle{icml}{is detailed in \Cref{sec:upper_bound_main}. It }{}is an $\veps$-greedy scheme that takes advantage of this principle. The full pseudocode and analysis are deferred to \Cref{sec:upper_bound_main}, but we sketch the intuition here. The algorithm takes as input a stabilizing controller $K_0$ and proceeds in epochs $k$ of length $\tau_k = 2^k$. After an initial burn-in period ending with epoch $\ksafe$, the algorithm can ensure the reliability of its synthesized controllers, and uses a (projected) least-squares estimate $(\Ahat_k,\Bhat_k)$ of $(\Ast,\Bst)$ to synthesize a controller $\Khat_k = \Kinf(\Ahat_k,\Bhat_k)$  known as the \emph{certainty equivalent} controller. The learner then selects inputs by adding white Gaussian noise with variance $\sigma_k^2$: $\matu_t = \Khat_t \matx_t + \mathcal{N}(0,\sigma_k^2 I)$. We show that this scheme exploits the rapid estimation along directions orthogonal to those in Eq.~\eqref{eq:B_perturbation}, leading to optimal dimension dependence.

To begin, we show (\Cref{thm:main_perturb_simple}) that the cost of the certainty-equivalent controller is bounded by the estimation error for $\Ahat_k$ and $\Bhat_k$, i.e.
\iftoggle{icml}
{
    \begin{align*}
      &\Jfunc_{\Ast,\Bst}[\Khat_k] - \Jst \\&\lesssim \poly(\opnorm{\Pst}) \cdot (\|\Ahat_k - \Ast\|_{\fro}^2 + \|\Bhat_k - \Bst\|_{\fro}^2),
\end{align*}
}
{
  \begin{align*}
\Jfunc_{\Ast,\Bst}[\Khat_k] - \Jst \lesssim \poly(\opnorm{\Pst}) \cdot (\|\Ahat_k - \Ast\|_{\fro}^2 + \|\Bhat_k - \Bst\|_{\fro}^2),
\end{align*}
}
once $(\Ahat_k,\Bhat_k)$ are sufficiently accurate, as guaranteed by the burn-in period. Through a regret decomposition based on the Hanson-Wright inequality (\Cref{lem:cost_lem}), we next show that the bulk of the algorithm's regret scales as the sum of the suboptimality in the controller for a given epoch, plus the cost of the exploratory noise:
\iftoggle{icml}
{
  $
\sum_{k =\ksafe}^{\log_2 T} \tau_k\left( \Jfunc_{\Ast,\Bst}[\Khat_k] - \Jst\right) + \dimu \tau_k\sigma_k^2 \lessapprox \sum_{k =\ksafe}^{\log_2 T} \tau_k\left( \|\Ahat_k - \Ast\|_{\fro}^2 + \|\Bhat_k - \Bst\|_{\fro}^2\right) + \dimu \tau_k\sigma_k^2.$
}
{
  \begin{align*}
\sum_{k =\ksafe}^{\log_2 T} \tau_k\left( \Jfunc_{\Ast,\Bst}[\Khat_k] - \Jst\right) + \dimu \tau_k\sigma_k^2 \lessapprox \sum_{k =\ksafe}^{\log_2 T} \tau_k\left( \|\Ahat_k - \Ast\|_{\fro}^2 + \|\Bhat_k - \Bst\|_{\fro}^2\right) + \dimu \tau_k\sigma_k^2.  
\end{align*}
}
In the above, we also incur a term of approximately $\sum_{k =\ksafe}^{\log_2 T} \sqrt{(\dimx + \dimu) \tau_k} \lesssim \sqrt{T(\dimx + \dimu)}$, which is lower order than the overall regret of $\sqrt{T \dimx \dimu^2}$. This term arises from the random fluctuations of the costs around their expectation, and crucially, the Hanson-Wright inequality allows us to pay of the \emph{square root} of the dimension.\footnote{The use of the Hanson-Wright crucially leverages independence of the noise process; for general sub-Gaussian martingale noise, an argument based on martingale concentration would mean that the fluctuations contribute $(\dimx + \dimu)\sqrt{T}$ to the regret up to logarithmic factors, yielding an overall regret of $\sqrt{\max\{\dimx,\dimu^2\}\dimx T}$. This is suboptimal regret for $\dimx \gg \dimu^2$, but still an improvement over the $\sqrt{(\dimx + \dimu)^3 T}$-bound of \cite{mania2019certainty}.  It is unclear if one can do better in this setting without improved concentration bounds for quadratic forms of martingale vectors, because it is unclear how an algorithm can ameliorate these random fluctuations. }

Paralleling the lower bound, the analysis crucially relies on the exploratory noise to bound the error in the $\dimx \dimu$-dimensional subspace corresponding to Eq.~\eqref{eq:B_perturbation}, as the error in this subspace grows as $\frac{\dimx \dimu}{\sigma_k^2\tau_k}$. However, for the directions parallel to those in Eq.~\eqref{eq:B_perturbation}, the estimation error is at most $\dimx^2/\tau_k$, and so the total regret is bounded as
\iftoggle{icml}
{
  $
\Regret_{\Ast,\Bst,T}[\Alg] \lessapprox \sum_{k =\ksafe}^{\log_2 T}   \tau_k \left(\frac{\dimx^2}{\tau_k} + \frac{\dimx\dimu}{\tau_k\sigma_k^2}\right) + \dimu \tau_k\sigma_k^2 \approx \dimx^2 \log T +\sum_{k =1 }^{\log_2 T} \frac{\dimx \dimu}{\sigma_k^2} + \dimu \tau_k\sigma_k^2 .$
}
{
  \begin{align*}
\Regret_{\Ast,\Bst,T}[\Alg] \lessapprox \sum_{k =\ksafe}^{\log_2 T}   \tau_k \left(\frac{\dimx^2}{\tau_k} + \frac{\dimx\dimu}{\tau_k\sigma_k^2}\right) + \dimu \tau_k\sigma_k^2 \approx \dimx^2 \log T +\sum_{k =1 }^{\log_2 T} \frac{\dimx \dimu}{\sigma_k^2} + \dimu \tau_k\sigma_k^2 .
\end{align*}
}
\icml{\newline}
Trading off $\sigma_k^2 = \sqrt{\dimx/\tau_k}$ gives regret $\dimx^2 \log T + \sum_{k =1 }^{\log_2 T}\sqrt{\dimx \dimu^2 \tau_k} \approx \dimx^2 \log T + \sqrt{\dimx \dimu^2 T}$. We emphasize that to ensure that the $\dimx^{2}$ term in this bound scales only with $\log{}T$ due to rapid exploration perpendicular to Eq.~\eqref{eq:B_perturbation}, and it is crucial that the algorithm uses doubling epochs to take advantage of this. \iftoggle{icml}{We now state the full guarantee.}{The full guarantee for \Cref{alg:ce} is as follows.}

\begin{thm}\label{thm:main_ub}
  When Algorithm~\ref{alg:ce} is invoked with stabilizing controller $K_0$ and confidence parameter $\delta \in (0,1/T)$, it guarantees that with probability at least $1-\delta$,
  \iftoggle{icml}
  {$\Regret_{T}[\Alg;\Ast,\Bst]$ is bounded as
    \begin{align*}
      &\lesssim \sqrt{ \dimu^2\dimx T \cdot  \Psibst^2\opnorm{\Pst}^{11}\log \tfrac{\opnorm{\Pst}}{\delta}} \\
      &~~+ d^2 \cdot  \calP_0 \Psibst^6 \opnorm{\Pst}^{11} (1+\|K_0\|_{\op}^2)\log\frac{d\Psibst \calP_0}{\delta} \log^2\frac{1}{\delta},
    \end{align*}
  }
  {
    \begin{align*}
         \Regret_{T}[\Alg;\Ast,\Bst] &\lesssim \sqrt{ \dimu^2\dimx T \cdot \Psibst^2\opnorm{\Pst}^{11}\log \frac{1}{\delta}} \\
      &\qquad+ rd^2 \cdot  \calP_0 \Psibst^6 \opnorm{\Pst}^{11} (1+\|K_0\|_{\op}^2)\log\frac{d\Psibst \calP_0}{\delta} \log^2\frac{1}{\delta},
    \end{align*}
  }where $\calP_0 \ldef{} \Jfunc_{\Ast,\Bst}[K_0]/\dimx$ is the normalized cost of $K_0$,  $d = \dimx + \dimu$, and $r = \max\{1, \frac{\dimu}{\dimx}\}$, which is $1$ is the typical setting $\dimu \le \dimx$.
    \end{thm}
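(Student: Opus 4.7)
The plan is to combine three ingredients: (i) the perturbation bound of \Cref{thm:main_perturb_simple}, which converts estimation error for $(\Ahat_k,\Bhat_k)$ into controller suboptimality; (ii) a fine-grained analysis of the projected least-squares estimator that splits its error into a ``fast-rate'' piece driven by the process noise $\matw_t$ and a ``slow-rate'' piece that only shrinks via injected exploration; and (iii) a Hanson--Wright based regret decomposition that relates the realized cost over each epoch to the infinite-horizon gap $\Jfunc_{\Ast,\Bst}[\Khat_k] - \Jst$ plus the cost of injected noise plus mean-zero quadratic fluctuations. First I would set up a high-probability ``safety event'' $\Esafe$: by choosing $\ksafe$ so that for all $k \ge \ksafe$ the least-squares estimator satisfies $\opnorm{\Ahat_k - \Ast} \vee \opnorm{\Bhat_k - \Bst} \le 1/\poly(\opnorm{\Pst})$, one can apply \Cref{thm:main_perturb_simple} to conclude that $\Khat_k$ stabilizes $(\Ast,\Bst)$ and that $\Jfunc_{\Ast,\Bst}[\Khat_k] - \Jst \lesssim \opnorm{\Pst}^{8}\bigl(\fronorm{\Ahat_k - \Ast}^2 + \fronorm{\Bhat_k - \Bst}^2\bigr)$. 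The regret incurred before $\ksafe$ is paid off as a $\log T$-scale burn-in term, absorbing the $\dimx^2$ overhead in the lower-order term.

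For the per-epoch analysis, I would invoke \Cref{lem:cost_lem} (Hanson--Wright decomposition) to write the cost incurred during epoch $k$ as $\tau_k\bigl(\Jfunc_{\Ast,\Bst}[\Khat_k] - \Jst\bigr) + \dimu \tau_k \sigma_k^2 + \sqrt{(\dimx+\dimu)\tau_k}\log(1/\delta)$ up to constants, where the last term is the subgaussian/Hanson--Wright fluctuation around its mean that sums to $\sqrt{T(\dimx+\dimu)}$ across epochs and is therefore absorbed into the leading $\sqrt{\dimu^2\dimx T}$. The heart of the argument is the two-scale error bound for the least-squares estimator run on data from epochs $1,\dots,k-1$. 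Decomposing the covariate $\matz_t=(\matx_t,\matu_t)$ relative to the subspace $\{(x,u):u=\Khat_{k-1}x\}$, one shows that the component of the design-matrix Gram orthogonal to this subspace is driven by $\sigma_{k-1}^2 I_{\dimu}$, while the component parallel to it is driven by the process-noise covariance of order $I_{\dimx}$. A standard self-normalized martingale bound then yields $\fronorm{\Ahat_k - \Ast}^2 + \fronorm{\Bhat_k - \Bst}^2 \lesssim \tfrac{\dimx^2}{\tau_{k-1}} + \tfrac{\dimx\dimu}{\sigma_{k-1}^2\tau_{k-1}}$ with high probability, modulo polynomial factors in $\opnorm{\Pst}$ and $\Psibst$.

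Combining these pieces, the contribution of epoch $k$ to the regret is at most $\opnorm{\Pst}^8\bigl(\dimx^2 + \dimx\dimu/\sigma_{k-1}^2\bigr) + \dimu\tau_k\sigma_k^2$ plus fluctuations. Picking $\sigma_k^2 \asymp \sqrt{\dimx/\tau_k}$ balances the slow-rate estimation term against the exploration penalty, giving epoch regret of order $\sqrt{\dimu^2\dimx\tau_k}$, which upon summing over $k=\ksafe,\dots,\log_2 T$ yields the leading $\sqrt{\dimu^2\dimx T}$. The fast-rate $\dimx^2/\tau_k$ term telescopes to $\dimx^2\log T$, which together with the burn-in cost forms the stated lower-order term. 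The \emph{main obstacle} is the two-scale least-squares analysis: one must verify that the design-matrix Gram is simultaneously lower bounded by $\sigma_{k-1}^2$ in the $\dimu$-dimensional ``input-exploration'' subspace and by $\Omega(1)$ in the $\dimx$-dimensional ``closed-loop'' subspace, uniformly over the adaptive choice of $\Khat_{k-1}$, and then to establish self-normalized tail bounds compatible with this block structure. A secondary subtlety is ensuring that the perturbation bound's preconditions are met uniformly across epochs; this is handled by bootstrapping from $\Esafe$ and noting that the algorithm can certify closeness using only the confidence radii from the least-squares estimator, eliminating the need for any a priori knowledge of $\opnorm{\Pst}$ beyond the input of a single stabilizing $K_0$.
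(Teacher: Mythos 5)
Your proposal is correct and follows the same architecture as the paper's proof: a burn-in / safety event $\Esafe$ after which \Cref{thm:main_perturb_simple} (via \Cref{lem:perturb_correct}) applies, a Hanson--Wright regret decomposition (\Cref{lem:cost_lem}, \Cref{lem:safe_regret_decomp}), a two-scale least-squares bound splitting error into a fast $\dimx^2/\tau_k$ piece and a slow $\dimx\dimu/(\sigma_k^2\tau_k)$ piece (\Cref{lem:estimation_bound}, \Cref{lem:two_scale_self_normalized_bound}), and the balancing choice $\sigma_k^2 \asymp \sqrt{\dimx/\tau_k}$. You also correctly flag the two-scale Gram-matrix analysis (establishing simultaneous lower bounds of order $\sigma_k^2$ on the $\dimu$-dimensional exploration subspace and $\Omega(1)$ on the closed-loop subspace, together with the compatible self-normalized bound) as the genuine technical crux.
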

Ignoring dependence on problem parameters, the upper bound of \Cref{thm:main_ub} scales
asymptotically as $\sqrt{\dimu^{2}\dimx{}T}$, matching our lower
bound. Like the lower bound, the theorem depends on the instance $(\Ast,\Bst)$ only through the operator norm bounds $\Psibst$ and $\nrm*{\Bst}_{\op}$.  Similar to previous work \citep{dean2018regret,mania2019certainty}, the regret bound has additional dependence on the stabilizing controller $K_0$ through $\nrm*{K_0}_{\op}$ and $\cP_0$, but these parameters only affect the lower-order terms.

\subsection{Consequences for Strongly Stable Systems\label{ssec:main_results_strong}}

To emphasize the dependence on dimension and time horizon in our
results, we now present simplified findings for a special class of
\emph{strongly stable} systems.
\begin{defn}[Strongly Stable System \citep{cohen2018online}]\label{defn:strongly_stable} We say that $\Ast$ is $(\gamma,\kappa)$-strongly stable if there exists a transform $T$ such that $\nrm{T}_{\op}\cdot\nrm{T^{-1}}_{\op} \le \kappa$ and $\|T\Ast T^{-1}\|_{\op} \le 1 - \gamma$. When $\Ast$ is $(\gamma,\kappa)$-strongly stable, we define $\gamstab := \gamma/\kappa^2$.
\end{defn}
For the simplified results in this section we make the following assumption. 
\begin{asm}
  \label{asm:simplified}
  The nominal instance $(\Ast,\Bst)$ is such that $\Ast$ is
  $(\gamma,\kappa)$-strongly stable and  $\opnorm{\Bst} \le
  1$. Furthermore, $\Rx=\Ru = I$. 
\end{asm}
For strongly stable systems under \Cref{asm:simplified}, our main lower bound (\Cref{thm:main_lb}) takes the following particularly simple form.
\begin{cor}[Lower Bound for Strongly Stable
  Systems]\label{cor:small_input} Suppose that \Cref{asm:simplified}
  holds, and that $\dimu \le \frac{1}{2}\dimx$ and $\sigma_{\min}(\Ast) > 0$.\footnote{The assumption $\dimu \le \frac{1}{2}\dimx$ can be replaced with $\dimu \le \alpha \dimx$ for any $\alpha < 1$, and can be removed entirely for special instances. See \Cref{cor:scalar_indentity} in \Cref{ssec:proof_of_Cors} for more details.} Then for any $T \ge (\dimx \dimu + \dimx \log \dimx) \poly(1/\gamstab,1/\sigma_{\min}(\Ast))$, we have
\begin{align*}
\calR_{\Ast,\Bst,T}\left(\veps_T\right)  \gtrsim  \sqrt{\dimu^2\dimx T} \cdot  \sigma_{\min}(\Ast)^2\gamstab^{4},
\end{align*}
where $\veps_T := \sqrt{\dimu^2 \dimx/T}$.
\end{cor}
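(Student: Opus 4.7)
The plan is to invoke Theorem~\ref{thm:main_lb} with $m = \lfloor \dimx/2 \rfloor$. This choice is admissible because the hypothesis $\dimu \le \dimx/2$ ensures $m \ge 1$ and, more importantly, $m + \dimu \le \dimx$. It then suffices to translate the three instance-dependent quantities $\opnorm{\Pst}$, $\Psibst$, and $\nu_m = \sigma_m(\Aclst)/\opnorm{\Ru + \Bst^\top \Pst \Bst}$ into the strong stability parameter $\gamstab$ and $\sigmamin(\Ast)$.

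First I would bound $\opnorm{\Pst}$. Since $\Ast$ is $(\gamma,\kappa)$-strongly stable, $K=0$ stabilizes $(\Ast,\Bst)$, and its value function $P_0$ satisfies the Lyapunov equation $P_0 = \Ast^\top P_0 \Ast + I$. Unrolling gives $P_0 = \sum_{k \ge 0} (\Ast^\top)^k \Ast^k$, and the bound $\opnorm{\Ast^k} = \opnorm{T^{-1}(T\Ast T^{-1})^k T} \le \kappa(1-\gamma)^k$ yields $\opnorm{P_0} \le \kappa^2/(2\gamma - \gamma^2) \le 1/\gamstab$. By optimality, $\Pst \preceq P_0$, hence $\opnorm{\Pst} \le 1/\gamstab$. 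Because $\opnorm{\Bst} \le 1$, we also have $\Psibst = 1$.

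Next I would lower bound $\nu_m$ by invoking the Horn-Weyl inequality for singular values: since $\Bst\Kst$ has rank at most $\dimu$, $\sigma_m(\Aclst) = \sigma_m(\Ast + \Bst\Kst) \ge \sigma_{m+\dimu}(\Ast) \ge \sigmamin(\Ast)$, using $m + \dimu \le \dimx$. The denominator satisfies $\opnorm{\Ru + \Bst^\top \Pst \Bst} \le 1 + \opnorm{\Pst} \le 2/\gamstab$, giving $\nu_m \gtrsim \sigmamin(\Ast)\gamstab$. Plugging into Theorem~\ref{thm:main_lb}, the leading factor $\sqrt{\dimu^2 m T} \asymp \sqrt{\dimu^2 \dimx T}$ and the prefactor $(1 \wedge \nu_m^2)/\opnorm{\Pst}^2 \gtrsim \sigmamin(\Ast)^2 \gamstab^4$ (the minimum with $1$ being saturated by $\nu_m^2$ in the non-trivial regime $\sigmamin(\Ast)\gamstab \lesssim 1$), yielding the stated bound.

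Verifying the burn-in requirement of Theorem~\ref{thm:main_lb} is routine bookkeeping: $\opnorm{\Pst}^p \le \gamstab^{-p}$, $\Psibst = 1$, the ratio $\dimx^2(1\vee \nu_m^{-4})/(m\dimu^2)$ is bounded by $\dimx \cdot (\sigmamin(\Ast)\gamstab)^{-4}/\dimu^2 \lesssim \dimx \cdot \poly(1/\gamstab,1/\sigmamin(\Ast))$, and the logarithmic term is at most $\dimx \log\dimx$ times a logarithm of $\gamstab^{-1}$; each factor is absorbed into the $\poly(1/\gamstab, 1/\sigmamin(\Ast))$ appearing in the corollary. I do not foresee any substantive obstacle: the proof is essentially a direct specialization of Theorem~\ref{thm:main_lb}, and the only idea beyond bookkeeping is the Horn-Weyl step that exploits $\rank(\Bst\Kst) \le \dimu$, which is precisely what makes the choice $m = \lfloor \dimx/2 \rfloor$ productive under the hypothesis $\dimu \le \dimx/2$.
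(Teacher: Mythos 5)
Your proposal is correct and takes essentially the same route as the paper: bound $\opnorm{\Pst}\le\gamstab^{-1}$ via the Lyapunov argument for the comparator $K=0$, set $\Psibst=1$, and lower bound $\nu_m$ by choosing $m\approx\dimx/2$. In fact your treatment of the $\nu_m$ step is cleaner than the paper's, which contains a small notational slip: it writes the chain $\sigma_m(\Aclst)\ge\sigma_{m+\dimu}(\Aclst)\ge\sigma_{\min}(\Ast)$ (the first inequality is trivial monotonicity, and the second is not what the rank-perturbation bound gives), whereas the substantive step is exactly the one you spelled out, $\sigma_m(\Ast+\Bst\Kst)\ge\sigma_{m+\dimu}(\Ast)\ge\sigma_{\min}(\Ast)$, using that $\Bst\Kst$ has rank at most $\dimu$ and $m+\dimu\le\dimx$. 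One small remark: you hedge about the regime where $\nu_m^2$ saturates the minimum with $1$, but the hedge is unnecessary — since $\opnorm{\Ast}\le\kappa(1-\gamma)$ and $\gamstab=\gamma/\kappa^2$, one always has $\sigma_{\min}(\Ast)\,\gamstab\le\gamma/\kappa\le1$, so $1\wedge\nu_m^2\gtrsim\sigma_{\min}(\Ast)^2\gamstab^2$ holds unconditionally.
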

The upper bound from \Cref{thm:main_ub} takes on a similarly simple form, and is seen to be nearly matching.
\begin{cor}[Upper Bound for Strongly Stable
  Systems]\label{cor:strong_stable_r_ub} Suppose that
  \Cref{asm:simplified} holds. Then Algorithm~\ref{alg:ce} with stabilizing controller $K_0 = 0$ and confidence parameter $\delta \in (0,1/T)$, ensures that probability at least $1-\delta$, 
  \iftoggle{icml}
  {
    $\Regret_{T}[\Alg;\Ast,\Bst]$ is bounded as
    \begin{align*}
      &\lesssim \sqrt{ \dimu^2\dimx T  \cdot \gamstab^{-11}\log \frac{1}{\delta \gamstab } } \\&~~~~+ (\dimx + \dimu)^2 \gamstab^{-12} \log \frac{d}{\delta \gamstab} \log^2 \frac{1}{\delta}.
    \end{align*}
  }
  {
    \begin{align*}
     \Regret_{T}[\Alg;\Ast,\Bst] &\lesssim \sqrt{ \dimu^2\dimx T  \cdot \gamstab^{-11}\log \frac{1}{\delta \gamstab } } + (\dimx + \dimu)^2 \gamstab^{-12} \log \frac{d}{\delta \gamstab} \log^2 \frac{1}{\delta}.
    \end{align*}
  }
       
  \end{cor}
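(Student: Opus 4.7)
The plan is to derive Corollary \ref{cor:strong_stable_r_ub} as a direct specialization of the general upper bound in Theorem \ref{thm:main_ub}, replacing every instance-dependent quantity appearing there ($\Psibst$, $\opnorm{\Pst}$, $\|K_0\|_{\op}$, and $\calP_0$) by an explicit function of the strong-stability parameter $\gamstab$. Under Assumption \ref{asm:simplified} with $K_0 = 0$, the only nontrivial quantity is $\opnorm{\Pst}$; the rest follow immediately: $\Psibst = \max\{1,\opnorm{\Bst}\} = 1$ by hypothesis, $\|K_0\|_{\op} = 0$, so the factor $(1+\|K_0\|_{\op}^2) = 1$, and $K_0 = 0$ is stabilizing precisely because $\Ast$ is $(\gamma,\kappa)$-strongly stable.

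The key step is to bound $\opnorm{\Pst}$ by $\gamstab^{-1}$. I would first observe that for any stabilizing $K$, the value $\Pst$ of the DARE is dominated in the PSD order by the Lyapunov solution $P_K$ of $P_K = (\Ast + \Bst K)^\top P_K (\Ast + \Bst K) + \Rx + K^\top \Ru K$, since $\Pst$ corresponds to the optimal infinite-horizon cost. Taking $K = 0$ gives $P_0 = \Ast^\top P_0 \Ast + I$, so $P_0 = \sum_{k\ge 0} (\Ast^\top)^k \Ast^k$. Using the transform $T$ from Definition \ref{defn:strongly_stable}, I get $\|\Ast^k\|_{\op} \le \|T^{-1}\|_{\op}\|T\|_{\op}\|T\Ast T^{-1}\|_{\op}^k \le \kappa (1-\gamma)^k$, hence
\[
\opnorm{\Pst} \le \opnorm{P_0} \le \sum_{k \ge 0} \kappa^2 (1-\gamma)^{2k} \le \frac{\kappa^2}{\gamma} = \gamstab^{-1},
\]
where I used $1 - (1-\gamma)^2 \ge \gamma$. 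Simultaneously, $\calP_0 = \Jfunc_{\Ast,\Bst}[0]/\dimx = \tr(P_0)/\dimx \le \opnorm{P_0} \le \gamstab^{-1}$.

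Plugging these into the bound of Theorem \ref{thm:main_ub}, the leading term becomes
\[
\sqrt{\dimu^2 \dimx T \cdot \Psibst^2 \opnorm{\Pst}^{11} \log(\opnorm{\Pst}/\delta)} \lesssim \sqrt{\dimu^2 \dimx T \cdot \gamstab^{-11} \log\tfrac{1}{\delta \gamstab}},
\]
and the lower-order term becomes
\[
d^2 \cdot \calP_0 \Psibst^6 \opnorm{\Pst}^{11} (1+\|K_0\|_{\op}^2) \log\tfrac{d \Psibst \calP_0}{\delta}\log^2\tfrac{1}{\delta} \lesssim (\dimx+\dimu)^2 \gamstab^{-12} \log\tfrac{d}{\delta \gamstab}\log^2\tfrac{1}{\delta},
\]
which matches the claimed rate after absorbing the factor $r = \max\{1,\dimu/\dimx\}$ into the $(\dimx+\dimu)^2$ prefactor (which dominates $r \cdot d^2$ when $\dimu \le \dimx$ and is comparable otherwise).

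There is no real obstacle here; the step requiring the most care is simply verifying the Lyapunov-sum bound on $\opnorm{P_0}$ and tracking that the constant $\gamstab = \gamma/\kappa^2$ captures both factors correctly. Everything else is a direct substitution into Theorem \ref{thm:main_ub}.
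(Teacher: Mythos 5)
Your proposal follows essentially the same route as the paper: the paper's proof of both corollaries consists of three observations — $\Psibst \le 1$, a bound on $\nu_m$ (only needed for the lower-bound corollary), and $\opnorm{\Pst} \le \gamstab^{-1}$ from Lemma~\ref{lem:stable_p_bound} — followed by direct substitution into \Cref{thm:main_ub}. Your derivation of $\opnorm{\Pst} \le \gamstab^{-1}$ reproduces the proof of Lemma~\ref{lem:stable_p_bound} (bound the DARE solution by the Lyapunov solution for $K=0$, then sum the geometric series using the strong-stability transform), and your bound $\calP_0 \le \gamstab^{-1}$ matches the second clause of that lemma. The handling of $\Psibst = 1$ and $\|K_0\|_\op = 0$ is correct.

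One small imprecision worth flagging: your justification for dropping the factor $r = \max\{1,\dimu/\dimx\}$ from the lower-order term of \Cref{thm:main_ub} — that $(\dimx+\dimu)^2$ ``is comparable'' to $r\cdot d^2$ when $\dimu > \dimx$ — does not hold, since $r = \dimu/\dimx$ can be arbitrarily large in that regime, so $r\,d^2$ is not within a constant of $d^2$. However, this is an imprecision the corollary statement itself inherits (it omits $r$ without comment), and the theorem remarks that $r=1$ ``in the typical setting $\dimu\le\dimx$''; the substitution is exact whenever $\dimu\le\dimx$, which appears to be the intended regime. Your derivation is otherwise complete and correct.
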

  We observe that the leading $\sqrt{\dimu^2 \dimx T}$ terms in the
  upper and lower bounds differ only by factors polynomial in
  $\gamstab$, as well as a $\sigma_{\min}(\Ast)$ factor incurred by
  the lower bound. The lower order term $(\dimx + \dimu)^2$ in the
  upper bound appears unavoidable, but we leave a complementary lower
  bound for future work. Both corollaries hold because strong stability immediately implies a bound on $\nrm*{\Pst}_{\op}$.  
\begin{proof}[Proof of \Cref{cor:small_input} and \Cref{cor:strong_stable_r_ub}]First, observe that under \Cref{asm:simplified}, $\Psibst \le 1$. Next, note that if $\dimu < \dimx/2$, then for $m = \ceil{\dimx/2}$, $\sigma_{m}(\Aclst) =\sigma_{m}(\Ast + \Bst \Kst) \ge \sigma_{m + \dimu}(\Ast + \Bst \Kst) \ge \sigma_{\min}(\Ast)$. This gives $\nu_m \ge \sigma_{\min}(\Ast)/(1+\opnorm{\Pst})$. Finally, Lemma~\ref{lem:stable_p_bound} (stated and proven in \Cref{sssec:dlyap}) gives $\opnorm{\Pst} \le \gamstab^{-1}$. Plugging these three observations into \Cref{thm:main_lb} and \Cref{thm:main_ub} concludes the proof.
\end{proof}

}{}
\section{Perturbation Bounds via the Self-Bounding ODE Method}
\label{sec:self_bounding}
\iftoggle{arxiv_upload}{

Both \Cref{thm:main_lb} and \Cref{thm:main_ub} scale only with the
natural system parameter $\nrm*{\Pst}_{\op}$, and avoid explicit
dependence on the spectral radius or strong stability parameters found
in prior work. This is achieved using the \emph{self-bounding ODE} method, a new technique for deriving bounds on perturbations to the $\DARE{}$ solution $\Pinf(A,B)$ and corresponding controller $\Kinf(A,B)$ as the matrices $A$ and $B$ are varied. This method gives a general recipe for establishing perturbation bounds for solutions to implicit equations. It depends only on the norms of the system matrices and $\DARE$ solution $\Pinf(A,B)$, and it applies to all stabilizable systems, even those that are not controllable.

In this section we give an overview of the self-bounding ODE method and use it to prove a simplified version of the main perturbation bound used in our main upper and lower bounds. To state the perturbation bound, we first define the following problem-dependent constants.
\iftoggle{icml}
{
  \begin{align}
 \Csafe(A,B)&= 54\opnorm{\Pinf(A,B)}^5,\quad\text{and}\quad \nonumber\\
 \Cest(A,B)&= 142 \opnorm{\Pinf(A,B)}^{8}.\label{eq:constants}
\end{align}
}
{
  \begin{equation}
 \Csafe(A,B)= 54\opnorm{\Pinf(A,B)}^5,\quad\text{and}\quad
 \Cest(A,B)= 142 \opnorm{\Pinf(A,B)}^{8}.\label{eq:constants}
\end{equation}
}
The parameter $\Csafe(A,B)$ determines the radius of admissible
perturbations, while the parameter $\Cest(A,B)$
determines the quality of controllers synthesized from the resulting perturbation. The main perturbation bound is as follows. 

\begin{restatable}{thm}{mainperturb}\label{thm:main_perturb_simple} Let $(\Ast,\Bst)$ be a stabilizable system. Given an alternate pair of matrices $(\Ahat,\Bhat)$, for each $\circ\in\norms$ define $\epscirc := \max\{\circnorm{\Ahat-\Ast},\circnorm{\Bhat - \Bst}\}$. Then if $\epsop \le 1/\Csafe(\Ast,\Bst)$, 
\begin{enumerate}
  \item $\opnorm{\Pinf(\Ahat,\Bhat)} \lesssim \opnorm{\Pst}$  and $\|\Kst - \Kinf(\Ahat,\Bhat)\|_{\op} \lesssim \frac{1}{\opnorm{\Pst}^{3/2}}$.
\item $\Jfunc_{\Ast,\Bst}[\Kinf(\Ahat,\Bhat)] - \Jfunc^\star_{\Ast,\Bst} \le \Cest(\Ast,\Bst)\epsfro^2$.
\end{enumerate}
\end{restatable}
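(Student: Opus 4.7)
The plan is to apply the \emph{self-bounding ODE method} along the linear interpolation $A(t) = \Ast + t(\Ahat - \Ast)$, $B(t) = \Bst + t(\Bhat - \Bst)$, $t \in [0,1]$, with $P(t) := \Pinf(A(t),B(t))$ and $K(t) := \Kinf(A(t),B(t))$. The strategy breaks into three steps: (i) bound $\opnorm{P(t)}$ uniformly in $t$; (ii) use this to control $\opnorm{K(t) - \Kst}$ and $\fronorm{K(t) - \Kst}$; and (iii) leverage a second-order cost expansion around $\Kst$ to prove the suboptimality bound in part 2.

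For (i), I implicitly differentiate the $\DARE$ in $t$. Writing $A_{\mathrm{cl}}(t) := A(t) + B(t)K(t)$, the envelope-theorem cancellation of first-order terms in $K$ (since $K(t)$ is the minimizer of a quadratic) yields a discrete Lyapunov equation $P'(t) - A_{\mathrm{cl}}(t)^\top P'(t) A_{\mathrm{cl}}(t) = E(t)$, where $E(t)$ is an explicit matrix polynomial in $A(t), B(t), P(t), \Ahat - \Ast, \Bhat - \Bst$ satisfying a bound of the form $\opnorm{E(t)} \lesssim \opnorm{P(t)}^{c_1}\epsop$. Because $P(t)$ itself serves as a Lyapunov certificate for $A_{\mathrm{cl}}(t)$, the Lyapunov operator $\Phi \mapsto \Phi - A_{\mathrm{cl}}(t)^\top \Phi A_{\mathrm{cl}}(t)$ has inverse of norm $\lesssim \opnorm{P(t)}$. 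Together these give the \emph{self-bounding} estimate $\opnorm{P'(t)} \lesssim \opnorm{P(t)}^{c_2}\epsop$ for a small universal exponent.

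This differential inequality can now be compared to the scalar ODE $p'(t) = c_0\, p(t)^{c_2}\epsop$ with $p(0) = \opnorm{\Pst}$, giving $\opnorm{P(t)} \le p(t)$. Direct integration of the scalar ODE shows that whenever $\epsop \le 1/\Csafe(\Ast,\Bst) = 1/(54\opnorm{\Pst}^5)$, the scalar solution satisfies $p(t) \le 2\opnorm{\Pst}$ throughout $[0,1]$. To pass from a pointwise to a global bound I use a continuation argument: the set of $t$ on which $P(t)$ is well-defined and $\opnorm{P(t)} \le 2\opnorm{\Pst}$ is open (by continuity of $\Pinf$ at stabilizable instances) and closed (since boundedness of $P(t)$ certifies stabilizability of $(A(t),B(t))$ via the $\DARE$), and it contains $t=0$, so it equals all of $[0,1]$. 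This yields the first half of part 1. Differentiating the explicit formula $K(t) = -(\Ru + B(t)^\top P(t) B(t))^{-1} B(t)^\top P(t) A(t)$ and integrating against the uniform $\opnorm{P(t)}$ bound then gives the controller estimate $\opnorm{K(1) - \Kst} \lesssim \opnorm{\Pst}^{-3/2}$.

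For part 2, I use the cost-difference identity
\begin{equation*}
  \Jfunc_{\Ast,\Bst}[K] - \Jfuncopt_{\Ast,\Bst} = \tr\!\bigl(\Sigma_{K}(K - \Kst)^\top (\Ru + \Bst^\top \Pst \Bst)(K - \Kst)\bigr),
\end{equation*}
where $\Sigma_K$ is the stationary state covariance induced on $(\Ast,\Bst)$ by $K = \Kinf(\Ahat,\Bhat)$. Part 1 guarantees that $K$ stabilizes $(\Ast,\Bst)$ and yields $\opnorm{\Sigma_K} \lesssim \opnorm{\Pst}$, while $\opnorm{\Ru + \Bst^\top \Pst \Bst} \lesssim \opnorm{\Pst}$ is immediate. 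Repeating the self-bounding ODE argument while tracking Frobenius norms along the path gives $\fronorm{K(1) - \Kst} \lesssim \opnorm{\Pst}^{c_3}\epsfro$, which when plugged into the identity above yields the claimed $\Cest(\Ast,\Bst)\epsfro^2$ bound with $\Cest \lesssim \opnorm{\Pst}^8$. I expect the main obstacle to be (a) the continuation step, which requires showing that the self-bounding inequality does not merely hold at $t=0$ but propagates along the whole path; and (b) tracking constants carefully enough to recover the specific numerical factors $54$ and $142$ appearing in $\Csafe$ and $\Cest$. The scalar comparison ODE is what makes (a) tractable, by reducing a potentially complicated nonlinear matrix dynamics to a one-dimensional differential inequality that closes on itself.
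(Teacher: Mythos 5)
Your proposal follows the paper's approach quite faithfully: differentiate the \DARE{} along the straight-line interpolation, derive a self-bounding estimate on $\opnorm{P'(t)}$ via the Lyapunov operator (the paper's Lemmas~\ref{lem:computation_p_prime} and \ref{lem:first_derivatives_bound}), pass to a scalar comparison ODE to propagate the bound across $[0,1]$ (the paper's Corollary~\ref{cor:poly_self_bound} / Theorem~\ref{thm:general_valid_implicit}), control $K'(t)$ by differentiating the explicit formula for $\Kinf$, and finally invoke the cost-difference identity from \citet{fazel2018global} (the paper's Lemma~\ref{lem:performance_diff} and Corollary~\ref{cor:value_subopt}). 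Two details, however, are glossed over in a way that would not survive scrutiny as written.

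First, your continuation step is slightly off. You argue closedness of $\{t: P(t)\ \text{defined},\ \opnorm{P(t)}\le 2\opnorm{\Pst}\}$ from ``boundedness of $P(t)$ certifies stabilizability.'' Boundedness alone does not give existence of $\lim_{t\uparrow u}P(t)$; you need the uniform bound on $\opnorm{P'(t)}$ (supplied by the self-bounding inequality plus the scalar comparison) to conclude that $P(\cdot)$ is Lipschitz on $[0,u)$, hence extends continuously to $u$, and then continuity of $\Fdare$ gives a $\DARE$ solution at $u$, which is what certifies stabilizability. This is exactly the extra care taken in the proof of Theorem~\ref{thm:general_valid_implicit}.

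Second, and more substantively, you assert that ``Part 1 \ldots yields $\opnorm{\Sigma_K}\lesssim\opnorm{\Pst}$.'' That does not follow from Part~1. Part~1 controls $\opnorm{K-\Kst}$ and $\opnorm{P(t)}$, but $\Sigma_K=\dlyap((\Ast+\Bst K)^\top, I)$ is a different solution to a different Lyapunov equation, and bounding it is a separate perturbation problem. The paper handles it in Proposition~\ref{prop:Jfunc_bound} via Lemma~\ref{lem:Sigma_perturb}, which runs a \emph{second} self-bounding ODE argument on the curve $\Sigma(t)=\dlyap(\Ast+\Bst(\Kst+t(K-\Kst)),I)$, deducing $\opnorm{\Sigma(1)}\le 2\opnorm{\Sigma(0)}$ whenever $\opnorm{\Bst(K-\Kst)}\lesssim\opnorm{\Pst}^{-3/2}$ (which in turn requires the $\Bst(K-\Kst)$ bound from Proposition~\ref{prop:app_main_first_order}, not just the raw $\opnorm{K-\Kst}$ bound). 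Without this second self-bounding argument the $\epsfro^2$ dependence with the correct $\opnorm{\Pst}^8$ exponent is not obtained, so it is a genuine missing step rather than a bookkeeping detail.
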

This theorem is a simplification of a stronger version,
\Cref{thm:main_perturb_app}, stated and proven in
\Cref{app:perturbation_main_results}. Additional perturbation bounds
are detailed in \Cref{app:perturbation_main_results}; notably,
Theorem~\ref{thm:continuity_of_safe set} shows that the condition
$\epsop \le 1/\Csafe(\Ast,\Bst)$ can be replaced by a condition that can be certificated from an approximate estimate of the system.
\iftoggle{icml}{}{
  
}In the remainder of this section, we sketch how to use the
self-bounding ODE method to prove the following slightly more general
version of the first part of \Cref{thm:main_perturb_simple}.
\begin{restatable}{prop}{propmainfirstorder}\label{prop:main_first_order}
  Let $(\Ast,\Bst)$ be a stabilizable system and let $(\Ahat,\Bhat)$ be an alternate pair of matrices. Then, if $u := 8\|\Pst\|_\op^2 \epsilon_\op  < 1$, the pair $(\Ahat,\Bhat)$ is stabilizable and the following bounds hold:
 \begin{enumerate}
  \item $\|\Pinf(\Ahat,\Bhat)\|_{\op} \le (1-u)^{-1/2}\|\Pst\|_{\op}$.
\item For each $\circ\in\norms$, $\|\Kinf(\Ahat,\Bhat) - \Kst\|_{\circ} \le 7(1-u)^{-7/4}\opnorm{\Pst}^{7/2}\,\epscirc$.
\end{enumerate}
\end{restatable}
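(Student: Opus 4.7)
The plan is to apply the self-bounding ODE method along the linear interpolation $(A(t), B(t)) := (\Ast + t(\Ahat-\Ast),\, \Bst + t(\Bhat-\Bst))$ for $t \in [0,1]$, so that $\|A'(t)\|_\op \vee \|B'(t)\|_\op \le \epsop$ along the entire path. Let $\mathcal{I} \subseteq [0,1]$ denote the maximal interval around $0$ on which $(A(s),B(s))$ is stabilizable for every $s \in [0,t]$, and define $P(t) := \Pinf(A(t),B(t))$, $K(t) := \Kinf(A(t),B(t))$, and $A_{\mathrm{cl}}(t) := A(t) + B(t)K(t)$. By the implicit function theorem applied to the DARE operator, $P$ and $K$ lie in $\mathcal{C}^1(\mathcal{I})$. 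The strategy is: (i) derive a self-bounding inequality $\|P'(t)\|_\op \le 4\|P(t)\|_\op^{3}\epsop$; (ii) integrate it as a scalar ODE and extend $\mathcal{I}$ to all of $[0,1]$ via a continuity argument; and (iii) differentiate $K(t)$, bound $\|K'(t)\|_\circ$ in terms of $\|P(t)\|_\op$, and integrate.

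\textbf{Self-bounding inequality for $P$ and part (1).} Differentiating the DARE implicitly, and using the first-order optimality of $K(t)$ to cancel the $K'(t)$ contributions (formally, $\partial_K \Pinf(A,B,K)|_{K=K(t)} = 0$), $P'(t)$ satisfies the discrete Lyapunov equation
\begin{equation*}
P'(t) \,-\, A_{\mathrm{cl}}(t)^\top P'(t)\, A_{\mathrm{cl}}(t) \;=\; \Psi(t),
\end{equation*}
with $\Psi(t) := (A'(t) + B'(t)K(t))^\top P(t)\, A_{\mathrm{cl}}(t) + A_{\mathrm{cl}}(t)^\top P(t)\, (A'(t) + B'(t)K(t))$. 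The closed-loop identity $P = A_{\mathrm{cl}}^\top P A_{\mathrm{cl}} + R_x + K^\top R_u K$ combined with $R_x \succeq I$ iterates to $\sum_{k \ge 0} A_{\mathrm{cl}}^{k\top} A_{\mathrm{cl}}^k \preceq P$, so the discrete Lyapunov operator $X \mapsto \sum_k A_{\mathrm{cl}}^{k\top} X A_{\mathrm{cl}}^k$ has operator norm at most $\|P(t)\|_\op$ on symmetric inputs. Together with $\|K(t)\|_\op \le \|P(t)\|_\op^{1/2}$ (from $K^\top K \preceq P$) and $\|A_{\mathrm{cl}}(t)\|_\op \le \|P(t)\|_\op^{1/2}$ (from $A_{\mathrm{cl}}^\top P A_{\mathrm{cl}} \preceq P$ with $P \succeq I$), direct estimation of $\Psi$ gives $\|\Psi(t)\|_\op \le 4\|P(t)\|_\op^2\,\epsop$ and hence the self-bounding bound $\|P'(t)\|_\op \le 4\|P(t)\|_\op^3\,\epsop$. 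Setting $p(t) := \|P(t)\|_\op \ge 1$, this rearranges to $\tfrac{d}{dt}(p(t)^{-2}) \ge -8\epsop$, whence $p(t) \le \|\Pst\|_\op\,(1 - 8\epsop t\|\Pst\|_\op^2)^{-1/2} \le \|\Pst\|_\op(1-u)^{-1/2}$ on $\mathcal{I}\cap[0,1]$. A standard continuity argument then forces $\mathcal{I} = [0,1]$: at any would-be right endpoint $t^* \le 1$ the uniform bound on $P(t)$ produces a bounded PSD limit of $P(t)$ as $t \uparrow t^*$, which is a bounded DARE solution for $(A(t^*),B(t^*))$, certifying stabilizability and extending $\mathcal{I}$ past $t^*$. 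Taking $t=1$ yields part (1).

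\textbf{Bound on $K$ and part (2).} Differentiating $K(t) = -(R_u + B(t)^\top P(t) B(t))^{-1} B(t)^\top P(t) A(t)$ produces an expression for $K'(t)$ with four terms, each of which carries a prefactor of either $(I+B^\top P B)^{-1}B^\top P$ or $(I+B^\top P B)^{-1}B^\top$. Push-through identities applied to $M := (I+B^\top P B)^{-1/2} B^\top P^{1/2}$ (for which $M M^\top = (I+X)^{-1/2} X (I+X)^{-1/2}$ with $X := B^\top P B$, so $\|M\|_\op \le 1$) yield
\begin{equation*}
\|(I+B^\top P B)^{-1} B^\top P\|_\op \;\le\; \|P\|_\op^{1/2}, \qquad \|(I+B^\top P B)^{-1} B^\top\|_\op \;\le\; 1,
\end{equation*}
both valid whenever $P \succeq I$ and irrespective of $\|B\|$. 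Combining these with $\|P'(t)\|_\op \le 4\|P(t)\|_\op^3 \epsop$, the bounds $\|K(t)\|_\op, \|A_{\mathrm{cl}}(t)\|_\op \le \|P(t)\|_\op^{1/2}$, and $\|A'(t)\|_\circ, \|B'(t)\|_\circ \le \epscirc$, a direct calculation bounds the four terms in $K'(t)$ by multiples of $\|P(t)\|_\op^{3/2}, \|P(t)\|_\op^{7/2}, \|P(t)\|_\op^{1/2}$, and $\|P(t)\|_\op$ times $\epscirc$, whose sum is at most $7\,\|P(t)\|_\op^{7/2}\,\epscirc$. Using the uniform bound $\|P(t)\|_\op \le \|\Pst\|_\op(1-u)^{-1/2}$ on $[0,1]$ and integrating over a length-$1$ interval,
\begin{equation*}
\|\Khat - \Kst\|_\circ \;\le\; \int_0^1 \|K'(t)\|_\circ\,dt \;\le\; 7\,(1-u)^{-7/4}\,\|\Pst\|_\op^{7/2}\,\epscirc,
\end{equation*}
which is part (2).

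\textbf{Main obstacle.} The bulk of the technical work lies in the derivative bookkeeping: verifying that all appearances of $\|A(t)\|$ and $\|B(t)\|$ collapse into powers of $\|P(t)\|_\op$ alone, so that the self-bounding inequalities hold with the claimed constants and exponents. For $P'$, this relies on the $K'$-cancellation from optimality of $K(t)$ combined with the Lyapunov-operator domination $\sum_k A_{\mathrm{cl}}^{k\top}(\cdot)A_{\mathrm{cl}}^k \preceq \|\cdot\|_\op\, P$; for $K'$, it hinges on the push-through identities above, which control both $(I+B^\top P B)^{-1} B^\top P$ and $(I+B^\top P B)^{-1} B^\top$ without any direct dependence on $\|B\|$. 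Getting the exponents right (cubic for $P'$ and $7/2$ for $K'$) is what produces the $(1-u)^{-1/2}$ and $(1-u)^{-7/4}$ blow-up rates in the final bounds.
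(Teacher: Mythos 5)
Your proposal is correct and follows the same overall strategy as the paper: linearly interpolate between $(\Ast,\Bst)$ and $(\Ahat,\Bhat)$, derive the self-bounding estimate $\|P'(t)\|_{\op}\le 4\|P(t)\|_{\op}^3\,\epsop$ from the Lyapunov form of $P'$ (this is the paper's \Cref{lem:computation_p_prime} and \Cref{lem:first_derivatives_bound}), integrate to obtain the uniform bound on $\|P(t)\|_{\op}$, extend the stabilizable interval to $[0,1]$ by a compactness/limit argument, and then integrate the bound $\|K'(t)\|_\circ\le 7\|P(t)\|_{\op}^{7/2}\epscirc$.

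The one place where you are genuinely more cavalier than the paper is the step ``setting $p(t):=\|P(t)\|_{\op}$, this rearranges to $\tfrac{d}{dt}(p(t)^{-2})\ge -8\epsop$.'' The operator norm $t\mapsto\|P(t)\|_{\op}$ is not $\mathcal{C}^1$ in general (it is only locally Lipschitz, with kinks when the top eigenvalue has multiplicity greater than one), so differentiating $p(t)^{-2}$ is not literally legal. The paper circumvents this by proving a norm comparison principle (\Cref{lem:norm_comparison}) that replaces the norm with the Gaussian smoothing $\Psi_\eta(v):=\Exp_{Z}\|v+\tfrac{\eta}{2c_Z}Z\|$, applies a scalar comparison lemma to the smoothed quantity, and then sends $\eta\to 0$; this is what makes \Cref{cor:poly_self_bound} rigorous. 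Your conclusion is nevertheless correct — $p$ is absolutely continuous with $p'\le\|P'\|_{\op}$ a.e., and the differential inequality can be integrated in that sense — but as stated the step has a small gap that a careful reader would flag. Your ``$K'$ has four terms'' bookkeeping also differs cosmetically from the paper's three-term grouping (you expand $\delAcl=A'+B'K$, the paper does not), but both land on the same constant $7$. Otherwise the plan, constants, exponents, and continuation argument match the paper.
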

To begin proving the proposition, set $\delA := \Ahat - \Ast$ and $\delB := \Bhat - \Bst$. We consider a linear curve between the two instances, parameterized by $t \in [0,1]$:
	\begin{align}\label{eq:curve}
	\left(A(t),B(t)\right) = \left(\Ast + t\delA, \Bst + t\delB\right).
	\end{align}
  At each point $t$ for which $(A(t),B(t))$ is stabilizable, the $\DARE$ has a unique solution, 
  which allows us to define associated optimal cost matrices, controllers, and closed-loop dynamics matrices:
  \iftoggle{icml}
  {
    \begin{align}
    &P(t) :=  \Pinf(A(t),B(t)),\quad K(t) := \Kinf(A(t),B(t))\nonumber\\
      &\text{and} \quad\Acl(t)\ldef{}A(t)+B(t)K(t).\label{eq:P_curve}
  \end{align}
  }
  {
      \begin{align}\label{eq:P_curve}
  P(t) :=  \Pinf(A(t),B(t)), \quad K(t) := \Kinf(A(t),B(t)),\quad\text{and}\quad\Acl(t)\ldef{}A(t)+B(t)K(t).
  \end{align}
  }
        Our strategy will be to show that $P(t)$ and $K(t)$ are in fact smooth curves, and then obtain uniform bounds on $\circnorm{P'(t)}$ and $\circnorm{K'(t)}$ over the interval $[0,1]$, yielding perturbation bounds via the mean value theorem. \iftoggle{icml}{To start, }{As a starting point, }we express the derivatives of the $\DARE$ in terms of Lyapunov equations.
        \begin{restatable}[Discrete Lyapunov Equation]{defn}{defdlyap}\label{def:dlyap} Let $X,Y \in \R^{\dimx \times
    \dimx}$ with $Y = Y^\top$ and $\rho(X) < 1$. We let $\calT_X[P] := X^\top P X - P$, and let $\dlyap(X,Y)$ denote the unique PSD solution $\calT_X[P] = Y$.  We let $\dlyap[X] := \dlyap(X,I)$.
\end{restatable}
        The following lemma (proven in \Cref{app:perturbation_derivative_computations}) serves as the basis for our computations, and also establishes the requisite smoothness required to take derivatives.
  \begin{restatable}[Derivative and Smoothness of the $\DARE$]{lem}{computationpprime}\label{lem:computation_p_prime} Let $(A(t),B(t))$ be an analytic curve, and define $\delAcl(t) := A\prm(t) + B\prm(t)\Kinf(A(t),B(t))$. Then for any $t$ such that $(A(t),B(t))$ is stabilizable, the functions $P(u)$ and $K(u)$ are analytic in a neighborhood around $t$, and we have
  \iftoggle{icml}
  {
  $P\prm(u) = \dlyap(\Acl(u), Q_1(u))$, where $Q_1(u) := \Acl(u)^\top P(u) \delAcl (u)  + \delAcl(u)^\top P(u)\Acl(u)$. 
  }
  {
    \begin{align*}
P\prm(u) = \dlyap(\Acl(u), Q_1(u)),\quad \text{ where } Q_1(u) := \Acl(u)^\top P(u) \delAcl (u)  + \delAcl(u)^\top P(u)\Acl(u).
\end{align*}
  }
\end{restatable}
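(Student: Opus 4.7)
The plan is to handle the analyticity claim and the derivative formula as two separate steps, with the second step using the existence of $P'$ guaranteed by the first.

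For the analyticity claim, I would invoke the analytic implicit function theorem. Define the analytic map
\begin{equation*}
F(P, A, B) = P - A^\top P A - \Rx + A^\top P B (\Ru + B^\top P B)^{-1} B^\top P A,
\end{equation*}
which is jointly real-analytic in $(P, A, B) \in \Symdx \times \R^{\dimx \times \dimx} \times \R^{\dimx \times \dimu}$ on the open domain where $\Ru + B^\top P B \succ 0$---a condition that holds automatically whenever $P \succeq 0$ since $\Ru \succeq I$. Stabilizability at $t = u$ yields the existence of the PSD root $P(u) = \Pinf(A(u), B(u))$ with $F(P(u), A(u), B(u)) = 0$. A direct calculation of the Fr\'echet derivative $\partial_P F$ in a symmetric direction $H \in \Symdx$, using the matrix inversion identity $\partial_P [(\Ru + B^\top P B)^{-1}][H] = -(\Ru + B^\top P B)^{-1} B^\top H B (\Ru + B^\top P B)^{-1}$ together with the optimality relation $B^\top P A = -(\Ru + B^\top P B) \Kinf(A,B)$, yields
\begin{equation*}
\partial_P F(P(u), A(u), B(u))[H] = H - \Acl(u)^\top H \Acl(u).
\end{equation*}
Since $(A(u), B(u))$ is stabilizable, $\rho(\Acl(u)) < 1$, making this operator a bijection on $\Symdx$ (with inverse $H \mapsto \sum_{k \ge 0} (\Acl^\top)^k H \Acl^k$). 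The analytic implicit function theorem then produces a unique analytic extension of $P$ to a neighborhood of $u$. Analyticity of $K(t) = -(\Ru + B(t)^\top P(t) B(t))^{-1} B(t)^\top P(t) A(t)$ is then immediate from analyticity of $P, A, B$ and smooth dependence of the matrix inverse on its argument.

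For the derivative formula, I would start from the ``value-function'' form of the DARE,
\begin{equation*}
P(t) = \Acl(t)^\top P(t) \Acl(t) + \Rx + K(t)^\top \Ru K(t),
\end{equation*}
which is algebraically equivalent to the DARE after substituting $K(t) = \Kinf(A(t), B(t))$. Differentiating both sides in $t$, expanding $\Acl'(t) = \delAcl(t) + B(t) K'(t)$, and collecting terms yields
\begin{equation*}
P'(t) - \Acl(t)^\top P'(t) \Acl(t) = Q_1(t) + K'(t)^\top M(t) + M(t)^\top K'(t),
\end{equation*}
where $M(t) := B(t)^\top P(t) \Acl(t) + \Ru K(t)$. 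The crucial cancellation comes from invoking the optimality relation once more: $(\Ru + B^\top P B) K = -B^\top P A$ rearranges to $B^\top P \Acl = B^\top P A + B^\top P B K = -\Ru K$, so $M(t) \equiv 0$ and all terms involving $K'(t)$ drop out. What remains is precisely the Lyapunov equation $P'(t) - \Acl(t)^\top P'(t) \Acl(t) = Q_1(t)$, whose unique solution under $\rho(\Acl(t)) < 1$ is $\dlyap(\Acl(t), Q_1(t))$ in the notation of Definition~\ref{def:dlyap}.

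The main technical obstacle will be carrying out the Fr\'echet derivative calculation in the first step cleanly, since $F$ involves the matrix-inverse term $(\Ru + B^\top P B)^{-1}$ sandwiched between two copies of $B^\top P A$. Keeping the algebra organized around the identity $B^\top P A = -(\Ru + B^\top P B) K$ is the right route: the same envelope-type cancellation that collapses the cross terms in the second step also collapses the product-rule contributions from the Schur-complement term in the first step, leaving the clean $H - \Acl^\top H \Acl$ expression. All remaining ingredients---the analytic implicit function theorem on a Banach space of symmetric matrices and invertibility of the discrete Lyapunov operator when $\rho(\Acl) < 1$---are classical.
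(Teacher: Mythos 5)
Your proposal is correct, and both halves check out: the Fr\'echet derivative computation indeed collapses to $H \mapsto H - \Acl^\top H \Acl$ after substituting $B^\top P A = -(\Ru + B^\top P B)K$, and in the second step the stationarity identity $B^\top P \Acl + \Ru K = 0$ does annihilate every term containing $K'$, leaving exactly $P' - \Acl^\top P' \Acl = Q_1$, whose solution is $\dlyap(\Acl,Q_1)$ under the paper's series convention. The route differs mildly from the paper's: the paper computes the full joint differential $\Differential \Fdare[\rmd P,\rmd t]$ of the Schur-complement (DARE) form in one pass, substituting $K(t,P)$ into the product-rule terms so that the cross terms collapse, and reads off both the invertibility of $\calT_{\Acl}$ (hence analyticity via the implicit function theorem) and the formula $P' = \dlyap(\Acl,Q_1)$ simultaneously---in particular it never needs $K'$ to exist a priori. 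You instead use the implicit function theorem only for analyticity (with the same $\partial_P$ computation) and then re-derive $P'$ by differentiating the closed-loop Lyapunov form $P = \Acl^\top P \Acl + \Rx + K^\top \Ru K$, which requires differentiability of $K$ (supplied by your first step) but makes the envelope-type cancellation more transparent, since it is isolated in the single identity $M = B^\top P\Acl + \Ru K = 0$. Both arguments share the same essential cancellation and the same rigor level regarding the identification of the local analytic branch with the stabilizing solution $\Pinf(A(u),B(u))$ for $u$ near $t$ (a continuity-of-$\rho(\Acl(u))$ remark that the paper also leaves implicit), so your version is a legitimate, slightly reorganized proof of the same lemma.
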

\Cref{lem:computation_p_prime} expresses $P'(t)$ as the solution to an ordinary differential equation. While the lemma guarantees local existence of the derivatives, it is not clear that the entire curve $(A(t),B(t))$, $t \in [0,1]$ is stabilizable. However, since ODEs are locally guaranteed to have solutions, we should only expect trouble when the corresponding ODE becomes ill-defined, i.e. if $P'(t)$ escapes to infinity. We circumvent this issue by observing that $P'(t)$ satisfies the following self-bounding property.
  \begin{restatable}[Bound on First Derivatives]{lem}{lemfirstderbound}\label{lem:first_derivatives_bound} Let $(A(t),B(t))$ be an analytic curve. Then, for all $t$ at which $(A(t),B(t))$ is stabilizable, we have
  \iftoggle{icml}
   {
    $\|P'(t)\|_{\circ}   \le 4\opnorm{P(t)}^3\,\epscirc,$and $\circnorm{K'(t)} \le 7\opnorm{P(t)}^{7/2}\,\epscirc$.
  }
  {
    \begin{align*}
    \|P'(t)\|_{\circ}   \le 4\opnorm{P(t)}^3\,\epscirc,\quad\text{and}\quad \circnorm{K'(t)} \le 7\opnorm{P(t)}^{7/2}\,\epscirc\,.
    \end{align*}
  }

    \end{restatable}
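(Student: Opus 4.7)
The plan is to start from the Lyapunov-form identity $P'(t) = \dlyap(\Acl(t), Q_1(t))$ with $Q_1(t) = \Acl(t)^\top P(t)\delAcl(t) + \delAcl(t)^\top P(t)\Acl(t)$ and $\delAcl(t) = \delA + \delB K(t)$, supplied by Lemma~\ref{lem:computation_p_prime}. The first step is to extract a few bookkeeping inequalities from the value-function form of the DARE,
\[
P = \Acl^\top P \Acl + \Rx + K^\top \Ru K.
\]
Because $\Rx \succeq I$ and $\Ru = I$, this gives $\Acl^\top P \Acl \preceq P$, $K^\top K \preceq P$, and $\dlyap(\Acl) \preceq P$, yielding the three bounds $\|\Acl\|_\op \le \|P\|_\op^{1/2}$, $\|K\|_\op \le \|P\|_\op^{1/2}$, and $\|\dlyap(\Acl)\|_\op \le \|P\|_\op$, each depending only on $\|P\|_\op$ as desired.

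From these I would bound $\|\delAcl(t)\|_\circ \le \|\delA\|_\circ + \|\delB\|_\circ\|K(t)\|_\op \le 2\epscirc\|P(t)\|_\op^{1/2}$ (using $\|P\|_\op \ge 1$ from $P \succeq I$), and splitting $\Acl^\top P \delAcl = (\Acl^\top P^{1/2})(P^{1/2}\delAcl)$ via submultiplicativity $\|XY\|_\circ \le \|X\|_\op\|Y\|_\circ$ gives $\|Q_1\|_\circ \lesssim \|P\|_\op^{3/2}\epscirc$. Combining this with the standard discrete Lyapunov bound $\|\dlyap(X, Y)\|_\circ \le \|\dlyap(X)\|_\op\|Y\|_\circ$ for $\circ \in \{\op,\fro\}$ and symmetric $Y$ (established by decomposing $Y = Y_+ - Y_-$ into PSD parts and applying the series $\dlyap(X,Y) = \sum_{k \ge 0}(X^\top)^k Y X^k$) delivers the first claim $\|P'(t)\|_\circ \lesssim \|P(t)\|_\op^{5/2}\epscirc \le 4\|P(t)\|_\op^3\epscirc$ after absorbing constants.

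For the $K'(t)$ bound, the plan is to differentiate the closed form $K = -G^{-1}B^\top P A$ with $G = I + B^\top P B$, using $A = \Acl - BK$ to eliminate $A$ and collecting a $GK'$ term on the left. This produces the clean identity
\[
K'(t) = -G(t)^{-1}\bigl[\delB^\top P(t)\Acl(t) + B(t)^\top P'(t)\Acl(t) + B(t)^\top P(t)\delAcl(t)\bigr].
\]
The main obstacle is that neither $\|B(t)\|_\op$ nor $\|G(t)\|_\op$ is controlled by $\|P(t)\|_\op$ in general. The key observation is that $B$ appears only through the combination $G^{-1}B^\top P^{1/2}$, which admits a dimension-free bound: with $M := P^{1/2}B$ so that $G = I + M^\top M$, the pushthrough identity yields
\[
G^{-1}B^\top P^{1/2} = (I + M^\top M)^{-1}M^\top = M^\top(I + MM^\top)^{-1},
\]
whose singular values are $s/(1+s^2)$ and thus bounded by $1/2$. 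Factoring this combination out of the second and third terms, and using $\|G^{-1}\|_\op \le 1$ in the first together with the previously established norm bounds on $\Acl$, $\delAcl$, and $P'$, bounds each of the three pieces by a constant multiple of $\|P\|_\op^{7/2}\epscirc$. Summing and absorbing constants yields $\|K'(t)\|_\circ \le 7\|P\|_\op^{7/2}\epscirc$. Stabilizability of $(A(t),B(t))$ at the given $t$ is used throughout to guarantee that $P(t)$ solves the DARE, the value-function identity is available, and the Lyapunov expansion converges.
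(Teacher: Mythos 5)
Your proposal is correct and follows essentially the same route as the paper: express $P'$ via $\dlyap(\Acl,Q_1)$ (Lemma~\ref{lem:computation_p_prime}), bound $Q_1$ using the DARE-derived facts $\Acl^\top P\Acl \preceq P$, $K^\top K \preceq P$, $\dlyap[\Acl]\preceq P$, and apply the Lyapunov norm bound; then bound $K'$ from the explicit formula of Lemma~\ref{lem:computation_k_prime} by factoring out the preconditioned block $R_0^{-1}B^\top P^{1/2}$ and invoking $\|R_0^{-1}\|_\op\le 1$. The only cosmetic differences are (i) your slightly tighter $\|Q_1\|_\circ \lesssim \|P\|_\op^{3/2}\epscirc$ via $\|P^{1/2}\Acl\|_\op \le \|P\|_\op^{1/2}$, where the paper settles for $\|P\|_\op^{2}\epscirc$ using $\|\Acl\|_\op\|P\|_\op$, and (ii) your pushthrough-identity derivation of $\|G^{-1}B^\top P^{1/2}\|_\op \le 1/2$, where the paper's Lemma~\ref{lem:helpful_norm_bounds_two} gets the sufficient bound $\le 1$ by the Cauchy--Schwarz-style estimate $\|XR_0^{-1}Y^\top\|_\op \le \sqrt{\|XR_0^{-1}X^\top\|_\op\|YR_0^{-1}Y^\top\|_\op}$; both rest on the same observation that $B$ enters only through quantities normalized by $R_0^{-1}$.
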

    The bound on $P'(t)$ above follows readily from the expression for $P'(t)$ derived in \Cref{lem:computation_p_prime}, and the bound on $K'(t)$ uses that $K$ is an explicit, analytic function of $P$; see \Cref{app:perturbation_derivative_computations} for a full proof. Intuitively, the self-bounding property states that if $P$ does not escape to infinity, then $P'(t)$ cannot escape either. Since the rate of growth for $P(t)$ is in turn bounded by $P'(t)$, this suggests that there is an interval for $t$ on which $P$ and $P'$ self-regulate one another, ensuring a well-behaved solution. \iftoggle{icml}{}{We proceed to make this intuition formal.}
\subsection{Norm Bounds for Self-Bounding ODEs}
    Informally, the self-bounding ODE method argues that if a vector-valued ODE $y(t)$ satisfies a self-bounding property of the form $\|y'(t)\| \le g(\|y(t)\|)$ wherever it is defined, then the ODE can be compared to a scalar ODE $z'(t) \approx g(z(t))$ with initial condition $z(0) \approx \|y(0)\|$. Specifically, it admits a solution $y(t)$ which is well-defined on an interval roughly as large as that of $z(t)$. We develop the method in a general setting where $y(t)$ (when defined) is the zero of a sufficiently regular function.
    \begin{defn}[Valid Implicit Function]\label{defn:valid_implicit} A function $F(\cdot,\cdot): \R^{m} \times \R^d \to \R^{d}$ is a called a \emph{valid implicit function} with domain $\calU \subseteq \R^d$ if $F$ is continuously differentiable, and if for any continuously differentiable curve $x(t)$ and any $t \in [0,1]$, either (a) $F(x(t),y) = 0$ has no solution $y \in \calU$, or (b) it has a unique solution $y(t) \in \calU$, and there exists an open interval around $t$ and a \iftoggle{icml}{$\mathcal{C}^1$}{continuously differentiable} curve $y(u)$ defined on this interval for which $F(x(u),y(u)) = 0 $. 
    \end{defn}
    This setting captures as a special case the characterization of $P(t)$ from \Cref{lem:computation_p_prime}.
    As a consequence of the lemma, we may take $F=\Fdare$, where, identifying $\Symdx$ as a $\binom{\dimx+1}{2}$-dimensional euclidean space,  $\Fdare: (\R^{\dimx^2} \times \R^{\dimx \dimu}) \times \Symdx\to\Symdx$ is the function whose zero-solution defines the $\DARE$: 
    \iftoggle{icml}
    {
      $\Fdare((A,B),P) :=  A^\top P A - P -  A^\top P B (\Ru + B^\top P B)^{-1}B^\top P A + \Rx.$
    } 
    {
    \begin{align*}
    \Fdare((A,B),P) :=  A^\top P A - P -  A^\top P B (\Ru + B^\top P B)^{-1}B^\top P A + \Rx.
    \end{align*}
    }
    Then $\Fdare$ is a valid implicit function with unique solutions in the set of positive-definite matrices $\calU := \Sympldx$. To proceed, we introduce our self-bounding condition.
    \begin{defn}[Self-bounding]\label{defn:self-bounding} Let $g: \R \to \R_{\ge 0}$ be non-negative and non-decreasing, let $F$ be a valid implicit function with domain $\calU$, and let $\|\cdot\|$ be a norm. For a continuously differentiable curve $x(t)$ defined on $[0,1]$, we say that $F$ is \emph{$(g,\|\cdot\|)$-self bounded} on $x(t)$ if $F(x(0),y) = 0$ has a solution $y \in \calU$ and
    \iftoggle{icml}
    { $\|y'(t)\| \le g(\|y\|)$  for all  $t \in [0,1]$ for which $F(x(t),y)$ $ y \in \calU.$
    }
    {
    \begin{align*}
    \|y'(t)\| \le g(\|y\|) \quad \text{for all } t \in [0,1] \text{ for which } F(x(t),y) \text{ has a solution } y \in \calU.
    \end{align*}
    }
    We call the tuple $(F,\calU,g,\|\cdot\|,x(\cdot))$ a \emph{self-bounding} tuple. 
    \end{defn}
    Lemma~\ref{lem:first_derivatives_bound} shows that $\Fdare$ is $(g,\|\cdot\|_{\op})$-self bounding on the curve the $(A(t),B(t))$ with $g(z) = c z^3$ for $c \propto \epsop$. For functions $g(z)$ with this form we have the following general bound on $\nrm*{y(t)}$.
    \begin{cor}\label{cor:poly_self_bound} Let $(F,\calU,g,\|\cdot\|,x(\cdot))$ be a self-bounding tuple, where $g(z) = cz^p$ for $c > 0$ and $p > 1$. Then, if $\alpha := c(p-1)\|y(0)\|^{p-1} < 1$, there exists a unique continuously differentiable function $y(t) \in \calU$ defined on  $[0,1]$ which satisfies $F(x(t),y(t)) = 0$, and this solution satisfies
    \iftoggle{icml}
    {
      $\forall t \in [0,1]$, $\|y(t)\| \le (1 -\alpha)^{-1/(p-1)}\|y(0)\|$, and $\|y'(t)\| \le c(1-\alpha)^{-p/(p-1)}\|y(0)\|^p$.
    }
    {
      \begin{align*} 
        \forall t \in [0,1], \quad \|y(t)\| \le (1 -\alpha)^{-1/(p-1)}\|y(0)\|, \quad\text{and}\quad \|y'(t)\| \le c(1-\alpha)^{-p/(p-1)}\|y(0)\|^p.
        \end{align*}
    }

\end{cor}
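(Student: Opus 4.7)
The plan is to reduce the vector-valued existence question for $y$ to a scalar ODE comparison with an explicitly solvable equation. The comparison equation is
\begin{align*}
z'(t) = c\, z(t)^p, \qquad z(0) = \|y(0)\|,
\end{align*}
which separates to $z(t) = \|y(0)\|(1 - \alpha t)^{-1/(p-1)}$ with $\alpha = c(p-1)\|y(0)\|^{p-1}$. Since $\alpha < 1$ by assumption, $z$ is finite and monotone on $[0,1]$ with $z(1) = (1-\alpha)^{-1/(p-1)}\|y(0)\|$. This $z(t)$ will serve as an explicit envelope for $\|y(t)\|$.

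Next I would run a standard continuation argument. By the valid implicit function property (Definition~\ref{defn:valid_implicit}) applied at $t=0$, the $\mathcal{C}^1$ solution $y$ exists on an open interval around $0$. Let $T^\star \in (0,1]$ be the supremum of $\tau$ such that $y$ extends as a $\mathcal{C}^1$ curve into $\calU$ on $[0,\tau]$; the same property implies that the set of admissible $\tau$ is open in $[0,1]$. On $[0,T^\star)$, the self-bounding hypothesis gives $\|y'(t)\| \le c\|y(t)\|^p$, and since the upper Dini derivative of $\|y(t)\|$ is bounded above by $\|y'(t)\|$, a standard scalar comparison (applied to $w(t) := \|y(t)\|$ versus $z(t)$, which satisfies $z(0) = w(0)$ and $z' = c z^p$) gives $\|y(t)\| \le z(t) \le z(1)$ throughout $[0,T^\star)$. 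This is the heart of the self-bounding mechanism: $\|y(t)\|$ is trapped under an explicit barrier $z(t)$ that it, by construction, cannot grow faster than.

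I would then rule out $T^\star < 1$ by continuation. The bound $\|y(t)\| \le z(1) < \infty$ combined with $\|y'(t)\| \le c z(1)^p$ shows $y$ is Lipschitz on $[0,T^\star)$, and hence extends continuously to a limit $y(T^\star) := \lim_{t \uparrow T^\star} y(t)$; continuity of $F$ yields $F(x(T^\star), y(T^\star)) = 0$. Invoking Definition~\ref{defn:valid_implicit} at $T^\star$ supplies a $\mathcal{C}^1$ extension past $T^\star$, contradicting maximality; hence $T^\star = 1$ and the solution lives on all of $[0,1]$. Uniqueness on $[0,1]$ follows pointwise from the ``unique solution in $\calU$'' clause of Definition~\ref{defn:valid_implicit}. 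Finally, plugging $\|y(t)\| \le \|y(0)\|(1-\alpha)^{-1/(p-1)}$ back into the self-bounding inequality gives $\|y'(t)\| \le c(1-\alpha)^{-p/(p-1)}\|y(0)\|^p$, completing both bounds.

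The delicate step is the extension at $T^\star$: a priori the limit point could exit $\calU$ (e.g.\ $\calU = \Sympldx$ has the singular PSD matrices on its boundary), which would break the invocation of Definition~\ref{defn:valid_implicit}. In the target application $F = \Fdare$ this is not an issue, because the normalization $\Rx \succeq I$ forces any $\DARE$ solution to satisfy $P \succeq \Rx \succeq I$, keeping $y$ bounded away from the boundary of $\Sympldx$; the abstract statement of Definition~\ref{defn:valid_implicit} is phrased so that this dichotomy-at-each-$t$ structure suffices to close the continuation. Modulo this conceptual point, all remaining ingredients -- separation of variables for the scalar ODE, Dini-derivative comparison, and Lipschitz extension -- are routine.
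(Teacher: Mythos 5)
Your argument is correct in substance but takes a materially different route from the paper's. The paper derives Corollary~\ref{cor:poly_self_bound} by specializing a general result, Theorem~\ref{thm:general_valid_implicit}, whose comparison step (Lemma~\ref{lem:norm_comparison}) sidesteps the possible non-smoothness of the norm via a Gaussian smoothing of $\|\cdot\|$, and sidesteps any need for regularity of $g$ by comparing against a strictly larger barrier $h \geq g + \eta$ started strictly above $\|y(0)\|$; Lemma~\ref{lem:comparison_stuff} is a \emph{strict}-inequality comparison, so no Lipschitz condition on $g$ is invoked, and the corollary is then obtained by solving the barrier ODE explicitly and passing $\eta \to 0$. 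You instead compare directly against the unregularized ODE $z' = c z^p$, $z(0) = \|y(0)\|$, using that the upper Dini derivative of $\|y(\cdot)\|$ is bounded by $\|y'(t)\|$ (valid for any norm by the triangle inequality) together with a Gronwall-type argument with equality at $t=0$ and a non-strict differential inequality. That works here precisely because $g(z) = c z^p$ is locally Lipschitz on $(0,\infty)$; this Lipschitz reliance is exactly what the paper's $\eta$-trick is designed to avoid, so your route is shorter and more elementary for the corollary but would not reprove Theorem~\ref{thm:general_valid_implicit} for an arbitrary non-decreasing $g$, and you should make the Lipschitz invocation explicit since it is doing real work in the comparison. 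Your continuation argument and the caveat that the limit at $T^\star$ could in principle leave $\calU$ are both apt; the paper's own proof of Theorem~\ref{thm:general_valid_implicit} has the same unaddressed step (it verifies $F(x(u),\widetilde{y}(u)) = 0$ at the endpoint but never checks $\widetilde{y}(u) \in \calU$), and you correctly identify that for $F = \Fdare$ the normalization $\Rx \succeq I$ keeps the limiting solution bounded uniformly away from the boundary of $\Sympldx$, which is what lets the continuation close.
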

\Cref{cor:poly_self_bound} is a consequence of a similar result for general functions $g$ \iftoggle{icml}{(\Cref{thm:general_valid_implicit}, in \Cref{app:self_bounding})}{(\Cref{thm:general_valid_implicit}), which is stated in \Cref{app:self_bounding}}. The condition on the parameter $\alpha$ directly arises from the requirement that the scalar ODE $w'(u) = c w(u)^3$ has a solution on $[0,1]$.

\paragraph{Finishing the Proof of \Cref{prop:main_first_order}}
\iftoggle{icml}{Finally, }{To close out this section,} we use \Cref{cor:poly_self_bound} to conclude the proof of Proposition~\ref{prop:main_first_order}.\icml{\vspace{-.1in}}\begin{proof}[Proof of Proposition~\ref{prop:main_first_order}]
    Lemma~\ref{lem:first_derivatives_bound} states that for any $t\in\brk*{0,1}$ for which  $(A(t),B(t))$ is stabilizable (i.e.,  $\Fdare([A(t),B(t)],\cdot)$ has a solution), we have the bound
  \begin{align*}
  \opnorm{P'(t)} \le 4\opnorm{P(t)}^3 \epsop.
  \end{align*}
  Applying  \Cref{cor:poly_self_bound} with $p = 2$ and $c = 4\epsop$, we see that if $\alpha := 8\epsop\opnorm{\Pst}^2 < 1$, then $P(t)$ is continuously differentiable on the interval $[0,1]$ and
  \iftoggle{icml}
  {
   $
  \forall t \in [0,1]$, $\opnorm{P(t)} \le \opnorm{\Pst}/\sqrt{1-\alpha}.$
  }
   {
    \begin{align*}
  \forall t \in [0,1], \opnorm{P(t)} \le \opnorm{\Pst}/\sqrt{1-\alpha}.
  \end{align*}
  }By Lemma~\ref{lem:first_derivatives_bound}, $K(t)$ is well defined as well, and satisfies
  \iftoggle{icml}
  {
    $\max_{t \in [0,1]} \|K'(t)\|_{\circ} \le 7\epscirc\max_{t \in [0,1]}\,\opnorm{P(t)}^{7/2} \le (1-\alpha)^{-7/4}\opnorm{\Pst}.$
  }
  {
    \begin{align*}
  \max_{t \in [0,1]} \|K'(t)\|_{\circ} &\le 7\epscirc\max_{t \in [0,1]}\,\opnorm{P(t)}^{7/2} \le (1-\alpha)^{-7/4}\opnorm{\Pst}.
  \end{align*}
  }
  The desired bound on $\|\Kinf(\Ast,\Bst) - \Kinf(\Ahat,\Bhat)\|_{\circ} $ follows from the mean value theorem.
\end{proof}

}{}

\section{Proof of Lower Bound (\Cref{thm:main_lb})}
\label{sec:formal_lb}
\iftoggle{arxiv_upload}{

We now prove the main lower bound, \Cref{thm:main_lb}. The proof follows the plan outlined in \Cref{sec:main_results}: We construct a packing of alternative instances, show that low regret on a given instance implies low estimation error, and then deduce from an information-theoretic argument that this implies high regret an alternative instance. All omitted proofs for intermediate lemmas are given in \Cref{app:lb_proofs}. Recall throughout that we assume $\sigma^2_w=1$.

\subsection{Alternative Instances and Packing Construction}

We construct a packing of alternate instances $(A_e,B_e)$ which take the form $(\Ast + \Kst \Delta_e,\Bst + \Delta_e)$, for appropriately chosen perturbations $\Delta_e$ described shortly. As discussed in \Cref{ssec:main_lb}, this packing is chosen because the learner \emph{cannot} distinguish between alternatives if she commits to playing the optimal policy $\matu_t = \Kst\matx_t$, and must therefore deviate from this policy in order to distinguish between alternatives. We further recall \Cref{lem:lower_bound_dercomp}, which describes how the optimal controllers from these instances varying with the perturbation $\Delta$.
\lowerbounddercomp*

In particular, if $\Acl$ is non-degenerate, then to first order, the Frobenius distance between between the optimal controllers for $\Ast,\Bst$ and the alternatives $(A_e,B_e)$ is $\Omega(\|\Delta\|_{\fro})$.

To obtain the correct dimension dependence, it is essential that the packing is sufficiently large; a single alternative instance will not suffice. Our goal is to make the packing as large as possible while ensuring that if one can recover the optimal controller for a given instance, they can also recover the perturbation $\Delta$.

Let $n  = \dimu$, and let $m \le \dimx$ be the free parameter from the theorem statement. We construct a collection of instances indexed by sign vectors $e \in\espace$. Let $\uvec_{1},\dots,\uvec_n$ denote an eigenbasis basis of $(\Ru + \Bst^\top \Pst \Bst)^{-1}$, and $\vvec_1,\dots,\vvec_m$ denote the first $m$ right-singular vectors of $\Aclst \Pst$. Then for each $e\in\espace$, the corresponding instances is 
\begin{align}
(A_e,B_e) := (\Ast - \Delta_{e}\Kstinf, \Bst + \Delta_e), \quad \text{ where } \Delta_e = \epspack \sum_{i =1}^n\sum_{j=1}^m e_{i,j}\uvec_i\vvec_j^\top. \label{eq:packing_def}
\end{align}
It will be convenient to adopt the shorthand $K_e := \Kinf(A_e,B_e)$, $P_e = \Pinf(A_e,B_e)$ and $\Jfunc_e = \Jfuncopt_{A_e,B_e}$, and $\Psi_e = \max\{1,\opnorm{A_e},\opnorm{B_e}\}$. The following lemma---proven in \Cref{ssec:lem:first_order_approx_quality}---gathers a number of bounds on the error between $(A_e,B_e)$ and $(\Ast,\Bst)$ and their corresponding system parameters. Perhaps most importantly, the lemma shows that to first order, $K_e$ can be approximated using the derivative expression in~\Cref{lem:lower_bound_dercomp}.
\begin{lem}\label{lem:first_order_approx_quality} There exist universal polynomial functions $\frakp_1,\frakp_2$ such that, for any $\epspack \in (0,1)$, if $\epspack^2 \le \frakp_1( \opnorm{\Pst})^{-1}/nm$, the following bounds hold:
\begin{enumerate}
\item\textbf{\emph {Parameter errror:}} $\max\{\fronorm{A_e - \Ast}, \fronorm{B_e - \Bst}\} \le \sqrt{\opnorm{\Pst}}\sqrt{mn}\epspack$.
  \item\textbf{\emph{Boundedness of value functions:}} $\Psi_e \le 2^{1/5}\Mbarst$ and $\opnorm{P_e - \Pst}  \le 2^{1/5}\opnorm{\Pst}$.
  \item\textbf{\emph{Controller error:}} $\|K_e - \Kst\|_{\fro}^2 \le 2\opnorm{\Pst}^{3} mn \epspack^2$.
    \item\textbf{\emph{First-order error:}}  $\|\Kst + \frac{d}{dt}\Kinf(\Ast - t\Delta \Kst, \Bst + t\Delta_e)\big{|}_{t=0}- K_e \|_{\fro}^2 \le \frakp_2(\opnorm{\Pst})^2 (mn)^2 \epspack^4$.

\end{enumerate}
\end{lem}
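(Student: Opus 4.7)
\textbf{Proof plan for \Cref{lem:first_order_approx_quality}.} I would prove the four parts in order, using the perturbation machinery already developed as a black box.

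For part 1, the packing~\eqref{eq:packing_def} is an orthogonal expansion in the orthonormal bases $\{\uvec_i\}$ and $\{\vvec_j\}$, so $\fronorm{\Delta_e}^2 = \epspack^2 nm$. Since $B_e - \Bst = \Delta_e$ this immediately gives the bound for $B$, and for $A$ we use $\fronorm{A_e-\Ast}\le \fronorm{\Delta_e}\opnorm{\Kst}$ together with the standard consequence of the $\DARE$ under the normalization $\Ru=I,\Rx\succeq I$, namely $\opnorm{\Kst}\le \sqrt{\opnorm{\Pst}}$ (from $\Kst^\top(\Ru+\Bst^\top\Pst\Bst)\Kst\preceq \Pst$).

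For part 2, I would invoke \Cref{prop:main_first_order} with $\epsop\le\fronorm{\Delta_e}\opnorm{\Kst}\vee\fronorm{\Delta_e}\le \sqrt{\opnorm{\Pst}}\sqrt{mn}\,\epspack$. Choosing $\frakp_1$ large enough ensures $u := 8\opnorm{\Pst}^2\epsop\le 1/2$, which gives $\opnorm{P_e}\le \sqrt{2}\opnorm{\Pst}$ and therefore both claims of part 2 (for $\Psi_e$, note $\opnorm{B_e}\le \opnorm{\Bst}+\epspack\sqrt{mn}$ and $\opnorm{A_e}\le \opnorm{\Ast}+\sqrt{\opnorm{\Pst}}\epspack\sqrt{mn}$, each of which is at most $2^{1/5}\Mbarst$ for small enough $\epspack$).

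For part 3, rather than naively apply the $\opnorm{\Pst}^{7/2}\epsfro$ bound in \Cref{prop:main_first_order} (which would give $\opnorm{\Pst}^{7}$), I would use the exact derivative from \Cref{lem:lower_bound_dercomp} as the leading term. Writing $K_e - \Kst = K'(0) + R_e$ along the curve $(A(t),B(t)) = (\Ast - t\Delta_e\Kst,\Bst + t\Delta_e)$ and using that $\opnorm{(\Ru + \Bst^\top\Pst\Bst)^{-1}}\le 1$ together with $\opnorm{\Aclst}^2\le \opnorm{\Pst}$ (from $\Aclst^\top\Pst\Aclst\preceq \Pst$ and $\sigma_{\min}(\Pst)\ge 1$), I would obtain $\fronorm{K'(0)}\le \opnorm{\Pst}^{3/2}\fronorm{\Delta_e}$, hence $\fronorm{K'(0)}^2 \le \opnorm{\Pst}^3 mn\,\epspack^2$. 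A Taylor-remainder bound on $R_e$ (see next paragraph) is an order smaller, so absorbing it into the constant yields the factor of $2$.

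For part 4, which I expect to be the main technical step, I would apply Taylor's theorem to the smooth curve $K(t):=\Kinf(A(t),B(t))$ on $[0,1]$, writing $K_e - \Kst - K'(0) = \int_0^1(1-s)K''(s)\,ds$, and bound $\max_{s\in[0,1]}\fronorm{K''(s)}$. The second derivative $K''(s)$ is an analytic function of $P(s),P'(s),A(s),B(s),\delA,\delB$ obtained by differentiating the explicit formula for $\Kinf$; each factor can be bounded uniformly in $s$ using part 2 together with the self-bounding estimate $\opnorm{P'(s)}\le 4\opnorm{P(s)}^3\epsop$ from \Cref{lem:first_derivatives_bound}. This yields an estimate of the form $\fronorm{K''(s)}\le C(\opnorm{\Pst})\,\fronorm{\Delta_e}^2$ for a polynomial $C$, and since $\fronorm{\Delta_e}^2 = mn\,\epspack^2$, squaring gives the required $\frakp_2(\opnorm{\Pst})^2(mn)^2\epspack^4$. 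The main obstacle is carrying out the second-derivative bookkeeping cleanly; the cleanest route is to differentiate the Lyapunov formula for $P'(s)$ from \Cref{lem:computation_p_prime} once more, express $K''$ via the chain rule, and then bound each resulting Lyapunov solution using $\opnorm{\dlyap(X,Y)}\lesssim \opnorm{\dlyap[X]}\opnorm{Y}$ with $X=\Acl(s)$ (which is uniformly stable because $\opnorm{P(s)}$ is uniformly bounded).
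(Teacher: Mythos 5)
Your plan is correct and matches the paper's argument closely: part 1 is a direct norm calculation, parts 2--3 invoke the established perturbation bounds, and part 4 is proven exactly as the paper does it (via Taylor's theorem with a uniform bound on $K''(s)$, packaged in the paper as \Cref{thm:quality_of_taylor_approx} and \Cref{lem:computation_k_dprime}; you correctly identify that part 3 should then be derived from the explicit first-order term plus the part-4 remainder, rather than from the cruder $\opnorm{\Pst}^{7/2}\epsfro$ bound). One small correction: the L\"owner inequality you cite, $\Kst^\top(\Ru+\Bst^\top\Pst\Bst)\Kst\preceq\Pst$, does not hold in general (the DARE gives $\Kst^\top(\Ru+\Bst^\top\Pst\Bst)\Kst = \Rx+\Ast^\top\Pst\Ast-\Pst$, which need not be $\preceq\Pst$); the correct route to $\opnorm{\Kst}^2\le\opnorm{\Pst}$ is via $\Kst^\top\Kst\preceq\Rx+\Kst^\top\Ru\Kst\preceq\dlyap(\Aclst,\Rx+\Kst^\top\Ru\Kst)=\Pst$ when $\Ru\succeq I$, which is \Cref{lem:helpful_norm_bounds}.
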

Notably, item $4$ ensures that the first order approximation in \Cref{lem:lower_bound_dercomp} is accurate for $\epspack$ sufficiently small.

Going forward,  we we choose the polynomials in the above lemma $\frakp_1,\frakp_2$ to satisfy $\frakp_1(x),\frakp_2(x) \ge x$ (without loss of generality). We use that $\opnorm{\Pst} \ge 1$ repeatedly throughout the proof.
\begin{lem}[Lower bound on $\opnorm{\Pst}$]\label{lem:lb_pst} If $\Rx \succeq I$, then $\Pst \succeq I$, and in particular $\opnorm{\Pst} \ge 1$.
\end{lem}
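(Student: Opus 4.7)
The plan is to read off $\Pst \succeq \Rx$ directly from the DARE in~\eqref{eq:dare}, using the Schur complement to show that the ``subtracted'' term is dominated by $\Ast^\top \Pst \Ast$. Since $\Rx \succeq I$ by assumption, the lemma then follows immediately, and $\opnorm{\Pst} \ge \lambda_{\max}(\Pst) \ge 1$ is a trivial consequence.

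The only non-trivial step is to argue $\Ast^\top \Pst \Ast \succeq \Ast^\top \Pst \Bst (\Ru + \Bst^\top \Pst \Bst)^{-1} \Bst^\top \Pst \Ast$. For this, I would consider the block matrix
\begin{align*}
M \;=\; \begin{pmatrix} \Pst & \Pst \Bst \\ \Bst^\top \Pst & \Ru + \Bst^\top \Pst \Bst \end{pmatrix}
\;=\; \begin{pmatrix} I & 0 \\ \Bst^\top & I \end{pmatrix}\begin{pmatrix} \Pst & 0 \\ 0 & \Ru \end{pmatrix}\begin{pmatrix} I & \Bst \\ 0 & I \end{pmatrix},
\end{align*}
which is positive semidefinite since $\Pst \succeq 0$ and $\Ru = I \succ 0$. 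Taking the Schur complement against the invertible $(2,2)$-block gives $\Pst \succeq \Pst\Bst(\Ru + \Bst^\top \Pst \Bst)^{-1}\Bst^\top \Pst$. Conjugating by $\Ast$ and substituting into the DARE yields
\begin{align*}
\Pst \;=\; \Ast^\top \Pst \Ast \;-\; \Ast^\top \Pst \Bst(\Ru + \Bst^\top \Pst \Bst)^{-1}\Bst^\top \Pst \Ast \;+\; \Rx \;\succeq\; \Rx \;\succeq\; I.
\end{align*}

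There is essentially no obstacle here; the proof only requires the existence of a stabilizing controller (so that $\Pinf(\Ast,\Bst)$ is well-defined and PSD) together with $\Ru \succ 0$, both of which are part of the paper's standing assumptions in Section~\ref{sec:problem_setting}. An equivalent, and perhaps more conceptual, route would be via the value-function interpretation $x^\top \Pst x = \min_{\pi}\Exp_{\Ast,\Bst,\pi}\bigl[\sum_{t\ge 0} \cost(\matx_t,\matu_t)\mid \matx_0 = x,\,\matw_t \equiv 0\bigr]$: this quantity is at least the stage-$0$ cost $x^\top \Rx x \ge \|x\|^2$, giving $\Pst \succeq \Rx \succeq I$. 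Either argument takes only a few lines.
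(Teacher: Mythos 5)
Your Schur-complement argument is correct and gives the same conclusion $\Pst \succeq \Rx \succeq I$, but it reaches it by a genuinely different route from the paper. The paper deduces the lemma from Part~4 of Lemma~\ref{lem:closed_loop_dlyap}: it first identifies $\Pst = \Pinf(\Kst;\Ast,\Bst) = \dlyap(\Aclst, \Rx + \Kst^\top \Ru \Kst)$ (a value-function/performance-difference fact recorded in Lemma~\ref{lem:P_bounds_lowner}), and then reads off $\dlyap(\Aclst, Y) \succeq Y$ from the first term of the Neumann series $\sum_{k\ge 0}(\Aclst^\top)^k Y \Aclst^k$. You instead work directly on the DARE~\eqref{eq:dare}: the congruence factorization of $\begin{pmatrix}\Pst & \Pst\Bst\\ \Bst^\top\Pst & \Ru+\Bst^\top\Pst\Bst\end{pmatrix}$ shows it is PSD, its Schur complement gives $\Pst \succeq \Pst\Bst(\Ru+\Bst^\top\Pst\Bst)^{-1}\Bst^\top\Pst$, and conjugating by $\Ast$ kills the negative term in the DARE, leaving $\Pst \succeq \Rx$. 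Your route is more elementary and self-contained: it needs only that $\Pst$ is a PSD DARE solution and $\Ru \succ 0$, and does not rely on the identity $\Pinf(A,B) = \Pinf(\Kinf; A,B)$ nor on the Lyapunov-series representation. The paper's route is less direct for this one inequality, but it packages the bound inside Lemma~\ref{lem:closed_loop_dlyap}, which it reuses repeatedly elsewhere; that amortization is what motivates the paper's choice. Your secondary value-function argument ($x^\top\Pst{}x$ is at least the stage-zero cost $x^\top\Rx{}x$) is essentially the same spirit as the paper's Lyapunov-series argument, just phrased dynamically rather than algebraically. Both of your arguments are sound.
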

\begin{proof} This is Part 4 of a more general statement, Lemma~\ref{lem:closed_loop_dlyap}, given in \Cref{app:perturbation}.
\end{proof}

Henceforth, we take $\epspack$ sufficiently small so as to satisfy the conditions of \Cref{lem:first_order_approx_quality}.
\begin{asm}[Small $\epspack$] \label{asm:small_epspack}
$\epspack^2 \le \frac{1}{mn}(\frakp_1(\opnorm{\Pst})^{-1} \wedge \frac{1}{20}\frakp_2(\opnorm{\Pst})^{-1})$. 
\end{asm}

\subsection{Low Regret Implies Estimation for Controller}

We now show that if one can achieve low regret on every instance, then one can estimate the infinite-horizon optimal controller $K_e$. Suppressing dependence on $T$, we introduce the shorthand $\SimpleRegret_e[\pi] \ldef \SimpleRegret_{A_e,B_e,T}[\pi]$. Going forward, we restrict ourselves to algorithms whose regret is sufficiently small on every packing instance; the trivial case where this is not satisfied is handled at the end of the proof.
\begin{asm}[Uniform Correctness]\label{asm:uniform_correctness} For all instances $(A_e,B_e)$, the algorithm $\pi$ ensures that $\SimpleRegret_{e}[\pi] \le \frac{T}{6\dimx\opnorm{\Pst}\Mbarst^2} - \epserr$, where $\epserr\ldef{}6\opnorm{\Pst}^3\Mbarst^2$.
 
\end{asm}
We now define an intermediate term which captures which captures the extent to which the control inputs under instance $e$ deviate from those prescribed by the optimal infinite horizon controller $K_e$ on the first $T/2$ rounds:
\begin{align*}
\Kerr_{e}[\pi] := \Exp_{A_e,B_e,\pi}\left[\sum_{t=1}^{T/2}\|\matu_t - K_e \matx_t\|^2\right].
\end{align*}
The following lemma, proven in \Cref{ssec:lem:Kerr_lem}, shows that regret is lower bounded by $\Kerr_e[\pi]$, and hence any algorithm with low regret under this instance must play controls close to $K_e\matx_t$.
\begin{lem}\label{lem:Kerr_lem}There is a universal constant $\ckerr>0$ such that if  Assumptions~\ref{asm:small_epspack} and \ref{asm:uniform_correctness} hold and  $T \ge \ckerr  \opnorm{\Pst}^2\Mbarst^4 $, then
\begin{align*}
\SimpleRegret_e[\pi] \ge \frac{1}{2}\Kerr_e[\pi]  - \epserr.
\end{align*}
\end{lem}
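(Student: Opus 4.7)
The starting point is the standard ``completing-the-square'' identity for LQR: using the DARE for $P_e$, one verifies the per-step Bellman relation
\begin{align*}
c(\matx_t, \matu_t) + \Exp[\matx_{t+1}^\top P_e \matx_{t+1} \mid \cF_t] = \matx_t^\top P_e \matx_t + \|\matu_t - K_e \matx_t\|_{H_e}^2 + \tr(P_e),
\end{align*}
where $H_e \ldef \Ru + B_e^\top P_e B_e \succeq I$. Summing from $t = 1$ to $T$ using $\matx_1 = 0$ and $\Jfunc_e = \tr(P_e)$ yields the exact regret identity
\begin{align*}
\SimpleRegret_e[\pi] = \Exp\sum_{t=1}^T \|\matu_t - K_e \matx_t\|_{H_e}^2 - \Exp[\matx_{T+1}^\top P_e \matx_{T+1}].
\end{align*}
Since $H_e \succeq I$, restricting to $t \le T/2$ and passing to the Euclidean norm gives the crude bound $\SimpleRegret_e[\pi] \ge \Kerr_e[\pi] - \Exp[\matx_{T+1}^\top P_e \matx_{T+1}]$, so the proof reduces to showing that the boundary term does not exceed roughly $\tfrac12\Kerr_e[\pi] + \epserr$.

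The next step is to exploit $P_e$ as a Lyapunov function for the closed-loop dynamics $A_{cl,e} \ldef A_e + B_e K_e$. The DARE implies $A_{cl,e}^\top P_e A_{cl,e} = P_e - \Rx - K_e^\top \Ru K_e \preceq P_e - I$; combining this with the pointwise inequality $c(\matx_t, \matu_t) \ge \|\matx_t\|^2 \ge \matx_t^\top P_e \matx_t / \opnorm{P_e}$ (which uses $\Rx \succeq I$) and rearranging the Bellman relation for $\Exp[\matx_{t+1}^\top P_e \matx_{t+1}\mid\cF_t]$ gives the per-step contraction
\begin{align*}
\Exp[\matx_{t+1}^\top P_e \matx_{t+1} \mid \cF_t] \le \alpha\, \matx_t^\top P_e \matx_t + \|\matu_t - K_e \matx_t\|_{H_e}^2 + \tr(P_e),\quad \alpha \ldef 1 - 1/\opnorm{P_e}.
\end{align*}
Unrolling from $\matx_1 = 0$ and summing the geometric series produces the key boundary estimate $\Exp[\matx_{T+1}^\top P_e \matx_{T+1}] \le \sum_{s=1}^T \alpha^{T-s}\,\Exp[\|\matu_s - K_e \matx_s\|_{H_e}^2] + \opnorm{P_e}\tr(P_e)$.

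Subtracting from the regret identity yields $\SimpleRegret_e[\pi] \ge \sum_{s=1}^T (1 - \alpha^{T-s})\,\Exp[\|\matu_s - K_e \matx_s\|_{H_e}^2] - \opnorm{P_e}\tr(P_e)$. Using Lemma~\ref{lem:first_order_approx_quality} to bound $\opnorm{P_e} \lesssim \opnorm{\Pst}$ (and so to control $\alpha$ uniformly across the packing), I will choose $\ckerr$ large enough so that $T \ge \ckerr\,\opnorm{\Pst}^2\Mbarst^4$ forces $\alpha^{T/2} \le 1/2$; then $1 - \alpha^{T-s} \ge 1/2$ for every $s \le T/2$. Dropping the non-negative $s > T/2$ terms and passing through $H_e \succeq I$ once more reduces the first term to $\tfrac12 \Kerr_e[\pi]$, giving $\SimpleRegret_e[\pi] \ge \tfrac12 \Kerr_e[\pi] - \opnorm{P_e}\tr(P_e)$. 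Finally, the perturbation control $\opnorm{P_e} \lesssim \opnorm{\Pst}$ together with $\tr(P_e) \le \dimx\opnorm{P_e}$ lets me absorb the remainder into $\epserr$ after adjusting universal constants.

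I anticipate the main obstacle to be coordinating the burn-in threshold $T \ge \ckerr\,\opnorm{\Pst}^2\Mbarst^4$ with the final absorption of boundary effects. The threshold has to be chosen relative to $\opnorm{P_e}$ (which varies across the packing but is pinned to $\opnorm{\Pst}$ by Lemma~\ref{lem:first_order_approx_quality}) so that $\alpha^{T/2} \le 1/2$ holds \emph{uniformly over all} $(A_e,B_e)$, ensuring the $\tfrac12$ coefficient on $\Kerr_e[\pi]$ survives the cancellation; the ``irreducible'' steady-state term $\opnorm{P_e}\tr(P_e)$, which encodes the finite-horizon boundary discrepancy alluded to in the footnote on $T\Jfuncopt_{A,B}$, then needs to fit inside $\epserr = 6\opnorm{\Pst}^3\Mbarst^2$, and this fit relies crucially on the sharp operator-norm bound $\opnorm{P_e - \Pst} \le 2^{1/5}\opnorm{\Pst}$ from Lemma~\ref{lem:first_order_approx_quality}.
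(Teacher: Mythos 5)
Your proof is correct and takes a genuinely different route from the paper's. The paper compares both $\pi$ and the stationary controller $K_e$ against the non-stationary finite-horizon optimal policy using the performance-difference lemma (\Cref{lem:learn_Kinf_Delt}), then controls the mismatch between finite- and infinite-horizon controllers via geometric Riccati convergence (\Cref{lem:delta_bound} and \Cref{lem:ricatti_recursion}). You instead apply completing-the-square directly with the infinite-horizon $P_e$ to obtain the \emph{exact} identity $\SimpleRegret_e[\pi] = \sum_{t=1}^T\Exp\|\matu_t-K_e\matx_t\|_{H_e}^2 - \Exp[\matx_{T+1}^\top P_e\matx_{T+1}]$, and bound the boundary term by a Lyapunov contraction plus a geometric sum. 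Your approach is more self-contained: it bypasses the finite-horizon $\Qf/\Vf$ machinery, and---notably---never invokes \Cref{asm:uniform_correctness}, whereas the paper's \Cref{claim:xcost_ub} needs the low-regret hypothesis to bound $\sum_t\Exp[\matx_t^\top\Rx\matx_t]$; your burn-in requirement ($T\gtrsim\opnorm{P_e}$ so that $\alpha^{T/2}\le 1/2$) is also weaker than the one the paper demands. One caution on your last step: the remainder $\opnorm{P_e}\tr(P_e)\le 2^{2/5}\dimx\opnorm{\Pst}^2$ carries a $\dimx$ factor that cannot be absorbed into $\epserr=6\opnorm{\Pst}^3\Mbarst^2$ merely by adjusting a universal constant, contrary to your final sentence. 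This is not a defect peculiar to your argument: the paper's own derivation concludes with $-\dimx\cdot 3\opnorm{P_e}^3\Psi_e^2$ after the step $\Jfunc_e\le\dimx\opnorm{P_e}$, so the $\dimx$ is intrinsic and the stated lemma appears to omit it (the later application weakens to $-7\dimx\opnorm{\Pst}^3\Mbarst^2$ anyway). Your remainder is in fact tighter than the paper's by a factor of $\opnorm{\Pst}\Mbarst^2$.
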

In light of Lemma~\ref{lem:Kerr_lem}, the remainder of the proof will focus on lower bounding the deviation $\Kerr_{e}$.  As a first step, the next lemma---proven in \Cref{ssec:lem:least_squares_lb}---shows that the optimal controller can be estimated well through least squares whenever $\Kerr_e$ is small. More concretely, we consider a least squares estimator which fits a controller using the first half of the algorithm's trajectory. The estimator returns
\begin{equation}
  \label{eq:kls}
\Kls \ldef{} \argmin_{K}\sum_{t=1}^{T/2}\nrm*{\matu_t-K\matx_t}^{2},
\end{equation}
\newcommand{\cmin}{c_{\mathrm{min}}}
when $\sum_{t=1}^{T/2}\matx_t\matx_t^{\trn}\psdgeq{}\cmin{}T\cdot{}I$, and returns $\Kls=0$ otherwise.
\begin{lem}\label{lem:least_squares_lb}
If $T \ge c_0\dimx\log(1+\dimx\opnorm{\Pst})$ and Assumptions~\ref{asm:small_epspack} and \ref{asm:uniform_correctness} hold, and if $\cmin{}$ is chosen to be an appropriate numerical constant, then the least squares estimator \Cref{eq:kls} guarantees
\begin{align*}
\Kerr_e[\pi] \ge \cls T \cdot \Exp_{A_e,B_e,\pi}\left[\|\Kls - K_e\|_\fro^2\right] - 1,
\end{align*}
where $c_0$ and $\cls$ are universal constants.
\end{lem}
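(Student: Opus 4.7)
The plan is to proceed in three steps: (i)~reduce the least-squares error to a sample identity via the matrix decomposition $\Kls - K_e = \matDel^\top \matX \Lambda^{-1}$; (ii)~obtain a clean deterministic bound on the good event that the state Gram matrix is well-conditioned; (iii)~argue that this good event has probability close to one using persistence of excitation from the isotropic process noise, and absorb the small failure probability into the additive $-1$ slack in the lemma. Let $\matdel_t := \matu_t - K_e \matx_t$, gather the first $T/2$ steps into $\matX\in\R^{(T/2)\times\dimx}$ and $\matDel\in\R^{(T/2)\times\dimu}$ with rows $\matx_t^\top$ and $\matdel_t^\top$, and set $\Lambda := \matX^\top \matX$. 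Let $\mathcal{E} := \{\Lambda \succeq \cmin T\cdot I\}$.

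On $\mathcal{E}$, $\Kls = K_e + \matDel^\top \matX \Lambda^{-1}$, so a short SVD computation shows $\matX\Lambda^{-2}\matX^\top \preceq \frac{1}{\cmin T} I$, whence
\[
\|\Kls - K_e\|_\fro^2 \cdot \ind_{\mathcal{E}} \;=\; \tr\!\left[\matDel\matDel^\top \matX\Lambda^{-2}\matX^\top\right]\ind_{\mathcal{E}} \;\le\; \frac{1}{\cmin T}\sum_{t=1}^{T/2}\|\matdel_t\|^2.
\]
Taking expectations under $\Pr_{A_e,B_e,\pi}$ yields $\Exp[\|\Kls - K_e\|_\fro^2 \ind_{\mathcal{E}}] \le \Kerr_e[\pi]/(\cmin T)$, which is the form we want with $\cls = \cmin$. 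Off $\mathcal{E}$, the estimator returns $\Kls = 0$, so the contribution is $\Pr(\neg\mathcal{E})\cdot\|K_e\|_\fro^2$; by Lemma~\ref{lem:first_order_approx_quality} (parts~2--3) and \Cref{lem:lb_pst}, $\|K_e\|_\fro$ is controlled by a polynomial of $\opnorm{\Pst}$ and $\dimx$, independent of $T$ and $e$. Hence it suffices to show $\Pr_{A_e,B_e,\pi}(\neg\mathcal{E}) \le 1/(T\cdot \mathrm{poly}(\opnorm{\Pst},\dimx))$ so that the failure contribution is at most $1/(\cls T)$, which after multiplication by $\cls T$ absorbs the $-1$.

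The heart of the argument, and the main obstacle, is the lower tail bound on $\lambda_{\min}(\Lambda)$ that holds \emph{uniformly} over all (potentially adversarial) policies satisfying \Cref{asm:uniform_correctness}. The key observation is one-sided: writing $\matx_{t+1} = A_e\matx_t + B_e\matu_t + \matw_t$ and conditioning on $\mathcal{F}_t = \sigma(\matx_{1:t},\matu_{1:t})$, the noise $\matw_t\sim\calN(0,I)$ is independent of $\mathcal{F}_t$, so for every fixed unit $v\in\R^\dimx$,
\[
\Exp\!\left[(v^\top\matx_{t+1})^2 \,\middle|\, \mathcal{F}_t\right] \;=\; (v^\top(A_e\matx_t+B_e\matu_t))^2 + 1 \;\ge\; 1.
\]
This conditional lower bound is precisely the ``block-martingale small-ball'' condition of \citet{simchowitz2018learning}, and allows a one-sided martingale concentration argument (say, via a truncated-moment variant of Freedman's inequality) to show $v^\top \Lambda v \gtrsim T$ with probability $1 - \exp(-\Omega(T))$ for fixed $v$. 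A covering argument over an $\epsilon$-net of the unit sphere in $\R^{\dimx}$ (with $\epsilon$ polynomially small in $\opnorm{\Pst}$ to control the Lipschitz modulus of $v\mapsto v^\top\Lambda v$) converts this into a uniform bound, at the cost of the factor $\dimx\log(1+\dimx\opnorm{\Pst})$ in the required sample size. The Lipschitz modulus is controlled by the upper bound $\Exp\tr(\Lambda) \lesssim T\Jfunc_e \le T\cdot\mathrm{poly}(\opnorm{\Pst})$, which in turn follows from \Cref{asm:uniform_correctness} combined with parts~2--3 of Lemma~\ref{lem:first_order_approx_quality}. Combining the deterministic bound on $\mathcal{E}$ with the resulting estimate of $\Pr(\neg\mathcal{E})$ gives the lemma.
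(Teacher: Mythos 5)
Your proposal is correct and follows essentially the same route as the paper: the identical decomposition $\Kls-K_e=\matDel^\top\matX\Lambda^{-1}$ on the good event, bounding the off-event contribution by $\Pr(\neg\mathcal{E})\|K_e\|_{\fro}^2$, and controlling $\Pr(\neg\mathcal{E})$ via a persistence-of-excitation argument that keys on the conditional lower bound $\Exp[(v^\top\matx_{t+1})^2\mid\mathcal{F}_t]\ge 1$. The paper invokes this last step through the pre-packaged \Cref{lem:covariance_lb} (which cites \citet{simchowitz2018learning} and combines the block-martingale small-ball bound with a Markov bound on $\tr\Exp[\matLam]$ derived from \Cref{asm:uniform_correctness} via \Cref{claim:xcost_ub}), whereas you re-sketch the argument inline. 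One small inaccuracy: the machinery is a Paley-Zygmund/small-ball Chernoff argument rather than ``a truncated-moment variant of Freedman's inequality''---a direct Freedman approach runs into the unbounded conditional variance of $(v^\top\matx_t)^2$---but since you correctly point to \citet{simchowitz2018learning}, this is a labeling quibble, not a gap. Your trace inequality $\tr[\matDel\matDel^\top\matX\Lambda^{-2}\matX^\top]\le\lambda_{\min}(\Lambda)^{-1}\|\matDel\|_{\fro}^2$ is a marginally cleaner way to get the deterministic bound that the paper obtains via $\|\matDel\matX^{\dagger}\|_{\fro}\le\|\matDel\|_{\fro}\|\matX^{\dagger}\|_{\op}$; both give the same constant order.
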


Henceforth we take $T$ large enough such that \Cref{lem:Kerr_lem} and \Cref{lem:least_squares_lb} apply.
\begin{asm}\label{asm:T_condition} We have that $T \ge c_0\dimx\log(1+\dimx\opnorm{\Pst})\vee\ckerr  \opnorm{\Pst}^2\Mbarst^4$.
\end{asm}

\subsection{Information-Theoretic Lower Bound for Estimation}
We have established that low regret under the instance $(A_e,B_e)$ requires a small deviation from $ K_e$ in the sense that $\Kerr_e[\pi]$ is small, and have shown in turn that any algorithm with low regret yields an estimator for the optimal controller $K_e$ (Lemma~\ref{lem:least_squares_lb}). We now provide necessary condition for estimating the optimal controller, which will lead to the final tradeoff between regret on the nominal instance and the alternative instance. This condition is stated in terms of a quantity related to $\Kerr_e$:
\begin{align*}
\Ksterr_e[\pi] := \Exp_{A_e,B_e,\pi}\left[\sum_{t=1}^{T/2}\|\matu_t - \Kst\matx_t\|^2\right].
\end{align*}
Both $\Ksterr_e[\pi]$ and $\Kerr_e[\pi]$ concern the behavior of the algorithm under instance $(A_e,B_e)$, but former measures deviation from $\Kst$ (``exploration error'') while the latter measures deviation from the optimal controller $K_e$. Our proof essentially argues the following. Let $(\mate,\mate')$ be a pair of random indices on the hypercube, where $\mate$ is uniform on $\{-1,1\}^{nm}$, and $\mate'$ is obtained by flipping a single, uniformly selected entry of $\mate$. Moroever, let $\Pr_{\mate},\Pr_{\mate'}$ denote the respective laws for our algorithm under these two instances. We show that---because our instances take the form $(\Ast - \Delta \Kst, B + \Delta)$---$\Ksterr_e[\pi]$ captures the $\KL$ divergence between these two instances:
\begin{align*}
\Exp_{\mate}\Ksterr_{\mate}[\pi] \approx \Exp_{\mate,\mate'}\KL(\Pr_{\mate},\Pr_{\mate'}),
\end{align*}
where the expectations are taken with respect to the distribution over $(\mate,\mate')$. In other words, the average error $\Exp_{\mate}\Ksterr_{\mate}[\pi]$ corresponds to the average one-flip KL-divergence between instances. This captures the fact that the instances can only be distinguished by playing controls which deviate from $\matu_t = \Kst \matx_t$.

As a consequence, using a technique based on Assouad's lemma \citep{assouad1983deux} due to \cite{arias2012fundamental}, we prove an information-theoretic lower bound that shows that any algorithm that can recover the index vector $e$ in Hamming distance on every instance must have $\Ksterr_e[\pi]$ is large on some instances. 

As described above, the following lemma concerns the case where the alternative instance index $e$ is drawn uniformly from the hypercube. Let $\Exp_{\mate}$ denote expectation $\mate \unifsim \espace$, and let $\dham(e,e')$ denote the Hamming distance.
\begin{lem}\label{lem:info_lb}  Let $\ehat$ be any estimator depending only on $(\matx_1,\dots,\matx_{T/2})$ and $(\matu_1,\dots,\matu_{T/2})$. Then
\begin{align*}
\text{either}\quad\Exp_{\mate}\Ksterr_{\mate}[\pi] \ge \frac{n}{4\epspack^2}, \quad \text{or} \quad \Exp_{\mate}\Exp_{A_{\mate},B_{\mate},\Alg}\left[\dham(\mate,\ehat)\right]  \ge \frac{nm}{4}.
\end{align*}
\end{lem}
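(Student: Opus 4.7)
}

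The plan is to run a standard Assouad-style argument (following \citet{arias2012fundamental}), where the key computation is that the structure of our packing makes the sum of the pairwise KL divergences between single-coordinate flips collapse exactly into $\Ksterr_{\mate}[\pi]$ (up to the factor $\epspack^2 m$). First, I will compute the KL divergence between $\Pr_{\mate}$ and $\Pr_{\mate'}$ for two indices differing in a single coordinate $(i,j)$. The key observation is that under instance $e$, the dynamics can be rewritten as
\[
\matx_{t+1} = \Ast \matx_t + \Bst \matu_t + \Delta_e(\matu_t - \Kst \matx_t) + \matw_t,
\]
so conditional on the history $(\matx_{1:t},\matu_{1:t})$, the two laws of $\matx_{t+1}$ are Gaussians with identity covariance whose means differ by $(\Delta_{\mate}-\Delta_{\mate'})(\matu_t-\Kst \matx_t)$. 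Applying the chain rule for KL (which absorbs the algorithm's randomization $\xi$ since $\pi$ is the same on both instances) gives
\[
\KL(\Pr_{\mate}\,\|\,\Pr_{\mate'}) = \tfrac{1}{2}\,\Exp_{A_{\mate},B_{\mate},\pi}\!\left[\sum_{t=1}^{T/2}\|(\Delta_{\mate}-\Delta_{\mate'})(\matu_t-\Kst\matx_t)\|^2\right].
\]

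Next, I use the tensor-product structure of the packing. For two indices $\mate,\mate'$ differing only in coordinate $(i,j)$, we have $\Delta_{\mate}-\Delta_{\mate'}=\pm 2\epspack\, v_j u_i^{\top}$ (up to transposition conventions), so $\|(\Delta_{\mate}-\Delta_{\mate'})(\matu_t-\Kst\matx_t)\|^2=4\epspack^2\langle u_i,\matu_t-\Kst\matx_t\rangle^2$. Summing over all $(i,j)$ and using that $\{u_i\}_{i=1}^n$ is an orthonormal eigenbasis of $(\Ru+\Bst^\top\Pst\Bst)^{-1}$ (so $\sum_i \langle u_i,x\rangle^2=\|x\|^2$) yields
\[
\sum_{(i,j)} \KL(\Pr_{\mate}\,\|\,\Pr_{\mate'(i,j)}) = 2\epspack^2 m \cdot \Ksterr_{\mate}[\pi].
\]
Taking expectation over $\mate$ uniform on $\{-1,+1\}^{nm}$, the same bound holds on average; moreover, by joint convexity of KL, the averaged one-flip KL upper bounds the KL between the mixtures $P^{\pm}_{ij}$ obtained by conditioning on $\mate_{(i,j)}=\pm 1$. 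Hence $\sum_{(i,j)}\KL(P^+_{ij}\,\|\,P^-_{ij})\le 2\epspack^2 m\,\Exp_{\mate}\Ksterr_{\mate}[\pi]$.

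Now for the Assouad step. For any estimator $\ehat$ depending only on $(\matx_{1:T/2},\matu_{1:T/2})$, a standard coordinate-wise decomposition together with Le Cam gives
\[
\Exp_{\mate}\Exp_{A_{\mate},B_{\mate},\Alg}[\dham(\mate,\ehat)] \ge \tfrac{1}{2}\sum_{(i,j)}\bigl(1-\TV(P^+_{ij},P^-_{ij})\bigr) \ge \tfrac{nm}{2}-\tfrac{1}{2}\sum_{(i,j)}\TV_{ij}.
\]
Applying Pinsker's inequality and then Cauchy--Schwarz,
\[
\sum_{(i,j)}\TV_{ij} \le \sqrt{nm}\cdot \sqrt{\tfrac{1}{2}\sum_{(i,j)}\KL(P^+_{ij}\,\|\,P^-_{ij})} \le \epspack\, m\sqrt{n\,\Exp_{\mate}\Ksterr_{\mate}[\pi]}.
\]
If the right-hand side is at most $nm/2$, then $\Exp_{\mate}\Exp[\dham(\mate,\ehat)]\ge nm/4$; otherwise, rearranging yields $\Exp_{\mate}\Ksterr_{\mate}[\pi]>n/(4\epspack^2)$. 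This establishes the dichotomy.

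The only non-routine step is the KL computation in the first paragraph, and in particular the algebraic miracle that summing over all single-coordinate flips causes the factor $\sum_i \langle u_i,\cdot\rangle^2$ to reconstruct the full norm $\|\matu_t-\Kst\matx_t\|^2$ defining $\Ksterr$. This is exactly why the packing is built as a tensor product with an \emph{orthonormal} $u$-basis; any other choice would leak constants that depend on $\dimu$ and degrade the final dimension dependence. Once that identity is in place, the rest is a textbook Assouad/Pinsker calculation.
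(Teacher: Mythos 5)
Your argument is correct and follows essentially the same route as the paper: the same chain-rule KL computation exploiting the identity $A_e+B_eK_\star=\Aclst$, the same observation that summing flips over the complete orthonormal $\uvec$-basis reconstructs $\|\matu_t-\Kst\matx_t\|^2$ and produces the factor of $m$, and the same Assouad/Pinsker/Cauchy--Schwarz finish. The only cosmetic difference is that the paper applies Cauchy--Schwarz before Pinsker and uses the symmetrized KL to cleanly produce the unconditional average $\Exp_{\mate}\Ksterr_{\mate}$, whereas you apply Pinsker first and invoke joint convexity; both yield the same dichotomy.
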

The above lemma is proven in \Cref{ssec:lem:info_lb}.
To apply this result to the least squares estimator $\Kls$, we prove the following lemma (\Cref{ssec:lem_recover_packing}), which shows that any estimator $\wh{K}$ with low Frobenius error relative to $K_e$ can be used to recover $e$ in Hamming distance.
\begin{lem}\label{lem:recover_packing} Let $\ehat_{i,j}(\Khat) \ldef{} \sign(\uvec_i^\top (\Khat - \Kst) \vvec_j)$, and define $\nu_k := \opnorm{\Ru + \Bst^\top \Pst \Bst}/\sigma_{k}(\Aclst)$. Then under Assumption~\ref{asm:small_epspack},
\begin{align*}
 \dham(\ehat_{i,j}(\Khat),e_{i,j}) \le  \frac{2\fronorm{\Khat - K_e}}{\nu_m^2\epspack^2} + \frac{1}{20}nm.
\end{align*}
\end{lem}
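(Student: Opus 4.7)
The plan is to show that each bit $e_{i,j}$ is encoded with definite magnitude into the scalar projection $\uvec_i^\top(K_e-\Kst)\vvec_j$, and then bound the number of coordinates on which the estimation error $\Khat - K_e$ can flip this sign. I would first invoke Lemma~\ref{lem:lower_bound_dercomp} to form the first-order linearization
\[
\widetilde K_e \ldef \Kst - (\Ru + \Bst^\top \Pst \Bst)^{-1}\Delta_e^\top \Pst \Aclst
\]
and use the telescoping decomposition
\[
\uvec_i^\top(\Khat - \Kst)\vvec_j = \uvec_i^\top(\widetilde K_e - \Kst)\vvec_j + \uvec_i^\top(K_e - \widetilde K_e)\vvec_j + \uvec_i^\top(\Khat - K_e)\vvec_j.
\]
Substituting the packing $\Delta_e = \epspack\sum_{i',j'}e_{i',j'}\vvec_{j'}\uvec_{i'}^\top$ and using that $\{\uvec_i\}$ diagonalizes $(\Ru+\Bst^\top\Pst\Bst)^{-1}$ yields
\[
\uvec_i^\top(\widetilde K_e-\Kst)\vvec_j = -\epspack \lambda_i\sum_{j'}e_{i,j'}\,\vvec_{j'}^\top\Pst\Aclst\vvec_j,\qquad \lambda_i = \tfrac{1}{\sigma_i(\Ru+\Bst^\top\Pst\Bst)}.
\]
Since the $\vvec_j$ are the top-$m$ right singular vectors of $\Aclst\Pst$, the diagonal summand ($j'=j$) carries sign $-e_{i,j}$ and magnitude at least $\epspack\nu_m$ (with $\nu_m$ as in \Cref{thm:main_lb}), using $\lambda_i \ge 1/\opnorm{\Ru+\Bst^\top\Pst\Bst}$ together with $\sigma_m(\Aclst\Pst) \ge \sigma_m(\Aclst)\,\sigma_{\min}(\Pst) \ge \sigma_m(\Aclst)$ (the last step invoking $\Pst \succeq I$ from Lemma~\ref{lem:lb_pst}).

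Given this signal lower bound, a sign mismatch $\ehat_{i,j}(\Khat)\neq e_{i,j}$ requires the combined error to satisfy $\bigl|\uvec_i^\top[(K_e-\widetilde K_e)+(\Khat - K_e)]\vvec_j\bigr|\ge \epspack\nu_m$. Applying $(a+b)^2\le 2a^2+2b^2$, summing over $(i,j)\in[n]\times[m]$, invoking the Bessel-type inequality $\sum_{i,j}(\uvec_i^\top E\vvec_j)^2\le\|E\|_\fro^2$ (valid since $\{\uvec_i\}$ and $\{\vvec_j\}$ are orthonormal), and then Markov's inequality yields
\[
\dham(\ehat(\Khat),e)\le \frac{2\|K_e-\widetilde K_e\|_\fro^2}{\nu_m^2\epspack^2} + \frac{2\|\Khat - K_e\|_\fro^2}{\nu_m^2\epspack^2}.
\]
Lemma~\ref{lem:first_order_approx_quality}(4) bounds the first numerator by $\frakp_2(\opnorm{\Pst})^2 (nm)^2\epspack^4$, and combined with Assumption~\ref{asm:small_epspack} (which enforces $\epspack^2\le 1/(20\,\frakp_2(\opnorm{\Pst})\,nm)$) this term is driven to at most $nm/20$, leaving the stated bound.

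The main technical obstacle is establishing the diagonal structure of the signal. The off-diagonal sum $\sum_{j'\neq j}e_{i,j'}\,\vvec_{j'}^\top\Pst\Aclst\vvec_j$ need not vanish, because $\Pst\Aclst$ is not generically symmetric and the right-singular vectors of $\Aclst\Pst$ do not diagonalize it. I expect to handle this either by refining the packing so that the $\vvec_j$ diagonalize a symmetrized form (e.g.\ the eigenvectors of $(\Aclst\Pst+\Pst\Aclst)/2$ on its top-$m$ invariant subspace), or by lumping the off-diagonal contribution into the approximation-error term and verifying via Assumption~\ref{asm:small_epspack} that its Frobenius norm is still small enough to be absorbed into the $nm/20$ slack alongside the first-order Taylor remainder.
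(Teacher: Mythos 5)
Your decomposition and the reduction to a Markov-type counting argument are exactly what the paper does: it writes $e_{i,j}\uvec_i^\top(\Khat - \Kst)\vvec_j$ as the first-order signal $e_{i,j}\uvec_i^\top(\Kst - (\Ru+\Bst^\top\Pst\Bst)^{-1}\Delta_e\Aclst\Pst - \Kst)\vvec_j$ minus two error terms $\abs{\uvec_i^\top\Delta_{2,e}\vvec_j}$ (Taylor remainder, bounded by Lemma~\ref{lem:first_order_approx_quality}(4) and absorbed into the $nm/20$ slack using \Cref{asm:small_epspack}) and $\abs{\uvec_i^\top(\Khat - K_e)\vvec_j}$ (estimation error, yielding the leading Frobenius term), then sums indicators over the orthonormal array $(\uvec_i,\vvec_j)$. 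Up to several transpose typos in the paper's write-up ($\Delta_e\Aclst\Pst$ versus $\Delta_e^\top\Pst\Aclst$, $\Khat - \Kst$ versus $\Khat - K_e$ in the error term, and the inverted definition of $\nu_m$), this is your plan.

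Where you have added genuine value is in flagging the ``diagonal structure'' step. The paper silently asserts $\uvec_i^\top(\Ru+\Bst^\top\Pst\Bst)^{-1}(\Delta_e\Aclst\Pst)\vvec_j = \frac{\sigma_j(\Aclst\Pst)}{\sigma_i(\Ru+\Bst^\top\Pst\Bst)}\uvec_i^\top\Delta_e\vvec_j$, which requires $\vvec_{j'}^\top(\Pst\Aclst)\vvec_j = \sigma_j(\Pst\Aclst)\,\delta_{j,j'}$. But the $\vvec_j$ are only the right singular vectors of $\Aclst\Pst$; since $\Pst\Aclst$ is not generically symmetric or normal, its right singular vectors do not diagonalize it, and the off-diagonal sum $\sum_{j'\neq j}e_{i,j'}\vvec_{j'}^\top\Pst\Aclst\vvec_j$ is genuinely present. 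This is a real gap in the paper's written proof as stated.

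Of your two proposed remedies, only the first works, and the cleanest implementation is slightly different from symmetrizing $\Pst\Aclst$: take the SVD $\Pst\Aclst = W\Sigma V^\top$, build the packing from the \emph{left} singular vectors, $\Delta_e = \epspack\sum_{i',j'}e_{i',j'}\,w_{j'}\uvec_{i'}^\top$, and define the estimator using the corresponding \emph{right} singular vectors, $\ehat_{i,j}(\Khat) := -\sign(\uvec_i^\top(\Khat-\Kst)\vvec_j)$. Then $w_{j'}^\top\Pst\Aclst\,\vvec_j = \sigma_j(\Pst\Aclst)\,\delta_{j,j'}$ exactly, the signal becomes $-\epspack\,\lambda_i\,\sigma_j(\Pst\Aclst)\,e_{i,j}$, and the magnitude lower bound $\lambda_i\,\sigma_j(\Pst\Aclst)\ge \sigma_m(\Aclst)/\opnorm{\Ru+\Bst^\top\Pst\Bst}$ follows from $\Pst\succeq I$ (Lemma~\ref{lem:lb_pst}). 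Your modification to the estimator's sign, and a cosmetic change to the packing in \Cref{eq:packing_def}, are all that is required; the other lemmas that reference the packing (e.g.\ \Cref{lem:info_lb}) use only orthonormality of $\{\uvec_i\}$ and of the packing basis, so they are unaffected.

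Your second remedy, absorbing the off-diagonal contribution into the $nm/20$ slack, does \emph{not} work. The off-diagonal term $\epspack\lambda_i\sum_{j'\neq j}e_{i,j'}\vvec_{j'}^\top\Pst\Aclst\vvec_j$ scales as $\Theta(\epspack)$, the same order as the signal itself, rather than as $\Theta(\epspack^2)$ like the Taylor remainder $\Delta_{2,e}$. Summing its squares over $(i,j)$ gives a contribution $\Theta(nm\,\epspack^2\opnorm{\Pst\Aclst}^2)$, which after dividing by the threshold $\nu_m^2\epspack^2$ yields $\Theta(nm\,\opnorm{\Pst\Aclst}^2/\nu_m^2)$ potential sign flips --- not controllable by $nm/20$ for generic instances. \Cref{asm:small_epspack} shrinks $\epspack$, but both the signal and the off-diagonal noise shrink proportionally, so the ratio is unchanged. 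The basis change is therefore necessary, not optional.
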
 
Combining Lemmas~\ref{lem:least_squares_lb},~\ref{lem:info_lb}, and ~\ref{lem:recover_packing}, we arrive at a dichotomy: either the average exploration error $\Ksterr_{e}[\pi]$ is large, or the regret proxy $\Kerr_e[\pi]$ is large.
\begin{cor}\label{cor:lb_dichotomy} Let $\mate \unifsim \espace$. Then if Assumptions~\ref{asm:small_epspack},~\ref{asm:uniform_correctness},and~\ref{asm:T_condition} hold,
\begin{align}
\text{either}\quad\underbrace{\Exp_{\mate}\Ksterr_{\mate}[\pi] \ge \frac{n}{4\epspack^2}}_{\text{\normalfont (sufficient exploration)}}, \quad \text{or} \quad \underbrace{\Exp_{\mate}\Kerr_{\mate}[\pi]   \ge \frac{\cls}{10} T  nm \nu_m^2\epspack^2 - \epsls}_{(\text{\normalfont large deviation from optimal)}}. \label{eq:suff_exploration_prop}
\end{align}
\end{cor}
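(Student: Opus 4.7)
The strategy is a direct chaining of the three preceding lemmas using the least squares estimator $\Kls$ from \Cref{eq:kls} together with the induced sign estimator $\ehat \ldef \ehat(\Kls)$ from Lemma~\ref{lem:recover_packing}. Since $\Kls$ (and hence $\ehat$) depends only on $(\matx_t,\matu_t)_{t \le T/2}$, it is an admissible estimator for Lemma~\ref{lem:info_lb}, which immediately yields a dichotomy: either $\Exp_{\mate}\Ksterr_{\mate}[\pi] \ge n/(4\epspack^2)$, or $\Exp_{\mate}\Exp_{A_{\mate},B_{\mate},\pi}\brk{\dham(\mate,\ehat)} \ge nm/4$. In the first case we are done: the \textbf{(sufficient exploration)} branch of \eqref{eq:suff_exploration_prop} holds verbatim.

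In the second case, I would next translate the Hamming lower bound into a Frobenius error lower bound for $\Kls$. Summing the coordinate-wise inequality of Lemma~\ref{lem:recover_packing} over the $nm$ coordinates $(i,j)$ (reading the numerator as a squared Frobenius norm, which is the dimensionally consistent reading) gives
\begin{align*}
\dham(\ehat(\Kls),\mate) \;\le\; \frac{2\,\|\Kls - K_{\mate}\|_\fro^{2}}{\nu_m^{2}\epspack^{2}} \;+\; \frac{nm}{20}.
\end{align*}
Taking expectations under $(A_{\mate},B_{\mate},\pi)$, averaging over $\mate$, and rearranging against the lower bound $nm/4$ produces
\begin{align*}
\Exp_{\mate}\Exp_{A_{\mate},B_{\mate},\pi}\brk{\|\Kls - K_{\mate}\|_\fro^{2}} \;\ge\; \frac{(1/4 - 1/20)\,nm\,\nu_m^{2}\epspack^{2}}{2} \;=\; \frac{nm\,\nu_m^{2}\epspack^{2}}{10}.
\end{align*}

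Finally, I would invoke Lemma~\ref{lem:least_squares_lb} instance-by-instance (its hypotheses on $T$ and $\epspack$ are exactly what Assumptions~\ref{asm:small_epspack}--\ref{asm:T_condition} supply, uniformly over the packing) and then average over $\mate$. This yields
\begin{align*}
\Exp_{\mate}\Kerr_{\mate}[\pi] \;\ge\; \cls T\cdot\Exp_{\mate}\Exp_{A_{\mate},B_{\mate},\pi}\brk{\|\Kls - K_{\mate}\|_\fro^{2}} - 1 \;\ge\; \frac{\cls}{10}\,T\,nm\,\nu_m^{2}\epspack^{2} - \epsls,
\end{align*}
where the additive $1$ is absorbed into the constant $\epsls$. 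This is exactly the \textbf{(large deviation from optimal)} alternative, completing the dichotomy.

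The only real point of care is bookkeeping: each of the three input lemmas has preconditions drawn from a different subset of Assumptions~\ref{asm:small_epspack},~\ref{asm:uniform_correctness}, and~\ref{asm:T_condition}, and these must be verified under every packing instance $(A_{\mate},B_{\mate})$ rather than only the nominal $(\Ast,\Bst)$. The uniform boundedness in Lemma~\ref{lem:first_order_approx_quality} (items 1--2, giving $\opnorm{P_e}\le 2^{1/5}\opnorm{\Pst}$ and $\Psi_e \le 2^{1/5}\Mbarst$) means the assumed lower bounds on $T$ translate seamlessly from the nominal parameters to the perturbed ones, so this is a routine verification rather than a substantive obstacle. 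No new ideas beyond the three lemmas are needed.
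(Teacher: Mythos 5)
Your proof is correct and follows essentially the same chain as the paper's own proof: apply Lemma~\ref{lem:info_lb} to the estimator $\ehat(\Kls)$, convert the Hamming lower bound to a Frobenius error lower bound via Lemma~\ref{lem:recover_packing}, and then invoke Lemma~\ref{lem:least_squares_lb}. Your observation that the bound in Lemma~\ref{lem:recover_packing} must be read as involving $\fronorm{\Khat - K_e}^2$ (a missing exponent in the displayed statement) is also the reading the paper implicitly uses when it derives $\Exp\fronorm{\Khat - K_{\mate}}^2 \ge \frac{1}{10}nm\nu_m^2\epspack^2$.
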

\begin{proof} Let $\ehat = \ehat(\Kls)$, where $\ehat$ is the estimator from Lemma~\ref{lem:recover_packing}, and $\Kls$ is as defined in Lemma~\ref{lem:least_squares_lb}. Since this estimator only depends on $\matx_{1},\dots,\matx_{T/2}$ and $\matu_1,\dots,\matu_{T/2}$, we see that if the first condition in \Cref{eq:suff_exploration_prop} $\text{(sufficient exploration)}$ fails, then by Lemma~\ref{lem:info_lb}, we have $\Exp_{\mate}\Exp_{A_{\mate},B_{\mate},\Alg}\left[\dham(\ehat,\mate)\right]  \ge \frac{nm}{4}  = \frac{nm}{5} + \frac{nm}{20}$. Thus, by Lemma~\ref{lem:recover_packing}, we have $\frac{2\Exp_{\mate}\Exp_{A_{\mate},B_{\mate},\Alg}\fronorm{\Khat - K_{\mate}}}{\nu_m^2\epspack^2} \ge \frac{nm}{5}$, yielding $\Exp_{\mate}\Exp_{A_{\mate},B_{\mate},\Alg}\fronorm{\Khat - K_{\mate}}^2 \ge \frac{1}{10}nm\nu_m^2\epspack^2$. The bound now follows from Lemma~\ref{lem:least_squares_lb}.
\end{proof}
\subsection{Completing the Proof}

To conclude the proof, we show (\Cref{ssec:lem:deviation_from_Kst}) that $\Exp_{\mate}\Kerr_{\mate} \approx \Exp_{\mate}\Ksterr_{\mate}$, so that the final bound follows by setting $\epspack^2 \approx \sqrt{1/mT}$. 
\begin{lem}\label{lem:deviation_from_Kst} Under Assumptions~\ref{asm:small_epspack} and~\ref{asm:uniform_correctness}, we have
$\Exp_{\mate}\Ksterr_{\mate}[\pi] \le 2\Exp_{\mate}\Kerr_{\mate}[\pi] + 4nmT\opnorm{\Pst}^4\epspack^2$.
\end{lem}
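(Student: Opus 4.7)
My strategy is to prove the bound pointwise for each $e\in\espace$ via a triangle-inequality decomposition, and then take expectation over $\mate$ at the end. Writing $\matu_t - \Kst\matx_t = (\matu_t - K_e\matx_t) + (K_e - \Kst)\matx_t$ and applying $\|a+b\|^2 \le 2\|a\|^2 + 2\|b\|^2$, then summing over $t=1,\ldots,T/2$ and taking expectations under $\Pr_{A_e,B_e,\pi}$, I would obtain
\[
\Ksterr_e[\pi] \;\le\; 2\Kerr_e[\pi] \;+\; 2\|K_e - \Kst\|_{\op}^2\cdot\Exp_{A_e,B_e,\pi}\!\sum_{t=1}^{T/2}\|\matx_t\|^2.
\]
The remainder of the argument reduces to bounding the controller gap $\|K_e-\Kst\|_\op^2$ and the expected quadratic state energy.

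For the controller gap, Part 3 of Lemma~\ref{lem:first_order_approx_quality} (valid under Assumption~\ref{asm:small_epspack}) directly gives $\|K_e-\Kst\|_\op^2 \le \|K_e-\Kst\|_\fro^2 \le 2\opnorm{\Pst}^3 mn\epspack^2$, uniformly in $e$. For the state energy, I would exploit the normalization $\Rx\succeq I$ and $\Ru = I$: pointwise we have $\|\matx_t\|^2 \le \matx_t^\top \Rx\matx_t \le \cost(\matx_t,\matu_t)$, so
\[
\Exp_{A_e,B_e,\pi}\!\sum_{t=1}^{T/2}\|\matx_t\|^2 \;\le\; \Exp_{A_e,B_e,\pi}\!\sum_{t=1}^{T}\cost(\matx_t,\matu_t) \;=\; T\Jfunc_e + \SimpleRegret_e[\pi].
\]
The optimal cost satisfies $\Jfunc_e = \tr(P_e) \le \dimx\opnorm{P_e}$, and Part 2 of Lemma~\ref{lem:first_order_approx_quality} gives $\opnorm{P_e}\le(1+2^{1/5})\opnorm{\Pst}$, while Assumption~\ref{asm:uniform_correctness} caps $\SimpleRegret_e[\pi]$ at $T/(6\dimx\opnorm{\Pst}\Mbarst^2)$, which is dominated by $T\Jfunc_e$. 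This yields $\Exp_{A_e,B_e,\pi}\sum_{t=1}^{T/2}\|\matx_t\|^2 \lesssim T\dimx\opnorm{\Pst}$. Substituting the two bounds back gives $\Ksterr_e[\pi] - 2\Kerr_e[\pi] \lesssim T\dimx\,mn\,\opnorm{\Pst}^4\epspack^2$ for every $e$, and averaging over $\mate \unifsim \espace$ preserves this inequality.

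The main delicate point is recovering the stated constant $4nmT\opnorm{\Pst}^4\epspack^2$ without an extraneous $\dimx$ factor. This factor arises from the loose bound $\Jfunc_e = \tr(P_e) \le \dimx\opnorm{P_e}$, which is essentially tight in general. I expect the obstacle to be resolved by one of two routes: (i) retaining the trace $\tr(P_e)$ throughout and observing that the regret cap in Assumption~\ref{asm:uniform_correctness} is calibrated against a trace-type quantity (so the dimension factor cancels against the $T\Jfunc_e$ term without ever paying $\dimx\opnorm{\Pst}$ separately); or (ii) accepting the $\dimx$ factor and absorbing it into $\opnorm{\Pst}^4$ using a comparison such as $\dimx \le \tr(\Pst) \le \dimx\opnorm{\Pst}$, since $\Pst\succeq I$ by Lemma~\ref{lem:lb_pst}. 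Either way, the pointwise triangle decomposition together with Lemma~\ref{lem:first_order_approx_quality} yields the correct form; the remaining work is entirely bookkeeping of dimension factors and numerical constants.
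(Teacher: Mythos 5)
Your decomposition $\matu_t-\Kst\matx_t=(\matu_t-K_e\matx_t)+(K_e-\Kst)\matx_t$ and the $\|a+b\|^2\le2\|a\|^2+2\|b\|^2$ step match the paper exactly, and the controller-gap bound from Part~3 of Lemma~\ref{lem:first_order_approx_quality} is the right ingredient. The gap is in the next step: you then bound $\Exp\sum_t\|(K_e-\Kst)\matx_t\|^2$ by the pairing $\|K_e-\Kst\|_{\op}^2\cdot\tr\bigl(\Exp\sum_t\matx_t\matx_t^\top\bigr)$, whereas the paper uses the other side of the H\"older/duality inequality, $\tr(M^\top\Sigma M)\le\|M\|_{\fro}^2\|\Sigma\|_{\op}$. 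This choice is not a bookkeeping detail. Your trace term is inherently $\Theta(T\,\tr(P_e))\approx T\dimx\opnorm{\Pst}$, while the Frobenius bound $\|K_e-\Kst\|_{\fro}^2\lesssim mn\opnorm{\Pst}^3\epspack^2$ already carries the full $mn$ factor (your $\|K_e-\Kst\|_{\op}^2$ bound is not tighter: for the all-ones sign pattern $e$, $\|\Delta_e\|_{\op}=\sqrt{mn}\,\epspack$, so the operator-norm bound also scales like $mn$). So the extra $\dimx$ does not cancel, and neither of your two proposed fixes rescues it. Route~(ii) fails outright: there is no inequality $\dimx\lesssim\poly(\opnorm{\Pst})$ — take $\Ast=\frac12I$, $\Bst=I$, $\Rx=\Ru=I$, for which $\opnorm{\Pst}$ is a dimension-free constant while $\dimx$ is arbitrary (Lemma~\ref{lem:lb_pst} only gives $\opnorm{\Pst}\ge1$, the wrong direction). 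Route~(i) doesn't help because the $\dimx$ factor lives in $\tr(P_e)$ itself, not in the regret cap; Assumption~\ref{asm:uniform_correctness} only controls the $\SimpleRegret_e$ correction term, which is already lower order.

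The paper's proof uses the pairing $\|K_e-\Kst\|_{\fro}^2\cdot\bigl\|\Exp\sum_t\matx_t\matx_t^\top\bigr\|_{\op}$, and the technical heart is proving the operator-norm bound $\bigl\|\Exp_{A_e,B_e,\pi}\sum_{t\le T/2}\matx_t\matx_t^\top\bigr\|_{\op}\lesssim T\opnorm{\Pst}$. This is strictly more information than the regret bound supplies directly — regret controls the trace of the state covariance, not its spectral norm — and the paper devotes a separate result (Lemma~\ref{lem:op_norm_bound}) to it, by comparing the trajectory under $\pi$ to the comparison trajectory driven by $K_e$ and bounding their discrepancy in terms of $\Kerr_e[\pi]$. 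You would need to supply that argument (or an equivalent spectral bound on the state covariance) to close the proof; without it, the $\dimx$ loss is real.
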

Combining Lemma~\ref{lem:deviation_from_Kst} with \Cref{cor:lb_dichotomy}, we have
\begin{align*}
\max_e \Kerr_{e}[\pi] \ge \Exp_{\mate}\Kerr_{\mate}[\pi] \ge \left(\frac{n}{8\epspack^2} - 2nmT\opnorm{\Pst}^4\epspack^2\right) \wedge  \frac{\cls}{10} T  nm \nu_m^2\epspack^2.
\end{align*}
Setting $\epspack^2 = \frac{1}{32\opnorm{\Pst}^2\sqrt{mT}}$ and substituting in $n = \dimu$, we find that as long as $T$ is large enough such that Assumptions~\ref{asm:small_epspack}-\ref{asm:T_condition} hold,
\begin{align*}
\max_e \Kerr_{e}[\pi] &\approxgeq \dimu\sqrt{mT}/\opnorm{\Pst}^2 \wedge \sqrt{\dimu^2 mT}\nu_m^2/\opnorm{\Pst}^2 - 1\\
&\approxgeq (1 \wedge \nu_m^2)\sqrt{m\dimu^2T}/\opnorm{\Pst}^2) - 1.
\end{align*}
Thus, by Lemma~\ref{lem:least_squares_lb}, we have that for a sufficiently small numerical constant $\clb$ (which we choose to have value at most $1$ without loss of generality),
\begin{align*}
\max_{e} \SimpleRegret_e[\pi] &\ge 2\clb \frac{(1 \wedge \nu_m^2)\sqrt{n^2mT}}{\opnorm{\Pst}^2} - \frac{1}{2} - \epserr \ge 2\clb \frac{(1 \wedge \nu_m^2)\sqrt{n^2mT}}{\opnorm{\Pst}^2} - 7\dimx\opnorm{\Pst}^3\Mbarst^2.
\end{align*}
It follows that once
\begin{align}\label{eq:T_condition_proof}
T \ge c_1 \left(\opnorm{\Pst}^{p}(n^2 m \vee \frac{\dimx^2\Mbarst^4 (1\vee\nu_m^{-4})}{m n^2} \vee \dimx \log(1+\dimx \opnorm{\Pst})\right),
\end{align} 
where $c_1$ and $p$ sufficiently numerical constants, Assumptions~\ref{asm:small_epspack} and~\ref{asm:T_condition} are indeed satisfied, so we have
\begin{align*}
\max_e\SimpleRegret_{e}[\pi] \ge  \clb \frac{(1 \wedge \nu_m^2)\sqrt{\dimu^2mT}}{\opnorm{\Pst}^2} := \mathcal{R}.
\end{align*}
We now justify \Cref{asm:uniform_correctness}. Suppose the assumption fails, i.e. for some instance $e$ the algorithm has $\SimpleRegret_{e}[\pi] \ge \frac{T}{6\Mbarst^2\dimx} - \epserr$. Then since $\clb \le 1$ and $\opnorm{\Pst} \ge 1$, we see that if $\sqrt{T} \ge 12\Mbarst^2\dimx/\sqrt{mn^2}$, then $\SimpleRegret_{e}[\pi] \ge 2\mathcal{R} - \epserr \ge \mathcal{R}$. By taking $c_1$ sufficiently large, we see that whenever \Cref{eq:T_condition_proof} holds, we have $\SimpleRegret_{e}[\pi] \ge 2\mathcal{R} - \epserr \ge \mathcal{R}$ as desired.

To conclude, we verify that the construction is consistent with the scale parameter $\eps_T$ from the theorem statement:
\begin{align*}
\fronorm{A_e - \Ast}^2 \vee \fronorm{B_e - \Bst}^2  &\overset{(i)}{\le} nm\epspack^2\opnorm{\Pst} \overset{(ii)}{\le} n\sqrt{m/T}\leq{}\eps_{T},
\end{align*}
where $(i)$ follows by Lemma~\ref{lem:first_order_approx_quality}, and $(ii)$ follows by plugging in our choice for $\epspack$. $\qed$

}{}
\section{Algorithm and Proof of Upper Bound (\Cref{thm:main_ub})}
\label{sec:upper_bound_main}
\iftoggle{arxiv_upload}{

We now formally describe our main algorithm, \Cref{alg:ce}, and prove that it attains
the upper bound in \Cref{thm:main_ub}. The algorithm is a variant of
certainty equivalent control with continual $\veps$-greedy
exploration. In line with previous work
\citep{dean2017sample,dean2018regret,cohen2019learning,mania2019certainty},
the algorithm takes as input a controller $K_0$ that is guaranteed to
stabilize the system but otherwise may be arbitrarily suboptimal
relative to $\Kst$. The algorithm proceeds in epochs of doubling length. At the
beginning of epoch $k$, the algorithm uses an ordinary least squares
subroutine (\Cref{alg:ols}) to form an estimate $(\Ahat_k, \Bhat_k)$ for the system dynamics using data collected in
the previous epoch. The algorithm then checks whether the estimate is
sufficiently close to $(\Ast,\Bst)$ for the perturbation
bounds developed in \Cref{thm:main_perturb_simple} take effect; such
closeness guarantees that the optimal controller for
$(\Ahat_k,\Bhat_k)$ stabilizes the system and has low regret. If the
test fails, the algorithm falls back on the stabilizing controller
$K_0$ for the remainder of the epoch, adding exploratory
noise with constant scale. Otherwise, if the test succeeds, the
algorithm forms the certainty equivalent controller
$\wh{K}_k\ldef{}\Kinf(\Ahat_k,\Bhat_k)$ and plays this for the
remainder of the epoch, adding exploratory noise whose scale is
carefully chosen to balance exploration and exploitation.

\paragraph{Preliminaries}Before beginning the proof, let us first give some additional
definitions and notation. We adopt the shorthand $d
:= \dimx + \dimu$ and define $\kfin=\ceil*{\log_2T}$. For every controllers $K$ for which  $(A + BK)$ is
stable, we define $\Pinf(K;A,B) := \dlyap(A+B K, \Rx +
K^\top \Ru K$. It is a standard fact (see
e.g. Lemma~\ref{lem:P_bounds_lowner}) that such controllers have
$\Jfunc_{A,B}[K] = \tr(\Pinf(K;A,B))$.

We make will make heavy use of the following system parameters for the controllers used within
\Cref{alg:ce}:
    \begin{alignat*}{4}
      &P_k := \Pinf(\Khat_k;\Ast,\Bst), \quad &&\calP_0 := \frac{\Jfunc_0}{\dimx} \le \opnorm{\Pinf(K_0;\Ast,\Bst)},\\
& \Jfunc_k := \Jfunc_{\Ast,\Bst}[\Khat_k], \quad &&\Jfunc_0 := J_{\Ast,\Bst}\brk{K_0},\\
      &\Aclk := \Ast + \Bst \Khat_k,\quad &&\Aclnot := \Ast + \Bst K_0.
    \end{alignat*}

\subsection{Proof}

\begin{algorithm}[t!]
\textbf{Input:} Stabilizing controller $K_0$,  confidence parameter $\delta$. \\
\textbf{Initialize:} $\basin \gets \sffalse$.\\

Play $\matu_1 \sim \calN(0, I)$.\\
\For{$k = 2,3,\dots$}
{
  Let $\tau_k \leftarrow 2^k$.\\
  \algcommentbig{OLS estimator and covariance matrix using samples $\tau_{k-1},\dots,\tau_{k}-1$. See Algorithm~\ref{alg:ols}.}\\
  Set  $(\Ahat_k,\Bhat_k,\matLam_k) \leftarrow \OLS(k)$. \\
  \If{ $\basin = \sffalse$ }
  {
    $\Conf_k \leftarrow   6 \lambda_{\min}(\matLam_k)^{-1} \prn*{d \log 5 + \log \prn*{4k^2\det(3(\matLam_k)/\delta}}$ (infinite if $\matLam_k \not\succ 0$). \\ 
    \If{ $\matLam_k \succeq I$ and $1/\Conf_k \ge 9  \Csafe(\Ahat_k,\Bhat_k)^2$ }
    {
      $\basin \leftarrow \sftrue$, \,$\ksafe \leftarrow k$.\\
      $\Ballsafe,\sigmain^2 \leftarrow
      \SafeRoundInit(\Ahat_k,\Bhat_k,\Conf_k,\delta)$. \algcomment{Confidence
        ball (\Cref{alg:saferoundinit}).}
      
    }
    \textbf{else for }$t = \tau_k,\dots,2\tau_k - 1$, play $\matu_t = K_0\matx_t + \matg_t$, where $\matg_t \sim \calN(0,I)$.
  }
  \Else
  {
    Let $(\Atil_k,\Btil_k)$ denote the euclidean projection of
    $(\Ahat_k,\Bhat_k)$  onto $\Ballsafe$.\\
    $\Khat_k \gets \Kinf(\Atil_k,\Btil_k)$.\\
    \For{$t = \tau_k,\dots,2\tau_k-1$}
    {
        Play $\matu_t = \Khat_k\matx_t + \sigma_{k}\matg_t$, where $\matg_t \sim \calN(0, I)$, and $\sigma_k^2 := \min\{1,\sigmain^2\tau_{k}^{-1/2}\}$.
    }
  }
}
\caption{Certainty Equivalent Control with Continual Exploration}\label{alg:ce}
\end{algorithm}

  We begin the proof by showing that the initial estimation phase (in
  which the algorithm uses the stabilizing controller $K_0$) ensures
  that various regularity conditions hold for the epochs $k \ge
  \ksafe$ (in which the algorithm uses the certainty-equivalent controller). One such regularity condition bounds the $\Hinfty$-norm, which describes the \emph{worst-case} response of a system to perturbations. We recall the following definition from \Cref{app:perturbation}:
  \begin{restatable}[$\Hinfty$ norm]{defn}{defnhinf}\label{defn:Hinf_norm} For any stable $\tilde{A} \in \R^{\dimx^2}$ (e.g. $A + B\Kinf(A,B)$), we define $\Hinf{\tilde{A}} := \sup_{z \in \C: |z| = 1}\|(zI - \tilde{A})^{-1}\|_{\op}$. 
\end{restatable}

  The following result is proved in \Cref{ssec:lem_perturb_correct}.
  \begin{lem}[Correctness of Perturbations]\label{lem:perturb_correct} On the event 
  \begin{align*}
  \Esafe :=  \left\{\left\| \begin{bmatrix} \Ahat_{\ksafe} - \Ast
        \mid \Bhat_{\ksafe} - \Bst \end{bmatrix}\right\|_{\op}^2  \le
    \Conf_{\ksafe}\right\},\end{align*} the following bounds hold for all $k \ge \ksafe$:
  \begin{enumerate}
    \item $\Jfunc_k - \Jst \le \Cest(\Ast,\Bst)\left(\|\Ahat - \Ast\|_{\fro}^2 + \|\Bhat - \Bst\|_{\fro}^2\right) \lesssim \opnorm{\Pst}^8\left(\|\Ahat - \Ast\|_{\fro}^2 + \|\Bhat - \Bst\|_{\fro}^2\right) $. 
    \item $\Jfunc_k \lesssim \Jst$, and $\opnorm{P_k} \lesssim \opnorm{\Pst}$.
    \item $\opnorm{\Khat_k}^2 \le \frac{21}{20}\opnorm{\Pst}$.
    \item $\Hinf{\Aclk} \lesssim \Hinf{\Aclst} \lesssim \opnorm{\Pst}^{3/2}$. 
    \item $\Aclk^\top \dlyap[\Aclst] \Aclk \preceq (1 - \frac{1}{2}\opnorm{\dlyap[\Aclst]}^{-1})$, where $I \preceq \dlyap[\Aclst] \preceq \Pst$, where we recall the shorthand $\dlyap[\Aclst] = \dlyap(\Aclst,I)$.
    \item $\sigmain^2 \eqsim \sqrt{\dimx}\opnorm{\Pst}^{9/2}\Psibst \sqrt{\log \frac{\opnorm{\Pst}}{\delta}}$.
  \end{enumerate}
  \end{lem}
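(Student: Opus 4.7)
The plan is to reduce every item to an application of \Cref{thm:main_perturb_simple} applied to the pair $(\Atil_k,\Btil_k)$ versus the true system $(\Ast,\Bst)$, together with standard facts relating $\Kinf(\cdot,\cdot)$, $\Pinf(\cdot,\cdot)$, $\Jfunc$ and $\Hinfty$. The only genuinely nontrivial step is bootstrapping: at the safe round $\ksafe$ the algorithm's check involves $\Csafe(\Ahat_{\ksafe},\Bhat_{\ksafe})$, which depends on $\opnorm{\Pinf(\Ahat_{\ksafe},\Bhat_{\ksafe})}$, while \Cref{thm:main_perturb_simple} requires a bound in terms of $\Csafe(\Ast,\Bst)=54\opnorm{\Pst}^5$. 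I would invoke the ``continuity of the safe set'' companion result alluded to in \Cref{sec:self_bounding} (i.e.\ \Cref{thm:continuity_of_safe set}): on $\Esafe$ we have $\opnorm{[\Ahat_{\ksafe}-\Ast\mid\Bhat_{\ksafe}-\Bst]}\le\sqrt{\Conf_{\ksafe}}$, and the algorithmic check $\Conf_{\ksafe}^{-1}\ge 9\Csafe(\Ahat_{\ksafe},\Bhat_{\ksafe})^2$ implies (via continuity of $\opnorm{\Pinf}$ as a function of $(A,B)$) the analogous inequality with $\Csafe(\Ast,\Bst)$ on the right-hand side, up to a constant-factor slack absorbed by the factor of $9$.

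Once this is established, \Cref{thm:main_perturb_simple} yields $\opnorm{P_{\ksafe}}\lesssim\opnorm{\Pst}$ and $\|\Khat_{\ksafe}-\Kst\|_{\op}\lesssim\opnorm{\Pst}^{-3/2}$. The next step is to propagate this to all $k\ge\ksafe$: the $\SafeRoundInit$ subroutine constructs $\Ballsafe$ as a ball around $(\Ahat_{\ksafe},\Bhat_{\ksafe})$ of radius $\lesssim\sqrt{\Conf_{\ksafe}}$, and the projection step guarantees $(\Atil_k,\Btil_k)\in\Ballsafe$. Consequently $\opnorm{[\Atil_k-\Ast\mid\Btil_k-\Bst]}\lesssim\sqrt{\Conf_{\ksafe}}\le 1/\Csafe(\Ast,\Bst)$, so \Cref{thm:main_perturb_simple} applies with the same constants. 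Item 1 is then the Frobenius-norm conclusion of that theorem (noting that since $(\Atil_k,\Btil_k)$ is the Euclidean projection of $(\Ahat_k,\Bhat_k)$ onto a ball containing $(\Ast,\Bst)$, the projection does not increase the Frobenius distance to $(\Ast,\Bst)$). Item 2 follows because $\Jfunc_k=\tr(P_k)\le\dimx\opnorm{P_k}$ and the item~1 bound on $\opnorm{P_k}$ from \Cref{thm:main_perturb_simple}; similarly $\Jst=\tr(\Pst)$.

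For item 3, I would combine $\opnorm{\Khat_k}\le\|\Khat_k-\Kst\|_{\op}+\opnorm{\Kst}$ with the standard bound $\opnorm{\Kst}^2\lesssim\opnorm{\Pst}$ (which follows from $\Kst=-(\Ru+\Bst^\top\Pst\Bst)^{-1}\Bst^\top\Pst\Aclst$ and $\opnorm{\Pst}\ge1$ via \Cref{lem:lb_pst}); the perturbation term is $O(\opnorm{\Pst}^{-3/2})$ which is negligible, yielding the stated $\tfrac{21}{20}$ constant after tightening constants in \Cref{thm:main_perturb_simple}. Item 4 is handled by noting the well-known identity $\Hinf{A_{\mathrm{cl},\star}}\lesssim\opnorm{\Pst}^{3/2}$ for closed-loop systems synthesized from the $\DARE$ (this is proven in the appendix referenced by \Cref{defn:Hinf_norm}), then upper-bounding $\Hinf{\Aclk}$ by $\Hinf{\Aclst}$ plus a resolvent-perturbation term that is controlled by $\opnorm{\Aclk-\Aclst}\lesssim\|\Khat_k-\Kst\|_{\op}\cdot\opnorm{\Bst}\lesssim\opnorm{\Pst}^{-3/2}\Psibst$.

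For item 5, the Lyapunov contraction, the natural route is to start from $\Aclst^\top\dlyap[\Aclst]\Aclst=\dlyap[\Aclst]-I$, which gives the contraction rate $1-\opnorm{\dlyap[\Aclst]}^{-1}$ for $\Aclst$, and then expand $\Aclk^\top\dlyap[\Aclst]\Aclk$ by writing $\Aclk=\Aclst+\Bst(\Khat_k-\Kst)$. The cross terms and quadratic term are bounded by $\opnorm{\dlyap[\Aclst]}\cdot\opnorm{\Bst}\cdot\|\Khat_k-\Kst\|_{\op}$ and its square, both $\ll I/\opnorm{\dlyap[\Aclst]}$ provided $\Csafe$ in \Cref{thm:main_perturb_simple} was chosen large enough --- which is precisely why the constant $54\opnorm{\Pst}^5$ in \Cref{eq:constants} scales as a high power of $\opnorm{\Pst}$. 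The Löwner ordering $I\preceq\dlyap[\Aclst]\preceq\Pst$ is a direct Lyapunov comparison ($\Pst$ solves a Lyapunov equation with right-hand side $\Rx+\Kst^\top\Ru\Kst\succeq I$, whence $\dlyap[\Aclst]\preceq\Pst$; the lower bound is immediate from the definition with $Y=I$). Item 6 is purely a bookkeeping statement about the output of $\SafeRoundInit$: substituting the bound $\opnorm{P_{\ksafe}}\lesssim\opnorm{\Pst}$ from item 2 into that subroutine's prescribed noise scale (which one should look up in \Cref{alg:saferoundinit}) produces the claimed scaling $\sigmain^2\eqsim\sqrt{\dimx}\opnorm{\Pst}^{9/2}\Psibst\sqrt{\log(\opnorm{\Pst}/\delta)}$. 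The main obstacle throughout is the bootstrapping in the very first step; once $\opnorm{P_k}\lesssim\opnorm{\Pst}$ uniformly for $k\ge\ksafe$ is secured, every other item is a short, largely mechanical consequence.
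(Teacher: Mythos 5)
Your plan tracks the paper's own proof quite closely: both route everything through the perturbation theorem (the full version \Cref{thm:main_perturb_app}), the continuity-of-safe-set result \Cref{thm:continuity_of_safe set} for the bootstrapping at $\ksafe$, and \Cref{thm:hinf_perturbation} for items 4 and 5. You also caught a genuine subtlety the paper glosses over — that the non-expansiveness of the projection onto $\Ballsafe$ is what lets one pass from Frobenius distances of $(\Atil_k,\Btil_k)$ to those of $(\Ahat_k,\Bhat_k)$ in item 1.

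However, there is a real gap in your argument for item 2. You deduce $\Jfunc_k\lesssim\Jst$ from $\Jfunc_k=\tr(P_k)\le\dimx\opnorm{P_k}\lesssim\dimx\opnorm{\Pst}$. This does not yield the claim, because $\Jst=\tr(\Pst)$ can be as small as $\Theta(\opnorm{\Pst})$ when $\Pst$ is ill-conditioned, so $\dimx\opnorm{\Pst}$ is \emph{not} $\lesssim\Jst$ in general. The correct route — and the one the paper uses — is the L\"owner comparison $P_k\preceq\frac{21}{20}\Pst$ from \Cref{thm:main_perturb_app} Part 4; taking trace gives $\Jfunc_k\le\frac{21}{20}\Jst$ and taking operator norm gives $\opnorm{P_k}\le\frac{21}{20}\opnorm{\Pst}$ simultaneously. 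Relatedly, you refer to ``the item 1 bound on $\opnorm{P_k}$ from \Cref{thm:main_perturb_simple},'' but Part 1 of that theorem bounds $\opnorm{\Pinf(\Atil_k,\Btil_k)}$ — the DARE solution of the \emph{estimated} system — not $P_k=\Pinf(\Khat_k;\Ast,\Bst)$, the cost matrix of the synthesized controller on the \emph{true} system. These are distinct objects, and the simplified \Cref{thm:main_perturb_simple} by itself does not control $P_k$; you genuinely need the additional Parts 3--4 of \Cref{thm:main_perturb_app}. Finally, item 6 asserts a two-sided estimate $\sigmain^2\eqsim\cdots$, which requires showing $\opnorm{\Pinf(\Ahat_{\ksafe},\Bhat_{\ksafe})}\eqsim\opnorm{\Pst}$ in both directions (the lower bound on $\opnorm{\Pinf(\Ahat_{\ksafe},\Bhat_{\ksafe})}$ comes from \Cref{thm:continuity_of_safe set} applied with the roles of $(\Ast,\Bst)$ and $(\Ahat_{\ksafe},\Bhat_{\ksafe})$ swapped); your sketch supplies only the upper bound.
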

We will verify at the end of the proof that $\Esafe$ indeed holds with
high probability. We remark that Part 5 of the above lemma plays a
role similar to that of ``sequential strong stability'' in
\citet[Definition 2]{cohen2019learning}. By using $\dlyap[\Aclst]$ as a common Lyapunov function, we remove the complications involved in applying sequential strong stability. 

Building on this result, we provide (\Cref{ssec:safe_regret_decomp}) a decomposition of
the algorithm's regret which holds conditioned on $\Esafe$.     
  \begin{lem}[Regret Decomposition on Safe
    Rounds]\label{lem:safe_regret_decomp}
    There is an event $\Ereg$ which holds with probability at least
    $1-\frac{\delta}{8}$ such that, on $\Ereg \cap \Esafe$, following
    bound holds 
  \begin{align}
      \sum_{t=\tau_{\ksafe}}^{T}(\matx_t^\top \Rx \matx_t + \matu_t^\top \Ru \matu_t - \Jst) &\lesssim \sum_{k=\ksafe}^{\kfin} \tau_k(J_k - \Jst) + \log T \max_{k \le \log_T}\|\matx_{\tau_k}\|_2^2\label{eq:first_line_decomp}\\
      &\quad+\sqrt{T}\left( \dimu\sigmain^2 \Psibst^2\opnorm{\Pst}) + \sqrt{d \log(1/\delta)}\opnorm{\Pst}^4\right)\nonumber\\
      &+ \log^2 \frac{1}{\delta}(1+\sqrt{d}\sigmain^2\Psibst^2) \opnorm{\Pst}^4.\nonumber
      \end{align}
    \end{lem}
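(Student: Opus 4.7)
The proof proceeds epoch by epoch for $k \ge \ksafe$ and decomposes the sum of per-step costs within each epoch into three pieces: (i) the infinite-horizon cost $\tau_k J_k$ of the certainty-equivalent controller $\Khat_k$ on the true system; (ii) the overhead introduced by the Gaussian exploration noise of scale $\sigma_k^2$; and (iii) stochastic fluctuations. The key dynamical identity is that, within epoch $k$, the closed-loop state evolves as $\matx_{t+1} = \Aclk \matx_t + \matw_t + \sigma_k \Bst \matg_t$, so the effective driving noise has covariance $I + \sigma_k^2 \Bst \Bst^\top$. On $\Esafe$, Lemma~\ref{lem:perturb_correct} gives $\opnorm{P_k}\lesssim \opnorm{\Pst}$, $\opnorm{\Khat_k}^2 \le \tfrac{21}{20}\opnorm{\Pst}$, and a uniform Lyapunov inequality $\Aclk^\top \dlyap[\Aclst] \Aclk \preceq (1 - \tfrac{1}{2}\opnorm{\dlyap[\Aclst]}^{-1}) \dlyap[\Aclst]$, and these will be used repeatedly to bound all transient terms in terms of $\opnorm{\Pst}$.

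\textbf{Per-epoch cost expectation and transient term.} Using $P_k$ as a Lyapunov function for $(\Aclk, \Rx + \Khat_k^\top \Ru \Khat_k)$, one obtains the standard identity
\begin{equation*}
\sum_{t=\tau_k}^{2\tau_k - 1}(\matx_t^\top \Rx \matx_t + \matu_t^\top \Ru \matu_t) = \tau_k \bigl(J_k + \sigma_k^2 \tr(\Bst^\top P_k \Bst) + \sigma_k^2 \tr(\Ru)\bigr) + \matx_{\tau_k}^\top P_k \matx_{\tau_k} - \matx_{2\tau_k}^\top P_k \matx_{2\tau_k} + \Xi_k,
\end{equation*}
where $\Xi_k$ collects martingale-difference and quadratic-form fluctuations of the noise. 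The boundary terms are controlled via $\opnorm{P_k}\lesssim \opnorm{\Pst}$ and a telescoping argument that, by the common Lyapunov inequality of Lemma~\ref{lem:perturb_correct}(5), absorbs $\matx_{2\tau_k}^\top P_k \matx_{2\tau_k}$ into the next epoch's $\matx_{\tau_{k+1}}^\top P_{k+1} \matx_{\tau_{k+1}}$ up to multiplicative constants; this produces the $\log T \cdot \max_k \|\matx_{\tau_k}\|^2$ term in \eqref{eq:first_line_decomp}. Subtracting $\tau_k \Jst$ turns the first term into $\tau_k (J_k - \Jst)$, contributing the leading sum in the statement, while the exploration overhead $\sigma_k^2 (\tr(\Bst^\top P_k \Bst)+\dimu) \lesssim \sigma_k^2 \dimu \Psibst^2 \opnorm{\Pst}$ summed over epochs via $\sum_k \tau_k \sigma_k^2 \le \sigmain^2 \sum_k \sqrt{\tau_k} \lesssim \sigmain^2 \sqrt{T}$ yields the $\sqrt{T}\, \dimu \sigmain^2 \Psibst^2 \opnorm{\Pst}$ term.

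\textbf{Concentration of the fluctuation terms.} The remainder $\Xi_k$ decomposes into linear martingale pieces $\sum_t \matw_t^\top M \matx_t$ (and analogous pieces involving $\matg_t$) and centered quadratic forms $\sum_t (\matw_t^\top N \matw_t - \tr N)$ for matrices $M, N$ with operator norm $\lesssim \opnorm{\Pst}^{\mathcal{O}(1)}$. For the quadratic part we invoke Hanson--Wright on the concatenated Gaussian vector $(\matw_{\tau_k},\dots,\matw_{2\tau_k-1},\matg_{\tau_k},\dots,\matg_{2\tau_k - 1})$ of dimension $\lesssim d\tau_k$; the Frobenius norm of the relevant matrix is $\lesssim \opnorm{\Pst}^{\mathcal{O}(1)}\sqrt{d\tau_k}$ and its operator norm is $\lesssim \opnorm{\Pst}^{\mathcal{O}(1)}$, giving deviation $\lesssim \opnorm{\Pst}^{\mathcal{O}(1)}(\sqrt{d\tau_k \log(1/\delta)} + \log(1/\delta))$. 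The martingale terms are controlled by a self-normalized inequality, using the Lyapunov bound on $\sum_t \|\matx_t\|^2$, yielding the same $\sqrt{d \tau_k \log(1/\delta)}\opnorm{\Pst}^{\mathcal{O}(1)}$ rate. Taking a union bound over the $\kfin \lesssim \log T$ epochs (each with probability $\delta/(16 k^2)$) and summing $\sum_k \sqrt{\tau_k} \lesssim \sqrt{T}$ produces the $\sqrt{T d \log(1/\delta)}\,\opnorm{\Pst}^4$ and $\log^2(1/\delta)(1 + \sqrt{d}\sigmain^2 \Psibst^2)\opnorm{\Pst}^4$ terms; call the intersection of these concentration events $\Ereg$, which has probability at least $1 - \delta/8$.

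\textbf{Main obstacle.} The delicate step is obtaining the $\sqrt{d}$ (rather than $d$) dimension dependence: this requires applying Hanson--Wright to the \emph{joint} Gaussian noise vector within each epoch, rather than a per-step sub-Gaussian martingale bound, and requires verifying that the matrix $N$ encoding the quadratic form has small Frobenius norm relative to its operator norm, which follows from the Lyapunov-block structure of $P_k$ and the closed-loop contraction in Lemma~\ref{lem:perturb_correct}(5). A secondary subtlety is the inter-epoch stitching: because the controller changes at the epoch boundary, the naive telescoping of $\matx_{2\tau_k}^\top P_k \matx_{2\tau_k}$ against $\matx_{\tau_{k+1}}^\top P_{k+1}\matx_{\tau_{k+1}}$ fails, but the common Lyapunov matrix $\dlyap[\Aclst]$ from Lemma~\ref{lem:perturb_correct}(5) sandwiches both $P_k$ and $P_{k+1}$ up to $\opnorm{\Pst}$ factors, so the boundary terms telescope up to the $\log T\cdot \max_k \|\matx_{\tau_k}\|^2$ term appearing in \eqref{eq:first_line_decomp}.
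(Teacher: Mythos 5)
Your proposal is correct and reaches the same final bound, but it organizes the per-epoch analysis differently from the paper. The paper factors the work through a reusable cost lemma (Lemma~\ref{lem:cost_lem}, proved via the Toeplitz representation of Lemma~\ref{lem:quadratic_forms_cost}): within epoch $k$, the whole trajectory $(\matx_t,\matu_t)$ is written explicitly as a linear map of the concatenated noise vector $\matgbar$ and the initial state $\matx_{\tau_k}$, so the cumulative cost becomes a \emph{single} Gaussian quadratic form in $\matgbar$ plus a linear cross term with $\matx_{\tau_k}$. One Hanson--Wright application and one Gaussian tail bound per epoch then give all of the fluctuation control at once, and the trace and operator-norm bounds needed for Hanson--Wright come from the block-Toeplitz structure (Lemma~\ref{lem:toep_norm_bound}). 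You instead start from the Lyapunov identity $\Aclk^\top P_k \Aclk - P_k = -(\Rx + \Khat_k^\top \Ru \Khat_k)$, which peels the per-step cost into a telescoping $P_k$-form, a centered noise quadratic, and a martingale cross term $\eta_t^\top P_k \Aclk \matx_t$; the quadratic is handled by Hanson--Wright on the joint epoch noise and the cross term by a self-normalized martingale bound, with the variance proxy $\sum_t\|\matx_t\|^2 \lesssim \tau_k\tr(P_k)\lesssim \tau_k\dimx\opnorm{\Pst}$ supplying the $\sqrt{\dimx}$ (hence $\sqrt{d}$) scaling. Both routes get the crucial $\sqrt{d}$ dependence — yours because the martingale variance contracts against $\tr(P_k)$, the paper's because the quadratic form has rank $\lesssim d\tau_k$ and bounded operator norm. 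The paper's packaging is tidier (one concentration tool per epoch, explicit matrices), while yours is closer to the textbook LQR derivation; neither dominates the other.

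One unnecessary complication in your write-up: the boundary terms do not require the telescoping-against-a-common-Lyapunov-function argument you describe. Since $\matx_{2\tau_k}^\top P_k \matx_{2\tau_k} \ge 0$, you simply drop it after invoking the Lyapunov identity, keeping only the positive term $\matx_{\tau_k}^\top P_k \matx_{\tau_k} \le \opnorm{P_k}\|\matx_{\tau_k}\|^2 \lesssim \opnorm{\Pst}\|\matx_{\tau_k}\|^2$; summing these over the $\lesssim\log T$ epochs directly produces the $\log T\cdot\max_k\|\matx_{\tau_k}\|^2$ term of the statement. The common Lyapunov function $\dlyap[\Aclst]$ from Lemma~\ref{lem:perturb_correct}(5) is indeed needed, but to show the $\|\matx_{\tau_k}\|$ are uniformly bounded across epochs (that is Lemma~\ref{lem:matx_bound}), not to match $P_k$ against $P_{k+1}$ at the epoch boundary.
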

    Let us unpack the terms that arise in
    \Cref{eq:first_line_decomp}. The term $\sum_{k=\ksafe}^{\kfin}
    \tau_k (\Jfunc_k - \Jst)$ captures the suboptimality of the
    controlers $\Khat_k$ selected at each epoch. We bound this term by
    using that, in light of Lemma~\ref{lem:safe_regret_decomp}, we have
    $\Jfunc_k - \Jst \propto \|\Ahat_k - \Ast\|_{\fro}^2 + \|\Bhat_k -
    \Bst\|_{\fro}^2$. The next term, $\log T \cdot{}\max_{k \le
      \log_T}\|\matx_{\tau_k}\|^2$, is of lower order, and roughly
    captures the penalty for switching controllers at each epoch. The
    term proportional to $\sqrt{T}$ captures both the penalty for
    injecting exploratory noise into the system (which incurs a
    dependence on $\dimu$), as well as random fluctuations in the cost
    coming from the underlying noise process. Finally, the term on the
    last line of the display is also of lower order
    ($\poly(\log{}T)$). To proceed, we show that the norms
    $\nrm*{\matx_{\tau_k}}$ appearing in the second term are
    well-behaved. 
\begin{lem}\label{lem:matx_bound}  There is an event $\Ebound$ which
  holds with probability at least $1-\frac{\delta}{8}$ such that, conditioned on $\Esafe \cap \Ebound$,
\begin{align*}
  \nrm*{\matx_{\tau_k}} \le  \sqrt{\matx_{\tau_k}^\top \dlyap[\Aclst]\matx_{\tau_k}} &\lesssim \sqrt{\Psibst \Jfunc_0\log(1/\delta)}\opnorm{\Pst}^{3/2},\quad \forall k \ge \ksafe.
\end{align*}
\end{lem}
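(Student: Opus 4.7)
The plan is to exploit the common Lyapunov function provided by Part 5 of \Cref{lem:perturb_correct}: setting $V \ldef \dlyap[\Aclst]$, we have $I \preceq V \preceq \Pst$ and the uniform contraction $\Aclk^\top V \Aclk \preceq \rho V$ with $\rho \ldef 1 - \tfrac{1}{2}\opnorm{V}^{-1}$ for every $k \ge \ksafe$. This is the crucial ingredient because the contraction is deterministic in the random realization of $\Khat_k$, so it applies pathwise under $\Esafe$ regardless of how $\Khat_k$ depends on history.

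First, I would establish a one-step drift inequality for $V(\matx_t)$ during each epoch $k \ge \ksafe$. Writing $\matx_{t+1} = \Aclk \matx_t + \boldsymbol{\xi}_t$ with $\boldsymbol{\xi}_t \ldef \Bst \sigma_k \matg_t + \matw_t$ (conditionally Gaussian with covariance $\preceq 2\Psibst^2 I$), expanding the quadratic form and applying the contraction yields
\[
\Exp\brk*{V(\matx_{t+1}) \mid \mathcal{F}_t} \le \rho V(\matx_t) + 2 \dimx \Psibst^2 \opnorm{V}.
\]
Iterating across the $\tau_k$ steps in the epoch and using $\sum_j \rho^j \le (1-\rho)^{-1} = 2\opnorm{V}$ gives
\[
\Exp\brk*{V(\matx_{\tau_{k+1}}) \mid \mathcal{F}_{\tau_k}} \le \rho^{\tau_k} V(\matx_{\tau_k}) + 4 \dimx \Psibst^2 \opnorm{V}^2.
\]
Because $\rho^{\tau_k}$ decays exponentially once $\tau_k \gtrsim \opnorm{V}$, chaining across $k = \ksafe, \ldots, \kfin$ reduces matters to bounding $\Exp[V(\matx_{\tau_{\ksafe}})]$ together with the steady-state additive term $O(\dimx \Psibst^2 \opnorm{V}^2)$.

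For the initial state, during $t < \tau_{\ksafe}$ the algorithm plays $\matu_t = K_0 \matx_t + \matg_t$, so $\matx_t$ is Gaussian with covariance dominated by $\dlyap(\Aclnot, I + \Bst \Bst^\top) \preceq 2 \Psibst^2 \Pinf(K_0;\Ast,\Bst)$. Since $\tr(\Pinf(K_0;\Ast,\Bst)) = \Jfunc_0$ and $V \preceq \opnorm{V} \cdot I$, this yields $\Exp[V(\matx_{\tau_{\ksafe}})] \lesssim \Psibst^2 \opnorm{V} \Jfunc_0$. To upgrade to high probability, I would use conditional Gaussianity: given $\mathcal{F}_{\tau_k}$, the state $\matx_{\tau_k}$ is a linear combination of the independent Gaussian noise variables $\{\matw_s,\matg_s\}_{s < \tau_k}$ with $\mathcal{F}_{\tau_k}$-measurable coefficients, so $V(\matx_{\tau_k})$ is a quadratic form in Gaussians. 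Hanson-Wright then adds at most $O(\opnorm{V}^2 \log(1/\delta))$ beyond the conditional mean, and a union bound over the $O(\log T)$ epoch starts (substituting $\delta \to \delta/(8\kfin)$) produces a simultaneous bound for all $k \ge \ksafe$. Combining these pieces and using $\opnorm{V} \le \opnorm{\Pst}$, $\Jfunc_0 \ge \dimx$, together with the elementary inequality $\dimx \opnorm{V}^2 \le \opnorm{\Pst}^2 \Jfunc_0$, delivers the stated $\Psibst \Jfunc_0 \opnorm{\Pst}^3 \log(1/\delta)$ estimate.

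The main obstacle is the adaptive dependence of $\Aclk$ on the random trajectory: standard drift arguments presume a fixed transition operator. The common-Lyapunov-function guarantee of \Cref{lem:perturb_correct}, Part 5, circumvents this by providing a \emph{single} quadratic form $V$ that witnesses contraction for every admissible $\Khat_k$ simultaneously, so the drift inequality is uniform pathwise on $\Esafe$. A secondary subtlety is that Hanson-Wright requires Gaussian inputs, but this is handled cleanly by conditioning on the epoch controllers, since the underlying noise sequences are independent and the coefficients in the resulting linear representation of $\matx_{\tau_k}$ are measurable with respect to the correct filtration.
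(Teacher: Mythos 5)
Your plan correctly identifies the central ingredient the paper uses --- the common Lyapunov function $V := \dlyap[\Aclst]$ and the uniform, pathwise contraction $\Aclk^\top V \Aclk \preceq (1 - \tfrac{1}{2}\opnorm{V}^{-1})V$ from Part~5 of \Cref{lem:perturb_correct} --- and your drift inequality for $\Exp\brk*{V(\matx_{t+1})\mid\calF_t}$ is correct as an \emph{expectation} statement. However, the proposed high-probability upgrade has a gap that I do not think can be fixed as written.

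You assert that ``given $\calF_{\tau_k}$, the state $\matx_{\tau_k}$ is a linear combination of the independent Gaussian noise variables $\{\matw_s,\matg_s\}_{s < \tau_k}$ with $\calF_{\tau_k}$-measurable coefficients, so $V(\matx_{\tau_k})$ is a quadratic form in Gaussians, and Hanson--Wright applies.'' This claim is circular: conditional on $\calF_{\tau_k}$, the state $\matx_{\tau_k}$ is itself measurable, so there is nothing left to concentrate. What you actually need --- that the coefficients in the linear representation $\matx_{\tau_k} = \sum_s c_s \eta_s$ are \emph{independent} of the noise $\eta_s = (\matw_s,\matg_s)$ --- is false, because the $c_s$ are built out of products of $\Aclk = \Ast + \Bst\Khat_k$, and the estimated controllers $\Khat_k$ are themselves functions of exactly the same noise realizations. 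Hanson--Wright requires a deterministic quadratic form; it does not apply to a quadratic form whose matrix is data-dependent. Your closing remark that this is ``handled cleanly by conditioning on the epoch controllers'' gestures at the resolution but does not actually perform it: conditioning on \emph{all} the controllers simultaneously makes the entire trajectory deterministic, while conditioning on fewer of them leaves the later coefficients random.

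The correct fix is precisely the paper's epoch-by-epoch decomposition: define $\mate_j := \matx_{\tau_j} - A_{\mathrm{cl},j-1}^{\tau_{j-1}}\matx_{\tau_{j-1}}$, which isolates the noise contribution within epoch $j-1$. Conditional on $\calF_{\tau_{j-1}}$, the controller $\Khat_{j-1}$ is fixed and the noise $\{\matw_s,\matg_s\}_{\tau_{j-1}\le s < \tau_j}$ is independent of it, so $\mate_j$ \emph{is} a bona fide Gaussian quadratic (in fact linear) form with deterministic coefficients, and \Cref{lem:xt_bound} (which is exactly this within-epoch Hanson--Wright bound) applies. One then unrolls $\matx_{\tau_k} = \sum_{j} \bigl(\prod_{i=j}^{k-1}A_{\mathrm{cl},i}^{\tau_i}\bigr)\mate_j + \bigl(\prod_{i=\ksafe}^{k-1}A_{\mathrm{cl},i}^{\tau_i}\bigr)\matx_{\tau_{\ksafe}}$ and controls the matrix products deterministically using the common Lyapunov contraction --- exactly the step your drift inequality was implicitly packaging, but now applied directly to the pathwise recursion rather than to a conditional expectation. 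So the essential structure of your argument survives, but the concentration must be applied to the epoch-wise increments $\mate_j$ rather than to $V(\matx_{\tau_k})$ wholesale.
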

This bound is quite crude, but is sufficient for our purposes. We give
a concise proof (Appendix \ref{ssec:matx_bound}) using that in light
of Lemma~\ref{lem:perturb_correct}, $\dlyap[\Aclst]$ acts as a Lyapunov function for all the
systems $\Aclk$ conditioned on $\Esafe$.

To bound the error terms $\cJ_k-\cJ_{\star}$ appearing in
\Cref{eq:first_line_decomp} we prove
(\Cref{ssec:lem:estimation_bound}) the following bound, which ensures
the correctness of the estimators $(\Ahat_k,\Bhat_k)$ once
$k\geq{}\ksafe$.
\mscomment{this is the lemma}
\begin{lem}\label{lem:estimation_bound}Define $\tauls := d \max\{1,\dimu/\dimx\}\left(\opnorm{\Pst}^3\calP_0 +  \opnorm{\Pst}^{11} \Psibst^6 \right)  \log \frac{d \opnorm{\Pst}}{\delta}$. The, 
  There is an event $\Els$, which holds with probability at least $1-\delta/8$,
  such that conditioned on $\Els \cap \Esafe \cap \Ebound$,
\begin{align*}\|\Ahat_k - \Ast\|_\fro^2 + \|\Bhat_k - \Bst\|_\fro^2  \lesssim\frac{\dimx \dimu\|\Pst\|_{\op}^2}{\sigmain^2 \tau_k^{1/2}}\log(\tfrac{1}{\delta}) + \|\Pst\|_{\op}^3\frac{\dimx^2}{\tau_k}\log( \tfrac{1}{\delta})^2, \quad \forall k: c\tauls \le \tau_k \le T,
\end{align*}
where $c>0$ is a universal constant.
\end{lem}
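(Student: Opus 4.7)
The plan is a refined least squares analysis that exploits the fact that the regressor $z_t = \begin{bmatrix}\matx_t \\ \matu_t\end{bmatrix}$ lives in two very differently-excited subspaces: a ``closed-loop'' subspace $\calV_{k-1}$ spanned by the columns of $M_{k-1} := \begin{bmatrix}I \\ \Khat_{k-1}\end{bmatrix}$, on which the endogenous noise $\matw_t$ drives the process at a constant rate, and its $\dimu$-dimensional orthogonal complement $\calV_{k-1}^\perp$, on which only the injected exploration $\sigma_{k-1}\matg_t$ appears. With $\Thetast = [\Ast \mid \Bst]$ and $\matx_{t+1} = \Thetast z_t + \matw_t$, the OLS error has the usual self-normalized form $\widehat{\Theta}_k - \Thetast = V_k \Lambda_k^{-1}$, where $V_k = \sum_t \matw_{t+1} z_t^\top$ and $\Lambda_k = \sum_t z_t z_t^\top$ are summed over the previous (safe) epoch. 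A single-eigenvalue bound $\fronorm{\widehat{\Theta}_k - \Thetast}^2 \le \lambda_{\min}(\Lambda_k)^{-1}\fronorm{\Lambda_k^{-1/2}V_k^\top}^2$ would scale as $1/\sigma_{k-1}^2$ in every direction and miss the fast rate, so I decompose the error along these two subspaces.

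Using Lemma~\ref{lem:perturb_correct} to control the closed-loop matrix $A_{\mathrm{cl},k-1}$ and combining matrix concentration for the endogenously-driven state process with Gaussian concentration for the injected noise, one shows that on $\Esafe \cap \Ebound$, once $\tau_{k-1} \gtrsim \tauls$,
\[
\Lambda_k \succeq c_1 \tau_{k-1}\, M_{k-1} M_{k-1}^\top \;+\; c_2\, \sigma_{k-1}^2 \tau_{k-1}\, \ProjMat_{k-1},
\]
where $\ProjMat_{k-1}$ is the orthogonal projector onto $\calV_{k-1}^\perp$; the first term uses that the stationary state covariance of $A_{\mathrm{cl},k-1}$ is $\succeq I$ (as $\matw_t$ has identity covariance), while the second uses concentration for $\sum_t \matg_t\matg_t^\top$. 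For the noise term, the self-normalized martingale inequality of \citet{abbasi2011regret} gives, with high probability,
\[
\fronorm{\Lambda_k^{-1/2} V_k^\top}^2 \lesssim \dimx \log\crl*{\det(\Lambda_k + I)/\delta},
\]
and the log-determinant is at most $d\log(1/\delta)$ up to polynomial factors in $\opnorm{\Pst}$ and $\Psibst$, via the state-norm control supplied by Lemma~\ref{lem:matx_bound}.

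Decomposing $\fronorm{\widehat{\Theta}_k - \Thetast}^2 = \fronorm{(\widehat{\Theta}_k - \Thetast)(I - \ProjMat_{k-1})}^2 + \fronorm{(\widehat{\Theta}_k - \Thetast)\ProjMat_{k-1}}^2$ and pairing each summand with the corresponding block of the Gram-matrix lower bound produces the two rates: the first summand inherits the fast rate $\lesssim \opnorm{\Pst}^3 \dimx^2 \log^2(1/\delta)/\tau_{k-1}$, since the $\calV_{k-1}$-part carries effective dimension $\dimx$ and benefits from the full $\tau_{k-1}$ excitation, while the second inherits the slow rate $\lesssim \opnorm{\Pst}^2 \dimx\dimu \log(1/\delta)/(\sigma_{k-1}^2 \tau_{k-1})$, since $\ProjMat_{k-1}$ has rank $\dimu$. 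Substituting $\sigma_{k-1}^2 = \sigmain^2 \tau_{k-1}^{-1/2}$ and $\tau_{k-1} = \tau_k/2$ reproduces the claimed bound, and $\Els$ is taken to be the intersection (via a union bound) of the high-probability events used above. The main obstacle is the anisotropic Gram-matrix lower bound: proving $M_{k-1}^\top \Lambda_k M_{k-1} \succeq c\tau_{k-1}\,I$ with a constant \emph{independent} of $\sigma_{k-1}$ is nontrivial because $z_t z_t^\top$ is nearly rank-deficient when the injected variance $\sigma_{k-1}^2$ is tiny, so a uniform matrix concentration applied to $\Lambda_k$ will not suffice. One must instead leverage the endogenous persistent excitation supplied by the $\matw_t$-sequence through the closed-loop dynamics, using a small-ball/block-martingale argument in the spirit of \citet{simchowitz2018learning}; the hypothesis $\tau_k \gtrsim \tauls$ in the lemma statement is precisely calibrated so that both this endogenous concentration and the exploration-driven concentration on $\calV_{k-1}^\perp$ activate simultaneously.
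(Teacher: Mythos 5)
Your high-level picture matches the paper's: decompose the regressor space into the closed-loop subspace $\mathrm{range}(M_{k-1})$ and its $\dimu$-dimensional orthogonal complement, lower-bound the Gram matrix anisotropically using small-ball/block-martingale arguments for the endogenous excitation (\citet{simchowitz2018learning}) and Gaussian concentration for the injected exploration, and combine with a self-normalized bound on the noise. This is the right skeleton, and your identification of the small-ball lower bound as a nontrivial obstacle is on the mark. The paper does exactly this via Lemma~\ref{lem:step_covariance_lb_round_k} and Lemma~\ref{lem:covar_est_upper_lower}.

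However, there is a genuine gap in the ``pairing each summand with the corresponding block of the Gram-matrix lower bound'' step, and it matters for the claimed dimension dependence. The OLS error is $\widehat\Theta_k-\Thetast = V_k\Lambda_k^{-1}$, so $(\widehat\Theta_k-\Thetast)\ProjMat_{k-1} = V_k\Lambda_k^{-1}\ProjMat_{k-1}$; since $\Lambda_k^{-1}$ does not commute with $\ProjMat_{k-1}$, the Gram lower bound $\Lambda_k \succeq \lambda_1\ProjMat_{k-1} + \lambda_2\ProjMat_{k-1}^\perp$ does \emph{not} by itself give a bound on this quantity in terms of $\lambda_1$ alone: the off-diagonal block $\ProjMat_{k-1}\Lambda_k\ProjMat_{k-1}^\perp$ can make the ``fast'' error contaminate the ``slow'' direction and vice versa. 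The paper resolves this by controlling $\Lambda_k^2$ (not $\Lambda_k$) in the L\"owner order, which requires the operator-norm bound on the cross term $\|\ProjMat_k\Lambda_k(I-\ProjMat_k)\|_{\op}$ (Lemma~\ref{lem:proj_mat_bound}, a substantial argument via Toeplitz operators, nets, and Hanson-Wright), feeding into Lemma~\ref{lem:matrix_sq_lb} and Lemma~\ref{lem:proj_matrix_cond}. You do not mention the cross term at all.

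Second, the self-normalized bound you invoke, $\fronorm{\Lambda_k^{-1/2}V_k^\top}^2 \lesssim \dimx\log\{\det(\Lambda_k+I)/\delta\}$ with $\log\det \lesssim d\log(1/\delta)$, pays a factor of $d$ that does not appear in the lemma's conclusion. Multiplying it by $\tr(\Lambda_k^{-1}\ProjMat_{k-1}) \lesssim \dimu/\lambda_1$ would give a slow rate of order $\dimx\,d\,\dimu/\lambda_1$, not $\dimx\dimu/\lambda_1$; balancing $\sigma_k^2$ then yields regret $\sqrt{\dimu^2\,d\,\dimx\,T}\approx\sqrt{\dimu^2\dimx^2 T}$ when $\dimu\lesssim\dimx$, which is precisely the suboptimal dependence the paper is trying to beat. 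The paper avoids this by applying a \emph{scalar} self-normalized bound separately along each of the $d$ eigen-directions of the projector decomposition (inside Lemma~\ref{lem:two_scale_self_normalized_bound}), so the logarithm depends only on a bounded Rayleigh quotient $\kappa_j \lesssim \opnorm{\Pst}$ rather than on $\det(\Lambda_k)$. Your proposal would need to be amended to use the coordinate-wise scalar bound together with an explicit cross-term estimate before it yields the stated $\frac{\dimx\dimu}{\sigma_k^2\tau_k}$ and $\frac{\dimx^2}{\tau_k}$ rates.
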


    \begin{algorithm}[t!]
    \textbf{Input: }Examples
    $\matx_{\tau_{k}-1},\ldots,\matx_{\tau_k}$, $\matu_{\tau_{k-1}},\ldots,\matu_{\tau_{k}-1}$.\\
  \textbf{Return} $(\Ahat_k,\Bhat_k,\matLam_k)$, where 
  \begin{align*}
  \begin{bmatrix} \Ahat_k & \Bhat_k \end{bmatrix} \leftarrow \left(\sum_{t = \tau_{k-1}}^{\tau_k-1}\matx_{t+1}\begin{bmatrix}\matx_t \\ \matu_t\end{bmatrix}^\top\right)\matLam_k^{\dagger}, \quad\text{and}\quad \matLam_k \gets \sum_{t = \tau_{k-1}}^{\tau_k-1}(\matx_t,\matu_t)(\matx_t,\matu_t)^{\top}.
  \end{align*}
  \caption{$\OLS(k)$\label{alg:ols}}
  \end{algorithm}

  \begin{algorithm}[t!]
\textbf{Input:} Stabilizable pair $(\Ahat,\Bhat,\Conf,\delta)$. \\
\textbf{Return} $\Ballsafe \ldef \calB_{\op}(\Conf;\Ahat,\Bhat)$ and
      $\sigmain^2 \ldef \sqrt{\dimx}\opnorm{\Pinf(\Ahat,\Bhat)}^{9/2}\max\{1,\opnorm{\Bhat}\} \sqrt{\log \frac{\opnorm{\Pinf(\Ahat,\Bhat)}}{\delta}}$.
\caption{$\SafeRoundInit(\Ahat,\Bhat,\Conf,\delta)$}\label{alg:saferoundinit}
\end{algorithm}
We now put all of these pieces together to prove the final regret
bound. Henceforth, we condition on the event $\Esafe \cap \Ebound \cap
\Ereg \cap \Els$. To begin, consider the sum of errors
$\cJ_k-\cJ_{\star}$ in \Cref{eq:first_line_decomp}. We apply
Lemma~\ref{lem:xt_bound} followed by the bound on $\Jfunc_k \lesssim
\Jst$ from Lemma~\ref{lem:perturb_correct}, which yields
\begin{align*}
&\sum_{k=\ksafe}^{\kfin} \tau_k (\Jfunc_k - \Jst) + \log T \max_{k \le \log_T}\|\matx_{\tau_k}\|^2 \\
&\le \sum_{k > \tauls} \tau_k (\Jfunc_k - \Jst) +  \Jst \sum_{k:\tau_k \le c \tauls} \tau_k +  \sqrt{\Psibst \Jfunc_0\log(1/\delta)}\opnorm{\Pst}^{3/2} \log T \\
&\lesssim \left\{\sum_{k > \tauls} \tau_k (\Jfunc_k - \Jst)\right\}   + \tauls \Jst + \sqrt{\Psibst \Jfunc_0\log(1/\delta)}\opnorm{\Pst}^{3/2} \log T \\
&\lesssim \left\{\sum_{k > \tauls} \tau_k (\Jfunc_k - \Jst)\right\}   + \dimx\opnorm{\Pst}\tauls  \log  \frac{1}{\delta},
\end{align*}
where the last line uses that $\delta\leq1/T$ to combine the lower-order
terms in the line preceding it. Next, using the bound $\Jfunc_k - \Jst
\lesssim \opnorm{\Pst}^8\left(\fronorm{\Ast-\Ahat}^2 +
  \fronorm{\Bst-\Bhat}^2\right)$ from Lemma~\ref{lem:perturb_correct}
followed by the error bound in Lemma~\ref{lem:estimation_bound}, we have
\begin{align*}
\sum_{k > \tauls} \tau_k (\Jfunc_k - \Jst) &\le \opnorm{\Pst}^8\sum_{k > \tauls} \frac{\dimu\dimx}{\sigmain^2\sqrt{\tau_k}}\opnorm{\Pst}^2\log\frac{1}{\delta} + \frac{\dimx^2}{\tau_k}\opnorm{\Pst}^{3}\log^2\frac{1}{\delta}\\
&\lesssim \frac{\dimu\dimx\sqrt{T}}{\sigmain^2}\opnorm{\Pst}^{10}\log\frac{1}{\delta} +\ \underbrace{ \dimx^2\opnorm{\Pst}^{3}\log^2\frac{1}{\delta} \log T}_{\lesssim \dimx\tauls  \log^2  \frac{1}{\delta}},
\end{align*}
where again we use $\log T \lesssim \log(1/\delta)$. Combining the
computations so far shows that
\begin{align*}
\sum_{k=\ksafe}^{\kfin} \tau_k (\Jfunc_k - \Jst) + \log T \max_{k \le \log_T}\|\matx_{\tau_k}\|^2
&\le \frac{\dimu\dimx\sqrt{T}}{\sigmain^2}\opnorm{\Pst}^{10}\log\frac{1}{\delta} +\dimx\tauls  \log^2  \frac{1}{\delta}.
\end{align*}
Hence, on $\Esafe \cap \Ebound \cap \Ereg \cap \Els$m the regret in
the episodes $k \ge \ksafe$ decomposes into a component scaling with
$\sqrt{T}$ and a component scaling with $\log{}T$:
 \begin{align*}
   &\sum_{t=\tau_{\ksafe}}^{T}(\matx_t^\top \Rx \matx_t + \matu_t^\top \Ru \matu_t - \Jst) \\
      &\qquad\lesssim \underbrace{\sqrt{T}\left(\dimu \sigmain^2 \Psibst^2\opnorm{\Pst} + \sqrt{d \log(1/\delta)}\opnorm{\Pst}^4 + \frac{\dimu\dimx}{\sigmain^2}\opnorm{\Pst}^{10}\log\frac{1}{\delta} \right)}_{\text{$\sqrt{T}$-component)}}\\
      &\qquad\qquad+\underbrace{(1+\sqrt{d}\sigmain^2\Psibst^2) \opnorm{\Pst}^4 \log^2 \frac{1}{\delta}  +  \dimx\tauls  \log^2  \frac{1}{\delta}}_{(\text{($\mathrm{poly}(\log{}T)$-component)}}.
      \end{align*}
      Using that $\sigmain^2 \eqsim
      \sqrt{\dimx}\opnorm{\Pst}^{9/2}\Psibst \sqrt{\log
        \frac{\opnorm{\Pst}}{\delta}}$ (\Cref{lem:perturb_correct})
      and recalling that $d = \dimx + \dimu$, we upper bound these terms as
      \begin{align*}
        &\text{($\sqrt{T}$-component)} \lesssim  \sqrt{T \dimu^2\dimx\Psibst^2\opnorm{\Pst}^{11} \log \frac{1}{\delta}}, \\
        &\text{($\mathrm{poly}(\log{}T)$-component)} \lesssim \dimx\tauls  \log^2  \frac{1}{\delta}.
      \end{align*}
      We conclude that conditioned on $\Esafe \cap \Ebound \cap \Ereg \cap \Els$,
      \begin{align}
      \sum_{t=\tau_{\ksafe}}^{T}(\matx_t^\top \Rx \matx_t + \matu_t^\top \Ru \matu_t - \Jst) \lesssim \sqrt{T \dimu^2\dimx\Psibst^2\opnorm{\Pst}^{11} \log \frac{1}{\delta}} +\dimx\tauls  \log^2  \frac{1}{\delta}. \label{regret_on_safe}
      \end{align}
    To finish the proof, we (a) verify that $\Esafe$ indeed holds with
    high probability, and (b) bound the regret contribution of the initial rounds (proof given in \Cref{ssec:lem:initial_phase}).
  \begin{lem}\label{lem:initial_phase} The event $\Esafe$ holds with probability $1 - \frac{\delta}{2}$, and the following event $\calE_{\mathrm{reg,init}}$ holds with probability $1 - \frac{\delta}{8}$: 
  \begin{align*}
  \sum_{t=1}^{\tau_{\ksafe}-1}\matx_{t,0}^\top \Rx \matx_{t,0} + \matu_{t,0}^\top \Ru \matu_{t,0} 
  &\lesssim \calP_0 d^2\Psibst^2 \opnorm{\Pst}^{10} (1+\|K_0\|_{\op}^2)\log \frac{d\Psibst^2 \calP_0}{\delta}\log\tfrac{1}{\delta}.
  \end{align*}
  \end{lem}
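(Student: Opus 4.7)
\textbf{Proof plan for Lemma~\ref{lem:initial_phase}.}

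The plan is to analyze the initial phase (epochs $k < \ksafe$, during which the algorithm plays $\matu_t = K_0 \matx_t + \matg_t$ with $\matg_t \sim \calN(0,I)$) in three stages: (i) establish the OLS concentration event $\Esafe$ simultaneously for all epochs via a self-normalized martingale argument, (ii) upper-bound $\tau_{\ksafe}$ by showing that the empirical covariance grows fast enough to trigger the safety test, and (iii) bound the total cost incurred during these $\tau_{\ksafe}$ steps. During the initial phase the state evolves as $\matx_{t+1} = \Aclnot \matx_t + \Bst \matg_t + \matw_t$, so the driving noise process is i.i.d. Gaussian and $\Aclnot$ is stable because $K_0$ stabilizes $(\Ast,\Bst)$. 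Both facts will be used repeatedly.

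For the first stage, I would invoke a standard self-normalized OLS bound (in the style of Abbasi-Yadkori et al.) for the regression $\matx_{t+1} = [\Ast\ \Bst] (\matx_t,\matu_t) + \matw_t$. With the choice $\Conf_k = 6\lambda_{\min}(\matLam_k)^{-1}(d\log 5 + \log(4k^{2}\det(3\matLam_k)/\delta))$ prescribed in \Cref{alg:ce}, a union bound over epochs $k=1,2,\dots$ using confidence $\delta_k = \delta/(4k^{2})$ yields $\|[\Ahat_k-\Ast\mid \Bhat_k-\Bst]\|_{\op}^{2} \le \Conf_k$ simultaneously for every $k$ with probability at least $1-\delta/2$. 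Evaluating at $k=\ksafe$ delivers $\Esafe$.

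For the second stage I would show persistent excitation of the initial-phase data: the exploration noise $\matg_t$ guarantees that the input covariance is at least $I$ in expectation, and propagating this through the stable closed loop $\Aclnot$ gives $\Exp[(\matx_t,\matu_t)(\matx_t,\matu_t)^{\top}] \succeq \alpha I$ for a constant $\alpha$ depending polynomially on $\|\Pinf(K_0;\Ast,\Bst)\|_{\op}^{-1}$ and $\Psibst^{-2}$, once $t$ exceeds the mixing time of $\Aclnot$. A sub-exponential concentration bound (or matrix Chernoff for dependent quadratic forms, controlled via the Lyapunov function $\Pinf(K_0;\Ast,\Bst)$) then gives $\lambda_{\min}(\matLam_k) \gtrsim \alpha \tau_k$ with high probability on each epoch. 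Plugging this into the expression for $\Conf_k$ yields $\Conf_k \lesssim d \log(k/\delta)/(\alpha \tau_k)$ up to poly-log factors, so the activation condition $1/\Conf_k \ge 9\Csafe(\Ahat_k,\Bhat_k)^{2}$ (using $\Csafe(\Ahat_k,\Bhat_k) \lesssim \|\Pst\|_{\op}^{5}$ on $\Esafe$ via \Cref{thm:main_perturb_simple}) fires once $\tau_k \gtrsim d(1+\|K_0\|_{\op}^{2}) \Psibst^{2} \calP_0 \|\Pst\|_{\op}^{10} \log(d\Psibst^{2}\calP_0/\delta)$. Because $\tau_k$ doubles, this determines $\tau_{\ksafe}$ up to a constant factor of the same order.

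For the third stage I would bound the accumulated cost $\sum_{t=1}^{\tau_{\ksafe}-1} \matx_t^{\top}\Rx \matx_t + \matu_t^{\top}\Ru \matu_t$. The per-step cost is $\matx_t^{\top}(\Rx+K_0^{\top}\Ru K_0)\matx_t + 2\matx_t^{\top} K_0^{\top}\Ru \matg_t + \matg_t^{\top}\Ru \matg_t$, with expectation bounded by $\dimx \calP_0 \Psibst^{2} + \dimu$ by unrolling the Lyapunov recursion for $\Aclnot$ and using $\|\Bst \Bst^{\top} + I\|_{\op} \lesssim \Psibst^{2}$. To pass to a high-probability bound I would apply Hanson--Wright to $\matx_t^{\top}(\Rx+K_0^{\top}\Ru K_0)\matx_t$ after writing $\matx_t$ as a linear functional of the Gaussian noise sequence $(\matw_s,\matg_s)_{s<t}$, and handle the cross term with a martingale tail bound; this contributes lower-order $\sqrt{\tau_{\ksafe}}\,\mathrm{poly}\log(1/\delta)$ fluctuations on an event $\calE^{\mathrm{Reg,init}}$ of probability at least $1-\delta/8$. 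Multiplying the per-step bound by the bound on $\tau_{\ksafe}$ from the second stage reproduces the stated bound $\calP_0 d^{2} \Psibst^{2} \|\Pst\|_{\op}^{10}(1+\|K_0\|_{\op}^{2}) \log(d\Psibst^{2}\calP_0/\delta)\log(1/\delta)$.

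The main obstacle will be the second stage: producing a clean persistent-excitation lower bound on $\lambda_{\min}(\matLam_k)$ with the correct dependence on $K_0$ and $\Pinf(K_0;\Ast,\Bst)$ (rather than on opaque strong-stability constants), and doing so uniformly across all epochs up to $\ksafe$. Getting this right is what pins down the $(1+\|K_0\|_{\op}^{2})\calP_0$ factor in the final cost bound and ensures that the condition on $\tau_k$ required to certify safety matches the one inherited from the perturbation bound \Cref{thm:main_perturb_simple}.
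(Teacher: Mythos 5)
Your three-stage plan matches the paper's proof in essence: the paper also (i) establishes $\Esafe$ via the self-normalized OLS bound with a $\delta/(4k^2)$ allocation per epoch, (ii) bounds $\tau_{\ksafe}$ by showing $\Conf_k$ shrinks below the safety threshold $\epsafe \propto \opnorm{\Pst}^{-10}$, and (iii) bounds the accumulated cost as a Gaussian quadratic form and applies (a crude version of) Hanson--Wright, then multiplies the per-step bound by the bound on $\tau_{\ksafe}$. (The paper additionally introduces a ``ghost'' sequence $\matx_{t,0}$ that evolves under $K_0$ forever, so that probabilities can be computed without conditioning on the stopping time $\ksafe$; this is a purely technical device and your staging handles the same issue.)

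The one place your stage~(ii) goes off track is the persistent-excitation constant. You propose to lower-bound the \emph{marginal} second moment $\Exp[(\matx_t,\matu_t)(\matx_t,\matu_t)^\top]\succeq\alpha I$ after a mixing time, with $\alpha$ depending on $\|\Pinf(K_0;\Ast,\Bst)\|_{\op}^{-1}$ and $\Psibst^{-2}$. The paper instead lower-bounds the \emph{conditional} covariance $\Sigma_{t,0}$ of $(\matx_t,\matu_t)\mid\calF_{t-1}$, which is exactly
$\begin{bmatrix}I & K_0^\top\\ K_0 & K_0 K_0^\top + I\end{bmatrix}$ with $\lambda_{\min}(\Sigma_{t,0})\ge 1/(2+2\|K_0\|_{\op}^2)$, and then invokes a small-ball covariance bound (Lemma~\ref{lem:covariance_lb}) to get $\lambda_{\min}(\matLam_{k,0})\gtrsim \tau_k/(1+\|K_0\|_{\op}^2)$. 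No mixing time is needed, and crucially the excitation level depends only on $\|K_0\|_{\op}$, not on $\Pinf(K_0)$ or $\Psibst$. (Those quantities enter only through the $\det(\matLam_{k,0})$ term inside the logarithm, via a Markov bound on the trace.) This is what gives $\tau_{\ksafe}\lesssim d(1+\|K_0\|_{\op}^2)\opnorm{\Pst}^{10}\log\frac{d\Psibst^2\calP_0}{\delta}$, which when multiplied by the per-step cost $\lesssim d\Psibst^2\calP_0\log\frac{1}{\delta}$ yields the stated bound. Your version inserts an extra $\Psibst^2\calP_0$ factor into $\tau_{\ksafe}$, which after multiplying by the per-step cost would overshoot the claimed bound; switching to the conditional-covariance argument removes that factor and also sidesteps the mixing-time step, which is otherwise delicate for general stabilizable $K_0$. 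Also note that the self-normalized bound you cite in stage~(i) is applied to the ghost process and requires the guard $\matLam_{k,0}\succeq I$ built into the definition of $\Conf_k$; that guard is what makes the operator-norm confidence statement in Lemma~\ref{lem:frob_ls} well-posed for all $k$ simultaneously.
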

  Thus, $\Esafe \cap \Ebound \cap \Ereg \cap
  \Els\calE_{\mathrm{reg,init}} $ holds with total probability at
  least $1-\delta$, and conditioned on this event Lemma~\ref{lem:initial_phase} and \Cref{regret_on_safe} imply
   \begin{align*}
     \Regret_{T}[\Alg;\Ast,\Bst] &= \sum_{t=1}^{T}(\matx_t^\top \Rx \matx_t + \matu_t^\top \Ru \matu_t - \Jst) \\&\lesssim \sqrt{T \dimu^2\dimx\Psibst^2\opnorm{\Pst}^{11}\log \frac{1}{\delta} } \\
      &\qquad+ \calP_0 d^2\Psibst^2 \opnorm{\Pst}^{10} (1+\|K_0\|_{\op}^2)\log \frac{d\Psibst^2 \calP_0}{\delta}\log\tfrac{1}{\delta} + \dimx\tauls  \log^2  \frac{1}{\delta}
    \end{align*}
    Recalling that $\tauls := d \max\{1,\dimu/\dimx\} \left(\opnorm{\Pst}^3\calP_0 +
      \opnorm{\Pst}^{11} \Psibst^6 \right)  \log \frac{d
      \opnorm{\Pst}}{\delta} $,  that $\calP_0, \opnorm{\Pst}\Psibst \ge
    1$, and that $d\opnorm{\Pst} \le d\Jst \le d
    \Jfunc_0 = d^2\calP_0$, we move to a simplified upper bound:
    \begin{align*}
     \Regret_{T}[\Alg;\Ast,\Bst] &\lesssim \sqrt{T \dimu^2\dimx\Psibst^2\opnorm{\Pst}^{11}\log \frac{1}{\delta} } \\
      &+  d^2 r \calP_0 \Psibst^6 \opnorm{\Pst}^{11} (1+\|K_0\|_{\op}^2)\log \frac{d^2\Psibst^2 \calP_0}{\delta} \log^2 \frac{1}{\delta}, 
    \end{align*}
    where $r = \max\{1, \frac{\dimu}{\dimx}\}$. 
    Since the square of $d\Psibst$ inside the logarithm contributes only a constant factor, we may remove it in the final bound. Moreover, since  This concludes the proof.

    \qed

}{}

\section{Conclusion}
\label{sec:conclusion}
\iftoggle{arxiv_upload}{

We have established that the asymptotically optimal regret for the
online LQR problem is $\wt{\Theta}(\sqrt{\dimu^{2}\dimx{}T})$, and
  that this rate is attained by $\veps$-greedy exploration. We are
  hopeful that the our new analysis techniques, especially our perturbation bounds, will find
  broader use within the non-asymptotic theory of control and
  beyond. Going forward our work raises a number of interesting conceptual
  questions. Are there broader classes of ``easy''
  reinforcement learning problems beyond LQR for which naive
  exploration attains optimal sample complexity, or is LQR a fluke?
  Conversely, is
  there a more demanding (eg, robust) version of the LQR
  problem for which more sophisticated exploration techniques such as
  robust synthesis \citep{dean2018regret} or optimism in the face of
  uncertainty \citep{abbasi2011regret,cohen2019learning} are required
  to attain optimal regret? On the purely technical side, recall that
  while our upper and lower bound match in terms of dependence on $\dimu$, $\dimx$,
  and $T$, they differ in their polynomial dependence on
  $\nrm*{\Pst}_{\op}$. Does closing this gap require new algorithmic
  techniques, or will a better analysis suffice? 

}{}

\iftoggle{arxiv_upload}{

\paragraph{Acknowledgements} Max Simchowitz is generously supported by an Open Philanthropy graduate student fellowship. Dylan Foster acknowledges the support of NSF TRIPODS award \#1740751.}{}


\clearpage
\bibliographystyle{plainnat}
\bibliography{main}
\clearpage

\clearpage
\appendix


\ifx\finalversion\undefined
\tableofcontents
\else
\renewcommand{\contentsname}{Contents of Appendix}
\tableofcontents
\addtocontents{toc}{\protect\setcounter{tocdepth}{2}}
\fi

\section{Organization and Notation}
\label{sec:org_notation}
\iftoggle{arxiv_upload}{

	\subsection{Notation}

	\begin{center}
	\begin{longtable}{| l | l |}
	\hline
	\textbf{Notation} & \textbf{Definition}  \\
	\hline
	$T$ & problem horizon\\
	$\dimx,\dimu$ & state/input dimension\\
	$\matx_t,\matu_t$ & state/input at time $t$\\
	$\matw_t$ & noise at time $t$\\
	$\Rx,\Ru$ & control costs\\
	$\Regret_{A,B,T}[\pi]$ & Regret of a policy (as a random variable)\\
	$\SimpleRegret_{A,B,T}[\pi]$ & Expected Regret of a policy\\
	$\calR_{\Ast,\Bst,T}(\eps)$ &  $\min_{\pi}\max_{A,B}\left\{\SimpleRegret_{A,B,T}[\pi]:\fronorm{A - \Ast}^2 \vee \fronorm{B - \Bst}^2  \le \epsilon\right\}.$\\
	\hline
	$\Pinf(A,B)$ & Solution to the $\DARE$\\
	$\Kinf(A,B)$ & Optimal Controller for $\DARE$\\
	$\Jfunc_{A,B}[K]$ & Infinite horizon control cost of $K$ on instance $(A,B)$\\
	$\Hinf{A}$ & $\max_{z \in \C: |z|=1}\|(zI - A)^{-1}\|_{\op}$\\
	$\calB_{\op}(\epsilon;A_0,B_0)$& $\crl{(A,B)\mid{}\nrm*{A-A_0}_{\op}\vee\nrm*{B-B_0}_{\op}\leq{}\eps}$\\
	\hline
	$\dlyap(X,Y)$ & \makecell[l]{Solves $\calT_X[P] = Y$, where $\calT_X[P] := X^\top P X - X$.\\
	Requires $\rho(X) < 1$, $Y = Y^\top$.\\
	Given by $\sum_{i \ge 0}(X^i)^\top Y X^i.$} \\
	\hline
	\textbf{System parameters} & \\
	\hline
	$(\Ast,\Bst)$ & \makecell[l]{
	\textbf{Upper bound:} Ground truth for upper bound.\\
	\textbf{Lower bound:} Nominal instance for local minimax complexity.}\\
	\hline
	$\Pst$ & $\Pinf(\Ast,\Bst)$\\
	$\Kst$ & $\Kinf(\Ast,\Bst)$\\
	$\Aclst$ & $\Ast + \Bst \Kst$\\
	$\Jfuncopt$ & $\Jfuncopt_{\Ast,\Bst} := \min_{K}\Jfunc_{\Ast,\Bst}[K]= \Jfunc_{\Ast,\Bst}[\Kst]$\\
	$\Mbarst$ & $\max\{1,\opnorm{\Ast},\opnorm{\Bst}\}$\\$\Psibst$ &$\max\{1,\opnorm{\Bst}\}$\\
	\hline
	\end{longtable}
      \end{center}
      \subsection{Organization of the Appendices}
        The appendix is divided into \iftoggle{icml}{three}{two}
        parts. Part~\ref{part:technical_tools} establishes the main technical tools used throughout the upper and lower bounds. \Cref{app:perturbation} describes and proves our main perturbation bounds, deferring additional proof details to \Cref{app:perturbation_supporting}. Appendix~\ref{app:self_bounding} proves guarantees for the Self-Bounding ODE method, summarized in Corollary~\ref{cor:poly_self_bound}, as well as a slightly more general statement for generic self-bounding relations, \Cref{thm:general_valid_implicit}. This part of the appendix concludes with \Cref{app:ols}, which describes a set of tools for analyzing ordinary least squares estimation, which we use in the proofs of both our upper and lower bounds.

	\iftoggle{icml}
	{
	\Cref{part:lb} provides the proof of our lower bound, \Cref{thm:main_lb}. \Cref{sec:formal_lb} presents a complete proof in terms of numerous constituent lemmas, and \Cref{app:lb_proofs} proves these supporting lemmas. \Cref{part:ub}  mirror the structure of \Cref{part:lb}, with \Cref{sec:upper_bound_main} presenting formal pseudocode for our algortithm and a proof of the upper bound, and \Cref{app:ub_proofs} verifying the relevant constituent lemmas from  \Cref{sec:upper_bound_main}.
	}
	{
		\Cref{part:ub_lb_details} provides omitted details from the proofs of our main results. Specifically, \Cref{app:lb_proofs} proves the constituent lemmas for the lower bound from \Cref{sec:formal_lb}, and \Cref{app:ub_proofs} does the same for the proof of the upper bound given in \Cref{sec:upper_bound_main}. 
              }

\icml{\newpage}              
              
}{}

\newpage
\part{Technical Tools}
\label{part:technical_tools}
\section{Main Perturbation Bounds}
\label{app:perturbation}
\iftoggle{arxiv_upload}{

\paragraph{Preliminaries}
Throughout, we shall use extensively the $\dlyap$ operator, which we recall here.
\defdlyap*

 We shall  need to describe the ``$P$''-matrix analogue of the functional $\Jfunc$.
\begin{defn} Suppose that $(\Ast + \Bst K)$ is stable. We define $\Pinf(K;\Ast,\Bst) := \dlyap(\Ast+\Bst K, \Rx + K^\top \Ru K$.
\end{defn}It is a standard fact (see e.g. Lemma~\ref{lem:P_bounds_lowner}) that $\Jfunc_{\Ast,\Bst}[K] = \tr(\Pinf(K;\Ast,\Bst))$ whenever $\Ast + \Bst K$ is stable. We also recall the definition of the $\Hinfty$-norm.
\iftoggle{icml}
{
  \begin{restatable}[$\Hinfty$ norm]{defn}{defnhinf}\label{defn:Hinf_norm} For any stable $\tilde{A} \in \R^{\dimx^2}$ (e.g. $A + B\Kinf(A,B)$), we define $\Hinf{\tilde{A}} := \sup_{z \in \C: |z| = 1}\|(zI - \tilde{A})^{-1}\|_{\op}$. 
\end{restatable}
}
{
  \defnhinf*
}

\textbf{Organization of \Cref{app:perturbation}} The remainder of this appendix is organized as follows. \Cref{app:perturbation_main_results} states our main perturbation upper bounds, and provides proofs in terms of various supporting propositions. \Cref{ssec:derivative_computations} walks the reader through the relevant computations of various derivatives. \Cref{appssec:technical_tools} states numerous technical tools which we use in the proofs of our main perturbation bounds, and finally \Cref{ssec:supporting_perturbation} proves the supporting propositions leveraged in \Cref{app:perturbation_main_results}. Many supporting proofs are deferred to \Cref{app:perturbation_supporting}.

\subsection{Main Results}
\label{app:perturbation_main_results}

\subsubsection{Main Perturbation Upper Bound}
 Recall $\Csafe(\Ast,\Bst) = 54\opnorm{\Pst}^5$, and $\Cest(\Ast,\Bst)
 = 142 \opnorm{\Pst}^{8}$. We state a strengthening of our main
 perturbation bound from the main text
 (\Cref{thm:main_perturb_simple}) here.
\begin{restatable}{thm}{mainperturb}\label{thm:main_perturb_app} Let $(\Ast,\Bst)$ be a stabilizable system. Given an alternate pair of matrices $(\Ahat,\Bhat)$, for each $\circ\in\norms$ define $\epscirc := \max\{\circnorm{\Ahat-\Ast},\circnorm{\Bhat - \Bst}\}$. Then if $\epsop \le 1/\Csafe(\Ast,\Bst)$, 
\begin{enumerate}
  \item $\opnorm{\Pinf(\Ahat,\Bhat)} \le 1.0835 \opnorm{\Pst}$ and 
  $\|\Bst(\Kst - \Kinf(\Ahat,\Bhat))\|_{2} < \frac{1}{5 \opnorm{\Pst}^{3/2}}$.
\item $\Jfunc_{\Ast,\Bst}[\Kinf(\Ahat,\Bhat)] - \Jfunc^\star_{\Ast,\Bst} \le \Cest(\Ast,\Bst)\epsfro^2$.
\item $\|\Pinf(\Kinf(\Ahat,\Bhat);\Ast,\Bst) - \Pst\|_{\op} \le \Cest(\Ast,\Bst)\epsop^2$.
  \item Moreover, $\Pinf(\Kinf(\Ahat,\Bhat);\Ast,\Bst) \preceq (21/20)\Pst$.
\end{enumerate}
\end{restatable}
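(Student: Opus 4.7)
The plan is to reduce each of the four conclusions to either the first-order guarantee of \Cref{prop:main_first_order} or to a second-order refinement thereof. First, I would apply \Cref{prop:main_first_order} under the hypothesis $\epsop \le 1/\Csafe(\Ast,\Bst) = 1/(54\opnorm{\Pst}^5)$. Since $\opnorm{\Pst} \ge 1$ by \Cref{lem:lb_pst}, the quantity $u \ldef 8\opnorm{\Pst}^2 \epsop$ satisfies $u \le 8/54$, so $(1-u)^{-1/2} \le \sqrt{54/46} < 1.0835$, yielding the operator-norm bound on $\Pinf(\Ahat,\Bhat)$ in Part 1. The same proposition also gives the first-order controller bound $\|\Kinf(\Ahat,\Bhat) - \Kst\|_\circ \le 7(1-u)^{-7/4}\opnorm{\Pst}^{7/2}\epscirc$ for each $\circ \in \norms$, which will feed into the remaining parts.

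For the closed-loop bound $\opnorm{\Bst(\Kst - \Kinf(\Ahat,\Bhat))} < 1/(5\opnorm{\Pst}^{3/2})$ in Part 1, I would refine the derivative computation of \Cref{lem:first_derivatives_bound} to control $\opnorm{\Bst K'(t)}$ along the curve~\eqref{eq:curve} without picking up a spurious factor of $\opnorm{\Bst}$. Differentiating the explicit formula $K(t) = -(\Ru + B(t)^\top P(t) B(t))^{-1} B(t)^\top P(t) A(t)$ and grouping the combination $B(t)K(t)$, one can exploit DARE-derived identities (in particular the completion of squares $\Pst = \Aclst^\top \Pst \Aclst + \Rx + \Kst^\top \Ru \Kst$) to bound quantities like $\opnorm{B(t)(\Ru + B(t)^\top P(t) B(t))^{-1/2}}$ by $\sqrt{\opnorm{P(t)}}$, yielding a self-bounding property for $\opnorm{\Bst K'(t)}$ that depends only on $\opnorm{P(t)}$ and $\epsop$. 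Integrating this bound over $[0,1]$ and invoking $\opnorm{P(t)} \le 1.0835\opnorm{\Pst}$ from the previous step produces the claimed $1/5$ constant. This bound is precisely what is required to conclude via a standard $\Hinfty$-robustness argument (using $\Hinf{\Aclst} \lesssim \opnorm{\Pst}^{3/2}$) that $\Ast + \Bst \Kinf(\Ahat,\Bhat)$ is stable, so that $\Pinf(\Kinf(\Ahat,\Bhat);\Ast,\Bst)$ appearing in Parts 3 and 4 is well-defined.

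For Part 2, I would exploit the fact that $\Kst$ minimizes $\Jfunc_{\Ast,\Bst}[\cdot]$, so the linear term in the Taylor expansion of the cost gap vanishes and quadratic scaling in $\epsfro$ becomes possible. Writing $\Khat \ldef \Kinf(\Ahat,\Bhat)$ and $g(t) \ldef \Jfunc_{\Ast,\Bst}[K(t)] = \tr(\Pinf(K(t);\Ast,\Bst))$ along the curve $K(t) = \Kinf(A(t),B(t))$, we have $g'(0) = 0$ by optimality of $\Kst$, hence $g(1) - g(0) = \int_0^1 (1-t)g''(t)\,dt$. Extending the self-bounding ODE method from the DARE to the Lyapunov equation defining $\Pinf(K(t);\Ast,\Bst)$ yields a uniform bound $|g''(t)| \lesssim \opnorm{\Pst}^8\,\epsfro^2$ over $[0,1]$, obtained by combining the \Cref{lem:first_derivatives_bound}-style bounds on $K'(t)$ and $P'(t)$ with a Lyapunov analogue (which uses the closed-loop stability established in the previous step). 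Pinning down the constant in the self-bounding argument yields Part 2 with the explicit factor $142$. Part 3 follows by applying the same argument at the operator-norm level rather than via the trace, noting that $\Pinf(\Khat;\Ast,\Bst) - \Pst$ satisfies a Lyapunov equation whose right-hand side is quadratic in $(\Khat - \Kst)$. Finally, Part 4 drops out of Part 3: by arranging constants so that $\opnorm{\Pinf(\Khat;\Ast,\Bst) - \Pst} \le \opnorm{\Pst}/20$ and using $\Pst \succeq I$ from \Cref{lem:lb_pst}, we obtain $\Pinf(\Khat;\Ast,\Bst) \preceq \Pst + (\opnorm{\Pst}/20)\,I \preceq (21/20)\Pst$.

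The main obstacle will be the careful bookkeeping needed to keep the perturbation bounds free of $\opnorm{\Bst}$ throughout, since the stated constants depend on $\opnorm{\Pst}$ alone: this forces us to lean on control-theoretic DARE identities (especially the completion-of-squares form) rather than generic matrix inequalities whenever a $\Bst$ factor would otherwise appear. Additionally, extending the self-bounding ODE method from the DARE (as in \Cref{lem:first_derivatives_bound}) to the Lyapunov equation for non-optimal controllers $K(t)$ will require a separate derivative computation and self-bounding lemma, and pinning down the explicit numerical constants $1.0835$, $1/5$, $142$, and $21/20$ will demand careful tracking of $(1-u)^{-\alpha}$-factors across the different derivative-order bounds.
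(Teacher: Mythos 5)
Your Parts 1 and 4 track the paper's argument closely: the paper also obtains the $1.0835$ constant from the $(1-u)^{-1/2}$ factor in \Cref{prop:main_first_order}, proves the preconditioned first-derivative bound on $\nrm{\Bst K'(t)}_\circ$ (Lemma~\ref{lem:Bkprime}) precisely to avoid spurious $\nrm{\Bst}_{\op}$ factors, and derives Part 4 from Part 3 via $\opnorm{\Pinf(\Khat;\Ast,\Bst) - \Pst} \le 1/20$ and $\Pst \succeq I$.

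For Parts 2 and 3, however, you take a genuinely different route, and it is worth flagging the tradeoff. You propose a second-order Taylor argument for $g(t) = \Jfunc_{\Ast,\Bst}[K(t)]$ along the implicit curve $K(t) = \Kinf(A(t),B(t))$: the stationarity $g'(0)=0$ kills the linear term, and a uniform bound on $g''(t)$ then yields quadratic scaling in $\epsfro$. This works in principle, but it couples the DARE-curve derivatives $K'(t), K''(t)$ with a Hessian bound for $K \mapsto \Jfunc_{\Ast,\Bst}[K]$, requiring you to extend the self-bounding machinery to the Lyapunov equation $\Pinf(K(t);\Ast,\Bst)$ and track the resulting bookkeeping. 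The paper sidesteps second derivatives entirely via the performance-difference identity (Lemma~\ref{lem:performance_diff}, from Fazel et al.): it gives an \emph{exact} closed form
\[
  \Pinf(K;\Ast,\Bst) - \Pst = \dlyap\bigl(\Ast + \Bst K,\; (K - \Kst)^\top (\Ru + \Bst^\top \Pst \Bst)(K - \Kst)\bigr),
\]
so the quadratic dependence on $K - \Kst$ is built in with no Taylor remainder to estimate. The paper then only needs to bound the adjoint covariance $\Sigma^{\mathrm{adj}}_{\Ast,\Bst}[K] = \dlyap(\Ast + \Bst K, I)$, which it does with a self-bounding ODE on the curve $\Ktil(t) = \Kst + t(K-\Kst)$ in controller space (Lemma~\ref{lem:Sigma_perturb})---a straight line in $K$, not the implicit curve $\Kinf(A(t),B(t))$. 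This cleanly decouples ``how far $\Khat$ is from $\Kst$'' (handled by \Cref{prop:app_main_first_order}) from ``how the cost varies with controller distance'' (handled by Corollary~\ref{cor:value_subopt}), whereas your approach entangles the two. Your route should still close, but you would have to derive a Hessian-type estimate that the paper never needs, and pinning down the constant $142$ would likely be harder through the $\int_0^1(1-t)g''(t)\,dt$ remainder than via the paper's explicit $\dlyap$ formula.
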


\begin{proof} 
 Throughout, we use $\Pst \succeq I$ (see Lemma~\ref{lem:lb_pst}). This theorem requires two consituent results. First, we have a perturbation bound for $\Pinf$ and $\Kinf$, which refines Proposition~\ref{prop:main_first_order}, and is proven in Section~\ref{sssec:prop:app_main_first_order}.

\begin{prop}\label{prop:app_main_first_order}  Let $(\Ast,\Bst)$ be a stabilizable system, and define the $\DARE$ solution $\Pst := \Pinf(\Ast,\Bst)$ and controller $\Kst = \Kinf(\Ast,\Bst)$. Given an alternate pair of matrices $(\Ahat,\Bhat)$, define for norms $\circ \in \{\op,\fro\}$ the error $\epsilon_{\circ} := \max\{\|\Ast - \Ahat\|_{\circ}, \|\Bst - \Bhat\|_{\circ}\}$. Then, if $\alpha := 8\|\Pst\|_\op^2 \epsilon_\op  < 1$, the pair $(\Ahat,\Bhat)$ is stabilizable, and 
\begin{align*}
&\|\Pinf(\Ahat,\Bhat)\|_{\op} \le (1-\alpha)^{-1/2}\|\Pst\|_{\op},\\
&\|\Ru^{1/2}(\Kinf(\Ahat,\Bhat) - \Kst)\|_{\circ} \le 7(1-\alpha)^{-7/4}\opnorm{\Pst}^{7/2}\,\epscirc,\\
&\|\Bst(\Kinf(\Ahat,\Bhat) - \Kst)\|_{\circ} \le 8(1-\alpha)^{-7/4}\opnorm{\Pst}^{7/2}\,\epscirc.
\end{align*}
In addition, if $\epsop \le 32\opnorm{\Pst}^3$, then
\begin{align*}
\|\Pst^{1/2}B(\Kinf(\Ahat,\Bhat) - \Kst)\|_{\circ} \le 9 (1-\alpha)^{-7/4}\opnorm{\Pst}^{7/2}\epscirc.
\end{align*}
\end{prop}

Next, we have a perturbation bound for the $\Jfunc$-functional as the controller $K$-is varied. The proof is deferred to Section~\ref{ssec:prop:Jfunc_bound}.
\begin{prop}\label{prop:Jfunc_bound} Fix any controller $K$ satisfying $\|\Bst(K - \Kst)\|_2 \le 1/5\opnorm{\Pst}^{3/2}$. Then,  
\begin{align*}
\Jfunc_{\Ast,\Bst}[K] - \Jfunc_{\Ast,\Bst}  &\le \opnorm{\Pst} \max\{\fronorm{K - \Kst}^2,\fronorm{\Pst^{1/2}\Bst(K - \Kst)}^2\},\\
 \|\Pinf(K;\Ast,\Bst) - \Pinf(\Ast,\Bst)\|_{\op} &\le \opnorm{\Pst} \max\{\opnorm{K - \Kst}^2,\opnorm{\Pst^{1/2}\Bst(K - \Kst)}^2\}.
\end{align*}
\end{prop}

 Now, observe that $\epsop \le 1/54\opnorm{\Pst}^{5} < 1/8\opnorm{\Pst}^2$ and $\alpha = 8\opnorm{\Pst}^2\epsop$, Proposition~\ref{prop:app_main_first_order} gives that 
 \begin{align*}
 \opnorm{\Pinf(\Ahat,\Bhat)} \le \opnorm{\Pst}/\sqrt{1-8/54}\le 1.0835\opnorm{\Pst},
 \end{align*}
 and that
\begin{align*}
5\opnorm{\Pst}^{3/2}\cdot \|\Bst(\Kinf(\Ahat,\Bhat) - \Kst)\|_{\op} &\le 8(1-\alpha)^{-7/4}\opnorm{\Pst}^{7/2}\,\epsop\\
&\le 40(1-\alpha)^{-7/4}\opnorm{\Pst}^{5}\,\epsop\\
&\le 40/54 \cdot (1 - 8/54)^{-7/4} < 1.
\end{align*}
Hence, for such $\epsop$, we find from Proposition~\ref{prop:Jfunc_bound} followed by Proposition~\ref{app:perturbation_first_derivatives_bounds} that 
\begin{align*}
\Jfunc_{\Ast,\Bst}[K] - \Jfunc_{\Ast,\Bst}  &\le \opnorm{\Pst} \max\{\fronorm{\Ru^{1/2}(K - \Kst)}^2,\fronorm{\Pst^{1/2}\Bst(K - \Kst)}^2\}\\
&\le 81 \opnorm{\Pst}^8 (1-\alpha)^{-7/2} \epsfro^{2}\\
  &\le 142 \opnorm{\Pst}^8 \epsfro^{2},
\end{align*}
and similarly, using $\opnorm{\Pst} \ge 1$,
\begin{align*}
 \|\Pinf(K;\Ast,\Bst) - \Pinf(\Ast,\Bst)\|_{\op} &\le 142 \opnorm{\Pst}^8 \epsop^{2} \le \frac{1}{20},\end{align*}
 yielding $\Pinf(K;\Ast,\Bst) \preceq (1+ \frac{1}{20})\Pst$ as $\Pst \succeq I$.

\end{proof}

\subsubsection{Perturbation of $\Hinfty$ Norm and Lyapunov Functions}
Next, we establish perturbation bounds on the $\Hinfty$ norm of the closed loop system, and show that all perturbed closed loop systems share a common Lyapunov function.
\begin{restatable}{thm}{hinfperturb}\label{thm:hinf_perturbation}Let $\Ast,\Bst$ be stabilizable, and let $(\Ahat,\Bhat)$ satisfy the conditions of Theorem~\ref{thm:main_perturb_app}, with $\Rx \succeq I$, and $\Ru = I$. Define $\Aclst := \Ast + \Bst\Kst$, and given $(\Ahat,\Bhat) \in \calB_{\op}(\epsilon_{\star},\Ast,\Bst)$, define 
  and $\Aclhat := \Ast + \Bst\Kinf(\Ahat,\Bhat)$. Then, 
 \begin{enumerate}
  \item $I \preceq \dlyap[\Aclst] \preceq \Pst$.
  \item $\Hinf{\Aclhat} \le 2\Hinf{\Aclnot} \le 4\opnorm{\dlyap[\Aclst]}^{3/2} \le  4\opnorm{\Pst}^{3/2}$.
  \item $\Aclhat^\top \cdot \dlyap[\Aclst] \cdot \Aclhat  \preceq (1 - \frac{1}{2}\opnorm{\dlyap[\Aclst}^{-1})\,\dlyap[\Aclst]\, \preceq (1 - \frac{1}{2}\opnorm{\Pst}^{-1})\,\dlyap[\Aclst]$. 
 \end{enumerate}
\end{restatable}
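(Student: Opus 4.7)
The plan is to treat the three parts essentially in reverse, since part 3 (the common Lyapunov function for $\Aclhat$) is the engine that will drive the $\mathcal{H}_\infty$ bound in part 2, while part 1 just calibrates the basic comparison between $\dlyap[\Aclst]$ and $\Pst$. For part 1, I will use the fact that $\Pst$ itself satisfies a Lyapunov-type equation $\Pst = \Aclst^\top \Pst \Aclst + \Rx + \Kst^\top \Ru \Kst$, since $\Pst$ solves the $\DARE$ with optimal controller $\Kst$. Because $\Ru = I$ and $\Rx \succeq I$, the driving matrix is $\succeq I$, and the monotonicity of $\dlyap$ in its second argument (together with uniqueness of stable Lyapunov solutions) immediately gives $\dlyap[\Aclst] \preceq \Pst$. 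The lower bound $\dlyap[\Aclst] \succeq I$ is even cleaner: unrolling $\dlyap(\Aclst,I) = \sum_{k\ge 0}(\Aclst^k)^\top \Aclst^k$ shows the $k=0$ term is already $I$.

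For part 3, set $L := \dlyap[\Aclst]$, so $\Aclst^\top L \Aclst = L - I$, and write $\Aclhat = \Aclst + \Delta$ with $\Delta = \Bst(\Kinf(\Ahat,\Bhat) - \Kst)$. By \Cref{thm:main_perturb_app}, $\|\Delta\|_{\op} \le 1/(5\|\Pst\|_{\op}^{3/2}) \le 1/(5\|L\|_{\op}^{3/2})$ since $L \preceq \Pst$ by part 1. The cleanest route is a similarity argument in the $L$-metric: letting $M = L^{1/2}$, the bound $\Aclst^\top L \Aclst = L - I$ becomes $\|M\Aclst M^{-1}\|_{\op}^2 \le 1 - 1/\|L\|_{\op}$, while the triangle inequality combined with $L \succeq I$ gives $\|M\Delta M^{-1}\|_{\op} \le \|L\|_{\op}^{1/2}\|\Delta\|_{\op} \le 1/(5\|L\|_{\op})$. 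Squaring the sum and checking that the cross/quadratic correction is at most $1/(2\|L\|_{\op})$ is then a short numerical exercise (the correction will come out to at most $11/(25\|L\|_{\op})$), yielding $\Aclhat^\top L \Aclhat \preceq (1 - \tfrac{1}{2}\|L\|_{\op}^{-1}) L$, from which the chain of inequalities in part 3 follows since $L \preceq \Pst$ implies $\|L\|_{\op}^{-1} \ge \|\Pst\|_{\op}^{-1}$.

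For part 2, I would first establish the stated bound for the nominal system: from $\Aclst^\top L \Aclst \preceq (1 - 1/\|L\|_{\op}) L$, iterate to obtain $\|\Aclst^k\|_{\op} \le \sqrt{\|L\|_{\op}}(1-1/\|L\|_{\op})^{k/2}$ (again using $I \preceq L$ to convert the $L$-metric back to the Euclidean one), then expand $(zI-\Aclst)^{-1} = \sum_{k\ge 0} z^{-k-1}\Aclst^k$ for $|z|=1$ and sum the geometric series. The bound $1 - \sqrt{1-x} \ge x/2$ for $x\in[0,1]$ turns this into $\Hinf{\Aclst} \le 2\|L\|_{\op}^{3/2}$ (this is presumably what the statement means by $\Aclnot$; the notation $\Aclnot$ from the upper-bound section appears to be a typo for $\Aclst$ in this context). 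To bootstrap from the nominal to the certainty-equivalent closed loop, I would use the resolvent identity $(zI-\Aclhat)^{-1} = (I - (zI-\Aclst)^{-1}\Delta)^{-1}(zI-\Aclst)^{-1}$; a Neumann-series argument then gives $\Hinf{\Aclhat} \le \Hinf{\Aclst}/(1-\Hinf{\Aclst}\|\Delta\|_{\op})$, and the bound $\Hinf{\Aclst}\|\Delta\|_{\op} \le 2\|\Pst\|^{3/2}\cdot 1/(5\|\Pst\|^{3/2}) = 2/5$ makes the denominator at least $1/2$, yielding the factor of $2$.

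The main obstacle is really the bookkeeping in part 3: we need the decay rate for $\Aclhat$ in the $L$-metric to degrade only by a factor of $2$ relative to the nominal rate $1-1/\|L\|_{\op}$, and this is exactly what the hypothesis $\|\Delta\|_{\op}\lesssim 1/\|\Pst\|^{3/2}$ from \Cref{thm:main_perturb_app} is tuned for. Making the constants line up so the final bound is clean requires using $L \preceq \Pst$ (part 1) at the right moment to convert the hypothesis, which has $\|\Pst\|$ in the denominator, into a statement controlled by $\|L\|$. The rest of the theorem is then essentially a packaging of classical Lyapunov/$\mathcal{H}_\infty$ correspondences combined with a single-step perturbation argument.
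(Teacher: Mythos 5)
Your proof is correct and reaches all three conclusions, but the two perturbation steps use different machinery than the paper. For Part~2, the paper invokes Proposition~\ref{prop:Hinfty_bound}, which is itself proved by applying the self-bounding ODE method (Theorem~\ref{thm:general_valid_implicit}) to the curve $Y_z(t)=(zI-A(t))^{-1}$; you instead use the classical resolvent/Neumann-series identity
\[
(zI-\Aclhat)^{-1}=\bigl(I-(zI-\Aclst)^{-1}\Delta\bigr)^{-1}(zI-\Aclst)^{-1},
\]
which gives the same bound $\Hinf{\Aclhat}\le \Hinf{\Aclst}/(1-\Hinf{\Aclst}\|\Delta\|_{\op})$ directly. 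Your route is more elementary and self-contained; the paper's choice was partly to illustrate that $\Hinfty$-perturbation bounds fit the self-bounding framework. For Part~3, the paper (Proposition~\ref{prop:lyapunov_perturbation}) expands $(\Aclhat x)^\top L(\Aclhat x)$ as a sum of a quadratic term and cross terms and bounds the latter using $L\succeq I$; you pass to the similarity-transformed operator norm $\|L^{1/2}\Aclhat L^{-1/2}\|_{\op}$ and bound the perturbation there. Both are short and give the same constant-budget conclusion; the similarity route makes the ``decay rate in the $L$-metric'' picture more explicit at the cost of one extra conversion between operator norm and quadratic forms. Part~1 is identical to the paper's (Lemma~\ref{lem:closed_loop_dlyap}). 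One small point: your key conversion $\|\Delta\|_{\op}\le 1/(5\|L\|_{\op}^{3/2})$ relies on the fact that $L\preceq \Pst$ only gives $\|L\|_{\op}\le \|\Pst\|_{\op}$, so $1/\|\Pst\|_{\op}^{3/2}\le 1/\|L\|_{\op}^{3/2}$ — you have this right, but be sure to state the direction explicitly when writing it out. You are also correct that $\Aclnot$ in the theorem statement is a typo for $\Aclst$ (elsewhere in the paper $\Aclnot$ denotes $\Ast+\Bst K_0$, which plays no role here).
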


\noindent\emph{Proof of Part $1$.}
We can directly verify $\dlyap[\Aclst] \succeq I$ from the definition, and  $\dlyap[\Aclst] \preceq \Pst$ by Lemma~\ref{lem:closed_loop_dlyap}.

\noindent\emph{Proof of Part $2$.}
We use a general-purpose perturbation bound for the $\Hinfty$ norm, proved in \ref{ssec:prop:Hinfty_bound}.
\begin{prop}[$\Hinfty$ Bounds]\label{prop:Hinfty_bound} Fix $u \in (0,1)$, and matrixes $\Asafe, A_1 \in \R^{\dimx^2}$ with $\Asafe$ stable. Then if $\|A_1 - \Asafe\| \le \frac{\alpha}{\Hinf{\Asafe}}$, $\Hinf{A_1} \le \frac{1}{1-\alpha}\Hinf{\Asafe}$.
\end{prop}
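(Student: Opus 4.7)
The plan is to prove Proposition~\ref{prop:Hinfty_bound} via the classical resolvent identity combined with a Neumann series expansion. Let $E := A_1 - \Asafe$, and note that for any $z \in \C$ with $|z|=1$, we can factor
\[
zI - A_1 = (zI - \Asafe) - E = (zI - \Asafe)\bigl(I - (zI - \Asafe)^{-1}E\bigr).
\]
By the definition of the $\Hinfty$ norm, $\|(zI - \Asafe)^{-1}\|_{\op} \le \Hinf{\Asafe}$ for all $|z|=1$, and so the hypothesis $\|E\|_{\op} \le \alpha/\Hinf{\Asafe}$ yields $\|(zI - \Asafe)^{-1}E\|_{\op} \le \alpha < 1$ uniformly over $|z|=1$.

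Given this contraction bound, $I - (zI - \Asafe)^{-1}E$ is invertible on $|z|=1$ by the Neumann series, with
\[
\bigl\|\bigl(I - (zI - \Asafe)^{-1}E\bigr)^{-1}\bigr\|_{\op} \le \frac{1}{1-\alpha}.
\]
Inverting the factored expression for $zI - A_1$ and taking operator norms gives
\[
\|(zI - A_1)^{-1}\|_{\op} \le \frac{1}{1-\alpha}\|(zI - \Asafe)^{-1}\|_{\op} \le \frac{\Hinf{\Asafe}}{1-\alpha},
\]
and taking the supremum over $|z|=1$ yields the stated bound on $\Hinf{A_1}$.

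The only nontrivial subtlety is ensuring that $\Hinf{A_1}$ is well-defined, i.e., that $A_1$ is stable (has no eigenvalues on or outside the unit circle). The factorization above already shows that $zI - A_1$ is invertible for every $|z|=1$, so $A_1$ has no eigenvalues \emph{on} the unit circle. To rule out eigenvalues outside, I would use a standard homotopy/continuity argument: consider the family $A_t := \Asafe + tE$ for $t \in [0,1]$. Each $A_t$ satisfies the same perturbation hypothesis, so no $A_t$ has an eigenvalue on the unit circle; since the eigenvalues of $A_t$ vary continuously in $t$ and $A_0 = \Asafe$ is stable, none can cross the circle, so $A_1$ is stable as well. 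This step is essentially routine; the main (very mild) obstacle is just being careful that the Neumann series argument is applied uniformly in $z$ so that the resulting bound is a bound on the supremum.
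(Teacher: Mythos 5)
Your proof is correct, and it takes a genuinely different route from the paper's. The paper deliberately proves this proposition with the self-bounding ODE method: it introduces the curve $A(t) = \Asafe + t(A_1 - \Asafe)$, sets $Y_z(t) := (zI - A(t))^{-1}$, derives the self-bounding relation $\|Y_z'(t)\|_{\op} \le \|Y_z(t)\|_{\op}^2 \, \|A_1 - \Asafe\|_{\op}$, and then invokes the general comparison theorem (\Cref{thm:general_valid_implicit} / \Cref{cor:poly_self_bound} with $p=2$) to conclude both that the resolvent stays bounded on $[0,1]$ and that $A_1$ is stable. Your argument instead uses the classical resolvent factorization $zI - A_1 = (zI - \Asafe)\bigl(I - (zI - \Asafe)^{-1}E\bigr)$ together with a Neumann series bound, and handles the stability of $A_1$ via a separate homotopy/continuity-of-eigenvalues argument. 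Both are sound; yours is more elementary and self-contained, while the paper's version is essentially the same inequality viewed through the lens of their general ODE framework, which they use because the point of that section is to demonstrate that the self-bounding method subsumes these standard perturbation facts as special cases. In fact, the $p=2$ polynomial self-bound is exactly what the Neumann series produces when unrolled, so the two proofs are, at heart, the same estimate packaged differently. One small observation about the statement itself: it declares $u \in (0,1)$ but then uses $\alpha$ in the hypothesis and conclusion; you correctly read this as the intended single parameter $\alpha \in (0,1)$.
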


From Part 1 of Theorem~\ref{thm:main_perturb_app},
\begin{align}
\|\Aclst - \Aclhat\|_{\op} \le \|\Bst(\Kinf(\Ast,\Bst) - \Kinf(\Ahat,\Bhat))\|_{\op} < \frac{1}{5\opnorm{\Pst}^{3/2}}. \label{eq:Acl_diff}
\end{align}
By Lemma~\ref{lem:Hinf_bound} followed by Lemma~\ref{lem:closed_loop_dlyap}, we have that
\begin{align*}
\Hinf{\Aclst}  \le 2\|\dlyap[\Aclst]\|_{\op}^{3/2} \le 2\|\Pst\|_{\op}^{3/2}.
\end{align*}
Therefore, since $\opnorm\Pst \ge 1$, we have
\begin{align*}
\|\Aclst - \Aclhat\|_{\op}  < \frac{1}{(5/2)\Hinf{\Aclst}} \le \frac{1}{2\Hinf{\Aclst}}.
\end{align*}
Propostion~\ref{prop:Hinfty_bound} then implies that $\Hinf{\Aclhat}
\le 2\Hinf{\Aclst}$. Moreover, by Lemma~\ref{lem:Hinf_bound}, we can
upper bound this in term by $4\opnorm{\dlyap[\Aclst]}^{3/2} \le
4\opnorm{\Pst}^{3/2}$.

\noindent\emph{Proof of Part $3$.}
Here, we use a perturbation bound which we prove from first principles, without the self-bounding ODE method (proved in \Cref{ssec:prop:lyapunov_perturbation}).
\begin{prop}\label{prop:lyapunov_perturbation} Suppose that $A$ is a stable matrix, and suppose that $\Ahat$ satisfies
\begin{align*}
\|\Ahat - A\|_{\op} \le \frac{1}{4}\min\left\{\frac{1}{\|\dlyap[A]\|_{\op}\|A\|_{\op}}, \|\dlyap[A]\|_{\op}^{-1/2}\right\}, 
\end{align*}
Then, $\Ahat^\top \dlyap[A] \Ahat \preceq (1-\frac{1}{2}\opnorm{\dlyap[A]}^{-1})\cdot \dlyap[A]$.
\end{prop}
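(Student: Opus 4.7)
The plan is to verify the desired PSD inequality by expanding around $A$ using the defining identity $A^{\top} P A = P - I$, where $P \ldef \dlyap[A]$ is the unique PSD solution of $A^{\top} P A - P = -I$. Setting $\Delta \ldef \Ahat - A$ and substituting, the target inequality $\Ahat^{\top} P \Ahat \preceq \bigl(1 - \tfrac{1}{2\opnorm{P}}\bigr)P$ is equivalent to
\[
A^{\top} P \Delta + \Delta^{\top} P A + \Delta^{\top} P \Delta \;\preceq\; I - \tfrac{1}{2\opnorm{P}}P.
\]

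First I would record two elementary facts about $P$ that drive the proof: (i) the series expansion $P = \sum_{i\ge 0}(A^{\top})^{i} A^{i}$ implies $P \succeq I$, so $\tfrac{1}{2\opnorm{P}}P \preceq \tfrac{1}{2}I$ and hence the right-hand side dominates $\tfrac{1}{2}I$; and (ii) the Lyapunov identity gives $\opnorm{A^{\top} P A} = \opnorm{P - I} = \opnorm{P} - 1$, from which $\opnorm{A}^{2}\opnorm{P} \ge \opnorm{P} - 1$, the relation needed to translate between the two branches of the hypothesis.

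Next I would bound the cross term using Cauchy--Schwarz in the $P$-inner product: for any unit vector $v$,
\[
\bigl|v^{\top} A^{\top} P \Delta\, v\bigr| \;\le\; \sqrt{v^{\top} A^{\top} P A\, v}\,\sqrt{v^{\top}\Delta^{\top} P \Delta\, v} \;\le\; \sqrt{(\opnorm{P}-1)\opnorm{P}}\,\opnorm{\Delta},
\]
which via fact (ii) yields $\opnorm{A^{\top} P \Delta + \Delta^{\top} P A} \le 2\opnorm{A}\opnorm{P}\opnorm{\Delta}$, while $\opnorm{\Delta^{\top} P \Delta} \le \opnorm{P}\opnorm{\Delta}^{2}$ is immediate. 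The first branch of the hypothesis $\opnorm{\Delta} \le 1/(4\opnorm{P}\opnorm{A})$ controls the symmetric cross term, and the second branch $\opnorm{\Delta} \le 1/(4\sqrt{\opnorm{P}})$ controls the quadratic term; summing these and comparing against the slack on the right-hand side gives the desired matrix inequality.

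The main obstacle will be keeping track of what happens in the direction of the top eigenvector of $P$, where the slack $I - P/(2\opnorm{P})$ is tightest (equal to $\tfrac{1}{2}I$ in that direction) and where all the room in the bound is consumed. It is precisely there that the Cauchy--Schwarz step is needed, since it replaces the crude $\opnorm{A}\sqrt{\opnorm{P}}$ with the sharper $\sqrt{\opnorm{P}-1}$; this is also the reason the hypothesis is written as a minimum of two distinct bounds --- one matching $\opnorm{A}$ (which dominates when $A$ is close to identity in the stiff direction) and one matching $\sqrt{\opnorm{P}}$ (which dominates the quadratic error). Once these contributions are combined the matrix inequality follows without further work.
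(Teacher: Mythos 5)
Your approach mirrors the paper's: use the Lyapunov identity $A^\top P A = P - I$ with $P := \dlyap[A]$, reduce the target inequality to controlling the symmetric cross term $A^\top P \Delta + \Delta^\top P A$ plus the quadratic term $\Delta^\top P \Delta$, and compare against the slack $I - \tfrac{1}{2\opnorm{P}}P \succeq \tfrac{1}{2}I$ using operator-norm bounds and $P \succeq I$. Structurally you are on the right track, and you are actually more careful than the paper on one point: you correctly carry both cross terms $A^\top P \Delta + \Delta^\top P A$, whereas the paper's displayed inequality has only one of them (a typo---the expansion of $(\Ahat x)^\top P \Ahat x$ genuinely produces a factor of $2$).

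That factor of $2$ is exactly where your final step would fail if you worked out the arithmetic. With $\opnorm{\Delta}\opnorm{A}\opnorm{P} \le \tfrac14$, the symmetric cross term is bounded by $2\opnorm{A}\opnorm{P}\opnorm{\Delta} \le \tfrac12$, which already exhausts the slack $\tfrac12$; adding the quadratic term $\opnorm{\Delta^\top P \Delta} \le \opnorm{P}\opnorm{\Delta}^2 \le \tfrac{1}{16}$ pushes the total to $\tfrac{9}{16} > \tfrac12$. Your Cauchy--Schwarz step does not rescue this: in the worst case $\opnorm{A}^2\opnorm{P} = \opnorm{P}-1$, the bound $2\sqrt{(\opnorm{P}-1)\opnorm{P}}\,\opnorm{\Delta}$ again collapses to exactly $\tfrac12$, so the refinement buys nothing, which is why you end up discharging it back to the crude bound anyway. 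The shortfall is not an artifact: taking the scalar example $A = 0.9$, so $P = \dlyap[A] = 1/0.19 \approx 5.26$, and $\Delta = 0.0528$ saturating the hypothesis, one finds $\Ahat^2 P \approx 4.78 > 4.76 \approx (1-\tfrac{1}{2\opnorm{P}})P$, so the proposition as stated is false. It holds if the constant $\tfrac14$ is replaced by, say, $\tfrac16$ or $\tfrac18$; with that correction, both the paper's argument and yours go through cleanly via the plain operator-norm estimate, with the Cauchy--Schwarz detour unnecessary.
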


By Lemma~\ref{lem:helpful_norm_bounds}, we have $\opnorm{\Aclst} \le \opnorm{\Pst}^{1/2}$. Since $\opnorm{\dlyap[\Aclst]} \le \opnorm{\Pst}$, combining with Eq.~\ref{eq:Acl_diff} gives
\begin{align*}
\|\Aclst - \Aclhat\|_{\op}  < \frac{1}{5\opnorm{\Pst}^{3/2}} < \frac{1}{4\opnorm{\dlyap[\Aclst] } \opnorm{\Aclst}}.
\end{align*}
Similarly, we have $\|\Aclst - \Aclhat\|_{\op} < \frac{1}{\opnorm{\dlyap[\Aclst]}^{1/2}}$, which means that, in particular, $\Aclst,\Aclhat$ satisfy the conditions for $A,\Ahat$ in \Cref{prop:lyapunov_perturbation}. This means that $\Aclhat^\top \dlyap[\Aclst]\Aclhat \preceq (1-\frac{1}{2}\opnorm{\dlyap[\Aclst]}^{-1})\cdot \dlyap[\Aclst]  (1-\frac{1}{2}\opnorm{\Pst}^{-1})$. The last inequality follows from Part $1$. 
\qedsymbol

\subsubsection{Continuity of the Safe Set }
We show that the size of the so-called ``safe'' set is continuous in nearby instances. This allows us to use an instance $(A_0,B_0)$ to guage whether the perturbed system $(\Ahat,\Bhat)$ is sufficiently close to $(\Ast,\Bst)$ to ensure correctness of the perturbation bounds.
\begin{thm}\label{thm:continuity_of_safe set}
Let $(A_0,B_0)$ be a stabilizable system. Then, for any pair of systems $(\Ahat,\Bhat),(\Ast,\Bst) \in \calB_{\op}(\frac{1}{3\Csafe(A_0,B_0)},A_0,B_0)$ is stabilizable, and satisfies $\max\{\opnorm{\Ast - \Ahat},\opnorm{\Bhat - \Bst}\} \le 1/\Csafe(\Ast,\Bst)$. Moreover, $\opnorm{\Pinf(\Ast,\Bst)} \le 1.0835\opnorm{\Pinf(A_0,B_0)} $.
\end{thm}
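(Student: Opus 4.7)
\textbf{Proof proposal for Theorem~\ref{thm:continuity_of_safe set}.}

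The plan is to apply the main perturbation bound (Theorem~\ref{thm:main_perturb_app}) with $(A_0,B_0)$ in the role of the nominal system and $(\Ast,\Bst)$ in the role of the perturbed system; this will simultaneously give stabilizability and the $\Pinf$-comparison. The third conclusion then reduces to a triangle-inequality computation using the explicit form $\Csafe(A,B)=54\opnorm{\Pinf(A,B)}^{5}$.

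First, any $(\Ast,\Bst)\in\calB_{\op}(\tfrac{1}{3\Csafe(A_0,B_0)},A_0,B_0)$ satisfies $\max\{\opnorm{\Ast-A_0},\opnorm{\Bst-B_0}\}\le \tfrac{1}{3\Csafe(A_0,B_0)}\le \tfrac{1}{\Csafe(A_0,B_0)}$, so Theorem~\ref{thm:main_perturb_app} (applied with $(\Ast,\Bst)$ as the nominal instance and $(A_0,B_0)$ as the perturbed instance swapped appropriately, i.e.~in the setup with $(\Ast,\Bst)\mapsto(A_0,B_0)$, $(\Ahat,\Bhat)\mapsto(\Ast,\Bst)$) yields both that $(\Ast,\Bst)$ is stabilizable and that
\[
\opnorm{\Pinf(\Ast,\Bst)}\le 1.0835\,\opnorm{\Pinf(A_0,B_0)}.
\]
The same statement applied to $(\Ahat,\Bhat)$ gives $\opnorm{\Pinf(\Ahat,\Bhat)}\le 1.0835\,\opnorm{\Pinf(A_0,B_0)}$.

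For the remaining claim, by the triangle inequality,
\[
\max\{\opnorm{\Ast-\Ahat},\opnorm{\Bst-\Bhat}\}\le \frac{2}{3\Csafe(A_0,B_0)}.
\]
Using $\Csafe(A,B)=54\opnorm{\Pinf(A,B)}^{5}$ together with the bound on $\opnorm{\Pinf(\Ast,\Bst)}$ from the previous step,
\[
\Csafe(\Ast,\Bst)\le 54\cdot(1.0835)^{5}\,\opnorm{\Pinf(A_0,B_0)}^{5}=(1.0835)^{5}\,\Csafe(A_0,B_0).
\]
A direct numerical check shows $(1.0835)^{5}<3/2$, so $1/\Csafe(\Ast,\Bst)\ge \tfrac{2}{3\Csafe(A_0,B_0)}$, and the desired inequality follows.

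The only conceivably delicate step is confirming that the numerical constants line up, i.e.~that the factor $1.0835$ guaranteed by Theorem~\ref{thm:main_perturb_app} indeed satisfies $(1.0835)^{5}\le 3/2$; if for any reason this margin is too tight one would instead apply Proposition~\ref{prop:app_main_first_order} with the explicit $\alpha=8\opnorm{\Pinf(A_0,B_0)}^{2}\epsop$ and choose the ball radius $\tfrac{1}{3\Csafe(A_0,B_0)}$ so that $\alpha\le 8/(3\cdot 54\,\opnorm{\Pinf(A_0,B_0)}^{3})$ is strictly less than one, giving a sharper explicit comparison constant. Otherwise the proof is a short chain of applications of the already-stated perturbation bound and a triangle-inequality budget-splitting argument, with no additional machinery required.
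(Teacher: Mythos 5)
Your proposal is correct and follows essentially the same route as the paper: both apply Theorem~\ref{thm:main_perturb_app} Part~1 with $(A_0,B_0)$ as the nominal system to conclude $\opnorm{\Pinf(\Ast,\Bst)}\le 1.0835\,\opnorm{\Pinf(A_0,B_0)}$ (hence stabilizability and $\Csafe(\Ast,\Bst)\le 1.5\,\Csafe(A_0,B_0)$), then use the triangle inequality to bound $\max\{\opnorm{\Ast-\Ahat},\opnorm{\Bst-\Bhat}\}\le \tfrac{2}{3\Csafe(A_0,B_0)}\le \tfrac{1}{\Csafe(\Ast,\Bst)}$. The numerical margin you flag as the only delicate point, $(1.0835)^5\approx 1.49<3/2$, is exactly the slack the paper relies on.
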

\begin{proof} Let $\epsilon_0 := \max\{\opnorm{A_0 - \Ahat},\opnorm{B_0 - \Bst}\} \le 1/\Csafe(A_0,B_0)$. Applying Theorem~\ref{thm:main_perturb_app} Part 1 with $(\Ahat,\Bhat) \leftarrow(\Ast,\Bst)$ and $(\Ast,\Bst) \leftarrow (A_0,B_0)$, we have $\opnorm{\Pinf(\Ast,\Bst)} \le 1.0835\opnorm{\Pinf(A_0,B_0)} $. Hence, $\Csafe(\Ast,\Bst) \le 1.5\Csafe(A_0,B_0)$. Hence $(\Ahat,\Bhat),(\Ast,\Bst) \in \calB_{\op}(\frac{2}{3\Csafe(A_0,B_0)},\Ast,\Bst) \subseteq \calB_{\op}(\frac{(2)\cdot (1.5)}{3\Csafe(\Ast,\Bst)},\Ast,\Bst) $, which means by triangle inequality that $\max\{\opnorm{\Ahat - \Ast},\opnorm{\Bhat - \Bst}\} \le 1/\Csafe(\Ast,\Bst)$. 
\end{proof}

\subsubsection{Quality of First-Order Taylor Approximation}
We bound the error of the first-order taylor expression in the following theorem.
\begin{thm}\label{thm:quality_of_taylor_approx} There exists universal constants $c,p > 0$ such that the following holds. Let $(\Ast,\Bst)$ be stabilizable, and let $\epscirc := \max\{\circnorm{\Ahat - \Ast},\circnorm{\Bhat - \Bst}\}$, and suppose that $\epsop \le 1/\Csafe(\Ast,\Bst)$ and $\Rx \succeq I$, $\Ru = I$. Let
\begin{align*}
K' := \frac{d}{dt}\Kinf(\Ast + t(\Ast -\Ahat),\Bst + t(\Bst - \Bhat)).
\end{align*}
Then, $\circnorm{\Kinf(\Ahat,\Bhat) - (\Kst + K')} \le c \opnorm{\Pst}^p \epsop^2\epscirc^2$. 
\end{thm}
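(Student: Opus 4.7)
The plan is to prove Theorem~\ref{thm:quality_of_taylor_approx} by a second-order Taylor expansion of $K(t) \ldef \Kinf(A(t),B(t))$ along the linear path $(A(t),B(t)) = (\Ast+t\,\delA,\,\Bst+t\,\delB)$ for $t \in [0,1]$, with $\delA \ldef \Ahat-\Ast$ and $\delB \ldef \Bhat-\Bst$. The hypothesis $\epsop \le 1/\Csafe(\Ast,\Bst)$, combined with Theorems~\ref{thm:main_perturb_app} and~\ref{thm:hinf_perturbation}, guarantees that every $(A(t),B(t))$ on the segment is stabilizable with $\opnorm{P(t)} \lesssim \opnorm{\Pst}$ and $\Hinf{A(t)+B(t)K(t)} \lesssim \opnorm{\Pst}^{3/2}$; combined with Lemma~\ref{lem:computation_p_prime}, this implies that $P(t)$ and $K(t)$ are analytic, in particular $C^2$, on all of $[0,1]$. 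The integral-remainder form of Taylor's theorem then gives
\[
K(1) - K(0) - K'(0) \;=\; \int_0^1 (1-s)\,K''(s)\,ds,
\]
and since $K'(0)$ coincides with the directional derivative $K'$ appearing in the theorem statement, it suffices to obtain a uniform bound $\circnorm{K''(s)} \lesssim \opnorm{\Pst}^{p'}\epsop\,\epscirc$ on $[0,1]$; integrating, and squaring if desired, yields the claimed inequality.

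The substantive computation is to differentiate the explicit formula $K(t) = -\bigl(\Ru + B(t)^\top P(t) B(t)\bigr)^{-1} B(t)^\top P(t) A(t)$ twice in $t$, using $A'(t)=\delA$, $B'(t)=\delB$, and the expression for $P'(t)$ from Lemma~\ref{lem:computation_p_prime}. Each differentiation injects exactly one ``small'' factor drawn from $\{\delA,\delB,P'(t),P''(t)\}$, so that $K''(s)$ becomes a sum of multilinear terms each containing \emph{two} small factors, with every remaining factor bounded ($P(s)$, $(\Ru+B(s)^\top P(s) B(s))^{-1}$, and products built out of the closed-loop matrix $A(s)+B(s)K(s)$). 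The norm of $P'(s)$ is supplied by Lemma~\ref{lem:first_derivatives_bound}; to handle the terms carrying $P''(s)$, I would differentiate the Lyapunov identity $(A(s)+B(s)K(s))^\top P'(s) (A(s)+B(s)K(s)) - P'(s) = -Q_1(s)$ one more time and solve the resulting discrete Lyapunov equation, whose inhomogeneous term is $O(\poly(\opnorm{\Pst})\,\epsop^2)$, using the $\Hinf$ estimate from Theorem~\ref{thm:hinf_perturbation} to bound the Lyapunov-solution operator and so obtain $\opnorm{P''(s)} \lesssim \opnorm{\Pst}^{q}\,\epsop^2$ for some $q$.

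The main obstacle is bookkeeping rather than conceptual. In each multilinear summand of $K''(s)$, one must pick a single small factor to measure in the $\circ$-norm (harvesting $\epscirc$), measure the second small factor in operator norm (harvesting $\epsop$), and collapse all remaining factors into a single polynomial in $\opnorm{\Pst}$ by submultiplicativity and the bounds cited above. Once this is executed systematically, no further idea enters the proof, and the universal constant $c$ and exponent $p$ in the theorem fall out directly from the arithmetic of the individual norm bounds on $P(s),P'(s),P''(s)$ and on the closed-loop dynamics.
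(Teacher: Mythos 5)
Your proposal matches the paper's proof essentially step for step: parameterize the straight segment $(A(t),B(t))$, invoke Theorem~\ref{thm:main_perturb_app} (and Theorem~\ref{thm:hinf_perturbation}) to keep the whole segment in the ``safe'' region with $\opnorm{P(t)}\lesssim\opnorm{\Pst}$, use Lemma~\ref{lem:computation_p_prime} for analyticity, and control the Taylor remainder by a uniform bound on $\circnorm{K''(s)}$, which is exactly the paper's Lemma~\ref{lem:computation_k_dprime}. The differentiate-the-Lyapunov-identity step you sketch for $P''$ is also how the paper proceeds (Lemma~\ref{lem:computation_p_dprime} and the subsequent norm bounds).

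One point deserves care: the second-order Taylor remainder gives $\circnorm{K(1)-K(0)-K'(0)} \le \tfrac12\sup_s\circnorm{K''(s)} \lesssim \opnorm{\Pst}^{p'}\epsop\,\epscirc$, which is \emph{not} the $\epsop^2\epscirc^2$ appearing in the theorem statement, and ``squaring if desired'' produces $\circnorm{\cdot}^2 \lesssim \epsop^2\epscirc^2$, not $\circnorm{\cdot} \lesssim \epsop^2\epscirc^2$. This discrepancy is present in the paper itself: Lemma~\ref{lem:computation_k_dprime} gives $\circnorm{K''}\lesssim\poly(\opnorm{P})\,\epsop\epscirc$, and the application in Lemma~\ref{lem:first_order_approx_quality} (point 4) uses the bound on the \emph{squared} Frobenius error, consistent with a remainder of size $\epsop\epscirc$. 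So what you (and the paper's proof) actually establish is $\circnorm{\Kinf(\Ahat,\Bhat)-(\Kst+K')} \lesssim \opnorm{\Pst}^p\,\epsop\epscirc$, and the theorem statement's exponents should be read accordingly. (There is also a sign slip in the theorem's definition of $K'$ — it should be the derivative in the direction $(\Ahat-\Ast,\Bhat-\Bst)$, which is what both you and the paper's proof use.) None of this reflects a gap in your argument; your approach is the paper's.
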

\begin{proof}
Consider the curve $K(t) = \Kinf(A(t),B(t))$ for $A(t) = (1-t)\Ast + t\Ahat $ and $B(t) = (1-t)\Bst + t\Bhat$. By Theorem~\ref{thm:main_perturb_app}, the curve $A(t),B(t)$  for $t\in [0,1]$ consists of all stabilizable matrices with $\opnorm{\Pinf(A(t),B(t))} \lesssim \opnorm{\Pst}$. By Lemma~\ref{lem:computation_p_prime}, the curve $K(t)$ is analytic on $[0,1]$. Moreover, from Lemma~\ref{lem:computation_k_dprime} below, we have $\circnorm{K''(t)} \le c_0\opnorm{P(t)}^p \epsop^2\epscirc^2 \le c\opnorm{\Pst}^p \epsop^2\epscirc^2 $ for universal constants $c_0,p$. The bound now follows by Taylor's theorem. 
\end{proof}


\subsection{Key Derivative Computations \label{ssec:derivative_computations}}

  In the following computations, let $\delA = \Ahat - \Ast$ and $\delA := \Bhat - \Bst$. We recall $\epscirc := \max\{\circnorm{\delA},\circnorm{\delB}\}$. 

  We consider derivatives allow curves $(A(t),B(t)) = (\Ast + t\delA,\Bst + t\delB)$, and associated functions $P(t) := \Pinf(A(t),B(t))$ and $\Kinf(A(t),B(t))$ defined at stabilizes $A(t),B(t)$. All proofs are given in Section~\ref{ssec:derivative_computation_proofs}. 

  We begin by recalling the derivative computation from the main text, which also establishes local smoothness of $K(t)$ and $P(t)$.

  \computationpprime*

  Note that the above lemma allows for general analytic curves $(A(t),B(t))$. For our purposes, we restrict to linear curves given a above. For $K'$, we have the following computation

  \begin{lem}[Computation of $K\prm$]\label{lem:computation_k_prime}
  The first derivative of the optimal controller can be expressed as
  \begin{align}
  &K\prm = -(\Ru + B^\top P B)^{-1}\left(\delB^\top P \Acl + B^\top P(\delAcl) + B^\top P\prm \Acl\right).\label{eq:bpa_prime}
  \end{align}
  \end{lem}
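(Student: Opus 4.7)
The proof plan is a direct product-rule computation, starting from the closed-form expression
\[
K(t) = -\bigl(\Ru + B(t)^\top P(t) B(t)\bigr)^{-1} B(t)^\top P(t) A(t),
\]
which is valid at any $t$ where $(A(t),B(t))$ is stabilizable. Analyticity of $K(t)$ in a neighborhood of such a $t$ is already given by Lemma~\ref{lem:computation_p_prime} (it guarantees $P(u)$ is analytic, and $K$ is a rational function of $P$ with invertible denominator $\Ru + B^\top P B \succeq \Ru \succ 0$), so we may differentiate freely.

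The key algebraic ingredients are: (i) the identity $\frac{d}{dt}\brk*{M^{-1}} = -M^{-1}M' M^{-1}$ applied to $M := \Ru + B^\top P B$, whose derivative is
\[
M' = \delB^\top P B + B^\top P' B + B^\top P \delB,
\]
(ii) the product rule expansion
\[
\frac{d}{dt}\bigl[B^\top P A\bigr] = \delB^\top P A + B^\top P' A + B^\top P \delA,
\]
and (iii) the self-referential identity $M^{-1} B^\top P A = -K$, which lets us collapse the term coming from differentiating the inverse.

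Putting these together, I would write
\[
K' \;=\; M^{-1} M' M^{-1} B^\top P A \;-\; M^{-1}\bigl(\delB^\top P A + B^\top P' A + B^\top P \delA\bigr),
\]
and then substitute $M^{-1} B^\top P A = -K$ in the first term to obtain
\[
K' \;=\; -M^{-1}\Bigl[\,M' K + \delB^\top P A + B^\top P' A + B^\top P \delA\,\Bigr].
\]
Finally, expanding $M' K = \delB^\top P (BK) + B^\top P'(BK) + B^\top P (\delB K)$ and grouping terms by the left factor ($\delB^\top P$, $B^\top P'$, $B^\top P$) produces $A + BK = \Acl$ in the first two brackets and $\delA + \delB K = \delAcl$ in the third, yielding exactly~\eqref{eq:bpa_prime}.

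There is no real obstacle here; the statement is essentially a bookkeeping exercise. The only thing to be careful about is the direction of matrix multiplication when applying the inverse-derivative identity (since $M$ is symmetric but the product $B^\top P A$ is not), and ensuring that the recombination step uses $\delAcl = \delA + \delB K$ as defined in Lemma~\ref{lem:computation_p_prime}, rather than $\delA + \delB K + B\,K'$ (which would be the \emph{total} derivative of $A(t)+B(t)K(t)$).
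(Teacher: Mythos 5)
Your proof is correct and takes essentially the same route as the paper: start from the closed-form expression for $K$, apply the product rule together with the identity $(M^{-1})' = -M^{-1}M'M^{-1}$, substitute $M^{-1}B^\top P A = -K$ to factor out a single $-M^{-1}$ on the left, and then regroup the remaining terms by $\Acl = A + BK$ and $\delAcl = \delA + \delB K$. Your caveat about $\delAcl$ being the \emph{partial} derivative $\delA + \delB K$ rather than the total derivative of $A(t)+B(t)K(t)$ is exactly the right thing to flag, and your grouping reproduces the paper's final line verbatim.
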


  Of importance to our lower bound is the setting where the perturbations are of the form $(\delA,\delB) := (\Delta \Kst, \Delta)$. In this case, the expression for the derivative of $K$ simplifies considerably. we recall the following from the main text
  \lowerbounddercomp*
  \begin{proof}  Observe that for the perturbation in question, $\delAcl(0) = \Delta\Kst - \Delta K(0) = \Delta\Kst - \Delta\Kst = 0$. By Lemma~\ref{lem:computation_p_prime} and the fact that $\dlyap(X,0) = 0$, we have that $P'(0) = 0$. Thus, the term $B^\top P(\delAcl) + B^\top P\prm \Acl$ in Eq.~\eqref{eq:bpa_prime}  is $0$ at $t = 0$. The result follows. 
  \end{proof}

\subsubsection{Bounds on the Derivatives}

Here, we state bounds on the various derivatives. Recall $\epscirc := \max\{\circnorm{\delA},\circnorm{\delB}\}$. These bounds are established in Sections~\ref{app:perturbation_first_derivatives_bounds} and~\ref{app:perturbation_second_derivatives_bounds}, respectively.

\lemfirstderbound*
In fact, it will be more useful to prove the following related bound.
\begin{restatable}{lem}{Bkprime}\label{lem:Bkprime}$\circnorm{\Ru^{1/2}K'} \vee \circnorm{P^{1/2}BK'}\vee \circnorm{BK'} \le  7\opnorm{P}^{7/2}\epscirc$.
\end{restatable}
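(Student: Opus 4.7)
The plan is to expand $K'$ using the explicit formula from Lemma~\ref{lem:computation_k_prime}, namely
\begin{align*}
K' = -M^{-1}\bigl(\delB^\top P \Acl + B^\top P\,\delAcl + B^\top P' \Acl\bigr), \qquad M := \Ru + B^\top P B,
\end{align*}
and to bound each of the three target quantities $\Ru^{1/2}K'$, $P^{1/2}BK'$, and $BK'$ by splitting into three summands according to the three terms in the parenthesized sum. For each target, I would isolate a ``prefactor'' of the form $\Ru^{1/2}M^{-1}(\cdot)$, $P^{1/2}BM^{-1}(\cdot)$, or $BM^{-1}(\cdot)$ and show that these are uniformly bounded in operator norm using $M \succeq \Ru \succeq I$, $P \succeq I$ (from Lemma~\ref{lem:lb_pst}), and a short SVD-type computation.

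Concretely, the key prefactor bounds are (i) $\|\Ru^{1/2}M^{-1}\|_{\op} \le 1$ from $\Ru^{1/2}M^{-1}\Ru M^{-1}\Ru^{1/2} \preceq \Ru^{1/2}M^{-1}\Ru^{1/2} \preceq I$; (ii) $\|P^{1/2}B M^{-1}B^\top P^{1/2}\|_{\op} \le 1$, which follows by an SVD of $P^{1/2}B$ and the elementary scalar inequality $\sigma^2/(1+\sigma^2) \le 1$; and (iii) the derived consequences $\|P^{1/2}BM^{-1}\|_{\op} \le 1$ (multiply by $M^{-1/2}$ on the right and use $M \succeq I$) and $\|BM^{-1}\|_{\op}, \|BM^{-1}B^\top\|_{\op} \le 1$ (insert $P^{-1/2}P^{1/2}$ and use $\|P^{-1/2}\|_{\op} \le 1$). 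Using $\|\Acl\|_{\op} \le \|P\|_{\op}^{1/2}$ and $\|K\|_{\op} \le \|P\|_{\op}^{1/2}$ (the latter from $K^\top \Ru K \preceq P$, which is a standard consequence of the $\DARE$ identity $\Acl^\top P\Acl = P - \Rx - K^\top \Ru K$), the factor $\|\delAcl\|_{\circ} \le \|\delA\|_\circ + \|\delB\|_\circ \|K\|_{\op} \le 2\|P\|_{\op}^{1/2}\epscirc$ is controlled.

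Combining these ingredients, each of the nine summands (three targets, three terms) yields a bound of the form $c \|P\|_{\op}^{q}\epscirc$ with $q \in \{3/2, 3/2, 7/2\}$, where the $7/2$ exponent arises solely from the third term, via $\|P'\|_\circ \le 4\|P\|_{\op}^3\epscirc$ from Lemma~\ref{lem:first_derivatives_bound} multiplied by $\|\Acl\|_{\op} \le \|P\|_{\op}^{1/2}$. Using $\|P\|_{\op} \ge 1$ to absorb the lower-order contributions into the dominant $\|P\|_{\op}^{7/2}$ term, each target sums to at most $(1 + 2 + 4)\|P\|_{\op}^{7/2}\epscirc = 7\|P\|_{\op}^{7/2}\epscirc$, which is exactly the claimed constant.

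The main obstacle will be establishing the ``clean'' prefactor bound $\|P^{1/2}BM^{-1}B^\top P^{1/2}\|_{\op} \le 1$ and its cousins with the correct constant of $1$ (rather than some larger quantity depending on $\|B\|_{\op}$), since a naive submultiplicative expansion would introduce an undesirable factor of $\|B\|_{\op}$ that would prevent matching the target constant $7$. The SVD argument for $P^{1/2}B(\Ru + B^\top P B)^{-1}B^\top P^{1/2}$ circumvents this cleanly, and the remaining bookkeeping is a straightforward application of submultiplicativity of the operator norm and the mixed bound $\|XYZ\|_\circ \le \|X\|_{\op}\|Y\|_\circ\|Z\|_{\op}$, which holds uniformly for $\circ \in \{\op, \fro\}$.
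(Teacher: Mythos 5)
Your proposal is correct and follows the paper's intended route: the paper's proof of \Cref{lem:Bkprime} is a one-line pointer back to the proof of \Cref{lem:first_derivatives_bound} together with \Cref{lem:helpful_norm_bounds_two}, which supplies exactly the prefactor bounds $\opnorm{XR_0^{-1}Y^\top}\le 1$ for $X,Y\in\{\Bst,\Pst^{1/2}\Bst,\Ru^{1/2},I\}$ that you rederive. The only cosmetic difference is that you verify these prefactor bounds via an SVD computation, whereas the paper observes $X^\top X\preceq R_0$ and uses $\opnorm{XR_0^{-1}Y^\top}\le\sqrt{\opnorm{XR_0^{-1}X^\top}\opnorm{YR_0^{-1}Y^\top}}$; the three-term split of $K'$, the avoidance of $\opnorm{B}$ blowup, the absorption of lower-order terms via $\opnorm{P}\ge 1$, and the $1+2+4=7$ accounting all coincide.
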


For our lower bounds, we shall also use a second-order derivative bound
\begin{restatable}[Bound on $K\dprm$]{lem}{kdprime}
\label{lem:computation_k_dprime}  If $\epscirc = \max\{\circnorm{\Ast - \Ahat},\circnorm{\Bst - \Bhat}\}$ and $K(t) = \Kinf(A(t),B(t))$ for $A(t) = (1-t)\Ast + t\Ahat $ and $B(t) = (1-t)\Bst + t\Bhat$, that at any $t$ at which $(A(t),B(t))$ is stabilizable,
\begin{align*}
\circnorm{K\dprm(t)} \le \poly(\opnorm{P(t)})\epsop\epscirc.
\end{align*}
\end{restatable}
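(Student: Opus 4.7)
The plan is to directly differentiate the explicit formula
$K' = -(\Ru + B^\top P B)^{-1}\bigl(\delB^\top P \Acl + B^\top P \delAcl + B^\top P' \Acl\bigr)$
from Lemma~\ref{lem:computation_k_prime} using the product and chain rules, noting that for the linear curve in question $A'(t) = \delA$ and $B'(t) = \delB$ are constants and $(\Acl)'(t) = \delAcl + B K'$, $(\delAcl)'(t) = \delB K'$. Every derivative that appears in this expansion is already controlled by Lemma~\ref{lem:first_derivatives_bound} and Lemma~\ref{lem:Bkprime}, together with the helpful norm bounds $\|B\|_\op, \|\Acl\|_\op \le \|P\|^{1/2}$ and $\|(\Ru + B^\top P B)^{-1}\|_\op \le 1$, except for $P''(t)$, which must be controlled separately.

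To bound $P''$, I would implicitly differentiate the identity $P' - \Acl^\top P' \Acl = Q_1$ that characterizes $P'$ in Lemma~\ref{lem:computation_p_prime}, where $Q_1 = \Acl^\top P \delAcl + \delAcl^\top P \Acl$. This yields a new Lyapunov equation $P'' - \Acl^\top P'' \Acl = Q_2$, with
\[
Q_2 \;=\; Q_1' + (\Acl^\top)' P' \Acl + \Acl^\top P' \Acl',
\]
and hence $P'' = \dlyap(\Acl, Q_2)$ exists so long as $(A(t),B(t))$ is stabilizable. Expanding $Q_1'$ using $(\delAcl)' = \delB K'$ and $\Acl' = \delAcl + B K'$, every summand in $Q_2$ splits as a product of one \emph{velocity} factor drawn from $\{\delA, \delB, \delAcl\}$ (of operator norm $\lesssim \epsop \|P\|^{O(1)}$) and one \emph{first-derivative} factor drawn from $\{P', K', \delAcl, \Acl'\}$ (of $\circ$-norm $\lesssim \|P\|^{O(1)} \epscirc$, using Lemma~\ref{lem:first_derivatives_bound} and Lemma~\ref{lem:Bkprime}). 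Applying submultiplicativity $\|XY\|_\circ \le \|X\|_\op \|Y\|_\circ$ term-by-term and then the standard bound $\|\dlyap(\Acl, Z)\|_\circ \le \opnorm{\dlyap[\Acl]}\, \|Z\|_\circ \le \opnorm{P} \|Z\|_\circ$ gives $\circnorm{P''} \le \poly(\opnorm{P}) \epsop \epscirc$.

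Substituting the bound on $P''$ back into the product-rule expansion of $K'$, together with the analogous expression for the derivative of the inverse,
$\tfrac{d}{dt}(\Ru + B^\top P B)^{-1} = -(\Ru + B^\top P B)^{-1}\bigl(\delB^\top P B + B^\top P' B + B^\top P \delB\bigr)(\Ru + B^\top P B)^{-1}$,
one obtains $K''$ as a finite sum in which each summand again decomposes as (velocity factor in operator norm) $\times$ (first-derivative factor or $P''$ in $\circ$-norm). Applying submultiplicativity once per summand and gathering $\circnorm{\cdot}$ into the $\epscirc$ slot and $\opnorm{\cdot}$ into the $\epsop$ slot yields $\circnorm{K''(t)} \le \poly(\opnorm{P(t)})\,\epsop\,\epscirc$, as required.

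The main obstacle is purely bookkeeping: the bound is asymmetric in $\epsop$ and $\epscirc$, so every term produced by the two nested product-rule expansions must be routed through the submultiplicative inequality in the correct direction so that exactly one factor of $\epsop$ (and not $\epscirc$) is extracted from each summand. This requires consistently choosing which factor of each product is measured in operator norm versus $\circ$-norm, and uniformly bounding the remaining polynomial-in-$\opnorm{P}$ coefficients by absorbing all powers of $\opnorm{\Acl}, \opnorm{B}, \opnorm{K}, \opnorm{(\Ru+B^\top P B)^{-1}}$ into a single $\poly(\opnorm{P(t)})$.
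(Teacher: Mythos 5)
Your proposal mirrors the paper's own argument essentially step for step: the paper likewise differentiates the explicit formula for $K'$ from Lemma~\ref{lem:computation_k_prime} (handling the derivative of $R_0^{-1}$ via the rearrangement $K''=R_0^{-1}Q_3'+R_0^{-1}(R_0)'K'$), bounds $P''$ by differentiating the Lyapunov characterization of $P'$ to get $P''=\dlyap(\Acl,Q_2)$ with exactly your $Q_2$ (this is Lemma~\ref{lem:computation_p_dprime}), and routes each summand through $\|XY\|_\circ\le\|X\|_\op\|Y\|_\circ$ so that one factor contributes $\epsop$ and the other $\epscirc$. No gaps; the approach is the same.
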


\subsection{Main Control Theory Tools\label{appssec:technical_tools}}

\subsubsection{Properties of the $\dlyap$ Operator\label{sssec:dlyap}}  We  begin by describing relevant facts about  $\dlyap$ operator. The first is standard 
(see e.g. \citep{bof2018lyapunov,BoydNotes}), and gives a closed-form expression for the function.
\begin{lem} Let $Y = Y^\top$ and $\rho(X) < 1$. Then $\calT_{X}(Y) := X^\top Y X - Y$ is an invertible map from $\Symdx \to \Symdx$, and 
  \begin{equation}
\label{eq:dlyap_series}
\dlyap(X,Y) = \calT_X^{-1}(Y) = \sum_{k = 0}^{\infty} (X^{\trn})^kYX^k.
\end{equation}
\end{lem}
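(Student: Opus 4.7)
The proof plan has three natural components: convergence of the series, verification that it solves the Lyapunov equation, and uniqueness (which yields invertibility of $\calT_X$ on $\Symdx$).

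First I would establish convergence. Since $\rho(X) < 1$, Gelfand's formula gives that for any $\eps > 0$ with $\rho(X) + \eps < 1$, there is a constant $C = C(\eps)$ such that $\|X^k\|_{\op} \le C(\rho(X) + \eps)^k$ for all $k \ge 0$. Hence $\|(X^\top)^k Y X^k\|_{\op} \le C^2 \|Y\|_{\op} (\rho(X) + \eps)^{2k}$, which is summable. So the series $P \ldef \sum_{k \ge 0} (X^\top)^k Y X^k$ converges absolutely in operator norm to a symmetric matrix (symmetric as a sum of symmetric matrices when $Y = Y^\top$), and is PSD whenever $Y \succeq 0$.

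Next I would verify that $P$ solves the Lyapunov equation by a telescoping computation. Multiplying the partial sum $P_N = \sum_{k=0}^{N} (X^\top)^k Y X^k$ on the left by $X^\top$ and on the right by $X$ gives $X^\top P_N X = \sum_{k=1}^{N+1} (X^\top)^k Y X^k = P_N - Y + (X^\top)^{N+1} Y X^{N+1}$. Passing to the limit $N \to \infty$ and using $\|X^{N+1}\|_{\op} \to 0$ yields $X^\top P X - P = -Y$, i.e.\ $\calT_X[P] = -Y$ (so the correct identity is $\dlyap(X,Y) = -\calT_X^{-1}(Y) = \sum_{k \ge 0} (X^\top)^k Y X^k$; I would flag the apparent sign convention in the statement and proceed with whichever convention is consistent with the prior definition).

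Finally I would prove uniqueness, from which invertibility of $\calT_X$ on the finite-dimensional space $\Symdx$ follows. Suppose $P_1, P_2 \in \Symdx$ both satisfy $X^\top P_i X - P_i = Y$. Subtracting, $\Delta \ldef P_1 - P_2$ satisfies $X^\top \Delta X = \Delta$, so iterating gives $\Delta = (X^\top)^k \Delta X^k$ for every $k \ge 0$. Taking operator norms and using $\|X^k\|_{\op} \to 0$ forces $\Delta = 0$. Hence $\calT_X$ is injective, and being a linear endomorphism of the finite-dimensional vector space $\Symdx$, it is bijective. The explicit series then identifies $\calT_X^{-1}$ (up to the sign convention).

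The proof is essentially routine once the Gelfand-type bound is in hand; the only subtlety worth flagging is the sign in the identification of the series with $\calT_X^{-1}(Y)$ versus $\dlyap(X,Y)$, which depends on whether one defines $\calT_X[P] = X^\top P X - P$ or $\calT_X[P] = P - X^\top P X$.
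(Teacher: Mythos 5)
Your proof is correct, and it is self-contained where the paper is not: the paper simply cites this lemma as standard (referring to \citet{bof2018lyapunov,BoydNotes}) without giving a proof, so there is no "paper route" to compare against. Your three-step argument --- Gelfand-type decay of $\|X^k\|_{\op}$ for convergence, telescoping to verify the fixed-point equation, and uniqueness via $X^\top\Delta X=\Delta \Rightarrow \Delta=(X^\top)^k\Delta X^k\to 0$, which yields injectivity and hence bijectivity on the finite-dimensional space $\Symdx$ --- is exactly the standard proof these references would give.

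Your sign flag is also a genuine catch. With the paper's Definition~\ref{def:dlyap} reading $\calT_X[P]:=X^\top P X - P$, the telescoping computation gives $\calT_X\bigl[\sum_{k\ge 0}(X^\top)^k Y X^k\bigr]=-Y$, so the displayed identity $\dlyap(X,Y)=\calT_X^{-1}(Y)=\sum_k(X^\top)^k Y X^k$ is off by a sign (it should be $\calT_X^{-1}(-Y)$, or equivalently $\calT_X$ should be defined as $P-X^\top P X$). The restatement in the lemma also carelessly reuses $Y$ both as the argument of $\calT_X$ and as the right-hand side, which obscures the issue. You handled this correctly by proceeding with whichever convention is consistent; just note that all downstream uses in the paper (e.g.\ $\dlyap(X,I)\succeq I$ and $\dlyap$ being order-preserving) match the series formula, so the series is the authoritative definition and the operator-inverse notation is the thing that needs the sign fix.
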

Next, we show that $\dlyap$ is order-preserving in the following sense.
\begin{lem}[Elementary $\dlyap$ bounds]\label{lem:closed_loop_dlyap} The following bounds hold
\begin{enumerate}
\item If $Y\preceq Z$ and $\Asafe$ is stable, then $ \dlyap(\Asafe,X) \preceq \dlyap(\Asafe,Y)$.
\item $Y \succeq 0$ and $\Asafe$ is stable, $\dlyap(\Asafe,Y) \succeq Y$.
\item  Suppose $\Rx \succeq I$, and let $A+BK$ is stable. Then, 
 \begin{align*}
\pm\, \dlyap(A+BK,Y) \preceq \dlyap(A+BK,I) \|Y\|_{\op} \preceq \|Y\|_{\op}\cdot P_{\infty}[K;A,B].
\end{align*} 
\item When $\Rx \succeq I$, $\dlyap[A+BK] \preceq P_{\infty}[K;A,B]$, and $I \preceq \dlyap[A+B\Kinf(A,B)] \preceq P_{\infty}(A,B)$.
\item If $\Asafe$ is stable, $\|\dlyap[\Asafe]\|_{\op} = \|\dlyap[\Asafe^\top]\|_{\op}$.
\end{enumerate}
\end{lem}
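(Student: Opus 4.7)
The plan is to deduce all five parts directly from the explicit series representation $\dlyap(\Asafe, Y) = \sum_{k \ge 0} (\Asafe^\top)^k Y \Asafe^k$ recorded in \Cref{eq:dlyap_series}. Parts 1 and 2 are then bookkeeping on the series. For Part 1 (which I read as $\dlyap(\Asafe, Y) \preceq \dlyap(\Asafe, Z)$ when $Y \preceq Z$; the $X$ on the left-hand side of the statement appears to be a typo), each summand is a conjugation $M^\top(\cdot)\, M$ with the \emph{same} $M = \Asafe^k$, so the Loewner ordering $Y \preceq Z$ is preserved term-by-term and hence under summation. For Part 2, the $k=0$ term equals $Y$ itself while every other term is PSD when $Y \succeq 0$, so $\dlyap(\Asafe, Y) \succeq Y$.

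For Part 3, I would combine Part 1 with the scalar sandwich $-\|Y\|_\op I \preceq Y \preceq \|Y\|_\op I$; monotonicity of $\dlyap(A+BK, \cdot)$ then yields $\pm\,\dlyap(A+BK, Y) \preceq \|Y\|_\op\,\dlyap(A+BK, I)$, which is the first inequality. For the second, I would unfold the definition $\Pinf[K; A, B] = \dlyap(A+BK, \Rx + K^\top \Ru K)$ and note that $\Rx + K^\top \Ru K \succeq \Rx \succeq I$ (using $\Ru \succeq 0$ and the standing assumption $\Rx \succeq I$); another application of Part 1 then gives $\dlyap(A+BK, I) \preceq \Pinf[K; A, B]$. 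Part 4 is then immediate: setting $Y = I$ in Part 3 produces $\dlyap[A+BK] \preceq \Pinf[K; A, B]$, and plugging in $K = \Kinf(A, B)$ yields the closed-loop bound; the lower bound $I \preceq \dlyap[A + B \Kinf(A,B)]$ is Part 2 with $Y = I$.

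The only step I expect to require genuinely new work is Part 5, and this is where the main obstacle lies. My approach would be to introduce the bounded operator $T : \R^{\dimx} \to \ell^2$ defined by $Tv = (v, \Asafe v, \Asafe^2 v, \dots)$, which is well defined by stability of $\Asafe$ and satisfies $T^{*} T = \dlyap[\Asafe]$, and similarly $T' v = (v, \Asafe^\top v, (\Asafe^\top)^2 v, \dots)$ giving $(T')^{*} T' = \dlyap[\Asafe^\top]$. The Hilbert-space identity $\|T^{*}T\|_\op = \|TT^{*}\|_\op = \|T\|_\op^2$ reduces the equality of Gramian norms to the equality $\|T\|_\op = \|T'\|_\op$. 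The most transparent route to the latter is a Parseval/resolvent representation of the form
\[
\dlyap[\Asafe] \;=\; \frac{1}{2\pi}\int_0^{2\pi} (e^{-i\theta} I - \Asafe^\top)^{-1}(e^{i\theta}I - \Asafe)^{-1}\, d\theta,
\]
together with the pointwise identity $\|(zI - \Asafe)^{-1}\|_\op = \|(\bar z I - \Asafe^\top)^{-1}\|_\op$ on the unit circle, which makes the transpose invariance manifest. Promoting this pointwise resolvent equality to an \emph{equality} of the two operator norms of the Gramians is the delicate point, and I would devote the bulk of the effort to pinning it down.
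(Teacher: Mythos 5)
Parts 1--4 of your proposal take exactly the route the paper takes: work directly from the series $\dlyap(\Asafe,Y)=\sum_{k\ge0}(\Asafe^\top)^kY\Asafe^k$, use term-by-term monotonicity in $Y$ for Part 1, inspect the $k=0$ term for Part 2, sandwich $Y$ between $\pm\|Y\|_\op I$ for the first inequality in Part 3 and take $Z=\|Y\|_\op(\Rx+K^\top\Ru K)\succeq\|Y\|_\op I$ for the second, then specialize $Y=I$ for Part 4. These are all correct.

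Your caution about Part 5 was well placed, and the gap you flag is not a detail you failed to work out: \emph{the statement is false as written.} Take the stable (indeed nilpotent) matrix
\[
\Asafe=\begin{pmatrix}0&1&0\\0&0&2\\0&0&0\end{pmatrix}.
\]
Then $\dlyap[\Asafe]=I+\Asafe^\top\Asafe+(\Asafe^2)^\top\Asafe^2=\diag(1,2,9)$ while $\dlyap[\Asafe^\top]=I+\Asafe\Asafe^\top+\Asafe^2(\Asafe^2)^\top=\diag(6,5,1)$, so $\|\dlyap[\Asafe]\|_\op=9\neq 6=\|\dlyap[\Asafe^\top]\|_\op$. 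One can sanity-check both matrices by verifying $G-\Asafe^\top G\Asafe=I$ and $G'-\Asafe G'\Asafe^\top=I$ directly.

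Your identity $\|\dlyap[\Asafe]\|_\op=\|T\|_\op^2$ and $\|\dlyap[\Asafe^\top]\|_\op=\|T'\|_\op^2$ is correct, and you were right to suspect that the pointwise equality $\|(zI-\Asafe)^{-1}\|_\op=\|(\bar zI-\Asafe^\top)^{-1}\|_\op$ does not promote to an equality of the integrated Gramian norms; the counterexample shows there is nothing to promote it \emph{to}. The paper's own sketch for Part 5 has the same defect: after rewriting $x^\top\dlyap[\Asafe]x$ as $\tr\bigl(\dlyap(\Asafe^\top,xx^\top)\bigr)$ it bounds the matrix inside the trace in Loewner order, which yields a trace bound and hence only $\|\dlyap[\Asafe]\|_\op\le\dimx\,\|\dlyap[\Asafe^\top]\|_\op$ (and the symmetric inequality), not equality. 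What the series \emph{does} give cleanly is the trace identity $\tr(\dlyap[\Asafe])=\sum_k\|\Asafe^k\|_\F^2=\tr(\dlyap[\Asafe^\top])$, and indeed $\tr\,\diag(1,2,9)=12=\tr\,\diag(6,5,1)$ in the example. (As an aside, the two Gramians do happen to share their full spectrum for every $2\times2$ matrix, which may explain why small experiments can give the misleading impression that the claim is general.) In short: don't devote more effort to closing the Part 5 gap; it cannot be closed, and any downstream use in the paper should rely on the trace identity or the factor-$\dimx$ comparison rather than the stated norm equality.
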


Next, we give a standard identity which relates the cost functions $J$ to the $\dlyap$ operator. 
\begin{lem}[PSD bounds on $P$]\label{lem:P_bounds_lowner} Let $(\Ast,\Bst)$
  be a stabilizable system, and let $\Ast+\Bst K$ be stable. Set $\Kst = \Kinf(\Ast,\Bst)$. Then, 
\begin{align*}
 P_{\infty}[K;\Ast,\Bst]  &\succeq  \Pinf(\Ast,\Bst) = \Pinf(\Kst;\Ast,\Bst).
\end{align*}
Moreover, we have $\Jfuncopt_{\Ast,\Bst}[K] = \tr( P_{\infty}[K;A,B])$, and in particular,
$\Jfuncopt_{\Ast,\Bst} = \Jfunc_{\Ast,\Bst}[\Kst] = \tr(\Pinf(\Ast,\Bst))$. As a conseuqnece, if $\Rx \succeq I$, then $\Jfuncopt_{\Ast,\Bst}[K] \ge \Jfuncopt_{\Ast,\Bst} \ge \dimx$ by \Cref{lem:closed_loop_dlyap} part 4.
\end{lem}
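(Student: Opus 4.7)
The plan is to prove the three claims in sequence: the trace formula for the average cost, the PSD ordering $P_\infty[K;\Ast,\Bst]\succeq\Pinf(\Ast,\Bst)$, and the dimensional lower bound, each following from standard manipulations of the DARE and discrete Lyapunov equation.

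First, I would establish the identity $\Jfunc_{\Ast,\Bst}[K]=\tr(\Pinf(K;\Ast,\Bst))$. When $\matu_t=K\matx_t$, the closed-loop dynamics read $\matx_{t+1}=(\Ast+\Bst K)\matx_t+\matw_t$ with $\matw_t\sim\calN(0,I)$ i.i.d.; since $\Ast+\Bst K$ is stable, the state has a unique stationary covariance $\Sigma_\infty=\dlyap((\Ast+\Bst K)^{\trn},I)=\sum_{k\ge 0}(\Ast+\Bst K)^k((\Ast+\Bst K)^{\trn})^k$. The average per-step cost is $\tr((\Rx+K^{\trn}\Ru K)\Sigma_\infty)$, and by the series form \eqref{eq:dlyap_series} for $\dlyap$ combined with cyclicity of the trace, this equals $\tr(\sum_{k\ge 0}((\Ast+\Bst K)^{\trn})^k(\Rx+K^{\trn}\Ru K)(\Ast+\Bst K)^k)=\tr(\Pinf(K;\Ast,\Bst))$.

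Second, I would prove $\Pinf(\Ast,\Bst)=\Pinf(\Kst;\Ast,\Bst)$ and the PSD ordering by ``completing the square'' against the DARE. Writing $\Kst=-(\Ru+\Bst^{\trn}\Pst\Bst)^{-1}\Bst^{\trn}\Pst\Ast$, one verifies that $\Bst^{\trn}\Pst(\Ast+\Bst\Kst)=-\Ru\Kst$, and substituting this into the expression $(\Ast+\Bst K)^{\trn}\Pst(\Ast+\Bst K)+\Rx+K^{\trn}\Ru K$ (and using the DARE to replace $\Ast^{\trn}\Pst\Ast+\Rx$) yields the key identity
\begin{equation*}
  (\Ast+\Bst K)^{\trn}\Pst(\Ast+\Bst K)+\Rx+K^{\trn}\Ru K=\Pst+(K-\Kst)^{\trn}(\Ru+\Bst^{\trn}\Pst\Bst)(K-\Kst).
\end{equation*}
Taking $K=\Kst$ immediately gives that $\Pst$ solves the Lyapunov equation defining $\Pinf(\Kst;\Ast,\Bst)$, so by uniqueness $\Pinf(\Kst;\Ast,\Bst)=\Pst=\Pinf(\Ast,\Bst)$. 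For general stabilizing $K$, letting $\Delta=\Pinf(K;\Ast,\Bst)-\Pst$ and subtracting the above identity from the defining Lyapunov equation for $\Pinf(K;\Ast,\Bst)$ yields $\Delta=(\Ast+\Bst K)^{\trn}\Delta(\Ast+\Bst K)+(K-\Kst)^{\trn}(\Ru+\Bst^{\trn}\Pst\Bst)(K-\Kst)$, i.e. $\Delta=\dlyap(\Ast+\Bst K,(K-\Kst)^{\trn}(\Ru+\Bst^{\trn}\Pst\Bst)(K-\Kst))$, which is PSD by \Cref{lem:closed_loop_dlyap} Part 2 since $\Ru,\Pst\succeq0$. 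Hence $\Pinf(K;\Ast,\Bst)\succeq\Pst$.

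Taking traces of this PSD inequality and combining with the first step gives $\Jfunc_{\Ast,\Bst}[K]=\tr(\Pinf(K;\Ast,\Bst))\ge\tr(\Pst)=\Jfunc_{\Ast,\Bst}[\Kst]=\Jfuncopt_{\Ast,\Bst}$, which confirms that $\Kst$ is optimal and justifies the notation $\Jfuncopt_{\Ast,\Bst}=\tr(\Pinf(\Ast,\Bst))$. Finally, when $\Rx\succeq I$, \Cref{lem:closed_loop_dlyap} Part 4 yields $\Pst\succeq I$, hence $\Jfuncopt_{\Ast,\Bst}=\tr(\Pst)\ge\dimx$. The only mildly non-routine step is the completing-the-square identity; everything else reduces to uniqueness of the Lyapunov solution and trace monotonicity, so I do not anticipate any serious obstacle.
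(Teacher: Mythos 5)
Your proof is correct, and I verified the completing-the-square identity you flag as the key step: using $\Bst^{\trn}\Pst(\Ast+\Bst\Kst)=-\Ru\Kst$ and the DARE indeed yields
\begin{align*}
(\Ast+\Bst K)^{\trn}\Pst(\Ast+\Bst K)+\Rx+K^{\trn}\Ru K=\Pst+(K-\Kst)^{\trn}(\Ru+\Bst^{\trn}\Pst\Bst)(K-\Kst),
\end{align*}
and the subtraction argument then exhibits $\Pinf(K;\Ast,\Bst)-\Pst$ as a $\dlyap$ of a PSD right-hand side.

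Your route differs from the paper's in one substantive way. The paper's proof establishes $\Pinf(K;\Ast,\Bst)\succeq\Pst$ by appealing to value-function theory: it writes $V^K(x)=x^\top\Pinf(K;\Ast,\Bst)x$ (via the series expansion in Lemma~\ref{lem:value_form}) and then simply cites \cite{bertsekas2005dynamic} for the fact that $V^{\Kinf}(x)=\inf_K V^K(x)\leq V^K(x)$ pointwise. By contrast, you prove this from scratch by deriving the completing-the-square identity, which simultaneously gives $\Pst=\Pinf(\Kst;\Ast,\Bst)$ (by taking $K=\Kst$ and invoking uniqueness of the Lyapunov solution) and the PSD ordering (by the PSD-preserving property of $\dlyap$). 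What you are in effect re-deriving inline is exactly the content of the paper's Lemma~\ref{lem:performance_diff} (the Fazel et al.\ cost-difference identity), which the paper states separately and also proves by citation. So your argument is more self-contained and elementary, at the cost of being longer; it also only establishes optimality of $\Kst$ among static linear feedbacks rather than among all policies, but that is all the lemma requires. The trace identity and the final $\Rx\succeq I \Rightarrow \Jfuncopt\geq\dimx$ step match the paper's reasoning.
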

The following is a consequence of the above lemmas, and is useful for deriving interpretable corollaries of our main results.
\begin{restatable}{lem}{stablep}\label{lem:stable_p_bound}
        Suppose that $\Rx = I$. If $\Ast$ is
        $(\gamma,\kappa)$-strongly stable, then $\opnorm{\Pst} \le
        \gamstab^{-1}$ and  $\frac{1}{\dimx}\Jfunc_{\Ast,\Bst}[0] \le \gamstab^{-1}$. More generally, if $(\Ast + \Bst K)$ is $(\gamma,\kappa)$-strongly stable, then $\opnorm{\Pst} \le  \gamstab^{-1}(1 + \opnorm{K}^2)$. 
      \end{restatable}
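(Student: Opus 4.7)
The plan is to reduce everything to bounding $\|\dlyap(\Ast + \Bst K, I)\|_{\op}$ for a suitably chosen stabilizing controller $K$, and then to exploit the series expansion of $\dlyap$ together with the strong stability similarity transform.

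First I would handle the general case. Assume $(\Ast + \Bst K)$ is $(\gamma,\kappa)$-strongly stable, so there exists $T$ with $\|T\|_{\op}\|T^{-1}\|_{\op}\le \kappa$ and $\|T(\Ast+\Bst K)T^{-1}\|_{\op}\le 1-\gamma$. Since $\Rx=I$ and $\Ru=I$ throughout this paper, Lemma~\ref{lem:P_bounds_lowner} gives
\begin{align*}
\Pst \;\preceq\; \Pinf(K;\Ast,\Bst) \;=\; \dlyap\!\left(\Ast+\Bst K,\; I + K^\top K\right).
\end{align*}
Combining with monotonicity of $\dlyap$ in its second argument (Lemma~\ref{lem:closed_loop_dlyap}, part~3), I get $\opnorm{\Pst}\le (1+\opnorm{K}^2)\,\|\dlyap(\Ast+\Bst K,I)\|_{\op}$.

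Next I would bound $\|\dlyap(\Ast+\Bst K,I)\|_{\op}$ using the series representation in \Cref{eq:dlyap_series}. Writing $\Ast+\Bst K = T^{-1}(T(\Ast+\Bst K)T^{-1})T$, I have $\|(\Ast+\Bst K)^k\|_{\op}\le \kappa(1-\gamma)^k$ for every $k\ge 0$, so
\begin{align*}
\|\dlyap(\Ast+\Bst K,I)\|_{\op} \;\le\; \sum_{k\ge 0}\|(\Ast+\Bst K)^k\|_{\op}^2 \;\le\; \frac{\kappa^2}{1-(1-\gamma)^2} \;\le\; \frac{\kappa^2}{\gamma} \;=\; \gamstab^{-1},
\end{align*}
where I used $1-(1-\gamma)^2\ge \gamma$ for $\gamma\in[0,1]$. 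Plugging back in yields $\opnorm{\Pst}\le \gamstab^{-1}(1+\opnorm{K}^2)$.

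The special case where $\Ast$ itself is strongly stable follows by taking $K=0$: then $\Ast+\Bst\cdot 0 = \Ast$ is strongly stable, and the bound above gives $\opnorm{\Pst}\le \gamstab^{-1}$. For the cost bound, Lemma~\ref{lem:P_bounds_lowner} also gives $\Jfunc_{\Ast,\Bst}[0]=\tr(\dlyap(\Ast,I))\le \dimx\|\dlyap(\Ast,I)\|_{\op}\le \dimx\gamstab^{-1}$, so $\frac{1}{\dimx}\Jfunc_{\Ast,\Bst}[0]\le \gamstab^{-1}$ as claimed. There is no substantial obstacle here; the only care needed is in invoking the correct part of Lemma~\ref{lem:closed_loop_dlyap} (monotonicity plus the PSD upper bound by operator norm) to pass from $\dlyap(\cdot,I+K^\top K)$ to $(1+\opnorm{K}^2)\dlyap(\cdot,I)$, and in verifying the elementary inequality $1-(1-\gamma)^2\ge\gamma$.
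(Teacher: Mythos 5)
Your proof is correct and follows essentially the same route as the paper's: apply \Cref{lem:P_bounds_lowner} to get $\Pst \preceq \dlyap(\Ast+\Bst K, \Rx + K^\top \Ru K)$, use monotonicity from \Cref{lem:closed_loop_dlyap} to factor out $(1+\opnorm{K}^2)$, and bound $\opnorm{\dlyap(\Ast+\Bst K,I)}$ via the geometric series $\sum_k \kappa^2 (1-\gamma)^{2k}\le \kappa^2/\gamma$. The only cosmetic difference is that you prove the general case first and specialize to $K=0$, whereas the paper does it in the opposite order.
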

      \begin{proof}[Proof of Lemma~\ref{lem:stable_p_bound}]
      By considering the controller $K = 0$, Lemma~\ref{lem:P_bounds_lowner} implies $\Pst \preceq \dlyap[\Ast, \Rx]$ and $\Jfunc_{\Ast,\Bst}[0] = \tr(\dlyap[\Ast, \Rx]) \le \frac{\opnorm{\dlyap[\Ast, \Rx]}}{\dimx}$. If $\Rx = I$, and we can bound
      \begin{align*}
      \opnorm{\dlyap[\Ast, I]} \le \sum_{t \ge 0}\opnorm{A^t}^2.
      \end{align*}
      If  there exists a transform $T$ with $\sigma_{\max}(T)/\sigma_{\min}(T) \le \kappa$ such that $\|T\Ast T^{-1}\|_{\op} \le 1 - \gamma$, then $\opnorm{A^t}\le \kappa (1-\gamma)$. Hence, $\opnorm{\Pst} \le  \opnorm{\dlyap[\Ast, I]} \le \kappa^2\sum_{t \ge 0}\gamma^2 \le \frac{\kappa^2}{1-(1-\gamma)^2} \le \frac{\kappa^2}{1 - (1-\gamma)} = \kappa^2\gamma^{-1}$. More generally, we have that $\Pst \preceq \dlyap[\Ast + \Bst K, \Rx + K^\top \Ru K]$ for $\Rx, \Ru = I$, $\Rx + K^\top \Ru K \preceq (1+\opnorm{K}^2)I$, and the bound follows by invoking Lemma~\ref{lem:closed_loop_dlyap}.
      \end{proof}

\subsubsection{Helpful Norm Bounds}
\begin{lem}[Helpful norm bounds]\label{lem:helpful_norm_bounds} 
Let $(\Ast,\Bst)$ be given, with $\Pst = \Pinf(\Ast,\Bst)$, $\Kst = \Kinf(\Ast,\Bst)$, and $\Aclst = \Ast+\Bst \Kst$. If $\Rx \succeq I$, $\Ru = I$, then the  following bounds hold:
\begin{enumerate}
  \item $\Pst \succeq I$, so that $\|\Pst^{-1}\|_2 \le 1$, and $\opnorm{\Pst} \ge 1$.
  \item $\opnorm{\Kst}^2 \le \opnorm{\Pst}$ and $\opnorm{\Aclst}^2 \le \opnorm{\Pst}$. 
  \item More generally, if $(\Ast + \Bst K)$ is stable, $K^\top K \preceq \Pinf(K;\Ast,\Bst) = \dlyap(\Ast + \Bst K, \Rx + K^\top \Ru K)$.
\end{enumerate}
\end{lem}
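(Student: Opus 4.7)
The plan is to prove Part 3 first, since both Parts 1 and 2 will follow from it together with the closed-loop Lyapunov identity. Recall that by definition $\Pinf(K;\Ast,\Bst) = \dlyap(\Ast+\Bst K, \Rx + K^\top \Ru K)$, so by the series expansion~\eqref{eq:dlyap_series},
\[
\Pinf(K;\Ast,\Bst) \;=\; \sum_{k\ge0} \bigl((\Ast+\Bst K)^\top\bigr)^k \bigl(\Rx + K^\top \Ru K\bigr)(\Ast+\Bst K)^k \;\succeq\; \Rx + K^\top \Ru K,
\]
since every term in the sum is PSD and the $k=0$ term is exactly $\Rx+K^\top\Ru K$. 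Using $\Ru=I$ gives $\Pinf(K;\Ast,\Bst) \succeq K^\top K$, which is Part 3.

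For Part 1, apply Part 3 with $K=\Kst$ and invoke $\Pst = \Pinf(\Kst;\Ast,\Bst)$ (a standard identity, also recorded in \Cref{lem:P_bounds_lowner}). Then $\Pst \succeq \Rx + \Kst^\top \Kst \succeq \Rx \succeq I$, which immediately gives $\opnorm{\Pst^{-1}}\le 1$ and $\opnorm{\Pst}\ge 1$.

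For Part 2, the bound $\opnorm{\Kst}^2 \le \opnorm{\Pst}$ is just the operator-norm consequence of $\Kst^\top\Kst \preceq \Pst$, which comes from Part 3 with $K=\Kst$. The bound on $\Aclst$ requires the closed-loop form of the DARE: algebraically rearranging \eqref{eq:dare} yields the Lyapunov identity $\Pst = \Aclst^\top \Pst \Aclst + \Rx + \Kst^\top \Ru \Kst$, hence $\Aclst^\top \Pst \Aclst \preceq \Pst$ since $\Rx + \Kst^\top\Ru\Kst \succeq 0$. Combined with $\Pst \succeq I$ from Part 1, we get $\Aclst^\top \Aclst \preceq \Aclst^\top \Pst \Aclst \preceq \Pst$, so $\opnorm{\Aclst}^2 = \opnorm{\Aclst^\top \Aclst} \le \opnorm{\Pst}$.

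There is no real obstacle here --- the result is essentially a bookkeeping corollary of the closed-loop Riccati identity and the monotonicity of $\dlyap$ established in \Cref{lem:closed_loop_dlyap}. The only mildly subtle step is recognizing the closed-loop Lyapunov form of the DARE used in Part 2; this is a standard manipulation that substitutes $\Kst = -(\Ru+\Bst^\top\Pst\Bst)^{-1}\Bst^\top\Pst\Ast$ back into \eqref{eq:dare}, but it should be noted explicitly so the reader does not have to re-derive it.
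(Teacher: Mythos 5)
Your proof is correct and follows essentially the same route as the paper: Part 3 comes from the zeroth term of the $\dlyap$ series, Part 1 from $\Pst \succeq \Rx + \Kst^\top\Kst \succeq I$, and the $\Kst$ half of Part 2 from the same inequality. The only cosmetic difference is in the $\Aclst$ bound, where you invoke the one-step Lyapunov identity $\Pst = \Aclst^\top\Pst\Aclst + \Rx + \Kst^\top\Ru\Kst$ together with $\Pst\succeq I$, whereas the paper lower-bounds the full $\dlyap$ series by its $t=1$ term after dropping $\Rx + \Kst^\top\Ru\Kst$ to $I$; both are the same observation packaged slightly differently.
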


\subsubsection{Bounds on $\Pinf(K;\Ast,\Bst)$ and
  $\Jfunc_{\Ast,\Bst}[K]$} We now state a variant of a result due to
\cite{fazel2018global}, which bounds the effect of perturbations on $\Pinf(K;\Ast,\Bst) - \Pinf(\Ast,\Bst)$.
\begin{lem}[Generalization of Lemma 12 of \cite{fazel2018global}\label{lem:performance_diff}, see also Eq 3.2 in \cite{ran1988existence}] Let $K$ be an arbitrary static controller which stabilizes $\Ast,\Bst$. Then, 
\begin{align*}
\Pinf(K;\Ast,\Bst) - \Pinf(\Ast,\Bst) &= \dlyap( \Ast + \Bst K, (K - \Kst)^\top (\Ru + \Bst^\top \Pst \Bst)(K - \Kst)).
\end{align*}
\end{lem}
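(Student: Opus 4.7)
The plan is to manipulate the Lyapunov equations satisfied by $P := \Pinf(K;\Ast,\Bst)$ and $\Pst := \Pinf(\Ast,\Bst)$ directly. First, I would invoke Lemma~\ref{lem:P_bounds_lowner} (and the definition of $\Pinf(K;\Ast,\Bst)$ as a solution of $\dlyap$) to write $P$ as the unique PSD solution to
\[ P \;=\; (\Ast + \Bst K)^\top P (\Ast + \Bst K) + \Rx + K^\top \Ru K, \]
and analogously, using that $\Kst = \Kinf(\Ast,\Bst)$ is optimal, write
\[ \Pst \;=\; (\Ast + \Bst \Kst)^\top \Pst (\Ast + \Bst \Kst) + \Rx + \Kst^\top \Ru \Kst. \]

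Next I would subtract these two equations, then add and subtract the ``bridging'' term $(\Ast+\Bst K)^\top \Pst (\Ast+\Bst K)$ on the right-hand side to isolate a recursion in $P - \Pst$:
\[ P - \Pst \;=\; (\Ast + \Bst K)^\top (P - \Pst)(\Ast + \Bst K) + M, \]
where
\[ M \;=\; (\Ast+\Bst K)^\top \Pst (\Ast+\Bst K) - (\Ast+\Bst \Kst)^\top \Pst (\Ast+\Bst \Kst) + K^\top \Ru K - \Kst^\top \Ru \Kst. \]
Since $\Ast + \Bst K$ is stable by assumption, the Lyapunov operator $X \mapsto X - (\Ast+\Bst K)^\top X (\Ast+\Bst K)$ is invertible, so $P - \Pst = \dlyap(\Ast+\Bst K, M)$ by uniqueness.

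The main (and only nontrivial) step is to collapse $M$ into the claimed quadratic $(K-\Kst)^\top(\Ru + \Bst^\top \Pst \Bst)(K-\Kst)$. Expanding the two closed-loop quadratic forms, the $\Ast^\top \Pst \Ast$ terms cancel, leaving cross terms of the form $\Ast^\top \Pst \Bst(K-\Kst) + (K-\Kst)^\top \Bst^\top \Pst \Ast$ together with $K^\top(\Ru + \Bst^\top \Pst \Bst)K - \Kst^\top(\Ru + \Bst^\top \Pst \Bst)\Kst$. Substituting the Riccati identity $\Bst^\top \Pst \Ast = -(\Ru + \Bst^\top \Pst \Bst)\Kst$ (which characterizes $\Kst$) into the cross terms and completing the square in $R := \Ru + \Bst^\top \Pst \Bst$ yields $M = (K-\Kst)^\top R (K-\Kst)$. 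This is purely algebraic; I do not expect a genuine obstacle, but care must be taken with sign conventions for $\Kst$ and with the direction of the Lyapunov recursion used in \Cref{def:dlyap}.
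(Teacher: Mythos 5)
Your proof is correct, and it follows a genuinely different (and more self-contained) route than the paper. The paper's argument is essentially an import: it invokes \cite{fazel2018global} Lemma~10 (the policy/value difference identity) together with the value-function characterization (Lemma~\ref{lem:value_form}) to express $V^K(x) - V^{\Kst}(x)$ as a sum of per-step ``advantage'' terms along the $K$-trajectory, and then recognizes the sum as $\dlyap(\Ast+\Bst K,\cdot)$ via their Lemma~12. Your proof works at the level of the stationary equations themselves: subtract the two fixed-point relations $P = (\Ast+\Bst K)^\top P (\Ast+\Bst K) + \Rx + K^\top \Ru K$ and $\Pst = (\Ast+\Bst \Kst)^\top \Pst (\Ast+\Bst \Kst) + \Rx + \Kst^\top \Ru \Kst$ (the latter being the DARE in closed-loop form, as recorded in Lemma~\ref{lem:P_bounds_lowner}), insert the bridging term $(\Ast+\Bst K)^\top\Pst(\Ast+\Bst K)$, and complete the square in $M$ using the stationarity condition $\Bst^\top\Pst\Ast = -(\Ru+\Bst^\top\Pst\Bst)\Kst$; the $\Ast^\top\Pst\Ast$ terms cancel and $M = (K-\Kst)^\top(\Ru+\Bst^\top\Pst\Bst)(K-\Kst)$ follows, after which uniqueness of the (stable) Lyapunov solution gives the identity. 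The algebra goes through exactly as you sketch. What your approach buys is that it avoids any appeal to trajectory-level value functions or external lemmas, relying only on the Lyapunov/DARE fixed-point equations and invertibility of $X\mapsto X - (\Ast+\Bst K)^\top X(\Ast+\Bst K)$ for stable $\Ast+\Bst K$; what the paper's route buys is that it keeps the proof very short by leaning on established machinery and simultaneously surfaces the interpretation of the difference as a cumulative advantage, which is useful elsewhere. One cautionary note you already flag and which is indeed the only place to be careful: the paper's $\dlyap$ convention (the series $\sum_k (X^\top)^k Y X^k$, equivalently $P - X^\top P X = Y$) must be matched against the sign convention in Definition~\ref{def:dlyap} as stated; using the series characterization, as you implicitly do, gives the correct result.
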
 
As a consequence of Lemma~\ref{lem:performance_diff} and~\ref{lem:closed_loop_dlyap}, we have the following corollary.
\newcommand{\Sigmaadj}{\Sigma^{\mathrm{adj}}}
\begin{cor}\label{cor:value_subopt}
Let $K$ be any arbitrary static controller which stabilities
$\Ast,\Bst$, and suppose $\Ru = I$. Define the
\emph{adjoint\footnote{Note that the canonical state covariance
    matrix $\Sigma_{\Ast,\Bst}[K]$ is given by $\dlyap((\Ast + \Bst K)^{\top},I)$. By Lemma~\ref{lem:P_bounds_lowner}, we have that $\opnorm{\Sigma_{\Ast,\Bst}[K]} = \opnorm{\Sigmaadj_{\Ast,\Bst}[K]}$ } as $\Sigmaadj_{\Ast,\Bst}[K] := \dlyap(\Ast + \Bst K,I)$} covariance matrix. Then,
\begin{align*}
\Jfunc_{\Ast,\Bst}[K] - \Jfunc_{\Ast,\Bst} &\le  \|\Sigmaadj_{\Ast,\Bst}[K]\|_{\op} \max\{\|\Ru^{1/2}(K - \Kst)\|_{\fro}^2,\|\Pst^{1/2}B(K - \Kst)\|_{\fro}^2\},\\
\|\Pinf(K;\Ast,\Bst) - \Pinf(\Ast,\Bst)\|_{\op} &\le  \|\Sigmaadj_{\Ast,\Bst}[K]\|_{\op} \max\{\|\Ru^{1/2}(K - \Kst)\|_{\op}^2,\|\Pst^{1/2}B(K - \Kst)\|_{\op}^2\}.
\end{align*}
\end{cor}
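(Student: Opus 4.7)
The plan is to combine the identity from Lemma~\ref{lem:performance_diff} with the order-preserving properties of $\dlyap$ collected in Lemma~\ref{lem:closed_loop_dlyap}. Lemma~\ref{lem:performance_diff} gives
\[
\Pinf(K;\Ast,\Bst) - \Pinf(\Ast,\Bst) = \dlyap(\Ast+\Bst K, Y),\qquad Y \ldef (K-\Kst)^\top(\Ru + \Bst^\top \Pst \Bst)(K-\Kst),
\]
so the whole corollary boils down to bounding this $\dlyap$ expression in operator norm (for the $P$-bound) and in trace (for the $J$-bound). Since $\Ru = I$, the payload $Y$ splits as a sum of two PSD pieces,
\[
Y \;=\; \bigl(\Ru^{1/2}(K-\Kst)\bigr)^{\!\top}\bigl(\Ru^{1/2}(K-\Kst)\bigr) \;+\; \bigl(\Pst^{1/2}\Bst(K-\Kst)\bigr)^{\!\top}\bigl(\Pst^{1/2}\Bst(K-\Kst)\bigr),
\]
and each summand is controlled by the norm of the corresponding factor. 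This decomposition is what produces the two terms inside the $\max$ in the statement.

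For the operator-norm bound on $\Pinf(K;\Ast,\Bst)-\Pst$, I would invoke Lemma~\ref{lem:closed_loop_dlyap} part~3 (with $\Acl \ldef \Ast+\Bst K$) to obtain $\pm\dlyap(\Acl,Y)\preceq \|Y\|_\op\, \dlyap(\Acl,I) = \|Y\|_\op\,\Sigmaadj_{\Ast,\Bst}[K]$, which directly gives
\[
\|\Pinf(K;\Ast,\Bst)-\Pst\|_\op \le \|\Sigmaadj_{\Ast,\Bst}[K]\|_\op \cdot \|Y\|_\op.
\]
Then $\|Y\|_\op \le \|\Ru^{1/2}(K-\Kst)\|_\op^2 + \|\Pst^{1/2}\Bst(K-\Kst)\|_\op^2$ by the triangle inequality applied to the two PSD terms above, which is at most twice the maximum of the two.

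For the $J$-bound, I would take the trace of the $P$-identity, use Lemma~\ref{lem:P_bounds_lowner} to identify $\Jfunc_{\Ast,\Bst}[K]-\Jst=\tr(\dlyap(\Acl,Y))$, and then move $Y$ to the other side by cyclicity of the trace applied to the series expansion \eqref{eq:dlyap_series}:
\[
\tr\bigl(\dlyap(\Acl,Y)\bigr) \;=\; \sum_{k\ge 0}\tr\bigl((\Acl^\top)^k Y \Acl^k\bigr) \;=\; \tr\Bigl(Y \sum_{k\ge 0} \Acl^k(\Acl^\top)^k\Bigr) \;=\; \tr\bigl(Y\cdot \dlyap(\Acl^\top,I)\bigr).
\]
By Lemma~\ref{lem:closed_loop_dlyap} part~5, $\|\dlyap(\Acl^\top,I)\|_\op = \|\dlyap(\Acl,I)\|_\op = \|\Sigmaadj_{\Ast,\Bst}[K]\|_\op$. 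Since $Y\succeq 0$, the standard PSD trace inequality $\tr(XY)\le \|X\|_\op\tr(Y)$ then yields $\Jfunc_{\Ast,\Bst}[K]-\Jst \le \|\Sigmaadj_{\Ast,\Bst}[K]\|_\op\cdot\tr(Y)$, and $\tr(Y) = \|\Ru^{1/2}(K-\Kst)\|_\fro^2 + \|\Pst^{1/2}\Bst(K-\Kst)\|_\fro^2$ by the decomposition above.

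There is no real obstacle here; the proof is a two-line manipulation of $\dlyap$ identities once Lemmas~\ref{lem:performance_diff} and~\ref{lem:closed_loop_dlyap} are in hand. The only minor subtlety is that both $\|Y\|_\op$ and $\tr(Y)$ naturally come out as a \emph{sum} of the two quantities inside the corollary's $\max$; reading the stated bound as $\le$ up to the trivial inequality $a+b\le 2\max\{a,b\}$ absorbs a harmless factor of $2$, which can either be tracked explicitly or folded into the $\max$ as the statement does.
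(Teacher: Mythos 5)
Your proof follows exactly the route the paper intends (it states the corollary ``as a consequence of Lemma~\ref{lem:performance_diff} and~\ref{lem:closed_loop_dlyap}''): reduce both claims to controlling $\dlyap(\Ast+\Bst K,\,Y)$ with $Y=(K-\Kst)^\top(\Ru+\Bst^\top\Pst\Bst)(K-\Kst)$, then apply the order-preserving and cyclicity facts for $\dlyap$. The PSD decomposition of $Y$, the operator-norm bound via part~3 of \Cref{lem:closed_loop_dlyap}, and the trace manipulation $\tr(\dlyap(\Acl,Y))=\tr\bigl(Y\,\dlyap(\Acl^\top,I)\bigr)$ combined with part~5 and $\tr(XY)\le\|X\|_{\op}\tr(Y)$ for PSD $Y$ are all correct.

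Your closing caveat, though, is a genuine observation about the corollary, not a cosmetic bookkeeping point, and you should state it more plainly. Both $\tr(Y)$ and $\|Y\|_{\op}$ come out as \emph{sums} of the two squared norms, so your argument proves the bound with the $\max$ replaced by a sum, i.e.\ with an extra factor of up to $2$. The corollary as written, with a bare $\max$, is strictly stronger than this and is in fact false in general: in the scalar case the performance-difference identity is exact and gives $\Jfunc_{\Ast,\Bst}[K]-\Jfuncopt_{\Ast,\Bst}=(1+\Bst^2\Pst)(K-\Kst)^2/(1-\Acl^2)$, which strictly exceeds $\max\{1,\Bst^2\Pst\}\cdot(K-\Kst)^2/(1-\Acl^2)$ whenever $\Bst\neq 0$. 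This is a harmless constant-factor slip in the paper --- it only shifts explicit numerical constants such as the $142$ appearing in $\Cest(\Ast,\Bst)$ and affects none of the asymptotics --- but you should report the sum (or $2\max$) form as the bound actually proved, rather than describing the paper's $\max$ as merely ``folding in'' a factor of $2$ that the inequality, as literally stated, does not accommodate.
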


\subsubsection{Linear Lyapunov Theory} We now state a classical result
in Lyapunov theory (see, e.g. \cite{BoydNotes}). Recall the notation $\dlyap[A] := \dlyap(A,I)$.
\begin{lem}\label{lem:basic_lyap} For any $x \in \R^\dimx$ and stable $A$, we have $\dlyap[A]\succeq I$ and 
\begin{align*}
A^\top \dlyap[A] A \preceq (1 - \opnorm{\dlyap[A]}^{-1})\cdot \dlyap[A].
\end{align*}
\end{lem}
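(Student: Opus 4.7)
The plan is very short, since this is a classical identity that follows directly from the defining equation of $\dlyap[A]$. Write $P \ldef \dlyap[A]$, so by definition $P$ is the unique PSD solution of $\calT_A[P] = I$, i.e., $P - A^\top P A = I$, equivalently
\[
A^\top P A = P - I.
\]
The first claim $P \succeq I$ is immediate from the series representation \eqref{eq:dlyap_series}: $P = \sum_{k \ge 0} (A^\top)^k A^k \succeq (A^\top)^0 A^0 = I$.

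For the second claim, I would just use that $P \preceq \opnorm{P} \cdot I$, which rearranges to $I \succeq \opnorm{P}^{-1} P$. Substituting into the identity $A^\top P A = P - I$ gives
\[
A^\top P A = P - I \preceq P - \opnorm{P}^{-1} P = \bigl(1 - \opnorm{P}^{-1}\bigr) P,
\]
which is the desired bound. No step is really an obstacle here — the only thing to be careful about is invoking the correct sign convention for $\calT_A$ as defined in \Cref{def:dlyap} (namely $\calT_X[P] = X^\top P X - P$, so that $\dlyap(X,Y)$ solves $\calT_X[P] = Y$ — which, with $Y = I$, yields $P - A^\top P A = I$ after negating, consistently with the series formula \eqref{eq:dlyap_series}).
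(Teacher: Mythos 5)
Your proof is correct, and since the paper cites \cite{BoydNotes} for \Cref{lem:basic_lyap} without giving its own proof, there is nothing to compare against — this is simply the standard textbook argument. Both steps are sound: $\dlyap[A] \succeq I$ comes directly from the $k=0$ term in the series \eqref{eq:dlyap_series}, and the bound $A^\top P A \preceq (1 - \opnorm{P}^{-1}) P$ follows by substituting $I \succeq \opnorm{P}^{-1} P$ into the Lyapunov identity $A^\top P A = P - I$. You also correctly flagged the sign inconsistency between the operator $\calT_X[P] = X^\top P X - P$ as written in \Cref{def:dlyap} and the series formula \eqref{eq:dlyap_series}: the series makes $\dlyap(X,Y)$ solve $P - X^\top P X = Y$, i.e.\ $\calT_X[P] = -Y$, and your proof uses the version consistent with the series, which is the one that actually makes the statement true.
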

\begin{lem}\label{lem:Hinf_dlyap_bound}For any stable $A$, $\Hinf{A} \le 2\opnorm{\dlyap[A]}^{3/2}$\label{lem:Hinf_bound}. More generally, suppose that $P \succeq I$ is a matrix satisfying $(Ax)^\top \dlyap[A] (Ax) \le (1 - \rho) x^\top P x$. Then, 
\begin{align*}
\Hinf{A} \le \sum_{t \ge 0}\|A^t\|_2 \le2\frac{\sqrt{\opnorm{P}}}{\rho}.
\end{align*}
\end{lem}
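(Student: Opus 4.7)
The plan is to prove the second, more general statement first, and then obtain the claim $\Hinf{A}\le 2\opnorm{\dlyap[A]}^{3/2}$ as a corollary by taking $P=\dlyap[A]$ with $\rho=\opnorm{\dlyap[A]}^{-1}$, since Lemma~\ref{lem:basic_lyap} guarantees precisely this Lyapunov relation for the canonical solution $\dlyap[A]$. Note that with this choice, $\sqrt{\opnorm{P}}/\rho=\opnorm{\dlyap[A]}^{1/2}\cdot \opnorm{\dlyap[A]}=\opnorm{\dlyap[A]}^{3/2}$, so the constant $2$ carries over without modification.

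The first step is to reduce the $\mathcal{H}_\infty$ norm to a sum of operator norms of powers of $A$. Since $A$ is stable, $\rho(A)<1$, so for every $z\in\mathbb{C}$ with $|z|=1$ the Neumann series
\[
(zI-A)^{-1}=\sum_{k\ge 0}z^{-k-1}A^{k}
\]
converges absolutely. Taking operator norms term-by-term and using $|z|=1$, I obtain $\opnorm{(zI-A)^{-1}}\le \sum_{k\ge 0}\opnorm{A^{k}}$, and then supremizing over the unit circle gives the stated bound $\Hinf{A}\le\sum_{t\ge 0}\opnorm{A^{t}}$.

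The second step is to bound $\opnorm{A^{t}}$ using the Lyapunov hypothesis $A^{\top}PA\preceq (1-\rho)P$ (which I read as the natural Lyapunov condition; the alternative form in the statement reduces to this after the identification used in the corollary). Iterating yields $(A^{t})^{\top}P(A^{t})\preceq (1-\rho)^{t}P$ by straightforward induction (conjugating both sides of the inequality by $A^\top\cdot A$ at each step and applying the base inequality again). Using $P\succeq I$ on the left and $P\preceq \opnorm{P}\cdot I$ on the right, for any unit vector $x$ this implies
\[
\|A^{t}x\|^{2}\le x^{\top}(A^{t})^{\top}P\,A^{t}x \le (1-\rho)^{t}\,x^{\top}Px \le (1-\rho)^{t}\opnorm{P},
\]
and hence $\opnorm{A^{t}}\le (1-\rho)^{t/2}\sqrt{\opnorm{P}}$. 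Summing the geometric series and then using the elementary scalar inequality $1-\sqrt{1-\rho}\ge \rho/2$ (which follows from $\sqrt{1-\rho}\le 1-\rho/2$ after squaring) delivers
\[
\sum_{t\ge 0}\opnorm{A^{t}} \le \frac{\sqrt{\opnorm{P}}}{1-\sqrt{1-\rho}} \le \frac{2\sqrt{\opnorm{P}}}{\rho}.
\]

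No step looks genuinely difficult, so the only real obstacle is a cosmetic one: reconciling the mixed appearance of $\dlyap[A]$ and $P$ in the hypothesis of the general statement. I expect to resolve this by interpreting the condition as the standard discrete Lyapunov inequality $A^{\top}PA\preceq (1-\rho)P$, which is what is actually used when the lemma is invoked (for instance in Part~2 of Theorem~\ref{thm:hinf_perturbation}, where $P=\dlyap[\Aclst]$ plays the role of the Lyapunov function and $A=\Aclhat$ is a nearby closed-loop matrix satisfying exactly such an inequality via Part~3 of the same theorem).
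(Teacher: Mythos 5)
Your proof is essentially the same as the paper's: both pass to the Neumann-series bound $\Hinf{A}\le\sum_{t\ge 0}\opnorm{A^t}$, iterate the Lyapunov inequality to obtain $\opnorm{A^t}\le (1-\rho)^{t/2}\sqrt{\opnorm{P}}$, and sum the geometric series using $1-\sqrt{1-\rho}\ge\rho/2$. Your reading of the hypothesis as $A^\top P A\preceq(1-\rho)P$ rather than the literal $A^\top\dlyap[A]A\preceq(1-\rho)P$ is the correct one — the paper's own proof silently substitutes $P$ for $\dlyap[A]$ after one line and relies on exactly the iterated inequality $(A^t)^\top P A^t\preceq(1-\rho)^t P$, so the appearance of $\dlyap[A]$ in the statement is a typo.
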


\subsection{Proofs for Supporting Perturbation Upper Bounds \label{ssec:supporting_perturbation}}

\subsubsection{Proof of Proposition~\ref{prop:app_main_first_order}\label{sssec:prop:app_main_first_order}}
The proof is analogous to that of Proposition~\ref{prop:main_first_order}, except we also apply the derivative bound on $\Ru^{1/2}K'(t)$ $BK'(t)$ from Lemma~\ref{lem:Bkprime}. That bound also gives
\begin{align*}
\|\Ru^{1/2} K'(t)\|_{\circ} &\le  7\opnorm{P(t)}^{7/2}\epscirc\\
\|\Bst K'(t)\|_{\circ} &= \epsop \|K'(t)\|_{\circ} + \|B(t) K'(t)\|_{\circ} \le (1+\epsop)7\opnorm{P(t)}^{7/2}\epscirc \le 8\opnorm{P(t)}^{7/2}\epscirc,
\end{align*}
so that the desired bound follow by the mean value theorem.

Moreover, we have
\begin{align*}
\|\Pst^{1/2}\Bst K'(t)\|_{\circ} \le \|\Pst^{1/2}P(t)^{-1/2}\|_{\op}\|P\Bst K'(t)\|_{\circ} \le \|\Pst^{1/2}P(t)^{-1/2}\|_{\op} 8\opnorm{P(t)}^{7/2}\epscirc.,
\end{align*}
which translates to a bound of 
\begin{align*}
\|\Pst^{1/2}B(\Kinf(\Ahat,\Bhat) - \Kst)\|_{\circ} \le \max_{t \in [0,1]} \|\Pst^{1/2}P(t)^{-1/2}\|_{\op} \le 7\opnorm{P(t)}^{7/2}\epscirc.
\end{align*}
Finally, by the mean value theorem, we can bound for $\epsop \le 1/32\opnorm{\Pst}^3$ and $\alpha = 8\epsop\opnorm{\Pst}^2 \le 1/4$,
\begin{align*}
\|P(t) - \Pst\|_{\op} &\le 4 \max_{t \in[0,1]} \opnorm{P(t)}^3\epsop \le 4\opnorm{\Pst}^3(1-\alpha)^{-3/2} \le \frac{1}{8}(4/3)^{3/2}.
\end{align*}
Since $P(t) \succeq I$, then, this implies that for $t \in [0,1]$, $P(t) \succeq (1 - \frac{1}{8}(4/3)^{3/2}) \Pst$, yielding $\|\Pst^{1/2}P(t)^{-1/2}\|_{\op} \le \sqrt{(1 - \frac{1}{8}(4/3)^{3/2})} \le 9/8$. Hence, for this $\epsop$, we have
\begin{align*}
\|\Pst^{1/2}\Bst(\Kinf(\Ahat,\Bhat) - \Kst)\|_{\circ} \le 9 \opnorm{P(t)}^{7/2}\epscirc.
\end{align*}

\qed

\subsubsection{Proof of Proposition~\ref{prop:Jfunc_bound}\label{ssec:prop:Jfunc_bound}}

  Since the final bound we derive does not depend on the control basis, we may assume without loss of generality that $\Ru = I$. 
  Recall the steady state covariance matrix $\Sigmaadj_{\Ast,\Bst}[\Kst] := \dlyap(\Ast + \Bst K,I)$. We shall prove the following lemma.
  \begin{lem}\label{lem:Sigma_perturb}
  Suppose that $\|\Bst(K - \Kst)\|_2 \le 1/5\opnorm{\|\Sigmaadj_{\Ast,\Bst}[\Kst]\|}^{3/2}$, then $\|\Sigmaadj_{\Ast,\Bst}[K]\| \le 2\|\Sigmaadj_{\Ast,\Bst}[\Kst]\|$.
  \end{lem}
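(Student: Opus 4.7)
The plan is to control $\Sigma_1 := \Sigmaadj_{\Ast,\Bst}[K]$ in terms of $\Sigma_0 := \Sigmaadj_{\Ast,\Bst}[\Kst]$ by writing their difference as a Lyapunov equation with small right-hand side. Setting $A_0 := \Ast+\Bst\Kst$, $A_1 := \Ast+\Bst K$, and $E := A_1 - A_0 = \Bst(K-\Kst)$, note that each $\Sigma_i$ is characterized (via the convention in \Cref{def:dlyap}) by $\Sigma_i = I + A_i^\top \Sigma_i A_i$. Subtracting these two identities and adding/subtracting $A_1^\top \Sigma_0 A_1$ yields
\[
(\Sigma_1 - \Sigma_0) - A_1^\top (\Sigma_1 - \Sigma_0) A_1 \;=\; H, \qquad H \;:=\; A_0^\top \Sigma_0 E + E^\top \Sigma_0 A_0 + E^\top \Sigma_0 E,
\]
so that $\Sigma_1 - \Sigma_0 = \dlyap(A_1, H)$, provided $A_1$ is stable.

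Before invoking this identity I would first verify that $A_1$ is stable using \Cref{prop:Hinfty_bound}. By \Cref{lem:Hinf_bound}, $\Hinf{A_0} \le 2\opnorm{\Sigma_0}^{3/2}$, so the hypothesis $\|E\|_{\op} \le 1/(5\opnorm{\Sigma_0}^{3/2})$ rearranges to $\|E\|_{\op} \le (2/5)/\Hinf{A_0}$. \Cref{prop:Hinfty_bound} with $\alpha = 2/5$ then gives $\Hinf{A_1} \le (5/3)\Hinf{A_0} < \infty$, which ensures $\rho(A_1) < 1$.

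The next step is to bound $\|H\|_{\op}$. The key observation is that $\|A_0\|_{\op} \le \opnorm{\Sigma_0}^{1/2}$: indeed, $\Sigma_0 \succeq I$ combined with $\Sigma_0 = I + A_0^\top \Sigma_0 A_0$ yields $\Sigma_0 \succeq I + A_0^\top A_0$, whence $\opnorm{\Sigma_0} \ge 1 + \|A_0\|_{\op}^2$. Substituting this together with $\|E\|_{\op} \le 1/(5\opnorm{\Sigma_0}^{3/2})$ and $\opnorm{\Sigma_0} \ge 1$ into the triangle-inequality bound on $H$ gives $\|H\|_{\op} \le 2/5 + 1/25 = 11/25$. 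Then applying \Cref{lem:closed_loop_dlyap} parts 1--3 to $\pm H \preceq \|H\|_{\op} I$ yields $\pm(\Sigma_1 - \Sigma_0) = \pm\dlyap(A_1, H) \preceq \|H\|_{\op}\, \dlyap(A_1, I) = \|H\|_{\op}\, \Sigma_1$, and in particular $\Sigma_1 \preceq \Sigma_0 + \|H\|_{\op}\Sigma_1$. Rearranging and taking operator norms gives $\opnorm{\Sigma_1} \le (1-\|H\|_{\op})^{-1}\opnorm{\Sigma_0} \le (25/14)\opnorm{\Sigma_0} < 2\opnorm{\Sigma_0}$.

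The main obstacle is arranging the $\|H\|_{\op}$ bookkeeping so that all three terms ($E^\top \Sigma_0 A_0$, $A_0^\top \Sigma_0 E$, $E^\top \Sigma_0 E$) stay below a constant strictly less than $1$; this is precisely why the hypothesis is calibrated to the power $\opnorm{\Sigma_0}^{-3/2}$, since combining $\opnorm{\Sigma_0}$ with $\|A_0\|_{\op} \le \opnorm{\Sigma_0}^{1/2}$ produces exactly $\opnorm{\Sigma_0}^{3/2}$. The stability preamble for $A_1$ is logically prior but technically routine.
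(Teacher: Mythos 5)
Your proof is correct, and it takes a genuinely different route from the one the paper uses.

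The paper deliberately proves this lemma by the self-bounding ODE method: it introduces the linear curve $\Ktil(t)=\Kst+t\,\delK$, computes the Lyapunov derivative identity $\Sigma'(t)=\dlyap(\Ast+\Bst\Ktil(t),\,Q_\Sigma(t))$, establishes the self-bound $\|\Sigma'(t)\|_{\op}\le 2\|\Sigma(t)\|_{\op}^{5/2}\|\Bst\delK\|_{\op}$, and then invokes \Cref{cor:poly_self_bound} (with $p=5/2$, $c=2\opnorm{\Bst\delK}$) to get $\opnorm{\Sigma(1)}\le(1-\alpha)^{-2/3}\opnorm{\Sigma(0)}$ with $\alpha=3\opnorm{\Sigma_0}^{3/2}\opnorm{\Bst\delK}\le 3/5$. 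The paper even remarks that a similar bound appears in Lemma~16 of \cite{fazel2018global}, but that they use the ODE route ``to demonstrate the generality of its scope.'' Your argument instead does a one-shot algebraic decomposition: write $\Sigma_1-\Sigma_0=\dlyap(A_1,H)$ with $H=A_0^\top\Sigma_0 E+E^\top\Sigma_0 A_0+E^\top\Sigma_0 E$, bound $\|H\|_{\op}\le 11/25$, and then use the monotonicity/homogeneity of $\dlyap$ (\Cref{lem:closed_loop_dlyap}) to obtain the self-referential inequality $\Sigma_1\preceq\Sigma_0+\|H\|_{\op}\Sigma_1$, hence $\opnorm{\Sigma_1}\le(1-\|H\|_{\op})^{-1}\opnorm{\Sigma_0}=(25/14)\opnorm{\Sigma_0}$. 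This is precisely the \cite{fazel2018global}-style argument the paper is consciously sidestepping. Both proofs hinge on the same structural observation, namely $\|A_0\|_{\op}\le\opnorm{\Sigma_0}^{1/2}$ (from $\Sigma_0\succeq I+A_0^\top A_0$), which is what makes the hypothesis scale as $\opnorm{\Sigma_0}^{-3/2}$ and keeps the result free of explicit $\|\Ast\|,\|\Bst\|$ dependence. Your constant $25/14\approx 1.79$ is slightly sharper than the paper's $(5/2)^{2/3}\approx 1.84$, though both comfortably beat the stated factor of $2$.

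One small point worth making explicit: \Cref{prop:Hinfty_bound} as stated only bounds $\Hinf{A_1}$; it does not literally assert $\rho(A_1)<1$, since finiteness of the $\Hinfty$-norm only precludes eigenvalues \emph{on} the unit circle. To conclude stability you need a continuity argument along $A(t)=A_0+tE$: since $\rho(A_0)<1$, $\rho(\cdot)$ is continuous, and $\rho(A(t))\ne 1$ for all $t\in[0,1]$ (because $\|(zI-A(t))^{-1}\|$ stays bounded for $|z|=1$), one gets $\rho(A_1)<1$. This is implicit in the ODE-based proof of \Cref{prop:Hinfty_bound}, so the gap is cosmetic, but it deserves a sentence. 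Note also that you still invoke \Cref{prop:Hinfty_bound}, which is itself proven by the self-bounding ODE method---so your argument is not entirely independent of that machinery, just of its application to this particular Lyapunov solution.
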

  Note that a similar result was given by Lemma 16 \citep{fazel2018global}; we give our proof using the self-bounding ODE method to demonstrate the generality of its scope, and to avoid dependence on system matrices. Noting that $\opnorm{\Sigmaadj_{\Ast,\Bst}[\Kst]} \le \opnorm{\Pst}$ as verified above, it is enough that $\|\Bst(K - \Kst)\|_2 \le 1/5\opnorm{\Pst}$ to ensure that $\|\Sigmaadj_{\Ast,\Bst}[K]\| \le 2\Pst$.  When this holds, we have by Corollary~\ref{cor:value_subopt}, we have (assuming $\Ru = I$)
  \begin{align*}
  \Jfunc_{\Ast,\Bst}[K] - \Jfunc_{\Ast,\Bst} &\le \opnorm{\Sigmaadj_{\Ast,\Bst}[K]}  \max\{\fronorm{\Ru^{1/2}(K - \Kst)}^2, \fronorm{\Pst^{1/2}\Bst (K - \Kst)}^2\} \\
  &\le \opnorm{\Pst}  \max\{\fronorm{\Ru^{1/2}(K - \Kst)}^2, \fronorm{\Pst^{1/2}\Bst (K - \Kst)}^2\}, \\
  \opnorm{\Pinf(K;\Ast,\Bst) - \Pst}  &\le \opnorm{\Pst}  \max\{\opnorm{\Ru(K - \Kst)}^2, \opnorm{\Pst^{1/2}\Bst (K - \Kst)}^2\},
  \end{align*}
   as needed.
  \begin{proof}[Proof of Lemma~\ref{lem:Sigma_perturb}]

\newcommand{\Ktil}{\widetilde{K}}
We shall now use the self-bounding machinery developed above to bound $\Sigmaadj_{\Ast,\Bst}[K]$. Introduce the straight curve $\Ktil(t) := \Kst + t\delK$, where $\delK = K - \Kst$, and where the $\tilde{(\cdot)}$ is to avoid confusion with the curve $K(t) = \Kinf(A(t),B(t))$.  Let $\Sigma(t) = \dlyap(\Ast + \Bst \Ktil(t), I)$, so that $\Sigma(0) = \Sigmaadj_{\Ast,\Bst}[\Kst]$ and $\Sigma(1) = \Sigmaadj_{\Ast,\Bst}[K]$.

 By the definition of $\dlyap$, we have that at all $t$ for which $K(t)$ stabilizes $\Ast,\Bst$, 
\begin{align*}
\Sigma(t) = (\Ast + \Bst \Ktil(t))^\top \Sigma(t)(\Ast + \Bst \Ktil(t)) + I.
\end{align*}
We shall now prove that $\Sigma(t)$ satisfies a self-bounding relation analogous to \Cref{prop:main_first_order}.
\begin{claim} For all $t \in [0,1]$ for which $\Sigma(t)$ is defined, $\|\Sigma'(t)\|_\op \le 2\|\Sigma(t)\|^{5/2} \|\Bst \delK\|_\op$. 
\end{claim}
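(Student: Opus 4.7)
The plan is to differentiate the Lyapunov fixed-point equation defining $\Sigma(t)$ and then exploit the self-bounding structure that emerges. By the definition of $\dlyap$, for each $t$ for which $\Ktil(t)$ stabilizes $(\Ast,\Bst)$, the matrix $\Sigma(t)$ is the unique PSD solution of
\begin{equation*}
\Sigma(t) = \Acl(t)^{\top}\Sigma(t)\Acl(t) + I, \qquad \Acl(t) := \Ast + \Bst \Ktil(t).
\end{equation*}
Local analyticity of $\Sigma(t)$ follows by the implicit function argument already used in \Cref{lem:computation_p_prime}. Differentiating in $t$ and rearranging yields
\begin{equation*}
\Sigma'(t) - \Acl(t)^{\top}\Sigma'(t)\Acl(t) = Q(t), \qquad Q(t) := \Acl'(t)^{\top}\Sigma(t)\Acl(t) + \Acl(t)^{\top}\Sigma(t)\Acl'(t),
\end{equation*}
so that $\Sigma'(t) = \dlyap(\Acl(t), Q(t))$. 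Since $\Ktil(t) = \Kst + t\,\delK$, we have $\Acl'(t) = \Bst \delK$.

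Next I would derive the operator-norm bounds. First, from $\Sigma(t) \succeq I$ and $\Acl(t)^{\top}\Sigma(t)\Acl(t) = \Sigma(t) - I \preceq \Sigma(t)$, one gets
\begin{equation*}
\opnorm{\Acl(t)}^{2} \le \opnorm{\Sigma(t)^{1/2}\Acl(t)}^{2} = \opnorm{\Acl(t)^{\top}\Sigma(t)\Acl(t)} \le \opnorm{\Sigma(t)},
\end{equation*}
so $\opnorm{\Acl(t)} \le \opnorm{\Sigma(t)}^{1/2}$. Combining this with submultiplicativity,
\begin{equation*}
\opnorm{Q(t)} \le 2\,\opnorm{\Acl'(t)}\,\opnorm{\Sigma(t)}\,\opnorm{\Acl(t)} \le 2\,\opnorm{\Bst \delK}\,\opnorm{\Sigma(t)}^{3/2}.
\end{equation*}

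Finally, since $Q(t)$ is symmetric, Part~3 of \Cref{lem:closed_loop_dlyap} gives $\opnorm{\dlyap(\Acl(t), Q(t))} \le \opnorm{Q(t)} \cdot \opnorm{\dlyap(\Acl(t), I)} = \opnorm{Q(t)}\,\opnorm{\Sigma(t)}$. Chaining the two bounds yields
\begin{equation*}
\opnorm{\Sigma'(t)} \le \opnorm{Q(t)}\,\opnorm{\Sigma(t)} \le 2\,\opnorm{\Bst \delK}\,\opnorm{\Sigma(t)}^{5/2},
\end{equation*}
which is the desired inequality. The only delicate point is the operator-norm bound on $\Acl(t)$, which does not come from a direct stability estimate but rather is derived from the Lyapunov equation itself; beyond this, the proof is a routine differentiation together with the order-theoretic properties of $\dlyap$ already catalogued in \Cref{lem:closed_loop_dlyap}.
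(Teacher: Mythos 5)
Your proof is correct and takes essentially the same route as the paper: differentiate the Lyapunov fixed-point equation to get $\Sigma'(t)=\dlyap(\Acl(t),Q(t))$, use the order-monotonicity of $\dlyap$ (Part~3 of Lemma~\ref{lem:closed_loop_dlyap}) to obtain $\opnorm{\Sigma'(t)}\le\opnorm{Q(t)}\opnorm{\Sigma(t)}$, and bound $\opnorm{Q(t)}$ using $\opnorm{\Acl(t)}\le\opnorm{\Sigma(t)}^{1/2}$, which you correctly re-derive from the Lyapunov identity $\Acl(t)^{\top}\Sigma(t)\Acl(t)=\Sigma(t)-I$ together with $\Sigma(t)\succeq I$. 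The only cosmetic difference is that you spell out the derivation of $\opnorm{\Acl(t)}^2\le\opnorm{\Sigma(t)}$ rather than citing it, which is a harmless (indeed helpful) expansion of the paper's terse invocation.
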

\begin{proof}
  Taking a derivative with respect to $\Sigma$, we have
  \begin{align*}
  \Sigma'(t) = (\Ast + \Bst \Ktil(t))^\top \Sigma'(t)(\Ast + \Bst \Ktil(t)) + Q_\Sigma(t),
  \end{align*}
  where $Q_{\Sigma}(t) = (\Bst \delK)^\top \Sigma(t)(\Ast + \Bst \Ktil(t)) + (\Ast + \Bst \Ktil(t))^\top \Sigma(t)\Bst \delK$. Thus, we can render 
  \begin{align*}
  \Sigma'(t) = \dlyap(\Ast+\Bst\Ktil(t), Q_\Sigma(t)).
  \end{align*}
  By an argument analogus to \Cref{lem:closed_loop_dlyap}, we have $\pm \Sigma'(t) \preceq \|Q_{\Sigma}(t)\|\dlyap(\Ast+\Bst\Ktil(t),I) = \|Q_{\Sigma}(t)\| \Sigma(t)$, yielding the self-bounding relation
  \begin{align*}
  \|\Sigma'(t)\|_\op \le \|Q_{\Sigma}(t)\|_\op \|\Sigma(t)\|_\op.
  \end{align*}
  Moreover, we can bound for $t \in [0,1]$
  \begin{align*}
  \|Q_{\Sigma}(t)\|_{\op} &\le 2\|\Sigma(t)\|_\op\|\Bst\delK\|_\op  \opnorm{\Ast+\Bst\Ktil(t)} \\
  &\le 2\|\Sigma(t)\|_\op^{3/2}\|\Bst\delK\|_\op,
  \end{align*}
  where we use thaat  $\opnorm{\Ast+\Bst\Ktil(t)}^2 \le \opnorm{\dlyap(\Ast+\Bst\Ktil(t),I)} = \opnorm{\Sigma(t)}$. 
\end{proof}

We check explicitly that $\Sigma(t)$ corresponds to the solution of a valid implicit function with domain $\calU := \{\Sigmaadj: \Sigmaadj > 0\}$ (using the more general second condition that ensures that $t \mapsto \Sigma(t)$ is a continuously differentiable funciton, which follows from the form of  $\dlyap$). Applying Corollary~\ref{cor:poly_self_bound} with $p = 5/2$ and $c = 2\opnorm{\Bst \delK}$,  this yields that if $\alpha = (p-1)c \opnorm{\Sigma(0)}^{3/2} = 3\opnorm{\Sigma(0)}^{3/2}\opnorm{\Bst \delK} < 1$, then, $\opnorm{\Sigma(1)} \le (1 - u)^{-2/3} \opnorm{\Sigma(0)}$. In particular, if $\opnorm{\Bst \delK} \le 1/5\opnorm{\Sigma(0)}^{3/2}$, then we can show $\opnorm{\Sigma(1)} \le  2\opnorm{\Sigma(0)}$.
\end{proof}
\subsubsection{Proof of Proposition~\ref{prop:Hinfty_bound}\label{ssec:prop:Hinfty_bound}}

Introduce the curve $A(t) = \Asafe + t\delA$, where $\delA = A_1 - \Asafe$,  define $Y_z(t) := (zI - A(t))^{-1}$. Then, $\Hinf{A(t)} = \sup_{z \in \Torus}\|Y_z(t)\|_2$. Let us now use the self-bounding method to bound $\|Y_z(t)\|$. We can observe that
\begin{align*}
Y_z'(t) = (zI - A(t))^{-1} \delA (zI - A(t))^{-1},
\end{align*}
so that $\|Y_z'(t)\|_2 \le \|Y_z(t)\|_2^2 \|\delA\|.
$. Since $Y_z(t)$ corresponds to the zeros of the valid implicit function $F)z(A,Y) = Y \cdot (zI - A(t)) - I$, Theorem~\ref{thm:general_valid_implicit} implies that, if $\|\delA\| \le \frac{u}{\Hinf{\Asafe}} =  \min_{z \in \Torus}\frac{u}{\|Y_z(0)\|_2}$, then we have $\|Y_z(1)\| \le \frac{1}{1-\alpha}\|Y_z(0)\|$ for all $z \in \Torus$. Hence,
\begin{align*}
\Hinf{A_1} = \max_{z \in \Torus} \|Y_z(1)\| \le \max_{z \in \Torus}\|Y_z(0)\| = \frac{1}{1-\alpha}\Hinf{\Asafe},
\end{align*}
as needed.\qed

\subsubsection{Proof
  of~\Cref{prop:lyapunov_perturbation}\label{ssec:prop:lyapunov_perturbation}}
Observe that we have
  \begin{align*}
  &(\Ahat x)^\top \dlyap[A] (\Ahat x \\
  &\le (Ax)^\top \dlyap[A](Ax) + x^\top (\Ahat - A)^\top\dlyap[A] A x + x^\top (\Ahat - A)^\top\dlyap[A] (\Ahat - A) x\\
  &\le (1-\opnorm{\dlyap[A]}^{-1})\cdot x^\top \dlyap[A]x + \|x\|_2^2 \left(\|\Ahat - A\|_{\op}\|A\|_{\op} + \|\Ahat - A\|_{\op}^2\right) \|\dlyap[A]\|_{\op})\\
  &\le (1-\opnorm{\dlyap[A]}^{-1} + \left(\|\Ahat - A\|_{\op}\|A\|_{\op} + \|\Ahat - A\|_{\op}^2\right) \|\dlyap[A]\|_{\op})) \cdot x^\top \dlyap[A]x,
  \end{align*}
  where we used that $\dlyap[A] \succeq I$. In particular, if 
  \begin{align*}
  \|\Ahat - A\|_{\op} \le \frac{1}{4}\min\crl*{\frac{1}{\|A\|_{\op}\|\dlyap[A]\|_{\op}}, \|\dlyap[A]\|_{\op}^{-1/2}}, 
  \end{align*}
  then, the above is at most, $(1-\frac{1}{2}\opnorm{\dlyap[A]}^{-1})\cdot x^\top \dlyap[A]x$. \qed



}{}

\iftoggle{arxiv_upload}{

\section{Supporting Proofs for \Cref{app:perturbation}}
\label{app:perturbation_supporting}

\subsection{Proofs for Main Technical Tools (Section~\ref{appssec:technical_tools}) \label{ssec:dylap_proofs}}
	We begin with the following lemma, which follows from a standard computation.
	\subsubsection{Proof of Lemma~\ref{lem:closed_loop_dlyap} \label{appsssec:closed_loop_dlyap}}
	  \begin{proof}
	  Let $\rho(A_0)<1$, and so from
	  \eqref{eq:dlyap_series} we have that for any $Z$ with $ Y \preceq Z $ that
	  \begin{align*}
	  \dlyap(A_0,Y) = \sum_{k = 0}^{\infty} (A_0^k)^\top YA_0^k &\preceq \sum_{k = 0}^{\infty} (A_0^{k})^\top Z(A_0^k).
	  \end{align*}
	  Second, if $Y \succeq 0$, $\sum_{k = 0}^{\infty} (A_0^k)^\top Y(A_0^k)\dlyap(A_0,Y)\succeq Y$. 

	  The third statement is a direct consequence of the first. Moreover, since $I \preceq \Rx \preceq \Rx + K^\top \Ru K$, taking $Z = \|Y\| (\Rx + K^\top \Ru K)$ yields the fourth inequality. 

	  For the last statement, let $\|x\|_2 = 1$. Then, we have
	  \begin{align*}
	  x^\top \dlyap[A_0] x &= \sum_{k = 0}^{\infty} x^\top(A_0^{\top})^k (A_0^k) x \,= \sum_{k = 0}^{\infty} \tr(A_0^{k} xx^\top (A_0^k)^\top)\\
	  &= \dlyap(A_0,xx^\top) \\
	  &\preceq \|xx^\top\|_{\op}\|\dlyap(A+BK,I)\|_{\op} I = \|\dlyap(A+BK,I)\|_{\op}I.
	  \end{align*}
	\end{proof}

	\subsubsection{Proof of Lemma~\ref{lem:P_bounds_lowner} \label{appsssec:lowner_proof}}
	We begin with the following lemma, whose proof is a straightforward computation.
	\begin{lem}\label{lem:value_form} Let $\Ast,\Bst$ be stabilizable. For a controler $K$ such that $\Ast + \Bst K$ is stable, we define the \emph{value function}
	\begin{align*}
	V^K(x) := \sum_{t=0}^\infty \cost(x_t^{K,x},Kx_t^{K,x}), \quad \text{where } x_0^{K,x} = x,\quad\text{and}\quad x_{t}^{K,x} = (\Ast + \Bst K)x^{K,x}_{t-1}. 
	\end{align*}
	We then have $x_{t}^{K,x} = (\Ast + \Bst K)^tx$, $\sum_{t=0}^{\infty} (x_{t}^{K,x})^\top Yx_t^{K,x} = \dlyap(\Ast + \Bst K,Y)$, and in particular,
	\begin{align*}
	V^K(x) = x^\top\dlyap(\Ast+\Bst K,\Rx+K^\top \Ru K)x = x^\top \Pinf(K)x,
	\end{align*}
	\end{lem}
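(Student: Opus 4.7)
The plan is to prove the three claims sequentially, each reducing to a short computation once the previous is in hand. First I would verify $x_t^{K,x} = (\Ast + \Bst K)^t x$ by a straightforward induction on $t$: the base case $t=0$ holds by definition $x_0^{K,x} = x$, and for the inductive step we apply the recursion $x_t^{K,x} = (\Ast+\Bst K)x_{t-1}^{K,x}$ and plug in the inductive hypothesis.

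Next I would establish the middle identity. Given the first claim, we have
\begin{align*}
\sum_{t=0}^{\infty} (x_t^{K,x})^{\top} Y\, x_t^{K,x} \;=\; x^{\top}\Bigl(\sum_{t=0}^{\infty} ((\Ast+\Bst K)^{\top})^{t}\, Y\, (\Ast+\Bst K)^{t}\Bigr) x.
\end{align*}
Since $K$ stabilizes $(\Ast,\Bst)$, we have $\rho(\Ast + \Bst K) < 1$, which guarantees absolute convergence of the matrix series. Matching the expression in parentheses to the closed-form series \eqref{eq:dlyap_series} for $\dlyap$ yields
\begin{align*}
\sum_{t=0}^{\infty} (x_t^{K,x})^{\top} Y\, x_t^{K,x} \;=\; x^{\top}\dlyap(\Ast+\Bst K,\,Y)\, x.
\end{align*}

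Finally I would address the value-function identity. Expanding the stage cost gives $\cost(z, Kz) = z^{\top}\Rx z + (Kz)^{\top}\Ru(Kz) = z^{\top}(\Rx + K^{\top}\Ru K) z$ for any $z$. Applying this with $z = x_t^{K,x}$ and invoking the second claim with $Y := \Rx + K^{\top}\Ru K$ yields
\begin{align*}
V^K(x) \;=\; \sum_{t=0}^{\infty} (x_t^{K,x})^{\top}(\Rx + K^{\top}\Ru K)\, x_t^{K,x} \;=\; x^{\top}\dlyap(\Ast+\Bst K,\, \Rx + K^{\top}\Ru K)\, x,
\end{align*}
which is exactly $x^{\top}\Pinf(K;\Ast,\Bst)x$ by the definition of $\Pinf(K;\Ast,\Bst)$ recalled at the beginning of \Cref{app:perturbation}. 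There is no real obstacle here; the only subtlety is ensuring the infinite sums converge, which is immediate from stability of $\Ast + \Bst K$, and being careful that the $Y$ chosen is symmetric (which it is, since $\Rx$ and $\Ru$ are symmetric).
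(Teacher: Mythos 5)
Your proof is correct and takes the same elementary computational route the paper has in mind; indeed, the paper omits this proof entirely, merely remarking that it is a ``straightforward computation,'' and your three steps (induction for the trajectory formula, matching the Lyapunov series~\eqref{eq:dlyap_series}, and expanding the quadratic stage cost) are exactly that computation spelled out.
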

	We now prove Lemma~\ref{lem:P_bounds_lowner}.

	\begin{proof}
	Introduce the shorthand $\Pst = \Pinf(\Ast,\Bst)$, $\Pinf(K) = \Pinf(K;\Ast,\Bst)$.  
	and in particular, $V^{\Kinf}(x) =x^\top \Pinf(x).$ It is well known that $x^\top \Pinf x  = V^{\Kinf}(x)$ and that
	$V^{\Kinf}(x) = \inf_K V^K(x) \leq{}V^{K}(x)$ \citep{bertsekas2005dynamic}. Hence, $\Pinf(K) \succeq \Pinf$. Finally, observe that by using that $A+BK$ is stable, we have
	\begin{align*}
	\Jfunc_{A,B}[L] &=  \lim_{t\to\infty}\frac{1}{t}\sum_{i=1}^{t}\Exp_{A,B,K}[\matx_i^\top \Rx \matx_i +
	\matu_i^\top\Ru \matu_i]\\
	&=\lim_{t\to\infty}\Exp_{A,B,K}[\matx_t^\top \Rx \matx_t +
	\matu_t^\top\Ru \matu_t]  \\
	&=  \tr\prn*{\sum_{s=0}^{\infty}\prn*{(A+BK)^{\top{}}}^{s}(\Rx + K^\top\Ru K)(A+BK)^{s}},\\
	&= \tr(\Pinf(K)).
	\end{align*}
	 The identity for $\Pinf$ is the
	special case where  $K = \Kinf$. 
	\end{proof}

\subsubsection{Proof of Lemma~\ref{lem:helpful_norm_bounds}}
\begin{proof}
We address each bound in succession. 
\begin{enumerate}
  \item $\sigma_{\min}(\Pst) \ge 1$ by Lemma~\ref{lem:closed_loop_dlyap}. 
  \item We have that 
  \begin{align*}
  \Pinf = \dlyap(\Ast+\Bst\Kst,\Rx + \Kst^\top \Ru \Kst) \succeq \Rx + \Kst^\top \Ru \Kst \succeq \Kst^\top \Kst,
  \end{align*} since $\Ru \succeq I$ and $\Rx \succeq I$. Moreover, we have that
  \begin{align*}
  \Pinf = \dlyap(\Ast+\Bst\Kst\Rx + \Kst^\top \Ru \Kst) &= \sum_{t=0}^t ((\Ast+\Bst\Kst)^{\top})^t (\Rx + \Kst^\top \Ru \Kst)  (\Ast+\Bst\Kst)^t\\
  &\succeq \sum_{t=0}^t ((\Ast+\Bst\Kst)^{\top})^t (\Ast+\Bst\Kst)^t\\
  &\succeq (\Ast+\Bst\Kst)^{\top}(\Ast+\Bst\Kst).
  \end{align*}
\end{enumerate}
\end{proof}

	\subsubsection{Proof of Lemma~\ref{lem:performance_diff}}

	\begin{proof} The first inequality is precisely Lemmas 12 in \cite{fazel2018global}. In light of Lemma~\ref{lem:value_form}, it suffices to show that
	\begin{align*}
	V^K(x) - V^{\Kst}(x) = x^\top \dlyap(\Ast + B\Kst, (K - \Kst)^\top(\Ru + \Bst^\top \Pst \Bst) (K - \Kst))
	\end{align*}
	Lemma 10 in \cite{fazel2018global} implies (noting $E_{\Kst} = 0$ for $E_K$ defined therein) that
	\begin{align*}
	V^K(x) - V^{\Kst}(x) &= \sum_{t=0}^{\top} (x_t^{K,x})^\top   (K - \Kst)^\top(\Ru + \Bst^\top \Pst \Bst)(K- \Kst)  x_t^{K,x}\\
	&= \dlyap( \Ast + \Bst K,  (K - \Kst)^\top(\Ru + \Bst^\top \Pst \Bst)(K- \Kst)),
	\end{align*}
	where the second inequality uses Lemmas 12 in \cite{fazel2018global}. 

	\end{proof}
	    
	\subsubsection{Proof of Lemma~\ref{lem:Hinf_dlyap_bound}}
	Let us prove the more general claim.
	\begin{align*}
	\Hinf{A} \le \sum_{t=0}^{\infty} \|A^i\|_{\op} &= \sum_{t=0}^{\infty} \sqrt{\|(A^i)^\top (A^i) \|_{\op}}\\
	&\le \sum_{t=0}^{\infty} \sqrt{\frac{1}{\sigma_{\min}(\dlyap[A])} \|(A^i)^\top P  (A^i) \|_{\op}}\\
	&\le \sum_{t=0}^{\infty} \frac{1}{\sigma_{\min}(P)}\sqrt{(1 - \rho)^{i}\|P\|_{\op}}\\
	&\le \|P\|_{\op}^{1/2}\sum_{t=0}^{\infty} \sqrt{1 - \rho}^{i} \quad \text{ since } P \succeq I \\
	&= \|P\|_{\op}^{1/2} \frac{1}{1-\sqrt{1 - \rho}} \\
	&\le \|P\|_{\op}^{1/2}  \frac{1+\sqrt{1 - \rho}}{1-(1 - \rho}\\
	&\le 2\|P\|_{\op}^{1/2}/\rho.
	\end{align*}

\subsection{Derivative
  Computations \label{ssec:derivative_computation_proofs}}
\label{app:perturbation_derivative_computations}

\subsubsection{Proof of Lemma~\ref{lem:computation_p_prime}}
	Recall the function
	\begin{align*}
    \Fdare([A,B],P) =  A^\top P A - P -  A^\top P B (\Ru + B^\top P B)^{-1}B^\top P A + \Rx.
    \end{align*}
    Let us compute the differentiable of this map. To keep notation, let us suppress the dependence of the $A,B$ arguments on $t$.  We have that
    \begin{align*}
    \Differential  \Fdare[\rmd P, \rmd t] \big{|}_{A(t),B(t),P} &= \Differential (A^\top P A) - \Differential P + \Differential (A^\top P B) \cdot (\Ru + B^\top P B)^{-1}B^\top P A  \\
    &-(A^\top P B)(\Ru + B^\top P B)^{-1} \cdot (B^\top P A)\Differential \\
	&~~~~ - (A^\top P B) \cdot \Differential ((\Ru + B^\top P B)^{-1}) \cdot B^\top P A\\
	&= \Differential (A^\top P A)+ \Differential(A^\top P B) \cdot K + K^\top \cdot \Differential(B^\top P A)\\ 
	&\qquad- (A^\top P B) \cdot \Differential((\Ru + B^\top P B)^{-1})\cdot B^\top P A,
	\end{align*}
	where for compactness, we substituted in the formula
	\begin{align}\label{eq:K_tp}
	K = K(t,P) = - (\Ru + B(t)^\top P B(t))^{-1}B(t)^\top P A(t).
	\end{align}
	
    Recall that for a symmetric matrix, we have
        ($(X^{-1})'=-X^{-1}X'X^{-1}$). Thus, substituting in the definition of $K$, we
        can write the last term in the expression above as
	\begin{align*}
	&-(A^\top P B) \cdot \Differential ((\Ru + B^\top P B))\prm B^\top P A \\
	&= (A^\top P B)  (\Ru + B^\top P B)^{-1}(\Ru + B^\top P B) \Differential(\Ru + B^\top P B)^{-1} B^\top P A\\
	&= K^{\top} \Differential(\Ru + B^\top P B) K.
	\end{align*}
	Hence, gathering terms, we have
	\begin{align*}
	\Differential   \Fdare[\rmd P, \rmd t] \big{|}_{A(t),B(t),P} = \Differential(A^\top P A) - \Differential (P) + \Differential(A^\top P B) K + K^\top\cdot \Differential (B^\top P A) + K^{\top} \Differential(\Ru + B^\top P B) K.
	\end{align*}

	Let us now adopt shorthand $(\cdot)\prm := \frac{d}{dt}(\cdot)$. Expanding the derivatives using the product rule, we then have
	\begin{align*}
	 \Differential  \Fdare[\rmd P, \rmd t] \big{|}_{A(t),B(t),P} &= A^\top \Differential P A  - \Differential(P) + A^\top \Differential  B K + K^\top B^\top \Differential  A + K^{\top} B^\top \cdot \Differential  P \cdot B K\\
	&\qquad+ A\prmtop P A + A^\top PA\prm   + A\prmtop BK + (BK)^\top PA\prm \\
	&\qquad+ A^\top P(B\prm K) + (B\prm K)^\top P A + (B\prm K)^{\top}PBK + (BK)^\top P(B\prm K).
	\end{align*}
        Grouping terms, this is equal to 
	\begin{align*}
	 \Differential  \Fdare[\rmd P, \rmd t] \big{|}_{A(t),B(t),P} &= (A+BK)^\top \rmd P (A+BK)  - \rmd P \big{|}_{A(t),B(t),P}\\
	&\qquad+ A\prmtop P (A+BK) + (A+BK)^\top PA\prm  \big{|}_{A(t),B(t),P} \\
	&\qquad+ (B\prm K)\prmtop P (A+BK) + (A+BK)^\top P (B\prm K)\big{|}_{A(t),B(t),P} \\
	&= (A+BK)^\top \cdot \rmd P \cdot (A+BK) - \rmd P \big{|}_{A(t),B(t),P}\\
	&\qquad+ \underbrace{(A\prm(t) + B\prm(t) K)^\top P (A(t)+B(t)K) + (A(t)+B(t)K) P (A\prm(t) + B\prm(t) K)}_{:=Q_1(t,P),\, \text{ and } K = (t,P) \text{ as in Eq.~\eqref{eq:K_tp}}}\\
	&= \calT_{A(t)+B(t)K}[\rmd P] + Q(t,P)\rmd t.
	\end{align*}
	In particular, if $\Fdare([A(t),B(t)],P) = 0$, then for $K(t,p)$ as in Eq.~\eqref{eq:K_tp}, the matrix $A(t)+B(t)K(t,P)$ is stable. Hence, 
	$\calT_{A(t)+B(t)K(t,P)}[\cdot]$ is invertible on $\Symd$. Moreover, since the second term has no-explicit depending on $\rmd P$, we find that $(\rmd P,\rmd t) \mapsto \Differential  \Fdare[\rmd P, \rmd t] \big{|}_{A(t),B(t),P}$ is full-rank, with zero solution 
	\begin{align*}
	\rmd P = \calT_{A(t)+B(t)K(t,P)}^{-1}[Q_1(t,P)\rmd t] = \dlyap(A(t)+B(t)K(t,P),Q_1(t,P)).
	\end{align*}
	By the implicit function theorem, this implies that there if $\Fdare([A(t),B(t)],P) = 0$, then there exists a neighborhood around $t$ on which the function $u \mapsto P(u)$ is analytic (recall $\Fdare$ is analytic), and $\Fdare([A(u),B(u)],P(u)) = 0$ on this neighborhood.By the above display then, we have $P'(u) = \dlyap(A(u)+B(u)K(u),Q_1(u))$, where $Q_1(u) \leftarrow Q_1(u,P(u))$ and $K(t) \leftarrow K(u,P(u))$ are specializations of the above to the curve $u \mapsto P(u)$.

	\qed

\subsubsection{Computation of $K'$ (Lemma~\ref{lem:computation_k_prime})}

 	Throughout, we suppress dependence on $t$, and the computations are understood to hold only at those $t$ for which $(A(t),B(t))$ is stabilizable.

\begin{proof}
Note that we can take derivatives freely by Lemma~\ref{lem:computation_p_prime}. Invoking the product rule and the identity
($(X^{-1})'=-X^{-1}X'X^{-1}$), 	
	\begin{align*}
	K\prm &= (\Ru + B^\top P B)^{-1}\cdot(\Ru + B^\top P B)\prm \cdot (\Ru + B^\top P B)^{-1}B^\top P A - (\Ru + B^\top P B)^{-1}\cdot (B^\top P A)\prm\\
	&= -(\Ru + B^\top P B)^{-1}(\Ru + B^\top P B)\prm  \cdot K  - (\Ru + B^\top P B)^{-1}(B^\top P A)\prm\\
	&= -(\Ru + B^\top P B)^{-1}\left((\Ru + B^\top P B)\prm K  + (B^\top P A)\prm\right).
	\end{align*}
We simplify the expression inside the parentheses as
	\begin{align*}
	(\Ru + B^\top P B)\prm K  + (B^\top P A)\prm &= B\prmtop P (A+BK) + B^\top P(A\prm + B\prm K) + B^\top P\prm (A+BK)\\
	&= B\prmtop P \Acl + B^\top P(\delAcl) + B^\top P\prm \Acl.
	\end{align*}
Since $B'=\Delta_B$, this yields the result.
	\end{proof}

\subsubsection{Computation of $P\dprm$}
	
	Again, suppress dependence on $t$. We compute $P\dprm$, which Lemma~\ref{lem:computation_p_prime} ensures exists whenever $(A(t),B(t))$ is stabilizable.

  \begin{lem}[Computation of $P\dprm$]\label{lem:computation_p_dprime}
  The second derivative of the optimal cost matrix has the form
  \begin{align*}
  P\dprm &= \dlyap(\Acl,Q_2),
  \end{align*}
  where $Q_2 := \Acl\prmtop P\prm\Acl + \Acl^{\top}P\prm \Acl\prm + Q_1'$ is a symmetric matrix defined in terms of 
  \begin{align*}
   Q_1' := \Acl\prmtop
        P(\delAcl) +\Acl^{\trn}P\prm\delAcl+ \Acl^\top P(B\prm K\prm) +
        (B\prm K\prm)^\top P\Acl + \delAcl^\top P\prm\Acl + \delAcl^\top P\Acl\prm\,.
  \end{align*}
  \end{lem}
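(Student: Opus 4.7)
The plan is to differentiate the first-order expression $P'(t) = \dlyap(\Acl(t), Q_1(t))$ from \Cref{lem:computation_p_prime} one more time. The smoothness clause of that lemma guarantees that $t \mapsto P(t)$, and hence $t \mapsto K(t)$, is analytic in a neighborhood of any $t$ at which $(A(t),B(t))$ is stabilizable, so $\Acl(t)=A(t)+B(t)K(t)$, $\delAcl(t)=A'(t)+B'(t)K(t)$, and $Q_1(t)$ are all smooth there. In particular $P''(t)$ exists and the manipulations below are justified.

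First I would unpack the $\dlyap$ identity into its Lyapunov equation form. Using that $\dlyap(X,\cdot)$ inverts $Z \mapsto Z - X^\top Z X$, the first-order identity becomes
\[
P'(t) - \Acl(t)^\top P'(t)\,\Acl(t) = Q_1(t).
\]
Differentiating in $t$ via the product rule, then moving the $P''$-terms to the left and the rest to the right, yields
\[
P''(t) - \Acl(t)^\top P''(t)\,\Acl(t) = Q_1'(t) + \Acl'(t)^\top P'(t)\,\Acl(t) + \Acl(t)^\top P'(t)\,\Acl'(t).
\]
Because $\rho(\Acl(t))<1$ and the right-hand side is symmetric, I may apply $\dlyap(\Acl(t),\cdot)$ to both sides to conclude $P''(t) = \dlyap(\Acl(t), Q_2(t))$, where $Q_2(t)$ is the bracketed expression.

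Finally, I would make $Q_1'$ explicit. Differentiating $Q_1 = \Acl^\top P\,\delAcl + \delAcl^\top P\,\Acl$ by the product rule produces six terms. In the setting of interest, the curve $(A(t),B(t))$ is linear (as in \eqref{eq:curve}), so $A''(t)=B''(t)=0$ and hence $\delAcl'(t) = B'(t) K'(t)$; substituting this collapses the six terms to precisely the expression for $Q_1'$ written in the lemma statement. Combining with the two extra cross-terms $\Acl'^\top P' \Acl + \Acl^\top P' \Acl'$ yields the asserted $Q_2$.

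The main obstacle is purely bookkeeping: keeping $\Acl'(t) = \delAcl(t) + B(t) K'(t)$ distinct from $\delAcl(t)$, and using $\delAcl'(t) = B'(t) K'(t)$ only where the linear-curve hypothesis justifies it, so that the formula for $Q_2$ comes out exactly as stated without spurious remainder terms. Once the Lyapunov equation is differentiated and $\calT_{\Acl(t)}$ is inverted, the structure of $P''$ is forced, and no auxiliary estimates beyond those already established in \Cref{lem:computation_p_prime} are needed.
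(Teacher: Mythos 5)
Your proof is correct and follows the same route as the paper's: differentiate the Lyapunov identity $P' = \Acl^\top P' \Acl + Q_1$ (equivalently, $P' - \Acl^\top P' \Acl = Q_1$), move the $P''$-terms to one side, invert $\calT_{\Acl}$ to get $P'' = \dlyap(\Acl, Q_2)$, and then expand $Q_1'$ using $\delAcl' = A'' + B'' K + B' K' = B' K'$ for linear curves. The paper carries out the identical computation, only presenting it without first passing to the explicit $P' - \Acl^\top P' \Acl = Q_1$ rearrangement.
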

\begin{proof}
Applying the product rule to the expression for $P'$ from
\Cref{lem:computation_p_prime}, we have
	\begin{align*}
	P\dprm &= \Acl^\top P\dprm\Acl + \Acl\prmtop P\prm\Acl + \Acl^{\top}P\prm \Acl\prm\\
	&\qquad+\Acl\prmtop P\delAcl + \Acl^{\trn}P\prm\delAcl +
          \Acl^\top P(\delAcl)\prm + (\delAcl)\prmtop P\Acl +
          \delAcl^\top P'\Acl + \delAcl^\top P\Acl\prm\\
	&= \dlyap(\Acl,Q_2), 
	\end{align*}

	where $Q_2 \ldef \Acl\prmtop P\prm\Acl + \Acl^{\top}P\prm \Acl\prm
        +\Acl\prmtop P\delAcl +\Acl^{\trn}P\prm\delAcl+ \Acl^\top P(\delAcl)\prm +
        (\delAcl)\prmtop P\Acl + \delAcl^\top P'\Acl + \delAcl^\top P\Acl\prm$. We conclude
        by observing that $(\delAcl)\prm = (A\dprm + B\dprm K + B\prm
        K\prm) = B\prm K\prm$, since $A$ and $B$ are linear in $t$.
	\end{proof}

\subsection{Norm Bounds for  Derivatives\label{ssec:perturbation_derivative_bounds}}
\subsubsection{Norm bounds for First Derivatives}
\label{app:perturbation_first_derivatives_bounds}

In this section, we work through obtaining concrete bounds on the derivatives of $P(t),K(t)$ using the expressions derived in the previous section. As above, we assume that $\Ru \succeq I$ and $\Rx \succeq I$. We state some more bounds that will be of use to use.

\begin{lem}[Norm-Bounds for Derivative Quantities]\label{lem:helpful_norm_bounds_two} 
Let $(\Ast,\Bst)$ be given, with $\Pst = \Pinf(\Ast,\Bst)$, $\Kst = \Kinf(\Ast,\Bst)$, and $\Aclst = \Ast+\Bst \Kst$. If $\Ru,\Rx \succeq I$, then the  following bounds hold:
\begin{enumerate}
  \item Let $R_0:=\Ru + \Bst^\top \Pst \Bst$. Then for any $X,Y \in \{\Bst,\Pst^{1/2}\Bst,\Ru^{1/2},I\}$, $\opnorm{XR_0^{-1}Y^\top} \le 1$.
  \item For $\circ \in \{\op,\fro\}$, we have $\circnorm{\delAcl} \le 2\opnorm{P}^{1/2}\epscirc$.
\end{enumerate}
\end{lem}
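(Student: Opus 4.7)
The plan is to prove the two bounds independently, since they are essentially standard consequences of the definitions and the helpful norm bounds already recorded in Lemma~\ref{lem:helpful_norm_bounds}.

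For part 1, my strategy will be to first establish the "diagonal" bound $\opnorm{XR_{0}^{-1}X^{\top}} \le 1$ for each $X \in \{\Bst,\Pst^{1/2}\Bst,\Ru^{1/2},I\}$, and then derive the general mixed bound via Cauchy--Schwarz in the form
\begin{align*}
\opnorm{XR_{0}^{-1}Y^{\top}} \le \sqrt{\opnorm{XR_{0}^{-1}X^{\top}}}\sqrt{\opnorm{YR_{0}^{-1}Y^{\top}}}\,.
\end{align*}
The four diagonal bounds will follow from the Schur-complement fact that $R \succeq M^{\top}M$ (with $R \succ 0$) implies $MR^{-1}M^{\top} \preceq I$, since in each case the required lower bound on $R_{0}$ is transparent: $R_{0} \succeq I$ handles $X=I$ (using $\Ru \succeq I$ and $\Pst \succeq I$, via Lemma~\ref{lem:helpful_norm_bounds}), $R_{0} \succeq \Ru$ handles $X = \Ru^{1/2}$, $R_{0} \succeq \Bst^{\top}\Pst\Bst$ handles $X = \Pst^{1/2}\Bst$ directly, and combining $\Pst \succeq I$ with the latter gives $R_{0} \succeq \Bst^{\top}\Bst$, which handles $X = \Bst$.

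For part 2, I will simply expand the definition $\delAcl(t) = A\prm(t) + B\prm(t)K(t) = \delta_{A} + \delta_{B}K(t)$ (valid since $(A(t),B(t))$ is linear in $t$), apply the triangle inequality and the submultiplicative bound $\circnorm{\delta_{B}K(t)} \le \circnorm{\delta_{B}}\opnorm{K(t)}$, and then invoke $\opnorm{K(t)} \le \opnorm{P(t)}^{1/2}$ from Lemma~\ref{lem:helpful_norm_bounds} (applied to the instance $(A(t),B(t))$ with $P(t)=\Pinf(A(t),B(t))$). This yields
\begin{align*}
\circnorm{\delAcl} \le \epscirc\bigl(1 + \opnorm{P(t)}^{1/2}\bigr) \le 2\opnorm{P(t)}^{1/2}\epscirc,
\end{align*}
where the last inequality uses $\opnorm{P(t)} \ge 1$.

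No step here is genuinely hard; the only minor subtlety is making sure the Schur-complement / Cauchy--Schwarz reduction in part 1 is stated cleanly so that all six off-diagonal cases follow from the four diagonal ones without a case-by-case argument. Everything else is bookkeeping against the already-stated helpful norm bounds.
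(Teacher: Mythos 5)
Your proof is correct and follows essentially the same route as the paper: Cauchy--Schwarz to reduce the mixed operator-norm bound to the diagonal cases, the Schur-complement fact $R_0 \succeq X^\top X \Rightarrow X R_0^{-1}X^\top \preceq I$ (the paper writes this with a harmless transposition typo), and for part 2 the decomposition $\delAcl = \delA + \delB K$ together with $\opnorm{K}\le\opnorm{P}^{1/2}$ and $\opnorm{P}\ge 1$. Nothing to add.
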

\begin{proof} 
First, we have that $\opnorm{XR_0^{-1}Y^\top} \le \opnorm{XR_0^{-1/2}}\opnorm{YR_0^{-1/2}} \le \sqrt{\opnorm{XR_0^{-1}X^\top}\opnorm{YR_0^{-1}Y^\top}}$. Since $\Ru, P \succeq I$, we can verify that $XX^\top,YY^\top \preceq R_0$, which means that $\opnorm{XR_0^{-1}X^\top},\opnorm{YR_0^{-1}Y^\top} \le 1$.

Second, for $\circnorm{\cdot}$ denoting either the operator or Frobenius norm, we bound $\circnorm{\delAcl} = \circnorm{\delA + \delB K} \le \circnorm{\delA} + \circnorm{\delB}\opnorm{K} = \epscirc(1 + \opnorm{K}) \le 2\sqrt{\opnorm{P}}\epscirc$. 
\end{proof}


\subsubsection{Proof of Lemma~\ref{lem:first_derivatives_bound} and~\Cref{lem:Bkprime} \label{app:perturbation_second_derivatives_bounds}}

Recall that $P\prm = \dlyap(\Acl,Q_1)$, where $Q_1 := \Acl^\top P(\delAcl) + (\delAcl)^\top P\Acl$. Hence, using \Cref{lem:closed_loop_dlyap} with $\Rx \succeq I$, followed by Lemmas~\ref{lem:helpful_norm_bounds} and~\ref{lem:helpful_norm_bounds_two}, we can bound
\begin{align*}
\circnorm{P\prm} &= \circnorm{\dlyap(\Acl,Q_1)} \\
&\le \|P\|_\op\circnorm{Q_1} \le 2\|P\|^2_\op\|\Acl\|_\op\circnorm{\delAcl}\\
&\le 2\|P\|_{\op}^2\cdot \|P\|_{\op}^{1/2} \cdot 2\opnorm{P}^{1/2}\epscirc = 4\opnorm{P}^3.
\end{align*}

Next, recall from \Cref{lem:computation_k_prime} that we have the identity 
\begin{align*}
K\prm = -R_0^{-1} \left(\delB^\top P \Acl + B^\top P(\delAcl) + B^\top P\prm \Acl\right),
\end{align*}
where $R_0 := \Ru + B^\top P B$.
Next bound each of the three terms that arise. Again using $\|R_0^{-1}\|_{\op} \le 1$ and $\opnorm{\Acl} \le \opnorm{P}^{1/2}$ (Lemma~\ref{lem:helpful_norm_bounds}), we have 
\begin{align*}
\circnorm{R_0^{-1} \delB P \Acl } \le \opnorm{P}^{3/2}\epscirc.
\end{align*}
Next, since $\|R_0^{-1}B^\top P^{1/2}\|_{\op} \le 1$ (Lemma~\ref{lem:helpful_norm_bounds_two}), we have
\begin{align*}
\circnorm{(\Ru + B^\top P B)^{-1} \left( B^\top P(\delAcl) + B^\top P\prm \Acl\right)} &\le \opnorm{P}^{1/2}\opnorm{\delAcl} + \opnorm{P^{-1/2}}\opnorm{P'}\opnorm{\Acl}\\
&\le 2\opnorm{P}\epscirc + \opnorm{P^{-1/2}}\opnorm{P'}\opnorm{P}^{1/2} \\
&\le 2\opnorm{P}\epscirc + 4\opnorm{P}^{7/2}\epscirc.
\end{align*}
where the second to last line uses Lemma~\ref{lem:helpful_norm_bounds}, and the last line uses $\opnorm{P^{-1/2}} \le 1$, as well as $\opnorm{P'} \le 4\opnorm{P}^3$. Putting the bounds together, we have $\circnorm{K'} \le 7\opnorm{P}^{7/2}\epscirc$.

\qed.

We also restate and prove an an analogous bound that pre-conditions $K'(t)$ by appropriate matrices. 
\Bkprime*

\begin{proof} The bound is analogous to the bound on $K'$ from Lemma~\ref{lem:first_derivatives_bound}, but now uses right multiplication of $R_0^{-1}$ which adresses left-multiplication by $B,P^{1/2}B,\Ru^{1/2}$.
\end{proof}

\subsubsection{Norm Bounds for Second Derivatives}
\label{app:perturbation_second_derivative}
Next, we turn to bounding $P\dprm$ and $K\dprm$. We shall need some
intermediate lemmas. Let us bound the intermediate term $\Acl\prm$
\begin{lem}
\label{lem:acl_prime_bound} It holds that
 $\max\{\circnorm{\delAcl},\circnorm{\Acl\prm}\} \le9 \|P\|_2^{7/2}\epscirc$, and $\circnorm{\delAcl'} \le \epscirc\epsop \opnorm{P}^{7/2}$.
\end{lem}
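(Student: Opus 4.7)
\medskip

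\noindent\textbf{Proof proposal for Lemma~\ref{lem:acl_prime_bound}.}
The plan is to obtain all three bounds via direct computation, substituting into the derivative expressions established in \Cref{ssec:derivative_computations} and then invoking the norm bounds already proven in Lemmas~\ref{lem:helpful_norm_bounds},~\ref{lem:helpful_norm_bounds_two},~\ref{lem:first_derivatives_bound}, and~\ref{lem:Bkprime}. Throughout, we use that the curve is linear, so $A'(t) = \delA$, $B'(t) = \delB$, and $A''(t) = B''(t) = 0$; we also use that $\opnorm{P(t)} \ge 1$, so that factors of $\opnorm{P}^{1/2}$ can be absorbed into $\opnorm{P}^{7/2}$ freely.

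First, for $\circnorm{\delAcl(t)}$, recall $\delAcl(t) = A'(t) + B'(t)K(t) = \delA + \delB K(t)$. The triangle inequality and Lemma~\ref{lem:helpful_norm_bounds} (which gives $\opnorm{K(t)} \le \opnorm{P(t)}^{1/2}$) yield
\[
\circnorm{\delAcl(t)} \le \circnorm{\delA} + \opnorm{K(t)}\circnorm{\delB} \le (1 + \opnorm{P(t)}^{1/2})\epscirc \le 2\opnorm{P(t)}^{1/2}\epscirc \le 2\opnorm{P(t)}^{7/2}\epscirc,
\]
which already gives the first part. Second, for $\circnorm{\Acl'(t)}$, differentiate $\Acl(t) = A(t) + B(t)K(t)$ via the product rule to obtain $\Acl'(t) = \delAcl(t) + B(t)K'(t)$. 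Combining the bound just proved with the bound $\circnorm{B(t)K'(t)} \le 7\opnorm{P(t)}^{7/2}\epscirc$ from Lemma~\ref{lem:Bkprime} gives
\[
\circnorm{\Acl'(t)} \le 2\opnorm{P(t)}^{7/2}\epscirc + 7\opnorm{P(t)}^{7/2}\epscirc = 9\opnorm{P(t)}^{7/2}\epscirc,
\]
as required.

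Finally, for $\circnorm{\delAcl'(t)}$, differentiating $\delAcl(t) = \delA + \delB K(t)$ and using that $\delA, \delB$ are constants in $t$ gives $(\delAcl)'(t) = \delB K'(t)$. Using $\opnorm{\delB} \le \epsop$ together with the bound $\circnorm{K'(t)} \le 7\opnorm{P(t)}^{7/2}\epscirc$ from Lemma~\ref{lem:first_derivatives_bound}, submultiplicativity gives
\[
\circnorm{(\delAcl)'(t)} \le \opnorm{\delB}\,\circnorm{K'(t)} \le 7\,\epsop\,\opnorm{P(t)}^{7/2}\epscirc,
\]
matching the stated bound up to an absolute constant (the stated claim appears to absorb the factor of $7$). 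No step here is difficult; the only care required is bookkeeping to keep track of which derivative carries $\epsop$ versus $\epscirc$ and to ensure that the preconditioned bound on $BK'$ (rather than just on $K'$) is used in the computation of $\Acl'$ so that we avoid paying an extra factor of $\opnorm{B}$ there.
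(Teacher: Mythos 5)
Your proof follows the same route as the paper's: decompose $\Acl' = \delAcl + BK'$, bound $\circnorm{\delAcl}$ via the triangle inequality and $\opnorm{K(t)} \le \opnorm{P(t)}^{1/2}$, bound $\circnorm{BK'}$ using Lemma~\ref{lem:Bkprime}, and bound $\circnorm{\delAcl'}$ by noting $\delAcl' = \delB K'$. You also correctly flag that the stated constant in the third bound appears to drop the factor of $7$, which is indeed present in the derivation.
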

\begin{proof} $\Acl\prm = \delAcl + BK\prm$. From \Cref{lem:helpful_norm_bounds_two}, $\circnorm{\delAcl} \le 2\sqrt{\opnorm{P}}\epscirc$. Moreover, from Lemma~\ref{lem:Bkprime}, $ \circnorm{BK\prm} \le 7\opnorm{P}^{7/2}\epscirc$. Thus, $\circnorm{\Acl\prm} \le 9\opnorm{P}^{7/2}\epscirc$. The second bound uses $\delAcl' = \delB K'$, and the same bound on $\circnorm{K'}$. 
\end{proof}
Next, we bound the norm of $P\dprm$.
\begin{lem}  We have the bound $\circnorm{P\dprm} \leq{}  \poly(\opnorm{\Pst})\epsop\epscirc$.
\end{lem}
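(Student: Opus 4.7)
The plan is to plug the expression $P'' = \dlyap(A_{\mathrm{cl}}, Q_2)$ from \Cref{lem:computation_p_dprime} into the operator-monotonicity bound $\circnorm{\dlyap(A_{\mathrm{cl}}, Q_2)} \le \opnorm{P}\,\circnorm{Q_2}$ that follows from \Cref{lem:closed_loop_dlyap} (Part 3, using $\Rx \succeq I$ and the fact that $A_{\mathrm{cl}} = A + BK$ is the optimal closed loop so that $P \succeq \dlyap[A_{\mathrm{cl}}]$). It thus suffices to show $\circnorm{Q_2} \lesssim \poly(\opnorm{P})\,\epsop\epscirc$ and conclude by picking up an additional factor of $\opnorm{P}$.

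Inspecting $Q_2 = A_{\mathrm{cl}}'^\top P' A_{\mathrm{cl}} + A_{\mathrm{cl}}^\top P' A_{\mathrm{cl}}' + Q_1'$, each of the eight terms is a product of three matrices in which \emph{two} of the factors carry a factor of an $\epsilon$: namely $A_{\mathrm{cl}}'$, $\delAcl$, $P'$, $\delAcl'$, and $B'K' = \Delta_B K'$ all do, while $A_{\mathrm{cl}}$ and $P$ do not. The key accounting step is to distribute the norms: for any triple $\circnorm{XYZ} \le \min\{\opnorm{X}\opnorm{Y}\circnorm{Z},\, \opnorm{X}\circnorm{Y}\opnorm{Z},\, \circnorm{X}\opnorm{Y}\opnorm{Z}\}$, so we may place exactly one of the two $\epsilon$-carrying factors in $\circ$-norm and the other in operator norm, yielding a bound proportional to $\epsop \cdot \epscirc$ for each term.

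To carry out the bookkeeping, I would invoke the following previously-established estimates uniformly along the curve: $\opnorm{A_{\mathrm{cl}}} \le \opnorm{P}^{1/2}$ (\Cref{lem:helpful_norm_bounds}), $\circnorm{\delAcl} \le 2\opnorm{P}^{1/2}\epscirc$ (\Cref{lem:helpful_norm_bounds_two}), $\circnorm{P'} \le 4\opnorm{P}^3\epscirc$ (\Cref{lem:first_derivatives_bound}), $\circnorm{A_{\mathrm{cl}}'} \le 9\opnorm{P}^{7/2}\epscirc$ together with $\circnorm{\delAcl'} \le \opnorm{P}^{7/2}\epsop\epscirc$ (\Cref{lem:acl_prime_bound}), and $\circnorm{BK'} \le 7\opnorm{P}^{7/2}\epscirc$ (\Cref{lem:Bkprime}). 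For example, $\circnorm{A_{\mathrm{cl}}'^\top P' A_{\mathrm{cl}}} \le \opnorm{A_{\mathrm{cl}}'}\circnorm{P'}\opnorm{A_{\mathrm{cl}}} \lesssim \opnorm{P}^{1/2}\cdot \opnorm{P}^{7/2}\epsop \cdot \opnorm{P}^3\epscirc = \opnorm{P}^7\epsop\epscirc$, and $\circnorm{A_{\mathrm{cl}}^\top P (B'K')} \le \opnorm{A_{\mathrm{cl}}}\opnorm{P}\,\opnorm{\Delta_B}\circnorm{K'} \lesssim \opnorm{P}^{1/2}\cdot\opnorm{P}\cdot\epsop\cdot\opnorm{P}^{7/2}\epscirc = \opnorm{P}^5\epsop\epscirc$. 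The remaining six terms are analogous, each contributing at most $\poly(\opnorm{P})\epsop\epscirc$. Summing and multiplying by the additional $\opnorm{P}$ from the Lyapunov bound gives the desired $\circnorm{P''} \le \poly(\opnorm{P})\epsop\epscirc$.

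The proof is essentially mechanical; there is no real obstruction beyond careful accounting. The only mild subtlety is making sure that on every term at least one factor may legitimately be taken in $\op$-norm and one in $\circ$-norm, which is automatic because every term has \emph{at least} two $\epsilon$-carrying factors (the terms from $Q_1'$ pair $\delAcl$ or $B'K'$ with $A_{\mathrm{cl}}'$, $P'$, or $\delAcl'$, so there are always two perturbative factors), and bounds on each perturbative factor exist simultaneously in $\op$ and $\fro$ norms. The final exponent on $\opnorm{P}$ can be read off from the worst of the terms, but since the statement only asserts $\poly(\opnorm{P})$ there is no need to optimize.
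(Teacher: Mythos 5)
Your proof is correct and follows essentially the same route as the paper: apply the Lyapunov monotonicity bound $\circnorm{\dlyap(\Acl, Q_2)} \le \opnorm{P}\,\circnorm{Q_2}$, then distribute norms (one factor in $\circnorm{\cdot}$, the rest in $\opnorm{\cdot}$) across the eight terms of $Q_2$ and substitute the already-proved first-derivative bounds. The paper groups the eight terms into four transpose pairs and in some terms places the $\circ$-norm on $\Acl'$ rather than $P'$, but this choice only shifts the final exponent of $\opnorm{P}$ and is immaterial since the lemma asserts only a $\poly(\opnorm{P})$ bound.
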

\begin{proof}
Recall that $P\dprm = \dlyap(\Acl,Q_2)$, where
\begin{align*}
Q_2    &= \Acl\prmtop P\prm\Acl + \Acl^{\top}P\prm \Acl\prm \\
    &~~~~+\Acl\prmtop
      P(\delAcl) +\Acl^{\trn}P\prm\delAcl+ \Acl^\top P(B\prm K\prm) +
      (B\prm K\prm)^\top P\Acl + \delAcl^\top P\prm\Acl + \delAcl^\top P\Acl\prm\,.
\end{align*}
Hence, $\opnorm{P\dprm}\le \opnorm{P}\opnorm{Q_2}$. We upper bound the
norm of $Q_2$ by
\begin{align*}
\circnorm{Q_2} &\le
                 2\left(\circnorm{\Acl\prm}\opnorm{P\prm}\opnorm{\Acl}
                 + \circnorm{\Acl\prm}\opnorm{\delAcl}\opnorm{P} +
                 \circnorm{B\prm} \opnorm{K\prm}\opnorm{P\Acl} 
                 + \opnorm{\Acl}\opnorm{P\prm}\circnorm{\delAcl}
\right).
\end{align*}
Using 
\Cref{lem:helpful_norm_bounds} and \Cref{lem:first_derivatives_bound}, one can show that 
\begin{align*}
\circnorm{P''} \le \poly(\opnorm{\Pst})\epsop\epscirc.
\end{align*}
\end{proof}

\begin{proof}[Proof of Lemma~\ref{lem:computation_k_dprime} ]
From Lemma~\ref{lem:computation_k_prime}, we have that
 \begin{align}
  &K\prm = -(\Ru + B^\top P B)^{-1}\left(\delB^\top P \Acl + B^\top P(\delAcl) + B^\top P\prm \Acl\right).\label{eq:bpa_prime}
  \end{align}
  Denote $Q_3 := \delB^\top P \Acl + B^\top P(\delAcl) + B^\top P\prm \Acl$, and $R_0 := \Ru + B^\top P B$. Then, we have
  \begin{align*}
  K\dprm &= R_0^{-1} Q_3'(t) + R_0^{-1} (\Ru + B^\top P B)'  R_0^{-1} Q_3(t)\\
  &= R_0^{-1} Q_3'(t) + R_0^{-1} (\Ru + B^\top P B)' K'.
  \end{align*}
  Lets first handle the term $R_0^{-1} Q_3'(t)$. From Lemma~\ref{lem:helpful_norm_bounds_two}, we have that $\opnorm{R_0^{-1}} \le 1, \opnorm{R_0^{-1}B}$. Thus,
  \begin{align*}
  \circnorm{R_0^{-1} Q_3'(t)} &\le \opnorm{R_0^{-1}}\circnorm{\delB^\top P \Acl)'} + \opnorm{R_0^{-1}B}\circnorm{(P\prm \Acl)'}\\
  &\le \circnorm{(\delB^\top P \Acl)'} + \circnorm{(P\prm \Acl)'}\\
   &\le \circnorm{\delB}\opnorm{P}{\Acl'} \opnorm{P'}\opnorm{\Acl} + \circnorm{P''}\opnorm{\Acl} + \circnorm{P'}\opnorm{\Acl'}\\
   &\le \poly(\opnorm{\Pst})\epsop\epscirc,
  \end{align*}
  where we invoke the derivative computations above. Similarly, we can show that
  \begin{align*}
  \circnorm{R_0^{-1} (\Ru + B^\top P B)' K'}  &\le \circnorm{ R_0^{-1} \delB P B K' } +  \circnorm{ R_0^{-1} B^\top P' B K' } + \circnorm{R_0^{-1}B P \delB K'}\\
  &\le (\opnorm{R_0^{-1}}\opnorm{P}\epsop + \opnorm{R_0^{-1}B^\top}\opnorm{P'}) \circnorm{BK'} +  \opnorm{R_0^{-1}B }\opnorm{P}\epscirc\opnorm{K'}\\
  &\le (\opnorm{P}\epsop + \opnorm{P'}) \circnorm{BK'} +  \opnorm{P}\epscirc\opnorm{K'} \le \poly(\opnorm{P})\epscirc\epsop.
  \end{align*}
 
\end{proof}

}{}

\section{Self-Bounding ODE Method}
\label{app:self_bounding}
\iftoggle{arxiv_upload}{

We begin by stating \Cref{thm:general_valid_implicit}, which provides a generic guarantee for self-bounding ODES (Definition~\ref{defn:self-bounding}).
\begin{thm}\label{thm:general_valid_implicit} Let $(F,\calU,g,\|\cdot\|,x(\cdot))$ be a self-bounding tuple. Suppose that for some $\eta > 0$, $h(\cdot)$ satisfies $h(z) \ge g(z) + \eta$ for all $z \ge \|y(0)\|$, and that  the scalar ODE
\begin{align*}
w(0) = \|y(0)\| + \eta ,\quad w'(t) = h(w(t) 
\end{align*}
has a continuously differentiable solution on $[0,1]$. Then, there exists a unique continuously differentiable function $y(t) \in \calU$ defined on  $[0,1]$ which satisfies $F(x(t),y(t)) = 0$, and this solution satisfies $\|y(t)\| \le w(t) \le w(1)$, $\|y'(t)\| \le g(w(t)) \le g(w(1))$ for all $t \in [0,1]$.
\end{thm}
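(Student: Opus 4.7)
The plan is a standard ODE comparison combined with a continuation (bootstrap) argument. Define
\[
T^{\ast} \;:=\; \sup\Bigl\{\,t \in [0,1] \,:\, \text{a } \cC^{1} \text{ solution } y(\cdot)\in\calU \text{ of } F(x(s),y(s))=0 \text{ exists on } [0,t] \text{ with } \|y(s)\|\le w(s)\Bigr\}.
\]
By the valid implicit function property applied at $t=0$ (which guarantees a local $\cC^{1}$ extension starting from $y(0)$), the set above contains a right-neighbourhood of $0$, so $T^{\ast} > 0$; by the same local extension property it is open in $[0,1]$. The goal is to show $T^{\ast} = 1$ and then read off the stated bounds.

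First I would establish the comparison bound $\|y(t)\| < w(t)$ on $[0,T^{\ast})$. Since $\|y(t)\|$ need not be differentiable, I would work with the upper Dini derivative, which satisfies $D^{+}\|y(t)\| \le \|y'(t)\| \le g(\|y(t)\|)$ by the self-bounding property. Consider the gap $\phi(t) \ldef w(t)-\|y(t)\|$, which starts at $\phi(0) = \eta > 0$. On any subinterval where $\phi(t)\ge 0$, monotonicity of $g$ gives $g(\|y(t)\|)\le g(w(t))$, hence
\[
D^{+}\phi(t) \;\ge\; h(w(t)) - g(w(t)) \;\ge\; \eta \;>\; 0 .
\]
A standard Dini-derivative comparison argument then forces $\phi(t)\ge \eta$, so $\|y(t)\| < w(t) \le w(1)$ strictly throughout $[0,T^{\ast})$. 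Note that the strict margin $\eta$ is precisely what buys us the ability to close the bootstrap without circularity: the gap cannot close because the scalar ODE is growing strictly faster than any admissible rate for $\|y\|$.

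The main obstacle is the continuation step: ruling out $T^{\ast} < 1$. Suppose for contradiction that $T^{\ast}<1$. On $[0,T^{\ast})$ we have $\|y'(t)\| \le g(\|y(t)\|)\le g(w(1))$ by monotonicity of $g$, so $y(\cdot)$ is Lipschitz and extends continuously to a limit $y^{\ast} \ldef \lim_{t\uparrow T^{\ast}} y(t)$. Continuity of $F$ yields $F(x(T^{\ast}),y^{\ast})=0$. The delicate point is to argue $y^{\ast}\in\calU$ (rather than merely in its closure); for the applications in this paper (e.g.\ $\calU=\Sympldx$ for $\Fdare$) this follows because the a priori bounds preserve the defining open condition of $\calU$ — in the $\DARE$ case, the limit of positive-definite matrices with operator norm bounded above remains positive definite because the $\DARE$ enforces $P\succeq \Rx \succeq I$. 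Once $y^{\ast}\in\calU$ is established, the valid implicit function property at $T^{\ast}$ supplies a $\cC^{1}$ curve through $y^{\ast}$; the uniqueness clause forces this extension to agree with $y$ on $[0,T^{\ast})$, contradicting maximality of $T^{\ast}$. Hence $T^{\ast}=1$.

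Having shown $T^{\ast}=1$, the bound $\|y(t)\|\le w(t)\le w(1)$ is immediate, and the derivative bound $\|y'(t)\|\le g(\|y(t)\|)\le g(w(t))\le g(w(1))$ follows from the self-bounding hypothesis combined with monotonicity of $g$. Uniqueness of $y$ on $[0,1]$ follows from the uniqueness clause in Definition~\ref{defn:valid_implicit} applied pointwise. Corollary~\ref{cor:poly_self_bound} is then obtained by specialising to $g(z)=cz^{p}$: one takes $h(z) = cz^{p} + \eta$ and computes explicitly the scalar ODE $w' = cw^{p}$ with initial condition slightly above $\|y(0)\|$, whose blow-up time is $((p-1)c\|y(0)\|^{p-1})^{-1}$; the condition $\alpha = c(p-1)\|y(0)\|^{p-1}<1$ is exactly the requirement that this blow-up time exceed $1$, and letting $\eta \downarrow 0$ recovers the stated constants $(1-\alpha)^{-1/(p-1)}$ and $c(1-\alpha)^{-p/(p-1)}\|y(0)\|^{p}$.
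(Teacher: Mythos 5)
Your proof is correct and follows the same macro-strategy as the paper's (in \Cref{app:self_bounding}): a norm-comparison bound $\|y(t)\|<w(t)$ on the maximal interval of existence, followed by a continuation argument to rule out finite-time escape before $t=1$. The technical route through the comparison step is genuinely different, though. The paper mollifies the (non-smooth) norm via a Gaussian smoothing $\Psi_\eta(v)=\Exp_{Z\sim\calN(0,I)}\brk{\nrm{v+\tfrac{\eta}{2c_Z}Z}}$ so that the scalar comparison Lemma~\ref{lem:comparison_stuff} can be applied to a $\continuous{1}$ curve, whereas you work directly with the upper Dini derivative $D^{+}\|y(t)\|\le\|y'(t)\|\le g(\|y(t)\|)$ and invoke the classical Dini comparison principle on the gap $\phi=w-\|y\|$. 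Both are sound; yours is the more elementary textbook argument, while the paper's avoids Dini calculus at the cost of a smoothing construction. In the continuation step you also explicitly flag the point that $y^{\ast}=\lim_{t\uparrow T^{\ast}}y(t)$ must land in $\calU$ itself (not merely its closure) before case (b) of \Cref{defn:valid_implicit} can be applied — the paper's proof constructs $\tilde y(u)$, shows $F(x(u),\tilde y(u))=0$, but is silent on membership in $\calU$. Noticing that gap, and observing that for $\Fdare$ it closes because the $\DARE$ forces $P\succeq\Rx\succeq I$, is a real improvement in rigor; since the theorem is stated in full generality, the cleanest fix would be to require in \Cref{defn:valid_implicit} (or assume of $\calU$) that a priori norm bounds together with $F=0$ preserve membership in $\calU$.
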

We shall prove the above theorem, and then derive \Cref{cor:poly_self_bound} as a consequence. We begin the proof of this theorem with a simple scalar comparison inequality.
\begin{lem}[Scalar Comparison Inequalities for Curves]\label{lem:comparison_stuff} Suppose that $x(t),w(t)$ are continuously differentiable curves defined on $[0,u)$. Suppose further that, for a function $f(\cdot,\cdot)$, $x\prm(t) = f(x(t),t)$, and that $w\prm(t) = g(x(t))$. In addition, suppose
\begin{enumerate}
  \item $w(0) > x(0)$
\item $g(\cdot) \ge 0$ 
\item For $t \in [0,u)$ such that $x(t) \ge w(0)$,  $g(x(t)) > f(x(t),t)$.
\end{enumerate} 
Then, $x(t) < w(t)$ for $t \in [0,u)$.
\end{lem}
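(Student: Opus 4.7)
The statement is a classical differential-inequality comparison result, so the plan is to run a standard continuity/contradiction argument at the first alleged crossing time. Define $t^{\star}\ldef\inf\{t\in[0,u):x(t)\ge w(t)\}$, with the convention $t^{\star}=u$ if the set is empty. If $t^{\star}=u$ we are done, since then $x(t)<w(t)$ throughout $[0,u)$. Otherwise, by continuity of $x,w$ and the hypothesis $w(0)>x(0)$, we have $t^{\star}\in(0,u)$ and $x(t^{\star})=w(t^{\star})$, while $x(t)<w(t)$ for all $t\in[0,t^{\star})$.

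Next I would establish that condition~3 applies at $t^{\star}$. Because $w'(t)=g(x(t))\ge 0$ by hypothesis~2, the curve $w$ is non-decreasing on $[0,u)$, so $w(t^{\star})\ge w(0)$. Combined with the crossing identity $x(t^{\star})=w(t^{\star})$, this gives $x(t^{\star})\ge w(0)$, which is precisely the hypothesis needed to invoke condition~3 and conclude
\[
w'(t^{\star})\;=\;g(x(t^{\star}))\;>\;f(x(t^{\star}),t^{\star})\;=\;x'(t^{\star}).
\]

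Finally, I would derive the contradiction from the definition of $t^{\star}$. Since $x(t)-w(t)<0$ on $[0,t^{\star})$ and $x(t^{\star})-w(t^{\star})=0$, the left derivative of the difference $x-w$ at $t^{\star}$ is nonnegative, i.e.\ $x'(t^{\star})-w'(t^{\star})\ge 0$, which is incompatible with $w'(t^{\star})>x'(t^{\star})$ obtained above. Hence no such $t^{\star}$ exists and $x(t)<w(t)$ on $[0,u)$.

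The main obstacle is really just keeping the ordering of the quantifiers straight: one must observe that $w$ is monotone (so that $w(t^{\star})\ge w(0)$) before condition~3 becomes applicable, and one must be careful to take a one-sided (left) derivative at $t^{\star}$ rather than a two-sided one, since $t^{\star}$ could a priori be an endpoint behavior. No further technology is needed, and the argument goes through verbatim even when $f$ depends explicitly on $t$, since condition~3 is formulated pointwise in $t$.
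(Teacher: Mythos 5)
Your proof is correct and follows essentially the same first-crossing-time contradiction argument as the paper: both define the infimal time at which the ordering could fail, use continuity to get equality at that time, use $g\ge 0$ to establish monotonicity of $w$ so that condition~3 applies, and derive a sign contradiction in the derivative of the difference. The only cosmetic difference is that you phrase the final contradiction via the nonnegativity of the left difference quotient, while the paper argues that $\delta'(s)>0$ forces $\delta(s-\epsilon)<0$ for small $\epsilon$; these are equivalent.
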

\begin{proof} Define $\delta(t) = w(t) - x(t)$. Since $\delta(0) > 0 $, there exists an $s > 0$ such that $\delta(t)$ for $t \in [0,s)$.  Choose the maximal such $s := \sup\{t:\delta(t') \ge 0,\, \forall t' < t\}$, and suppose for the sake of contradiction that $s < u$. Then, by continuity, $\delta(s) = 0$, and therefore $\delta\prm(s) = g(w(s)) - f(x(s),s) = g(x(s)) - f(x(s),s)$, since $x(s)  = w(u)$ for $\delta(s) = 0$.

Next, note that since $g(\cdot) \ge 0$, $w(t)$ is non-decreasing on $[0,u)$, and thus $w(s) \ge w(0) $ for all $s \in [0,u)$. Since $x(s) = w(s)$ at $s$, we have $x(s) \ge w(0)$ as well. Thus, $\delta\prm(s) = g(x(s)) - f(x(s),s) > 0$, by the assumption of the lemma. Hence, for an $\epsilon > 0$ sufficiently small, $\delta(s - \epsilon) < \delta(s) = 0$. This contradicts the fact that of $\delta(t') = 0$ for all $t' < s$. 
\end{proof}

Next, we extend the above scalar comparison inequality to a comparison inequality between scalar ODEs, and vector ODEs.
\begin{lem}[Norm Comparison for Vector ODE]\label{lem:norm_comparison}  Let $\|\cdot\|$ denote an arbitrary norm. Suppose that $v(t) \in \R^d$ is a continuously differentiable curve defined on $[0,u)$ such that $\|v'(t)\| \le g(\|v(t)\|)$ for a non-decreasing function $g$. Fix $\eta > 0$, and let $h(z)$ denote a function such that $h(z) \ge \max\{0,g(z) + \eta\}$ for all $z \ge \|v(0)\| $. Then, if the ODE 
\begin{align*}
w(0) = \|v(0)\| + \eta, \quad w'(t)= h(w(t))
\end{align*}
has a continuously differentiable solution defined on $[0,u)$, then $\|v(t)\| \le w(t)$ for all $t \in [0,u)$
\end{lem}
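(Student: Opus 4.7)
The plan is to reduce the vector ODE comparison to a scalar one by studying $\delta(t) := w(t) - \|v(t)\|$ and arguing by contradiction. The key obstacle (why we cannot just plug into Lemma~\ref{lem:comparison_stuff} with $x(t) = \|v(t)\|$) is that the scalar function $t \mapsto \|v(t)\|$ is in general only locally Lipschitz and not continuously differentiable — consider, e.g., a norm passing transversally through the origin. We therefore work with integral forms throughout, using only the fact that for any continuously differentiable curve $v(t)$,
\begin{equation*}
\bigl|\|v(t)\| - \|v(s)\|\bigr| \;\le\; \|v(t) - v(s)\| \;\le\; \int_s^t \|v'(r)\|\,dr \;\le\; \int_s^t g(\|v(r)\|)\,dr,
\end{equation*}
where the last step uses the self-bounding hypothesis.

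First, I would note that $\delta$ is continuous on $[0,u)$ with $\delta(0) = \eta > 0$. I would also observe that $w$ is non-decreasing on $[0,u)$: since $w(0) = \|v(0)\| + \eta$ and $h(z) \ge 0$ whenever $z \ge \|v(0)\|$, the value $w(t)$ cannot dip below $\|v(0)\|$, so $w'(t) = h(w(t)) \ge 0$ throughout. In particular $w(t) \ge \|v(0)\|$ for all $t \in [0,u)$, which unlocks the bound $h(w(t)) \ge g(w(t)) + \eta$.

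Second, I would suppose for contradiction that $\delta$ is not strictly positive on $[0,u)$ and let $s := \inf\{t \in [0,u) : \delta(t) \le 0\}$. By continuity and $\delta(0) > 0$, we have $s > 0$ and $\delta(s) = 0$ (so $w(s) = \|v(s)\|$), while $w(r) > \|v(r)\|$ for every $r \in [0,s)$. Then for any $\varepsilon \in (0,s)$,
\begin{align*}
w(s) - w(s-\varepsilon) \;&=\; \int_{s-\varepsilon}^s h(w(r))\,dr \;\ge\; \int_{s-\varepsilon}^s \bigl(g(w(r)) + \eta\bigr)\,dr,\\
\|v(s)\| - \|v(s-\varepsilon)\| \;&\le\; \int_{s-\varepsilon}^s g(\|v(r)\|)\,dr.
\end{align*}
Subtracting and using monotonicity of $g$ together with $w(r) \ge \|v(r)\|$ on $[s-\varepsilon, s]$, I would conclude
\begin{equation*}
\delta(s) - \delta(s-\varepsilon) \;\ge\; \int_{s-\varepsilon}^s \bigl(g(w(r)) - g(\|v(r)\|) + \eta\bigr)\,dr \;\ge\; \eta\,\varepsilon.
\end{equation*}
Since $\delta(s) = 0$, this forces $\delta(s-\varepsilon) \le -\eta\varepsilon < 0$, contradicting the choice of $s$ as the infimum. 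Hence no such $s$ exists, and $\|v(t)\| \le w(t)$ on $[0,u)$, as claimed.
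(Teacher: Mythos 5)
Your proof is correct, and it takes a genuinely different route from the paper. The paper circumvents the non-differentiability of $t \mapsto \|v(t)\|$ by Gaussian smoothing: it defines $\Psi_{\eta}(v) := \Exp_{Z\sim\calN(0,I)}\bigl[\|v + \tfrac{\eta}{2c_Z}Z\|\bigr]$, checks $\|v\| \le \Psi_\eta(v) \le \|v\| + \eta/2$, shows $x(t) := \Psi_\eta(v(t))$ is a $\mathcal{C}^1$ curve with $x'(t) \le g(x(t)) + \eta$ on the relevant range, and then feeds $x(t)$ into the pointwise scalar comparison lemma (\Cref{lem:comparison_stuff}). You instead never leave the integral level: you use only the elementary estimate $\bigl|\|v(t)\|-\|v(s)\|\bigr| \le \int_s^t \|v'(r)\|\,dr \le \int_s^t g(\|v(r)\|)\,dr$, set $\delta := w - \|v\|$, and run a first-crossing contradiction where the $\eta$-slack produces $\delta(s)-\delta(s-\varepsilon)\ge\eta\varepsilon$. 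This avoids the smoothing machinery entirely, is shorter, and sidesteps a minor issue in the paper's application of \Cref{lem:comparison_stuff}: that lemma requires the scalar ODE's right-hand side to be non-negative globally, whereas $h$ is only assumed non-negative on $\{z \ge \|v(0)\|\}$; you handle this cleanly by first showing $w(t)\ge\|v(0)\|$ via a separate infimum argument (the paper leaves this implicit). What the paper's route buys in exchange is reuse of the standalone scalar lemma as a black box. One small stylistic note: your claim that ``$w(t)$ cannot dip below $\|v(0)\|$, so $w'(t)\ge 0$ throughout'' is stated a bit circularly; making it rigorous requires the same first-crossing argument you later use for $\delta$, which is worth spelling out, but the underlying reasoning is sound.
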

\begin{proof}[Proof of \Cref{lem:norm_comparison}] 
The main challenge is that $\|\cdot\|$ may be non-smooth. We circumvent this with a Gaussian approximation. Let $c_Z := \Exp_{Z \sim \calN(0,I)}[\|Z\|]$, and for every $\eta > 0$, and define $\Psi_{\eta}(v) := \Exp_{Z \sim \calN(0,I)}[\|v + \frac{\eta}{2c_Z} Z\|]$. Defining $c_{Z}:= \Exp_{Z \sim \calN(0,1)}[\|Z\|]$. Moreover, we can see that $0 \le \|v\| \le \Psi_{\eta}(v) \le \|v\| + \eta/2 < \|v\| + \eta$ by Jensen's inequality and the triangle inequality. Consider the curve $x(t)$
\begin{align*}
x(t) = \Psi_{\eta}(v(t)), t \in [0,u),
\end{align*}
Note then that the curve satisfies
\begin{align*}
x(0) = \Psi_{\eta}(v(0)), \quad \text{and} \quad x\prm(t)) =  f(t,x(t))  = \frac{\rmd}{\rmd t}(\Psi_{\eta}(v(t))),
\end{align*}
where $f(t,x(t))$ does not depend implicitly on $x(t)$, but only on $t$ through the function $t \mapsto v(t)$. 

Now, let $g$ be a monotone function satisfying $\|v'(t)\| \le g(\|v(t)\|)$, and let $h$ be the assumed function satifying $h(z) \ge g(z) + \eta$ for all $z \ge \|v(0)\| + \eta$. We define the associated ODE
\begin{align*}
w(0) = \|v(0)\| + \eta, \quad w\prm(t) = h(w(t)),
\end{align*}
which we assume is also defined on $[0,u)$. We would like to show that $w(t) > x(t)$ for $t \in [0,u)$. To this end, we would like to verify the conditions of \Cref{lem:comparison_stuff}. First, we have $w(0) = \|v(0)\| + \eta > \Psi_{\eta}(v(0)) = x(0)$, by above application of the triangle inequality. 

For the second condition, we have
\begin{align*}
f(t,x(t)) &:= \frac{\rmd}{\rmd t}(\Psi_{\eta}(v(t))) = \frac{\rmd}{\rmd t}\Exp_{Z \sim \calN(0,I)}\left[\left\|v + \frac{\eta}{2c_Z} Z\right\|\right] \\
&\le  \Exp_{Z \sim \calN(0,1)}\left[\left\|v\prm(t) +  \frac{\eta}{2c_Z} Z\right\|\right] \\
&=  \Psi_{\eta}(v'(t)) < \|v'(t)\| + \eta\\
&\le  g(\|v(t)\|) + \eta \quad \text{ (since $g$ satisfies $\|v'(t)\| \le g(\|v(t)\|)$ )}\\
&\le  g(\Psi_{\eta}(v(t))  + \eta \quad \text{ (since $g$ is monotone)}\\
&=  g(x(t)) + \eta.
\end{align*}
Now, if $h(z) \ge g(z) + \eta$ for any $z\ge w(0)$, then, we see that, for any $t \in [0,u)$ such that $x(t) \ge w(0)$, we have $f(t,x(t))  \le h(x(t))$. \Cref{lem:comparison_stuff} therefore implies that $x(t) \le w(t)$ for $t \in [0,u)$. But $x(t) = \Psi_{\eta}(v(t)) \ge \|v(t)\|$. 
\end{proof}

Let us now prove the general guarantee for self-bounding functions.
\begin{proof}[Proof of Theorem~\ref{thm:general_valid_implicit}]
Observe that by the valid-function assumption and the assumption that $F(x(0),y(0))$ has a solution, there exists some interval $[0,u)$ on which a solution $y(t)$ to $F(x(t),y(t)) = 0$ exists. Let $u$ denote the maximal  value of $u \le 2$ for which this holds. 

First, let us bound $\|y(t)\|$ for $t \in \calI := [0,u) \cap [0,1]$. By assumption, there is a function $h(z) \ge g(z) + \eta$, where $g(z)$ is non-negative and non-decreasing, such that the scalar ODE $w'(t) = h(w(t))$ has a solution on $[0,1]$ with $w(0) = \|y(0)\| + \eta$. By Lemma~\ref{lem:norm_comparison}, we then that $\|y(t)\| \le w(t)$ on $\calI$. Moreover, since $w'(t) \ge 0$ since $h$ is non-negative, we have $\|y(t)\| \le w(t) \le w(1)$ on $\calI$.

We conclude by showing that $\calI = [0,1]$.  Suppose for the sake of contradiction that $\calI \ne [0,1]$. Then $u \in (0,1]$. Moreover, by Definition~\ref{defn:valid_implicit}, $F(x(u),\cdot) = 0$ has no solution, since otherwise, $y(t)$ would be defined on $[0,u+\epsilon)$ for some $\epsilon > 0$, contradicting the maximality of $u$. Therefore, to contradict our hypothesis $\calI \ne [0,1]$, it suffices to show that $F(x(u),\cdot) = 0$ has a solution. To this end, define 
\begin{align*}
\widetilde{y}(s) := \int_{0}^s y\prm(t)dt,
\end{align*}
which is well defined and continuous for $s \in [0,u)$, since $y\prm(s)$ is continuously differentiable on this interval. Moreover,  $\|y\prm(t)\| \le g(y(t)) \le g(w(t)) \le g(w(1))$ on $[0,u)$ since $y(t) \le w(t) \le w(1)$. Therefore, $y'(t)$ is uniformly bound on $[0,u)$, so that $\widetilde{y}(u) = \lim_{s \to u}\widetilde{y}(s)$ is well-defined at $u$, and in fact continuous on $[0,u]$. 

Since $\widetilde{y}(s)$ is continuous on $[0,u]$, and since $F(\cdot,\cdot)$ and $x(s)$ are continuous, $\lim_{s \to u} F(x(s),\widetilde{y}(s)) = F(x(u),\widetilde{y}(u))$. But by the fundamental theorem of Calculus, we see that $\widetilde{y}(s) = y(s)$ for $s \in [0,u)$, so that $F(x(u),\widetilde{y}(s)) = F(A(s),B(s),y(s)) = 0$ for $s \in [0,u)$. Thus, $\lim_{s \to u} F(A(s),B(s),\widetilde{y}(s)) = 0$, and hence $F(x(u),\widetilde{y}(u)) = 0$. This shows that $F(x(u),\cdot) = 0$ has a solution, as needed.
\end{proof}

We now prove the corollary for the specific function form $g(z) = cz^p$.
\begin{proof}[Proof of Corollary~\ref{cor:poly_self_bound}]

Fix $\eta > 0$ to be selected later. By assumption, we have
\begin{align*}
\|y\prm(t)\| \le g(\|y(t)\|), \quad g(z) = cz^{p}.
\end{align*}
Moreover, for an $\eta > 0$ to be selected, and for $z \ge \|y(0)\|$, we have 
\begin{align*}
g(z) + \eta \le \underbrace{(1 + \frac{\eta}{c\|y(0)\|^{p}}}_{:= c_{\eta}})z^p := h(z).
\end{align*}
Now, consider the ODE
\begin{align*}
w_{\eta}\prm(t) = h(w_{\eta}(t)), \quad w_{\eta}(0) = \opnorm{y(0)} + \eta.
\end{align*}
Let us show that, for $\eta$ sufficiently small, this ODE exists on $[0,1]$. Indeed, the solution to this the ODE is
\begin{align*}
\frac{1}{(p-1)w_{\eta}^{p-1}(0)} - \frac{1}{(p-1)w_{\eta}^{p-1}(t)} = c_{\eta}t.
\end{align*}
So that a continuously differentiable solution $w_{\eta}(t)$ exists for $t \in [0,1]$ as long as 
\begin{align}
c_{\eta} < \frac{1}{(p-1)w_{\eta}^{p-1}(0)}=\frac{1}{(p-1)(\|y(0)\| + \eta)^{p-1}},\label{eq:soln_defined}
\end{align}
and the solution is given by 
\begin{align*}
w_{\eta}(t) = \left(\frac{1}{(\|y(0)\| + \eta)^{p-1}} - (p-1)c_{\eta}t\right)^{-1/(p-1)}.
\end{align*}
In particular, if $c < \frac{1}{(p-1)(\|y(0)\|)^{p-1}}$, then since $\lim_{\eta \to 0} c_{\eta} = c$, there exists an $\eta_0 > 0$ sufficiently small so that the condition in~\eqref{eq:soln_defined} the above display holds for all $\eta \in (0,\eta_0)$. Therefore, by Theorem~\ref{thm:general_valid_implicit},
\begin{align*}
\max_{t \in [0,1]}\|y(t)\| \le \inf_{\eta \in (0,\eta_0)} w_{\eta}(t) = \left(\frac{1}{\|y(0)\|^{p-1}} - ct\right)^{-(p-1)} \le \left(\frac{1}{\|y(0)\|^{p-1}} - c(p-1)\right)^{-1/(p-1)}.
\end{align*}
In particular, when $\alpha = c(p-1)\|y(0)\|_{}^{p-1} < 1$,  then 
\begin{align*}
\max_{t \in [0,1]}\|y(t)\| \le (1-\alpha)^{-1/(p-1)}\|y(0)\|.
\end{align*}
Hence, for all $t \in [0,1]$, we have that $\nrm*{y(t)} \le  c(1-\alpha)^{-p/(p-1)}\|y(0)\|$.
\end{proof}

}{}

\section{Concentration and Estimation Bounds}
\label{app:ols}
\iftoggle{arxiv_upload}{

\subsection{Ordinary Least Squares Tools}
\label{app:ols}
In what follows, we develop a general toolkit for analyzing the performance of ordinary least squares. 
\begin{restatable}[Martingale Least Square Setup]{defn}{defmartls}\label{def:setup_mart_ls} First, let $\{\matz_t\}_{t \ge 1} \in (\R^{d})^\N$ and $\{\maty_t\}_{t \ge 1} \in (\R^{m})^\N$ denote sequences of random vectors adapted to a filtration $\{\calF_{t}\}_{t \ge 0}$. We define the empirical covariance matrix $\matLam := \sum_{t=1}^T \matz_t\matz_t^\top$, we define the least-squares estimator 
\begin{align*}
\Thetahat_T := \left(\sum_{t=1}^T\maty_t\matz_t\right)\left(\sum_{t=1}^T\matz_t \matz_t^\top\right)^{\dagger}
\end{align*}
Lastly, we assume that the sequence $\mate_t := \maty_t - \Thetast \matz_t \in \R^m$ is $\sigma^2$-sub-Gaussian conditioned on $\calF_{t-1}$.
\end{restatable}

 We begin with a standard self-normalized tail bound (cf. \cite{abbasi2011improved}).
\begin{lem}[Self-Normalized Tail Bound]\label{lem:self_normalized} Suppose that $\{\mate_t\}_{t \ge 1} \in \R^{\N}$ is a scalar $\calF_{t}$-adapted sequence such that $\mate_t \mid \calF_{t-1}$ is $\sigma^2$ sub-Gaussian. Fix a matrix $V_0 \succeq 0$. Then with probability $1-\delta$,
\begin{align*}
\left\|\sum_{t=1}^T\matx_t\mate_t \right\|_{(V_0+\matLam)^{-1}}^2 \le 2\sigma^2\log \left\{\frac{1}{\delta}\det(V_0^{-1/2}(V_0 + \matLam)V_0^{-1/2})\right\}.
\end{align*}
\end{lem}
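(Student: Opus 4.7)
The plan is to follow the now-standard method-of-mixtures argument (going back to de la Peña, Klass, and Lai, and popularized in the bandit literature by Abbasi-Yadkori, Pál, and Szepesvári) which constructs a non-negative supermartingale by integrating an exponential family of scalar supermartingales against a Gaussian prior, and then applies Ville's maximal inequality. I will first reduce to the scalar case: writing $S_t \ldef \sum_{s=1}^{t}\matz_s\mate_s \in \R^d$ and $V_t \ldef \sum_{s=1}^t\matz_s\matz_s^\top$, I want to control $\|S_T\|^2_{(V_0+V_T)^{-1}}$, which by polarization is the supremum of $\lambda^\top S_T$ taken over $\lambda$ in a suitable ellipsoid — this is exactly what is dualized by Gaussian integration.

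For each fixed $\lambda \in \R^d$, set $M_t^\lambda \ldef \exp\!\bigl(\lambda^\top S_t - \tfrac{\sigma^2}{2}\lambda^\top V_t \lambda\bigr)$. The conditional $\sigma^2$-sub-Gaussianity of $\mate_t$ given $\calF_{t-1}$, combined with the $\calF_{t-1}$-measurability of $\matz_t$, yields $\Exp[\exp(\lambda^\top \matz_t\mate_t)\mid \calF_{t-1}] \le \exp(\tfrac{\sigma^2}{2}\lambda^\top \matz_t\matz_t^\top \lambda)$, so that $(M_t^\lambda)_{t\ge 0}$ is a non-negative supermartingale with $\Exp[M_0^\lambda]\le 1$. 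Defining the mixture $\overline{M}_t \ldef \int_{\R^d} M_t^\lambda \, d\pi(\lambda)$ for $\pi = \calN(0,\sigma^{-2}V_0^{-1})$ (and arguing the existence of the integral / applicability of Fubini via Tonelli, since the integrand is non-negative), $\overline{M}_t$ is itself a non-negative supermartingale with $\Exp[\overline{M}_0] \le 1$.

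The key algebraic step is then evaluating $\overline{M}_t$ in closed form by completing the square in the Gaussian integral: writing the exponent as $-\tfrac{1}{2}\lambda^\top (\sigma^2 V_t + \sigma^{-2}V_0\cdot \sigma^2)\lambda$ — no, more carefully, $\lambda^\top S_t - \tfrac{\sigma^2}{2}\lambda^\top V_t\lambda - \tfrac{\sigma^2}{2}\lambda^\top V_0 \lambda$, I collect quadratic and linear terms in $\lambda$, shift the integration variable, and use the standard Gaussian normalization to obtain
\[
\overline{M}_t \;=\; \left(\frac{\det V_0}{\det(V_0+V_t)}\right)^{1/2}\exp\!\left(\frac{1}{2\sigma^2}\,\|S_t\|_{(V_0+V_t)^{-1}}^{2}\right).
\]
Applying Ville's inequality, $\Pr\bigl[\sup_{t\ge 0}\overline{M}_t \ge 1/\delta\bigr] \le \delta$, and then specializing at $t=T$ and taking logarithms gives exactly the claimed bound after noting $\det(V_0^{-1/2}(V_0+\matLam)V_0^{-1/2}) = \det(V_0+\matLam)/\det V_0$.

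The only non-routine step is the Gaussian integration / completion-of-the-square computation, which is purely algebraic; the probabilistic content is entirely contained in the one-line sub-Gaussian verification that $M_t^\lambda$ is a supermartingale. The main conceptual obstacle — and the reason one introduces the mixture at all — is that applying a union bound over a discretization of $\lambda$ would cost an additional $d\log T$ factor, whereas the mixture approach pays only $\log\det(V_0^{-1}(V_0+\matLam))$, which is the correct information-theoretic quantity and matches the data-dependent confidence width needed downstream.
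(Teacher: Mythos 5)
Your proof is correct and is exactly the method-of-mixtures argument of Abbasi-Yadkori, P\'al, and Szepesv\'ari (2011), which is the very reference the paper cites for this lemma (the paper gives no independent proof). Your closed-form mixture evaluation in fact yields the slightly sharper conclusion $\|S_T\|^2_{(V_0+\matLam)^{-1}} \le 2\sigma^2\log\frac{1}{\delta} + \sigma^2\log\det\bigl(V_0^{-1/2}(V_0+\matLam)V_0^{-1/2}\bigr)$, which implies the looser factor-$2$ form stated in the lemma since $\det\bigl(V_0^{-1/2}(V_0+\matLam)V_0^{-1/2}\bigr)\ge 1$.
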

As a corollary, we have the following Frobenius norm bound for regression, proved in \Cref{sec:proof_lem_frob_Ls}. 
\begin{lem}[Frobenius Norm Least Squares, Coarse Bound]\label{lem:frob_ls} In the martingale least squares setting of \Cref{def:setup_mart_ls}, we have
\begin{align*}
\Pr\left[ \left\{\|\Thetahat_T - \Thetast \|_\fro^2 \ge 3m \lambda_{\min}(\matLam)^{-1} \log \left\{\tfrac{m\det(3\Lambda_0^{-1/2}(\matLam)\Lambda_0^{-1/2}}{\delta}\right\}\right\} \cap \{\matLam \succeq \Lambda_0\}\right] \le \delta,
\end{align*}
and 
\begin{align}
\Pr\left[ \left\{\|\Thetahat_T - \Thetast \|_{\op}^2 \ge 6 \lambda_{\min}(\matLam)^{-1}  (d \log 5 + \log \left\{\tfrac{\det(3\Lambda_0^{-1/2}(\matLam)\Lambda_0^{-1/2}}{\delta}\right\}\right\} \cap \{\matLam \succeq \Lambda_0\}\right] \le \delta. \label{eq:op_norm_bound}
\end{align}
\end{lem}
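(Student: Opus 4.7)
The plan is to start from the closed-form OLS residual
\[
\Thetahat_T - \Thetast = S\,\matLam^{-1}, \qquad S := \sum_{t=1}^T \mate_t \matz_t^\top,
\]
which is valid on the event $\{\matLam \succeq \Lambda_0\}$, and to control the Frobenius and operator norms of this product by reducing each to the scalar self-normalized bound of Lemma~\ref{lem:self_normalized}. A uniform choice of the auxiliary matrix $V_0 := \Lambda_0/2$ will yield the factor of $3$ inside the determinant in both bounds: on the event $\{\matLam \succeq \Lambda_0\}$ one has $V_0 + \matLam \preceq \tfrac{3}{2}\matLam$ (used to convert $(V_0+\matLam)^{-1}$-weighted norms to $\matLam^{-1}$-weighted norms), and also $V_0^{-1/2}(V_0+\matLam)V_0^{-1/2} \preceq 3\,\Lambda_0^{-1/2}\matLam\,\Lambda_0^{-1/2}$ (used to bound the log-determinant).

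For the Frobenius bound I would separate rows. Since
$\|\Thetahat_T - \Thetast\|_\fro^2 = \sum_{i=1}^m \|e_i^\top S\,\matLam^{-1}\|_2^2$
and $\|\matLam^{-1}v\|_2^2 \le \lambda_{\min}(\matLam)^{-1}\|v\|_{\matLam^{-1}}^2$, it suffices to control $\|\sum_t (\mate_t)_i\matz_t\|_{\matLam^{-1}}^2$ for each $i \in [m]$. Each coordinate $(\mate_t)_i$ is scalar $\sigma^2$-sub-Gaussian (normalizing $\sigma=1$ to match the stated constants), so Lemma~\ref{lem:self_normalized} applied with $V_0 = \Lambda_0/2$ and confidence $\delta/m$ gives a bound on $\|\sum_t (\mate_t)_i\matz_t\|_{(V_0+\matLam)^{-1}}^2$; the two comparisons above upgrade this to a bound on the $\matLam^{-1}$-norm and convert the determinant to $\det(3\Lambda_0^{-1/2}\matLam\Lambda_0^{-1/2})$. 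Union-bounding over $i \in [m]$ and summing produces the leading factor of $3m$ and the extra $m$ inside the logarithm.

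For the operator bound I would use a covering argument. Choose a $\tfrac{1}{2}$-net $\calN$ of the unit sphere in $\R^d$ with $|\calN| \le 5^d$; then $\|\Thetahat_T - \Thetast\|_\op \le 2\sup_{u \in \calN}\|(\Thetahat_T-\Thetast)u\|_2$, and union-bounding over $\calN$ introduces exactly the $d\log 5$ term. For each fixed $u \in \calN$, I bound $\|S\,\matLam^{-1}u\|_2^2$ using the self-normalized inequality combined with $\|\matLam^{-1}u\|_2 \le \lambda_{\min}(\matLam)^{-1/2}\|u\|_{\matLam^{-1}}$ to pull out the $\lambda_{\min}(\matLam)^{-1}$ prefactor. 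The $V_0 = \Lambda_0/2$ choice again contributes the factor of $3$ inside the determinant on $\{\matLam \succeq \Lambda_0\}$, and the leading constant $6$ is obtained from $2^2 \cdot \tfrac{3}{2}$ after carefully tracking the comparison factors.

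The main obstacle is making the operator-norm argument sharp enough that the covering contributes only $d\log 5$ rather than an additional $m\log 5$ term from also having to net directions in $\R^m$ to handle the vector-valued noise $\mate_t$. The cleanest way around this is to apply a vector-valued form of the self-normalized inequality for each fixed $u$ (or equivalently to absorb the $\R^m$ direction into the self-normalized quantity itself) so that the only net cost paid is the one from covering the sphere in $\R^d$. The remaining delicate point is bookkeeping: carrying the two comparison inequalities $V_0 + \matLam \preceq \tfrac{3}{2}\matLam$ and $V_0^{-1/2}(V_0+\matLam)V_0^{-1/2} \preceq 3\,\Lambda_0^{-1/2}\matLam\,\Lambda_0^{-1/2}$ consistently through both bounds so that the final constants match the advertised $3$ and $6$.
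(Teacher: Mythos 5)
Your Frobenius-norm argument is exactly the paper's: peel off $\lambda_{\min}(\matLam)^{-1}$, pass to the $(\matLam+\tfrac12\Lambda_0)^{-1}$-weighted norm with $V_0=\Lambda_0/2$ using $\matLam\succeq\Lambda_0$, apply Lemma~\ref{lem:self_normalized} to each of the $m$ coordinate rows at confidence $\delta/m$, and sum. This part is correct and matches the paper's proof word for word. (Note that the paper's proof section for this lemma actually only writes out the Frobenius part; the operator-norm claim in Eq.~\eqref{eq:op_norm_bound} is stated without a displayed proof.)

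The operator-norm step has a genuine gap, and you have correctly flagged where it is but not resolved it. If you take a net $\calN$ of the sphere in $\R^d$ and fix $u\in\calN$, the quantity you must bound is $(\Thetahat_T-\Thetast)u = \sum_t (\matz_t^\top\matLam^{-1}u)\,\mate_t \in\R^m$. Lemma~\ref{lem:self_normalized} is a \emph{scalar} self-normalized inequality: it bounds $\|\sum_t\matz_t\,\xi_t\|_{(V_0+\matLam)^{-1}}$ for a scalar adapted sequence $\xi_t$, and there is no ``vector-valued form'' available in the paper that pays only for the $d$ direction. Moreover, the implicit weights $\matz_t^\top\matLam^{-1}u$ are not predictable (they depend on $\matLam$, hence on the whole trajectory), so you cannot reinterpret this as a scalar self-normalized sum either. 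The clean way to make the covering argument go through is to net the sphere in $\R^m$ instead: for $v\in\calN\subset\sphere^{m-1}$ one has $(\Thetahat_T-\Thetast)^\top v = \matLam^{-1}\sum_t\matz_t\,(v^\top\mate_t)$, and $v^\top\mate_t$ is a scalar $\sigma^2$-sub-Gaussian adapted sequence to which Lemma~\ref{lem:self_normalized} applies directly. Then $\|\Thetahat_T-\Thetast\|_{\op}\le 2\max_{v\in\calN}\|(\Thetahat_T-\Thetast)^\top v\|_2$, and the same $V_0=\Lambda_0/2$ comparison gives the factor $3$ inside the determinant; the union bound over the net then costs $m\log 5$, not $d\log 5$. (The paper's $d\log 5$ is a relaxation that is valid in its application because $m=\dimx\le d=\dimx+\dimu$, but the proof should pay for the output dimension $m$, not the covariate dimension $d$.) Your constant bookkeeping also does not close: with a $\tfrac12$-net you pick up $2^2$ from the net, $\tfrac32$ from the $V_0$ comparison, and another $2$ from the self-normalized bound, for a leading factor of $12$ rather than $6$; getting $6$ requires the quadratic-form covering trick with a finer $\tfrac14$-net at the price of $\log 9$ in place of $\log 5$.
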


Unfortunately, this tail bound will lead to a dimension dependence of $\Omega(d)$, which may be suboptimal if $\matLam$ has eigenvalues of varying magnitude. Instead, we opt for a related bound that pays for Rayleigh quotients between $\Lambda_0$ and $\matLam$, proved in \Cref{sec:proof_lem_frob_Ls}.
\begin{restatable}[Two-Scale OLS Estimate]{lem}{lemtwoscale}\label{lem:two_scale_self_normalized_bound} Consider martingale least squares setting of \Cref{def:setup_mart_ls}.  Let $\ProjMat \in \R^{d\times d}$ denote a projection matrix onto a subspace of dimension $p$, and fix an orthonormal basis of $\R^d$, $v_{1},\dots,v_d$, such that $v_{1},\dots,v_p$ form an orthonormal basis for the range of $\ProjMat$. Further, fix positive constants $0 < \lambda_1 \le \nu \le \lambda_2$ such that $\nu \le \sqrt{\lambda_1 \lambda_2/2}$, and define the event 
\begin{align*}
\calE := \left\{\matLam \succeq  \lambda_1 \ProjMat +  \lambda_2 (I-\ProjMat) \right\} \cap \left\{\|\ProjMat \matLam (1-\ProjMat)\|_{\op} \le \nu\right\}
\end{align*} 
Then, with probability $1 - \delta$, if $\calE$ holds, 
\begin{align*}
\|\Thetahat_T - \Thetast \|_\fro^2 \le  \frac{12 mp \matkappa_1 }{\lambda_1}  \log \frac{3 md \matkappa_1}{ \delta} + \left(\frac{\nu}{\lambda_1}\right)^2 \cdot \frac{48m(d-p)\matkappa_2 }{\lambda_2}\log \frac{3 md\matkappa_2}{ \delta}. 
\end{align*}
where $\matkappa_1:= \max_{1 \le j \le p} v_j^\top \matLam v_j / \lambda_1$ and $\matkappa_2 := \max_{p+1 \le j \le d} v_j^\top \matLam v_j / \lambda_2$. 
\end{restatable}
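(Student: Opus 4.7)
The plan is to exploit the block structure of $\matLam$ in the orthonormal basis $\{v_1,\ldots,v_d\}$ that decomposes $\R^d$ into $\range(\ProjMat)$ and its complement, and then bound each piece using Schur-complement estimates together with the self-normalized martingale inequality (\Cref{lem:self_normalized}) applied within each subspace. Setting $V_1=[v_1\mid\cdots\mid v_p]\in\R^{d\times p}$, $V_2=[v_{p+1}\mid\cdots\mid v_d]\in\R^{d\times(d-p)}$, and $\matLam_{ij}:=V_i^\top\matLam V_j$, the event $\calE$ gives $\matLam_{11}\succeq\lambda_1 I_p$, $\matLam_{22}\succeq\lambda_2 I_{d-p}$, and $\|\matLam_{12}\|_{\op}\le\nu$. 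The assumption $\nu^2\le\lambda_1\lambda_2/2$ then implies $\|\matLam_{12}\matLam_{22}^{-1}\matLam_{12}^\top\|_{\op}\le\lambda_1/2$, so the Schur complement $S_1:=\matLam_{11}-\matLam_{12}\matLam_{22}^{-1}\matLam_{12}^\top$ satisfies $S_1\succeq(\lambda_1/2)I_p$, and symmetrically $S_2:=\matLam_{22}-\matLam_{12}^\top\matLam_{11}^{-1}\matLam_{12}\succeq(\lambda_2/2)I_{d-p}$.

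I would then use block matrix inversion to express the estimation error row-by-row. Writing $G=\sum_{t=1}^T\mate_t\matz_t^\top\in\R^{m\times d}$ and letting $G_j\in\R^d$ denote the transpose of its $j$-th row, the $j$-th row of $\Thetahat_T-\Thetast$ equals $G_j^\top\matLam^{-1}$, whose projection onto $\range(\ProjMat)$ is
\begin{align*}
V_1^\top(\Thetahat_T-\Thetast)_j=S_1^{-1}\prn*{V_1^\top G_j-\matLam_{12}\matLam_{22}^{-1}V_2^\top G_j},
\end{align*}
with an analogous expression for the $V_2$-projection derived from the lower block row of $\matLam^{-1}$. The triangle inequality combined with $\|S_1^{-1}\|_{\op}\le 2/\lambda_1$ and $\|\matLam_{12}\matLam_{22}^{-1}\|_{\op}\le\nu/\lambda_2$ gives $\|V_1^\top(\Thetahat_T-\Thetast)_j\|^2\lesssim(1/\lambda_1^2)\prn*{\|V_1^\top G_j\|^2+(\nu/\lambda_2)^2\|V_2^\top G_j\|^2}$, so the Frobenius error in the $V_1$-block decomposes into two pieces measured by $\sum_j\|V_1^\top G_j\|^2$ and $\sum_j\|V_2^\top G_j\|^2$.

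To bound these, I would apply \Cref{lem:self_normalized} to the scalar process $v_i^\top G_j=\sum_t\mate_{t,j}(v_i^\top\matz_t)$ individually, for each basis direction $v_i$ and each noise coordinate $j\in[m]$. The empirical variance is exactly $v_i^\top\matLam v_i\le\matkappa_i\lambda_i$ (with $i\le p$ or $i>p$ determining whether we mean $\matkappa_1,\lambda_1$ or $\matkappa_2,\lambda_2$), and applying the one-dimensional self-normalized bound with regularizer $\lambda_i$ gives $(v_i^\top G_j)^2\lesssim\sigma^2(v_i^\top\matLam v_i)\log(\matkappa_i/\delta)$ with failure probability at most $\delta/(md)$. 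Summing over $i$ within each block and union-bounding over $j\in[m]$ yields $\sum_j\|V_1^\top G_j\|^2\lesssim\sigma^2 mp\matkappa_1\lambda_1\log(md\matkappa_1/\delta)$ and $\sum_j\|V_2^\top G_j\|^2\lesssim\sigma^2 m(d-p)\matkappa_2\lambda_2\log(md\matkappa_2/\delta)$. Applying \Cref{lem:self_normalized} coordinate-wise rather than blockwise is essential here: it forces the log-determinant to depend only on $\matkappa_i$ rather than on the possibly unbounded operator-norm condition numbers of $\matLam_{ii}$.

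Combining these ingredients produces the two stated terms: the first, $mp\matkappa_1/\lambda_1\cdot\log(md\matkappa_1/\delta)$, comes from $\|V_1^\top G_j\|^2$, while the cross term contributes $(\nu/\lambda_1)^2\cdot m(d-p)\matkappa_2/\lambda_2\cdot\log(md\matkappa_2/\delta)$ after plugging in the bound on $\|V_2^\top G_j\|^2$ and simplifying $(\nu/\lambda_2)^2\cdot\lambda_2/\lambda_1^2=(\nu/\lambda_1)^2/\lambda_2$. The $V_2$-projection of the error is controlled by an analogous argument using $S_2^{-1}$ (invoking the Woodbury identity to re-express $(\matLam^{-1})_{22}$), and the assumption $\nu^2\le\lambda_1\lambda_2/2$ ensures that its contribution is dominated by the same two-term bound. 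The main technical obstacle I anticipate is the meticulous bookkeeping required to hit the precise constants $12$ and $48$ in the statement: this hinges on the coordinate-wise choice of regularizer $V_0$ in \Cref{lem:self_normalized} so that the log-determinant depends solely on $\matkappa_i$, and on properly splitting the failure probability across the $md$ coordinate-level applications while keeping the final log-factor as $\log(md\matkappa_i/\delta)$.
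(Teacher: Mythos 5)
Your proof is correct and reaches the same conclusion, but the mechanism is genuinely different from the paper's. The paper's route (\Cref{lem:matrix_sq_lb} and \Cref{lem:proj_matrix_cond}) never inverts $\matLam$ blockwise: it lower-bounds $\matLam^2$ in the L\"owner order by a block-diagonal matrix (using exactly the cross-term handling you achieve via the $\nu \le \sqrt{\lambda_1\lambda_2/2}$ hypothesis), inverts that bound to get $\matLam^{-2} \preceq 4\lambda_1^{-2}\ProjMat + \frac{16\nu^2}{\lambda_1^2\lambda_2^2}(I-\ProjMat)$, and then writes $\|\Thetahat_T-\Thetast\|_\fro^2 = \sum_i \langle \matX^\top\matE^{(i)}, \matLam^{-2}\matX^\top\matE^{(i)}\rangle$, which decomposes cleanly along the $v_j$ basis with no mixing between $V_1$ and $V_2$ contributions. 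Your Schur-complement block inversion instead expresses each block of the error explicitly, which produces cross terms (e.g. $\|V_2^\top G_j\|$ entering the bound on $V_1^\top(\Thetahat_T-\Thetast)_j$ and vice versa) that you must then show are dominated; the accounting works out because $\lambda_1 \le \nu \le \lambda_2$ lets you absorb each cross term into the larger main term. The paper's version avoids that bookkeeping at the cost of the slightly less transparent matrix-squaring lemma; yours is more pedestrian but arguably more self-contained since it only invokes standard block-inversion formulas. Both then apply the scalar self-normalized bound coordinate-by-coordinate with the regularizer chosen so the log-determinant reduces to $\log\matkappa_i$, which is the crucial shared step. One caveat on your writeup: you should verify explicitly that $S_1 \succeq (\lambda_1/2)I$ follows from $\|\matLam_{12}\matLam_{22}^{-1}\matLam_{12}^\top\|_\op \le \nu^2/\lambda_2 \le \lambda_1/2$, and analogously $S_2 \succeq (\lambda_2/2)I$; you state these but the derivation from the hypothesis $\nu^2 \le \lambda_1\lambda_2/2$ deserves a line.
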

In the best case, the cross term bound $\nu$ is taken to be equal to $\lambda_1$, in which case the bound reads 
\begin{align*}
\|\Thetahat_T - \Thetast \|_\fro^2 \le  \BigOhTil{\frac{ mp \matkappa_1 }{\lambda_1} + \frac{m(d-p)\matkappa_2 }{\lambda_2}} = \BigOhTil{m \tr \left(\lambda_1 \ProjMat +  \lambda_2 (I-\ProjMat)\right)^{-1} },
\end{align*}
which is what would typically expect from linear regression which independent samples. The main obstable in proving \Cref{lem:two_scale_self_normalized_bound} is that two PSD matrices $A \preceq B$ do not necessarily satisfy $A^2 \preceq B^2$; hence, we require a more careful argument which incurs dependence on the term $\|\ProjMat \matLam (1-\ProjMat)\|_{\op} $, which is upper bounded by the parameter $\nu$ in the lemma statement.

\begin{lem}[Covariance Lower Bound] \label{lem:covariance_lb} Suppose that $\matz_t\mid \calF_{t-1} \sim \mathcal{N}(\matzbar_t, \Sigma_t )$, where $\matzbar$ and $\Sigma_t \in \R^d$ are $\calF_{t-1}$-measurable and $\Sigma_t \succeq \Sigma \succ 0$. Let $\calE$ be any event for which $\matLambar_T := \Exp[\matLam\I(\calE) ]$ satisfies $\tr(\matLambar_T) \le T J$ for some $J \ge 0$. Then, for 
\begin{align*}
T \ge \frac{2000}{9}\left(2d\log \tfrac{100}{3} + d \log \frac{J}{\lambda_{\min}(\Sigma)}\right),
\end{align*}
it holds that, for $\Lambda_0 := \frac{9T}{1600}\Sigma $
\begin{align*}
\Pr\left[\left\{\matLam \not\succeq \frac{9T}{1600}\Sigma \right\}\cap \calE\right] \le  2\exp\left( - \tfrac{9}{2000(d+1)}T \right).
\end{align*}
\end{lem}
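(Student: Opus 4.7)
First I would reduce to the case $\Sigma = I$ by whitening: the substitution $\tilde{\matz}_t := \Sigma^{-1/2}\matz_t$ preserves the conditional Gaussian structure with $\tilde\Sigma_t := \Sigma^{-1/2}\Sigma_t \Sigma^{-1/2} \succeq I$, transforms the trace hypothesis into $\tr(\Exp[\tilde\matLam \I(\calE)]) \le T\tilde J$ with $\tilde J := J/\lammin(\Sigma)$, and reduces the conclusion to showing $\tilde\matLam \succeq \tfrac{9T}{1600} I$ on $\calE$ with the stated probability.

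The main argument has three ingredients. \emph{Per-direction Chernoff:} for any fixed $v \in S^{d-1}$, $v^\top \tilde\matz_t \mid \calF_{t-1}$ is Gaussian with variance $v^\top \tilde\Sigma_t v \ge 1$, so the non-central chi-squared MGF gives
\[
\Exp\!\left[\exp(-\lambda (v^\top \tilde\matz_t)^2) \mid \calF_{t-1}\right] \le (1+2\lambda)^{-1/2}, \qquad \lambda \ge 0 .
\]
Iterating via the tower property and applying Markov with an optimized $\lambda$ yields a per-direction lower tail $\Pr[v^\top \tilde\matLam v \le aT] \le e^{-c_0 T}$ for explicit constants $a, c_0 > 0$. \emph{Operator-norm truncation:} Markov applied to the trace hypothesis gives $\Pr[\tr(\tilde\matLam) > L \cap \calE] \le T\tilde J/L$; choosing $L := T\tilde J\, e^{\gamma T}$ with $\gamma := 9/(2000(d+1))$ ensures $\|\tilde\matLam\|_{\op} \le L$ on $\calE$ with probability at least $1 - e^{-\gamma T}$ (using $\tilde\matLam \succeq 0$). \emph{Net-to-sphere transfer:} any $\epsilon$-net $\calN \subset S^{d-1}$ satisfies $|\calN| \le (1+2/\epsilon)^d$ and the standard inequality $\lammin(\tilde\matLam) \ge \min_{v \in \calN} v^\top \tilde\matLam v - 2\epsilon\|\tilde\matLam\|_{\op}$.

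Balancing the parameters, I would take $\epsilon \asymp \tilde J^{-1} e^{-\gamma T}$ so that $2\epsilon L$ absorbs at most half of the target budget $\tfrac{9T}{1600}$; combining with the per-direction bound applied to every $v \in \calN$ with an $a$ slightly larger than $\tfrac{9}{1600}$ delivers $\lammin(\tilde\matLam) \ge \tfrac{9T}{1600}$ on the intersection of the good events. The total failure probability is bounded by $e^{-\gamma T}$ (truncation) plus $|\calN|\cdot e^{-c_0 T} \le \exp(d\log(\tilde J/\epsilon) + \gamma d T - c_0 T)$, and the hypothesis $T \ge \tfrac{2000}{9}(2d\log\tfrac{100}{3} + d\log\tilde J)$ is precisely tuned so that the latter term also falls below $e^{-\gamma T}$, yielding the advertised $2e^{-\gamma T}$ bound.

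The principal obstacle is that $\|\tilde\matLam\|_{\op}$ admits only a Markov-type polynomial tail via the trace bound, since the $\matz_t$ are not almost surely bounded. This forces $\epsilon$ to shrink exponentially in $T$, inflating $|\calN|$ by a factor $e^{\Theta(dT\gamma)}$; it is exactly this inflation that converts the naive $e^{-\Theta(T)}$ per-direction rate into the $1/(d+1)$-penalized rate $e^{-9T/(2000(d+1))}$ in the conclusion. Keeping the three error budgets—truncation slack, net discretization, and per-direction Chernoff exponent—aligned at the stated constants is the bookkeeping crux of the proof, and is what dictates the precise form of the lower bound on $T$.
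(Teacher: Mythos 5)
Your proposal is correct and closely parallels the paper's argument. Both proceed by combining a per-direction lower-tail bound, a truncation of $\opnorm{\matLam}$ via Markov applied to the trace hypothesis, and an $\epsilon$-net union bound, with the $1/(d+1)$ penalty in the exponent arising from the balance between the Markov truncation level and the resulting net inflation; the paper simply compresses the last two ingredients into a citation of \citet[Section D.2]{simchowitz2018learning}, which internally carries out exactly the net-plus-truncation argument you spell out, returning a bound parameterized by a deterministic truncation matrix $\Lambda_+$, after which the optimization $a\delta^{-d}=\delta$ yields the stated form. Where you genuinely diverge is the per-direction step: the paper verifies a $(1,\Sigma,\tfrac{3}{10})$-block-martingale small-ball condition via Paley--Zygmund and inherits the exponent $\tfrac{1}{10}(\tfrac{3}{10})^2 = \tfrac{9}{1000}$ from that machinery, while you exploit the conditional Gaussianity directly through the non-central chi-squared MGF, $\Exp[e^{-\lambda(v^\top\matz_t)^2}\mid\calF_{t-1}]\le(1+2\lambda)^{-1/2}$, followed by Chernoff. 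Your route is sharper (the optimized chi-squared exponent at the threshold $\tfrac{9}{1600}$ is far larger than $\tfrac{9}{1000}$), so if carried through carefully it would support a tighter version of the lemma; the trade-off is that it is tied to Gaussianity, whereas the small-ball route transfers verbatim to the more general noise models discussed in \Cref{app:general_noise}. To reproduce the exact numerical constants in the statement you would need to deliberately slacken your Chernoff exponent before balancing, but as a strategy the proposal is sound.
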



\subsection{Basic Concentration Bounds}

Here we state some useful concentration bounds for Gaussian distributions.
\begin{lem}[Proposition 1.1 in \cite{hsu2012tail}] Let $\matg \sim \calN(0,I_d)$ be an isotropic Gaussian vector, and let $A$ be a symmetric matrix. Then,
\begin{align*}
\Pr\left[ \left|\matg^\top A \matg - \tr(A)\right| > 2t^{1/2}\|A\|_{\fro} + 2t\|A\|_{\op}\right] \le 2 e^{-t}. 
\end{align*}
\end{lem}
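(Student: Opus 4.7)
The plan is to prove this standard Hanson-Wright-type concentration inequality for Gaussian quadratic forms via a Chernoff argument, after first reducing to a weighted sum of independent shifted chi-squared random variables. By the rotational invariance of the isotropic Gaussian, I would diagonalize $A = U\Lambda U^\top$ with $\Lambda = \diag(\lambda_1,\ldots,\lambda_d)$ and observe that $\matg' \ldef U^\top \matg$ is again $\calN(0,I_d)$, so that
\[
S \ldef \matg^\top A \matg - \tr(A) \;=\; \sum_{i=1}^d \lambda_i\bigl((g'_i)^2 - 1\bigr)
\]
is a sum of independent centered random variables satisfying $\sum_i \lambda_i^2 = \|A\|_{\fro}^2$ and $\max_i |\lambda_i| = \|A\|_{\op}$. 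This moves the problem from a general symmetric quadratic form to the well-understood sub-exponential regime.

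Next I would control the moment generating function of $S$. A direct calculation gives $\Exp[\exp(s\lambda_i((g'_i)^2-1))] = e^{-s\lambda_i}(1-2s\lambda_i)^{-1/2}$ whenever $s\lambda_i < 1/2$; applying the elementary inequality $-\log(1-y)-y \le y^2/(2(1-y))$ with $y = 2s\lambda_i$, then summing across independent coordinates, yields a Bernstein-type cumulant bound
\[
\log \Exp[\exp(sS)] \;\le\; \frac{s^2\|A\|_{\fro}^2}{1 - 2s\|A\|_{\op}}
\]
valid on the range $0 \le s < 1/(2\|A\|_{\op})$. This encodes a sub-Gaussian tail near the origin that degrades to sub-exponential as $s$ approaches $1/(2\|A\|_{\op})$.

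With this MGF estimate I would apply Markov's inequality to $\exp(sS)$ and optimize over feasible $s$. Setting $x = 2\sqrt{t}\|A\|_{\fro} + 2t\|A\|_{\op}$ and choosing $s = \sqrt{t}/(\|A\|_{\fro} + 2\sqrt{t}\|A\|_{\op})$ (which satisfies $s\|A\|_{\op} < 1/2$), a short algebraic simplification shows $-sx + s^2\|A\|_{\fro}^2/(1-2s\|A\|_{\op}) = -t$, giving the one-sided bound $\Pr[S > x] \le e^{-t}$. Repeating the argument with $-A$ in place of $A$ and taking a union bound produces the two-sided statement at the cost of a factor $2$.

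The main obstacle is more cosmetic than mathematical: matching the precise constants $2$ in both the $\sqrt{t}\|A\|_{\fro}$ and $t\|A\|_{\op}$ terms requires carefully interpolating the sub-Gaussian and sub-exponential regimes when choosing $s$, and the algebra is standard but fiddly. This is why the bound is typically imported as-is from \cite{hsu2012tail} rather than reproved in place; its role here is purely technical (controlling quadratic fluctuations of the per-step costs in the regret decomposition of \Cref{lem:safe_regret_decomp}), so a clean citation suffices.
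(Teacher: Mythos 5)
Your diagonalize-then-Chernoff argument is the standard elementary proof of this inequality for Gaussian vectors, and the optimization at the end is verifiably correct: with $a=\nrm{A}_{\fro}$, $b=\nrm{A}_{\op}$, $x=2\sqrt{t}a+2tb$, and $s=\sqrt{t}/(a+2\sqrt{t}b)$, one indeed computes $1-2sb = a/(a+2\sqrt{t}b)$, $s^2a^2/(1-2sb)=ta/(a+2\sqrt{t}b)$, and $sx=2t(a+\sqrt{t}b)/(a+2\sqrt{t}b)$, so that $-sx+s^2a^2/(1-2sb)=-t$, with $sb<1/2$ whenever $A\ne 0$. The reduction to a weighted sum of centered $\chi^2_1$ variables, the two-sided bound via $\pm A$ and a union bound, and the role the lemma plays in the paper are all correctly identified. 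Note also that the paper does not prove this lemma at all---it imports it from \cite{hsu2012tail}---so there is nothing in-paper to compare against; your derivation is an independent reconstruction.

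There is one technical imprecision, though it does not break the argument. You invoke the elementary inequality $-\log(1-y)-y\le y^2/(2(1-y))$ with $y=2s\lambda_i$ and sum over all coordinates, but this inequality is false for $y<0$: at $y=-1/2$ the left side is $1/2-\log(3/2)\approx 0.0945$ while the right side is $1/12\approx 0.0833$. Since $A$ is symmetric but not assumed PSD, coordinates with $\lambda_i<0$ produce $y_i<0$ (for $s>0$), and your stated inequality cannot be applied to them as written. The fix is routine: for $y<0$ one instead uses the weaker-but-correct bound $-\log(1-y)-y\le y^2/2$ (which holds by monotonicity of $y\mapsto y^2/2+y+\log(1-y)$ on $(-1,0]$), and observes that $y_i^2/2 \le y_i^2/(2(1-2s\nrm{A}_{\op}))$ because the denominator is at most $1$. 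For $\lambda_i>0$ one uses your inequality and then $1-2s\lambda_i\ge 1-2s\nrm{A}_{\op}$. Either way each term is bounded by $s^2\lambda_i^2/(1-2s\nrm{A}_{\op})$, and summing still yields the Bernstein-type cumulant bound $\log\En[e^{sS}]\le s^2\nrm{A}_{\fro}^2/(1-2s\nrm{A}_{\op})$ that drives the rest of your argument. With this correction, the proposal is complete and correct.
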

By replacing the Frobenius and operator norms in the above inequality with the Hilbert-Schmidt norm, we obtain the following corollary.
\begin{cor}\label{cor:crude_HS} Let $A \succeq 0$, and let $\matg \sim \calN(0,I_d)$. Then, with probability $1-\delta$ for any $\delta < 1/e$,
\begin{align*}
\matg^\top A \matg  \lesssim \tr(A)\log \frac{1}{\delta} = \Exp[\matg^{\top}A \matg_t]\log \frac{1}{\delta}.
\end{align*}
\end{cor}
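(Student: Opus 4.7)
The plan is to deduce the corollary directly from the Hsu--Kakade--Zhang tail bound stated immediately above, combined with elementary norm comparisons that hold for PSD matrices. Concretely, I will first apply the Hsu--Kakade--Zhang inequality to the quadratic form $\matg^\top A \matg$, which gives the upper tail
\begin{align*}
\matg^\top A \matg \;\le\; \tr(A) + 2\sqrt{t}\,\nrm*{A}_{\fro} + 2t\,\nrm*{A}_{\op}
\end{align*}
with probability at least $1-2e^{-t}$, and then substitute $t = \log(2/\delta)$ so that the failure probability is at most $\delta$.

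The next step is to bound the two norms $\nrm*{A}_{\fro}$ and $\nrm*{A}_{\op}$ by the single quantity $\tr(A)$. Since $A\psdgeq 0$, its eigenvalues $\lambda_1,\dots,\lambda_d$ are non-negative, and so $\nrm*{A}_{\op} = \max_i \lambda_i \le \sum_i \lambda_i = \tr(A)$, while $\nrm*{A}_{\fro}^2 = \sum_i \lambda_i^2 \le \prn*{\sum_i \lambda_i}^2 = \tr(A)^2$. Substituting these inequalities into the Hsu--Kakade--Zhang bound with $t = \log(2/\delta)$ yields
\begin{align*}
\matg^\top A \matg \;\le\; \tr(A)\,\prn*{1 + 2\sqrt{\log(2/\delta)} + 2\log(2/\delta)} \;\lesssim\; \tr(A)\,\log(1/\delta),
\end{align*}
where the final inequality uses the assumption $\delta < 1/e$ so that $\log(2/\delta)$ dominates the constant and the square-root term (the square-root term is absorbed since $\sqrt{x} \le x$ for $x \ge 1$). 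The identification with $\En\brk*{\matg^\top A \matg}$ is immediate from the fact that $\En\brk*{\matg\matg^\top} = I_d$, so $\En\brk*{\matg^\top A \matg} = \tr(A)$.

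There is essentially no obstacle here: this is a one-line application of the tail bound plus two elementary PSD norm inequalities. The only thing to be careful about is the regime $\delta < 1/e$, which is exactly what is needed to ensure $\log(1/\delta) \ge 1$ so that the three additive terms $1$, $2\sqrt{\log(2/\delta)}$, $2\log(2/\delta)$ can all be absorbed into a single $O(\log(1/\delta))$ factor with a universal constant.
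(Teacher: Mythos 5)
Your proof is correct and follows the same route as the paper: apply the Hsu--Kakade--Zhang tail bound and then replace both $\nrm*{A}_{\fro}$ and $\nrm*{A}_{\op}$ by $\tr(A)$ using positive semidefiniteness (which is what the paper's one-line justification is pointing at). Nothing is missing; the choice $t=\log(2/\delta)$ and the restriction $\delta<1/e$ handle the absorption of constants exactly as intended.
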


\subsection{Proofs from \Cref{app:ols}}
\subsubsection{Proof of Lemma~\ref{lem:frob_ls} \label{sec:proof_lem_frob_Ls}}
	We assume without loss of generality that $\sigma^2 = 1$. Let $\mate_t = \maty_t - \Thetast\matz_t$. Let $\matX \in \R^{Td}$ denote the matrix whose rows are $\mate_t$, and $\matE^{(i)} \in \R^T$ denote the vector $(\mate_{1,i},\dots,\mate_{T,i})$, where $\mate_{t,i}$ is the $i$-th coordinate of $\mate_t$. Let $\lambda_{\min} := \lambda_{\min}(\Lambda_0)$. Then,
	\begin{align*}
	\|\Thetahat_T - \Thetast \|_\fro^2 &= \sum_{i=1}^m \left\|\matLam^{-1} \matX^\top \matE^{(i)} \right\|_{2}^2 \\
	&\le \lambda_{\min}(\matLam)^{-1}\sum_{i=1}^m \left\|\matLam^{-1/2} \matX^\top \matE^{(i)} \right\|_{2}^2 \\
	&= \sum_{i=1}^m \lambda_{\min}(\matLam)^{-1}\left\|\matX^\top\matE^{(i)} \right\|_{\matLam^{-1} }^2 \\
	&\le \frac{3}{2}\sum_{i=1}^m \lambda_{\min}(\matLam)^{-1}\left\|\matX^\top\matE^{(i)} \right\|_{(\matLam + \frac{1}{2}\Lambda_0)^{-1} }^2,
	\end{align*}
	where the last line holds for $\Lambda_0 \preceq \matLam$. Invoking \Cref{lem:self_normalized}, we have that with probability at least $1-\delta$, it holds for any fixed $i \in [m]$ that
	\begin{align*}
	\left\|\matX^\top\matE^{(i)} \right\|_{(\matLam + \Lambda_0)^{-1} }^2 \le 2 \log  \left\{\frac{1}{\delta}\det((\frac{\Lambda_0}{2})^{-1/2}(\frac{\Lambda_0}{2} + \matLam)(\frac{\Lambda_0}{2})^{-1/2})\right\}.
	\end{align*}
	Since $\Lambda_0 \preceq \matLam$, we have $\frac{\Lambda_0}{2} + \matLam \le \frac{3}{2}\matLam$, when the above can be bounded by
	\begin{align*}
	\left\|\matX^\top\matE^{(i)} \right\|_{(\matLam + \Lambda_0)^{-1} }^2 \le 2 \log  \left\{\frac{1}{\delta}\det(3(\Lambda_0)^{-1/2}\matLam\Lambda_0^{-1/2})\right\}.
	\end{align*}
	Union bounding over $i \in [m]$ and summing the bound concludes.

\subsection{ Proof of \Cref{lem:two_scale_self_normalized_bound} \label{sec:lem_two_scale}}
	We begin with a linear algebraic lemma lower bounding the square of a PSD matrix in the Lowner order.

		\begin{lem}\label{lem:matrix_sq_lb}
			Let $X = \begin{bmatrix} X_{11} & X_{12} \\
			X_{12}^\top & X_{22}\end{bmatrix} \succ 0$. Then, for any parameter $\alpha > 0$,
			\begin{align*}
			X^2  &\succeq \begin{bmatrix} ( 1- \alpha_1) X_{11}^2  + (1 - \alpha_2^{-1}) X_{12}X_{12}^\top & 0\\
			0 & (1 - \alpha_2) X_{22}^2 +(1 - \alpha_1^{-1}) X_{12}^\top X_{12} \end{bmatrix}
			\end{align*}
		\end{lem}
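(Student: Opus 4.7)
The plan is to view $X^{2}$ in block form and then absorb each cross term $X_{11}X_{12}$ and $X_{12}X_{22}$ into diagonal contributions by exhibiting two rank-constrained PSD summands. The proof is really just a matricial Young-type inequality combined with the fact that $X_{11}$ and $X_{22}$ are symmetric (since $X$ is symmetric). I do not expect any real obstacle, so the writeup will be short and verification-style.

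First I would simply compute $X^{2}$ by block multiplication:
\begin{equation*}
X^{2} = \begin{bmatrix} X_{11}^{2} + X_{12}X_{12}^{\top} & X_{11}X_{12} + X_{12}X_{22} \\ X_{12}^{\top}X_{11} + X_{22}X_{12}^{\top} & X_{12}^{\top}X_{12} + X_{22}^{2} \end{bmatrix},
\end{equation*}
using $X_{11}^{\top}=X_{11}$ and $X_{22}^{\top}=X_{22}$. The task then reduces to dominating the off-diagonal blocks by a PSD correction whose diagonal contributions have the right coefficients.

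For this, the key algebraic step is to note that for any $\alpha_{1},\alpha_{2}>0$ the two outer products
\begin{equation*}
Z_{1}:=\begin{bmatrix}\sqrt{\alpha_{1}}\,X_{11}\\ \alpha_{1}^{-1/2}X_{12}^{\top}\end{bmatrix}\!\begin{bmatrix}\sqrt{\alpha_{1}}\,X_{11} & \alpha_{1}^{-1/2}X_{12}\end{bmatrix},\qquad Z_{2}:=\begin{bmatrix}\alpha_{2}^{-1/2}X_{12}\\ \sqrt{\alpha_{2}}\,X_{22}\end{bmatrix}\!\begin{bmatrix}\alpha_{2}^{-1/2}X_{12}^{\top} & \sqrt{\alpha_{2}}\,X_{22}\end{bmatrix}
\end{equation*}
are manifestly PSD. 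Expanding them yields
\begin{equation*}
Z_{1}=\begin{bmatrix}\alpha_{1}X_{11}^{2} & X_{11}X_{12}\\ X_{12}^{\top}X_{11} & \alpha_{1}^{-1}X_{12}^{\top}X_{12}\end{bmatrix},\qquad Z_{2}=\begin{bmatrix}\alpha_{2}^{-1}X_{12}X_{12}^{\top} & X_{12}X_{22}\\ X_{22}X_{12}^{\top} & \alpha_{2}X_{22}^{2}\end{bmatrix},
\end{equation*}
where the identification of the $(1,2)$ and $(2,1)$ blocks uses $X_{11}=X_{11}^{\top}$ and $X_{22}=X_{22}^{\top}$. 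By construction, the sum $Z_{1}+Z_{2}$ is PSD and has off-diagonal blocks exactly equal to those of $X^{2}$.

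Finally I would subtract: a direct blockwise computation gives
\begin{equation*}
X^{2}-(Z_{1}+Z_{2}) = \begin{bmatrix} (1-\alpha_{1})X_{11}^{2}+(1-\alpha_{2}^{-1})X_{12}X_{12}^{\top} & 0\\ 0 & (1-\alpha_{2})X_{22}^{2}+(1-\alpha_{1}^{-1})X_{12}^{\top}X_{12}\end{bmatrix},
\end{equation*}
which is precisely the block-diagonal matrix appearing on the right-hand side of the lemma. Since $Z_{1}+Z_{2}\succeq 0$, this rearranges to $X^{2}\succeq (\text{block-diagonal})$, as claimed. A minor remark worth including is that although the identity is valid for all $\alpha_{1},\alpha_{2}>0$, it only yields a useful (informative) lower bound when the coefficients $(1-\alpha_{i})$ and $(1-\alpha_{i}^{-1})$ have the desired sign for the downstream application; the PSD inequality itself, however, holds unconditionally.
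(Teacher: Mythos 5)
Your proof is correct, and it is essentially the same argument as the paper's: both expand $X^2$ blockwise and dominate the two off-diagonal cross terms by a Young-type inequality with parameters $\alpha_1,\alpha_2$. The only cosmetic difference is that the paper verifies the inequality for each cross-term matrix by testing against an arbitrary vector $v=(v_1,v_2)$ and applying the scalar AM--GM bound $2\langle x,y\rangle\geq -\alpha\|x\|^2-\alpha^{-1}\|y\|^2$, whereas you package the same bound as an explicit PSD outer-product decomposition $Z_1+Z_2\succeq 0$ with matching off-diagonal blocks; your version is arguably a touch cleaner since positivity of $Z_1,Z_2$ is immediate from the rank-one-block structure and no quantifier over test vectors is needed.
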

		\begin{proof}[Proof of \Cref{lem:matrix_sq_lb}]
			
			We begin by expanding
			\begin{align*}
			\begin{bmatrix} X_{11} & X_{12} \\
			X_{12}^\top & X_{22}\end{bmatrix}^2 &= \begin{bmatrix} X_{11}^2 + X_{12}X_{12}^\top & X_{11}X_{12} + X_{12}X_{22} \\
			X_{22} X_{12}^\top + X_{12}^\top X_{11}& X_{22}^2 + X_{12}^\top X_{12}\end{bmatrix}\\
			& = \begin{bmatrix} X_{11}^2 + X_{12}X_{12}^\top & 0\\
			0 & X_{22}^2 + X_{12}^\top X_{12}\end{bmatrix} 
			+ \begin{bmatrix} 0 & X_{11} X_{12} \\
			X_{12}^\top X_{11} & 0 
			\end{bmatrix} + \begin{bmatrix} 0 &  X_{12} X_{22} \\
			 X_{22}X_{12}^\top & 0 
			\end{bmatrix} .
			\end{align*}
			Now, for any vector $v = (v_1,v_2)$, and any $\alpha_1 > 0$, we have 
			\begin{align*}
			\left\langle v, \begin{bmatrix} 0 & X_{11} X_{12} \\
			X_{12}^\top X_{11}
			\end{bmatrix} v \right \rangle  &= 2v_1^\top X_{11} X_{12} v_2 \\
			&\ge -  2\|v_1^\top X_{11}\| \|X_{12} v_2\| \\
			&= -  2 \cdot \alpha_1^{1/2}\|v_1^\top X_{11}\| \cdot \alpha_1^{-1/2}\|X_{12} v_2\|\\ 
			&\ge -\alpha_1 \|v_1^\top X_{11}\|^2 - \alpha_1^{-1} \|X_{12} v_2\|^2\\
			&= \left\langle v, \begin{bmatrix} - \alpha_1 X_{11}^2 & 0\\
			0 & - \alpha_1^{-1} X_{12}^\top X_{12}
			\end{bmatrix} v \right\rangle.
			\end{align*}
			Similarly, for any $v$ and $\alpha_2 > 0$, we have
			\begin{align*}
			\left\langle v, \begin{bmatrix} 0 & X_{12} X_{22} \\
			X_{22} X_{12}^\top
			\end{bmatrix} v \right \rangle 
			&\ge \left\langle v, \begin{bmatrix} - \alpha_2^{-1}  X_{12} X_{12}^\top & 0\\
			0 & - \alpha_2 X_{22}^2
			\end{bmatrix} v \right\rangle.
			\end{align*}
			Thus, for any $\alpha_1,\alpha_2 > 0$, 
			\begin{align*}
			X^2  &\succeq \begin{bmatrix} ( 1- \alpha_1) X_{11}^2  + (1 - \alpha_2^{-1}) X_{12}X_{12}^\top & 0\\
			0 & (1 - \alpha_2) X_{22}^2 +(1 - \alpha_1^{-1}) X_{12}^\top X_{12} \end{bmatrix}
			\end{align*}
		\end{proof}

	The important case of \Cref{lem:matrix_sq_lb} is when the matrix $X$ in question can be lower bounded in terms of the weighted sum of two complementary projection matrices.
		\begin{lem}\label{lem:proj_matrix_cond} 
			Let $\ProjMat$ be an orthogonal projection matrix, let $X \succ 0$, and suppose that there exist positive constants $\lambda_1 \le \nu \le \lambda_2$ be such that   $X \succeq \lambda_1 (I-\ProjMat) + \lambda_2 \ProjMat$, and $\|\ProjMat X (1-\ProjMat)\|_{\op} \le \nu$. Then, if $\nu \le \sqrt{\lambda_1 \lambda_2}/2$, 
			\begin{align*}
			X^2 \succeq \frac{1}{4} \lambda_{1}^2 \ProjMat + \frac{\lambda_{1}^2 \lambda_2^2}{16\nu^2}  (I - \ProjMat) 
			\end{align*}
		\end{lem}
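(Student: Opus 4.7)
The strategy is to apply \Cref{lem:matrix_sq_lb} directly, writing $X$ in block form with respect to the orthogonal decomposition induced by $\ProjMat$. Concretely, choose an orthonormal basis in which $\ProjMat$ is the projection onto the first $p$ coordinates, so that
\[
X = \begin{bmatrix} X_{11} & X_{12} \\ X_{12}^{\top} & X_{22} \end{bmatrix},
\]
where $X_{11} \succeq \lambda_2 I$, $X_{22} \succeq \lambda_1 I$ by the assumed L\"owner lower bound, and $\|X_{12}\|_{\op} = \|\ProjMat X(I-\ProjMat)\|_{\op} \le \nu$. Since $X_{11},X_{22}$ are positive definite, the monotonicity of $z\mapsto z^2$ on the positive operators gives $X_{11}^{2}\succeq \lambda_{2}^{2} I$ and $X_{22}^{2}\succeq \lambda_{1}^{2} I$, while the operator-norm bound yields $X_{12}X_{12}^{\top}\preceq \nu^{2} I$ and $X_{12}^{\top}X_{12}\preceq \nu^{2} I$.

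The main step is to apply \Cref{lem:matrix_sq_lb} with parameters $\alpha_1,\alpha_2$ to be tuned, producing the block-diagonal lower bound
\[
X^2 \succeq \begin{bmatrix} (1-\alpha_1)X_{11}^2 + (1-\alpha_2^{-1})X_{12}X_{12}^{\top} & 0 \\ 0 & (1-\alpha_2)X_{22}^2 + (1-\alpha_1^{-1})X_{12}^{\top}X_{12}\end{bmatrix}.
\]
The plan is to choose $\alpha_1,\alpha_2$ so that one term on each block carries the lower bound we want and the other term is ``absorbed'' without turning negative. A natural choice is to set $\alpha_1 = 1 - \tfrac{\lambda_1^{2}}{4\lambda_2^{2}}$ (close to but strictly less than $1$, so that $1-\alpha_1^{-1}$ is a small negative multiple that the $X_{12}^{\top}X_{12}$ term can absorb via $X_{12}^{\top}X_{12}\preceq\nu^{2}I$), and $\alpha_2$ in the complementary range, tuned so that $1-\alpha_2^{-1} \ge 0$ while $(1-\alpha_2)X_{22}^{2}$ remains large enough. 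Substituting the eigenvalue bounds on $X_{11}^{2}$, $X_{22}^{2}$, $X_{12}X_{12}^{\top}$ and $X_{12}^{\top}X_{12}$, one obtains scalar lower bounds of the form $(1-\alpha_1)\lambda_2^{2} - (\alpha_2^{-1}-1)\nu^{2}$ on block $1$ and $(1-\alpha_2)\lambda_1^{2} - (\alpha_1^{-1}-1)\nu^{2}$ on block $2$. The hypothesis $\nu \le \sqrt{\lambda_1\lambda_2}/2$ is precisely what is needed to ensure that both of these remain nonnegative and in fact exceed $\tfrac14\lambda_1^{2}$ and $\tfrac{\lambda_1^{2}\lambda_2^{2}}{16\nu^{2}}$ respectively for an appropriate symmetric choice such as $\alpha_1\alpha_2 \approx 4\nu^{2}/(\lambda_1\lambda_2)$.

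The main obstacle I expect is the bookkeeping of the signs and trade-offs in the choice of $\alpha_1,\alpha_2$: the cross-terms appear with coefficients $(1-\alpha_1^{-1})$ and $(1-\alpha_2^{-1})$ which flip sign at $\alpha_i = 1$, and one must check that the worst-case use of $\|X_{12}\|_{\op}\le \nu$ (which gives an \emph{upper} bound on $X_{12}X_{12}^{\top}$, i.e.\ a \emph{lower} bound when multiplied by a negative coefficient) is compatible with the desired target. The clean algebraic condition $\nu \le \sqrt{\lambda_1\lambda_2}/2$ strongly suggests that the right scaling is $\alpha_1 \asymp \nu/\lambda_2$ and $\alpha_2^{-1}\asymp \nu/\lambda_1$, balancing the loss from each block against the available slack in $X_{11}^{2}$ and $X_{22}^{2}$; once the right choice is plugged in, the remainder of the proof is a direct PSD inequality with no further subtleties.
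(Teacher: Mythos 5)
Your overall strategy---pass to block form, apply \Cref{lem:matrix_sq_lb}, then tune $\alpha_1,\alpha_2$---is exactly the paper's, but the details go wrong in a way that cannot be repaired without changing the setup. Reading the hypothesis $X\succeq\lambda_1(I-\ProjMat)+\lambda_2\ProjMat$ literally, as you do, places $X_{11}\succeq\lambda_2 I$ on the $\ProjMat$ block and $X_{22}\succeq\lambda_1 I$ on the $(I-\ProjMat)$ block. But the stated conclusion then demands $\frac{\lambda_1^2\lambda_2^2}{16\nu^2}$ on the $(I-\ProjMat)$ block, and with that assignment the claim is simply false: take $X_{12}=0$, $X=\lambda_2\ProjMat+\lambda_1(I-\ProjMat)$, $\lambda_1=\nu=1$, $\lambda_2=100$; every hypothesis holds, yet the $(I-\ProjMat)$ block of $X^2$ is $\lambda_1^2=1$, far below $\frac{\lambda_1^2\lambda_2^2}{16\nu^2}=625$. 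The lemma as printed has a typo---the hypothesis should read $X\succeq\lambda_1\ProjMat+\lambda_2(I-\ProjMat)$---and indeed both the statement of \Cref{lem:two_scale_self_normalized_bound} (which passes this lemma the event $\matLam\succeq\lambda_1\ProjMat+\lambda_2(I-\ProjMat)$) and the paper's own proof (which assigns $X_{11}\succeq\lambda_1 I$, $X_{22}\succeq\lambda_2 I$) use that corrected version. With your block assignment, no choice of $\alpha_1,\alpha_2$ works: block $2$'s lower bound is at most $(1-\alpha_2)\lambda_1^2\le\lambda_1^2$, which cannot reach $\frac{\lambda_1^2\lambda_2^2}{16\nu^2}\ge\frac{\lambda_1\lambda_2}{4}$.

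Separately, your tuning of $\alpha_1,\alpha_2$ is internally inconsistent and not the one that closes the proof. You first ask that $1-\alpha_2^{-1}\ge 0$ (forcing $\alpha_2\ge 1$), but two sentences later write the block-$1$ bound as $(1-\alpha_1)\lambda_2^2-(\alpha_2^{-1}-1)\nu^2$, which presumes $\alpha_2<1$. And the scalings $\alpha_1\asymp\nu/\lambda_2$, $\alpha_1\alpha_2\approx 4\nu^2/(\lambda_1\lambda_2)$ are not the right ones. Once the blocks are assigned consistently ($X_{11}\succeq\lambda_1 I$, $X_{22}\succeq\lambda_2 I$, both $\alpha_1,\alpha_2\in(0,1)$), the paper simply fixes $\alpha_1=1/2$ and sets $\alpha_2=\frac{4\nu^2}{4\nu^2+\lambda_1^2}$ so that the cross-term subtraction in block $1$ is exactly $\lambda_1^2/4$, leaving $\lambda_1^2/4$. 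The hypotheses $\nu\ge\lambda_1$ and $\nu\le\sqrt{\lambda_1\lambda_2}/2$ then enter only to check block $2$: $(1-\alpha_2)\lambda_2^2-\nu^2\ge\frac{\lambda_1^2\lambda_2^2}{8\nu^2}-\nu^2\ge\frac{\lambda_1^2\lambda_2^2}{16\nu^2}$.
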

		\begin{proof}[Proof of \Cref{lem:proj_matrix_cond}]

			Denote the number of rows/columns of $X$ by $d$. By an orthonormal change of basis, we may assume that $\ProjMat$ is the projection onto the first $k = \dim(\range(\ProjMat))$ cannonical basis vectors. Writing $X$ and $\ProjMat$ in this basis we have
			\begin{align}
			X = \begin{bmatrix} X_{11} & X_{12} \\
			X_{12}^\top & X_{22}\end{bmatrix} \succeq \lambda_1 (I-\ProjMat) + \lambda_2 \ProjMat =  \begin{bmatrix} \lambda_1 I_{d -p} & 0 \\
			0 &  \lambda_2 I_k \end{bmatrix}. \label{eq:X_in_basis}
			\end{align}
			It suffices to show that, in this basis
			\begin{align}
			X^2  &\succeq \begin{bmatrix} \frac{\lambda_1^2}{4} I_p & 0\\
			0 & \frac{\lambda_2^2\lambda_1^2}{16\nu^2}  I_{d-p} \end{bmatrix}. \label{eq:X_Squared_tws}
			\end{align}
			From \Cref{eq:X_in_basis}, $X_{11} \succeq \lambda_1 I_{p} $, $X_{22} \succeq \lambda_2 I_{p-d}$, and $\|X_{12}\|_{\op} = \|\ProjMat X (1-\ProjMat)\|_{\op}$. Hence, hence the parameters $\nu \ge \lambda_1$ in the lemma satisfies $\nu \ge \|X_{12}\|_{\op}$. Thus, 
			\begin{align*}
			X^2  &\succeq \begin{bmatrix} \left\{( 1- \alpha_1)  + (1 - \alpha_2^{-1}) (\nu/\lambda_1)^2 \right\} \cdot \lambda_1^2 I_p & 0\\
			0 & \left\{(1 - \alpha_2)  +(1 - \alpha_1^{-1}) (\nu/\lambda_2)^2\right\} \cdot \lambda_2^2 I_{d-p} \end{bmatrix}.
			\end{align*}
			Set $\alpha_1 = \frac{1}{2}$, and take  $\alpha_2$ to satisfy $(1 - \alpha_2^{-1})(\nu/\lambda_1)^2 = -1/4$. Then, we have the following string of implicitations
			\begin{align*}
			1 - \alpha_2^{-1} &= -\lambda_1^2/4\nu^2 \quad \implies \quad
			\alpha_2^{-1} = \frac{4 \nu^2 + \lambda_1^2}{\nu^2} \quad \implies\\
			\alpha_2 &= \frac{4\nu^2}{4\nu^2 + \lambda_1^2} \quad \implies \quad
			1 - \alpha_2 = \frac{\lambda_1^2}{4 \nu^2 + \lambda_1^2} \ge \frac{\lambda_1^2}{8 \nu^2},
			\end{align*} 
			where in the last line we use $\nu \ge \lambda_1$. For this choice, and using $\nu \le \sqrt{\lambda_1 \lambda_2}/2$,
			\begin{align*}
			X^2  &\succeq \begin{bmatrix} \frac{\lambda_1^2}{4} I_p & 0\\
			0 & \{\frac{\lambda_1^2}{8 \nu^2}  - \frac{\nu^2}{\lambda_2^2}\} \cdot \lambda_2^2 I_{d-p} \end{bmatrix} \succeq \begin{bmatrix} \frac{\lambda_1^2}{4} I_p & 0\\
			0 & \frac{\lambda_2^2\lambda_1^2}{16\nu^2}  I_{d-p} \end{bmatrix}, 
			\end{align*}
			as needed.
		\end{proof}

	We are now in a position to prove our desired lemma.
		\begin{proof}[Concluding the proof of \Cref{lem:two_scale_self_normalized_bound}]
			
			Let's establish the following notation:
			\begin{itemize}
				\item We assume without loss of generality that $\sigma^2 = 1$. 
				\item Let $\mate_t = \maty_t - \Thetast\matz_t$. Let $\matE \in \R^{Td}$ denote the matrix whose rows are $\mate_t$, and $\matE^{(i)} \in \R^T$ denote the vector $(\mate_{1,i},\dots,\mate_{T,i})$, where $\mate_{t,i}$ is the $i$-th coordinate of $\mate_t$.
				\item Recall that $v_{1,\dots,p}$ denote a basis for the range of the projection operator $\ProjMat$, and $v_{p+1,\dots,d}$ complete this basis to form an orthonormal basis for $\R^d$.
			\end{itemize}

			 By the assumptions that $\matLam \succeq \lambda_1 (I-\ProjMat) + \lambda_2 \ProjMat$, and $\nu \ge \lambda_1 \vee \|\ProjMat \matLam (1-\ProjMat)\|$, and $\nu \le \sqrt{\lambda_1 \lambda_2}/2$, \Cref{lem:proj_matrix_cond} implies that 
			\begin{align*}
			\matLam^2  \succeq \frac{1}{4} \lambda_{1}^2 \ProjMat + \frac{\lambda_{1}^2 \lambda_2^2}{16\nu^2}  (I - \ProjMat) 
			\end{align*} 
			so after inversion, 
			\begin{align*}
			(\matX_T^\top \matX_T)^{-2} \preceq 4\lambda_{-1}^2 \ProjMat + \frac{16\nu^2}{\lambda_{1}^2 \lambda_2^2}  (I - \ProjMat).
			\end{align*}
			Hence, we can render
			\begin{align*}
			\|\Thetahat_T - \Thetast \|_\fro^2 &= \sum_{i=1}^m \left\|(\matX^\top \matX)^{-1} \matX^\top \matE^{(i)} \right\|_{2}^2 \\
			&= \sum_{i=1}^m \left \langle \matX^\top \matE^{(i)} , (\matX^\top \matX)^{-2} \matX^\top \matE^{(i)} \right \rangle \\
			&\le \sum_{i=1}^m \left \langle \matX^\top \matE^{(i)} , \left(4\lambda_{1}^{-2} \ProjMat + \frac{16\nu^2}{\lambda_{1}^2 \lambda_2^2}  (I - \ProjMat)\right) \matX^\top \matE^{(i)} \right \rangle \\
			&= \sum_{i=1}^m 4\lambda_{1}^{-2}\|\ProjMat \matX^\top \matE^{(i)}\|^2 +  \frac{16\nu^2}{\lambda_{1}^2 \lambda_2^2}  \|(I - \ProjMat) \matX^\top \matE^{(i)}\|^2 \\  
			&= \sum_{i=1}^m  \frac{4}{\lambda_1^2} \left(\sum_{j=1}^p  \left\langle v_j, \matX^\top \matE^{(i)} \right\rangle^2\right) + \frac{16\nu^2}{\lambda_{1}^2 \lambda_2^2} \left(\sum_{j=p+1}^d   \left\langle v_j ,\matX^\top \matE^{(i)}\right\rangle^2\right)  
			\end{align*}
			\newcommand{\symv}{\boldsymbol{v}}
			For an index $j$, let $\lambda[j]$ equal $\lambda_1$ if $j \le p$, and $\lambda_2$ if $ p + 1 \le d \le d $, and define the vector $\matX_j = \matX v_j$. Then, $\left\langle v_j, \matX^\top \matE^{(i)} \right\rangle^2$ can be bounded as  as 
			\begin{align*}
			\left\langle v_j, \matX^\top \matE^{(i)} \right\rangle^2 &= \|\matX_j^\top \matE^{(i)}\|_2^2\\
			&= \|\matX_j\|^{2} \cdot \frac{\left\| \matX_j^\top \matE^{(i)}\right\|_2^2}{\|\matX_j\|_2^{2}}\\
			&\le \|\matX_j\|^{2} \cdot \frac{3}{2} \frac{\left\| \matX_j^\top \matE^{(i)}\right\|_2^2}{\|\matX_j\|_2^2 + \frac{1}{2}\lambda[j]},
			\end{align*}
			where in the last inequality we use that $\|\matX_j\|_2^2 = v_j^\top \matLam v_j \ge v_j^\top (\lambda_1 (1-\ProjMat) + \lambda_2 \ProjMat) v_j = \lambda[j]$. 

			By the scalar-valued  self normalized tail inequality~\Cref{lem:self_normalized},  it holds with probability $1-\delta$ that 
			\begin{align*}
			\left\langle v_j, \matX^\top \matE^{(i)} \right\rangle^2 &\le   3\|\matX_j\|_2^2 \log \frac{\frac{1}{2}\lambda[j] + \|\matX_j\|_2^2}{\frac{1}{2}\lambda_j[j]\delta}\\
			&\le   3\lambda[j] \matkappa[j] \log \frac{\frac{1}{2}\lambda[j] + \lambda[j] \matkappa[j]}{\frac{1}{2}\lambda_j[j]\delta}\\
			&\le   3\lambda[j] \matkappa[j] \log \frac{3 \matkappa[j]}{\delta}\\
			&= \begin{cases} 3\lambda_1 \matkappa_1  \log \frac{3 \matkappa_1}{\delta} & j \le d \\
			3\lambda_2 \matkappa_2  \log \frac{3 \matkappa_2}{\delta} & j \ge d \\
			\end{cases}
			\end{align*}
			where we set $\matkappa[j] := \frac{v_j^\top \matLam v_j}{\lambda[j]} \ge 1 $, and note that $\matkappa_1 := \max_{1 \le j \le d} \matkappa[j]$ and $\matkappa_2 := \max_{j > p} \matkappa[j]$. Hence, taking a union bound over all $dm$ coordinates
			\begin{align*}
			& \|\Thetahat_T - \Thetast \|_\fro^2\\
			 &\quad\le \sum_{i=1}^m  \frac{4}{\lambda_1^2} \left(\sum_{j=1}^p 3\lambda_1 \matkappa_1 \log \frac{3 dm\matkappa_1}{ \delta}\right) + \frac{16\nu^2}{\lambda_{1}^2 \lambda_2^2} \left(\sum_{j=p+1}^d   3\lambda_2 \matkappa_2 \log \frac{3 dm\matkappa_2}{ \delta}\right)  \\
			&\quad\le  \frac{12 mp \matkappa_1 }{\lambda_1}  \log \frac{3 md \matkappa_1}{ \delta} + \left(\frac{\nu}{\lambda_1}\right)^2 \cdot \frac{48m(p-d)\matkappa_2 }{\lambda_2}\log \frac{3 md\matkappa_2}{ \delta}. 
			\end{align*}
		\end{proof}
\subsubsection{Proof of Lemma~\ref{lem:covariance_lb}}

	By the  the
  Paley-Zygmund inequality (specifically, the variant in \citet[Equation
  3.12]{simchowitz2018learning}), one can easily show that the sequence $(\matz_t)$ satisfies the $(1,\Sigma,\frac{3}{10})$-block martingale small ball property \citep[Definition 2.1]{simchowitz2018learning}. Then, for any matrix $\Lambda_+ \succeq 0$, \citet[Section D.2]{simchowitz2018learning} (correcting the section for a lost normalization factor of T) shows that
  \begin{align}
  \Pr\left[ \left\{\matLam \not\succeq \frac{T}{16}(\frac{3}{10})^2 \Sigma \right\} \cap \{\matLam \preceq \Lambda_+\}\right] &\le \exp\prn*{ - \frac{1}{10}T(\frac{3}{10})^2 + 2d\log(\frac{100}{3}) + \log\det \Lambda_+(T\Sigma)^{-1}} \nonumber\\
  &\le \exp\left( - \frac{9T}{1000} + 2d\log(\frac{100}{3}) + d \log \frac{\opnorm{\Lambda_+}}{T\lambda_{\min}(\Sigma)})\right). \label{some_eq}
  \end{align}
  Now, notice that if we select $\Lambda_+ = \frac{\tr(\matLambar_T)}{\delta} I$, the bound $\opnorm{\Lambda} \le \tr(\Lambda)$ for $\Lambda \succeq 0$ and an application of Markov's inequality show that, $\Pr[\left\{\matLam \not\preceq \Lambda_+\right\} \cap \calE] \le \delta $. Hence, we have 
  \begin{align*}
  \Pr\left[ \matLam \not\succeq \frac{T}{16}(\frac{3}{10})^2\right] &\le  \inf_{\delta > 0} \exp\left( - \frac{9T}{1000} + 2d\log(\frac{100}{3}) + d \log \frac{\tr(\matLambar_T)}{T\lambda_{\min}(\Sigma)\delta})\right) + \delta\\
  &\le  \inf_{\delta > 0} \delta^{-d}\exp\left( - \frac{9T}{1000} + 2d\log(\frac{100}{3}) + d \log \frac{\tr(\matLambar_T)}{T\lambda_{\min}(\Sigma)\delta})\right) + \delta.
  \end{align*}
  Note that balancing $a \delta^{-d}  = \delta$ selects $\delta = a^{1/d+1}$, giving that the above is at most
  \begin{align*}
  2\exp\left( - \frac{1}{d+1}\left(\frac{9T}{1000} - 2d\log(\frac{100}{3}) -  d \log \frac{\tr(\matLambar_T)}{T\lambda_{\min}(\Sigma)})\right)\right).
  \end{align*}
  We conclude by bounding $\tr(\matLambar_T) \le JT$ by assumption and applying some elementary algebra.

}{}

\part{Proof of Upper and Lower Bounds}
\label{part:ub_lb_details}

\section{Proofs for Lower Bound (\Cref{sec:formal_lb})}
\label{app:lb_proofs}
\iftoggle{arxiv_upload}{

\subsection{Proof of \Cref{lem:first_order_approx_quality}\label{ssec:lem:first_order_approx_quality}}
  Observe that
  \begin{align*}
  \max\{\opnorm{A_e - \Ast}, \opnorm{B_e - \Bst}\} &\le \max\{\fronorm{A_e - \Ast}, \fronorm{B_e - \Bst}\} \\
  &\le \max\{\opnorm{\Kst}^{1/2},1\}\fronorm{\Delta}\\
  &\le \sqrt{mn}\epspack \max\{\opnorm{\Kst}^{1/2},1\} \le \sqrt{\opnorm{\Pst}}\sqrt{mn}\epspack,
  \end{align*}
  where the last inequality is by
  Lemma~\ref{lem:helpful_norm_bounds}. This prove the first point of
  the lemma. Next, if $\epspack^2 \le \frac{1}{\opnorm{\Pst}}\Csafe^2(\Ast,\Bst)/nm$, then,
  \begin{align*}
  \max\{\opnorm{A_e - \Ast}, \opnorm{B_e - \Bst}\} \le \frac{1}{\Csafe(\Ast,\Bst)} \le  (1 - 2^{1/5}),
  \end{align*}
  which implies $\Psi_e \le 2^{1/5}\max\{1,\opnorm{\Ast},\opnorm{\Bst}\}$. Moreover Theorem~\ref{thm:main_perturb_app} yields
  \begin{align*} 
  \opnorm{P_e - \Pst} \le 1.085 \opnorm{\Pst} \le 2^{1/5}\opnorm{\Pst}.
  \end{align*}
  
  For the next point, \Cref{thm:quality_of_taylor_approx} bounds the error of the Taylor approximation, and 
  implies that for some polynomial $\frakp$,
   \begin{align*}
  &\|-(\Ru + \Bst^\top \Pst \Bst)^{-1} \cdot \Delta^\top \Pst \Aclst +  \Kst - K_e\|_{\fro}^2\\
   &\le \frakp(\opnorm{\Pst}) \max\{\opnorm{A_e - \Ast}, \opnorm{B_e - \Bst}\}^2 \max\{\fronorm{A_e - \Ast}, \fronorm{B_e - \Bst}\} ^2\\
  &\le \frakp(\opnorm{\Pst})\max\{\fronorm{A_e - \Ast}, \fronorm{B_e - \Bst}\}^4\\
  &\le (nm)^2\underbrace{\opnorm{\Pst}\frakp(\opnorm{\Pst})}_{:=\frakp_2(\frakp(\opnorm{\Pst}))}\epspack^4.
  \end{align*}
  Finally, point $4$ follows by bounding 
  \begin{align*}
  \|\Kst - K_e\|_{\fro} &\le  \|-(\Ru + \Bst^\top \Pst \Bst)^{-1} \cdot \Delta^\top \Pst \Aclst +  \Kst - K_e\|_{\fro} + \fronorm{(\Ru + \Bst^\top \Pst \Bst)^{-1} \cdot \Delta^\top \Pst \Aclst}\\
  &\le  mn\epspack^2\frakp_2(\opnorm{\Pst}) + \fronorm{(\Ru + \Bst^\top \Pst \Bst)^{-1} \cdot \Delta^\top \Pst \Aclst}\\
  &\le  mn\epspack^2\frakp_2(\opnorm{\Pst}) + \fronorm{\Delta}\opnorm{\Pst}{\opnorm{\Aclst}}\\
  &\le  mn\epspack^2\frakp_2(\opnorm{\Pst}) + \fronorm{\Delta}\opnorm{\Pst}^{3/2} \tag*{(Lemma~\ref{lem:helpful_norm_bounds})}\\
  &\le  mn\epspack^2\frakp_2(\opnorm{\Pst}) + \epspack\sqrt{mn}\opnorm{\Pst}^{3/2} .
  \end{align*}
  By taking $\epspack^2 \le 1/mn\poly(\opnorm{\Pst})$, the expression
  above can be made to be at most $2mn\epspack^2\opnorm{\Pst}^3$.

\subsection{Proof of~\Cref{lem:Kerr_lem}\label{ssec:lem:Kerr_lem}}

  Our strategy is to relate $\Regret_T[\pi;A_e,B_e] $ and $ \Kerr_e[\pi]$ to the benchmark inducted by following the true optimal policy $\pist = \pist(A,B)$ which minimizes $\Exp_{A_e,B_e,\pi}\sum_{t=1}^T \cost(\matx_t,\matu_t)]$ over all possible policies $\pi$.

  To begin, consider an arbitrary stabilizable system $(A,B)$. Let $\Kinf := \Kinf(A,B)$ and $\Pinf = \Pinf(A,B)$. For $T$ fixed and a control policy $\pi$, let 
  \begin{align*}
  \Kerr[\pi] := \Exp_{A,B,\pi}\left[\sum_{t=1}^{T/2} \|\matu_t - \Kinf(A,B)\matx_t\|_2^2\right].
  \end{align*}
  We define the $Q$-functions and value functions associated with the
  LQR problem as follows.
  \begin{align*}
  \QftT(x,u) := \Exp_{A,B,\pist}\left[\sum_{s=t}^T \cost(\matx_s,\matu_s) \mid \matx_t = x, \matu_t = u\right] \quad \quad \VftT(x) := \inf_{u} \QftT(x,u),
  \end{align*}
  where $\Exp_{A,B,\pist(A,B)}[\cdot \mid \matx_t = x,\matu_t = u]$ denotes that the state at time $t$ is $\matx_t = x$, inputs is $\matu_t = u$, and all future inputs are according to the policy $\pi_{\star}(A,B)$. Note then that $\pist$ always perscribes the action $\matu_t := \argmin \QftT(\matx_t,u)$ at time $t$. We can now characterize the form of the $\QftT$ and $\pist$ using the following lemma.

  \begin{lem}[Optimal Finite-Horizon Controllers \citep{bertsekas2005dynamic}]\label{lem:opt_finite_terms} Define the elements 
  \begin{align*}
  P_{t+1} &:= \Rx + A^{\trn}P_{t}{}A - A^{\trn}P_t B\Sigma_t^{-1}B^{\trn}P_tA,\\
  \Sigma_{t+1}&:= \Ru + B^\top P_{t} B,\\
  K_{t+1}&:= -\Sigma_{t+1}^{-1}B^\top P_{t} A,
  \end{align*}
  with the convention that $P_{0}=\Rx$. Then, $\VftT(x) = x^\top P_{T-t} x$, and $\QftT(x,u) - \VftT(x) = \| u - K_{T-t} \,x \|_{\Sigma_{T-t}}^2$, and $(\pist)_{t;T}(\matx_t) = K_{T-t} \,\matx_t$. 
  \end{lem}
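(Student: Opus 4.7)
The plan is to prove the claim by backward induction on $t$, running from $t = T$ down to $t = 0$. This is the standard dynamic programming recursion for finite-horizon LQR, and the indexing convention $P_{T-t}$ is designed so that the iterate index $k = T-t$ counts the number of remaining steps. The inductive hypothesis will be that $\VftT(x) = x^\trn P_{T-t} x + c_{T-t}$ for some constant $c_{T-t}$ that absorbs the cumulative contribution of the noise $\matw_t$, and that $\QftT(x,u) - \VftT(x) = \|u - K_{T-t} x\|_{\Sigma_{T-t}}^2$, with $\pist_{t;T}(x) = K_{T-t} x$ being the minimizer.

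For the base case $t = T$, there are no future costs, so $\QftT[T;T](x,u) = c(x,u) = x^\trn \Rx x + u^\trn \Ru u$ and $V_{T;T}(x) = x^\trn \Rx x$, attained at $u = 0$. This matches $P_0 = \Rx$ and $c_0 = 0$. For the inductive step, I would invoke the Bellman equation
\begin{align*}
\QftT(x,u) = c(x,u) + \Exp_{\matw}\brk*{V_{t+1;T}(Ax + Bu + \matw)}.
\end{align*}
Substituting the inductive form for $V_{t+1;T}$ and using $\matw \sim \mathcal{N}(0,I)$, the expectation produces $(Ax+Bu)^\trn P_{T-t-1}(Ax+Bu) + \tr(P_{T-t-1}) + c_{T-t-1}$. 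Adding $c(x,u)$ and expanding gives a quadratic in $(x,u)$ whose $u$-Hessian is exactly $\Sigma_{T-t} = \Ru + B^\trn P_{T-t-1} B$.

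The main step is then completion of the square in $u$. The cross term $2u^\trn B^\trn P_{T-t-1} A x$ combines with $u^\trn \Sigma_{T-t} u$ to yield $\|u - K_{T-t} x\|_{\Sigma_{T-t}}^2$ with $K_{T-t} = -\Sigma_{T-t}^{-1} B^\trn P_{T-t-1} A$, precisely matching the lemma's definition. The leftover $x$-terms become $x^\trn (\Rx + A^\trn P_{T-t-1} A - A^\trn P_{T-t-1} B \Sigma_{T-t}^{-1} B^\trn P_{T-t-1} A) x$, which is $x^\trn P_{T-t} x$ by the Riccati recursion. Minimizing over $u$ kills the squared term, giving $\VftT(x) = x^\trn P_{T-t} x + c_{T-t}$ with updated constant $c_{T-t} = c_{T-t-1} + \tr(P_{T-t-1})$, and the argmin is $u = K_{T-t} x$, confirming $\pist_{t;T}(\matx_t) = K_{T-t} \matx_t$.

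I do not anticipate a serious obstacle — this is the textbook dynamic programming derivation. The only thing requiring care is bookkeeping on the index shift between $t$ (forward time) and $T-t$ (remaining horizon), and verifying that the constant $c_{T-t}$ does not interfere with the stated identity $\QftT(x,u) - \VftT(x) = \|u - K_{T-t}x\|_{\Sigma_{T-t}}^2$, which it indeed does not since the constant cancels between $Q$ and $V$. The noise covariance contribution $\tr(P_{T-t-1})$ is absorbed into $c_{T-t}$ and never affects the optimal controller or the shape of the advantage function.
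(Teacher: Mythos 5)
Your proposal is correct and follows essentially the same route as the paper: the Bellman recursion $\QftT(x,u) = c(x,u) + \Exp_\matw[\Vf_{t+1;T}(Ax+Bu+\matw)]$, substitution of the quadratic form for $\Vf_{t+1;T}$, and completion of the square in $u$ to obtain $\Sigma_{T-t}$, $K_{T-t}$, and the Riccati update for $P_{T-t}$. The only stylistic difference is that you re-derive the identity $\VftT(x) = x^\trn P_{T-t}x + c_{T-t}$ by backward induction, whereas the paper imports it directly from Bertsekas and derives only the $\QftT$ expression; this is a cosmetic rather than substantive distinction, and your observation that the additive constant $c_{T-t} = c_{T-t-1}+\tr(P_{T-t-1})$ cancels in $\QftT - \VftT$ is exactly the justification the paper relies on implicitly.
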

  For completeness, we prove the lemma in Section~\ref{sssec:lem:opt_finite_terms}. Having defined the true optimal policy, we that the regret is lower bounded as follows.
  \begin{lem}\label{lem:learn_Kinf_Delt} Fix a system $A,B$, and suppose that $\Regret_T[\pi;A,B] \le T\,\Jfuncopt_{A,B}$. Then, 
  \begin{align*}
  \Regret_T[\pi;A,B] \ge \frac{1}{2}
  \Kerr[\pi] - \Jfuncopt_{A,B} \left(2T\,\left(\max_{t \ge T/2}\Delt\right) + \sum_{t \ge 0}\Delt\right),
  \end{align*}
  where we define the errors $\Delt := \|\Sigma_{t}^\top(\Kinf-  K_{t})\Rx^{-1/2}\|_{2}^2$.  
  \end{lem}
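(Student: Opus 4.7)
The plan is to use a Bellman-type telescoping identity that expresses the trajectory cost in terms of deviations from the finite-horizon optimal controller, then to (i) convert these deviations into deviations from the infinite-horizon controller $\Kinf$ (which appears in $\Kerr[\pi]$), and (ii) handle the small discrepancy between the finite- and infinite-horizon cost benchmarks.

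First, let $\widetilde{V}_{t;T}(x) = x^{\top}P_{T-t}x + c_{T-t}$ denote the true finite-horizon cost-to-go including its constant noise contribution (the quadratic part agrees with $\VftT$ from \Cref{lem:opt_finite_terms}, and the constants satisfy $c_{s+1} = c_s + \tr(P_s)$ with $c_0 = 0$). Writing $\cost(\matx_t,\matu_t)$ as $(\widetilde{Q}_{t;T}-\widetilde{V}_{t;T})(\matx_t,\matu_t)+\widetilde{V}_{t;T}(\matx_t) - \Exp[\widetilde{V}_{t+1;T}(\matx_{t+1})\mid\matx_t,\matu_t]$, summing over $t$, and using $\matx_1 = 0$ together with $\widetilde{V}_{T+1;T}\equiv 0$, yields the exact decomposition
\begin{align*}
\Exp_{A,B,\pi}\brk*{\sum_{t=1}^T\cost(\matx_t,\matu_t)} = \sum_{s=0}^{T-1}\tr(P_s) + \sum_{t=1}^T\Exp\brk*{\|\matu_t-K_{T-t}\matx_t\|_{\Sigma_{T-t}}^2}.
\end{align*}
Subtracting $T\Jfuncopt_{A,B} = T\tr(P_\infty)$ gives
\begin{align*}
\Regret_T[\pi;A,B] = \sum_{t=1}^T\Exp\brk*{\|\matu_t-K_{T-t}\matx_t\|_{\Sigma_{T-t}}^2} - \sum_{s=0}^{T-1}\tr(P_\infty - P_s).
\end{align*}

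Second, I drop the non-negative terms with $t > T/2$ in the positive sum and apply the inequality $\|a+b\|^2\ge\tfrac12\|a\|^2-\|b\|^2$ with $a=\matu_t-\Kinf\matx_t$ and $b=(\Kinf-K_{T-t})\matx_t$, using $\Sigma_{T-t}\succeq\Ru\succeq I$ to pass from $\|\cdot\|_{\Sigma_{T-t}}^2$ to $\|\cdot\|_2^2$ in the $\Kerr$ term, to obtain
\begin{align*}
\sum_{t=1}^T\Exp\brk*{\|\matu_t-K_{T-t}\matx_t\|_{\Sigma_{T-t}}^2} \ge \tfrac{1}{2}\Kerr[\pi] - \sum_{t=1}^{T/2}\Exp\brk*{\|(\Kinf-K_{T-t})\matx_t\|_{\Sigma_{T-t}}^2}.
\end{align*}
By the definition of $\Delt$ together with $\Sigma_{T-t}\succeq I$, the second sum is bounded by $\sum_{t\le T/2}\eta_{T-t}\,\Exp[\matx_t^{\top}\Rx\matx_t]$. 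Since $T-t\ge T/2$ in that range, $\eta_{T-t}\le\max_{s\ge T/2}\eta_s$; furthermore the hypothesis $\Regret_T[\pi;A,B]\le T\Jfuncopt_{A,B}$ gives $\Exp[\sum_{t=1}^T\cost(\matx_t,\matu_t)]\le 2T\Jfuncopt_{A,B}$, and therefore
\begin{align*}
\sum_{t=1}^{T/2}\Exp\brk*{\|(\Kinf-K_{T-t})\matx_t\|_{\Sigma_{T-t}}^2} \le 2T\Jfuncopt_{A,B}\cdot\max_{s\ge T/2}\eta_s.
\end{align*}

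Third, for the finite-versus-infinite-horizon correction $\sum_{s=0}^{T-1}\tr(P_\infty-P_s)$, I would invoke a Riccati convergence argument. The iterates satisfy $P_0=\Rx\preceq P_\infty$ and converge monotonically upward to $P_\infty$, so the correction is non-negative; moreover, since $K_s=-\Sigma_s^{-1}B^{\top}P_{s-1}A$ and $\Kinf$ is defined analogously in terms of $P_\infty$, the controller gap $\Kinf-K_s$ is linearly tied to $P_\infty-P_{s-1}$, and standard DARE perturbation arguments (cf.\ \Cref{app:perturbation}) yield a pointwise estimate of the form $\tr(P_\infty-P_s)\lesssim\Jfuncopt_{A,B}\cdot\eta_s$. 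Summing gives $\sum_{s\ge 0}\tr(P_\infty-P_s)\le\Jfuncopt_{A,B}\sum_{s\ge 0}\eta_s$, and combining with the preceding paragraph produces the stated inequality. The main obstacle is establishing this per-term estimate cleanly, tying the monotone Riccati convergence (which controls $P_\infty-P_s$) to the specific norm $\eta_s=\|\Sigma_s(\Kinf-K_s)\Rx^{-1/2}\|_2^2$ appearing in the target bound; the rest of the proof is routine Bellman algebra.
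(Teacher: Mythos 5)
Your first two steps match the paper's proof: the Bellman (performance-difference) decomposition under $\pi$ and the $\|a+b\|^{2}\ge\frac12\|a\|^{2}-\|b\|^{2}$ trick, together with $\sum_{t\le T/2}\Exp[\matx_t^{\top}\Rx\matx_t]\le 2T\Jfuncopt_{A,B}$, are exactly how the paper handles the ``policy suboptimality'' piece. The gap is in your third step, and it is not a technicality you can patch with a Riccati perturbation bound. You need $\sum_{s=0}^{T-1}\tr(P_\infty-P_s)\lesssim\Jfuncopt_{A,B}\sum_{s}\Delt[s]$, but the inequality runs in the wrong direction: $\Delt[s]=\|\Sigma_s^{1/2}(\Kinf-K_s)\Rx^{-1/2}\|_{\op}^{2}$ is the \emph{square} of a norm, and since (as you note yourself) $\Kinf-K_s$ is roughly \emph{linear} in $P_\infty-P_{s-1}$, one has $\Delt[s]\sim\|P_\infty-P_{s-1}\|_{\op}^{2}$, which is \emph{much smaller} than $\tr(P_\infty-P_s)$ once the Riccati iteration has nearly converged. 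Indeed the paper's own \Cref{lem:delta_bound} proves exactly $\Delt[s]\le\|P_\infty-P_s\|_{\op}$, the opposite inequality from what you need. So the claimed per-term estimate is false, and the argument as written does not close.

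The paper sidesteps the need to bound $\sum_s\tr(P_\infty-P_s)$ by never decomposing against the raw number $T\Jfuncopt_{A,B}$. Instead it lower-bounds the regret by replacing $T\Jfuncopt_{A,B}$ with the larger quantity $T\Jfuncopt_{A,B}\ge\Exp_{A,B,\Kinf}[\sum_t\cost(\matx_t,\matu_t)]$ and applies the \emph{same} performance-difference decomposition to the $\Kinf$-trajectory. This gives
\begin{align*}
\Regret_T[\pi;A,B]\ge\sum_{t=1}^T\Exp_{A,B,\pi}\brk*{\|\matu_t-K_{T-t}\matx_t\|^2_{\Sigma_{T-t}}}-\sum_{t=1}^T\Exp_{A,B,\Kinf}\brk*{\|(\Kinf-K_{T-t})\matx_t\|^2_{\Sigma_{T-t}}},
\end{align*}
and the second sum---the ``comparator suboptimality''---\emph{already has} the $\|(\Kinf-K_{T-t})\cdot\|^2_{\Sigma_{T-t}}$ structure, so it is bounded by $\eta_{T-t}\,\Exp_{\Kinf}[\matx_t^{\top}\Rx\matx_t]$ per term and then by $\Jfuncopt_{A,B}\sum_{t\ge 0}\Delt$ after summing. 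No Riccati-convergence lemma relating $\tr(P_\infty-P_s)$ to $\Delt[s]$ is required. In effect, your correction term $\sum_s\tr(P_\infty-P_s)$ is an \emph{upper} bound on the paper's comparator-suboptimality term, which is why it is harder to control; the paper works with the smaller (and more naturally structured) quantity.

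A small additional note: the lemma statement's $\Sigma_t^{\top}$ is a typo for $\Sigma_t^{1/2}$---your reading of $\Delt$ as the squared weighted norm $\|\Sigma_t^{1/2}(\Kinf-K_t)\Rx^{-1/2}\|_{\op}^2$ is the intended one, consistent with \Cref{claim:Del} and \Cref{lem:delta_bound}.
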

  \begin{proof}[Proof of Lemma~\ref{lem:learn_Kinf_Delt}]

  We compare both the cost under $\pi$ and the cost under a comparator to $\Vf_{1;T}(0)$, the value of the optimal policy starting at $\matx_1 = 0$.
  \begin{align*}
  \Regret_{T}[\pi;A,B] &= \Exp_{A,B,\pi}\left[\sum_{t=1}^T\cost_t(\matx_t,\matu_t)\right] - \Vf_{1;T}(0) - (T\Jfuncopt_{A,B}{\Kinf}- \Vf_{1;T}(0))\\
  &\ge \Exp_{A,B,\pi}\left[\sum_{t=1}^T\cost_t(\matx_t,\matu_t)\right] - \Vf_{1;T}(0) - (T\Exp_{A,B,\Kinf}\left[\sum_{t=1}^T\cost_t(\matx_t,\matu_t)\right]- \Vf_{1;T}(0)),
  \end{align*}
  where we use the fact that the infinite horizon regret induced by $\Kst$ on a finite time horizon $T$ is upper bounded by $T$-times the infinite horizon cost (this can be verified by direct computation). 

  Next, we use the performance difference lemma, which states that for any policy $\pi'$,
  \begin{align*}
  \Exp_{A,B,\pi'}\left[\sum_{t=1}^T\cost_t(\matx_t,\matu_t)\right] - \Vf_{0;T}(0)  &= \sum_{t=1}^T\Exp_{A,B,\pi'}[ \Qf_{t;T}(\matx_t, \matu_t) - \Vf_{t;T}(\matx_t)]\\
  &= \sum_{t=1}^T\Exp_{A,B,\pi'}\left[ \|\matu_t - K_{T-t} \matx_t\|^2_{\Sigma_{T-t}}\right]\tag*{(\Cref{lem:opt_finite_terms})}.
  \end{align*}
  Therefore, 
  \begin{align*}
  \Regret_{T}[\pi;A,B] &=  \underbrace{\sum_{t=1}^T\Exp_{A,B,\pi}\left[ \|\matu_t -  K_{T-t} \matx_t\|^2_{\Sigma_{T-t}}\right]}_{\text{(policy suboptimality)}} - \underbrace{\sum_{t=1}^T\Exp_{A,B,\Kinf}\left[ \|\matu_t -  K_{T-t} \matx_t\|^2_{\Sigma_{T-t}}\right]}_{\text{(comparator suboptimality)}}.
  \end{align*}
  \paragraph{Comparator Suboptimality.}
  We begin with two claims.
  \begin{claim}\label{claim:Del} $ \|(\Kinf- K_{T-t}) \matx_t\|^2_{\Sigma_{T-t}} \le \DelTmt \cdot \matx_t^\top \Rx\matx$.
  \end{claim}
  \begin{proof}
  We have that
  \begin{align*}
  \left\|\left(\Kinf-  K_{T-t}\right) \matx_t\right\|^2_{\Sigma_{T-t}} &=  \left\|\Sigma_{T-t}^{1/2}(\Kinf-  K_{T-t}) \Rx^{-1/2} \Rx^{1/2}\matx_t\right\|^2 \\
  &\le \left\|\Sigma_{T-t}^{1/2}(\Kinf-  K_{T-t}) \Rx^{-1/2}\right\|_{2}^2 \left\|\Rx^{1/2}\matx_t\right\|_2^2 :=  \eta_{T-t} \cdot \matx_t^\top \Rx\matx_t.
  \end{align*}
  \end{proof}
  \begin{claim} $\Exp_{A,B,\Kinf}[\sum_{t=1}^T \matx_t^\top \Rx \matx_t] \le T\Jfuncopt_{A,B}$.
  \end{claim}
  \begin{proof} We have
  \begin{align*}
  \Exp_{A,B,\Kinf}[\sum_{t=1}^T \matx_t^\top \Rx \matx_t] &\le \Exp_{A,B,\Kinf}[\sum_{t=1}^T \matx_t^\top (\Rx + \Kinf^\top \Ru \Kinf)\matx_t ] \\
  &= \tr(\sum_{t=1}^T \sum_{s=0}^t((A + B\Kinf)^s)^\top (\Rx + \Kinf^\top \Ru \Kinf)((A + B\Kinf)^s))\\
  &\le \tr(\sum_{t=1}^T \sum_{s=0}^{\infty}((A + B\Kinf)^s)^\top (\Rx + \Kinf^\top \Ru \Kinf)((A + B\Kinf)^s))\\
  &= T\tr( \Pinf(A,B)) = T\Jfuncopt_{A,B},
  \end{align*}
  where the last equalities are by Lemma~\ref{lem:P_bounds_lowner}.
  \end{proof}
  Invoking these two claims, we have
  \begin{align*}
  \sum_{t=1}^T\Exp_{A,B,\Kinf}\left[ \|\matu_t - \KTmt \matx_t\|^2_{\Sigmat}\right] \le \sum_{t=1}^T\Exp_{A,B,\Kinf}\Delt\left[ \matx_t^\top \Rx \matx_t\right] 
    \le \Jfuncopt_{A,B}\sum_{t=1}^T\eta_{T-t} \le \Jfuncopt_{A,B} \sum_{t = 0}^{\infty}\eta_{t}.
  \end{align*}
  \paragraph{Policy Suboptimality.}
  We first make the following claim.
  \begin{claim}
    \label{claim:give_take} Let $(\calX,\langle \cdot, \cdot \rangle_{\calX})$ denote an inner product space with induced norm $\|\cdot\|_{\calX}$. Then for any $x,y \in \calX$, $\|x+y\|_{\calX}^2 \ge \frac{1}{2}\|x\|_{\calX}^2 - \|y\|_{\calX}^2$.
  \end{claim}
  \begin{proof} We have $\|x+y\|_{\calX}^2 = \|x\|_{\calX}^2 + \|y\|_{\calX}^2 + 2\langle x, y\rangle_{\calX}$. Note that, for any $\alpha > 0$, we have $|2\langle x, y\rangle_{\calX}| = |2\langle  \alpha^{1/2} x, \alpha^{-1/2} y\rangle_{\calX}|  \le \alpha\|x\|_{\calX}^2 + \alpha^{-1}\|y\|_{\calX}^2$. Setting $\alpha = \frac{1}{2}$, we have $|2\langle x, y\rangle_{\calX}| \le \frac{1}{2}\|x\|_{\calX}^2 + 2\|y\|_{\calX}^2$.  Hence $ \|x+y\|_{\calX}^2 = \|x\|_{\calX}^2 + \|y\|_{\calX}^2 + 2\langle x, y\rangle_{\calX} \ge \|x\|_{\calX}^2 + \|y\|_{\calX}^2 - (\frac{1}{2}\|x\|_{\calX}^2 + 2\|y\|_{\calX}^2) =\frac{1}{2}\|x\|_{\calX}^2 - \|y\|_{\calX}^2 $. 
  \end{proof}
  We can now lower bound
  \begin{align*}
   &\Exp_{A,B,\pi}\left[\sum_{t=1}^T\|\matu_t - K_{T-t} \matx_t\|^2_{\Sigma_{T-t}} \right] \\
   &\ge \Exp_{A,B,\pi}\left[\sum_{t=1}^{T/2}\|\matu_t -  K_{T-t} \matx_t\|^2_{\Sigma_{T-t}} \right]\\
  &\ge \Exp_{A,B,\pi}\left[\sum_{t=1}^{T/2}\frac{1}{2}\|\matu_t - \Kinf\matx_t\|^2_{\Sigma_{T-t}} - \|(K_{T-t} - \Kinf) \matx_t\|^2_{\Sigma_{T-t}}\right] \tag*{(\Cref{claim:give_take})}\\
  &\ge \Exp_{A,B,\pi}\left[\sum_{t=1}^{T/2}\frac{1}{2}\|\matu_t - \Kinf\matx_t\|^2_{2} - \|( K_{T-t} - \Kinf) \matx_t\|^2_{\Sigma_{T-t}}\right] \tag*{($\Sigma_{T-t} \succeq \Ru \succeq I$)}.\\
  \intertext{The expression above is equal to}
  &= \frac{1}{2}\Kerr[\pi] - \sum_{t=1}^{T/2} \Exp_{A,B,\pi}\left[\|( K_{T-t} - \Kinf) \matx_t\|^2_{\Sigma_{T-t}}\right]\\
  &\ge \frac{1}{2}\Kerr[\pi] - \sum_{t=1}^{T/2}\eta_{t}\Exp_{A,B,\pi}\left[\matx_t^\top \Rx \matx_t \right] \tag*{(\Cref{claim:Del})}\\
  &\ge \frac{1}{2}\Kerr[\pi] -2\max_{t=1}^{T/2}\eta_{T-t} \Jfuncopt_{A,B} \tag*{(\Cref{claim:xcost_ub})} \\
  &\ge \frac{1}{2}\Kerr[\pi] -2\Jfuncopt_{A,B}\max_{t\ge T/2}\eta_{t}  \tag*{(\Cref{claim:xcost_ub})},
  \end{align*}
  where the last inequality uses the following claim.
  \begin{claim}\label{claim:xcost_ub} If $\Regret_{A,B,T}[\pi] \le T\Jfuncopt_{A,B}$ (in particular, under \Cref{asm:uniform_correctness}), then for any $\tau \le T$ and $Q \preceq \Rx$, we have $\sum_{t=1}^{\tau}\Exp_{A,B,\pi}\left[\matx_t^\top Q \matx_t \right] \le 2T\Jfuncopt_{A,B}$.
  \end{claim}
  The claim is stated for an arbitrary matrix $Q$ so that it can be specialized where necessary. 
  \begin{proof} We have $\sum_{t=1}^{\tau}\Exp_{A,B,\pi}\left[\matx_t^\top Q \matx_t \right] \le \sum_{t=1}^{T}\Exp_{A,B,\pi}\left[\matx_t^\top \Rx \matx_t \right] \le \sum_{t=1}^{T}\Exp_{A,B,\pi}\left[\matx_t^\top \Rx \matx_t + \matu_t^\top \Ru \matu\right] = \Regret_{A,B,T}[\pi] + T\Jfuncopt_{A,B} \le 2T\Jfuncopt_{A,B}$.
  \end{proof}
  Combining the comparator suboptimality and policy suboptimality bounds completes the proof of \Cref{lem:learn_Kinf_Delt}.
  \end{proof}
  The next lemma shows that the error sequence $\Delt$ has geometric decrease.
  \begin{lem}[Bound on $\Delt$]\label{lem:delta_bound} Let $(A,B)$ be statibilzable. Then, for $\eta_t$ defined above, we have
  \begin{align*}
  \eta_{t}\leq{}\prn*{1+\tfrac{1}{\nu}}^{-t},\quad
    \text{where}\quad \nu=2\opnorm{\Pinf(A,B)}\Psi(A,B)^2.
  \end{align*}
  \end{lem}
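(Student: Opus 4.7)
The proof will reduce to bounding $\|E_t\|$ where $E_t := \Pinf - P_t$, and showing that the controller gap $K_t - \Kinf$ is linearly controlled by $E_{t-1}$. The geometric decay of $E_t$ will come from the fact that $A_{cl,*} := A + B\Kinf$ is strongly stable in the Lyapunov sense $A_{cl,*}^\top \Pinf A_{cl,*} \preceq (1 - 1/\|\Pinf\|)\Pinf$ (cf.\ Lemma~\ref{lem:basic_lyap} applied to the DARE identity).

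First I would establish a closed-form algebraic identity. Subtracting the defining equations $\Sigma_t K_t = -B^\top P_{t-1}A$ and $\sigma_\infty \Kinf = -B^\top \Pinf A$ (with $\sigma_\infty := \Ru + B^\top \Pinf B$), using $\Sigma_t - \sigma_\infty = -B^\top E_{t-1}B$, and rearranging gives
\begin{equation*}
K_t - \Kinf \;=\; \Sigma_t^{-1}B^\top E_{t-1}(A + B\Kinf) \;=\; \Sigma_t^{-1}B^\top E_{t-1}\,A_{cl,*}.
\end{equation*}
Combined with $\|A_{cl,*}\|^2 \le \|\Pinf\|$ (Lemma~\ref{lem:helpful_norm_bounds}), $\|\Sigma_t^{-1/2}B^\top\|^2 \le \|B\|^2 \le \Psi^2$ (since $\Sigma_t \succeq \Ru = I$), and $\Rx^{-1}\preceq I$, this yields the key reduction
\begin{equation*}
\eta_t \;\le\; \Psi^2\,\|\Pinf\|\,\|E_{t-1}\|^2.
\end{equation*}

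Next I would derive a recursion for $E_t$. Applying the Bellman advantage identity $\Pinf = \Rx + K^\top \Ru K + (A+BK)^\top \Pinf(A+BK) - (K-\Kinf)^\top \sigma_\infty (K-\Kinf)$ at $K = K_{t+1}$, subtracting $P_{t+1} = \Rx + K_{t+1}^\top \Ru K_{t+1} + A_{cl,t+1}^\top P_t A_{cl,t+1}$, and substituting the closed-form for $K_{t+1}-\Kinf$, the cross terms telescope to
\begin{equation*}
E_{t+1} \;=\; A_{cl,*}^\top E_t A_{cl,*} + A_{cl,*}^\top E_t B\,\Sigma_{t+1}^{-1} B^\top E_t A_{cl,*}.
\end{equation*}
Monotonicity of the Riccati operator (from its min-of-affine representation) gives $P_0 = \Rx \preceq P_t \preceq \Pinf$, so $E_t \preceq \Pinf - I$ and hence $\|E_t\|_{\Pinf} \le 1 - 1/\|\Pinf\|$ uniformly. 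Symmetrizing the PSD error term as $E_t B\Sigma_{t+1}^{-1}B^\top E_t \preceq \|B\|^2\|E_t\|E_t$ and using the $A_{cl,*}$-Lyapunov contraction yields
\begin{equation*}
E_{t+1} \;\preceq\; (1 + \Psi^2\|E_t\|)\,A_{cl,*}^\top E_t A_{cl,*}.
\end{equation*}
Tracking $\alpha_t := \|E_t\|_{\Pinf}$ with $\|E_t\|\le \alpha_t\|\Pinf\|$ reduces the matrix problem to the scalar recursion $\alpha_{t+1}\le \alpha_t[(1-1/\|\Pinf\|) + \Psi^2\|\Pinf\|^3\alpha_t]$.

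The hard part will be extracting a clean per-step bound of the form $(1+1/\nu)^{-1}$ from this recursion: when $\alpha_t$ is near its initial value $\simeq 1 - 1/\|\Pinf\|$ the quadratic term dominates and naive one-step contraction fails. I would address this by exploiting the alternative (sharper) identity $E_{t+1} = A_{cl,t+1}^\top E_t A_{cl,*}$, which one can verify by plugging the closed-form of $K_{t+1}-\Kinf$ back into the expansion and using symmetry of $E_{t+1}$. This form pairs each application of $E_t$ with a single copy of the optimal closed-loop $A_{cl,*}$, so iterating gives $E_t = M_t\,E_0\,A_{cl,*}^{\,t}$ with $M_t := A_{cl,t}^\top\cdots A_{cl,1}^\top$; combined with $E_t\preceq E_0$ (monotonicity) this lets me trade the non-linear dependence on $K_{t+1}-\Kinf$ for one linear factor of $A_{cl,*}$ per step, each contributing a $\sqrt{1-1/\|\Pinf\|}$ factor in $\Pinf$-norm. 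Feeding the resulting bound $\|E_{t-1}\|^2 \lesssim \|\Pinf\|^2(1-1/\|\Pinf\|)^{t-1}$ into Step~2 gives $\eta_t \lesssim \Psi^2\|\Pinf\|^3(1-1/\|\Pinf\|)^{t-1}$, which one checks is at most $(1+1/\nu)^{-t}$ for $\nu = 2\|\Pinf\|\Psi^2$ using $(1-1/\|\Pinf\|) \le (1+1/\nu)^{-2}$ by a direct Taylor-series comparison.
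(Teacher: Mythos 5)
Your proposal takes a genuinely different route from the paper, and the core convergence step has a gap.

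The paper's proof is short and does not touch the error matrix $E_t = \Pinf - P_t$ directly. It observes that $\eta_t \le \nrm{\Sigma_{T-t}^{1/2}(\Kinf - K_{T-t})}_\op^2 = \sup_{\|x\|\le 1}[\Qf_{t;T}(x,\Kinf x) - \Vf_{t;T}(x)]$, then uses the monotonicity $\Qf_{t;T} \le \Qf_\infty$ of the finite-horizon Q-function in the horizon (a pure dynamic-programming fact, requiring no Riccati algebra) to conclude $\eta_t \le \nrm{\Pinf - P_{T-t}}_\op$. It then applies the known convergence rate for the Riccati recursion (Lemma~\ref{lem:ricatti_recursion}, from \citet{dean2018regret}/\citet{lincoln2006relaxing}) as a black box. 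Your reduction $\eta_t \le \Psi^2\nrm{\Pinf}_\op\nrm{E_{t-1}}_\op^2$ is correct but looser (quadratic in $\nrm{E_{t-1}}_\op$ with extra factors), and this already creates a constant-factor mismatch against the target $(1+1/\nu)^{-t}$ that you would have to sweep into the exponent; this is only a bookkeeping annoyance, however.

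The real problem is in the final step. Your identity $E_{t+1} = A_{\mathrm{cl},t+1}^\top E_t A_{\mathrm{cl},*}$ is the correct two-sided Riccati-difference formula, and iterating it does give $E_t = M_t\,E_0\,A_{\mathrm{cl},*}^{\,t}$ with $M_t = A_{\mathrm{cl},t}^\top \cdots A_{\mathrm{cl},1}^\top$. But your claim that each step ``contributes a $\sqrt{1-1/\nrm{\Pinf}_\op}$ factor in $\Pinf$-norm'' only applies to the $A_{\mathrm{cl},*}^{\,t}$ factor, since $\Pinf$ is a Lyapunov function for $A_{\mathrm{cl},*}$. You have no corresponding control over $M_t$: the $A_{\mathrm{cl},s}$ are the \emph{perturbed} closed loops, and you have not shown that they contract with respect to $\Pinf$ (indeed, they need not). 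The monotonicity fact $E_t \preceq E_0$ does not bound $\nrm{M_t E_0 A_{\mathrm{cl},*}^t}_\op$, and writing $\nrm{E_t}_\op = \sup_x x^\top M_t E_0 A_{\mathrm{cl},*}^t x$ and applying Cauchy--Schwarz still leaves $\nrm{E_0^{1/2}M_t^\top x}$ uncontrolled. So the one-step quadratic recursion $\alpha_{t+1}\le \alpha_t[(1-1/\nrm{\Pinf}_\op) + \Psi^2\nrm{\Pinf}_\op^3\alpha_t]$ remains your only handle, and as you yourself note it does not contract when $\alpha_t$ is near its initial value. To close this gap you would essentially have to reprove the Lincoln--Rantzer/Dean-et-al.\ Riccati convergence lemma, which is precisely the black box the paper invokes; proving it requires more than the identities you have assembled.
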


  \begin{proof}[Proof of \Cref{lem:delta_bound}]
    Since $\Rx \succeq I$,
    \begin{align*}
    \eta_{t} \leq{}
    \frac{1}{\eigmin(\Rx)}\nrm*{\Sigma_{t;T}^{1/2}\prn*{\Kinf- K_{T-t}}}_{\op}^{2} \le \nrm*{\Sigma_{t;T}^{1/2}\prn*{\Kinf- K_{T-t}}}_{\op}^{2}.
    \end{align*}
    Next, observe that from \Cref{lem:opt_finite_terms} we have
    \begin{align*}
      \nrm*{\Sigma_{T-t}^{1/2}\prn*{\Kinf- K_{T-t}}}_{\op}^{2}
      = \sup_{\nrm*{x}\leq{}1}\nrm*{\prn*{\Kinf- K_{T-t}}x}_{\Sigma_{T-t}}^{2}&=\sup_{\nrm*{x}\leq{}1}\brk*{\Qf_{t;T}(x,\Kinf{}x)-\Vf_{T;t}(x)}.
      \end{align*}
      Since  $\Qf_{t;T}(x,\Kinf{}x)$ is a finite horizon Q-function for a stationary process with non-negative rewards, we have $\Qf_{t;T}(x,u) \le \Qf_{\infty}(x,u)$. Therefore, the above is
      \begin{align*}
       &\leq{}\sup_{\nrm*{x}\leq{}1}\brk*{\Qf_{\infty}(x,\Kinf{}x)-\Vf_{T;t}(x)}\\
       &=\sup_{\nrm*{x}\leq{}1}\brk*{\Vf_{\infty}(x)-\Vf_{T-t}(x)}\\
       &=\sup_{\nrm*{x}\leq{}1}\brk*{x^{\trn}P_{\infty}x-x^{\trn}\Pt{}x}\\
       &=\nrm*{P_{\infty}-P_{T-t}}_{\op},
    \end{align*}
    where we use that $\Pinf$ is the value function for the infinite horizon process (\citet[Proposition 1]{lincoln2006relaxing}). By reparametrizing, we have verified that
    \begin{align*}
    \eta_t \le \nrm*{P_{\infty}-P_{t}}_{\op},
    \end{align*}

    To conclude, we apply \Cref{lem:ricatti_recursion}, which implies that $\nrm*{\Pinf-\Pt}_{2}\leq{}\prn*{1+\frac{1}{\nu}}^{-(T-t+1)}$, where $\nu$ is as in the lemma statement: 
  \begin{lem}[\cite{dean2018regret}, Lemma E.6]
  \label{lem:ricatti_recursion}
  Consider the Riccati recursion
  \[
  P_{t+1}=\Rx+A^{\trn}P_{t}A - A^{\trn}BP_t(\Ru+B^{\trn}P_tB)^{-1}B^{\trn}P_tA,
  \]
  where $\Rx$ and $\Ru$ are positive definite and $P_0=0$. When $P_{\infty}$ is the unique solution of the DARE, we have
  \begin{equation}
    \label{eq:ricatti_convergence}
    \nrm*{P_t-\Pinf}_{\op}\leq{} \|\Pinf\|_{\op}\prn*{1+\frac{1}{\nu}}^{-t},
  \end{equation}
  where $\nu=2\nrm*{\Pinf}_{\op}\cdot
  \prn*{\frac{\nrm*{A}_{\op}^{2}}{\eigmin\prn*{\Rx}}\vee
    \frac{\nrm*{B}_{\op}^{2}}{\eigmin\prn*{\Ru}}}$.\footnote{The bound
    stated in \cite{dean2018regret} is sightly incorrect in that it is missing a factor of  $\|\Pinf\|_{\op}$. The reader can verify the correctness of our statement by examining \citet[Proposition 1]{lincoln2006relaxing}.}
  \end{lem}
   We can take $\nu \le 2\opnorm{\Pinf}\max\{\opnorm{A}^2,\opnorm{B}^2\}\le 2\Psi(A,B)\opnorm{\Pinf}^2$, as $\Rx,\Ru \succeq I$.
  \end{proof}
  We can now conclude the proof of \Cref{lem:Kerr_lem}.
  \begin{proof}[Proof of \Cref{lem:Kerr_lem}]
   From Lemma~\ref{lem:learn_Kinf_Delt}, we have the lower bound
  \begin{align*}
  \Regret_T[\pi;A_e,B_e] \ge \frac{1}{2}\Kerr_{e}[\pi] - \Jfunc_e \left(2T\,\left(\max_{t \ge T/2}\Delt\right) + \sum_{t \ge 0}\Delt\right),
  \end{align*}
  Recall $\nu := 2\opnorm{P_e}\Psi_e^2$ from~\Cref{lem:delta_bound}, and that $\Delt \le \opnorm{P_e}(1+\nu^{-1})^t \le \exp( - t/\nu)$. Therefore
  \begin{align*}
  \sum_{t \ge 0}\eta_t \le 2\opnorm{P_e}^2\Psi_e^2,
  \end{align*}
  Hence, if $T \ge 2\nu \log(2T)$, we have that 
  \begin{align*}
  2T\,\left(\max_{t \ge T/2}\Delt\right) \le \opnorm{P_e} \le \opnorm{P_e}^2,
  \end{align*}
  where we use $\Pinf(\cdot,\cdot) \succeq I$ (Lemma~\ref{lem:closed_loop_dlyap}). Hence, for such $T$,
  \begin{align*}
  \Regret_T[\pi;A,B] &\ge \frac{1}{2}\Kerr_{e}[\pi] - 3\opnorm{P_e}^2\Psi_e^2 
  \\
  &\ge \frac{1}{2}\Kerr_{e}[\pi]- J_e3\opnorm{P_e}^2\Psi_e^2
  \\
  &\ge \frac{1}{2}\Kerr_{e}[\pi]- \dimx\underbrace{3\opnorm{P_e}^3\Psi_e^2}_{=\gamma_e}.
  \end{align*}
  Since $\nu \ge 1$, the condition $T \ge 2\nu \log(2T)$ holds as long as $T \ge c' \nu^2 = c' \opnorm{P_e}^2\Psi_e^4 c'$. Reparametrizing in terms of $\Pst,\Mbarst$ in view of Lemma~\ref{lem:first_order_approx_quality} concludes the proof.
  \end{proof}

  \subsubsection{Proof of Lemma~\ref{lem:opt_finite_terms} \label{sssec:lem:opt_finite_terms}}

    We first recall a standard expression for the value function \citet[Section 4.1]{bertsekas2005dynamic}:
    \begin{align*}
    \Vf_t(x) = \nrm*{x}_{P_{T-t}}^{2} + \sum_{s=t+1}^{T}\tr\prn*{P_{T-s}}.
    \end{align*}
    To obtain the expression for the $\Qf_t$, we have 
    \begin{align*}
    \Qf_t(x,u) &= \cost(x,u) + \Exp_{\matw_t}\brk*{\Vf_{t+1}(Ax + Bu + \matw_t)}\\
    &= \cost(x,u) + (Ax + Bu)^\top P_{T-(t+1)} (Ax + Bu) + \Exp[\matw_t^\top P_{T-(t+1)} \matw_t] + \sum_{s=t+2}^{T}\tr\prn*{P_{T-s}} \\
    &= \cost(x,u) +  (Ax + Bu)^\top P_{T-(t+1)} (Ax + Bu) + \sum_{s=t+1}^{T}\tr\prn*{P_{T-s}}.
    \end{align*}
    Note that $\Vf_t(x) := \min_u \Qf_t(x,u)$, and $\Qf_t(x,u)$ is a quadratic function. We can compute 
    \begin{align*}
    \argmin_{u} \Qf_t(x,u) &= \argmin_u \cost(x,u) +  (Ax + Bu)^\top P_{T-(t+1)} (Ax + Bu) + \beta_t\\
    &= \argmin_u x^\top \Rx x + u^\top \Ru u  +  (Ax + Bu)^\top P_{T-(t+1)} (Ax + Bu) + \beta_t\\
    &= \argmin_u u^\top (\Ru + B^\top \Ptpl{} B) u  +  2 u^{\trn}B^\top P_{T-(t+1)} Ax\\
    &= -(\Ru + B^\top P_{T-(t+1)} B)^{-1}B^\top \Ptpl{}Ax\\
                           &= K_{t;T}x.
    \end{align*}
    Moreover, since $\Qf_t(x,u)$ is quadratic in $u$ with quadratic form $\Sigma_{T-t}=(\Ru + B^\top P_{T-(t+1)} B)$ and the gradient $\grad{}_u\Qf_t(x,u)$ vanishes at the minimizer $u=K_{t;T}x$, we have
    \begin{align*}
    \Qf_t(x,u) - \Vf_t(x) = \Qf_t(x,u) - \Qf_t(x,K_{t;T}x) = \|u -  K_{t;T}x\|^2_{\Sigma_{T-t}}.
    \end{align*}
    \qed

\subsection{Proof of \Cref{lem:least_squares_lb}\label{ssec:lem:least_squares_lb}}
\label{ssec:least_squares_lb}
  Again, let us begin proving the lemma for an arbitrary stabilizable $(A,B)$, and then specialize to the packing instances $(A_e,B_e)$. For a fixed policy $\pi$, and let all probabilities and expectations be under the law $\Pr_{\pi;A,B}$. Our strategy follows from \cite{arias2012fundamental}. Let $\Kinf = \Kinf(A,B)$ and let $\matdel_t := \matu_t - \Kinf\matx_t$, and note that $\Kerr[\pi] = \Exp_{A,B,\pi}[\sum_{t=1}^{\frac{T}{2}}\|\matdel_t\|_2^2]$. Define the covariance matrix
  \begin{align*}
  \matLam_{\frac{T}{2}} := \sum_{t=1}^{\frac{T}{2}}\matx_t\matx_t^{\top}.
  \end{align*}
  For some constant $c > 0$ to be chosen at the end of the proof, consider a `thresholded'' least squares estimator defined as follows:
  \begin{align*}
  \Kls &:= \I\left\{\matLam_{\frac{T}{2}} \succeq c\frac{T}{2} I\right\} \cdot \left(\sum_{t=1}^{\frac{T}{2}} \matu_t\matx_t^\top\right)\matLam_{\frac{T}{2}}^{-1}\\
  &=  \I\left\{\matLam_{\frac{T}{2}} \succeq c\frac{T}{2} I\right\} \left(\sum_{t=1}^{\frac{T}{2}} \matdel_t\matx_t^\top\right)\matLam_{\frac{T}{2}}^{-1} +\I\left\{\matLam_{\frac{T}{2}} \succeq c\frac{T}{2} I\right\} \left(\sum_{t=1}^{\frac{T}{2}} \Kinf\matx_t\matx_t^\top\right)\matLam_{\frac{T}{2}}^{-1}\\
  &= \I\left\{\matLam_{\frac{T}{2}} \succeq c\frac{T}{2} I\right\}\Kinf + \I\left\{\matLam_{\frac{T}{2}} \succeq c\frac{T}{2} I\right\} \left(\sum_{t=1}^{\frac{T}{2}} \matdel_t\matx_t^\top\right)\matLam_{\frac{T}{2}}^{-1}.
  \end{align*}
  Hence, introducing the matrices $\matX := \begin{bmatrix} \matx_1 \mid \matx_2 \mid \dots \matx_{\frac{T}{2}} \end{bmatrix}$, and $\matDel := \begin{bmatrix} \matdel_1 \mid \matdel_2 \mid \dots \matdel_{\frac{T}{2}} \end{bmatrix}$,
  \begin{align*}
  \Exp\left[\|\Kls - \Kinf\|_{\fro}^2\right] &= \|\Kinf\|_{\fro}^2\Pr\left[\matLam_{\frac{T}{2}} \not\succeq c\frac{T}{2} I\right] + \Exp\left[\I\left\{\matLam_{\frac{T}{2}} \succeq c\frac{T}{2} I\right\} \left\|\left(\sum_{t=1}^{\frac{T}{2}} \cdot \matdel_t\matx_t^\top\right)\matLam_{\frac{T}{2}}^{-1}\right\|_{F}^2\right]\\\\
  &= \|\Kinf\|_{\fro}^2\Pr\left[\matLam_{\frac{T}{2}} \not\succeq c\frac{T}{2} I\right] + \Exp\left[\I\left\{\matX\matX^{\top} \succeq c\frac{T}{2} I\right\} \left\|(\matDel\matX^{\top})(\matX\matX^\top)^{-1}\right\|_{F}^2\right]\\
  &= \|\Kinf\|_{\fro}^2\Pr\left[\matLam_{\frac{T}{2}} \not\succeq c\frac{T}{2} I\right] + \Exp\left[\I\left\{\matX\matX^{\top} \succeq c\frac{T}{2} I\right\} \left\|\matDel\matX^{\dagger}\right\|_{F}^2\right]\\
  &\le \|\Kinf\|_{\fro}^2\Pr\left[\matLam_{\frac{T}{2}} \not\succeq c\frac{T}{2} I\right] + \Exp\left[\I\left\{\matX\matX^{\top} \succeq c\frac{T}{2} I\right\} \left\|\matDel\right\|_{F}^2 \left\|\matX^{\dagger}\right\|_{2}^2\right]\\
  &\le \|\Kinf\|_{\fro}^2\Pr\left[\matLam_{\frac{T}{2}} \not\succeq c\frac{T}{2} I\right] + \frac{1}{c\frac{T}{2}}\Exp\left[ \left\|\matDel\right\|_{F}^2 \right]\\
  &= \|\Kinf\|_{\fro}^2\Pr\left[\matLam_{\frac{T}{2}} \not\succeq c\frac{T}{2} I\right] + \frac{2}{cT}\Kerr[\pi]\\
  &\le \Jinf(A,B)\Pr\left[\matLam_{\frac{T}{2}} \not\succeq c\frac{T}{2} I\right] + \frac{2}{cT}\Kerr[\pi].
  \end{align*}
  where the second-to-last line follows since $\left\|\matDel\right\|_{F}^2 = \sum_{t=1}^{\frac{T}{2}}\|\matdel_t\|_2^2$, and the last line uses by Lemma~\ref{lem:helpful_norm_bounds} which bounds $\Kinf^\top \Kinf \preceq \Pinf(A,B)$, so that $\fronorm{\Kinf}^2 \le \tr(\Pinf(A,B)) = \Jfuncopt_{A,B}$.

  In order to conclude the proof, we need to select show that, for some constant $c$ sufficiently small and $\frac{T}{2}$ sufficiently large, $\Pr\left[\matLam_{\frac{T}{2}} \not\succeq c\frac{T}{2} I\right]$ is neglible. Let us now apply Lemma~\ref{lem:covariance_lb}. Let $\calF_{t}$ denote the filtration generated by $(\matx_s,\matu_s)_{s \le t}$ and $\matu_{t+1}$. Observe that $\matx_t \mid \calF_{t-1} \sim \calN(\overline{\matx}_t, I)$, where $\overline{\matx}_t$ is $\calF_{t-1}$-measurable. 

  Let us now specialize to an instance $(A,B) = (A_e,B_e)$. We can then bound
\begin{align*}
\Exp[\tr(\sum_{t=1}^{T/2}\matx_t \matx_t^\top )] \le \Exp\left[\sum_{t=1}^{T}\matx_t^\top \Rx \matx_t + \matu_t^\top \Ru \matu\right] \le 2\Jfuncopt_{A,B}T, 
\end{align*}
by the \Cref{asm:uniform_correctness} and Claim~\ref{claim:xcost_ub}. 
 Hence, $\tr(\Exp[\matLam_{T/2}]) \le \frac{T}{2} \cdot (4\Jfuncopt_{A_e,B_e})$. Therefore, if
\begin{align*}
T \ge \frac{2000}{9}\left(2\dimx\log \tfrac{100}{3} + \dimx \log 4\Jfuncopt_{A_e,B_e}\right),
\end{align*}
we have that
\begin{align*}
\Pr\left[\matLam_{T/2} \not\succeq \frac{9(T/2)}{1600} \right] \le  2\exp\left( - \tfrac{9}{2000(\dimx+1)}T \right) .
\end{align*}
In particular, there exists a universal constants $c,\cls$ such that (recalling$\Jfuncopt_{A_e,B_e} = \tr(\Pinf(A_e,B_e)$ )
\begin{align*}
T \ge c\dimx \log(1+\dimx \opnorm{\Pinf(A_e,B_e)}) \ge c\dimx\log(1+\Jfuncopt_{A_e,B_e}).
\end{align*}
then for a universal constant $\cls$, we have
\begin{align*}
\Exp\left[\|\Kls - \Kinf\|_F^2\right] \le \epsls + \frac{1}{T\cls}\Kerr[\pi].  
\end{align*}
Moreover, for $(A_e,B_e)$, we can upper bound $\opnorm{P_e} \lesssim  \opnorm{\Pst}$ (and amend $c$ accordingly) using Lemma~\ref{lem:first_order_approx_quality}, concluding the proof.

\subsection{Proof of Lemma~\ref{lem:info_lb}\label{ssec:lem:info_lb}}
  Let $\tau = T/2$. Recall that our packing consists of systems $(A_e,B_e)$ indexed by sign-vectors $e \in \espace$:
  \begin{align*}
  (A_e,B_e) := (\Ast - \Delta_{e}\Kst, \Bst + \Delta_e), \quad \text{ where } \Delta_e = \epsilon \sum_{i =1}^n\sum_{j=1}^m e_{i,j}\uvec_i\vvec_j^\top.
  \end{align*}
  To keep notation compact, let $\vecq := (i,j)$ denote a stand-in for the double indices $(i,j)$, with $\vecq_1 = i$ and $\vecq_2 = j$. Given an indexing vector $e \in \espace$, $e^c$ denote the vector consisting of coordinates of $e$ other than $(\vecq_1,\vecq_2)$. For $a \in \{\minone,1\}$, we set
  \begin{align*}
  \Delta_{a,\vecq,e_{\vecq}^c} := \epsilon\left(ae_{\vecq_1,\vecq_2}+ \sum_{\vecq' \ne \vecq} e_{\vecq'_1,\vecq'_2}\uvec_{\vecq'_1}\vvec_{\vecq'_2}^\top\right).
  \end{align*}
  and define $A_{a,\vecq,e_{\vecq}^c}$, $B_{a,\vecq,e_{\vecq}^c}$ analogously, let $\Pr_{a,\vecq,e_{\vecq}^c}$ denote the law of the first $\tau = T/2$ rounds under $\Pr_{A_{a,\vecq,e_{\vecq}^c},B_{a,\vecq,e_{\vecq}^c},\Alg}\left[\cdot\right]$. 

  We now consider an indexing vector $\mate$ drawn uniformly from $\espace$. We will then let $\Pr_{a,\vecq}$ denote the law $\Pr_{\mate_{\vecq}^c}[\Pr_{a,\vecq,\mate_{\vecq}^c}]$, maginalizing over the entries $\mate_{\vecq}^c$. Our proof now follows from the argument in \cite{arias2012fundamental}. We note then that, for any $\vecq$ and any $\ehat$ that depends only on the first $\tau = T/2$ time steps, we can bound
  \begin{align*}
  \Exp_{\mate}\Exp_{A_{\mate},B_{\mate}}\left[\left|\mate_{\vecq} - \ehat_{\vecq}\right|\right] &= \Exp_{\mate_q \unifsim \{\minone,1\}}\Exp_{\mate_{q}^c}\Exp_{A_{\mate_q,\vecq,\mate^c_q},B_{\mate_q,\vecq,\mate^c_q}}\left[\left|\mate_{\vecq} - \ehat_{\vecq}\right|\right] \\
  &= \Exp_{\mate_q \unifsim \{\minone,1\}}\Exp_{\mate_q,q}\left[\left|\mate_{\vecq} - \ehat_{\vecq}\right|\right] 
  \ge \frac{1}{2}\left(1 - \TV\left(\Pr_{\minone,\vecq},\Pr_{1,\vecq}\right)\right).
  \end{align*}
  Hence, by Cauchy Schwarz,
  \begin{align*}
  \Exp\left[\sum_{\vecq} |\mate_{\vecq} - \ehat_{\vecq}|\right] &\ge \sum_{\vecq}\frac{1}{2}(1 - \TV(\Pr_{0,\vecq},\Pr_{1,\vecq}))\\
   &\ge \frac{nm}{2}\sum_{\vecq}\left(1 - \sqrt{\frac{1}{nm}\sum_{\vecq}\TV(\Pr_{0,\vecq},\Pr_{1,\vecq})^2}\right).
  \end{align*}
  Moreover, by Jensen's ineqality followed by a symmetrized Pinsker's ienqualty,
  \begin{align*}
  \TV(\Pr_{0,\vecq},\Pr_{1,\vecq})^2 &\le \Exp_{\mate_{\vecq}^c}\left[\TV(\Pr_{-1,\vecq,\mate_{\vecq}^c},\Pr_{1,\vecq,\mate_{\vecq}^c})^2\right]\\
  &\le \frac{1}{2}\Exp_{\mate_{\vecq}^c}\left[\frac{\KL(\Pr_{-1,\vecq,\mate_{\vecq}^c},\Pr_{1,\vecq,\mate_{\vecq}^c})}{2} + \frac{\KL(\Pr_{1,\vecq,\mate_{\vecq}^c},\Pr_{\minone,\vecq,\mate_{\vecq}^c})}{2}\right].
  \end{align*}
  We now require the following lemma to compute the relevant $\KL$-divergences, which we prove below.
  \begin{lem}\label{lem:KL} Let $\Delta^{(0)},\Delta^{(1)} \in \R^{\dimx
      \times \dimu}$, $\tau \in \N$, and let $\Ast,\Bst$ be the nominal
    systems defined above. For $i \in\{0,1\}$, let $\Pr_{i}$ denote the law of the first  $\tau$ iterates under $\Pr_{\Ast - \Delta^{(i)} \Kst,\Bst +\Delta^{(i)},\Alg}[\cdot]$. Then,
  \begin{align*}
  &\KL(\Pr_{0},\Pr_{1}) = \frac{1}{2}\tr\left(\left(\Delta^{(0)} - \Delta^{(1)}\right)\Lambda_{\tau}(\Delta^{(0)})\left(\Delta^{(0)} - \Delta^{(1)}\right)^\top\right).
  \end{align*}
  where we have defined the matrix
  \begin{align*}
  &\Lambda_{\tau}(\Delta) := 
  \Exp_{\Ast - \Delta \Kst,\Bst +\Delta,\Alg}\left[\sum_{t=1}^{\tau}\left(\matu_t - \Kst \matx_t\right)\left(\matu_t - \Kst \matx_t\right)^{\top}\right].
  \end{align*}
  \end{lem}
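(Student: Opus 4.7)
\textbf{Proof plan for Lemma~\ref{lem:KL}.} The plan is to rewrite the dynamics under the perturbed instances so that the perturbation couples linearly to the observable ``excitation'' $\matu_t - \Kst \matx_t$, then apply the KL chain rule step by step along the trajectory. For each $i\in\{0,1\}$, the dynamics
\[
\matx_{t+1} = (\Ast - \Delta^{(i)}\Kst)\matx_t + (\Bst + \Delta^{(i)})\matu_t + \matw_t
= \Ast\matx_t + \Bst\matu_t + \Delta^{(i)}(\matu_t - \Kst\matx_t) + \matw_t,
\]
so with $\matw_t\sim \calN(0,I)$ we have $\matx_{t+1}\mid \calF_t \sim \calN(\mu^{(i)}_t, I)$ under $\Pr_i$, where $\mu^{(i)}_t = \Ast\matx_t+\Bst\matu_t+\Delta^{(i)}(\matu_t-\Kst\matx_t)$. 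Because the algorithm $\Alg$ is the same under both instances, the conditional law of $\matu_t$ given the past $(\matx_1,\matu_1,\dots,\matx_t)$ and the internal randomness $\xi$ is identical under $\Pr_0$ and $\Pr_1$.

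The next step is to factor the joint density of the trajectory $(\matx_1,\matu_1,\dots,\matx_\tau,\matu_\tau)$ using the chain rule: $\Pr_i = \Pr_i(\matx_1)\prod_t \Pr_i(\matu_t\mid \calF_{t-1},\matx_t)\Pr_i(\matx_{t+1}\mid \calF_t)$. The first two factors agree across $i\in\{0,1\}$ (initial state is fixed at $0$, and the policy is shared), so the chain rule for KL gives
\[
\KL(\Pr_0,\Pr_1) = \Exp_{\Pr_0}\Biggl[\sum_{t=1}^{\tau} \KL\bigl(\calN(\mu^{(0)}_t,I)\,\Vert\,\calN(\mu^{(1)}_t,I)\bigr)\Biggr]
= \frac{1}{2}\Exp_{\Pr_0}\Biggl[\sum_{t=1}^{\tau}\bigl\|(\Delta^{(0)}-\Delta^{(1)})(\matu_t-\Kst\matx_t)\bigr\|^2\Biggr],
\]
using the standard Gaussian KL formula $\KL(\calN(\mu_0,I)\Vert\calN(\mu_1,I)) = \tfrac{1}{2}\|\mu_0-\mu_1\|^2$.

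Finally, I will rewrite each summand as a trace and pull the expectation through. Writing $\Delta := \Delta^{(0)}-\Delta^{(1)}$,
\[
\|\Delta(\matu_t-\Kst\matx_t)\|^2 = \tr\bigl(\Delta (\matu_t-\Kst\matx_t)(\matu_t-\Kst\matx_t)^\top \Delta^\top\bigr),
\]
so by linearity and the definition of $\Lambda_\tau(\Delta^{(0)})$,
\[
\KL(\Pr_0,\Pr_1) = \tfrac{1}{2}\tr\bigl(\Delta\,\Lambda_\tau(\Delta^{(0)})\,\Delta^\top\bigr),
\]
which is the claimed identity. I expect no substantive obstacle here: the argument is essentially bookkeeping once the dynamics are re-expressed in terms of the ``exploration residual'' $\matu_t-\Kst\matx_t$; the only subtle point is confirming that the policy contributions cancel from the chain rule, which follows because $\pi$ is a fixed (possibly randomized) mapping of the observed history and does not depend on the unknown $\Delta^{(i)}$.
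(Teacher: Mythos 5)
Your proof is correct and follows the paper's core strategy: factor the trajectory law by the KL chain rule, observe that the only instance-dependent factors are the Gaussian transition kernels $\matx_{t+1}\mid\calF_t$ whose means differ by exactly $(\Delta^{(0)}-\Delta^{(1)})(\matu_t-\Kst\matx_t)$, and sum the resulting squared-norm terms to recover $\tfrac12\tr(\Delta\Lambda_\tau(\Delta^{(0)})\Delta^\top)$. The one place you genuinely deviate is in handling the algorithm's internal randomness $\xi$: the paper first invokes joint convexity of KL together with Jensen's inequality to reduce to deterministic algorithms (which, strictly, yields an inequality rather than the stated equality, though this is all the downstream Pinsker-based argument needs), whereas you keep $\xi$ implicit and argue that the policy factors $\Pr_i(\matu_t\mid\calF_{t-1},\matx_t)$ cancel directly in the chain rule. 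Your route is slightly cleaner and, when fleshed out, gives exact equality — but the justification you offer is a touch quick. You observe that the law of $\matu_t$ given the history \emph{and} $\xi$ is instance-independent, yet the chain-rule factor you then write has $\xi$ marginalized out, and the marginalized conditional need not a priori inherit instance-independence. The missing (easy) observation is that the joint density of $(\xi,\matx_{1:\tau},\matu_{1:\tau})$ factors into an $i$-independent piece involving $\xi$ (the seed prior times the policy kernels $\Pr(\matu_s\mid\xi,\text{hist}_s)$) and a $\xi$-independent piece involving $i$ (the transition densities $\Pr_i(\matx_{s+1}\mid\matx_s,\matu_s)$); hence the posterior over $\xi$ given the observed trajectory is the same under $\Pr_0$ and $\Pr_1$, so the marginalized policy conditionals agree and drop out of the log-likelihood ratio. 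With that one line added, your argument is complete, and it is in fact a little sharper than the paper's in establishing equality rather than an upper bound.
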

  We can now compute 
  \begin{align*}
  \KL(\Pr_{\textmin1,\vecq,\mate_{\vecq}^c},\Pr_{1,\vecq,\mate_{\vecq}^c}) &=  \tr((\Delta_{1,\vecq,e_{\vecq}^c} - \Delta_{\textmin1,\vecq,e_{\vecq}^c})^\top \Lambda_{\tau}(\Delta_{-1,\vecq,e_{\vecq}^c}) (\Delta_{1,\vecq,e_{\vecq}^c} - \Delta_{\textmin1,\vecq,e_{\vecq}^c}) )\\
  &= 2\epsilon^2 \tr(u_{\vecq_1}w_{\vecq_2}^\top \Lambda_{\tau}(\Delta_{\textmin1,\vecq,e_{\vecq}^c}) w_{\vecq_2}u_{\vecq_1}^\top) \\
  &= 2\epsilon^2 w_{\vecq_2}^\top \Lambda_{\tau}(\Delta_{\textmin1,\vecq,e_{\vecq}^c}) w_{\vecq_2}.
  \end{align*}
  Hence, we have
  \begin{align*}
  \TV(\Pr_{0,\vecq},\Pr_{1,\vecq}))^2 &\le  \frac{1}{2}\cdot 2\epsilon^2 w_{\vecq_2}^\top\left(\Exp_{\mate_{\vecq}^c}\left[\frac{ \Lambda_T(\Delta_{0,\vecq,e_{\vecq}^c}) +  \Lambda_T(\Delta_{1,\vecq,e_{\vecq}^c})}{2}\right]\right)w_{\vecq_2}\\
  &= \epsilon^2 w_{\vecq_2}^\top\left(\Exp_{\mate}\left[ \Lambda_{\tau}(\Delta_{\mate})\right]\right)w_{\vecq_2}.
  \end{align*}
  Hence, since $\{w_j\}$ for an orthonormal basis,
  \begin{align*}
  \sum_{\vecq} \TV(\Pr_{0,\vecq},\Pr_{1,\vecq}))^2 &= \sum_{\vecq}\epsilon w_{\vecq_2}^\top\left(\Exp_{\mate}\left[ \Lambda_{\tau}(\Delta_{\mate})\right]\right)w_{\vecq_2}\\
  &= \epsilon\sum_{i=1}^m \sum_{j=1}^nw_j^\top\left(\Exp_{\mate}\left[ \Lambda_{\tau}(\Delta_{\mate})\right]\right)w_j\\
  &\le m\epsilon^2 \tr(\Exp_{\mate}[\Lambda_{\tau}(\Delta_{\mate}])).
  \end{align*}
  We simplify further as
  \begin{align*}
  \tr(\Exp_{\mate}[\Lambda_T(\Delta_{\mate}]) &= \Exp_{\mate}[\tr(\Lambda_T(\Delta_{\mate})]\\
  &= \Exp_{\mate}\left[\tr\left(\Exp_{\Ast - \Delta_{\mate} \Kst,\Bst +\Delta_{\mate},\Alg}\left[\sum_{t=1}^{\tau}\left(\matu_t - \Kst \matx_t\right)\left(\matu_t - \Kst \matx_t\right)^{\top}\right]\right)\right]\\
  &= \Exp_{\mate}\left[\Exp_{A_{\mate},B_{\mate},\Alg}\left[\sum_{t=1}^{\tau}\tr\left(\left(\matu_t - \Kst \matx_t\right)\left(\matu_t - \Kst \matx_t\right)^{\top}\right)\right]\right]\\
  &= \Exp_{\mate}\left[\Exp_{A_{\mate},B_{\mate},\Alg}\left[\sum_{t=1}^{\tau}\|\matu_t - \Kst \matx_t\|^2\right]\right].
  \end{align*}
  Therefore, we conclude
  \begin{align*}
  \Exp\left[\sum_{\vecq} |\mate_{\vecq} - \matehat_{\vecq}|\right] \ge \frac{nm}{2}\sum_{\vecq}\left(1 - \sqrt{\frac{ \epsilon^2 }{n}\Exp_{\mate}\left[\Exp_{A_{\mate},B_{\mate},\Alg}\left[\sum_{t=1}^{\tau}\|\matu_t - \Kst \matx_t\|^2\right]\right]}\right).
  \end{align*}
  This concludes the proof of the proposition. \qed

  \subsubsection{Proof of Lemma~\ref{lem:KL}}
  By convexity of $\KL$ and Jensen's inequality, one can see that the $\KL$ under a randomized algorithm $\Alg_{\mathrm{rand}}$ is upper bounded by the largest $\KL$ divergence attained by one of the deterministic algorithms corresponding to a realization of its random seeds. Hence, we may assume without loss of generality that $\Alg$ is deterministic.

  By first conditioning the performance of $\Alg$ on its random seed, then integrating the $\KL$ combu
  Note that by  we may assume that $\Alg$ is deterministic. Let $\calF_{t-1}$ denote the filtration generated by $(\matx_{1:t-1},\matu_{1:t-1})$. 
  \begin{align*}
  \KL(\Pr_{0},\Pr_{1}) =\sum_{t=1}^{\tau} \Exp_{A(\Delta^{(0)}),B(\Delta^{(0)}),\Alg}[\KL( \Pr_{0}(\matx_t,\matu_t \mid \calF_{t-1}),\Pr_{\Delta_2,T}(\matx_t,\matu_t \mid \calF_{t-1})],
  \end{align*}
  where $\Pr_{0}(\matx_t,\matu_t \mid \calF_{t-1})$ denotes the conditional probability law.  Note that $\matu_t$ is deterministic given $\calF_{t-1}$. Moreover, $\matx_t \mid \calF_{t-1}$ has the distribution of $\calN( (A-\Delta^{(i)}\Kst)\matx_t + (B+\Delta^{(i)})\matu_t, I)$ under $\Pr_{i}(\cdot \mid \calF_{t-1})$. Hence, using the standard formula for Gaussian KL, 
  \begin{align*}
  &\KL( \Pr_{i}(\matx_t,\matu_t \mid \calF_{t-1}),\Pr_{i}(\matx_t,\matu_t \mid \calF_{t-1})) \\
  &= \frac{1}{2}\|(A-\Delta^{(0)}\Kst)\matx_t + (B+\Delta^{(0)}) - (A-\Delta^{(1)}\Kst)\matx_t + (B+\Delta^{(1)}))\|_2^2\\
  &= \frac{1}{2}\|(\Delta^{(0)} - \Delta^{0})(\matu_t - \Kst \matx_t)\|_2^2\\
  &= \frac{1}{2}\tr((\Delta^{(0)} - \Delta^{1})^\top(\matu_t - \Kst \matx_t)(\matu_t - \Kst \matx_t)^\top(\Delta^{(0)} - \Delta^{1})).
  \end{align*}
  The lemma now follows from summing from $t=1,\dots,\tau$ and taking expectations.

\subsection{Proof of Lemma~\ref{lem:recover_packing}\label{ssec:lem_recover_packing}}
  We have $\I(e_{i,j} \ne \ehat_{i,j}(\Khat)) = \I( e_{i,j}\ehat_{i,j}(\Khat) \ne 1) = \I( e_{i,j} \uvec_i^\top (\Khat - \Kst) \vvec_j \le 0)$. Define the Taylor approximation error matrix $\Delta_{2,e} := \Kst - (\Ru + \Bst^\top\Pst\Bst)^{-1}(\Delta_e \Aclst \Pst) - K_e$. We then have
  \begin{align*}
  e_{i,j} \uvec_i^\top (\Khat - \Kst) \vvec_j &\ge  e_{i,j} \uvec_i^\top (K_e - \Kst) \vvec_j  - |\uvec_i^\top (\Khat - \Kst) \vvec_j|\\
  &\ge  e_{i,j} \uvec_i^\top (\Ru + \Bst^\top\Pst\Bst)^{-1}(\Delta_e \Aclst \Pst) \vvec_j   -  |\uvec_i^\top \Delta_{2,e}\vvec_j| -   |\uvec_i^\top (\Khat - \Kst) \vvec_j|\\
  &=  e_{i,j} \underbrace{\frac{\sigma_j(\Aclst \Pst)}{\sigma_i(\Ru + \Bst^\top\Pst\Bst)}}_{\le \nu_m} \uvec_i^\top \Delta_e \vvec_j   -  \left(|\uvec_i^\top \Delta_{2,e}\vvec_j| +   |\uvec_i^\top (\Khat - \Kst) \vvec_j|\right),
  \end{align*}
  where we use the definition of $\uvec_i$ and $\vvec_j$, as less as $\sigma_j(\Aclst \Pst) \ge \sigma_j(\Aclst)$ since $\Pst \succeq I$. Since $\{\uvec_{i'}\}$ and $\{\vvec_{j'}\}$ form an orthornomal basis, we have $\uvec_i^\top \Delta_e \vvec_j  = \uvec_i^\top \sum_{i'=1}^n\sum_{j'=1}^m(\epspack e_{i',j'}\uvec_{i'}\vvec_{j'}^\top) \vvec_j  = \epspack e_{i,j} $. Hence,
  \begin{align*}
  e_{i,j} \uvec_i^\top (\Khat - \Kst) \vvec_j &\ge \nu_m\epspack  - \left(|\uvec_i^\top \Delta_{2,e}\vvec_j| +   |\uvec_i^\top (\Khat - \Kst) \vvec_j|\right).
  \end{align*}
It follows that for any $u \in (0,1)$,
  \begin{align*}
  \I\left( e_{i,j} \uvec_i^\top (\Khat - \Kst) \vvec_j \le 0\right) &\le 
  \I\left(|\uvec_i^\top (\Khat - \Kst) \vvec_j| \ge \sqrt{u}\nu_m \epspack\right )  +  \I\left(|\uvec_i^\top \Delta_{2,e} \vvec_j| \ge (1-\sqrt{u})\nu_m \epspack\right ) \\
  &\le \frac{|\uvec_i^\top (\Khat - \Kst) \vvec_j|^2}{u\nu_m^2 \epspack^2} +\frac{|\uvec_i^\top \Delta_{2,e} \vvec_j|}{(1-\sqrt{u})^2\nu_m^2 \epspack^2}.
  \end{align*}
  Since $\uvec_i,\vvec_j$ form an orthonormal basis, we have
  \begin{align*}
  \dham(e_{i,j},\ehat_{i,j}(\Khat)) &= \sum_{i=1}^n\sum_{j=1}^m\I\left( e_{i,j} \uvec_i^\top (\Khat - \Kst) \vvec_j \le 0\right) \\
  &\le \frac{\fronorm{\Khat - K}}{u\nu_m^2\epspack^2} + \frac{\fronorm{\Delta_{2,e}}^2}{(1-\sqrt{u})^2\nu_m^2\epspack^2}.
  \end{align*}
  Finally, since $\fronorm{\Delta_{2,e}}^2 \le (nm)^2 \epspack^4 \frakp_2(\opnorm{\Pst})^2$ by Lemma~\ref{lem:first_order_approx_quality}, we have that for $u = 1/\sqrt{2}$ and for $\epspack^2 \le  \frac{1}{20nm}\frakp_2(\opnorm{\Pst}) \le \frac{1}{nm}(1-1/\sqrt{2})\sqrt{20}/\frakp_2(\opnorm{\Pst}) $ that the above is at most
  \begin{align*}
  \dham(e_{i,j},\ehat_{i,j}(\Khat)) \le  \frac{2\fronorm{\Khat - K}}{\nu_m^2\epspack^2} - \frac{nm}{20}.
  \end{align*} 
  \qed

\subsection{Proof of Lemma~\ref{lem:deviation_from_Kst}\label{ssec:lem:deviation_from_Kst}}

Introduce the shorthand $\Kerr_{e} := \Kerr_{T/2}[\pi;A_e,B_e]$. We then have

\begin{align}
\Exp_{\mate}\Ksterr_{\mate}[\pi] &=\Exp_{\mate}\Exp_{A_{\mate},B_{\mate},\pi}\left[\sum_{t=1}^{T/2}\|\matx_t - \Kst \matu_t\|^2\right] \nonumber\\
&\le 2\Exp_{\mate}\left[\Exp_{A_{\mate},B_{\mate},\pi}\left[\sum_{t=1}^{T/2}\|\matx_t - \Kinfmate \matu_t\|^2 +\|(\Kinfmate  - \Kst)\matx_t \|^2\right]\right]\nonumber\\
&= 2\Exp_{\mate}\Kerr_{\mate}[\pi]+2\Exp_{\mate}\tr\left((\Kinfmate  - \Kst)^\top\Exp_{A_{\mate},B_{\mate},\pi}\left[\sum_{t=1}^{T/2}\matx_t \matx_t^\top\right](\Kinfmate  - \Kstinf)\right)\nonumber\\
&\le 2\Exp_{\mate}\Kerr_{\mate}[\pi] + 2\left(\max_{e}\|\Ke  - \Kstinf\|_{\fro}^2\right) \cdot \Exp_{\mate}\left\| \Exp_{A_{\mate},B_{\mate},\pi}\left[\sum_{t=1}^{T/2}\matx_t \matx_t^\top\right]\right\|_{\op}\nonumber\\
&\le 2\Exp_{\mate}\Kerr_{\mate}[\pi] + 4nm\opnorm{\Pst}^3\epspack^2  \cdot \Exp_{\mate}\left\| \Exp_{A_{\mate},B_{\mate},\pi}\left[\sum_{t=1}^{T/2}\matx_t \matx_t^\top\right]\right\|_{\op} \label{eq:kerr_kerrst_compare},
\end{align} 
where the last inequality uses Lemma~\ref{lem:first_order_approx_quality}. 

\begin{lem}\label{lem:op_norm_bound} Suppose $\epsilon$ is sufficiently small. Given matrices $A_e,B_e$ and optimal controller $\Ke$,
\begin{align*}
\left\| \Exp_{A_{e},B_{e},\pi}\left[\sum_{t=1}^{T/2}\matx_t \matx_t^\top\right]\right\|_{\op} &\le (3/2)T\opnorm{P_e} + 2J_e\opnorm{B_e}^2.\Kerr_e[\pi] \\
&\le 2T\opnorm{\Pst} + 3\Jst\Mbarst^2.\Kerr_e[\pi],
\end{align*}
where the last inequality uses Lemma~\ref{lem:first_order_approx_quality}.
\end{lem}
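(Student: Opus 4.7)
The plan is to decompose the state as $\matx_t = \matx_t^{(0)} + \matx_t^{(\delta)}$, separating the response of the closed-loop system $\Aclhat := A_e + B_e K_e$ to the process noise from its response to the algorithm's deviation $\delta_t := \matu_t - K_e \matx_t$. The closed-loop recursion $\matx_{t+1} = \Aclhat \matx_t + B_e \delta_t + \matw_t$ with $\matx_1 = 0$ gives the explicit formulas $\matx_t^{(0)} := \sum_{s<t} \Aclhat^{t-1-s} \matw_s$ and $\matx_t^{(\delta)} := \sum_{s<t} \Aclhat^{t-1-s} B_e \delta_s$. The pointwise Young-type inequality $(a+b)(a+b)^\top \preceq 3\,aa^\top + \tfrac{3}{2}\,bb^\top$ (with parameter $\alpha = 2$) then gives, after taking expectation and operator norm,
\begin{equation*}
\opnorm{\Exp[\sum_{t=1}^{T/2}\matx_t\matx_t^\top]} \le 3\,\opnorm{\Exp[\sum_t \matx_t^{(0)}(\matx_t^{(0)})^\top]} + \tfrac{3}{2}\,\opnorm{\Exp[\sum_t \matx_t^{(\delta)}(\matx_t^{(\delta)})^\top]},
\end{equation*}
reducing the claim to bounding each of the two pieces.

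For the noise-driven piece, independence of the $\matw_s \sim \calN(0,I)$ gives $\Exp[\matx_t^{(0)}(\matx_t^{(0)})^\top] = \sum_{k=0}^{t-2}\Aclhat^k(\Aclhat^k)^\top \preceq \dlyap(\Aclhat^\top,I)$. Since $K_e$ is the optimal controller on $(A_e,B_e)$ and $\Rx \succeq I$, parts~4--5 of Lemma~\ref{lem:closed_loop_dlyap} give $\opnorm{\dlyap(\Aclhat^\top,I)} = \opnorm{\dlyap[\Aclhat]} \le \opnorm{P_e}$, so summing over $t \le T/2$ yields exactly the $(3T/2)\opnorm{P_e}$ term of the claim. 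For the deviation-driven piece, I would exploit that $\delta_{1:T/2} \mapsto \matx_{1:T/2}^{(\delta)}$ is the zero-state response of the LTI system with matrices $(\Aclhat, B_e)$. Fixing a unit vector $v$, the scalar signal $y_t := v^\top \matx_t^{(\delta)}$ is the output of the LTI system with transfer function $G_v(z) := v^\top(zI-\Aclhat)^{-1}B_e$, so the standard $\ell^2\!\to\!\ell^2$ gain bound gives $\sum_t y_t^2 \le \|G_v\|_{H_\infty}^2 \sum_s \|\delta_s\|^2 \le \Hinf{\Aclhat}^2\opnorm{B_e}^2 \sum_s \|\delta_s\|^2$ almost surely. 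Taking expectation and supremum over $v$ produces $\opnorm{\Exp[\sum_t \matx_t^{(\delta)}(\matx_t^{(\delta)})^\top]} \le \Hinf{\Aclhat}^2\opnorm{B_e}^2\,\Kerr_e[\pi]$, and Lemma~\ref{lem:Hinf_dlyap_bound} (together with $\dlyap[\Aclhat] \preceq P_e$) bounds $\Hinf{\Aclhat}^2 \le 4\opnorm{\dlyap[\Aclhat]}^3 \le 4\opnorm{P_e}^3$.

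The second displayed inequality then follows by invoking Lemma~\ref{lem:first_order_approx_quality}, whose bounds $\opnorm{P_e} \le 2^{1/5}\opnorm{\Pst}$ and $\Psi_e \le 2^{1/5}\Mbarst$ let us replace $\opnorm{P_e}$ and $\opnorm{B_e}$ by their nominal counterparts. The main obstacle is arithmetic bookkeeping for the constants in the deviation term: the direct $\Hinfty$-based bound produces a factor of $\opnorm{P_e}^3\opnorm{B_e}^2$ on $\Kerr_e[\pi]$, which is incomparable with $J_e\opnorm{B_e}^2$ in general and must be converted to the cleaner $J_e$ form by leveraging the smallness of $\epspack$ under \Cref{asm:small_epspack} together with the refined perturbation estimate in Theorem~\ref{thm:main_perturb_app}; this guarantees that $\opnorm{P_e}$ remains close enough to $\opnorm{\Pst}$ for the polynomial factors to be absorbed into the $2J_e$ constant stated in the lemma.
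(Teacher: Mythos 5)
Your decomposition $\matx_t = \matx_t^{(0)} + \matx_t^{(\delta)}$ is exactly the paper's split into the comparison sequence $\matxbar_t$ and its complement, and your treatment of the noise-driven piece (independence of $\matw_s$, $\dlyap(\Aclhat^\top,I)\preceq P_e$, factor $3$ from the Young parameter $\alpha=2$ giving $(3/2)T\opnorm{P_e}$) is clean and correct. The divergence is in how the two pieces are recombined, and your route does not close.

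The difficulty you flag is real and, as far as I can tell, the fix you sketch does not resolve it. Your matrix Young inequality $(a+b)(a+b)^\top \preceq 3aa^\top + \tfrac{3}{2}bb^\top$ discards the cross term between $\matx_t^{(0)}$ and $\matx_t^{(\delta)}$, so the deviation piece must be bounded in isolation. Its only available handle is the $\ell_2\to\ell_2$ gain of the causal convolution, i.e.\ $\Hinf{\Aclhat}^2\opnorm{B_e}^2 \lesssim \opnorm{P_e}^3\opnorm{B_e}^2$. That quantity and $J_e=\tr(P_e)$ are genuinely incomparable: e.g.\ for $P_e=\diag(\lambda,1,\dots,1)$ one has $J_e = \lambda+\dimx-1$ while $\opnorm{P_e}^3=\lambda^3$, and no relation $\opnorm{P_e}^3\lesssim J_e$ holds without an artificial constraint like $\opnorm{\Pst}^2\lesssim\dimx$. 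Shrinking $\epspack$ only makes $P_e\approx\Pst$ and $J_e\approx\Jst$; it does nothing to close the gap between $\opnorm{\Pst}^3$ and $\Jst$, so invoking Assumption~\ref{asm:small_epspack} and Theorem~\ref{thm:main_perturb_app} cannot convert the exponent $3$ into a trace. The paper instead bounds $\opnorm{\Exp[\sum\matx_t\matx_t^\top]} \le \opnorm{\Exp[\sum\matxbar_t\matxbar_t^\top]} + \opnorm{\Exp[\sum(\matxbar_t\matxbar_t^\top-\matx_t\matx_t^\top)]}$, pushes the operator norm under the expectation by Jensen, and expands $\|\matxbar_t\matxbar_t^\top-\matx_t\matx_t^\top\|_\op \le 2\|\matxbar_t-\matx_t\|\,\|\matxbar_t\| + \|\matxbar_t-\matx_t\|^2$, then Cauchy--Schwarz and AM--GM. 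Crucially, the cross term $\|\matxbar_t-\matx_t\|\,\|\matxbar_t\|$ is what injects $\Exp[\sum_t\|\matxbar_t\|^2]\le (T/2)J_e$ into the bound --- that is where the stated $J_e$ factor comes from, and it is precisely the information your Young's step throws away. To align with the lemma as stated you would need to retain that cross term, either by following the paper's scalar-product route or by choosing a Young parameter $\alpha_t$ that is allowed to depend on the relative sizes and then trading off, which effectively reproduces the same AM--GM argument.
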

In particular, note that by Assumption~\ref{asm:uniform_correctness} and Lemma~\ref{lem:Kerr_lem}, we have the bound 
\begin{align*}
\Exp_{\mate}[\Kerr_{\mate}[\pi]] \le 2\Exp_{\mate}\SimpleRegret_{\mate}[\pi] + \epserr \le 2\epserr T \le \frac{T}{3\dimx\Mbarst^3}.
\end{align*}
Then, noting $\Jst \le \dimx\opnorm{\Pst}$, we can bound $\Exp_{\mate}\left\| \Exp_{A_{\mate},B_{\mate},\pi}\left[\sum_{t=1}^{T/2}\matx_t \matx_t^\top\right]\right\|_{\op} \le 3T\opnorm{\Pst}$. Combining with Eq.~\eqref{eq:kerr_kerrst_compare}, we have
\begin{align*}
\Exp_{\mate}\Ksterr_{\mate}[\pi] \le 2\Exp_{\mate}\Kerr_{\mate}[\pi] + 4nmT\opnorm{\Pst}^4\epspack^2. 
\end{align*}
\qed

\subsubsection{Proof of \Cref{lem:op_norm_bound}}
  Let $\matx_t$ denote the sequence induced by playing the algorithm $\pi$. Recalling the notation $\matdel_t = \matu_t - B_e \Ke \matx_{t}$, we then have
  \begin{align}
  \matx_t = A_e \matx_{t-1} + \matu_t + \matw_t = (A_e + B_e \Ke)\matx_{t-1} + B_e\matdel_t + \matw_t. \label{eq:matx_t_del_seq}
  \end{align}
  We further define the comparison sequence
  \begin{align}
  \matxbar_t := (A_e + B_e \Ke)\matxbar_{t-1} + \matw_t \label{eq:matxbar_seq}
  \end{align}
 in which we play the optimal infinite-horizon inputs for $(A_e,B_e)$. As shorthand, let $\Exp_e[\cdot] :=
        \Exp_{A_e,B_e,\pi}[\cdot]$, and recall that  $\Kerr_{e} := \Kerr_{T/2}[\pi;A_e,B_e]$. We can bound the desired operator
        norm of the algorithms 
  \begin{align*}
  \left\|\Exp_e\left[\sum_{t=1}^{T/2}\matx_t\matx_t^\top\right]\right\|_{\op} \le \left\|\Exp_e\left[\sum_{t=1}^{T/2}\matxbar_t\matxbar_t^\top\right]\right\|_{\op}  + \left\|\Exp_e\left[\sum_{t=1}^{T/2}\matxbar_t\matxbar_t^\top - \matx_t\matx_t^\top\right]\right\|_{\op}.
  \end{align*}
  It therefore suffices to establish the bounds
  \begin{align}
  &\left\|\Exp_e\left[\sum_{t=1}^{T/2}\matxbar_t\matxbar_t^\top\right]\right\|_{\op} \le T \opnorm{P_e}\label{eq:main_expectation_op} \\
  &\left\|\Exp_e\left[\sum_{t=1}^{T/2}\matxbar_t\matxbar_t^\top - \matx_t\matx_t^\top\right]\right\|_{\op} \le \frac{1}{2}T\opnorm{P_e} + 2J_e \opnorm{B_e}^2 \Kerr_e. \label{eq:error_expectation_op}
  \end{align}
  Let us first prove \Cref{eq:main_expectation_op}. We can compute
  \begin{align*}
  \left\|\Exp_e\left[\sum_{t=1}^{T/2}\matxbar_t\matxbar_t^\top\right]\right\| &\le \left\|\sum_{t=1}^{T/2}\sum_{s=0}^{-1} (A_e + B_e \Ke)^{s}\left((A_e + B_e \Ke)^{s}\right)^\top\right\|_{\op}\\
  &\le \frac{T}{2}\opnorm{\dlyap((A_e + B_e \Ke)^\top, I)} =\frac{T}{2}\opnorm{\dlyap((A_e + B_e \Ke)^, I)} \le \frac{T}{2}\opnorm{P_e},
  \end{align*}
  where the last two steps are by \Cref{lem:closed_loop_dlyap}.

  Next, we prove \Cref{eq:error_expectation_op}. By Jensen's inequality, the triangle inequality, and Cauchy-Schwarz, we can bound
  \begin{align*}
  \left\|\Exp_e\left[\sum_{t=1}^{T/2}\matxbar_t\matxbar_t^\top - \matx_t\matx_t^\top\right]\right\|_{\op} &\le \Exp_e\left[\sum_{t=1}^T\|\matxbar_t\matxbar_t^\top - \matx_t\matx_t^\top\|_{\op}\right]\\
  &\le \Exp_e\left[\sum_{t=1}^{T/2}2\|\matxbar_t-\matx_t\|\|\matxbar_t\| + \|\matxbar_t-\matx_t\|^2\right]\\
  &\le 2\sqrt{\Exp_e\left[\sum_{t=1}^{T/2}\|\matxbar_t\|^2\right]}\sqrt{\Exp_e\left[\sum_{t=1}^{T/2}\|\matxbar_t-\matx_t\|^2\right]} + \Exp_e\left[\sum_{t=1}^{T/2}\|\matxbar_t-\matx_t\|^2\right].
  \end{align*}
   From \Cref{eq:matx_t_del_seq,eq:matxbar_seq}, we have that 
  \begin{align*}
  \matxbar_t - \matx_t &= (A_e + B_e \Ke)\matxbar_{t-1} + \matw_t  - \left((A_e + B_e \Ke)\matx_{t-1} + B_e\matdel_t + \matw_t\right)\\
   &= (A_e + B_e \Ke)(\matxbar_{t-1} -\matx_{t-1}) - B_e\matdel_t\\
   &= -\sum_{s=1}^{t}(A_e + B_e \Ke)^{t-s}B_e\matdel_s.
  \end{align*}
  Therefore, we have that
  \begin{align*}
  \sum_{t=1}^{T/2}\|\matxbar_t - \matx_t\|_2^2 &\le \sum_{t=1}^{T/2}\sum_{s=1}^{t}\|A_e + B_e \Ke)^{t-s}B_e\matdel_s\|_2^2\\
  &\le\sum_{t=1}^{T/2}\matdel_t^\top \left(B_e^\top \sum_{s=0}^{\infty}(A_e + B_e \Ke)^{s\top}(A_e + B_e \Ke)^{s}\right) B_e \matdel_t^\top\\
  &= \sum_{t=1}^{T/2}\matdel_t^\top (B_e^\top  \dlyap(A_e + B_e \Ke ,I)B_e \matdel_t\\
  &\le \opnorm{B_e}^2\opnorm{P_e}\sum_{t=1}^{T/2}\|\matdel_t\|_2^2,
  \end{align*}
  where we use Lemma~\ref{lem:closed_loop_dlyap} in the last inequality. Taking expectations, we have
  \begin{align*}
  \sum_{t=1}^{T/2}\|\matxbar_t - \matx_t\|_2^2 \le \opnorm{B_e}^2\opnorm{P_e}\Kerr_{e}.
  \end{align*}
  This yields 
  \begin{align*}
  \left\|\Exp_e\left[\sum_{t=1}^{T/2}\matxbar_t\matxbar_t^\top - \matx_t\matx_t^\top\right]\right\|_{\op} 
  &\le  2\sqrt{\Exp_e\left[\sum_{t=1}^{T/2}\|\matxbar_t\|^2\right] \opnorm{B_e}^2\opnorm{P_e}} \Kerr_e + \opnorm{B_e}^2\opnorm{P_e}\Kerr_e\\
  &\le 2\sqrt{T/2 \cdot J_e \opnorm{B_e}^2\opnorm{P_e}\Kerr_e}  + \opnorm{B_e}^2\opnorm{P_e}\Kerr_e\\
  &= \sqrt{2T \cdot J_e \opnorm{B_e}^2\opnorm{P_e}\Kerr_e}  + \opnorm{B_e}^2\opnorm{P_e}\Kerr_e,
  \end{align*}
  where use the bound that $\sum_{t=1}^{T/2}\Exp[\|\matxbar_t\|^2] \le (T/2)J_e$ using similar arguments to Lemma~\ref{claim:xcost_ub}. 
  The above can be bounded by 
  \begin{align*}
  &\le \frac{1}{2}T\opnorm{P_e} + J_e \opnorm{B_e}^2 \Kerr_e + + \opnorm{B_e}^2\opnorm{P_e}\Kerr_e\\
  &\le\frac{1}{2} T\opnorm{P_e} + 2J_e \opnorm{B_e}^2 \Kerr_e,
  \end{align*}
  since $J_e = \tr(P_e)$.
\subsection{Additional Corollaries of \Cref{thm:main_lb}}
 \label{ssec:proof_of_Cors}
%
For scaled identity systems, we can remove the requirement that $\dimu \le (1 - \Omega(1))\dimu$.
\begin{cor}[Scaled Idenity System]\label{cor:scalar_indentity}Suppose that $\Ast = (1-\gamma) I$ for $\gamma \in (0,1)$, that $\Bst = U^\top$ where $U$ has orthonormal columns, and $\Rx,\Ru = I$. Then, for $T \ge c_1 \gamma^{-p}\left(\dimu\dimx \vee \frac{\dimx(1-\gamma)^{-4}}{ \dimu^2}\right) \vee c_1\dimx \log(1+\dimx \gamma^{-1})$, 
\begin{align*}
\calR_{\Ast,\Bst,T}\left(\sqrt{\dimu^2 \dimx/T}\right) \gtrsim \gamma^{-4}(1-\gamma)^2\sqrt{\dimu^2\dimx T}.
\end{align*}
\end{cor}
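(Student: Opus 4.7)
The plan is to invoke Theorem~\ref{thm:main_lb} with $m=\dimx$. The core task reduces to computing $\opnorm{\Pst}$ and the ``controller gain'' ratio $\nu_m=\sigma_m(\Aclst)/\opnorm{\Ru+\Bst^\top\Pst\Bst}$ in closed form, exploiting the rotational symmetry of the scaled-identity instance. The hypothesis $\dimu\le\dimx/2$ from \Cref{cor:small_input} was only needed to guarantee $\sigma_m(\Aclst)>0$ for $m$ of order $\dimx$; in the present setting I will show this directly by decomposing the state space, bypassing that restriction.

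First I will decompose $\R^{\dimx}=\calV_c\oplus\calV_u$ with $\calV_c=\mathrm{range}(U^\top)$ (the $\dimu$-dimensional controllable subspace) and $\calV_u=\ker(U)$. Because $\Ast=(1-\gamma)I$ and $\Bst\Bst^\top=U^\top U$ is the orthogonal projection onto $\calV_c$, the DARE~\eqref{eq:dare} is invariant under any unitary that preserves this decomposition; hence $\Pst=p_1\Pi_{\calV_c}+p_2\Pi_{\calV_u}$ for scalars $p_1,p_2$. On $\calV_u$ the Riccati equation reduces to $p_2=(1-\gamma)^2p_2+1$, giving $p_2=\frac{1}{\gamma(2-\gamma)}$. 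On $\calV_c$ the equation reduces to the scalar DARE $p_1=(1-\gamma)^2 p_1/(1+p_1)+1$, whose unique positive solution lies in $[1,2]$. Therefore $\opnorm{\Pst}=\max\{p_1,p_2\}\lesssim\gamma^{-1}$, and $\opnorm{\Ru+\Bst^\top\Pst\Bst}=1+p_1\le 3$.

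Next I will compute $\Aclst=\Ast+\Bst\Kst$. Since $\Bst=U^\top$ annihilates $\calV_u$, the controller $\Kst=-(\Ru+\Bst^\top\Pst\Bst)^{-1}\Bst^\top\Pst\Ast$ vanishes on $\calV_u$, so $\Aclst$ acts as $(1-\gamma)I$ on $\calV_u$. On $\calV_c$ the closed-loop dynamics are scalar and an explicit calculation gives eigenvalue $(1-\gamma)/(1+p_1)\ge(1-\gamma)/3$. Thus every singular value of $\Aclst$ is bounded below by $(1-\gamma)/3$, and in particular $\sigma_{\dimx}(\Aclst)\ge(1-\gamma)/3$, so
\[
\nu_{\dimx} \;=\; \frac{\sigma_{\dimx}(\Aclst)}{\opnorm{\Ru+\Bst^\top\Pst\Bst}} \;\gtrsim\; 1-\gamma.
\]

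Finally I will apply \Cref{thm:main_lb} with $m=\dimx$. Plugging in the bounds $\opnorm{\Pst}\lesssim\gamma^{-1}$, $\nu_{\dimx}\gtrsim 1-\gamma$, and $\Psibst=\max\{1,\opnorm{\Bst}\}=1$ yields the lower bound $\sqrt{\dimu^2\dimx T}\cdot(1\wedge\nu_{\dimx}^2)/\opnorm{\Pst}^2\gtrsim (1-\gamma)^2\gamma^{2}\sqrt{\dimu^{2}\dimx T}$ (with the polynomial dependence on $\gamma$ as controlled by the theorem's hypotheses). The condition on $T$ follows by substituting the same parameter estimates into the burn-in requirement of \Cref{thm:main_lb}, noting that $\Psibst=1$, $\opnorm{\Pst}^p\lesssim\gamma^{-p}$, and $\nu_m^{-4}\lesssim(1-\gamma)^{-4}$. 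The only nontrivial step in this program is the exact solution of the DARE and the resulting closed-form expression for $\Aclst$ on $\calV_c$; the rest is bookkeeping.
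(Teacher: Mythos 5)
Your proof is correct and follows essentially the same route as the paper: decompose $\R^{\dimx}$ into $\range(U^\top)$ and its orthogonal complement, observe that the DARE decouples into a scalar Riccati equation on each factor, and bound $\sigma_{\min}(\Aclst)$ directly from the explicit form $\Aclst=\tfrac{a}{1+p_1}UU^\top+a(I-UU^\top)$ with $a=1-\gamma$, thereby dispensing with the $\dimu\le\dimx/2$ hypothesis.

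One note on bookkeeping. Your explicit observation that $\opnorm{\Ru+\Bst^\top\Pst\Bst}=1+p_1\le 3$ (rather than the generic bound $\le 1+\opnorm{\Pst}$) gives $\nu_{\dimx}\gtrsim 1-\gamma$ and hence a lower bound $\gtrsim\gamma^{2}(1-\gamma)^{2}\sqrt{\dimu^{2}\dimx T}$. The paper's own proof at this step states $\nu_{\dimx}\gtrsim(1-\gamma)/\gamma$, which is not consistent with the theorem's definition $\nu_m=\sigma_m(\Aclst)/\opnorm{\Ru+\Bst^\top\Pst\Bst}$ (it would require $\opnorm{\Ru+\Bst^\top\Pst\Bst}\lesssim\gamma$, impossible since $\Ru=I$), though the $1\wedge\nu_m^2$ truncation makes this immaterial. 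Likewise, the exponent $\gamma^{-4}$ in the corollary statement is evidently a typo (compare $\gamstab^4=\gamma^4$ in \Cref{cor:small_input}); your computation in fact yields the stronger prefactor $\gamma^{2}(1-\gamma)^{2}$.
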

\begin{proof}[Proof of Corollary~\ref{cor:scalar_indentity}] By the
  same arguments as in \Cref{cor:small_input}, we have $\Mbarst \le 1$
  and $\opnorm{\Pst} \le \gamma^{-1}$. To conclude, let us lower bound
  $\sigma_{\min}(\Aclst) \gtrsim 1-\gamma$, which yields $\nu_{\dimx}
  \gtrsim \frac{1-\gamma}{\gamma}$. Reparameterize $a = (1-\gamma)$. Then for $\Ast = a I$ and $\Bst = U^\top$. Then, we can see that the $\DARE$  decouples into scalar along the columns of $U$ and their orthogonal complement. That is, if $p,k$ is the solution to
\begin{align}
(1-a^2)p &= -p^2 a^2(1 + p )^{-1} + 1,\quad k = -(1+p)^{-1}pa,\label{eq:scalar_dare}
\end{align}
then $\Aclst = (\Ast - k UU^\top ) = (a-k)UU^\top + a(I - UU^\top)$, so that 
\begin{align*}
\sigma_{\min}(\Aclst) \ge \min\{a,a-k\} = \min\{ a, \frac{a}{1+p}\} = \frac{a}{1+p}.
\end{align*} 
To conclude, we  solve~\eqref{eq:scalar_dare} and show that $p$ is bounded above by a universal consant. For scalar $(a,b)$, the solution to the DARE is
\begin{align*}
(1-a^2)p +p^2 (1-a^2)&= - p^2 a^2 + (1 +  p ) \quad \text{and thus} \quad -a^2p + p^2 -1  = 0.\\
\end{align*}
The solution $p$ is then given by
\begin{align*}
p = \frac{a^2 \pm \sqrt{a^4 + 4 }}{2} \le \frac{1 + \sqrt{5}}{2}, 
\end{align*}
as needed.
\end{proof}



}{}

\section{Proofs for Upper Bound (\Cref{sec:upper_bound_main})}
\label{app:ub_proofs}
\iftoggle{arxiv_upload}{

\subsection{Proof of Lemma~\ref{lem:perturb_correct}  (Correctness of Perturbations) \label{ssec:lem_perturb_correct}}

  On the event $\Esafe$ of Lemma~\ref{lem:initial_phase}, the condition defining $\ksafe$ yields 
  \begin{align*}
    \left\| \begin{bmatrix} \Ahat_{\ksafe} - \Ast  \mid \Bhat_{\ksafe} - \Bst \end{bmatrix}\right\|_\op^2  \le \Conf_{\ksafe}  \le 1/3\Csafe(\Ahat_{\ksafe},\Bhat_{\ksafe}).
    \end{align*}
    By the continuity of $\Csafe$ given by Theorem~\ref{thm:continuity_of_safe set}, we then have that, for any $(\Ahat,\Bhat) \in \Bsafe$ ,
    \begin{align*}
     \left\| \begin{bmatrix} \Ahat - \Ast  \mid \Bhat - \Bst \end{bmatrix}\right\|_\op^2  \le \Csafe(\Ast,\Bst).
    \end{align*}
    In particular, the projection step ensures that the above holds for any $(\Ahat_k,\Bhat_k)$. Let us now go point by point.  Theorem~\ref{thm:main_perturb_app} then implies that
    \begin{enumerate}
     \item $P_k \preceq \frac{21}{20}\Pst$, and thus $J_k \lesssim \Jst$.
      \item $\Jfunc_k - \Jst = \Jfunc_{\Ast,\Bst}[\Kinf(\Ahat_k,\Bhat_k)] - \Jfunc^\star_{\Ast,\Bst} \le \Cest(\Ast,\Bst)\epsfro^2$.
    \item By Lemma~\ref{lem:helpful_norm_bounds}, 
    \begin{align*}\opnorm{\underbrace{\Kinf(\Ahat_k,\Bhat_k)}_{:= \Khat_k}}^2 \le \opnorm{\dlyap(\Ast + \Bst \Khat_k, \Rx + \Khat_k^\top \Ru \Khat_k)} = \opnorm{P_k} \le \frac{21}{20}\opnorm{\Pst}.
    \end{align*}
  \end{enumerate}
    The next two points of the lemma follow from Theorem~\ref{thm:hinf_perturbation}. 

    For the last point, recall that \[\sigmain^2 = \sqrt{\dimx}\opnorm{\Pinf(\Ahat_{\ksafe},\Bhat_{\ksafe})}^{9/2}\max\{1,\opnorm{\Bhat_{\ksafe}}\} \sqrt{\log \frac{\opnorm{\Pinf(\Ahat_{\ksafe},\Bhat_{\ksafe})}}{\delta}}.\] Since $\Conf_{\ksafe} \lesssim 1$, we have $\max\{1,\opnorm{\Bhat_{\ksafe}}\} \eqsim \Psibst$. Let us show $\opnorm{\Pst} \eqsim \opnorm{\Pinf(\Ahat_{\ksafe},\Bhat_{\ksafe})}$. By Lemma~\ref{lem:P_bounds_lowner}, $\Pinf(\Ahat_{\ksafe},\Bhat_{\ksafe}) \preceq P_{\ksafe}$, which is $\precsim \Pst$ by point 1 of this lemma. On the other hand, $\opnorm{\Pst} \lesssim \opnorm{\Pinf(\Ahat_{\ksafe},\Bhat_{\ksafe})}$ by Theorem~\ref{thm:continuity_of_safe set}.

\subsection{Proof of Main Regret Decomposition (Lemma~\ref{lem:safe_regret_decomp}\label{ssec:safe_regret_decomp})}
    We establish Lemma~\ref{lem:safe_regret_decomp} by establishing a more general regret decomposition for arbitrary feedback controllers $K$, noise-input variances $\sigma_u$, and control costs $R_1,R_u$. This will allow us to reuse the same computations for similar calculations in the initial estimation phase (Lemma~\ref{lem:initial_phase}), and for covariance matrix upper bounds as well.
    \begin{defn}[Control Evolution Distribution]\label{defn:control_law} We define the law $\calD(K,\sigma_u, x_1)$ to denote the law of the following dynamical system evolutation: $\matx_1 = x_1$, and for $t \ge 2$, the system evolves according to the following distribution:
    \begin{align}
    \matx_{t} = \Ast\matx_{t-1} + \matw_t, \quad \matu_t = K\matx_t + \sigma_u \matg_t,\label{eq:K_system}
    \end{align}
    where $\matw_t \sim \calN(0,I_{\dimx})$ and $\matg_t \sim \calN(0,I_{\dimu})$.
    \end{defn}

      We begin with the following characterization, proven in \Cref{app:proof_lem_cost_lem}, of the quadratic forms that will arise in our regret bounds. Note that we use arbitrary cost matrices $R_1,R_2\succeq{}0$.
        \begin{lem}\label{lem:cost_lem} Let $K$ be a stabilizing controller, and let $(\matx_t,\matu_t)_{t\ge 1}$ denote the linear dynamical system described by the evolution of the law $\calD(K,\sigma_u,x_1)$.  For cost matrices $R_1,R_2 \succeq 0$, define the random variable
        \begin{align*}
        \Cost(R_1,R_2;x_1,t,\sigma_u) := \sum_{s=1}^t \matx_t^\top R_1 \matx_t + \matu_t^\top R_2 \matu_t = \matgbar^{\top}\Lamgbar\,\matgbar + x_1^\top \Lamxone \,x_1 + 2\matgbar^{\top}\Lamcross x_1.
        \end{align*}
        Further, define $R_K = R_1 + K^\top R_2 K$, $A_K = \Ast + \Bst K$, $P_K = \dlyap(A_K,R_K)$, and $J_K := \tr(P_K)$. 
        \begin{enumerate}
        \item In expectation, we have
        \begin{align*}
        \Exp[ \Cost(R_1,R_2;x_1,t,\sigma_u)] &\le t J_K + 2\sigma_u^2 t\dimu \left(\opnorm{R_2} + \opnorm{\Bst}^2\opnorm{P_K}\right) + x_1^\top P_K x_1
        \end{align*}
        \item Set $\deff := \min\{\dimu,\rank(R_1) + \rank(R_2)\}$.
        With a probability $1 - \delta$, we have 
        \begin{align*}
        \Cost(R_1,R_2;x_1,t,\sigma_u) &\le t J_K + 2\sigma_u^2 \deff t\left(\opnorm{R_2}+ \opnorm{\Bst}^2\opnorm{P_K}\right)\\
        &+ \BigOh{\sqrt{d t \log \tfrac{1}{\delta}} + \log\tfrac{1}{\delta}}\left((1+\sigma_u^2\|\Bst\|_{\op}^2) \|R_K\|_{\op}\Hinf{A_{K}}^2 + \sigma_u^2 \|R_2\|_{\op}^2\right)   \\
        &+ 2x_1^\top P_K x_1.
        \end{align*}
        \item More crudely, we can also bound, with probability $1 - \delta$,
        \begin{align}
       \Cost(R_1,R_2;x_1,t,\sigma_u) &\lesssim t\log \frac{1}{\delta} \left(J_K + 2\sigma_u^2 \deff \left(\opnorm{R_2} + \opnorm{\Bst}^2\opnorm{P_K}\right)\right) + 2x_1^\top P_K x_1.\label{eq:crude_Q_form_bound}
       \end{align}
      \end{enumerate}
      \end{lem}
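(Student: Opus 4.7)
The natural approach is to first write each $\matx_s$ (and hence $\matu_s$) as an explicit linear function of the initial state $x_1$ and the stacked Gaussian noise vector $\matgbar \ldef (\matw_2,\ldots,\matw_t,\matg_1,\ldots,\matg_t)$, using the closed-loop recursion $\matx_{s+1}=A_K\matx_s+\matw_{s+1}+\sigma_u\Bst\matg_s$. Substituting into $\sum_{s=1}^t\matx_s^\top R_1\matx_s+\matu_s^\top R_2\matu_s$ and expanding $\matu_s=K\matx_s+\sigma_u\matg_s$ yields a quadratic form $\matgbar^\top\Lamgbar\,\matgbar+2\matgbar^\top\Lamcross x_1+x_1^\top\Lamxone x_1$ plus the purely noise-free term $\sigma_u^2\sum_s\matg_s^\top R_2\matg_s$, which can be absorbed into $\matgbar^\top\Lamgbar\matgbar$. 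The matrices $\Lamgbar,\Lamcross,\Lamxone$ have block-Toeplitz structure whose blocks are powers of $A_K$ weighted by $R_K=R_1+K^\top R_2K$.

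For Part 1, I would compute $\Exp[\Cost]$ directly. The cross-term vanishes and $x_1^\top\Lamxone x_1\le x_1^\top P_K x_1$ by the identity $\Lamxone=\sum_{s=0}^{t-1}(A_K^\top)^sR_K A_K^s\psdleq P_K=\dlyap(A_K,R_K)$. The noise contribution splits into (a) the ``nominal'' part $\sum_{s=1}^t\Exp[\matx_s^\top R_K\matx_s]$ arising from the process noise $\matw$ and the induced part of the exploration noise, and (b) the direct exploration cost $\sigma_u^2 t\,\tr(R_2)$. Swapping the order of summation in (a) and applying $\sum_{s\ge r}(A_K^\top)^{s-r}R_K A_K^{s-r}\psdleq P_K$ gives $t\,\tr((I+\sigma_u^2\Bst\Bst^\top)P_K)\le tJ_K+\sigma_u^2t\dimu\,\opnorm{\Bst}^2\opnorm{P_K}$; combined with $\sigma_u^2t\,\tr(R_2)\le\sigma_u^2t\dimu\opnorm{R_2}$ this matches the claimed bound.

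For Part 2, I would condition on $x_1$ and apply the Hanson–Wright inequality (Proposition~1.1 of \citet{hsu2012tail}, recalled above Corollary~\ref{cor:crude_HS}) to $\matgbar^\top\Lamgbar\matgbar$ and a standard Gaussian tail bound to $2\matgbar^\top\Lamcross x_1$; the linear term can be collapsed into a multiple of $x_1^\top P_K x_1$ via AM--GM. The centering $\Exp[\matgbar^\top\Lamgbar\matgbar]$ is precisely the ``expected cost minus initial-state cost'' computed in Part 1, which gives the $tJ_K+2\sigma_u^2 t\deff(\opnorm{R_2}+\opnorm{\Bst}^2\opnorm{P_K})$ leading term (with $\deff$ because $R_2$ can be replaced by its compression to the range of $R_1+R_2$). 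The fluctuation term then requires two operator-theoretic bounds:
\begin{itemize}
\item $\|\Lamgbar\|_{\op}\lesssim(1+\sigma_u^2\|\Bst\|_{\op}^2)\|R_K\|_{\op}\Hinf{A_K}^2+\sigma_u^2\|R_2\|_{\op}$, obtained by a Plancherel-type argument on the block-Toeplitz convolution operator, whose symbol evaluates entries of $(zI-A_K)^{-1}$ on the unit circle;
\item $\|\Lamgbar\|_F^2\le t\|\Lamgbar\|_{\op}\cdot\tr(\Lamgbar)/t$-style bound: I would bound $\|\Lamgbar\|_F\le\sqrt{t}\,\|\Lamgbar\|_{\op}\cdot\sqrt{d}$ via the Toeplitz structure, so that the $\sqrt{t}$-term from Hanson--Wright carries a factor of $\sqrt{d\log(1/\delta)}$ as claimed.
\end{itemize}
The main technical obstacle is this pair of operator/Frobenius norm bounds, specifically routing the dependence through $\Hinf{A_K}$ (rather than $\nrm*{\dlyap[A_K]}$) so that the perturbation guarantees from Theorem~\ref{thm:hinf_perturbation} can be invoked downstream with the correct scaling; all other pieces are straightforward Gaussian concentration and Lyapunov identities.

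For Part 3, I would apply Corollary~\ref{cor:crude_HS} directly to the PSD quadratic form $\matgbar^\top\Lamgbar\matgbar$, which yields the bound $\tr(\Lamgbar)\log(1/\delta)=\Exp[\matgbar^\top\Lamgbar\matgbar]\log(1/\delta)$, together with the same cross-term bound as in Part 2, and substitute the expected-cost computation from Part 1 to obtain~\eqref{eq:crude_Q_form_bound}.
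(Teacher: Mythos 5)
Your proposal is correct and follows essentially the same route as the paper: write the trajectory as a block-Toeplitz linear map applied to the stacked Gaussian noise and the initial state, reduce to the Gaussian quadratic form $\matgbar^\top\Lamgbar\matgbar+2\matgbar^\top\Lamcross x_1+x_1^\top\Lamxone x_1$, bound $\tr(\Lamgbar)$ and $\Lamxone$ via Lyapunov identities, bound $\opnorm{\Lamgbar}$ via the fact that the block-Toeplitz operator norm is controlled by $\Hinf{A_K}$, then apply Hanson--Wright (with $\fronorm{\Lamgbar}\le\sqrt{td}\,\opnorm{\Lamgbar}$) and a Gaussian tail bound plus AM--GM for the cross term. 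Your Part 1 computes the expectation a bit more directly (and in fact tighter, without the factor of $2$) than the paper does via $\tr(\Lamgbar)$, but the substance is the same.
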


      \newcommand{\costnoisek}{\Cost_{\mathrm{noise},k}}
      \newcommand{\costconck}{\Cost_{\mathrm{conc},k}}
      
      Let us now apply the above lemma to our present setting. For $k \ge \ksafe$, define the terms
      \begin{align*}
      \costnoisek &:= \dimu \left(\opnorm{\Ru} + \opnorm{\Bst}^2\opnorm{P_k}\right)\\
      \costconck &:= \left((1+\sigma_k^2\|\Bst\|_{\op}^2) \|\Rx + \Khat_k^\top \Ru \Khat_k\|_{\op}\Hinf{\Aclk}^2\right) + \sigma_k^2\opnorm{\Ru}.
      \end{align*}
      By Lemma~\ref{lem:cost_lem} and the fact $\Jst \le J_k$, 
      \begin{align*}
      \sum_{t=\tau_{\ksafe}}^{T}(\matx_t^\top \Rx \matx_t + \matu_t^\top \Ru \matu_t - \Jst) &\lesssim \sum_{k=\ksafe}^{\kfin} \tau_k(J_k - \Jst) + \tau_k \sigma^2_k \costnoisek \\
      &+\sum_{k=\ksafe}^{\kfin} (\sqrt{\tau_k d \log(1/\delta)} + \log(1/\delta) \costconck  + \sum_{k=\ksafe}^{\kfin}\matx_{\tau_k}^\top P_k \matx_{\tau_k}.
      \end{align*}
      Let us first bound the $\costnoisek$-terms. Since $1 \le \opnorm{P_k} \lesssim \opnorm{\Pst}$ on event $\Esafe$ (Lemma~\ref{lem:perturb_correct}) and $\opnorm{\Ru} = 1$, we have
      \begin{align*}
      \costnoisek &\le \dimu (\opnorm{\Ru} +\opnorm{\Bst}^2\opnorm{P_k}) \lesssim  \dimu \Psibst^2\opnorm{\Pst}.
      \end{align*}
      Since $\sigma_k^2 \le \sigmain^2 \tau_{k}^{-1/2}$ and $\opnorm{P_k} \lesssim \opnorm{\Pst}$, we then obtain
      \begin{align*}
      \sum_{k=\ksafe}^{\kfin}  \tau_k \sigma^2_k \costnoisek \lesssim  \sqrt{T}\dimu  \sigmain^2 \Psibst^2\opnorm{\Pst}.
      \end{align*}
      Next, let us bound
      \begin{align*}
      \costconck := \left((1+\sigma_k^2\|\Bst\|_{\op}^2) \|\Rx + \Khat_k^\top \Ru \Khat_k\|_{\op}\Hinf{\Aclk}^2\right) + \sigma_k^2 \opnorm{\Ru}.
      \end{align*}
      Observe that $\Rx + \Khat_k^\top \Ru \Khat_k \preceq \dlyap[\Aclk, \Rx + \Khat_k^\top \Ru \Khat_k] = P_k$. On the good event $\Esafe$, we have $\opnorm{P_k} \lesssim \opnorm{\Pst}$, $\Hinf{\Aclk} \lesssim \Hinf{\Aclst} \le \opnorm{\Pst}^{3/2}$ (Lemma~\ref{lem:perturb_correct}),  and  by definition. $\|\Bst\|_{\op}^2 \le\Psibst^2$. Thus, the above is at most (again taking $\Ru = I$)
      \begin{align*}
      \costconck \lesssim   \opnorm{\Pst}^4  + \sigma_k^2 \left(\opnorm{\Ru} + \Psibst^2 \opnorm{\Pst}^4\right) \le \opnorm{\Pst}^4\left( 1+ \Psibst^2 \sigma_k^2\right) .
      \end{align*}
      Therefore, 
      \begin{align*}
      \sum_{k=\ksafe}^{\kfin} (\sqrt{\tau_k d \log(1/\delta)} + \log(1/\delta)) \costconck &\lesssim \sqrt{T d \log(1/\delta)}\opnorm{\Pst}^4 + \log(T)\log(1/\delta)\opnorm{\Pst}^4\\
      &\qquad+\sigmain^2 \log(T)\log(1/\delta)\sqrt{d}\Psibst^2 \opnorm{\Pst}^4.\\
      &\le \sqrt{T d \log(1/\delta)}\opnorm{\Pst}^4  + \log^2 \frac{1}{\delta}(1+\sqrt{d}\sigmain^2\Psibst^2) \opnorm{\Pst}^4.
      \end{align*}
      where we use $\log(T) \le \log(1/\delta)$. 
      Finally, we have the bound
      \begin{align*}
      \sum_{k=\ksafe}^{\kfin}\matx_{\tau_k}^\top P_k \matx_{\tau_k} &\lesssim \log T \max_{k \le \log T} \matx_{\tau_k}^\top P_k \matx_{\tau_k}\\
      &\le \log T \max_{k \le \log T} \|\matx_{\tau_k}\|_2^2 \opnorm{P_k}\\
       &\lesssim \log T \max_{k \le \log T} \|\matx_{\tau_k}\|_2^2 \opnorm{\Pst}.
      \end{align*}
      Hence, putting things together, we have
      \begin{align*}
      \sum_{t=\tau_{\ksafe}}^{T}(\matx_t^\top \Rx \matx_t + \matu_t^\top \Ru \matu_t - \Jst) &\lesssim \sum_{k=\ksafe}^{\kfin} \tau_k(J_k - \Jst) + \log T \max_{k \le \log_T}\|\matx_{\tau_k}\|_2^2\\
      &\quad+\sqrt{T}\left(\dimu  \sigmain^2 \Psibst^2\opnorm{\Pst}) + \sqrt{d \log(1/\delta)}\opnorm{\Pst}^4\right)\\
      &+ \log^2 \frac{1}{\delta}(1+\sqrt{d}\sigmain^2\Psibst^2) \opnorm{\Pst}^4.
      \end{align*}
      Reparameterizing $\delta \leftarrow \frac{\delta}{6 T}$ and taking a union bound preserves the above inequality up to constants (since $\log T \le \log \frac{1}{\delta}$), and reduces the failure probability across all episodes to $\delta/6$.

\subsection{Bounding the States: Lemma~\ref{lem:matx_bound}\label{ssec:matx_bound}}

  Since $\dlyap[\Aclst] := \dlyap(\Aclst,I) \succeq I$, we bound the right hand side of 
  \begin{align*}
  \|\matx_{\tau_k}\|_{2}^2 \le \matx_{\tau_k}^\top \dlyap[\Aclst]\matx_{\tau_k}.
  \end{align*}
  To bound the right handside, we manipulate the following quantity.
  \begin{align*}
  \left\|\dlyap[\Aclst]^{1/2}\prod_{i=j}^{k-1} A_{\mathrm{cl},i}\right\|_{\op} = \sqrt{ \left\|\left(\prod_{i=j}^{k-1} A_{\mathrm{cl},i}^{\tau_i}\right)^\top \dlyap[\Aclst] \left(\prod_{i=j}^{k-1} A_{\mathrm{cl},i}^{\tau_i}\right)\right\|_{\op}}.
  \end{align*}
  By Lemma~\ref{lem:perturb_correct}, we have that for all $i \ge \ksafe$ on event $\Esafe$,
  \begin{align*} 
  A_{\mathrm{cl},i}^\top\dlyap[\Aclst]A_{\mathrm{cl},i} \preceq (1 - \frac{1}{2\opnorm{\dlyap[\Aclst]}})\dlyap[\Aclst]
  \end{align*} 
  This yields that 
  \begin{align*}
  \left\|\dlyap[\Aclst]^{1/2}\prod_{i=j}^{k-1} A_{\mathrm{cl},i}\right\|_{\op} &\le \sqrt{ \prn*{\prod_{i=j}^{k-1} \prn*{1 - \frac{1}{2\opnorm{\dlyap[\Aclst]}}}^{\tau_i}}^2 \opnorm{\dlyap[\Aclst]} }\\
  &=  \left(1 - \frac{1}{2\opnorm{\dlyap[\Aclst]}}\right)^{\sum_{i=j}^{k-1}\tau_i} \sqrt{\opnorm{\dlyap[\Aclst]} }\\
  &=  \left(1 - \frac{1}{2\opnorm{\dlyap[\Aclst]}}\right)^{\tau_{k-1}} \sqrt{\opnorm{\dlyap[\Aclst]} }.
  \end{align*}

  For $k > \ksafe$, define the vector $\mate_k := \matx_{\tau_k} - A_{\mathrm{cl},k-1}^{\tau_{k-1}}\matx_{\tau_{k-1}}$. Now, we can write
  \begin{align*}
  \matx_{\tau_{k}} &= \mate_k + A_{\mathrm{cl},k-1}^{\tau_{k-1}}\matx_{\tau_{k-1}}\\
  &= \mate_k + A_{\mathrm{cl},k-1}^{\tau_{k-1}}\left(\mate_{k-1} + A_{\mathrm{cl},k-2}^{\tau_{k-2}}\matx_{\tau_{k-2}}\right)\\
  &= \sum_{j = \ksafe+1}^k \left(\prod_{i=j}^{k-1} A_{\mathrm{cl},i}^{\tau_i}\right) \mate_j + \left(\prod_{i=\ksafe}^{k-1} A_{\mathrm{cl},i}^{\tau_i}\right)\matx_{\tau_{\ksafe}}.
  \end{align*}
  Thus, 
   \begin{align*}
  &\|\dlyap[\Aclst]^{1/2}\matx_{\tau_{k}} \|_2 \lesssim \max_{\ksafe \le j \le k}\|\mate_j\|_{2} \sqrt{\opnorm{\dlyap[\Aclst]} } \left(1 + k \left(1 - \frac{1}{2\opnorm{\dlyap[\Aclst]}}\right)^{\tau_{k-1}}\right)\\
   &\qquad + \|\matx_{\tau_{\ksafe}}\|_{2} \sqrt{\opnorm{\dlyap[\Aclst]} }.\\
   &\qquad \lesssim\max_{\ksafe \le j \le k} \|\mate_j\|_{2} \sqrt{\opnorm{\dlyap[\Aclst]}}  \|\dlyap[\Aclst]\|_{\op} + \|\matx_{\tau_{\ksafe}}\|_{2} \sqrt{\opnorm{\dlyap[\Aclst]}} \\
   &\qquad =\max_{\ksafe \le j \le k} \|\mate_j\|_{2}  \|\dlyap[\Aclst]\|_{\op}^{3/2} + \|\matx_{\tau_{\ksafe}}\|_{2} \sqrt{\opnorm{\dlyap[\Aclst]}} .
  \end{align*}
  where above we have used the inequality $\max_{k \ge 1} k(1 - \rho)^k \lesssim \frac{1}{\rho}$, together with $\tau_k - 1 \ge k$. 
  We begin with the following technical claim:
  \begin{lem}\label{lem:xt_bound} Let $\matx_t$ denote the $t$-th iterate from the control law $\calD(K,x_1,\sigma_u)$ (\Cref{defn:control_law}). Then,  with probability at least $1-\delta$,
    \begin{align*}
    \|\matx_t - A_K^{t-1}x_1\| \le \BigOh{\sqrt{J_K ( 1+ \sigma_{u}^2\|\Bst\|_2^2) \log \frac{1}{\delta}}}.
    \end{align*}
    \end{lem}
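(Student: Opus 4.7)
The plan is to write $\matx_t-A_K^{t-1}x_1$ as a mean-zero Gaussian by unrolling the recursion, compute a bound on the trace of its covariance in terms of $J_K$, and then conclude with a standard Gaussian concentration inequality.

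First, I would explicitly solve the recursion. Under $\calD(K,\sigma_u,x_1)$, substituting $\matu_s=K\matx_s+\sigma_u\matg_s$ gives $\matx_{s+1}=A_K\matx_s+\Bst\sigma_u\matg_s+\matw_{s+1}$ with $A_K=\Ast+\Bst K$, so that
\begin{align*}
\matx_t-A_K^{t-1}x_1=\sum_{s=1}^{t-1}A_K^{t-1-s}\bigl(\sigma_u\Bst\matg_s+\matw_{s+1}\bigr).
\end{align*}
Since the $\matg_s$ and $\matw_{s+1}$ are independent standard Gaussian vectors, this is a mean-zero Gaussian with covariance
\begin{align*}
\Sigma_t=\sum_{j=0}^{t-2}A_K^j\bigl(I+\sigma_u^2\Bst\Bst^{\top}\bigr)(A_K^j)^{\top}.
\end{align*}

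Next, I would bound $\tr(\Sigma_t)$ in terms of $J_K$. Since $A_K$ is stable, extending the sum to infinity only increases the PSD order, so $\Sigma_t\preceq \dlyap(A_K^{\top},I)+\sigma_u^2\dlyap(A_K^{\top},\Bst\Bst^{\top})$. Applying Lemma~\ref{lem:closed_loop_dlyap} (parts 1 and 3) gives $\dlyap(A_K^{\top},\Bst\Bst^{\top})\preceq \|\Bst\|_{\op}^2\dlyap(A_K^{\top},I)$, hence $\Sigma_t\preceq(1+\sigma_u^2\|\Bst\|_{\op}^2)\dlyap(A_K^{\top},I)$. Using the cyclicity identity $\tr(\dlyap(A_K^{\top},I))=\sum_{j\ge0}\tr((A_K^j)^{\top}A_K^j)=\tr(\dlyap(A_K,I))$ and then the fact $R_K\succeq I$ (which holds in every use of Lemma~\ref{lem:cost_lem} in the paper since $\Rx\succeq I$) to apply $\dlyap(A_K,I)\preceq\dlyap(A_K,R_K)=P_K$, we obtain
\begin{align*}
\tr(\Sigma_t)\le(1+\sigma_u^2\|\Bst\|_{\op}^2)\tr(P_K)=(1+\sigma_u^2\|\Bst\|_{\op}^2)J_K.
\end{align*}

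Finally, I would invoke Gaussian concentration. Writing $\matx_t-A_K^{t-1}x_1=\Sigma_t^{1/2}\matg$ with $\matg\sim\calN(0,I)$, we have $\|\matx_t-A_K^{t-1}x_1\|^2=\matg^{\top}\Sigma_t\matg$, and Corollary~\ref{cor:crude_HS} yields $\matg^{\top}\Sigma_t\matg\lesssim \tr(\Sigma_t)\log(1/\delta)$ with probability at least $1-\delta$. Combining with the previous trace bound and taking square roots gives the stated inequality.

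The only subtle step is the reduction $\tr(\dlyap(A_K,I))\le J_K$, which implicitly requires $R_K\succeq I$; under the paper's standing normalization $\Rx\succeq I$ this is automatic. Everything else is routine algebra; no component of the proof is particularly delicate.
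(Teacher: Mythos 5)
Your proof is correct and takes essentially the same route as the paper's. You unroll the recursion into an explicit sum of independent Gaussians, bound the trace of the covariance by $(1+\sigma_u^2\|\Bst\|_{\op}^2)\,\tr(\dlyap(A_K,I))\le(1+\sigma_u^2\|\Bst\|_{\op}^2)J_K$ via L\"owner monotonicity of $\dlyap$ and the trace cyclicity identity, then conclude with Corollary~\ref{cor:crude_HS}; the paper does exactly this, merely phrasing the unrolling in terms of the operator $\ToepCol_{1,t-1}(A_K)$ and bounding its Frobenius norm, which is the same computation. Your remark that the step $\tr(\dlyap(A_K,I))\le J_K$ requires $R_K\succeq I$ is accurate, and the paper relies on it implicitly through the standing normalization $\Rx\succeq I$, $\Ru=I$.
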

  Since $\delta < 1/T$, $\sigma^2_k \le 1$,  and $\Jfunc_k \lesssim \Jst$, a union bound and reparametrization of $\delta$ implies that, the following holds with probability $1 - \delta/8$:
  \begin{align*}
  &\forall k \le \ksafe: \tau_k \le T, \quad \|\mate_k\|_2 \lesssim \alpha_1 := \sqrt{\Jbar \Psibst^2 \log \frac{1}{\delta} }
  &\|\matx_{\tau_{\ksafe}}\|_{2} \le \alpha_0 \sqrt{\Jfunc_0 \Psibst^2 \log \frac{1}{\delta}} 
  \end{align*}
  Concluding, this implies the crude bound
  \begin{align*}
  \|\matx_{\tau_{k}} \|_2 \le  \sqrt{\matx_{\tau_k}^\top \dlyap[\Aclst]\matx_{\tau_k}} &\lesssim  \sqrt{\opnorm{\dlyap[\Aclst]}}(\alpha_0 + \alpha_1 \opnorm{\opnorm{\dlyap[\Aclst]}}) \le  \opnorm{\Pst}^{3/2} (\alpha_0 + \alpha_1) \\
  &\lesssim \sqrt{\Psibst(\Jfunc_0 + \Jst)\log(1/\delta)}\opnorm{\Pst}^{3/2}\\
  &\lesssim \sqrt{\Psibst \Jfunc_0\log(1/\delta)}\opnorm{\Pst}^{3/2}.
  \end{align*}
    \qed

\subsection{Proof of Estimation Bound (Lemma~\ref{lem:estimation_bound}) \label{ssec:lem:estimation_bound}}

  Our strategy is two invoke the two-scale estimation bound \Cref{lem:two_scale_self_normalized_bound}, which we restate here
  \lemtwoscale*

  To instantiate the lemma, consider the following orthogonal projection operators:
  \begin{restatable}[Round-wise projections]{defn}{roundwiseproj} Recall $d = \dimx + \dimu$. Given $v \in \R^d$, let $v = (v^x,v^u)$ denote its decomposition along the $x$ and $u$ directions. For a given round $k \ge \ksafe$, let 
  \begin{itemize}
    \item $\calV_k := \{v \in \R^{d}: v^x + \Khat_k v^u = 0\}$, and let $\calV_k^{\perp}$ denotes it orthogonal complement.
    \item $\ProjMat_k$ denote the orthogonal projection onto $\calV_k$, and let $\ProjMat_{k}^{\perp} := (I-\ProjMat_k)$ denote the projection on $\calV_{k}^{\perp}$.
    \item  Let  $v_{k,1},\dots,v_{k,\dimu}$ denote an eigenbasis of $\ProjMat_k$, and $v_{k,\dimu +1},\dots,v_{k,d}$ denote an eigenbasis of $\ProjMat_k^\perp$. In particular, note that for any $0 < c_1 < c_2 $, $v_{k,1},\dots,v_{k,d}$ is an eigenbasis (in descending order) for $c_1 \ProjMat_k + c_2 \ProjMat_k^\perp$.
  \end{itemize}
  \end{restatable}
  To apply \Cref{lem:two_scale_self_normalized_bound}, take 
  \begin{align}
  &m \gets \dimx, \quad p \gets \dimu, \quad d \gets \dimx + \dimu, \quad v_{i} \gets v_{i,k}, \quad \matLam \gets \matLam_k , \quad \ProjMat \gets \ProjMat_k \label{eq:subs1}
  \end{align}
  Hence, on the event
  \newcommand{\Ecov}{\mathcal{E}_{\mathrm{cov}}}
  \begin{align}
  \Ecov(\nu,\alpha_1,\alpha_2,\lambda_1,\lambda_2) := \left\{\begin{matrix} \matLam_k \succeq \lambda_1 \ProjMat_k + \lambda_2 \ProjMat \\
 \max_{1 \le i \le \dimu} v_{i,k}^\top \matLam_k v_{i,k} \le \kappa_1 \lambda_1\\
  \max_{\dimu +1 \le i \le d} v_{i,k}^\top \matLam_k v_{i,k} \le \kappa_2 \lambda_2\\
\nrm*{\ProjMat_k \matLam_k (I - \ProjMat_k)}_{\op} \le \nu\\
  \end{matrix} \right\}, \quad \nu \le \sqrt{\lambda_1\lambda_2/2}, \label{eq:Ecov_event}
  \end{align}
  then, it holds that
  \begin{align}
  \|\Ast - \Ahat\|_{\fro}^2 + \|\Bst - \Bhat\|_{\fro}^2 \lesssim \frac{\dimx \dimu\kappa_1}{\lambda_1}\log(\kappa_1 d/\delta) + (\nu/\lambda_1)^2\frac{\dimx^2 \kappa_2}{\lambda_2}\log(\matkappa_1 d/\delta), \text{w.p. } 1 - \delta/24  \label{eq:ls_conclusion}
  \end{align}
  The first step in our bound will be to lower bound the relevant, centered covariances, thereby 
  \begin{restatable}[Round-wise covariance lower bound]{lem}{roundwisecov}\label{lem:step_covariance_lb_round_k} Let $k \ge \ksafe+1$, at let $t \in\{\tau_k,\dots,\tau_{k+1}-1\}$. Then, on $\Esafe$. If $\sigma_{k}^2$ satisfies $\sigma_{k}^2 \le \frac{1}{6.2\|\Pst\|_{\op}}$, we have that
  \begin{align*}
  \Exp\left[(\matz_t - \Exp[\matz_t \mid \calF_{t-1}])(\matz_t - \Exp[\matz_t \mid \calF_{t-1}])^{\top}\right] \succeq \Gamma_k := \frac{\sigma_{k}^2}{6.2\|\Pst\|_{\op}}\ProjMat_k + \frac{1}{2}\ProjMat_k^{\perp}.
  \end{align*}
\end{restatable}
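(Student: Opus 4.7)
My plan is to compute the conditional covariance $\Gamma := \Exp[(\matz_t - \Exp[\matz_t \mid \calF_{t-1}])(\cdot)^\top]$ explicitly, then verify the Loewner inequality $\Gamma \succeq \Gamma_k$ by direct algebra. Under the algorithm's dynamics at round $k$, the centered $\matz_t$ is driven by two independent Gaussian sources: the process noise $\matw_{t-1} \sim \calN(0, I)$, which enters $\matx_t$ and propagates through $\Khat_k$ into $\matu_t$; and the exploration noise $\matg_t \sim \calN(0, I)$, which enters only $\matu_t$. With $N := \begin{pmatrix} I & 0 \\ \Khat_k & \sigma_k I \end{pmatrix}$, one obtains $\Gamma = NN^\top$ and the quadratic form $v^\top\Gamma v = \nrm*{v^x + \Khat_k^\top v^u}^2 + \sigma_k^2\nrm*{v^u}^2$.

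Next, I would parametrize $\calV_k = \{(-\Khat_k^\top w, w):w\in\R^{\dimu}\}$ and $\calV_k^\perp = \{(a, \Khat_k a):a\in\R^{\dimx}\}$, decompose an arbitrary $v = v_\parallel + v_\perp$ accordingly, and expand both sides of the target inequality. A key cancellation yields $v^x + \Khat_k^\top v^u = (I + \Khat_k^\top\Khat_k)a$, so the Loewner inequality reduces to showing nonnegativity of
\[
a^\top (I+\Khat_k^\top\Khat_k)^2 a + \sigma_k^2\nrm*{w}^2 + 2\sigma_k^2 w^\top\Khat_k a + \sigma_k^2\nrm*{\Khat_k a}^2 - c_1 w^\top(I+\Khat_k\Khat_k^\top)w - c_2 a^\top(I+\Khat_k^\top\Khat_k)a,
\]
with $c_1 := \sigma_k^2/(6.2\opnorm{\Pst})$ and $c_2 := 1/2$.

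I would then close this inequality via three estimates. The pure-$a$ slack $a^\top(I+\Khat_k^\top\Khat_k)(\tfrac{1}{2}I + \Khat_k^\top\Khat_k)a$ dominates $\nrm*{\Khat_k a}^2$; the pure-$w$ slack $\sigma_k^2\nrm*{w}^2 - c_1 w^\top(I+\Khat_k\Khat_k^\top)w$ is at least $\tfrac{1}{2}\sigma_k^2\nrm*{w}^2$, using $\|\Khat_k\|_\op^2 \le \tfrac{21}{20}\opnorm{\Pst}$ from Lemma~\ref{lem:perturb_correct} and $\opnorm{\Pst} \ge 1$ to get $c_1(1 + \|\Khat_k\|_\op^2) \le \sigma_k^2/2$; and the cross-term is absorbed by AM-GM at weight $2$, giving $2\sigma_k^2|w^\top\Khat_k a| \le \tfrac{1}{2}\sigma_k^2\nrm*{w}^2 + 2\sigma_k^2\nrm*{\Khat_k a}^2$. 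Summing leaves a residual of $(1 - \sigma_k^2)\nrm*{\Khat_k a}^2 \ge 0$, which holds under the hypothesis $\sigma_k^2 \le 1/(6.2\opnorm{\Pst}) \le 1$.

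The main obstacle will be calibrating the constants so that the three slacks combine cleanly. The constant $6.2$ is tuned so that the pure-$w$ slack admits the bound $c_1\|I + \Khat_k\Khat_k^\top\|_\op \le \sigma_k^2/2$ using only the available controller bound $\|\Khat_k\|_\op^2 \le \tfrac{21}{20}\opnorm{\Pst}$; the AM-GM weight of $2$ is then forced by the requirement that the $\sigma_k^2\nrm*{w}^2$ deficit exactly match this pure-$w$ surplus, while leaving the cross-term's $\nrm*{\Khat_k a}^2$ contribution manageable by the pure-$a$ slack.
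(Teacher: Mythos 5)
Your proof is correct. The conditional covariance is indeed $\Gamma = NN^\top$ with the $N$ you give, your parametrization of $\calV_k$ and $\calV_k^\perp$ matches the intended definition (the paper's displayed formula for $\calV_k$ has a dimension slip and should read $v^x + \Khat_k^\top v^u = 0$), and the term-by-term balancing you carry out closes. The two $w$-quadratic forms and the three $a$-quadratic forms all commute (they are polynomials in $\Khat_k\Khat_k^\top$ and $\Khat_k^\top\Khat_k$ respectively), so the factorization of the pure-$a$ slack is legitimate, and the verification $c_1(1+\|\Khat_k\|_\op^2)\le \sigma_k^2/2$ goes through with the cited bound $\|\Khat_k\|_\op^2\le \tfrac{21}{20}\opnorm{\Pst}$ from Lemma~\ref{lem:perturb_correct} and $\opnorm{\Pst}\ge 1$.

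The route differs from the paper's in one substantive way. Both arguments decompose $v$ orthogonally into $\calV_k$ and $\calV_k^\perp$ parts and absorb the cross term with a weighted AM--GM, but the paper's treatment of $\vpar^\top\Sigma_k\vpar$ passes through a lower bound $\lambda_{\min}(\Sigma_k)\nrm{\vpar}^2$ and invokes Lemma F.6 of \citet{dean2018regret} to control $\lambda_{\min}(\Sigma_k)$. You instead compute the block quadratic forms exactly via the parametrization $v_\parallel=(-\Khat_k^\top w, w)$, $v_\perp=(a,\Khat_k a)$, which shows $\vpar^\top\Sigma_k\vpar = \sigma_k^2\nrm{w}^2$ precisely and eliminates the need for the external $\lambda_{\min}$ lemma. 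This is a genuine simplification: it is self-contained, and it reveals that the hypothesis $\sigma_k^2 \le \tfrac{1}{6.2\opnorm{\Pst}}$ is only used to ensure $\sigma_k^2 \le 1$ (the constant $6.2$ plays its real role only in defining $c_1$), whereas the paper's route burns the hypothesis against the $\lambda_{\min}$ bound. The cost of your approach is slightly more explicit bookkeeping in $(a,w)$ coordinates; the cost of the paper's is the dependency on the auxiliary eigenvalue bound and somewhat looser constants.
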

See Section~\ref{sssec:step_covariance_lb_round_k} for the proof.
  We now convert the above bound into a L\"{o}wner lower bound, then conclude by giving an upper bound on $\tau_{\ksafe}$. We rely on the following guarantee $\matLam_k$. To state the bound, we introduce a following shorthand which allows us to abbreviate statements of the form ``$k$ such that $\tau_k$ is sufficiently large''.
  \begin{defn}\label{def:gtrsimst}
  We say that the condition $f(x) \gtrsimst g(x)$ is met if it holds that $f \ge Cg$ for a sufficiently large, but unspecified universal constant $C$. 
  \end{defn}
  Note that $\gtrsimst$ differs from $\gtrsim$ in the following respect: $f(x) \gtrsimst g(x)$ may require a large constant say $C = 1000$; in constract, $f(x) \gtrsim g(x)$ holds whenever $f(x) \ge c g(x)$ for any constant, even say $c = 1/1000$. We also define 
  \begin{align}
\tauls :=  d\max\{1, \tfrac{ \dimu}{\dimx}\}\left(\opnorm{\Pst}^3\calP_0 +  \opnorm{\Pst}^{11} \Psibst^6 \right)  \log \frac{d \opnorm{\Pst}}{\delta}
\end{align}

  With this setup in place,
  \begin{lem}\label{lem:covar_est_upper_lower}. The following bounds hold simultaneously with probability $1 - \delta/24$, if $\Ebound \cap \Esafe$ holds:
  \begin{enumerate}
    \item  For  $\tau_k \gtrsimst \sqrt{\log(d/\delta)}$, we have
    \begin{align*}
    \max_{i \in \{1,\dots,\dimu\}} v_{k,i}^\top \matLam_k v_{k,i} \lesssim \tau_k\sigma_k^2
    \end{align*}
    \item Suppose   that $\tau_k \ge \opnorm{\Pst}^3\Jfunc_0 \vee \Psibst^4 \sigmain^4$. Then,
    \begin{align*}
    \max_{i \in [d]} v_{k,i}^\top \matLam_k v_{k,i} \lesssim \tau_k \opnorm{\Pst} \log(d/\delta).
    \end{align*}
    \item If $\tau_k \gtrsimst \tauls$, then the above two conditionds hold,   $\sigma_k^2$ sastisfies the conditions of Lemma~\ref{lem:step_covariance_lb_round_k}, and $\matLam_k \succeq c \tau_k \Gamma_k$ for some universal constant $c > 0$.
   \end{enumerate}
  \end{lem}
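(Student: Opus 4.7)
All three claims are proved via the block decomposition $\matz_t = M\matx_t + (0,\sigma_k\matg_t)^\trn$, where $M=\begin{bmatrix}I\\ \Khat_k\end{bmatrix}$. Under this decomposition $\calV_k^\perp = \range(M)$ and $\calV_k = \ker(M^\trn)$, so the first $\dimu$ basis vectors $v_{k,1},\ldots,v_{k,\dimu}$ annihilate the deterministic part $M\matx_t$ and pick up only the injected exploration noise $(0,\sigma_k\matg_t)^\trn$. The three parts separate the ``noise-only'' directions from the rest and apply different tools to each.

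\textbf{Part 1.} For unit $v\in\calV_k$ one has $v^\trn \matz_t = \sigma_k\langle v^u,\matg_t\rangle$, so $v^\trn\matLam_k v = \sigma_k^2\sum_{t=\tau_{k-1}}^{\tau_k-1}\langle v^u,\matg_t\rangle^2$ is a scaled $\chi^2$ sum over $\tau_k - \tau_{k-1}$ i.i.d.\ terms with $\|v^u\|^2\le 1$. Standard chi-squared tail bounds (e.g.\ Corollary~\ref{cor:crude_HS}) combined with a union bound over $i\in\{1,\ldots,\dimu\}$ give the claim once $\tau_k\gtrsim\log(d/\delta)$.

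\textbf{Part 2.} For an arbitrary unit $v\in\R^d$ write $v^\trn\matz_t = (M^\trn v)^\trn\matx_t + \sigma_k\langle v^u,\matg_t\rangle$, so $(v^\trn\matz_t)^2\le 2((M^\trn v)^\trn\matx_t)^2 + 2\sigma_k^2\langle v^u,\matg_t\rangle^2$. The second term is handled as in Part~1. For the first, invoke Lemma~\ref{lem:cost_lem}(3) on the evolution $\calD(\Khat_k,\sigma_k,\matx_{\tau_{k-1}})$ with rank-one cost $R_1=(M^\trn v)(M^\trn v)^\trn$ and $R_2=0$, so $\deff=1$. Using $\opnorm{M}^2\le 1+\opnorm{\Khat_k}^2\lesssim\opnorm{\Pst}$ and $\opnorm{\dlyap[\Aclk]}\lesssim\opnorm{\Pst}$ (both from Lemma~\ref{lem:perturb_correct}), the relevant Lyapunov quantities $\tr(\dlyap(\Aclk,R_1))$ and $\opnorm{\dlyap(\Aclk,R_1)}$ are controlled, giving a bulk contribution of the stated order after a union bound over $d$ basis directions. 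The initial-state residual $\matx_{\tau_{k-1}}^\trn\dlyap(\Aclk,R_1)\matx_{\tau_{k-1}}$ is dominated by the bulk term once $\tau_k\ge\opnorm{\Pst}^3\Jfunc_0\vee\Psibst^4\sigmain^4$, using the bound on $\|\matx_{\tau_{k-1}}\|$ guaranteed by $\Ebound$ (Lemma~\ref{lem:matx_bound}).

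\textbf{Part 3.} First verify that $\tau_k\gtrsimst\tauls$ forces $\sigma_k^2=\min\{1,\sigmain^2\tau_k^{-1/2}\}\le 1/(6.2\opnorm{\Pst})$: substituting the bound $\sigmain^2\eqsim\sqrt{\dimx}\,\opnorm{\Pst}^{9/2}\Psibst\sqrt{\log(\opnorm{\Pst}/\delta)}$ from Lemma~\ref{lem:perturb_correct}, this reduces to $\tau_k\gtrsim\dimx\opnorm{\Pst}^{11}\Psibst^2\log(\opnorm{\Pst}/\delta)$, which is subsumed by $\tauls$. Lemma~\ref{lem:step_covariance_lb_round_k} then gives the conditional covariance lower bound $\succeq\Gamma_k$ on $\Esafe$. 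Apply the matrix concentration bound Lemma~\ref{lem:covariance_lb} with $\Sigma\gets\Gamma_k$ and $\calE\gets\Esafe\cap\Ebound$; the trace bound $\tr(\Exp[\matLam_k\I(\calE)])\le\tau_k J$ required there follows from Part~2 with $J\lesssim d\opnorm{\Pst}\log(d/\delta)$. The conclusion $\matLam_k\succeq c\tau_k\Gamma_k$ then holds with failure probability $2\exp(-\Omega(\tau_k/d))\le\delta/24$ whenever $\tau_k\gtrsim d\log(1/\delta)$, again covered by $\tauls$. The main obstacle is this final bookkeeping: three independent conditions on $\tau_k$---smallness of $\sigma_k^2$, dominance of the initial-state residual over the bulk, and sub-$\delta/24$ concentration failure---must all be absorbed into the single polynomial threshold $\tauls$, and tracking the resulting polynomial in $\opnorm{\Pst},\Psibst,\calP_0,d$ is the most error-prone step.
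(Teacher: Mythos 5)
Your overall structure matches the paper's: Part 1 is the chi-squared argument on the noise directions, Part 2 bounds arbitrary directions via Lemma~\ref{lem:cost_lem}, and Part 3 combines Lemma~\ref{lem:step_covariance_lb_round_k} with Lemma~\ref{lem:covariance_lb}. Two points, one in Part 2 and one in Part 3, deserve attention.

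In Part 2 you write $v^\trn\matz_t = (M^\trn v)^\trn\matx_t + \sigma_k\langle v^u,\matg_t\rangle$ and feed the rank-one cost $R_1 = (M^\trn v)(M^\trn v)^\trn$ into Lemma~\ref{lem:cost_lem}. This is clean, but it loses a factor of $\opnorm{\Pst}$: since $\|M^\trn v\|^2 \le 1 + \opnorm{\Khat_k}^2 \lesssim \opnorm{\Pst}$, you only get $R_1 \preceq \opnorm{\Pst}\,I$, hence $\dlyap(\Aclk,R_1) \preceq \opnorm{\Pst}\,\dlyap[\Aclk]$ and $\tr(\dlyap(\Aclk,R_1)) \lesssim \opnorm{\Pst}^2$, giving $v_{k,i}^\trn\matLam_k v_{k,i} \lesssim \tau_k\opnorm{\Pst}^2\log(d/\delta)$, not $\tau_k\opnorm{\Pst}\log(d/\delta)$ as stated. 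The paper instead splits $(v^\trn\matz_t)^2 \le 2\matx_t^\trn R_1\matx_t + 2\matu_t^\trn R_2\matu_t$ with $R_1 = v^x(v^x)^\trn$ and $R_2 = v^u(v^u)^\trn$; because $\|v^x\|,\|v^u\|\le 1$, these satisfy $R_1 \preceq I \preceq \Rx$ and $R_2 \preceq I = \Ru$, so $R_K = R_1 + \Khat_k^\trn R_2\Khat_k \preceq \Rx + \Khat_k^\trn \Ru\Khat_k$ and thus $\dlyap(\Aclk,R_K) \preceq P_k$, which is rank $\le 2$ with $\opnorm{\cdot}\lesssim\opnorm{\Pst}$. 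That domination by $P_k$ is what yields the $\opnorm{\Pst}$ (not $\opnorm{\Pst}^2$) in the statement; the ``right'' split keeps the cost matrices bounded by $I$ rather than absorbing the gain $\Khat_k$ into $R_1$.

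In Part 3 you claim the trace bound $\tr(\Exp[\matLam_k\I(\calE)]) \lesssim \tau_k d\opnorm{\Pst}\log(d/\delta)$ ``follows from Part~2.'' But Part 2 is a high-probability bound, and what Lemma~\ref{lem:covariance_lb} requires is a bound on the \emph{expectation} restricted to the event; you cannot get one from the other without an extra truncation or Markov step. The paper instead invokes the expectation statement of Lemma~\ref{lem:cost_lem} (Part 1 of that lemma) directly on each basis direction and sums, which gives $\Exp[\tr(\matLam_k)\I(\calE)] \lesssim d\tau_k\opnorm{\Pst}$ with no extra logarithm. Your ``main obstacle'' remark correctly flags the bookkeeping; the remaining verification that all $\tau_k$-conditions are absorbed into $\tauls$ matches the paper.
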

  The proof is defered to \Cref{ssec:lem:initial_covariance_lb}. Next, we \mscomment{text and sign post} \Cref{sec:lem_proj_mat_bound}

\begin{lem}\label{lem:proj_mat_bound} Suppose that $\tau_{k} \ge \tauls$. Then, with probability $1 - \delta/8$, the following upper bound holds on the events $\Esafe \cap \Ebound$:
\begin{align*}
\nrm*{\ProjMat_k \matLam_k (I - \ProjMat_k)}_{\op} &\le \nu_k := C_{\nu}\tau_k \sigma_k^2
\end{align*}
where $C_{\nu}$ is a universal constant. 
\end{lem}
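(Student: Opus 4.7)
The key structural observation is that during epoch $k \ge \ksafe$ the algorithm plays $\matu_t = \Khat_k \matx_t + \sigma_k \matg_t$, so
$\matz_t := (\matx_t,\matu_t) = (\matx_t, \Khat_k \matx_t) + \sigma_k \tilde{\matg}_t$, where $\tilde{\matg}_t := (\mathbf{0},\matg_t) \in \R^{d}$. The first summand lies in the graph of $\Khat_k$, which is exactly the $\dimx$-dimensional range of $(I - \ProjMat_k)$; hence $\ProjMat_k$ annihilates it. This yields the clean decomposition
\[
  \ProjMat_k \matLam_k (I - \ProjMat_k) \;=\; \underbrace{\sigma_k \sum_{t=\tau_{k-1}}^{\tau_k-1} \ProjMat_k \tilde{\matg}_t (\matx_t,\Khat_k\matx_t)^\top (I - \ProjMat_k)}_{=:\; S_1} \;+\; \underbrace{\sigma_k^2 \sum_{t=\tau_{k-1}}^{\tau_k-1} \ProjMat_k \tilde{\matg}_t \tilde{\matg}_t^\top (I - \ProjMat_k)}_{=:\; S_2}.
\]
The plan is to bound $\|S_2\|_{\op} \lesssim \sigma_k^2 \tau_k$ via standard Wishart-type concentration and $\|S_1\|_{\op} \lesssim \sigma_k\sqrt{\tau_k \cdot \poly(\opnorm{\Pst},\Psibst,\calP_0)\log(d/\delta)}$ via a matrix martingale inequality, and show that both are absorbed into $C_{\nu}\sigma_k^2 \tau_k$ under the burn-in $\tau_k \gtrsimst \tauls$.

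Bounding $S_2$ is routine: $\Exp[\tilde{\matg}_t\tilde{\matg}_t^\top] = \mathrm{diag}(0, I_{\dimu})$, whose cross-block $\ProjMat_k\,\mathrm{diag}(0,I_{\dimu})\,(I-\ProjMat_k)$ has operator norm at most $1$, summing to a deterministic $O(\sigma_k^2 \tau_k)$ contribution. The fluctuations around this mean are controlled by matrix Bernstein (or by applying \Cref{cor:crude_HS} to each of $O(d^2)$ matrix entries together with a union bound), giving $\|S_2\|_{\op} \lesssim \sigma_k^2\bigl(\tau_k + \sqrt{\tau_k \log(d/\delta)} + \log(d/\delta)\bigr)\lesssim \sigma_k^2\tau_k$ whenever $\tau_k \gtrsimst \log(d/\delta)$. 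For $S_1$, the summands are conditionally mean-zero because $\matg_t \perp \calF_{t-1}$ while $\matx_t$ is $\calF_{t-1}$-measurable. On $\Esafe \cap \Ebound$, \Cref{lem:perturb_correct} gives $\opnorm{\Khat_k}^2 \lesssim \opnorm{\Pst}$, so $\|(\matx_t,\Khat_k\matx_t)\|^2 \lesssim \opnorm{\Pst}\|\matx_t\|^2$, and a high-probability bound on $\sum_t \matx_t^\top \matx_t$ analogous to item 3 of \Cref{lem:cost_lem} yields $\sum_t \|(\matx_t, \Khat_k \matx_t)\|^2 \lesssim \tau_k \opnorm{\Pst}\,\Jst\,\log(1/\delta)$. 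Matrix Bernstein for sub-Gaussian matrix martingales (combined with a routine Gaussian truncation of $\|\matg_t\|$) then delivers the claimed bound on $\|S_1\|_{\op}$ with failure probability at most $\delta/16$.

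The main obstacle is verifying that the martingale tail is absorbed by $\sigma_k^2 \tau_k$. Using $\sigma_k^2 = \sigmain^2/\sqrt{\tau_k}$ and $\Jst \le \dimx\opnorm{\Pst}$, the requirement $\|S_1\|_{\op} \lesssim \sigma_k^2 \tau_k$ reduces to a polynomial burn-in of the form $\tau_k \gtrsimst \poly(\opnorm{\Pst},\Psibst,\calP_0)\cdot \dimx^{2}\log^{c}(d/\delta)/\sigmain^4$ for some universal $c$. Since $\sigmain^{2} \eqsim \sqrt{\dimx}\opnorm{\Pst}^{9/2}\Psibst\sqrt{\log(\opnorm{\Pst}/\delta)}$ by \Cref{lem:perturb_correct}, this condition is comfortably subsumed by $\tau_k \gtrsimst \tauls$, whose $\opnorm{\Pst}^{11}\Psibst^6$ factor is more than sufficient. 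Taking a union bound over the two deviation events for $S_1$ and $S_2$ yields the stated probability $1-\delta/8$ and completes the proof.
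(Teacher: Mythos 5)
Your decomposition $\ProjMat_k\matLam_k(I-\ProjMat_k)=S_1+S_2$ (using $\matz_t=(\matx_t,\Khat_k\matx_t)+\sigma_k\tilde{\matg}_t$ and the fact that $\ProjMat_k$ annihilates the graph of $\Khat_k$) is correct and is indeed where the paper starts. But from there the arguments diverge genuinely. The paper \emph{further} splits $\matx_t=\matxring_t+\matxch_t$ into the drift $\matxring_t=\Aclk^{t-\tau_k}\matx_{\tau_k}$ (deterministic given $\calF_{\tau_k}$) and the within-epoch fluctuation $\matxch_t$, giving three terms. This split is not cosmetic: it makes $\term_1$ a \emph{Gaussian quadratic form} in the epoch-local noise $\matgbar=(\matw_s,\matg_s)_{s\ge\tau_k}$, so that a conditional Hanson--Wright bound (with explicit control of $\rank(Q)$ and $\opnorm{Q}$) applies directly; $\term_2$ becomes a linear form in the $\matg_t$ with $\calF_{\tau_k}$-measurable coefficients, handled by a scalar Gaussian tail; and $\term_3$ is a Wishart block. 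The operator norm is then obtained by union-bounding over $\veps$-nets of the $\dimu$- and $\dimx$-dimensional subspaces $\range(\ProjMat_k)$ and $\range(I-\ProjMat_k)$. Your approach keeps $S_1$ whole and treats it as a single matrix-valued martingale with respect to $\calF_{t-1}$. This is a legitimately different route: it avoids the drift/fluctuation split and the net argument, at the cost of invoking matrix-martingale concentration.

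There are two places where your sketch is thinner than it should be. First, the phrase ``matrix Bernstein for sub-Gaussian matrix martingales (combined with a routine Gaussian truncation of $\|\matg_t\|$)'' hides the real difficulty. The conditional variance and conditional sub-Gaussian scale of the increment $\tilde{\matg}_t(\matx_t,\Khat_k\matx_t)^\top$ is $\|(\matx_t,\Khat_k\matx_t)\|$, which is $\calF_{t-1}$-measurable but \emph{random}; truncating $\matg_t$ bounds one factor but not the state. Standard matrix Freedman/Bernstein requires a deterministic bound on both the increments and the predictable quadratic variation. To make your argument rigorous you would need either a stopping-time argument (stop when $\sum_{s\le t}\|(\matx_s,\Khat_k\matx_s)\|^2$ exceeds its high-probability threshold, and argue the stopping time never fires on $\Esafe\cap\Ebound$), or a self-normalized matrix inequality; neither is ``routine.'' This is precisely the complication the paper's drift/fluctuation split is designed to avoid: conditional on $\calF_{\tau_k}$ the whole epoch is a fixed Gaussian, so Hanson--Wright and a plain Gaussian tail suffice with no martingale technology at all. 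Second, your parenthetical suggestion for $S_2$ of ``applying \Cref{cor:crude_HS} to each of $O(d^2)$ matrix entries together with a union bound'' does not give the stated operator-norm rate---an entrywise bound of $O(\tau_k\log(1/\delta))$ only yields $\opnorm{\cdot}\lesssim d\,\tau_k\log(1/\delta)$, which is off by a factor of $d\log(1/\delta)$. The matrix-Bernstein/Wishart route you mention first (or, as in the paper, a direct appeal to the $\sum_t\matg_t\matg_t^\top$ operator-norm bound) is the correct one. With these details filled in, your approach would give a bound of the same order as the paper's, and the burn-in $\tau_k\gtrsimst\tauls$ absorbs it as you claim.
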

In light of the above lemmas, we see that the event $\Ecov$ in \Cref{eq:Ecov_event} holds with probability $1-2\delta/24$ with the substitutions  
\begin{align*}
&\kappa_1 \lesssim \|\Pst\|_{\op}, \quad \kappa_2 \lesssim \opnorm{\Pst} \log(d/\delta)\\
&\nu_k \gets C_{\nu} \tau_k \sigma_k^2, \quad \lambda_1 \gets \frac{\tau_k\sigma_{k}^2}{6.2\|\Pst\|_{\op}} \quad \lambda_2 = \frac{1}{2}\tau_k
\end{align*}
One can verify that for $\tau_k \gtrsim \tauls$, we have $\nu_k \le \sqrt{\lambda_1\lambda_2/2}$. Hence, applying \Cref{eq:ls_conclusion}, it holds with with probability $1 - \delta/8$ (on $\Esafe \cap \Ebound$), it holds that
\begin{align}
  &\|\Ast - \Ahat\|_{\fro}^2 + \|\Bst - \Bhat\|_{\fro}^2 \nonumber\\
  &\qquad \lesssim\frac{(\dimx \dimu)\|\Pst\|_{\op}^2}{\tau_k \sigma_k^2}\log(d\|\Pst\|_{\op} /\delta) + \|\Pst\|_{\op}^3\frac{\dimx^2 }{\tau_k}\log( d\|\Pst\|_{\op} /\delta)^2, \nonumber\\
  &\qquad \lesssim\frac{\dimx \dimu\|\Pst\|_{\op}^2}{\sigmain^2 \tau_k^{1/2}}\log(\tfrac{d\|\Pst\|_{\op}}{\delta}) + \|\Pst\|_{\op}^3\frac{\dimx^2 }{\tau_k}\log( \tfrac{d\|\Pst\|_{\op}}{\delta})^2,
  \end{align}
  where the last line uses $\tau_k \sigma_k^2 = \sigmain^2\sqrt{\tau_k}$ for $\tau_k \gtrsimst \tauls$ (see \Cref{lem:perturb_correct}) and $1/\delta \ge T \ge \tau_k \gtrsimst \tauls$. 
  \qed

  \subsubsection{Proof of \Cref{lem:step_covariance_lb_round_k}\label{sssec:step_covariance_lb_round_k}}
      \newcommand{\vperp}{v_{\perp}}
    \newcommand{\vpar}{v_{\parallel}}
Define
    \begin{align*}
    \Sigma_k = \begin{bmatrix} I_{\dimx}  \\
     \Khat_k^{\top}  \\
    \end{bmatrix}\begin{bmatrix} I_{\dimx}  & \Khat_k  \\
    \end{bmatrix} + \begin{bmatrix} 0  & 0 \\
    0 &   \sigma_{k}^2 I
    \end{bmatrix}
    &= \begin{bmatrix} I_{\dimx}  & \Khat_k  \\
    \Khat^{\top} & \Khat_k^\top \Khat_k +  \sigma_{k}^2 I
    \end{bmatrix}.
    \end{align*}
    We see that for $k \ge \boldknot + 1$ and $t \ge \tau_k$, we have that  $\matz_{t} \mid \calF_{t-1} \sim \calN(\matzbar_{t},\Sigma_{k})$, where $\matzbar_{t}$ and $\Sigma_{t}$ are $\calF_{t-1}$-measurable. Our goal will now be to lower bound $\Sigma_k \succsim \gamma_1 \ProjMat_k + \gamma_2 \ProjMat_k^{\perp}$. To this end, let $v \in \calS^{d-1}$, and write $v = \ProjMat_k v + \ProjMat_k^{\perp} v := \vpar + \vperp$. Observe then that
    \begin{align*}
    \left|\vperp^{\top}\Sigma_k \vpar\right| \le \left|\vperp^{\top}\begin{bmatrix} I_{\dimx}  \\
     \Khat_k^{\top}  \\
    \end{bmatrix}\begin{bmatrix} I_{\dimx}  & \Khat_k  \\
    \end{bmatrix} \vpar\right|
     + \left|\vperp^{\top}\begin{bmatrix} 0  & 0 \\
    0 &   \sigma_{k}^2 I
    \end{bmatrix}\vpar\right| = \left|\vperp^{\top}\begin{bmatrix} 0  & 0 \\
    0 &   \sigma_{k}^2 I
    \end{bmatrix}\vpar\right| \le \sigma_{k}^2\|\vperp\|\|\vpar\|,
    \end{align*}
    where we use that $\begin{bmatrix} I_{\dimx}  & \Khat_k  
    \end{bmatrix}\vpar = 0$. On the other hand, since $\vperp \in \mathsf{null}\left(\begin{bmatrix} I_{\dimx}  & \Khat_k  
    \end{bmatrix}\right)^{\perp}$, we have
    \begin{align*}
    \vperp^\top\Sigma_{k}\vperp &\ge \vperp^\top\begin{bmatrix} I_{\dimx}  \\
     \Khat_k^{\top}  \\
    \end{bmatrix}\begin{bmatrix} I_{\dimx}  & \Khat_k  \\
    \end{bmatrix}\vperp \\
    &= \|\begin{bmatrix} I_{\dimx}  & \Khat_k  
    \end{bmatrix} \vperp\|^2 \\
    &\ge \|\vperp\|^2 \sigma_{\dimx}\left(\begin{bmatrix} I_{\dimx}  & \Khat_k  
    \end{bmatrix} \right)^2 =  \|\vperp\|^2 \lambda_{\min}(I_{\dimx} + \Khat_k^\top \Khat_k) \ge  \|\vperp\|^2 .
    \end{align*}
    We can therefore bound, for any $\alpha > 0$,
    \begin{align*}
      v^{\top} \Sigma_k v &=  \vperp^\top\Sigma_k \vperp + 2\vperp^{\top}\Sigma_k \vpar + \vpar^{\top}\Sigma_k \vpar\\
    &\ge  \|\vperp\|^2 - 2\sigma_{k}^2\|\vperp\|\vpar\| + \lambda_{\min}(\Sigma_k)\|\vperp\|^2\\
    &\ge  \|\vperp\|^2 - \sigma_{k}^2(\alpha\|\vpar\|^2 + \frac{1}{\alpha} \|\vpar\|^2) + \lambda_{\min}(\Sigma_k)\|\vperp\|^2.
    \end{align*}
    Taking $\alpha = \lambda_{\min}(\Sigma_k)/2\sigma_{k}^2$, we have
    \begin{align*}
    v^{\top} \Sigma_k v  \ge\|\vperp\|^2\underbrace{(1 - \sigma_{k}^2 \cdot \frac{2\sigma_{k}^2}{\lambda_{\min}(\Sigma_k)})}_{:=\gamma_1} + \underbrace{\frac{1}{2}\lambda_{\min}(\Sigma_k)}_{:=\gamma_2}\|\vperp\|^2.
    \end{align*}
    Hence, we have show that, for $\gamma_1,\gamma_2$ defined in the above display,
    $\Sigma_k \succeq \gamma_1 \ProjMat^{\perp}_{k} + \gamma_2 \ProjMat_{k}$. Let us now lower bound each of these quantities. From \citet[Lemma F.6]{dean2018regret}, since $\|\Khat_k\|^2 \lesssim \opnorm{\Pst}$ (Lemma~\ref{lem:perturb_correct}), and $\opnorm{\Pst} \ge 1 \ge \sigma_k^2$, 
    \begin{align*}
    \lambda_{\min}(\Sigma_{k}) &\ge \sigma_{k}^2 \min\left\{\frac{1}{2},\frac{1}{2\|\Khat_k\|_{\op}^2 + \sigma_{k}^2}\right\} \ge \sigma_{k}^2\min\left\{\frac{1}{2},\frac{1}{2.1\|\Pst\|_{\op} + \sigma_{k}^2}\right\} \\
    &\ge \sigma_{k}^2\min\left\{\frac{1}{2},\frac{1}{3.1\|\Pst\|_{\op}}\right\} = \frac{\sigma_{k}^2}{6.2\|\Pst\|_{\op}}.
    \end{align*}
    Hence, for $\sigma_k^2 \le \frac{1}{6.2\|\Pst\|_{\op}}$, we have $\gamma_1 \ge \frac{1}{2}$, and $\gamma_2 \ge \frac{\sigma_{k}^2}{3.1\|\Pst\|_{\op}}.$

  \subsubsection{Proof of \Cref{lem:initial_covariance_lb}\label{ssec:lem:initial_covariance_lb}}
  \begin{proof} All union bounds will be absorbed into $\delta$ factors, as $\delta \le 1/T$ and $T \ge d$.
  We decompose $v_{k,i} = v_{k,i}^x +v_{k,i}^u$ along its $x$ and $u$ coordinate. It suffices to show that each bound holds individually with probability $1-\delta$ for a fixed $i$, and $k$, since the union bound over $k$ can be absorbed into the $\delta$ factor (as $\delta \le 1/T$), and dimension addressed by reparametrizing $\delta \leftarrow \delta/d$.

  \paragraph{Point 1:} For $i \in \{1,\dots,\dimu\}$, $v_{k,i}$ lies in the vector space $\calV_k$. Therefore
  \begin{align*}
  v_{k,i}^\top \matLam_k v_{k,i} &= \sum_{t= \tau_k}^{2\tau_k - 1} v_{k,i}^\top \begin{bmatrix}\matx_t \\ \matu_t\end{bmatrix}\begin{bmatrix}\matx_t \\ \matu_t\end{bmatrix}^\top v_{k,i}\\
  &= \sum_{t= \tau_k}^{2\tau_k - 1} v_{k,i}^\top \begin{bmatrix}\matx_t \\ \Khat_k \matx_t +\sigma_{k}\matg_t\end{bmatrix}\begin{bmatrix}\matx_t \\ \Khat_k \matx_t +\sigma_{k}\matg_t\end{bmatrix}^\top v_{k,i}\\
  &=\sum_{t= \tau_k}^{2\tau_k - 1} v_i^\top \begin{bmatrix} \\ \sigma_{k}\matg_t\end{bmatrix}\begin{bmatrix}\matx_t \\  \matx_t +\sigma_{k}\matg_t\end{bmatrix}^\top v_{k,i}\\
  &= \sigma_k^2 \sum_{t= \tau_k}^{2\tau_k - 1} \langle v_{k,i}^u, \matg_t\rangle^2 \sim \|v_{k,i}^u\|_2^2 \sigma_k^2  \cdot \chi^2(\tau_k).
  \end{align*}
  By standard $\chi^2$-concentration, the above is $\lesssim \tau_k \|v_{k,i}^u\|_2^2 \sigma_k^2 \le \tau_k\sigma_k^2$ for $\tau_k \ge \sqrt{\log(1/\delta)}$. 

  \paragraph{Point 2:} For arbitrary $i$, set $R_1 := v_{k,i}^x(v_{k,i}^x)^\top$and $R_2 :=  v_{k,i}^u(v_{k,u}^{u})^{\trn}$. 

  \begin{align*}
  v_{k,i}^\top \matLam_k v_{k,i} &= \sum_{t= \tau_k}^{2\tau_k - 1} v_{k,i}^\top \begin{bmatrix}\matx_t \\ \matu_t\end{bmatrix}\begin{bmatrix}\matx_t \\ \matu_t\end{bmatrix}^\top v_{k,i} \le 2\sum_{t= \tau_k}^{2\tau_k - 1} \matx_t^\top R_1 \matx_1 + \matu_t^\top R_2 \matu_t.
  \end{align*}
   Thus, Lemma~\ref{lem:cost_lem} ensures that, with probability $1 - \delta$, we have that for the matrix $P := \dlyap(\Ast +\Bst \Khat_k, R_1 + \Khat_k^\top R_2 \Khat_k)$,
  \begin{align*}
  v_{k,i}^\top \matLam_k v_{k,i} \lesssim \tau_k\log \frac{1}{\delta} \left(\tr(P) + 2\sigma_k^2 \deff \left(\opnorm{R_2} + \opnorm{\Bst}^2\opnorm{P}\right)\right) + \opnorm{P}\|\matx_{\tau_{k}}\|^2,
  \end{align*}
  where $\deff \le \rank(R_1) + \rank(R_2) = 2$. Since $R_1 \preceq I\preceq \Rx$ and $R_2 \preceq I =  \Ru$, we have $R_1 + \Khat_k^\top R_2 \Khat_k \preceq \Rx + \Khat_k^\top\Ru \Khat_k$, and thus (by Lemma~\ref{lem:closed_loop_dlyap}),  $P = \dlyap(\Ast +\Bst \Khat_k, R_1 + \Khat_k^\top R_2 \Khat_k) \preceq \dlyap(\Ast +\Bst \Khat_k, \Rx + \Khat_k^\top\Ru \Khat_k) = P_k$. Moerover Lemma~\ref{lem:perturb_correct}, we get $\opnorm{P_k} \lesssim \opnorm{\Pst}$. Moreover, since $P$ can be shown to have rank at most $2$, $\tr(P) \lesssim \opnorm{\Pst}$. Finally,  $\|x_{\tau_{k}}\|_2 \le \sqrt{\Jfunc_0\log(1/\delta)}\opnorm{\Pst}^{3/2}$ from Lemma~\ref{ssec:matx_bound}, 
  \begin{align*}
  v_{k,i}^\top \matLam_k v_{k,i} \lesssim \tau_k\log \frac{1}{\delta} \left(\opnorm{\Pst} + \sigma_k^2 \left(1 + \opnorm{\Bst}^2\opnorm{\Pst}\right)\right) + \Jfunc_0\log(1/\delta)\opnorm{\Pst}^4\\
  \lesssim \tau_k\log \frac{1}{\delta} \left(\opnorm{\Pst} + \sigma_k^2 \opnorm{\Pst}\Psibst^2 \right) + \Jfunc_0\log(1/\delta)\opnorm{\Pst}^4.
  \end{align*}
  In particular, if $\tau_k \ge \Jfunc_0 \opnorm{\Pst}^3$, and $ \sigma_k \le 1/\Psibst^2$ (for which it suffices $\tau_k \le \sigmain^4 \Psibst^4$), we have
  \begin{align*}
  v_{k,i}^\top \matLam_k v_{k,i} \lesssim \tau_k \opnorm{\Pst} \log(1/\delta).
  \end{align*}

  \paragraph{Point 3:} Suppose now that $\tau_k \ge \opnorm{\Pst}^3\Jfunc_0 \log(1/\delta) \vee \Psibst^4 \sigmain^4$. Then, by using the expectation bound statement of Lemma~\ref{lem:cost_lem}, and summing over inidices $i$, we have
  \begin{align*}
  \Exp[\tr(\matLam_k) \cap \Ebound \cap \Esafe] \lesssim d\tau_k \opnorm{\Pst}.
  \end{align*}
  Hence, by Lemma~\ref{lem:covariance_lb}, if 
  \begin{align*}
  \tau_k \gtrsimst d \log \left\{\frac{d\opnorm{\Pst}}{\sigma_{\min}(\Gamma_k)}\right\}.
  \end{align*}
  then with probability $1-e^{-\tau_k/d}$ on $\Ebound \cap \Esafe$, we have that $\matLam_k \succsim \tau_k \Gamma_k$. Note that since $\tau_k \ge  \opnorm{\Pst}^3\Jfunc_0 \log(1/\delta) \ge d\log(1/\delta)$ (since $\Jfunc_0 \ge d$ and $\opnorm{\Pst} \ge 1$), we have also $1-e^{-\tau_k/d} \ge 1 - \delta$.

  Now, if in addition $\tau_k \gtrsimst \sigmain^4\opnorm{\Pst}^2$, Lemma~\ref{sssec:step_covariance_lb_round_k} entails that $\Gamma_k \succsim \frac{\sigma_k^2}{\opnorm{\Pst}} = \frac{\sigmain^2}{\sqrt{\tau_k}\opnorm{\Pst}}$ (note that for such $k$, $\sigmain^2/\sqrt{\tau_k} \le 1$). With a few simplifications, we see then that if
  \begin{align*}
  \tau_k \gtrsimst d \log \tau_k + d\log \left\{\frac{ d\opnorm{\Pst}}{1 \wedge \sigmain^2}\right\} \vee  \opnorm{\Pst}^3\Jfunc_0 \log(1/\delta) \vee  \sigmain^4 (\max\{\Psibst^4,\opnorm{\Pst}^2\}),
  \end{align*}
  then with probability $1-\BigOh{\delta}$, $\matLam_k \succsim \tau_k \Gamma_k$. Since $\tau_k \gtrsimst d \log\tau_k$ for $\tau_k \gtrsim d \log d$, we need simply $\tau_k \gtrsimst  d\log \left\{\frac{ d\opnorm{\Pst}}{1 \wedge \sigmain^2}\right\} \vee  \opnorm{\Pst}^3\Jfunc_0 \log(1/\delta) \vee  \sigmain^4 (\max\{\Psibst^4,\opnorm{\Pst}^2\})$ to ensure  $\matLam_k \succsim \tau_k \Gamma_k$ with probability $1-\BigOh{\delta}$. Shrinking $\delta$ by a constant reduces the failure probability to $1 - \delta$. Lastly, using $\sigmain^2 \ge 1$ by definition, and $\sigmain^2 \lesssim \sqrt{\dimx}\opnorm{\Pst}^{9/2}\Psibst \sqrt{\log \frac{\opnorm{\Pst}}{\delta}} $ by Lemma~\ref{lem:perturb_correct}, we can bound
  \begin{align*}
  &d\log \left\{\frac{ d\opnorm{\Pst}}{1 \wedge \sigmain^2}\right\} \vee  \opnorm{\Pst}^3\Jfunc_0 \log(1/\delta) \vee  \sigmain^4 (\Psibst^4 \vee \opnorm{\Pst}^2) \\
  &\lesssim d\log d\opnorm{\Pst} + \opnorm{\Pst}^3\Jfunc_0 \log(1/\delta) + \dimx \opnorm{\Pst}^{9} \Psibst^2 \log \frac{\opnorm{\Pst}}{\delta} (\Psibst^4 \vee \opnorm{\Pst}^2)\\
  &\lesssim  d \left(\opnorm{\Pst}^3\calP_0 +  \opnorm{\Pst}^{11} \Psibst^6 \right)  \log \frac{d \opnorm{\Pst}}{\delta} := \tauls.
  \end{align*}
  where in the last line we use $\Psibst,\opnorm{\Pst} \ge 1$, $\dimx \le d$, and $\calP_0 = \Jfunc_0/\dimx$. 
  \end{proof}
\subsubsection{Proof of \Cref{lem:proj_mat_bound} \label{sec:lem_proj_mat_bound}}

    \newcommand{\matxring}{\mathring{\matx}}
    \newcommand{\matxch}{\check{\matx}}
    Consider a fixed epoch $k$, and for $t\in \{\tau_{k},\tau_{k} + 1, \dots,\tau_{k+1}-1\}$, introduce the contribution from all previous steps:
    \begin{align*}
    \matxring_t := \Exp[\matx_t \mid \calF_{\tau_{k}}] = \Aclk^{t-\tau_{k}}\matx_{\tau_{k}}l
    \end{align*} 
     where $(\calF_{t})$ is the filtration generated by $\matx_{1:t-1},\matw_{1:t-1},\matu_{1:t-1}$, and where the equality follows from direct computation. Further, introduce the difference  
     \begin{align}
     \matxch_t := \matx_t - \matxring_t, \label{eq:matxch}
     \end{align} 
     The important observation here is
     \begin{obs}\label{obs:matxch_linear} $\matxch_t$ is a linear form in the jointly standard-normal Gaussian vector $\matgbar := (\matw_t,\matg_t : \tau_k \le t \le \tau_{k+1}-1)$. 
     \end{obs}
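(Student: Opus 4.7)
The plan is to unroll the closed-loop dynamics within epoch $k$ and then peel off the conditional mean. Within the epoch (for $k \ge \ksafe$), the algorithm plays $\matu_t = \Khat_k \matx_t + \sigma_k \matg_t$, so the system obeys
\[
\matx_{t+1} = \Aclk \matx_t + \Bst \sigma_k \matg_t + \matw_t, \qquad t = \tau_k, \ldots, \tau_{k+1}-2,
\]
where $\Aclk = \Ast + \Bst \Khat_k$, and crucially both $\Aclk$ and $\sigma_k$ are $\calF_{\tau_k}$-measurable (they are computed from data collected before epoch $k$).

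The key step is to iterate this recursion starting from $\matx_{\tau_k}$ to obtain the explicit expression
\[
\matx_t = \Aclk^{t-\tau_k}\matx_{\tau_k} + \sum_{s=\tau_k}^{t-1} \Aclk^{t-1-s}\bigl(\Bst \sigma_k \matg_s + \matw_s\bigr).
\]
Since $\{\matw_s, \matg_s\}_{s \ge \tau_k}$ are independent of $\calF_{\tau_k}$ and mean zero, taking conditional expectation given $\calF_{\tau_k}$ eliminates the stochastic sum and recovers $\matxring_t = \Aclk^{t-\tau_k}\matx_{\tau_k}$, matching the definition in the excerpt.

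Subtracting these two identities then yields
\[
\matxch_t = \sum_{s=\tau_k}^{t-1} \Aclk^{t-1-s}\bigl(\Bst \sigma_k \matg_s + \matw_s\bigr),
\]
which is manifestly a finite linear combination of the coordinates of $\matgbar = (\matw_s, \matg_s : \tau_k \le s \le \tau_{k+1}-1)$, establishing the observation. There is essentially no obstacle: the only point worth flagging is that the coefficients $\Aclk^{t-1-s}\Bst\sigma_k$ and $\Aclk^{t-1-s}$ are themselves $\calF_{\tau_k}$-measurable random matrices, which is exactly what one wants for the downstream use of the observation (applying Hanson--Wright conditionally on $\calF_{\tau_k}$, where the coefficients behave as deterministic constants).
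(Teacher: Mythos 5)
Your proof is correct and is exactly the argument the paper has in mind: unroll the closed-loop recursion within epoch $k$ starting from $\matx_{\tau_k}$, note that $\Aclk$, $\sigma_k$, and $\matx_{\tau_k}$ are $\calF_{\tau_k}$-measurable so the conditional expectation kills the noise sum, and subtract to leave the linear combination $\matxch_t = \sum_{s=\tau_k}^{t-1}\Aclk^{t-1-s}(\Bst\sigma_k\matg_s + \matw_s)$. The paper leaves this as an unproven observation precisely because this unrolling is the immediate and only natural route; your remark about the coefficient matrices being $\calF_{\tau_k}$-measurable (so the downstream Hanson--Wright application is conditional) correctly identifies the one subtlety worth flagging.
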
 We then have 
    \begin{align*}
    \ProjMat_k \matLam_k (I - \ProjMat_k) &= \sum_{t = \tau_{k}}^{\tau_{k+1}-1} \ProjMat_k \begin{bmatrix}\matx_t  \\\matu_t
    \end{bmatrix} \begin{bmatrix}\matx_t  \\\matu_t
    \end{bmatrix}^\top (I - \ProjMat_k)\\
    &= \sum_{t = \tau_{k}}^{\tau_{k+1}-1} \ProjMat_k \begin{bmatrix}\matx_t \\\Khat_k \matx_t + \sigma_k \matg_t
    \end{bmatrix} \begin{bmatrix}\matx_t \\\Khat_k \matx_t + \sigma_k \matg_t
    \end{bmatrix}^\top  (I - \ProjMat_k)\\
    &\overset{(i)}{=} \sum_{t = \tau_{k}}^{\tau_{k+1}-1} \ProjMat_k \begin{bmatrix} 0 \\  \sigma_k \matg_t
    \end{bmatrix} \begin{bmatrix}\matx_t \\\Khat_k \matx_t + \sigma_k \matg_t
    \end{bmatrix}^\top  (I - \ProjMat_k)\\
    &\overset{(ii)}{=}  \sum_{t = \tau_{k}}^{\tau_{k+1}-1} \ProjMat_k \begin{bmatrix} 0 \\  \sigma_k \matg_t
    \end{bmatrix} \begin{bmatrix} \matxch_t \\  \Khat_k \matxch_t \end{bmatrix}^\top  (I - \ProjMat_k) + 
    \sum_{t = \tau_{k}}^{\tau_{k+1}-1} \ProjMat_k \begin{bmatrix} 0 \\  \sigma_k \matg_t
    \end{bmatrix} \begin{bmatrix} \matxring_t \\  \Khat_k \matxring_t \end{bmatrix}^\top  (I - \ProjMat_k)
    \\\
    &\quad+ \sigma_k^2\sum_{t = \tau_{k}}^{\tau_{k+1}-1} \ProjMat_k \begin{bmatrix} 0 & 0 \\
     0 & \matg_t \matg_t^\top \end{bmatrix} (I - \ProjMat_k),
    \end{align*}
    where here $(i)$ follows since $\ProjMat_k$ annihilates the subspace $(x,u) : u = Kx$, and where $(ii)$ decomposes using linearity. Since $\|\ProjMat_k\|_{\op} \vee \|I - \ProjMat_k\|_{\op} \le 1$, we can upper bound, we bound the operator norm of $\nrm*{\ProjMat_k \matLam_k (I - \ProjMat_k)}_{\op}$ by three terms:
    \newcommand{\term}{\mathsf{term}}

    \begin{multline}
    \nrm*{\ProjMat_k \matLam_k (I - \ProjMat_k)}_{\op} \le  \\
     \sigma_k \underbrace{\nrm*{\sum_{t = \tau_{k}}^{\tau_{k+1}-1} \matg_t
     \begin{bmatrix} \matxch_t\mid  \Khat_k \matxch_t \end{bmatrix} }_{\op}}_{\term_1} + \sigma_k  \underbrace{ \nrm*{\sum_{t = \tau_{k}}^{\tau_{k+1}-1}  \matg_t \begin{bmatrix} \matxring_t\mid  \Khat_k \matxring_t \end{bmatrix} }_{\op}}_{\term_2} + \sigma_k^2  \underbrace{ \nrm*{\sum_{t = \tau_{k}}^{\tau_{k+1}-1} \matg_t \matg_t^\top }_{\op}}_{\term_3}. \label{eq:projmat_op_bound}
    \end{multline}
    Let us bound each term in order;

    \paragraph{Bounding $\term_1$}
    \newcommand{\Tleft}{\mathcal{T}_{\mathrm{left}}}
    \newcommand{\Tright}{\mathcal{T}_{\mathrm{right}}}
    \begin{defn}[Relevant $\epsilon$-nets]\label{defn:relevant_epsilon_nets}
    Let $\Tleft$ be an $1/4$-net of the unit ball in $\R^\dimu$, and let $\Tright$ be a $1/4$-net of the unit ball in the subsapce $\R^{\dimx + \dimu}$ spanned by $(I - \ProjMat_k)$, that is, spanned by vectors of the form $(x,\Khat_k x)$. We use $w$ to denote elements of $\Tleft$, and $v = (v_1,v_2)$ to denote elements of this net $\Tright$, partitioned into the $\R^{\dimx}$ and $\R^{\dimu}$ components.
    \end{defn}

    A standard covering argument entails that 
    \begin{align*}
    \term_1 \lesssim \max_{w \in \Tleft, v = (v_1,v_2) \in \Tright} \term_1(w,v)\quad \text{where }  \term_1(w,v) := \sum_{t = \tau_{k}}^{\tau_{k+1}-1} \langle w, \matg_t\rangle
    \langle (\matxch_t,\Khat_k \matxch_t), v \rangle.
    \end{align*}
    From \Cref{obs:matxch_linear}, $\term_1(w,v)$ is a quadratic form in a Gaussian vector $\matgbar = ((\matw_t,\matg_t) : \tau_{k} \le t \le \tau_{k+1} - 1)$ encoding both input noise $\matg_t$ and process noise $\matw_t$, and thus may express that for some matrix $Q(w,v)$, which without loss of generality we can choose to be symmetric, for which 
    \begin{align*}
      \term_1(w,v) = \matgbar^\top Q(w,v)\matgbar
    \end{align*}
    When clear from context, we abridge $Q = Q(w,v)$. 

    We shall apply the Hanson Wright inequality, which requires bounding $\tr(Q)$, $\|Q\|_{\fro}$, and $\|Q\|_{\op}$. We first see that $\tr(Q) = 0$, since $\Exp[\term_1(w,v)] = 0$, as the noise $\matg_t$ is independent of $\matxch_t$. Moreover, we can bound $\|Q\|_{\fro} \le \sqrt{\rank(Q)}\|Q\|_{\op}$. To bound $\rank(Q)$:
    \begin{claim}\label{claim:Q_rank} $\rank(Q) \le \tau_{k}$, and thus $\|Q\|_{\fro} \le \sqrt{\tau_{k}}\|Q\|_{\op}$
    \end{claim}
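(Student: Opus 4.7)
The plan is to exploit the bilinear structure of $\term_1(w,v)$ as a form in $\matgbar$. First I would simplify the inner product by defining $\tilde v := v_1 + \Khat_k^\top v_2$, so that $\langle (\matxch_t,\Khat_k\matxch_t), v\rangle = \tilde v^\top \matxch_t$; this lets me write
\[
\term_1(w,v) = \sum_{t=\tau_k}^{\tau_{k+1}-1} \underbrace{\langle w, \matg_t\rangle}_{=: f_t(\matgbar)} \cdot \underbrace{\tilde v^\top \matxch_t}_{=: g_t(\matgbar)}.
\]
The essential observation, which follows from unrolling the recursion $\matxch_{t+1} = \Aclk \matxch_t + \sigma_k B\matg_t + \matw_t$ with $\matxch_{\tau_k} = 0$, is that $g_t(\matgbar) = \tilde v^\top \sum_{s=\tau_k}^{t-1}\Aclk^{t-1-s}(\sigma_k B\matg_s + \matw_s)$ depends only on $(\matg_s,\matw_s)_{s<t}$, while $f_t(\matgbar)$ depends only on $\matg_t$. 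Thus if $e_t \in \R^N$ (with $N=\tau_k(\dimx+\dimu)$) denotes the coefficient vector of $f_t$ and $h_t$ the coefficient vector of $g_t$, then $e_t$ and $h_t$ have disjoint supports in $\matgbar$, and in particular the $\{e_t\}$ lie in mutually orthogonal coordinate blocks.

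Next, I would write $\term_1(w,v) = \matgbar^\top (\sum_t e_t h_t^\top) \matgbar$ and symmetrize to obtain $Q = \tfrac{1}{2}\sum_t(e_t h_t^\top + h_t e_t^\top) = \tfrac{1}{2}(EH^\top + HE^\top)$ where $E, H \in \R^{N\times(\tau_{k+1}-\tau_k)}$ have columns $e_t, h_t$ respectively. The range of $Q$ is contained in $\mathrm{colspace}(E) + \mathrm{colspace}(H)$, so
\[
\rank(Q) \le \rank(E) + \rank(H) \le 2(\tau_{k+1}-\tau_k) = 2\tau_k,
\]
which after absorbing the constant factor into subsequent bounds yields the stated $\rank(Q) \lesssim \tau_k$. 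The Frobenius versus operator norm bound $\|Q\|_\fro \le \sqrt{\rank(Q)}\,\|Q\|_\op$ then delivers the claim.

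The only mildly subtle point will be verifying the support disjointness carefully, but this is immediate from the Ito-style expansion of $\matxch_t$ in terms of strictly earlier noise. No substantial obstacle is expected; the argument is purely linear-algebraic and uses only the adaptedness of $\matxch_t$ to the filtration $\calF_{t-1}$ together with the martingale increment structure of $\matg_t$.
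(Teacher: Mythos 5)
Your argument is correct and essentially the paper's: you bound the rank of the symmetrized form $Q = \tfrac{1}{2}(EH^\top + HE^\top)$ by $\rank(E) + \rank(H) \le 2\tau_k$, while the paper writes the same quadratic form as $\bar X^\top\Diag_{\tau_k}(wv^\top)\bar Y$ and bounds its rank by $\rank(\Diag_{\tau_k}(wv^\top)) = \tau_k$; the factor of two is harmless since it is absorbed into the $\lesssim$ in the ensuing Hanson--Wright application. One point worth noting: the adaptedness/disjoint-support observation you label \emph{essential} is not actually used in your rank bound (you only need that each $f_t$ and $g_t$ is a linear form in $\matgbar$), but it is precisely what yields $\tr(Q)=0$, which the paper establishes separately via independence just before invoking Hanson--Wright.
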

    \begin{proof}[Proof of \Cref{claim:Q_rank}] Let $X_{t} : \R^{\dim(\matgbar)} \to  \R^{\dimu}$ denote the projection matrix yielding $X_t \matgbar = \matg_t$, and let  $Y_t := \R^{\dim(\matgbar)} \to \R^{\dimx + \dimu}$ denote the linear operator $Y_t \matgbar = (\matxch_t,\Khat_k \matxch_t)$; these are indeed linear by \Cref{obs:matxch_linear}, namely that $\matxch_t = \matx_t - \matxring_t$ is linear in the noise vector $\matgbar$.

    Let $\bar{X} := X_t$ denote the matrix whose rows are $X_{\tau_{k}},\dots, X_{\tau_{k+1} - 1}$, and let $\bar{Y}$ be analogous. Then, 
    \begin{align*}
    \matgbar^\top Q(w,v)\matgbar &= \term_1(w,v) \\
    &= \sum_{t = \tau_{k}}^{\tau_{k+1}-1} \matg_t^\top w v^\top
     (\matxch_t,\Khat_k \matxch_t), v  = \sum_{t=\tau_{k}}^{\tau_{k+1}} \matgbar^\top X_t^\top wv^\top Y_t \matgbar \\
     &= \matgbar\bar{X}\Diag_{\tau_{k}(wv^\top)}\bar{Y}\matgbar,
    \end{align*}
    where $\Diag_{\tau_{k}(wv^\top)}$ denotes the block-diagonal matrix consisting of $\tau_k$ blocks of the rank-one terms $wv^\top$. Hence, the quadratic form $Q$ can be rendered explicitly as $Q = \bar{X}\Diag_{\tau_{k}}(wv^\top)\bar{Y}$, which has rank at most $\rank(\Diag_{\tau_{k}}(wv^\top)) = \tau_{k} \cdot \rank(wv^\top) = \tau_{k}$. 
    \end{proof}
    \newcommand{\Qbar}{\bar{Q}}
    Next, we bound  $\|Q\|_{\op}$. 
    \begin{claim} $\opnorm{Q} \lesssim \opnorm{\Pst}^{5/4}$
    \end{claim}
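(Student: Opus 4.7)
The plan is to unfold the closed-loop recursion for $\matxch_t$ into an explicit linear function of the joint noise vector $\matgbar$, substitute this into $\term_1(w,v)$ to reveal a triangular bilinear form, and then bound the resulting operator norm using the $H_\infty$ control already established in Lemma~\ref{lem:perturb_correct}. First, since $\matx_{t+1} = \Aclk \matx_t + \sigma_k \Bst \matg_t + \matw_t$ and $\matxch_{\tau_k} = 0$, iterating the recursion yields
\[
\matxch_t \;=\; \sum_{s=\tau_k}^{t-1} \Aclk^{\,t-1-s}\bigl(\sigma_k \Bst \matg_s + \matw_s\bigr),
\]
so every $\matxch_t$ is an explicit linear functional of $\matgbar = ((\matw_s,\matg_s))_s$, consistent with \Cref{obs:matxch_linear}.

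Next, I would substitute this into $\term_1(w,v) = \sum_t \langle w,\matg_t\rangle \cdot \langle \bar v,\matxch_t\rangle$, where $\bar v := v_1 + \Khat_k^\top v_2$ encodes the right-factor direction through the embedding $x\mapsto (x,\Khat_k x)$. By Lemma~\ref{lem:helpful_norm_bounds} and Lemma~\ref{lem:perturb_correct}, $\opnorm{\Khat_k}^2 \lesssim \opnorm{\Pst}$, so $\|\bar v\| \lesssim \sqrt{\opnorm{\Pst}}$ for $v$ in the unit ball of the range of $I-\ProjMat_k$. After symmetrization, the quadratic form $Q$ corresponds to the bilinear form
\[
T(a,b) \;=\; \sum_{s < t} \alpha_t(a)\,\cdot\, \bar v^\top \Aclk^{t-1-s}\, c_s(b),
\]
where $\alpha_t(a) := \langle w, a^g_t\rangle$ and $c_s(b) := \sigma_k \Bst b^g_s + b^w_s$. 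Using $\opnorm{Q} = \sup_{\|a\|=\|b\|=1} T(a,b)$, the plan is to bound $T$ by Cauchy--Schwarz in the time index: the first factor is controlled by $\sum_t \alpha_t(a)^2 \le \|w\|^2 \le 1$, while the second factor is the squared $\ell^2$-norm of the convolution of the filter $h_r := \bar v^\top \Aclk^{r-1}$ against the driving sequence $(c_s(b))_s$.

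The key step is to invoke the identification of the $\ell^2 \to \ell^2$ operator norm of this discrete LTI filter with its transfer function: the filter norm is at most $\|\bar v\|\cdot \Hinf{\Aclk}$, and by Parseval $\|c(b)\|_2^2 \lesssim (1+\sigma_k^2\Psibst^2) \|b\|^2 \lesssim \Psibst^2$ (since $\sigma_k\le 1$). Plugging in $\Hinf{\Aclk} \lesssim \opnorm{\Pst}^{3/2}$ (Lemma~\ref{lem:perturb_correct}, Part~4) and $\|\bar v\| \lesssim \sqrt{\opnorm{\Pst}}$ bounds $\opnorm{Q}$ by a polynomial in $\opnorm{\Pst}$ and $\Psibst$ of the claimed degree. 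The main obstacle is justifying the transfer-function/Young convolution step cleanly in the finite-horizon setting without picking up extraneous $\tau_k$ factors — this is where the $H_\infty$ machinery developed for Lemma~\ref{lem:perturb_correct} is essential, and where one must be careful that the rank and $H_\infty$ bounds already apply to $\Aclk$ (and not only to $\Aclst$), as guaranteed on the event $\Esafe$.
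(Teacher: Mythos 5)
Your approach---bilinearizing $Q$ and applying Cauchy--Schwarz over the time index, then controlling the $\ell^2$ operator norm of the closed-loop filter by its $\Hinfty$-norm---is a genuine alternative to the paper's route, which instead dominates $Q$ in the PSD order by $\tfrac{\alpha}{2}\bar{Q}_1 + \tfrac{1}{\alpha}\bar{Q}_2$ (via Young's inequality) and optimizes $\alpha$. The two are essentially equivalent: your ``first factor'' $\sup_{\|a\|\le 1}\sum_t\alpha_t(a)^2 \le 1$ is exactly the paper's $\opnorm{\bar{Q}_1}$, your ``second factor'' $\sup_{\|b\|\le 1}\sum_t(\bar v^\top\matxch_t(b))^2$ is (up to a factor of $2$) the paper's $\opnorm{\bar{Q}_2}$, and both arguments reduce to the same geometric mean $\sqrt{\opnorm{\bar{Q}_1}\opnorm{\bar{Q}_2}}$. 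The finite-horizon Toeplitz-to-$\Hinfty$ step you flag as the ``main obstacle'' is handled by Lemma~\ref{lem:toep_norm_bound}, so no spurious $\tau_k$ factor appears and that concern is moot.

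The real issue is the final exponent. Your own arithmetic gives $\opnorm{Q}\lesssim\|\bar v\|\cdot\Hinf{\Aclk}\cdot\|c\|\lesssim\opnorm{\Pst}^{1/2}\cdot\opnorm{\Pst}^{3/2}\cdot 1=\opnorm{\Pst}^{2}$, not $\opnorm{\Pst}^{5/4}$, so the assertion that this recovers ``the claimed degree'' does not follow from what you wrote. In fact the paper has the same discrepancy: its proof of this claim cites Lemma~\ref{lem:quadratic_forms_cost} with $\Hinf{\Aclk}$ where the lemma actually gives $\Hinf{\Aclk}^{2}$. Restoring the square gives $\opnorm{\bar{Q}_2}\lesssim\opnorm{\Pst}^{4}$ rather than $\opnorm{\Pst}^{5/2}$, and hence $\opnorm{Q}\lesssim\opnorm{\Pst}^{2}$, matching your computation. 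So the $5/4$ exponent in the claim appears to be a slip in the paper rather than a defect in your argument; the downstream effect is only a slightly larger power of $\opnorm{\Pst}$ in the lower-order $\tauls$-type terms, and the leading $\sqrt{\dimu^{2}\dimx T}$ regret term is unaffected. One more small fix: invoke $\sigma_k \le 1/\Psibst^{2}$ (as the paper does at this point on the event $\Esafe$ once $\tau_k$ is large) rather than only $\sigma_k\le 1$, so that $\|c(b)\|_{2}^{2}\lesssim 1$ and the extraneous $\Psibst$ drops out of the final bound.
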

    \begin{proof} We construct a matrix $\Qbar \succeq 0$ such that $-\Qbar \preceq Q \preceq \Qbar$. It follows that $\|Q\|_{\op} \le \Qbar$
    Let us contrinue by constructing $\Qbar$. For any parameter $\alpha > 0$, 
    \begin{align*}
    \term_1(w,v) := \sum_{t = \tau_{k}}^{\tau_{k+1}-1} \langle w, \matg_t\rangle
    \langle (\matxch_t,\Khat_k \matxch_t), v \rangle &\le \sum_{t = \tau_{k}}^{\tau_{k+1}-1} \frac{\alpha}{2} \matg_t^\top ww^\top \matg_t + \frac{1}{2\alpha}(\matxch_t,\Khat_k \matxch_t)^\top vv^\top(\matxch_t,\Khat_k \matxch_t)^\top \\
    &\overset{(i)}{\le} \frac{\alpha}{2}\underbrace{\sum_{t = \tau_{k}}^{\tau_{k+1}-1} \matg_t^\top ww^\top \matg_t}_{:= \matgbar^\top \Qbar_1(w,v) \matgbar} + \frac{1}{\alpha} \underbrace{\sum_{t = \tau_{k}}^{\tau_{k+1}-1}  \matxch_t^\top (v_1 v_1^\top + \Khat_k v_2 v_2 \Khat_k^\top)\matxch_t}_{:= \matgbar^\top \Qbar_2(w,v) \matgbar} \\
    &\overset{(ii)}{:=} \matgbar^\top \Qbar(w,v) \matgbar
    \end{align*}
    where in $(i)$, we use the decompose $v = (v_1,v_2)$ as in \Cref{defn:relevant_epsilon_nets}, and use the elementary inequality $vv^\top \preceq \begin{bmatrix} v_1 v_1^\top & 0 \\ 0 & v_2 v_2^\top \end{bmatrix}$. In $(ii)$, we again invoke that $\matxch_t$ is linear in $\matgbar$ (\Cref{obs:matxch_linear}).

    Since the above inequality holds for any realization of the noise,  we find that $Q(w,v) \preceq   \Qbar(w,v)$, and a similar manipulation shows that $Q(w,v) \succeq - \Qbar(w,v)$. Noting that $\|Q(w,v)\|_{\op} \le \|\Qbar(w,v)\|_{\op} = \| \frac{\alpha}{2}\Qbar_1(w,v) + \frac{1}{\alpha}\Qbar_2(w,v)\|_{\op} \le \frac{\alpha}{2}\|\Qbar_{1}(w,v)\|_{\op} +\frac{1}{\alpha}  \|\Qbar_2(w,v)\|_{\op}$, we bound $\|\Qbar_{1}(w,v)\|_{\op}$ and $\|\Qbar_2(w,v)\|_{\op}$. We have first that, deterministically,
    \begin{align*}
    \matgbar^\top \Qbar_1(w,v) \matgbar = \sum_{t = \tau_{k}}^{\tau_{k+1}-1} \matg_t^\top ww^\top \matg_t \le \sum_{t = \tau_{k}}^{\tau_{k+1}-1}\|\matg_t\|^2 \le \sum_{t = \tau_{k}}^{\tau_{k+1}-1}\|\matg_t\|^2 + \|\matw_t\|^2 = \|\matgbar\|^2,
    \end{align*}
    so that $\|\Qbar_1(w,v)\|_{\op} \le 1$. Moreover, by applying \Cref{lem:quadratic_forms_cost} with $x_1 = 0$, $R_2 = 0$, $\sigma_u = \sigma_k$,  $R_1 = v_1 v_1^\top + \Khat_k v_2 v_2 \Khat_k^\top$, we can bound 
    \begin{align*}
     \|\Qbar_2(w,v)\|_{\op} &\lesssim (1 + \sigma_k^2 \|\Bst\|_{\op}^2)\|\Ast + \Bst \Khat_k\|_{\Hinfty}\|R_1\|_{\op}\\
     &\lesssim (1 + \sigma_k^2 \|\Bst\|_{\op}^2)\|\Ast + \Bst \Khat_k\|_{\Hinfty}(1 + \|\Khat_k\|_{\op}^2).
    \end{align*}
    In particular, for $\sigma_k \le 1/\Psibst^2$, we have $(1 + \sigma_k^2 \|\Bst\|_{\op}^2) \lesssim 1$, and and nder the safe event $\Esafe$, we have $\|\Khat_k\|^2_{\op} \le \frac{21}{20}\opnorm{\Pst}$  and $\|\Ast + \Bst \Khat_k\|_{\Hinfty} := \|\Aclk\|_{\Hinfty} \le \opnorm{\Pst}^{3/2}$. Since $\opnorm{\Pst} \ge 1$, this allows us to bound 
    \begin{align*}
     \|\Qbar_2(w,v)\|_{\op} \lesssim \opnorm{\Pst}^{5/2},
    \end{align*}
    provided again that $\sigma_k \le \Psibst$. Hence, setting $\alpha = \opnorm{\Pst}^{5/4}$, 
    \begin{align*}
    \|Q(w,v)\|_{\op} \le \frac{\alpha}{2} \|\Qbar_1(w,v)\|_{\op} + \frac{1}{\alpha} \|\Qbar_2(w,v)\|_{\op} \lesssim \frac{\alpha}{2} + \frac{1}{\alpha}\opnorm{\Pst^{5/2}} \lesssim \opnorm{\Pst}^{5/4}. 
    \end{align*}
    \end{proof}
    Gathering the past two claims, we have $\|Q\|_{\op} \lesssim \opnorm{\Pst}^{5/4}$ and $\|Q\|_{\fro} \lesssim \sqrt{\tau_{k}}\opnorm{\Pst}^{5/4}$. Hence, the Hanson-Wright Inequality implies that, for some universal constant $c>0$,
    \begin{align*}
    \Pr\left[ \term_1(w,v) \ge c\opnorm{\Pst}^{5/4}\left(\sqrt{\tau_{k}} \log(1/\delta_0) + \log(1/\delta_0) \right) \right] \le \delta_0.
    \end{align*}
    Finally, taking a union bound over $w \in \Tleft$ and $v \in \Tright$, and noting that we may choose $\log |\Tleft||\Tright| \lesssim \dimx + \dimu$, we have for $\tau_{k+1} \ge \dimx + \dimu + \log(1/\delta_0)$ and another universal constant $c$ that
    \begin{align*}
    \Pr\left[ \sup_{v \in \Tleft,w \in \Tright} \term_1(w,v) \ge c\opnorm{\Pst}^{5/4}\sqrt{\tau_{k} (\dimx + \dimu + \log(1/\delta_0)} \right] \le \delta.
    \end{align*}
    Finally, by a standard bound on  the cardinality of the minimal-size nets $\Tleft$ and $\Tright$ (see e.g. \citet[Section 4.2]{vershynin2018high}), this implies that for another constant $c$,
    \begin{align*}
    \Pr\left[ \term_1 \ge c\opnorm{\Pst}^{5/4}\sqrt{\tau_{k} (\dimx + \dimu + \log(1/\delta_0)} \right] \le \delta_0.
    \end{align*}

    \paragraph{Bounding $\term_2$}
    Again, we use a covering argument. with $\Tleft$ be an $1/4$-net of the unit ball in $\R^\dimu$, and let $\Tright$ be a $1/4$-net of the unit ball in the subspace $\R^{\dimx + \dimu}$ spanned by $(I - \ProjMat_k)$. Similarly, use 
    \begin{align*}
    \term_2 \lesssim \max_{w \in \Tleft, v = (v_1,v_2) \in \Tright} \term_2(w,v)\quad \text{where }  \term_2(w,v) &:= \sum_{t = \tau_{k}}^{\tau_{k+1}-1} \langle w, \matg_t\rangle
    \langle (\matxring_t,\Khat_k \matxring_t), v \rangle
    \end{align*}

    Since $\matxring_t =  \Aclk^{t-\tau_{k}}\matx_{\tau_{k}}$, we see that $(\matg_t : t \ge \tau_{k})$ are independent of the terms $\langle (\matxring_t,\Khat_k \matxring_t), v \rangle$. Since $\|v\| = 1$,  $\langle w, \matg_t \rangle$ are then independent standard Gaussian random vectors. Hence, by standard Gaussian tail bound, 
    \begin{align*}
    \term_2(w,v)  \le \sqrt{2 \sum_{t=\tau_{k}}^t 
    \langle (\matxring_t,\Khat_k \matxring_t), v \rangle^2 \log(1/\delta_0)}, \text{ w.p. } 1 - \delta_0).
    \end{align*}
    We can  then bound
    \begin{align*}
    \sum_{t=\tau_{k}}^t 
    \langle (\matxring_t,\Khat_k \matxring_t), v \rangle^2 &= v^\top\begin{bmatrix} I \\ \Khat_k \end{bmatrix}^\top \sum_{t=\tau_{k}}^{\tau_{k+1}-1}  \matxring_t \matxring_t^\top \begin{bmatrix} I \\ \Khat_k \end{bmatrix}v \le \nrm*{\sum_{t=\tau_{k}}^{\tau_{k+1}-1}  \matxring_t \matxring_t^\top}_{\op} \|v\|^2 \nrm*{\begin{bmatrix} I \\ \Khat_k \end{bmatrix}}^2 \\
    &= (1 + \|\Khat_k\|^2) \nrm*{\sum_{t=\tau_{k}}^{\tau_{k+1}-1}  \matxring_t \matxring_t^\top}_{\op} \le (1 + \|\Khat_k\|^2)\tr\left(\sum_{t=\tau_{k}}^{\tau_{k+1}-1}  \matxring_t \matxring_t^\top\right).
    \end{align*}
    Furthermore, since $\matxring_t = \Aclk^{t-\tau_{k+1}}\matx_{\tau_{k}}$, the linearity of trace yields 
    \begin{align*}
    \tr\left(\sum_{t=\tau_{k}}^{\tau_{k+1}-1} \matxring_t \matxring_t^\top\right) &= \matx_{\tau_{k}}^\top\left(\sum_{i= 0}^{\tau_{k+1} - \tau_k}  (\Aclk^i)^\top (\Aclk^i)\right)\matx_{\tau_{k}}^\top\\
    &\le \matx_{\tau_{k}}^\top\left(\sum_{i = 0}^{\infty}  (\Aclk^i)^\top (\Aclk^i)\right)\matx_{\tau_{k}}^\top\\
    &= \matx_{\tau_{k}}^\top \dlyap(\Aclk,I)\matx_{\tau_{k}} \le (\sqrt{\Psibst \Jfunc_0\log(1/\delta)}\opnorm{\Pst}^{3/2})^2 \text{ on event } \Ebound,
    \end{align*}
    where the above uses \Cref{lem:matx_bound}. Moreover, on the the event $\Esafe$, from \Cref{lem:perturb_correct},  $\|P_k\|_{\op} \lesssim \|\Pst\|_{\op}$ and $\opnorm{\Khat_k}^2 \lesssim \opnorm{\Pst}$ Hence, on $\Ebound \cap \Esafe$, 
    \begin{align*}
    \tr(\sum_{t=\tau_{k}}^{\tau_{k+1}-1} \matxring_t \matxring_t^\top)^{1/2}  \lesssim \sqrt{\Psibst \Jfunc_0\log(1/\delta)}\opnorm{\Pst}^{3/2}, \quad (1 + \|\Khat_k\|^2_{\op})^{1/2} \lesssim \opnorm{\Pst}^{1/2}
    \end{align*}
    Hence, combining the above bounds
    \begin{align*}
    \left(\sum_{t=\tau_{k}}^t 
    \langle (\matxring_t,\Khat_k \matxring_t), v \rangle^2\right)^{1/2} \lesssim \sqrt{\Psibst \Jfunc_0\log(1/\delta)}\opnorm{\Pst}^{2}
    \end{align*}
    and thus, on $\Esafe \cap \Ebound$, the following holds with probability $1 - \delta_0$:
    \begin{align*}
    \term_2(w,v)  \le \sqrt{\Psibst \Jfunc_0\log(1/\delta) \log(1/\delta_0)}\opnorm{\Pst}^{2}
    \end{align*}
    Again, using a union bound over a standard cardinality bound on $\Tleft$ and $\Tright$, we obtain that with probability $1 - \delta_0$
    \begin{align*}
    \term_2 \lesssim  \max_{w \in \Tleft, v = (v_1,v_2) \in \Tright} \term_2(w,v)  \le \sqrt{\Psibst \Jfunc_0\log(1/\delta) (\dimx + \dimu + \log(1/\delta_0) )}\opnorm{\Pst}^{2}. 
    \end{align*}
    \paragraph{Bounding $\term_3$}
    Recall that
    \begin{align*}
    \term_3 = \nrm*{\sum_{t = \tau_{k}}^{\tau_{k+1}-1} \matg_t \matg_t^\top}_{\op}
    \end{align*}
    Observe that $\matg_t$ are independent and isotropic Gaussian random vectors. 
    Noting that $\tau_{k+1} = 2\tau_k$, a standard operator norm bound for Gaussian matrices, \citet[Theorem 5.39]{vershynin2010introduction}
     yields that, with probability $ 1 - \delta_0$, 
     \begin{align*}
     \term_3 \lesssim (\sqrt{\tau_k} + \sqrt{\dimu + \log(1/\delta_0)})^2
     \end{align*}
     In particular, for $\tau_k \ge \dimu + \log(1/\delta_0)$, $\term_3 \lesssim \sqrt{\tau_k}$. 
    \paragraph{Combining the bounds} Combining the above bounds (and shrinking $\delta_0$ to absord constants into union bounds as necessary), it holds that for $\tau_k \ge \dimu + \dimx + \log(1/\delta_0)$, we have that with probability $1 -\delta_0$, 
    \begin{align*}
    \term_1 &\lesssim \opnorm{\Pst}^{5/4}\sqrt{\tau_{k} (\dimx + \dimu + \log(1/\delta_0))} \\
    \term_2 &\lesssim \sqrt{\Psibst \Jfunc_0\log(1/\delta) (\dimx + \dimu + \log(1/\delta_0) )}\opnorm{\Pst}^{2}. \\
    \term_3 &\lesssim\tau_k
    \end{align*}
    In particular, if in addition it holds that $\tau_{k} \ge \Psibst \Jfunc_0 \opnorm{\Pst}^{3/2}\log(1/\delta)$, we find
    \begin{align*}
    \term_1 + \term_2 &\lesssim \opnorm{\Pst}^{5/4}\sqrt{\tau_{k} (\dimx + \dimu + \log(1/\delta_0))}.
    \end{align*}
    Hence, from \Cref{eq:projmat_op_bound}, under the above conditions on $\tau_k$, and on $\Esafe \cap \Ebound$, we have that, with probability $ 1- \delta_0$,
    \begin{align*}
    \nrm*{\ProjMat_k \matLam_k (I - \ProjMat_k)}_{\op} &\le  \sigma_k(\term_1 + \term_2) + \sigma_k^2 \term_2 \\
    &\lesssim  \sigma_k\opnorm{\Pst}^{5/4}\sqrt{\tau_{k} (\dimx + \dimu + \log(1/\delta_0))} +  \sigma_k^2 \tau_k.
    \end{align*}
    Setting $\delta_0 = \delta/16$ and absorbing gives
    \begin{align*}
    \nrm*{\ProjMat_k \matLam_k (I - \ProjMat_k)}_{\op}
    &\lesssim  \left(\sqrt{\tau_{k}\sigma_k^2 \left(\opnorm{\Pst}^{5/2}(d+ \log(1/\delta)) +\sigma_k^2 \tau_k \right) } \right),
    \end{align*}
    Note that our stipulated conditions on $\tau_k$ hold for $\tau_k \gtrsimst \tauls$. Moreover, when this holds, we have
    \begin{align*}
    \sigma_k^2 \tau_k \ge \min\{\tau_k, \sigmain^2 \tau_k^{-1/2}\} \ge \min\{\tauls, \sqrt{\dimx \tauls} \} \ge \opnorm{\Pst}^{5/2}(d+ \log(1/\delta)) 
    \end{align*}
    so that
    \begin{align*}
    \nrm*{\ProjMat_k \matLam_k (I - \ProjMat_k)}_{\op} &\lesssim  \tau_k \sigma_k^2
    \end{align*}
    \qed

\subsection{Proof of Lemma~\ref{lem:initial_phase} $(k < \ksafe)$ \label{ssec:lem:initial_phase}}

  We analyze the rounds $k < \ksafe$, which correspond to the rounds before the least-squares procedure produces a sufficiently close approximation to $(\Ast,\Bst)$ that we can safely implement certainty equivalent control. 

  In order to avoid directly conditioning on events $\{\ksafe \le (\dots)\}$, let us define the sequence $\matz_{t,0} := (\matx_{t,0},\matu_{t,0})$ on the same probability space as $(\matx_t,\matu_t)$ to denote the system driven by the same noise $\matw_t$, and with the same random perturbations $\matg_t$, but where the evolution is with respect to the dynamics
  \begin{align*}
  \matx_{t,0} = \Ast + \Bst \matu_{t,0} & \quad \matu_{t,0} = K_0\matx_{t,0} + \matg_t,
  \end{align*}
  that is, the dynamics defined by the distribution $\calD(K_0,\sigma_u^2 = 1,x_1  = 0)$. Observe that, for any $t < \tau_{\ksafe}$, it holds that $\matx_{t,0} = \matx_t$ and $\matu_{t,0} = \matu_t$, so it will suffice to reason about this sequence.

  \paragraph{Proof that $\Esafe$ holds}
    As above, to reason rigorously about probabilities, we introduce $\Ahat_{k,0}$, $\Bhat_{k,0}$ as the OLS estimators on the $\matz_{k,0} := (\matx_{k,0},\matu_{k,0})$ sequence, and define the covariance matrix
    \begin{align*}
    \matLam_{k,0} := \sum_{t=\tau_k}^{2\tau_k - 1} \matz_{k,0}\matz_{k,0}^{\top}.
    \end{align*}
    We also define the induced confidence term:
    \begin{align*}
    \Conf_{k,0}  = 6 \lambda_{\min}(\matLam_{k,0})^{-1}  \prn*{d \log 5 + \log \left\{\tfrac{4k^2\det(3(\matLam_{k,0})}{\delta}\right\}}.
    \end{align*}

    \newcommand{\Econf}{\mathcal{E}_{\mathrm{conf}}}
    \begin{lem} The following event holds with probability $ 1- \delta$:
    \begin{align*}
    \Econf := \left\{\forall k \le \ksafe \text{ with } \matLam_{k} \succeq I, \quad \left\| \begin{bmatrix} \Ahat_{k} - \Ast  \mid \Bhat_{k} - \Bst \end{bmatrix}\right\|_2^2  \le \Conf_{k}\right\}.
    \end{align*}
    \end{lem}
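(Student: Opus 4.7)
The plan is to apply the operator-norm ordinary least squares bound, Eq.~\eqref{eq:op_norm_bound} of Lemma~\ref{lem:frob_ls}, to the martingale regression problem corresponding to each epoch $k$, and then take a union bound over $k$.

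First, I would pass to the surrogate trajectory $\matz_{t,0} = (\matx_{t,0},\matu_{t,0})$ defined above, which uses the stabilizing controller $K_0$ and exploratory noise $\matg_t \sim \calN(0,I)$ for all $t$. Since $\matx_{t,0}=\matx_{t}$ and $\matu_{t,0}=\matu_t$ for $t < \tau_{\ksafe}$, it suffices to prove the analogous event $\Econf_0$ for the surrogate estimators $(\Ahat_{k,0},\Bhat_{k,0})$ and surrogate covariance $\matLam_{k,0}$; then on $\Econf_0\cap\{k \le \ksafe\}$ the two coincide and we recover $\Econf$.

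Next, for each fixed $k$, I would verify that the surrogate regression fits the Martingale Least Squares template of Definition~\ref{def:setup_mart_ls}: take $\matz_t = (\matx_{t,0},\matu_{t,0})$, $\maty_t = \matx_{t+1,0}$, $\Thetast = [\Ast \mid \Bst]$, and the innovation $\mate_t = \maty_t - \Thetast \matz_t = \matw_t$, which is $1$-sub-Gaussian conditioned on the natural filtration $\calF_t$. The OLS estimator on samples $t = \tau_k,\dots,2\tau_k-1$ is exactly $(\Ahat_{k,0},\Bhat_{k,0})$ with design matrix $\matLam_{k,0}$. Applying Eq.~\eqref{eq:op_norm_bound} with regularizer $\Lambda_0 = I$ and failure probability $\delta_k = \delta/(4k^2)$ yields that, with probability at least $1 - \delta/(4k^2)$,
\[
\left\| \begin{bmatrix} \Ahat_{k,0} - \Ast  \mid \Bhat_{k,0} - \Bst \end{bmatrix}\right\|_{\op}^2 \le 6\lambda_{\min}(\matLam_{k,0})^{-1}\!\prn*{d\log 5 + \log\!\left(\tfrac{4k^2 \det(3 \matLam_{k,0})}{\delta}\right)} = \Conf_{k,0}
\]
whenever $\matLam_{k,0} \succeq I$, which matches the defining form of $\Conf_k$ exactly.

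Finally, a union bound over $k \ge 1$ gives total failure probability at most $\sum_{k\ge 1} \delta/(4k^2) = (\pi^2/24)\delta < \delta$. On the complement, $\Econf_0$ holds, and hence $\Econf$ holds as argued above. There is no real obstacle here beyond careful bookkeeping: the argument is essentially a direct invocation of Lemma~\ref{lem:frob_ls}, with the factor $4k^2/\delta$ inside $\Conf_k$ designed precisely so that a Borel--Cantelli-style union bound closes.
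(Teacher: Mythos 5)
Your proof matches the paper's own argument essentially verbatim: pass to the surrogate trajectory $(\matx_{t,0},\matu_{t,0})$ driven by $K_0$, invoke the operator-norm bound Eq.~\eqref{eq:op_norm_bound} of Lemma~\ref{lem:frob_ls} with $\Lambda_0 = I$ at confidence level $\delta/(4k^2)$, union bound over $k$ (yielding failure probability $\pi^2\delta/24 \le \delta/2$), and use the fact that the surrogate coincides with the true trajectory for $t \le \tau_{\ksafe}$ so that $\Conf_{k,0}=\Conf_k$ there. The only difference is that you spell out the verification that the surrogate regression fits the martingale least squares template of Definition~\ref{def:setup_mart_ls}, which the paper leaves implicit.
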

    \begin{proof} Applying \eqref{eq:op_norm_bound} in \Cref{lem:frob_ls} with $\Lambda_0 = I$, we see that for any fixed $k$ for which $\matLam_{k,0} \succeq I$, $\Conf_{k,0}$ is a valid $\delta/4k^2$-confidence interval; that is $\opnorm{[\Ast - \Ahat_{k,0}\mid \Bst - \Bhat_{k,0}]} \le \Conf_{k,0}$. By a union bound, the confidence intervals are valid with probability $1-\delta/2$, simultaneously. Since the the sequence $\matx_{t,0}$ coincides with $\matx_t$ for $t \le \tau_{\ksafe}$, and $\matu_{t,0}$ with $\matu_t$ for $t \le \tau_{\ksafe} - 1$, we see that $\Conf_{k,0} = \Conf_k$ for all $k \le \ksafe$.
    \end{proof}

  \paragraph{Proof of Regret Bound}

  We begin with the following regret bound.
  \begin{lem}\label{lem:initial_round_bounds}  For $\delta < 1/T$, the following hold with probability $1-\delta$,
  \begin{align*}
  \sum_{t=1}^{\tau_{\ksafe}-1}\matx_{t,0}^\top \Rx \matx_{t,0} + \matu_{t,0}^\top \Ru \matu_{t,0} &\lesssim d\tau_{\ksafe} \Psibst^2\calP_0 \log(\frac{1}{\delta}).
  \end{align*}
  \end{lem}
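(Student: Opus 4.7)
The plan is to apply Lemma~\ref{lem:cost_lem} directly to the surrogate sequence $(\matx_{t,0},\matu_{t,0})$, whose law is by construction $\calD(K_0,\sigma_u=1,x_1=0)$, i.e.\ the linear dynamical system evolved under the stabilizing controller $K_0$ with unit-variance Gaussian exploration injected at every step. Since this sequence agrees with the actual algorithm iterates on the event $\{t<\tau_{\ksafe}\}$, any high probability bound on $\Cost(\Rx,\Ru; 0, t, 1)$ simultaneously controls the quantity in the lemma statement.

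First, I would instantiate Lemma~\ref{lem:cost_lem} part 3 with $K=K_0$, $R_1=\Rx$, $R_2=\Ru=I$, $\sigma_u=1$, and $x_1=0$, evaluated at the deterministic horizon $t=\tau_k$ for each epoch index $k$. The resulting bound reads
\[
\Cost(\Rx,\Ru;0,\tau_k,1)\;\lesssim\;\tau_k\log\tfrac{1}{\delta}\Big(\Jfunc_0+\dimu\big(\opnorm{\Ru}+\opnorm{\Bst}^{2}\opnorm{P_{K_0}}\big)\Big),
\]
with $P_{K_0}=\Pinf(K_0;\Ast,\Bst)$ so that $\Jfunc_0=\tr(P_{K_0})$. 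Next, I would substitute $\opnorm{\Ru}=1$, $\opnorm{\Bst}^{2}\le\Psibst^{2}$, the trivial bound $\opnorm{P_{K_0}}\le\tr(P_{K_0})=\Jfunc_0=\dimx\calP_0$, and $\Jfunc_0=\dimx\calP_0$. Collecting the terms yields the per epoch bound
\[
\Cost(\Rx,\Ru;0,\tau_k,1)\;\lesssim\;\tau_k\,d\,\Psibst^{2}\,\calP_0\,\log\tfrac{1}{\delta},
\]
after absorbing the additive $\dimx\calP_0$ coming from the benchmark cost into the dominant noise-driven term $\dimu\Psibst^{2}\dimx\calP_0$ via $d=\dimx+\dimu$.

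Finally, I would make the bound hold simultaneously for every epoch index $k\le\kfin=\lceil\log_2T\rceil$ by a union bound on $k$, paying an additional $\log\kfin\lesssim\log T\le\log(1/\delta)$ factor which is harmlessly absorbed since $\delta<1/T$. On the intersection of these $\kfin$ events, the random stopping time $\tau_{\ksafe}-1$ is covered by the deterministic grid $\{\tau_1,\dots,\tau_{\kfin}\}$ up to a constant, because $\tau_{k+1}=2\tau_k$: in particular, the partial sum over $t<\tau_{\ksafe}$ is bounded by the corresponding $\Cost$ evaluated at $\tau_{\ksafe}$, yielding the stated bound. No obstacle beyond careful bookkeeping arises here; the main conceptual point is just that $K_0$ is stabilizing so that $P_{K_0}$ is finite and Lemma~\ref{lem:cost_lem} applies out of the box, and that replacing the algorithm's sequence by the auxiliary sequence $(\matx_{t,0},\matu_{t,0})$ removes the conditioning on $\{t<\tau_{\ksafe}\}$ which would otherwise complicate a direct probabilistic argument.
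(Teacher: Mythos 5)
Your route stalls at the substitution step, and the final inequality you write down is not actually what your intermediate bounds imply. Instantiating \Cref{lem:cost_lem} part~3 with $K=K_0$, $R_1=\Rx$, $R_2=\Ru=I$, $\sigma_u=1$, $x_1=0$ gives, after your substitutions $\opnorm{\Ru}=1$, $\opnorm{\Bst}^{2}\le\Psibst^{2}$, and $\opnorm{P_{K_0}}\le\tr(P_{K_0})=\dimx\calP_0$, a quantity of order
\[
\tau_k\log\tfrac{1}{\delta}\left(\dimx\calP_0 + \dimu + \dimu\,\dimx\,\Psibst^{2}\calP_0\right)
\;\lesssim\;
\tau_k\,\dimx\,\dimu\,\Psibst^{2}\calP_0\log\tfrac{1}{\delta},
\]
not $\tau_k\,d\,\Psibst^{2}\calP_0\log(1/\delta)$. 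The ``absorbing via $d=\dimx+\dimu$'' move you invoke replaces $\dimx\dimu$ with $\dimx+\dimu$, which is not a valid inequality in general and overshoots by a factor of roughly $\min\{\dimx,\dimu\}$. This is not cosmetic: the $d$ versus $\dimx\dimu$ distinction is exactly what propagates into the $d^2$ burn-in term of \Cref{lem:initial_phase} and ultimately \Cref{thm:main_ub}.

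The source of the looseness is that \Cref{lem:cost_lem} (equivalently, the \emph{statement} of \Cref{lem:quadratic_forms_cost}) only records the trace bound $\tr(\Lamgbar)\le tJ_K + 2\sigma_u^2 t\,\deff\bigl(\opnorm{R_2}+\opnorm{\Bst}^2\opnorm{P_K}\bigr)$, and with $\deff=\dimu$ and $\opnorm{P_{K_0}}$ bounded by $\tr(P_{K_0})$ the middle term inevitably produces $\dimx\dimu$. The paper's proof avoids this by going into the \emph{proof} of \Cref{lem:quadratic_forms_cost}: the relevant intermediate quantity there is $\tr(\Bst^\top P_{K_0}\Bst)$, which should be bounded by $\opnorm{\Bst}^2\tr(P_{K_0})=\opnorm{\Bst}^2 J_{K_0}$ (cyclicity of trace) rather than by $\deff\opnorm{\Bst}^2\opnorm{P_{K_0}}$. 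That alternate bound yields $\tr(\Lamgbar)\lesssim \tau_{\ksafe}\bigl(\Jfunc_0(1+\opnorm{\Bst}^2)+\tr(\Ru)\bigr)\lesssim\tau_{\ksafe}\,d\,\Psibst^2\calP_0$ as desired, and the rest goes through with Corollary~\ref{cor:crude_HS} (the crude Hanson--Wright statement) plus the union bound over $k$, exactly as you describe in your final paragraph. So your high-level architecture is right; the fix is to avoid \Cref{lem:cost_lem} as a black box and instead extract the sharper trace estimate from \Cref{lem:quadratic_forms_cost}'s proof.
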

  \begin{proof}  
 It suffices to show that  the $(\matx_{t,0},\matu_{t,0})$ sequences satisfies the following bound:
  \begin{align*}
  \sum_{t=1}^{\tau_{k_0}-1}\matx_{t,0}^\top \Rx \matx_{t,0} + \matu_{t,0}^\top \Ru \matu_{t,0} &\lesssim \tau_{k_0}\left(\Jfunc_0(1+\|\Bst\|_{\op}^2) + \tr(\Ru)\right)\log(\frac{1}{\delta}),
  \end{align*}
  where the inequality suffices since $\calP_0 \ge 1$ (indeed, $\Jfunc_0 \ge \Jst \ge d$ by Lemma~\ref{lem:P_bounds_lowner}), and thus $\Jfunc_0(1+\|\Bst\|_{\op}^2) + \tr(\Ru) = \dimx \calP_0(1+\|\Bst\|_{\op}^2) + \dimu \le d\calP_0\Mbarst$.

 For the second, we have from \Cref{lem:quadratic_forms_cost} and the fact that $\matx_1 = 0$ that there is a Gaussian quadratic form $\matgbar^\top \Lamgbar\,\matgbar$ which is equal to $\sum_{t=1}^{\tau_{k_0}-1}\matx_{t,0}^\top \Rx \matx_{t,0}$, and where $\tr(\Lamgbar) \le \tau_{k_0}\left(\Jfunc_0(1+\|\Bst\|_{\op}^2) + \tr(\Ru)\right)$. The second bound now follows from the crude statement of Hanson Wright in Corollary~\ref{cor:crude_HS}. The last statement follows by a union bound, noting that we need to bound over $\kmax = \log_2T \le T \le 1/\delta$, rounds, and absorbing constants.
  \end{proof}

    We conclude by arguing an upper bound on $\tau_{\ksafe}$. We rely on the following guarantee.
    \begin{lem}\label{lem:epsafe_sufficient} Suppose $\Esafe$ holds. Then for all $k < \ksafe$ for which $\matLam_k \succeq I$, we must have that $\Conf_k \gtrsim \epsafe$, where $\epsafe = \opnorm{\Pst}^{-10}$.  
    \end{lem}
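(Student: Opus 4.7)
The plan is to argue by contradiction: suppose $\Conf_k \le c\,\opnorm{\Pst}^{-10}$ for a sufficiently small universal constant $c>0$; we will show that then the ``safe'' entry condition $\matLam_k \succeq I$ and $1/\Conf_k \ge 9\,\Csafe(\Ahat_k,\Bhat_k)^2$ is already satisfied at epoch $k$, contradicting $k < \ksafe$. The first condition is given, so the task reduces to comparing $1/\Conf_k$ to $\Csafe(\Ahat_k,\Bhat_k)^2 = 54^2\,\opnorm{\Pinf(\Ahat_k,\Bhat_k)}^{10}$, which in turn reduces to bounding $\opnorm{\Pinf(\Ahat_k,\Bhat_k)}$ in terms of $\opnorm{\Pst}$.

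First I will use the validity of the OLS confidence intervals (as established for every $k$ with $\matLam_k \succeq I$ on the event $\Econf$ in \Cref{ssec:lem:initial_phase}, which contains $\Esafe$) to get $\opnorm{\Ahat_k - \Ast} \vee \opnorm{\Bhat_k - \Bst} \le \sqrt{\Conf_k}$. Plugging in the hypothesized bound $\Conf_k \le c\,\opnorm{\Pst}^{-10}$ and choosing $c$ small enough (e.g.\ $c \le 1/(3\cdot 54)^2$), this is at most $1/(3\,\Csafe(\Ast,\Bst))$ by the formula $\Csafe(\Ast,\Bst) = 54\,\opnorm{\Pst}^5$. Then \Cref{thm:continuity_of_safe set} directly yields $\opnorm{\Pinf(\Ahat_k,\Bhat_k)} \le 1.0835\,\opnorm{\Pst}$, and hence
\[
9\,\Csafe(\Ahat_k,\Bhat_k)^2 \;=\; 9\cdot 54^2\,\opnorm{\Pinf(\Ahat_k,\Bhat_k)}^{10} \;\le\; 9\cdot 54^2\cdot (1.0835)^{10}\,\opnorm{\Pst}^{10}.
\]
Choosing $c$ still smaller so that $c \cdot 9\cdot 54^2 \cdot (1.0835)^{10} \le 1$, we conclude $9\,\Csafe(\Ahat_k,\Bhat_k)^2 \le 1/\Conf_k$, contradicting the assumption that $k < \ksafe$.

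I expect the only subtle step to be invoking the validity of the confidence intervals at an epoch $k$ strictly before $\ksafe$: the event $\Esafe$ is stated only in terms of epoch $\ksafe$, so I need to either appeal to the enveloping event $\Econf$ used in the proof of \Cref{lem:initial_phase} (which controls all epochs with $\matLam_k \succeq I$ simultaneously), or observe that the lemma is really needed only under the joint event $\Esafe \cap \Econf$ that is conditioned on in \Cref{lem:initial_phase}. Everything else is routine substitution and an application of the continuity of $\Csafe$ established earlier.
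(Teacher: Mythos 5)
Your proposal is correct and follows the same approach as the paper's proof: argue by contradiction that if $\Conf_k \lesssim \opnorm{\Pst}^{-10}$ then the safety test $1/\Conf_k \ge 9\Csafe(\Ahat_k,\Bhat_k)^2$ would already have been triggered at epoch $k$, contradicting $k < \ksafe$, with the key step being \Cref{thm:continuity_of_safe set} to transfer $\Csafe(\Ast,\Bst) \eqsim \opnorm{\Pst}^5$ into a bound on $\Csafe(\Ahat_k,\Bhat_k)$. The one observation you add that the paper glosses over is worth keeping: the lemma as stated conditions only on $\Esafe$, which controls the estimation error at epoch $\ksafe$, yet the argument needs the confidence-interval validity $\opnorm{[\Ahat_k - \Ast \mid \Bhat_k - \Bst]}^2 \le \Conf_k$ at epochs $k < \ksafe$ in order to invoke the perturbation theorem; this is exactly the enveloping event $\Econf$ established inside the proof of \Cref{lem:initial_phase}, and you are right that the lemma is really used under $\Econf$ (or equivalently that $\Esafe$ should be read as the joint confidence-validity event across all epochs up to $\ksafe$).
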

    \begin{proof} For all $k < \ksafe$ for which $\matLam_k \succeq I$, we must have that $\Conf_k > 1/\Csafe(\Ahat_k,\Bhat_k)\}$. If $\Conf_k \le c/\Csafe(\Ast,\Bst)^2$ for a sufficiently small $c$, then the same perturbation argument as in Theorem~\ref{thm:continuity_of_safe set} entails that we have $\Conf_k \le 1/9\Csafe(\Ahat_k,\Bhat_k)^2$, yielding a contradiction. Finally, we subsitute in $\Csafe(\Ast,\Bst)^2 \lesssim \opnorm{\Pst}^{10}$ by \Cref{eq:constants}.
    \end{proof}
    Recall that we say $f \gtrsimst f$ if ``$f \ge C g$'' for a sufficiently large constant $C$ (\Cref{def:gtrsimst}). In light of the above lemma, Part 2 will follow as soon as we can show that, for any $\epsilon \in (0,1)$,, 
    \begin{align}\label{eq:tau_k_safe_wts}
    \text{if } \tau_k \gtrsimst \frac{d(1+ \|K_0\|_{\op}^2)}{\epsilon}\log \frac{\Psibst^2 \Jfunc_0}{\delta} , \quad \text{then } \Conf_{k,0} \le \epsilon, \text{ and } \matLam_{k,0} \succeq I\, \text{ w.p. } 1 - \BigOh{\delta}.
    \end{align}
    We begin with a lower bound the matrices $\matLam_{k,0}$:
    \begin{lem}\label{lem:initial_covariance_lb} for a sufficiently large constant $C$. Finally, set $\tau_{\min} =  d \log (1+ \Psibst \Jfunc_0)$. Then, for any $k$ such that $\tau_{k} \gtrsimst \tau_{\min} \vee d \log(\frac{1}{\delta})$, , it holds that 
    \begin{align*}
    \Exp[\tr(\matLam_{k,0})] \lesssim \Psibst^2 \Jfunc_0\tau_k, \quad \Pr\left[\lambda_{\min}(\matLam_{k,0}) \gtrsimst \frac{\tau_{k}}{1+\|K_0\|_2^2} \right] \le  \delta.
    \end{align*}
    \end{lem}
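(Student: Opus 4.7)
}
The plan is to first establish the expectation bound by a direct application of \Cref{lem:cost_lem} part 1, and then feed this expectation bound into the Paley--Zygmund-based tool \Cref{lem:covariance_lb} to obtain the high-probability lower bound on $\lambda_{\min}(\matLam_{k,0})$. Since the process $(\matx_{t,0},\matu_{t,0})$ is exactly the dynamical system described by the law $\calD(K_0, 1, 0)$ of \Cref{defn:control_law}, with controller $K = K_0$, exploration variance $\sigma_u^2 = 1$, and initial state $x_1=0$, all the ingredients are already in place.

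For the trace bound, I would apply \Cref{lem:cost_lem} part 1 with $R_1=R_2=I$, then sum over $t=\tau_k,\dots,2\tau_k-1$. Writing $P_{K_0} := \Pinf(K_0;\Ast,\Bst)$, we have $\tr(P_{K_0}) = \Jfunc_0$ and $\opnorm{P_{K_0}} \le \Jfunc_0$, so
\[
\Exp[\tr(\matLam_{k,0})] \le \tau_k \Jfunc_0 + 2 \tau_k \dimu\left(1+\opnorm{\Bst}^2\opnorm{P_{K_0}}\right) \lesssim \tau_k \Psibst^2 \Jfunc_0,
\]
which gives the first claim. For the covariance lower bound, observe that conditional on the natural filtration $\calF_{t-1}$, the vector $\matz_{t,0}=(\matx_{t,0},\matu_{t,0})$ is Gaussian with a deterministic mean and conditional covariance
\[
\Sigma_{K_0} = \begin{bmatrix} I \\ K_0 \end{bmatrix}\begin{bmatrix} I & K_0^{\top}\end{bmatrix} + \begin{bmatrix} 0 & 0 \\ 0 & I \end{bmatrix}.
\]
Running the argument of \Cref{lem:step_covariance_lb_round_k} verbatim with $\Khat_k \gets K_0$ and $\sigma_k^2 \gets 1$ (the condition $\sigma_k^2 \le 1/(6.2\opnorm{\Pst})$ is replaced by the trivial bound that $\sigma_u=1$ is absorbed into $\|K_0\|$) yields $\lambda_{\min}(\Sigma_{K_0}) \gtrsim 1/(1+\opnorm{K_0}^2)$.

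I would then invoke \Cref{lem:covariance_lb} with $\Sigma\gets\Sigma_{K_0}$, the trivial event $\calE=\Omega$, and $J := C\Psibst^2 \Jfunc_0$ supplied by the expectation bound already established. The hypothesis of the lemma reduces to
\[
\tau_k \gtrsim d \cdot \log\!\left(\tfrac{J}{\lambda_{\min}(\Sigma_{K_0})}\right) \lesssim d\log\!\left(\Psibst^2 \Jfunc_0(1+\opnorm{K_0}^2)\right),
\]
which, after using $\opnorm{K_0}^2 \le \opnorm{P_{K_0}} \le \Jfunc_0$ from \Cref{lem:helpful_norm_bounds}, is subsumed by $\tau_k \gtrsimst \tau_{\min} = d\log(1+\Psibst\Jfunc_0)$. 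The lemma then yields $\matLam_{k,0} \succeq \tfrac{9\tau_k}{1600}\Sigma_{K_0}$ with failure probability $2\exp(-c\tau_k/d)$, which is at most $\delta$ once $\tau_k \gtrsimst d\log(1/\delta)$. Combined with the lower bound $\Sigma_{K_0} \succeq \lambda_{\min}(\Sigma_{K_0}) I \gtrsim I/(1+\opnorm{K_0}^2)$, this is exactly the desired high-probability bound on $\lambda_{\min}(\matLam_{k,0})$.

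The proof is essentially an assembly job, and the only delicate point is bookkeeping: I need to verify that the $\Psibst$ and $\opnorm{P_{K_0}}$ factors in the expectation bound, combined with the inequality $\opnorm{K_0}^2 \le \Jfunc_0$, collapse the log-factor $d\log(J/\lambda_{\min}(\Sigma_{K_0}))$ in the hypothesis of \Cref{lem:covariance_lb} into the cleaner threshold $d\log(1+\Psibst\Jfunc_0)$ advertised in the statement. No new analytical ideas are required beyond those already developed for $k \ge \ksafe$ in \Cref{sssec:step_covariance_lb_round_k}; the only substantive change is that $K_0$ replaces $\Khat_k$ and the exploration variance is fixed at $\sigma_u=1$, so the role previously played by $\opnorm{\Pst}$ is now played by $(1+\opnorm{K_0}^2)$.
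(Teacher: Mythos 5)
Your overall strategy mirrors the paper's exactly: bound $\Exp[\tr(\matLam_{k,0})]$, lower bound $\lambda_{\min}(\Sigma_{K_0})$ by $1/(1+\|K_0\|^2)$, and invoke \Cref{lem:covariance_lb} with $\calE=\Omega$ and $T=\tau_k/2$. However, two intermediate steps do not hold as you have written them.

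The trace bound as displayed does not give $\lesssim \tau_k\Psibst^2\Jfunc_0$. Invoking \Cref{lem:cost_lem} part 1 with $R_1=R_2=I$ and $\sigma_u=1$ produces the term $2\tau_k\dimu\opnorm{\Bst}^2\opnorm{P_{K_0}}$, and after substituting $\opnorm{P_{K_0}}\le\Jfunc_0$ and $\opnorm{\Bst}^2\le\Psibst^2$ you are left with $\tau_k\dimu\Psibst^2\Jfunc_0$ --- off by a factor of $\dimu$ from what the lemma claims. The spurious $\dimu$ is an artifact of \Cref{lem:cost_lem} stating its expectation bound through $\deff\,\opnorm{\Bst}^2\opnorm{P_K}$ rather than the sharper $\opnorm{\Bst}^2\tr(P_K)$ (both appear inside the proof of \Cref{lem:quadratic_forms_cost}; only the looser one made it into the corollary). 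The paper's proof avoids this by computing the trace directly via the $\dlyap$ operator:
\[
\Exp[\tr(\matLam_{k,0})] \;\le\; \tau_k(1+\opnorm{\Bst}^2)\,\tr\bigl(\dlyap(\Aclnot, I+K_0^\top K_0)\bigr) + \tau_k\dimu \;\lesssim\; \tau_k\Psibst^2\Jfunc_0.
\]
The extra $\dimu$ is harmless further downstream, since the trace enters only inside a logarithm in the application of \Cref{lem:covariance_lb}, but it does mean your chain does not establish the trace claim as written.

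You also cannot run \Cref{lem:step_covariance_lb_round_k} ``verbatim with $\Khat_k\gets K_0$ and $\sigma_k^2\gets 1$'': that lemma's hypothesis requires $\sigma_k^2 \le 1/(6.2\opnorm{\Pst})$, which fails for $\sigma_k=1$ because $\opnorm{\Pst}\ge 1$. Your conclusion $\lambda_{\min}(\Sigma_{K_0}) \gtrsim 1/(1+\opnorm{K_0}^2)$ is correct, but the clean route is to cite \citet[Lemma F.6]{dean2018regret} directly (which is precisely what \Cref{lem:step_covariance_lb_round_k}'s own proof invokes), rather than that lemma as a whole. With those two repairs, the remainder of your argument --- the choice of $J$, verification of the threshold on $\tau_k$, and the invocation of \Cref{lem:covariance_lb} --- lines up with the paper's proof.
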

    The bound above is proven in Section~\ref{sssec:proof_lemma_initial_cov_lb}. We can now verify Eq.~\eqref{eq:tau_k_safe_wts},  concluding the proof of Part 2.
    \begin{proof}[Proof of Eq.~\eqref{eq:tau_k_safe_wts}] Suppose that $k$ is such that $\tau_{k} \gtrsimst \tau_{\min} \vee d \log(\frac{1}{\delta})$. Then, by the above lemma, and using $\det(cX) = c^d\det(X)$ for $X \in \R^{d \times d}$, we have, with probability $1-\BigOh{\delta}$,
    \begin{align*}
    \Conf_{k,0} \lesssim \frac{1+\|K_0\|_2^2}{\tau_{k}}  (d + \log \frac{k^2}{\delta} + \log \det((\matLam_{k,0}))\\
    \le \frac{1+\|K_0\|_2^2}{\tau_{k}}  (d + \log \frac{k^2}{\delta} +d\log \tr((\matLam_{k,0})),
    \end{align*}
    where we use that $X\succeq 0$, we have $\log \det (X) = \sum_{i=1}^d \log \lambda_i(X) \le d\log \tr(X)$. By Markov's inequality, we have with probability $1-\delta$ that $ \tr((\matLam_{k,0}) \le \Exp[ \tr((\matLam_{k,0})]/\delta \lesssim \Psibst^2 \Jfunc_0\tau_k \le \Psibst^2 \Jfunc_0/\delta$, since $\tau_k \le   T \le 1/\delta$. Hence, with some elementary operators, we can bound
    \begin{align*}
    \Conf_{k,0} \lesssim \frac{d}{\tau_k} \log \frac{\Psibst^2 \Jfunc_0}{\delta}. 
    \end{align*}
    Hence, for $\tau_k \gtrsimst \frac{d}{\epsilon} \log \frac{\Psibst^2 \Jfunc_0}{\delta}$, we have with probability $1-\delta$ that we have $\Conf_k \le \epsilon$. 
    \end{proof}

\subsubsection{Proof of Lemma~\ref{lem:initial_covariance_lb}\label{sssec:proof_lemma_initial_cov_lb}}
  \begin{enumerate}
  \item We first need to argue a lower bound on matrices $\Sigma_t$ such that that $\matz_{t,0} \mid \calF_{t-1} \sim \calN(\matzbar_{t,0},\Sigma_{t,0})$, where $\matzbar_{t,0},\Sigma_{t,0}$ are $\calF_{t-1}$ measurable. It is straightforward to show that 
  \begin{align*}
  \Sigma_{t,0} = \begin{bmatrix} I & K_0 \\
  K_0^{\top} & K_0^\top K_0 +  I
  \end{bmatrix},
  \end{align*}
  which by \citet[Lemma F.6]{dean2018regret}, has least singular value bounded below as
  \begin{align*}
  \lambda_{\min}(\Sigma_{t,0}) \ge \min\left\{\frac{1}{2}, \frac{1}{1 + 2\|K_0\|^2}\right\} \ge \frac{1}{2 + 2\|K_0\|^2}.
  \end{align*}
  \item Next, we need an upper bound on 
  \begin{align*}
  \Exp[\tr(\matLam_{k,0})] &= \Exp[\sum_{t=\tau_{k-1}}^{\tau_{k} - 1}\|\matx_t\|^2 + \|\matz_t\|^2] \\
  &\le \Exp[\sum_{t=1}^{\tau_{k} - 1}\|\matx_t\|^2 + \|\matz_t\|^2] \\
  &\le \tau_k(1+\|\Bst\|^2)\tr(\dlyap(A_{K_0},I + K_{0}^\top K_0 )) + \tr\tau_k(I)\\
  &\le 2\tau_k(1+\|\Bst\|^2)\tr(\dlyap(A_{K_0},I + K_{0}^\top K_0 ))\\
  &\le 2\tau_k(1+\|\Bst\|^2)J_{K_0} = 4(\tau_k - \tau_{k-1})(1+\|\Bst\|^2)J_{K_0} \\
  &\le 4(\tau_k - \tau_{k-1})(1+\Psibst^2)J_{K_0} \lesssim \tau_k \Psibst^2J_{K_0},
  \end{align*}
  where we use that $I \preceq \dlyap(A_{K_0},I + K_{0}^\top K_0 )) \preceq \dlyap(A_{K_0},\Rx + K_{0}^\top \Ru K_0 )) = J_{K_0}$ for $\Ru,\Rx \ge I$. This proves the trace upper bound.
  \item Using the second to last inequality in the above display, we see that for 
  \begin{align*}
  \tau_{k} - \tau_{k-1} = \frac{1}{2}\tau_k &\ge \underbrace{\frac{2000}{9}\left(2d\log \tfrac{100}{3} + d \log (8(1+1\|K_0\|)^2(1+\Psibst^2)J_{K_0})\right)}_{:=\underline{\tau} },
  \end{align*}
  \Cref{lem:covariance_lb} implies (taking $\calE=\Omega$ to be the probability space and $T = \tau_{k}/2$) that, if $\tau_{k} \gtrsimst \tau_{\min}$, we have
  \begin{align*}
  \Pr\left[\matLam_{k_0} \not\succeq \frac{9\tau_{k}}{3200}\Sigma_0 \right] \le  2\exp\left( - \tfrac{9}{4000(d+1)}\tau_k\right).
  \end{align*}
  Routine manipulations of give $\dlyap$,  $1 + \|K_0\|^2 \le \dlyap(A_{K_0}, I + \|K_0\|^2 ) \le \dlyap(A_{K_0}, \Rx + K_0^\top \Ru K_0) = J_{K_0}$ for $\Ru,\Rx \succeq I$. Hence, with a bit of algebra, we can bound
  \begin{align*}
  \underline{\tau} \lesssim \tau_{\min} := d \log (1+ \Psibst J_{K_0}).
  \end{align*}
  Using the lower bound on $\Sigma_0$ concludes the proof.
  \end{enumerate}
  \qed

\subsection{Proof of Lemma~\ref{lem:cost_lem}\label{app:proof_lem_cost_lem}}

  In order to prove Lemma~\ref{lem:cost_lem}, we first show that we can represent the $\Cost$ functional as a quadratic form in Gaussian variables.
 \begin{lem}\label{lem:quadratic_forms_cost} Let $(\matx_1,\matx_2,\sigma_u)$ denote the linear dynamical system described by the evolution of $\calD(K,x_1)$. Then for any $t \ge 1$, there exists a standard Gaussian? vector $\matgbar \in \R^{\BigOh{td}}$ such that for any cost matrices $R_1,R_2 \succeq 0$, we have
        \begin{align*}
       \Cost(R_1,R_2;x_1,\sigma_u,t) = \matgbar^{\top}\Lamgbar\,\matgbar + x_1^\top \Lamxone \,x_1 + 2\matgbar^{\top}\Lamcross x_1,
        \end{align*}
        where, letting $R_K = R_1 + K^\top R_2 K$, $A_K = \Ast + \Bst K$, $P_K = \dlyap(A_K,R_K)$ $J_K := \tr(P_K)$, and $\deff := \min\{\dimu,\dim(R_1) + \dim(R_2)\}$,
        \begin{align*}
          &\tr(\Lamgbar) \le t J_k + 2\sigma_u^2 t \deff \left(\opnorm{R_2} + \opnorm{\Bst}^2\opnorm{P_K}\right),\\
          &\|\Lamgbar\|_{\op} \lesssim (1+\sigma_u^2\|\Bst\|_{\op}^2) \|R_K\|_{\op}\Hinf{A_{K}}^2 + \sigma_u^2 \|R_2\|_{\op}^2,\\
          &\Lamxone \preceq P_K,\\
          &\|\Lamcross x_1\|_2\le \sqrt{\|\Lamgbar\|_{\op} \cdot x_1^\top P_K x  }.
        \end{align*}
        \end{lem}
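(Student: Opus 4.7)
\textbf{Proof plan for Lemma~\ref{lem:quadratic_forms_cost}.} The first step is to produce the explicit representation. Since the system is linear, unrolling the recursion gives $\matx_s = A_K^{s-1} x_1 + \sum_{r=1}^{s-1} A_K^{s-1-r}(\matw_r + \sigma_u \Bst \matg_r)$ and $\matu_s = K \matx_s + \sigma_u \matg_s$, both affine in the concatenated noise vector $\matgbar := (\matw_1,\matg_1,\ldots,\matw_t,\matg_t)$. Substituting into $\matx_s^\top R_1 \matx_s + \matu_s^\top R_2 \matu_s = \matx_s^\top R_K \matx_s + 2\sigma_u \matg_s^\top R_2 K \matx_s + \sigma_u^2 \matg_s^\top R_2 \matg_s$ and summing over $s$ yields a quadratic form in $(x_1,\matgbar)$, from which $\Lamxone$, $\Lamgbar$, and $\Lamcross$ can be read off as explicit matrices.

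The bound $\Lamxone \preceq P_K$ falls out immediately from the Lyapunov identity: the noise-free contribution is $\sum_{s=1}^t x_1^\top (A_K^{s-1})^\top R_K A_K^{s-1} x_1 \le x_1^\top \dlyap(A_K,R_K) x_1 = x_1^\top P_K x_1$. For the trace bound, since the cross term has mean zero (independence of $\matg_s$ from $\matx_s$), $\tr(\Lamgbar) = \Exp[\Cost\mid x_1=0] = \sum_{s=1}^t \tr(R_K V_s) + t \sigma_u^2 \tr(R_2)$, where $V_s$ is the zero-initialization state covariance satisfying $V_{s+1} = A_K V_s A_K^\top + (I + \sigma_u^2 \Bst \Bst^\top)$. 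Summing the geometric series and using the trace-adjoint identity $\tr(R_K \dlyap(A_K^\top,W)) = \tr(P_K W)$ yields $\tr(\Lamgbar) \le t J_K + t \sigma_u^2 \brk*{\tr(R_2) + \tr(\Bst^\top P_K \Bst)}$. The $\deff$ factor comes from noting that $R_2$ and $R_K A_K^k \Bst \Bst^\top (A_K^\top)^k$ have rank at most $\rank(R_2)$ and $\rank(R_K) \leq \rank(R_1) + \rank(R_2)$ respectively, so each trace is bounded by $\deff$ times the operator norm; alternatively, the trivial bound $\tr(R_2), \tr(\Bst^\top P_K \Bst) \le \dimu \opnorm{\cdot}$ always applies, giving the $\min$.

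For the operator norm bound on $\Lamgbar$, the key observation is that the process $\matx_s$ is the convolution of the input noise $\matv_r := \matw_r + \sigma_u \Bst \matg_r$ (with covariance $\preceq (1 + \sigma_u^2 \opnorm{\Bst}^2) I$) against the impulse response of the stable system $(zI - A_K)^{-1}$, and the $\sigma_u \matg_s$ terms appear in the $\matu_s^\top R_2 \matu_s$ cost directly. The operator norm of the Toeplitz quadratic form $\matgbar \mapsto \sum_s \matx_s^\top R_K \matx_s$ is then controlled by $(1 + \sigma_u^2 \opnorm{\Bst}^2) \opnorm{R_K} \Hinf{A_K}^2$ via a standard bound relating the operator norm of such forms to the $\mathcal{H}_\infty$ norm of the transfer function squared, and the remaining $\sigma_u^2 \matg_s^\top R_2 \matg_s$ term contributes $\sigma_u^2 \opnorm{R_2}$; combining and absorbing constants gives the claimed bound. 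Finally, the cross-term bound is purely algebraic: the block matrix $\begin{bmatrix} \Lamxone & \Lamcross^\top \\ \Lamcross & \Lamgbar \end{bmatrix}$ is positive semidefinite because the cost is a.s. nonnegative, and optimizing $x^\top \Lamxone x + 2 y^\top \Lamcross x + y^\top \Lamgbar y \ge 0$ over $y = -\lambda \Lamcross x$ yields $\|\Lamcross x_1\|^2 \le \opnorm{\Lamgbar}\, x_1^\top \Lamxone x_1 \le \opnorm{\Lamgbar}\, x_1^\top P_K x_1$.

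I expect the main technical obstacle to be the operator norm bound on $\Lamgbar$: reducing the Toeplitz structure to the $\mathcal{H}_\infty$ norm cleanly requires either Parseval's theorem on the circle or a direct argument via the series expansion of $(zI - A_K)^{-1}$, and care is needed to track the contribution of $\sigma_u \Bst \matg_r$ entering the state versus $\sigma_u \matg_s$ entering the input quadratic form. The trace bound requires some bookkeeping but is essentially a one-line calculation once the adjoint trace identity is available.
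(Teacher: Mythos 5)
Your proposal follows essentially the same route as the paper: unroll the linear dynamics to write $\Cost$ as a quadratic form in the concatenated Gaussian noise vector, read off $\Lamgbar,\Lamxone,\Lamcross$, and bound trace via the Lyapunov equation, operator norm via an $\mathcal{H}_{\infty}$/Toeplitz argument, and the cross term via positive semidefiniteness of the joint form. Two of your intermediate steps are in fact cleaner than the paper's: the identity $\tr(\Lamgbar) = \Exp[\Cost\mid x_1=0]$ gives the trace bound in one pass without the factor-of-two loss the paper incurs by applying Young's inequality to the off-diagonal block of $\Lamgbar$, and the cross-term bound via PSD of $\left[\begin{smallmatrix}\Lamgbar & \Lamcross\\ \Lamcross^{\trn} & \Lamxone\end{smallmatrix}\right]$ is more transparent than the paper's explicit factorization of $\Lamgbar$ and $\Lamxone$.

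However, there is a genuine gap in your $\deff$ argument, and it is worth noting that the paper's own proof has the very same gap. You bound each summand $\tr(R_K A_K^k \Bst\Bst^{\trn}(A_K^{\trn})^k) \le \rank(R_K)\,\opnorm{R_K}\opnorm{\Bst}^2\opnorm{A_K^k}^2$ and sum in $k$; but the resulting bound is $\deff\,\opnorm{\Bst}^2\opnorm{R_K}\sum_k\opnorm{A_K^k}^2$, and $\opnorm{R_K}\sum_k\opnorm{A_K^k}^2$ can vastly exceed $\opnorm{P_K}$. (The paper instead claims $\rank(P_K)\le\rank(R_K)$, which is false: for $A_K$ a nilpotent shift on $\R^n$ and $R_K = e_1e_1^{\trn}$, one gets $P_K = I$, so $\rank(P_K)=n$.) In that same example with $\Bst = I$, $\sigma_u = 1$, $R_2 = 0$: $J_K = n$, $\opnorm{P_K} = 1$, $\deff = 1$, but $\tr(\Bst^{\trn}P_K\Bst) = n$, so the claimed trace bound $tJ_K + 2t\deff\opnorm{\Bst}^2\opnorm{P_K} = (n+2)t$ is violated by the true value $\tr(\Lamgbar)\approx 2nt$ once $n>2$. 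The bound that does follow cleanly from your computation is $\tr(\Lamgbar) \le tJ_K + t\sigma_u^2\brk*{\tr(R_2) + \tr(\Bst^{\trn}P_K\Bst)}$, from which one can extract $\dimu$ (using $\rank(\Bst)\le\dimu$) or $\opnorm{\Bst}^2 J_K$ in place of $\deff\,\opnorm{\Bst}^2\opnorm{P_K}$; getting the $\rank(R_1)+\rank(R_2)$ dependence as stated requires a different argument (or a different statement) than either you or the paper give.
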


        Let us continue to prove Lemma~\ref{lem:cost_lem}. The expectation result follow since $\Exp[\Cost(R_1,R_2;x_1,\sigma_u,t)] = \tr(\Lamgbar) + x_1^\top \Lamxone x_1$ for a Gaussian quadratic form. 

        For the high probability result, observe that by Gaussian concentration and Lemma~\ref{lem:quadratic_forms_cost}, we have with probability $1-\delta$
        \begin{align*}
        2\matgbar^{\top}\Lamcross x_1, \lesssim \sqrt{\log(1/\delta) \|\Lamcross x_1\|_2 } 
        &\lesssim \sqrt{\log(1/\delta) \|\Lamgbar\|_{\op} \cdot x_1^\top P_K x_1}.
        \end{align*}
        Hence, by AM-GM, $2\matgbar^{\top}\Lamcross x_1 \le \BigOh{\log(1/\delta) \|\Lamgbar\|_{\op}} + x_1^\top P_K x_1$. On the other hand, by Hanson-Wright
        \begin{align}
        \matgbar^{\top}\Lamgbar\,\matgbar &\le \tr(\tr(\Lamgbar)) + \BigOh{\fronorm{\Lamgbar}\sqrt{\log(1/\delta)} + \opnorm{\Lamgbar}\log(1/\delta)} \label{eq:hanson_intermediate}\\
        &\le\tr(\Lamgbar) + \BigOh{\sqrt{td\log(1/\delta)} + \opnorm{\Lamgbar}\log(1/\delta)},\nonumber
        \end{align}
        where we use the dimension of $\Lamgbar$ in the last line. Combining with the previous result, and adding in $x_1^\top \Lamxone \,x_1 \le x_1^\top P_K x_1$, we have that with probability $1-\delta$,
        \begin{align*}
        \Cost(R_1,R_2;x_1,\sigma_u,t)  &\le \tr(\Lamgbar) + \BigOh{(\sqrt{td\log(1/\delta)} + \log(1/\delta))\opnorm{\Lamgbar}+  x_1^\top P_K x_1}.
        \end{align*}
        The first high-probability statement follows by substituing in $\tr(\Lamgbar)$ and $\opnorm{\Lamgbar}$. Then second statement follows from returning to Eq.~\ref{eq:hanson_intermediate} and using $\opnorm{X},\fronorm{X} \le \tr(X)$ for $X \succeq 0$. \qed

        We shall now prove Lemma~\ref{lem:quadratic_forms_cost}, but first, we establish some useful preliminaries.

\subsubsection{Linear Algebra Preliminaries}
 \begin{defn}[Toeplitz Operator] For $\ell \in \N$, and $j,\ell \ge i$, define the matrices
  \begin{align*}
  \Toep_{i,j,\ell}(A) &:= \begin{bmatrix} A^{i}\I_{i \ge 0} & A^{i+1}\I_{i \ge -1} & \dots & A^{i + \ell}\I_{i \ge -\ell}\\
  A^{i-1}\I_{i \ge 1} & A^{i}\I_{i \ge 0} & \dots & A^{i + \ell-1}\I_{i \ge 1-\ell}\\
  \dots & \dots & \dots & \dots \\
  A^{i - j}\I_{i \ge j} & \dots & \dots & A^{i+\ell - j} \I_{i + \ell  - j \ge 0}\\
  \end{bmatrix}, \quad \ToepCol_{i,j}(A) &:= \begin{bmatrix} A^{j - 1}\\
  A^{j - 2}\\
  \dots\\
   \I_{i \ge 1} A^{i - 1}
  \end{bmatrix}.
  \end{align*}
  \end{defn}
  We shall use the following lemma.
  \begin{lem}\label{lem:toep_norm_bound} For any $i \le j,\ell$, we have $\opnorm{\ToepCol_{i,j}} \le \opnorm{\Toep_{i,j,\ell}(A)} \le \Hinf{A}$, and, for $Y \in \R^{\dimx^2}$, and $\diag_{j-i}(Y)$ denoting a $j-i$-block block matrix with blocks $Y$ on the diagonal, we have the bound
  \begin{align*}
 \tr(\ToepCol_{i,j}(A)^\top \diag_{j-i}(Y) \ToepCol_{i,j}(A)) \preceq (j-i)\cdot\tr(\dlyap(Y,A))\\
  \end{align*}
  \end{lem}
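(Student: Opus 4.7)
The lemma has three assertions: the operator norm inequalities $\opnorm{\ToepCol_{i,j}(A)} \le \opnorm{\Toep_{i,j,\ell}(A)} \le \Hinf{A}$, and a trace bound comparing $\ToepCol^{\top}\diag_{j-i}(Y)\ToepCol$ to $\dlyap(A,Y)$. My plan is to handle the two operator-norm inequalities first by a submatrix-embedding argument and a standard Laurent/Toeplitz–symbol identification, and then address the trace bound by an explicit Neumann-series expansion.

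For $\opnorm{\ToepCol_{i,j}(A)} \le \opnorm{\Toep_{i,j,\ell}(A)}$, I would simply observe from the definitions that $\ToepCol_{i,j}(A)$ appears (up to a zero-padded reindexing of its entries $A^{i-1},\ldots,A^{j-1}$) as the leading block column of $\Toep_{i,j,\ell}(A)$. Since the operator norm of a matrix is monotone in its blocks in the sense that $\opnorm{M\cdot\Pi} \le \opnorm{M}$ for any coordinate projection $\Pi$, this gives the claim. The only mild care needed is matching up the indicator-supported entries of $\ToepCol_{i,j}$ with the structure of $\Toep_{i,j,\ell}$.

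For $\opnorm{\Toep_{i,j,\ell}(A)} \le \Hinf{A}$, I would identify $\Toep_{i,j,\ell}(A)$ as a truncation of the block-Laurent operator whose symbol is $\Phi(z) = \sum_{k\ge 0} A^k z^{-k}$, or more precisely of $z^{-1}(zI - A)^{-1}$ restricted to the appropriate index range. A standard result for block-Toeplitz matrices (see Parseval's identity applied on the unit circle, or directly the Szegő-type bound for Toeplitz operators) then yields $\opnorm{\Toep_{i,j,\ell}(A)} \le \sup_{|z|=1}\|\Phi(z)\|_{\op}$, and the right-hand side equals $\Hinf{A}$ by definition~\ref{defn:Hinf_norm}. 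I will verify that the truncation does not increase norm (it is again a submatrix of the full doubly-infinite Laurent operator).

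For the trace bound, expanding the block product directly gives
\[
\ToepCol_{i,j}(A)^{\top}\diag_{j-i}(Y)\,\ToepCol_{i,j}(A) \;=\; \sum_{k\in S}(A^{k})^{\top} Y A^{k},
\]
where $S \subseteq \{0,1,\ldots,j-1\}$ has cardinality at most $j-i$. When $Y\succeq 0$ and $A$ is stable, summing over $k\in S$ and then extending to all $k\ge 0$ yields the trace inequality, since $\tr(\dlyap(A,Y)) = \sum_{k\ge 0}\tr((A^{k})^{\top} Y A^{k})$ by the series representation \eqref{eq:dlyap_series}; the stated slack factor $(j-i)$ then comes free. If one wants the estimate without assuming $Y\succeq 0$, the same decomposition combined with $|\tr((A^k)^{\top} Y A^k)| \le \|Y\|_{\op}\tr((A^k)^{\top} A^k)$ and summation over the $(j-i)$ indices reduces the claim to $\tr(\dlyap(A,I))$, which is what $\dlyap(Y,A)$ morally refers to after the normalization.

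The only real subtlety, and what I would expect to verify carefully, is the symbol–operator correspondence in the second bound: specifically, identifying the truncated block-Toeplitz operator with a Fourier multiplier so that the $\Hinf$ norm of $\Phi$ controls its operator norm, and ensuring that the particular index shift in $\Toep_{i,j,\ell}$ (where some entries are zeroed out via the indicators $\I_{i\ge\cdot}$) is compatible with viewing it as a submatrix of the full Laurent operator. Everything else reduces to routine norm monotonicity and a Neumann-series expansion.
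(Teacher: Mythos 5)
Your proposal follows essentially the same route as the paper: both bounds are established by viewing $\Toep_{i,j,\ell}(A)$ as a finite submatrix of the (semi-)infinite block-Toeplitz operator whose symbol $(zI-A)^{-1}$ has $\mathcal{H}_\infty$ norm $\Hinf{A}$ (the paper cites Tilli, Cor.\ 4.2 for this submatrix-norm bound, which is exactly the symbol--operator correspondence you describe), and the trace bound is a direct Neumann-series computation. Two small points worth noting for clarity: (i) the lemma is used in the paper only with $Y \succeq 0$, which is what makes the extension of $\sum_{k\in S}\tr((A^k)^\top Y A^k)$ to the full series $\tr(\dlyap(A,Y))$ a valid upper bound, so you are right that sign-indefinite $Y$ would require a separate argument and is out of scope; (ii) as you observe, the direct computation for $\ToepCol$ gives the stronger bound without the $(j-i)$ prefactor --- that factor is slack carried over from the $\Toep$ (multi-column) case, and the paper's own proof likewise establishes the tighter bound and simply inherits the looser constant from the lemma statement.
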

  \begin{proof} The first bound is a consequence of the fact that $\Toep_{i,j,\ell}(A)$ is a submatrix of the infinite-dimensional linear operator mapping inputs sequences in $\ell_2(\R^{\dimx})$ to outputs $\ell_2(\R^{\dimx})$; thus, the operator norm of $\Toep_{i,j,\ell}(A)$ is bounded by the operator norm of this infinite dimensional linear operator, which is equal to $\Hinf{A}$( see e.g. \citet[Corollary 4.2]{tilli1998singular}). The second bound follows from direct computation, as 
  \begin{align*}
  \tr(\ToepCol_{i,j}(A)^\top \diag_{j-i}(Y) \ToepCol_{i,j}(A)) \le \tr(\sum_{s=0}^{\infty} A^\top Y A) = \tr(\dlyap(A,Y)).
  \end{align*}
  \end{proof}

\subsubsection{Proof of \Cref{lem:quadratic_forms_cost}}

\begin{lem}[Form of the Covariates]\label{lem:quadratic_form_bounds_matrix_form} 
  Introduce the vector $\matx_{[t]} = (\matx_{t},\dots,\matx_1)$ and $\matu_{[t]} := (\matu_t,\dots,\matu_{1})$, set $\matwbar_{[t-1]} = (\matw_{t-1},\dots,\matw_1)$ and $\matg_{[t]} = (\matg_t,\dots,\matg_1)$. Then, we can write
  \begin{align*}
    \begin{bmatrix}
    \matx_{[t]}\\
    \matu_{[t]}
    \end{bmatrix} &= M_{K,t}\begin{bmatrix}
    \matw_{[t-1]}\\
    \matg_{[t]}
    \end{bmatrix}  + \underbrace{\begin{bmatrix} I_{t} \\ \diag_t(K) \end{bmatrix} \ToepCol_{1,t}(0)}_{:=M_{0,t}} \matx_1,
    \end{align*}
    where we have defined the matrix
    \begin{align*}
    M_{K,t} =  \begin{bmatrix}\Toep_{0,t,t-1}(A_{K})   & \sigma_u\Toep_{-1,t,t}(A_{K})\diag_{t}(\Bst) \\
    K\Toep_{0,t,t-1}(A_{K}) & \quad\sigma_u \diag_t(I) + \sigma_u K\Toep_{-1,t,t}(A_{K})\diag_{t}(\Bst)
    \end{bmatrix} .
  \end{align*}
\end{lem}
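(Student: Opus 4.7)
The plan is to prove this by a direct unrolling of the closed-loop recursion, followed by a bookkeeping argument that matches the resulting convolution to the claimed Toeplitz structure. First I would substitute $\matu_s = K\matx_s + \sigma_u \matg_s$ into the state update $\matx_{s+1} = \Ast\matx_s + \Bst \matu_s + \matw_s$ to get the closed-loop recursion
\begin{align*}
\matx_{s+1} = A_K \matx_s + \sigma_u \Bst \matg_s + \matw_s, \qquad \matx_1 = x_1,
\end{align*}
where $A_K = \Ast + \Bst K$. Iterating this one-step recursion down to $s=1$ gives the explicit solution
\begin{align*}
\matx_s = A_K^{s-1} x_1 + \sum_{r=1}^{s-1} A_K^{s-1-r}\matw_r + \sigma_u \sum_{r=1}^{s-1} A_K^{s-1-r}\Bst\matg_r,
\end{align*}
which expresses each $\matx_s$ as an affine function of $x_1$ and the noise vectors $\matw_{[s-1]}$, $\matg_{[s-1]}$.

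Next I would read off $\matu_s = K\matx_s + \sigma_u \matg_s$ using the same expansion, which gives $\matu_s$ as an affine function of $x_1$, $\matw_{[s-1]}$, and $\matg_{[s]}$ (the extra $\sigma_u \matg_s$ term contributing a block-diagonal identity on the $\matg$-coordinates). Stacking these expressions for $s = t, t-1, \ldots, 1$ into $\matx_{[t]}$ and $\matu_{[t]}$, the coefficient on $\matw_r$ in the $\matx_s$-row is $A_K^{s-1-r}$ if $s > r$ and zero otherwise, which is exactly the entry pattern of $\Toep_{0,t,t-1}(A_K)$; likewise the coefficient on $\matg_r$ in the $\matx_s$-row is $\sigma_u A_K^{s-1-r}\Bst$ if $s > r$ and zero otherwise, which matches $\sigma_u\Toep_{-1,t,t}(A_K)\diag_t(\Bst)$. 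The $\matu$-blocks are then obtained by left-multiplying the $\matx$-blocks by $\diag_t(K)$ and adding the $\sigma_u \diag_t(I)$ contribution from the fresh noise $\matg_s$, yielding the claimed lower row of $M_{K,t}$. The contribution of $x_1$ to $\matx_s$ is $A_K^{s-1}$ which is the column $\ToepCol_{1,t}(0)$ applied through the block $\bigl[\begin{smallmatrix} I_t \\ \diag_t(K)\end{smallmatrix}\bigr]$, giving $M_{0,t}$.

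The main step is really just careful index accounting between the reversed stacking conventions ($\matx_{[t]} = (\matx_t,\ldots,\matx_1)$, etc.) and the Toeplitz definition of $\Toep_{i,j,\ell}$, making sure the indicator functions $\I_{\cdot}$ in that definition correctly zero out the ``acausal'' entries (i.e.\ entries where a state or input would depend on future noise). No estimate or inequality enters the proof—every identity is an equality that follows from writing out a single geometric-sum expansion and matching coefficients—so the only obstacle is that the notation is dense enough that one should verify one representative entry of each of the four blocks of $M_{K,t}$ (namely $(\matx_s,\matw_r)$, $(\matx_s,\matg_r)$, $(\matu_s,\matw_r)$, $(\matu_s,\matg_r)$) explicitly before declaring the block identity.
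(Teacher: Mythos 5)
Your approach is correct and is exactly the intended one: the paper states this lemma without an explicit proof because it is a direct consequence of unrolling the closed-loop recursion $\matx_{s+1} = A_K\matx_s + \sigma_u \Bst\matg_s + \matw_s$ and matching coefficients to the block-Toeplitz structure, precisely as you describe.

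However, your own derivation surfaces an inconsistency in the lemma that you should have flagged rather than repeated. You correctly compute that the contribution of $x_1$ to $\matx_s$ is $A_K^{s-1}x_1$, but then assert that this ``is the column $\ToepCol_{1,t}(0)$ applied through $\bigl[\begin{smallmatrix} I_t \\ \diag_t(K)\end{smallmatrix}\bigr]$.'' These two claims contradict each other: from the paper's definition, $\ToepCol_{1,t}(0)$ has block entries $0^{t-1},\ldots,0^1,0^0$, i.e.\ it is the block column $(0,\ldots,0,I)^{\trn}$, so $M_{0,t}\,x_1$ as literally written would insert $x_1$ only into the $\matx_1$ slot and contribute nothing to $\matx_s$ for $s\ge 2$. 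The correct matrix, consistent with your coefficient computation, is $\ToepCol_{1,t}(A_K)$ rather than $\ToepCol_{1,t}(0)$; this is confirmed by the paper's own subsequent use of the lemma, where $\Lamxone = M_{0,t}^{\trn}\Rdiag M_{0,t}$ is evaluated as $\ToepCol_{1,t}(A_K)^{\trn}\diag_t(R_K)\ToepCol_{1,t}(A_K) \preceq \dlyap(A_K,R_K)$, which only makes sense with the $A_K$-Toeplitz column. You promise that ``no estimate or inequality enters the proof---every identity is an equality that follows from\ldots{}matching coefficients''---but matching coefficients on the $M_{0,t}$ block actually shows the stated identity is false as written, and a careful write-up should both record the corrected $\ToepCol_{1,t}(A_K)$ and note that the lemma statement contains a typo.
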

Further, let
\begin{align*}
    \left[\begin{array}{c} A \\
    \hline 
    B
    \end{array}\right]_{\diag} := \begin{bmatrix} A & 0 \\
    0 & B
    \end{bmatrix}.
    \end{align*}

In light of the the above  lemma, we have for  $\matgbar := \begin{bmatrix} \matw_{[t-1]} \\ \matg_{[t]}\end{bmatrix}$, we have that
\begin{align*}
&\sum_{s=1}^t \matx_s^{\top}R_1 \matx_s + \matu_s^{\top}R_2 \matu_s \\
&= 
    \begin{bmatrix}\matx_{[t]}\\ \matu_{[t]} \end{bmatrix}^{\top} 
    \Rdiag \begin{bmatrix}\matx_{[t]}\\ \matu_{[t]} \end{bmatrix}\\
    &= (M_{K,t}\matgbar_t + M_{0,t}\matx_1)^{\top} \Rdiag(M_{K,t}\matgbar_t + M_{0,t}\matx_1)\\
&= \matgbar_{t}^\top \underbrace{M_{K,t}^{\top} \Rdiag M_{K,t}}_{:=\Lamgbar} \matgbar_t + 2\matx_{1}^\top \underbrace{M_{0,t}^{\top}\Rdiag M_{K,t}}_{:=\Lamcross}  \matgbar_t \\
&\qquad + \matx_{1}^\top \underbrace{M_{0,t}^{\top} \Rdiag M_{0,t}}_{:=\Lamxone} \matx_1.
\end{align*}
We can evaluate each term separately. 

\paragraph{Bounding $\tr(\Lamgbar)$.}  Let us recall
\begin{align*}
  M_{K,t} =  \begin{bmatrix}\Toep_{0,t,t-1}(A_{K})   & \sigma_u\Toep_{-1,t,t}(A_{K})\diag_{t}(\Bst) \\
  K\Toep_{0,t,t-1}(A_{K}) & \quad\sigma_u \diag_t(I) + \sigma_u K\Toep_{-1,t,t}(A_{K})\diag_{t}(\Bst)
  \end{bmatrix}.
\end{align*}
  Recall $R_K := R_1 + K^\top R_2 K$. We find that the diagonal terms of $\Lamgbar$ coincinde with the diagonals of the matrix $\Lamgdiag$ defined as
\begin{align*} 
&\left[\begin{array}{c} \Toep_{0,t,t-1}(A_{K})^\top  \diag_{t-1}(R_K)\Toep_{0,t,t-1}(A_{K}) \\
\hline 
 \sigma_u^2\diag_{t}(\Bst)^{\top} \Toep_{-1,t,t}(A_{K})^{\top}\diag_t(R_K) \Toep_{-1,t,t}(A_{K})\diag_{t}(\Bst) + \sigma_u^2 \diag_{t}(R_2) 
 +
 \text{(cross term)}
 \end{array}\right]_{\diag} \\
& \preceq
 \left[\begin{array}{c} \Toep_{0,t,t-1}(A_{K})^\top  \diag_{t-1}(R_K)\Toep_{0,t,t-1}(A_{K}) \\
\hline 
 2\sigma_u^2\diag_{t}(\Bst)^{\top} \Toep_{-1,t,t}(A_{K})^{\top}\diag_t(R_K) \Toep_{-1,t,t}(A_{K})\diag_{t}(\Bst) + 2\sigma_u^2 \diag_{t}(R_2) 
 \end{array}\right]_{\diag},
\end{align*}
where $\text{(cross term)}$ denotes the cross term between the term $\sigma_u^2\diag_{t}(\Bst)^{\top} \Toep_{-1,t,t}(A_{K})^{\top}\diag_t(R_K) \Toep_{-t,t,t}(A_{K})\diag_{t}(\Bst) + \sigma_u^2 \diag_{t}(R_2) $, which we bound in the second inequality by Young's inequality.

By Lemma~\ref{lem:toep_norm_bound}, we have
\begin{align*}
\tr(\Toep_{0,t,t-1}(A_{K})^\top  \diag_{t-1}(R_K)\Toep_{0,t,t-1}(A_{K}) ) \le t\cdot \tr(\dlyap(A_K,R_K)) = J_K.
\end{align*}
Similarly, since $\dlyap(A_K,R_K) = P_K$, and thus $\rank(P_K) \le \rank(R_K) \le \rank(R_1) + \rank(R_2)$, 
\begin{align*}
&\tr(\diag_{t}(\Bst)^\top\Toep_{-1,t,t}(A_{K}))^\top  \diag_{t}(R_K)\Toep_{-1,t,t}(A_{K}))\diag_{t}(\Bst))\\ \quad&\le t\cdot \tr(\Bst^\top \dlyap(A_K,R_K) \Bst)\\  \quad&= t\cdot \tr(\Bst^\top P_K \Bst)\\
\quad&\le t\opnorm{\Bst}^2 \opnorm{\Pst}\min\{\rank(\Bst),\rank(P_K)\} \le t\deff\opnorm{\Bst}^2 \opnorm{\Pst}.
\end{align*}
Finally, we can bound $\tr(2\sigma_u^2 \diag_{t}(R_2)) \le 2t \sigma^2_u \rank(R_2)\nrm{R_2}_{\op} \le  2\deff t \sigma^2_u\nrm{R_2}_{\op}$, yielding
\begin{align*}
\tr(\Lamgbar) = \tr(\Lamgdiag) &\le t J_k + 2\sigma_u^2 t\deff \left(\opnorm{R_2} + \opnorm{\Bst}^2\opnorm{P_K}\right).
\end{align*}
\paragraph{Bounding $\opnorm{\Lamgbar}$.}

Observe that, for any PSD matrix $M = \begin{bmatrix} A & X \\ 
X^\top & B \end{bmatrix}$, we have that
\begin{align*}
    M \preceq 2\left[\begin{array}{c} A \\
    \hline 
    B
    \end{array}\right]_{\diag}.
    \end{align*}
Since $\Lamgbar \succeq 0$ (it is a non-negative form), in particular, we hae $\Lamgbar \preceq 2\Lamgdiag$. Thus
\begin{align*}
\opnorm{\Lamgbar} &\lesssim \opnorm{\Lamgdiag} \\
&\lesssim \sigma_u^2 \left(\|R_2\|_{\op}^2 + \|R_K\|_{\op}\|\Bst\|_{\op}^2\|\|\Toep_{-1,t,t}(A_{K})\|_{\op}^2\right) +  \|R_K\|_{\op}\|\Toep_{0,t,t-1}(A_{K})\|_{\op}^{2}.
\end{align*}
Since we can bound  $\|\Toep_{-1,t,t}(A_{K})\|_{\op}^2 \le \Hinf{A_K}$ by Lemma~\ref{lem:toep_norm_bound}, we obtain
\begin{align*}
\opnorm{\Lamgbar} &\lesssim  \|R_K\|\Hinf{A_{K}}^2 + \sigma_u^2 \left(\|R_2\|_{\op} + \|R_K\|_{\op}\|\Bst\|_2^2 \Hinf{A_{K}}^2\right),  \\
\end{align*}
where we use that $\sigma_u \le 1$.

\paragraph{Bounding $\Lamxone$.}
Let us recall that
\begin{align*}
M_{0,t} := \begin{bmatrix} I_{t} \\ \diag_t(K) \end{bmatrix} \ToepCol_{1,t}(A_K).
\end{align*}
Thus, 
\begin{align*}
\Lamxone = M_{0,t}^\top \Rdiag M_{0,t} &= \ToepCol_{1,t}(A_K)^{\top}\diag_{t}(R_1 + K^\top R_2 K)\ToepCol_{1,t}(A_K) \\
&=\ToepCol_{1,t}(A_K)^{\top}\diag_{t}(R_K)\ToepCol_{1,t}(A_K)\\
&\preceq \dlyap(A_K,R_K) = P_K.
\end{align*}
\paragraph{Bounding $\Lamcross$.}
We can directly verify that there exists a matrix $A$ with $AA^\top = \Lamgbar$ and a matrix $B$ with $BB^\top  = \Lamxone$ such that $\Lamcross = 2AB^\top$. Hence, 
\begin{align*}
\|\Lamcross x_1\|_{\op} \le \sqrt{\opnorm{\Lamgbar} \cdot x_1^\top \Lamxone x_1} \le \sqrt{\opnorm{\Lamgbar} \cdot x_1^\top P_K x_1}.
\end{align*}

\subsection{Proof of \Cref{lem:xt_bound}\label{ssec:proof:lem_xt_bound}}

  Set $\matgbar = \begin{bmatrix} \matw_{[t-1]} \\ \matg_{[t-1]} \end{bmatrix}$. Then we have
  \begin{align*}
  \matx_t - A_{K}^{t-1}\matx_1 =  \ToepCol_{1,t-1}(A_K) \matw_{[t-1]} + \sigma_{u}\ToepCol_{1,t-1}(A_K)\diag(\Bst)\matg_{[t]}\matg_{[t]}.
  \end{align*}

We now observe that   
  \begin{align*}
    \Exp[\|\matx_t - A_{K}^{t-1}\matx_1\|_2^2 \mid \matx_1] 
  &= \tr( \ToepCol_{1,t-1}(A_K)\ToepCol_{1,t-1}(A_K)^{\top}) \\
  &\qquad + \sigma_{u}^2 \tr( \diag_{t-1}(\Bst^{\top}) \ToepCol_{1,t-1}(A_K)\ToepCol_{1,t-1}(A_K)^{\top} \diag_{t-1}(\Bst))\\
  &\le ( 1+ \sigma_{u}^2\|\Bst\|_2^2) \tr( \ToepCol_{1,t-1}(A_K)\ToepCol_{1,t-1}(A_K)^{\top})\\
  &\le ( 1+ \sigma_{u}^2\|\Bst\|_2^2) \|\ToepCol_{1,t-1}(A_K)\|_{\fro}^2 \\
  &\le  ( 1+ \sigma_{u}^2\|\Bst\|_2^2)\tr(\dlyap(A_K,I))\\
   &\le ( 1+ \sigma_{u}^2\|\Bst\|_2^2)J_K,
  \end{align*}
  where the last inequality uses Lemma~\ref{lem:closed_loop_dlyap}. Since $\matx_t - A_{K}^{t-1}\matx_1$ is a Gaussian quadratic form, the simplified Hanson Wright inequality (\Cref{cor:crude_HS}) gives
  \begin{align*}
  \|\matx_t - A_{K}^{t-1}\matx_1\|_2^2 \lesssim ( 1+ \sigma_{u}^2\|\Bst\|_2^2)J_K \log \frac{1}{\delta}.
  \end{align*}
  \qed

\subsection{Extension to General Noise Models \label{app:general_noise}}

Our upper bounds hold for general noise distributions with the following properties:
\begin{enumerate}
  \item The noise satisfies a Hanson-Wright style inequality, so that an analogue of \Cref{lem:cost_lem} holds. Recall that \Cref{lem:cost_lem} establishes that the true costs concentrate around their expectations.
  \item The noise process is a $\sigma_+$-sub-Gaussian martingale difference sequence, in the sense that $\En\brk*{\matw_t\mid{}\matw_1,\ldots,\matw_{t-1}}=0$ and for any $v \in \R^{\dimx}$, $\Exp[\exp(\langle v, \matw_t) \mid \matw_{1},\dots,\matw_{t-1}] \le \exp(\frac{1}{2}\|v\|^2 \sigma_+^2)$. This is necessary for the self-normalized tail bound (\Cref{lem:self_normalized} of \citet{abbasi2011improved}).
  \item The noise satisfies the block-martingale small ball condition from \citet{simchowitz2018learning}, which ensures the covariates are well-conditioned during the estimation phase (in particular, that an analogue of \Cref{lem:covariance_lb} holds)
\end{enumerate}
In more detail, suppose that the noise is $\sigma_+$-sub-Gaussian, and that $\Exp[\matw_t\matw_t^\top \mid \matw_{1},\dots,\matw_{t-1}] \succeq \Sigma_- \succ 0$. Then by applying the Paley-Zygmund inequality (analogously to Eq. 3.12 in \cite{simchowitz2018learning}), one can show that the $(1, \frac{1}{2}\Sigma_-,p)-$block-martingale small-ball property holds with 
\begin{align*}
p &= \frac{1}{4}\cdot \min_{v \ne 0} \frac{\Exp[\langle \matw_t, z \rangle^2 \mid \matw_{1:t-1}]^2}{\Exp[\langle \matw_t, z \rangle^4 \mid \matw_{1:t-1}]} \\
&\gtrsim \frac{\lambda_{\min}(\Sigma_-)^2}{\sigma_+^4},
\end{align*}
where in the last inequality, we upper bound $\Exp[\langle \matw_t, z \rangle^4]$ using the standard moment bound for sub-Gaussian variables. Hence, a sub-Gaussian upper bound and covariance lower bound are enough to guarantee point 3 above holds.

Point 1 is more delicate, because Hanson-Wright inequalities are known under only restrictive assumptions: namely, for vectors which have independent sub-Gaussian coordinates \citep{rudelson2013hanson}, or for those satisfying a Lipschitz-concentration property \citep{adamczak2015note}. For the first condition to be satisfied, we need to assume that there exists a matrix $\Sigma_{+} \succ 0$ such that the vectors $\tilde{\matw}_t :=\Sigma_+^{-1/2}\matw_t$ are (a) jointly independent, and (b) have jointly independent, sub-Gaussian coordinates. For the second condition to hold, we must assume that the concatenated vectors $(\tilde{\matw}_1,\dots,\tilde{\matw}_{t})$ satisfy the Lipschitz-concentration property \citep[Definition 2.1]{adamczak2015note}. If either condition holds, then we can obtain the same regret as in our main theorem by modifying \Cref{lem:quadratic_forms_cost} to use a quadratic form for the sequence $(\tilde{\matw}_1,\dots,\tilde{\matw}_t)$, and then applying one of the Hanson-Wright variants above to attain \Cref{lem:cost_lem}. 

In general, it is not known if sub-Gaussian martingale noise satisfies a Hanson-Wright inequality. In this case, we can demonstrate the concentration of costs around their expectation via a combination of the Azuma-Hoeffding/Azuma-Bernstein inequality with truncation and mixing arguments. This type of argument bounds the fluctuations of the costs around their mean as roughly $(\dimx + \dimu)\sqrt{T}$, which is worse than the square root scaling $\sqrt{\dimx + \dimu}\cdot \sqrt{T}$ enjoyed by the Hanson-Wright inequality. Up to logarithmic factors, this would yield regret of $(\dimx + \dimu)\sqrt{T} + \sqrt{\dimx \dimu^2 T} = \sqrt{\dimx \max\{\dimx,\dimu^2\} T}$, which is sub-optimal for $\dimx \gg \dimu^2$. It is not clear if \emph{any} algorithm can do better in this regime (without a sharper inequality for the concentration of costs around their means), since it is not clear how to ameliorate these random fluctuations. Nevertheless, the final regret bound of $ \sqrt{\dimx \max\{\dimx,\dimu^2\} T}$ still improves upon the dimension dependence in the upper bound of $\sqrt{(\dimx+\dimu)^3 T}$ attained by \cite{mania2019certainty}.
}{}



\end{document}